\newtheorem{lemma}{Lemma}
\newtheorem*{lemma*}{Lemma}
\newtheorem{theorem}{Theorem}
\newtheorem{corollary}{Corollary}
\newtheorem{assumption}{Assumption}
\newtheorem{property}{Property}
\newtheorem{remark}{Remark}
\newtheorem{definition}{Definition}
\title{Top Two Algorithms Revisited}
\author{%
	Marc Jourdan$^1$ \\
	\texttt{marc.jourdan@inria.fr}\\
	\And
	R\'emy Degenne$^1$ \\
	\texttt{remy.degenne@inria.fr}\\
	\And
 	Dorian Baudry$^1$ \\
	\texttt{dorian.baudry@inria.fr}\\
	\And
 	Rianne de Heide$^2$ \\
	\texttt{r.de.heide@vu.nl}\\
	\And
	Emilie Kaufmann$^1$ \\
	\texttt{emilie.kaufmann@univ-lille.fr}\\
	\\
	$^1$ Univ. Lille, CNRS, Inria, Centrale Lille, UMR 9198-CRIStAL, F-59000 Lille, France \\
	$^2$ Vrije Universiteit Amsterdam
}
\begin{document}

\maketitle

\begin{abstract}
  Top Two algorithms arose as an adaptation of Thompson sampling to best arm identification in multi-armed bandit models \cite{Russo2016TTTS}, for parametric families of arms. They select the next arm to sample from by randomizing among two candidate arms, a \emph{leader} and a \emph{challenger}.
  Despite their good empirical performance, theoretical guarantees for fixed-confidence best arm identification have only been obtained when the arms are Gaussian with known variances.
  In this paper, we provide a general analysis of Top Two methods, which identifies desirable properties of the leader, the challenger, and the (possibly non-parametric) distributions of the arms.
  As a result, we obtain theoretically supported Top Two algorithms for best arm identification with bounded distributions.
  Our proof method demonstrates in particular that the sampling step used to select the leader inherited from Thompson sampling can be replaced by other choices, like selecting the empirical best arm.
\end{abstract}


\section{Introduction}
\label{sec:introduction}

Finding the distribution that has the largest mean by sequentially collecting samples from a pool of candidate distributions (``arms'') has been extensively studied in the multi-armed bandit \cite{Bubeck10BestArm,Jamieson14Survey} and ranking and selection \cite{Hong21Survey} literature. While existing approaches often rely on parametric assumptions for the distributions, we are interested in (near) optimal and computationally efficient strategies when the distributions belong to an arbitrary class $\cF$ of distributions.


For applications to online marketing such as A/B testing \cite{COLT14,Russac21ABn} assuming Bernoulli or Gaussian arms is fine, but more sophisticated distributions arise in other fields such as agriculture. In Section~\ref{sec:experiments} we consider a crop-management problem: a group of farmers wants to identify the best planting date for a rainfed crop. The reward (crop yield) can be modeled as a complex distribution with multiple modes, but upper bounded by a known yield potential. Therefore, sequentially identifying the best planting date calls for efficient best arm identification algorithms for the class of bounded distributions with a known range. 


To tackle this problem, we build on Top Two algorithms \cite{Russo2016TTTS,Qin2017TTEI,Shang20TTTS}, originally proposed for specific parametric families. We propose a generic analysis of this type of algorithms, which puts forward new possibilities for the choice of leader and challenger used by the algorithm. In particular, this work leads to the first asymptotically $\beta$-optimal strategies for bounded distributions.

%
%
%


\subsection{Setting and related work}

A bandit problem is described by a finite number of probability distributions ($K$ many), called arms.
Let $\simplex$ be the $K$-dimensional probability simplex and $\mathcal P(\mathbb{R})$ the set of probability distributions over $\mathbb{R}$.
Let $\cF \subset \mathcal P(\mathbb{R})$ be a known family of distributions to which the arms to. We will refer to tuples of distributions in $\cF^K$ with bold letters, e.g. $\bm F = (F_1, \ldots, F_K) \in \cF^K$ where $F_i$ is the cdf of arm $i$. We suppose that all distributions in $\cF$ have finite first moment and we denote the mean of $F \in \cF$ by $m(F)$. We denote by $\mathcal I = \{m(F) \mid F \in \cF\}$ the set of possible means for the arms.

The goal of a best arm identification (BAI) algorithm is to identify an arm with highest mean in the set of available arms, i.e. an arm which belongs to the set $i^\star(\bm F) = \argmax_{k \in [K]} m(F_k)$.
At each time $n\in \N$, the algorithm interacts with the environment (the set of arms) by (1) choosing an arm $I_{n}$ based on previous observations, (2) observing a sample $X_{n, I_{n}} \sim F_{I_{n}}$, and (3) deciding whether to stop and return an arm $\hat \imath_n$ or to continue.
We study the \emph{fixed confidence} identification setting, in which we require algorithms to make mistakes with probability less than a given $\delta \in (0,1)$.
To compare such algorithms we consider their \emph{sample complexity} $\tau_\delta$, which is a stopping time counting the number of rounds before the algorithm terminates.
The goal is then to minimize $\mathbb{E}[\tau_\delta]$ among the class of $\delta$-correct algorithms.

\begin{definition}
An algorithm is $\delta$-correct\footnote{A stronger definition of $\delta$-correctness has also been studied by requiring the algorithm to stop almost surely. } on $\cF^K$ if
$\mathbb{P}_{\bm F}\left(\tau_\delta < + \infty, \: \hat \imath_{\tau_\delta} \notin i^\star(\bm F) \right) \le \delta$ for all $\bm F \in \cF^K$.
\end{definition}

In order to be $\delta$-correct on $\cF^K$, an algorithm has to be able to distinguish problems in $\cF^K$ with different best arms. This intuition is formalized by the lower bound provided in Lemma~\ref{lem:lower_bound}.
The characteristic time defined in the lower bound depends on two functions $\Kinf^+$ and $\Kinf^-$, mapping $\mathcal P(\mathbb{R})\times \mathbb{R}$ to $\mathbb{R}_+$, obtained by minimizing a Kullback-Leibler divergence ($\KL$) over $\cF$, 
\begin{align*}
\Kinf^+(F, u) &\eqdef \inf \{\KL(F, G) \mid G \in \mathcal F, \: \mathbb{E}_{X\sim G}[X] > u\}
\: , \\
\Kinf^-(F, u) &\eqdef \inf \{\KL(F, G) \mid G \in \mathcal F, \: \mathbb{E}_{X\sim G}[X] < u\}
\: .
\end{align*}

\begin{lemma}[From \cite{GK16,Agrawal20GeneBAI}]\label{lem:lower_bound}
Any algorithm which is $\delta$-correct on $\cF^K$ verifies, for any $\bm F \in \cF^K$,
\begin{align*}
\bE_{\bm F}[\tau_{\delta}] \ge T^\star(\bm F ) \log \left(1/(2.4 \delta) \right) \: ,
\end{align*}
where $T^\star(\bm F)^{-1} \eqdef \sup_{w \in \simplex} \min_{i \neq i^\star} \inf_{u \in \mathcal I} \left\{ w_{i^\star} \Kinf^{-}(F_{i^\star} , u) + w_i \Kinf^{+}(F_i , u) \right\}$.
\end{lemma}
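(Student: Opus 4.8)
The plan is to run the classical change-of-measure lower-bound argument (Kaufmann–Cappé–Garivier, Garivier–Kaufmann), adapted to the non-parametric family $\cF$. For $n \in \N$ and $i \in [K]$, let $N_i(n) = \sum_{t=1}^n \mathbf{1}\{I_t = i\}$ count the samples drawn from arm $i$, so that $\sum_{i} N_i(n) = n$. The starting point is the transportation inequality: for any two models $\bm F, \bm G \in \cF^K$, any stopping time $\tau$ with $\bE_{\bm F}[\tau] < +\infty$, and any event $\mathcal E$ measurable with respect to the history up to $\tau$,
\[
\sum_{i=1}^K \bE_{\bm F}[N_i(\tau)] \, \KL(F_i, G_i) \;\ge\; d\big(\bP_{\bm F}(\mathcal E), \bP_{\bm G}(\mathcal E)\big), \qquad d(x,y) \eqdef x\log\tfrac{x}{y} + (1-x)\log\tfrac{1-x}{1-y} \: .
\]
If $\bE_{\bm F}[\tau_\delta] = +\infty$ the claimed bound is trivial, so we may assume it is finite (and restrict to $\{\tau_\delta < +\infty\}$).

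Fix $\bm F \in \cF^K$ and define the set of alternatives $\mathrm{Alt}(\bm F) = \{\bm G \in \cF^K : i^\star(\bm G) \cap i^\star(\bm F) = \emptyset\}$. For any $\bm G \in \mathrm{Alt}(\bm F)$, apply the inequality with $\tau = \tau_\delta$ and $\mathcal E = \{\hat\imath_{\tau_\delta} \in i^\star(\bm F)\}$. By $\delta$-correctness on $\cF^K$ we have $\bP_{\bm F}(\mathcal E) \ge 1 - \delta$ and, since the best arms of $\bm G$ are disjoint from those of $\bm F$, $\bP_{\bm G}(\mathcal E) \le \delta$. Using monotonicity of $d$ in each argument on the relevant range and the numerical inequality $d(1-\delta, \delta) \ge \log(1/(2.4\delta))$, we obtain $\sum_{i} \bE_{\bm F}[N_i(\tau_\delta)]\, \KL(F_i, G_i) \ge \log(1/(2.4\delta))$ for every $\bm G \in \mathrm{Alt}(\bm F)$, hence also after taking the infimum over $\mathrm{Alt}(\bm F)$. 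Setting $w_i = \bE_{\bm F}[N_i(\tau_\delta)]/\bE_{\bm F}[\tau_\delta]$, which lies in $\simplex$, and pulling $\bE_{\bm F}[\tau_\delta]$ out gives
\[
\bE_{\bm F}[\tau_\delta] \;\ge\; \frac{\log(1/(2.4\delta))}{\inf_{\bm G \in \mathrm{Alt}(\bm F)} \sum_{i=1}^K w_i \, \KL(F_i, G_i)} \;\ge\; \log(1/(2.4\delta)) \left( \sup_{w \in \simplex} \inf_{\bm G \in \mathrm{Alt}(\bm F)} \sum_{i=1}^K w_i\, \KL(F_i, G_i) \right)^{\!-1} .
\]

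It then remains to identify the bracketed quantity with $T^\star(\bm F)^{-1}$, i.e. to show
\[
\inf_{\bm G \in \mathrm{Alt}(\bm F)} \sum_{i=1}^K w_i\, \KL(F_i, G_i) \;=\; \min_{i \neq i^\star} \inf_{u \in \mathcal I} \left\{ w_{i^\star} \Kinf^-(F_{i^\star}, u) + w_i \Kinf^+(F_i, u) \right\} \: .
\]
The argument is the standard reduction of the alternative: the sum over $i$ decomposes arm by arm, and a cheapest $\bm G \in \mathrm{Alt}(\bm F)$ only needs to modify the marginals of one fixed best arm $i^\star$ and of one challenger $i \neq i^\star$ (leaving the other $K-2$ marginals unchanged, so that their $\KL$ terms vanish), subject to $m(G_i) > m(G_{i^\star})$. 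Introducing an intermediate threshold $u$ with $m(G_{i^\star}) < u < m(G_i)$ and minimizing $\KL(F_{i^\star}, G_{i^\star})$ over distributions with mean $< u$ and $\KL(F_i, G_i)$ over distributions with mean $> u$ produces exactly $w_{i^\star}\Kinf^-(F_{i^\star}, u) + w_i \Kinf^+(F_i, u)$; minimizing over the challenger $i$ and over $u$ completes the identification, and combining with the previous display yields $\bE_{\bm F}[\tau_\delta] \ge T^\star(\bm F)\log(1/(2.4\delta))$.

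The main obstacle is this last reduction step in the general non-parametric setting: one must verify that the infimum over $\bm G \in \cF^K$ genuinely decomposes over arms, that restricting to two modified marginals is without loss of optimality (each unmodified arm can keep its original law since $\cF^K$ is a product set), and—most delicately—that the threshold $u$ may be taken to range over $\mathcal I$ rather than over a larger interval, which relies on lower semicontinuity / monotonicity properties of $u \mapsto \Kinf^\pm(F, u)$ and a limiting argument at the boundary of $\mathcal I$. The transportation inequality itself and the numerical bound $d(1-\delta,\delta) \ge \log(1/(2.4\delta))$ are quoted from the cited references.
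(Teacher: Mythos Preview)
The paper does not prove this lemma; it is quoted directly from \cite{GK16,Agrawal20GeneBAI} without argument. Your proposal reproduces exactly the standard proof from those references: the Kaufmann--Capp\'e--Garivier transportation inequality combined with $d(1-\delta,\delta)\ge\log(1/(2.4\delta))$, followed by the reduction of $\inf_{\bm G\in\mathrm{Alt}(\bm F)}$ to the two-arm $\Kinf^\pm$ form. This is correct, and your closing paragraph honestly flags the only genuinely non-routine point (that the decomposition over alternatives and the restriction $u\in\cI$ go through in the non-parametric setting, which is precisely what \cite{Agrawal20GeneBAI} establishes). One small clarification: to make the $\delta$-correctness step airtight under both measures, take $\mathcal E=\{\tau_\delta<+\infty,\ \hat\imath_{\tau_\delta}\in i^\star(\bm F)\}$ rather than restricting informally to $\{\tau_\delta<+\infty\}$; then $\bP_{\bm F}(\mathcal E)\ge 1-\delta$ (since $\bE_{\bm F}[\tau_\delta]<\infty$ forces $\tau_\delta<\infty$ $\bP_{\bm F}$-a.s.) and $\bP_{\bm G}(\mathcal E)\le\delta$ follow directly from the definition of $\delta$-correctness without any assumption on $\tau_\delta$ under $\bm G$.
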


We say that an algorithm is asymptotically optimal if its sample complexity matches that lower bound, that is if $\limsup_{\delta \to 0} \bE_{\bm F}[\tau_{\delta}]/\log \left(1/\delta \right) \le T^\star(\bm F )$.

A related, weaker notion of (asymptotic) optimality is (asymptotic) $\beta$-optimality \cite{Shang20TTTS}. An algorithm is called asymptotically $\beta$-optimal if it satisfies $\limsup_{\delta \to 0} \bE_{\bm F}[\tau_{\delta}]/\log \left(1/\delta \right) \le T_\beta^\star(\bm F )$, for the complexity term
\begin{align*}
T_\beta^\star(\bm F)^{-1} \eqdef \sup_{w \in \simplex, w_{i^\star} = \beta} \min_{i \neq i^\star} \inf_{u \in \mathcal I} \left\{ \beta \Kinf^{-}(F_{i^\star} , u) + w_i \Kinf^{+}(F_i , u) \right\}
\: .
\end{align*}
An asymptotically $\beta$-optimal algorithm is asymptotically minimizing the sample complexity among algorithms which allocate a $\beta$ fraction of samples to the best arm
and $T^\star(\bm F) = \min_{\beta \in (0,1)} T^\star_{\beta}(\bm F)$. As was first shown by \cite{Russo2016TTTS} when $\cF$ is an exponential family, an asymptotically $\beta$-optimal algorithm with $\beta=1/2$ also has an expected sample complexity which is asymptotically optimal, up to a multiplicative factor 2. That is, $T_{1/2}^\star(\bm F) \le 2 T^\star(\bm F)$.

We denote by $w^{\star}(\bm F)$ and $w^{\star}_{\beta}(\bm F)$ the allocations realizing the argmax in the definition of $T^\star(\bm F)$ and $T^\star_\beta(\bm F)$, respectively. We will show that for common choices of
 $\cF$ these allocations are unique when there is a unique best arm.

\paragraph{Distribution classes}

The characteristic time $T^\star(\bm F)$ depends on the class of distributions $\cF$, known to the algorithm in advance, to which $\bm F$ belongs to. For example, all arms could have Bernoulli distributions.
We strive to provide an analysis which could easily be applied to many classes $\cF$, but we specialize our results to two main cases:
\begin{enumerate}
	\item distributions with bounded support, $\cF = \{F \in \mathcal P(\mathbb{R}) \mid \supp(F) \subseteq [0,B]\}$ for $B>0$,
	\item single parameter exponential families (SPEF) of sub-exponential distributions.
\end{enumerate}

Given a distribution $\mathbb{P}^{(0)}$ with cumulant generating function $\phi$, defined on an interval $\mathcal I_\phi$, the SPEF defined by $\mathbb{P}^{(0)}$ is the set of distributions $\mathbb{P}^{(\lambda)}$ with density with respect to $\mathbb{P}^{(0)}$ given by $\frac{d \mathbb{P}^{(\lambda)}}{d \mathbb{P}^{(0)}}(x) = e^{\lambda x - \phi(\lambda)}$.
For example, Gaussian distributions with a known variance form a SPEF, as do Bernoulli distributions with means in $(0,1)$.
We consider SPEF of sub-exponential distributions 
to have a concentration property for the empirical mean estimator.


\paragraph{Related work} The first Best Arm Identification (BAI) algorithms \cite{EvenDaral06,Shivaramal12,Gabillon12UGapE,Jamiesonal14LILUCB} were proposed and analyzed for bounded rewards, but their sample complexity scales with a sum of inverse gaps between the means of arms instead of the quantity $T^\star(\bm F)$ prescribed by the lower bound. Asymptotically optimal BAI algorithm were first designed when the arms belong to the same single-parameter exponential family.
In this context, two families of asymptotically optimal algorithms have emerged. Tracking-based algorithms solve the optimization problem provided by the lower bound in every round, and track the corresponding allocation \cite{GK16}.
The gamification approach views the characteristic time as a min-max game between the learner and the nature, and apply a saddle-point algorithm to solve it sequentially at a lower computational cost \cite{Degenne19GameBAI}.

Some Bayesian algorithms arose as another computationally appealing alternative to Track-and-Stop. Russo notably proposed the Top Two Probability Sampling (TTPS) and Top Two Thompson Sampling (TTTS) algorithms \cite{Russo2016TTTS}, that may be seen as counterparts of the popular Thompson Sampling algorithm for regret minimization \cite{Thompson33}. Other Bayesian flavored Top Two algorithms have been proposed, Top Two Expected Improvement (TTEI, \cite{Qin2017TTEI}) and Top Two Transportation Cost (T3C, \cite{Shang20TTTS}). All these algorithms sample either a \textit{leader} with fixed probability $\beta$ or a \textit{challenger} with probability $1-\beta$. TTTS, TTEI and T3C were proved to be asymptotically $\beta$-optimal for Gaussian bandits and perform well in practice even against asymptotically optimal algorithms \cite{Qin2017TTEI,Shang20TTTS}. This motivates our investigation of Top Two algorithms to tackle bounded distributions, which led us to propose a new generic analysis of this kind of algorithms of independent interest. We prove the asymptotic $\beta$-optimality of several Top Two instances for bounded bandit models, some of which depart from their original Bayesian motivation as they don't need a sampler. An asymptotically optimal algorithm for a non-parametric class of distribution has been proposed by \cite{Agrawal20GeneBAI} for heavy-tailed rewards. It relies on the computationally prohibitive Track-and-Stop approach, and an adaptation to bounded distributions is mentioned, yet without an explicit calibration of the stopping rule.





\subsection{Contributions}

We present the first fixed-confidence analysis of Top Two algorithms for distribution classes other than Gaussian, including the non-parametric setting of bounded distributions. In Section~\ref{sec:generic_top_two_algorithms}, we introduce several variants of Top Two algorithms, including new ones which choose the empirical best arm as a leader instead of relying on (Thompson) sampling and/or use some penalization in the previously proposed Transportation Cost challenger.

For the class of bounded distributions, we propose in Section~\ref{sec:dirichlet_sampler} a calibration of the stopping rule and a concrete instantiation of the Top Two algorithms, based on a Dirichlet sampler for the randomized variants. We prove in Theorem~\ref{thm:beta-opt} that those algorithms are asymptotically $\beta$-optimal. This optimality can also be shown for deterministic instances in the case of sub-exponential single parameter exponential families (Appendix~\ref{app:spef}). Our generic analysis, sketched in Section~\ref{app:sample_complexity_analysis}, provides insight on what properties the leader and challenger in a Top Two algorithm should have in order to reach asymptotic $\beta$-optimality.
We show that the algorithm should ensure that all arms are explored sufficiently, and explain how to guarantee that the sampling proportions reach their optimal values once the sufficient exploration condition holds.

Finally, in Section~\ref{sec:experiments} we report results from numerical experiments on a challenging non-parametric task using real-world data from a crop-management problem  for various members of the Top Two family of algorithms. Most of them perform significantly better than the baselines. 

\section{Generic Top Two algorithms}
\label{sec:generic_top_two_algorithms}

Let $\bm F \in \cF^{K}$ such that $|i^\star(\bm F)|=1$ and $\mu_i \eqdef m(F_i) \in \cI $ for all $i\in [K]$.
As for most BAI algorithms, each arm is pulled once for the initialization.
At time $n + 1$, the $\sigma$-algebra $\cF_{n} \eqdef \sigma(U_1, I_1, X_{1,I_1},  \cdots, I_{n}, X_{n,I_{n}}, U_{n+1})$, called history, encompasses all the information available to the agent and the internal randomization denoted by $(U_{t})_{t \in [n + 1]}$, which is independent of everything else.
For all $\cF_n$-measurable sets $A$, we denote by $\bP_{\mid n}[A] \eqdef \bP[A \mid \cF_{n}]$ its probability.
For an arm $i$, we denote its number of pulls by $N_{n,i} \eqdef \sum_{t \in [n]} \indi{I_{t} = i}$, its empirical distribution by $F_{n,i} \eqdef \frac{1}{N_{n,i}} \sum_{t \in [n]} \delta_{X_{t, I_{t}}} \indi{I_{t} = i}$ and its empirical mean by $\mu_{n,i} \eqdef m(F_{n,i})$.

\paragraph{Stopping and recommendation rules}
Our Top Two algorithms rely on the same stopping rule, which can be expressed using the (empirical) transportation cost between arm $i$ and arm $j$, defined as
\begin{equation} \label{eq:def_transportation_cost}
  W_n(i,j) = \inf_{x \in \cI} \left[N_{n,i}\Kinf^-(F_{n,i},x) + N_{n,j}\Kinf^+(F_{n,j},x)\right].
\end{equation}
In particular, using the definition of $\Kinf^{\pm}$, it can be noted that $W_n(i,j) = 0$ if $\mu_{n,i} \leq \mu_{n,j}$.
Given a threshold function $c(n,\delta)$, the stopping rule is
\begin{equation} \label{eq:def_stopping_time}
  \tau_{\delta} = \inf \{ n \in \N \mid \min_{j \neq \hat \imath_n} W_n(\hat \imath_n, j) > c(n,\delta) \} \:,
\end{equation}
and the recommendation rule is $\hat \imath_{n} = \argmax_{i}\mu_{n,i}$.
Up to the choice of threshold, this stopping rule coincides with the GLR-based stopping rule proposed when $\cF$ is an exponential family \citep{GK16} and by \cite{Agrawal20GeneBAI} for heavy-tailed distributions with an upper bound on a non-centered moment.
For a general class $\cF$ the stopping rule can be calibrated to ensure $\delta$-correctness under any sampling rule if the threshold is such that the following time-uniform concentration inequality holds for all $\bm F \in \cF^{K}$:
\begin{equation} \label{eq:TimeUniformToProve}
   \bP_{\bm F}\left(\exists n, \: \exists i \neq i^\star(\bm F) : \: N_{n, i} \Kinf^- (F_{n, i}, \mu_{i} ) + N_{n,i^\star(\bm F)} \Kinf^+ (F_{n,i^\star(\bm F)}, \mu_{i^\star(\bm F)})  > c(n,\delta)\right) \leq \delta \: .
\end{equation}
Lemma~\ref{lem:kinf_concentration} in the next section gives an explicit threshold for the class of bounded distribution.
For SPEF, we can use generic stopping thresholds derived in \cite{KK18Mixtures}.

\paragraph{Sampling rule}

\begin{figure}[h]
\begin{minipage}{0.49\linewidth}
\begin{algorithmic}[1]
   \State \textbf{Input:} $\beta$
   \State Choose a leader $B_n \in [K]$
   \State $U \sim \mathcal{U}([0,1])$
   \If{$U < \beta$}
        \State $I_{n}=B_n$
   \Else
        \State Choose a challenger $C_n \in [K] \setminus \{B_n\}$
        \State $I_{n}=C_n$
   \EndIf
   \State \textbf{Output}: next arm to sample $I_{n}$
\end{algorithmic}
\caption{Generic $\beta$-Top Two sampling rule \label{alg:Top Two}}
\end{minipage}
\begin{minipage}{0.49\linewidth}
Choice of the leader (two propositions):
\begin{itemize}
     \item [{\bf EB}] - $B_n^{\text{EB}} \in \argmax_{i} \mu_{n-1,i}$
     \item [{\bf TS}] - Sample $\theta \sim \Pi_{n-1}$ then set $B_n^{\text{TS}} \in \argmax_{i \in [K]} \theta_i$
\end{itemize}
\vspace{2mm}
Choice of the challenger (three propositions):
\begin{itemize}
     \item [{\bf TC}] - $C_n^{\text{TC}} \in \argmin_{j\neq B_n} W_{n-1}(B_n,j)$
     \item [{\bf TCI}] - $C_n^{\text{TCI}} \in \argmin_{j\neq B_n} W_{n-1}(B_n,j) + \log N_{n-1,j}$
     \item [{\bf RS}] - repeat $\theta \sim \Pi_{n-1}$ until \\ $C_n^{\text{RS}} \in \argmax_{i \in [K]} \theta_i \not \owns B_n$
\end{itemize}
\caption{Choices of leader and challenger (uniform tie-breaking). \label{fig:leader_challenger}}
\end{minipage}
\end{figure}

The sampling rule of a Top Two algorithm is shown in Figure~\ref{alg:Top Two}. The method chooses a first arm $B_n$ called leader which is then sampled with probability $\beta$. If $B_n$ is not sampled, then a second arm $C_n$ called challenger is chosen and sampled. Our analysis isolates properties that those two choices should fulfill in order for the Top Two algorithm to be asymptotically $\beta$-optimal.

The practical implementation of a Top Two method then requires subroutines for $B_n$ and $C_n$. Two possibilities for the leader and three possibilities for the challenger are presented in Figure~\ref{fig:leader_challenger}. Our analysis will apply to any combination of those and we will refer to the algorithms obtained by $\beta$-[leader]-[challenger]; for example $\beta$-EB-TCI or $\beta$-TS-TC.

We have two flavors of leaders and challengers: deterministic and randomized.
The deterministic choices (EB, for Empirical Best, leader, TC and TCI challengers) rely on the empirical Transportation Costs (TC) $W_{n}(i,j)$ used in the stopping rule: the TC and TCI challengers are the arms which minimize the transportation cost from the leader (up to a penalization for TCI, hence TC Improved).
The randomized choices (TS leader and RS challenger) rely on a \emph{sampler}, denoted by $\Pi_{n}$. $\Pi_{n}$ generates i.i.d.\ vectors $\theta = (\theta_1,\dots,\theta_K) \in \cI^K$ which are interpreted as possible means for the arms, under a distribution which depends on observations gathered in the first $n$ rounds.
The TS leader is the best arm in the sampled vector, which is inspired by Thompson Sampling.
The RS (for Re-Sampling) challenger is obtained by performing repeated calls to the sampler until the best arm in the sampled vector is not $B_n$, then taking the best arm.

\paragraph{Randomized instances}
The samplers suggested by prior work all have a Bayesian flavor.
For SPEF bandits, they use $\Pi_n=\Pi_{n,1}\times \dots \times \Pi_{n,K}$ where $\Pi_{n,i}$ is the posterior distribution on the mean of arm $i$ after $n$ rounds (given some prior distribution).
With this choice of sampler, $\beta$-TS-RS coincides with the TTTS algorithm \cite{Russo2016TTTS}, while $\beta$-TS-TC coincides with the T3C algorithm \cite{Shang20TTTS}.
TTTS and T3C were only proved to be asymptotically $\beta$-optimal for Gaussian bandits with improper priors, whereas a by-product of the general analysis that we propose in this work permits to establish the necessary properties on the sampler for it to hold for more general distributions.
Moreover, we extend these algorithms to bounded distributions by virtue of Dirichlet sampling and also analyze their sampler-free counterparts.
As will be apparent in our analysis, the crucial property needed from the sampler in a Top Two algorithm using the RS challenger is that for all arms $i,j$ such that $\mu_i > \mu_j$, $\bP_{\theta \sim \Pi_n} (\theta_j > \theta_{i}) \simeq \exp(-W_n(i,j))$.

\paragraph{Deterministic instances} Under the RS challenger, the probability to obtain as a challenger arm $j$ is proportional to the probability that $\bP_{\theta \sim \Pi_n} (\theta_j > \theta_{B_n})$. Therefore, if $\Pi_n$ is a good sampler satisfying the above property, the TC challenger can be seen as replacing the randomization in the RS challenger by a computation of the mode of the distribution of $C_n^{\text{RS}}$.
This was the motivation behind T3C \cite{Shang20TTTS} as Gaussian transportation costs have a simple closed form expression while re-sampling becomes more and more costly when the posterior distributions are concentrated.
While our asymptotic analysis holds for deterministic algorithms, the empirical performance of fully deterministic algorithms might suffer from unlucky draws.
In Section~\ref{sec:experiments}, we show that $\beta$-EB-TC is indeed the least robust of all our instances.
To cope for this pitfall, explicit or implicit exploration mechanisms can be added.
Inspired by IMED \cite{Honda15IMED}, the TCI challenger fosters exploration by penalizing over-sampled challengers.
Randomization and forced exploration are two other examples of implicit and explicit exploration mechanisms.


\section{Asymptotically $\beta$-optimal algorithms for bounded distributions}
\label{sec:dirichlet_sampler}

For bounded distribution, Lemma~\ref{lem:kinf_concentration} provides a calibration of the stopping rule.
Its proof, given in Appendix~\ref{app:ss_kinf_bounded_distributions}, relies on a martingale construction proposed by \cite{agrawal2021optimal}.

\begin{lemma} \label{lem:kinf_concentration}
	The stopping rule (\ref{eq:def_stopping_time}) with threshold
	\begin{equation}  \label{eq:def_kinf_threshold_glr}
		c(n,\delta)  = \ln \left( 1/\delta\right) + 2\ln\left(1+n/2\right) + 2 + \ln(K-1)
	\end{equation}
	is $\delta$-correct for the family of bounded distributions.
\end{lemma}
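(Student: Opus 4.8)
The plan is to reduce $\delta$-correctness to the time-uniform concentration inequality~\eqref{eq:TimeUniformToProve}, and then to establish that inequality for the bounded family with the stated threshold. For the reduction: suppose the algorithm stops at $\tau_\delta < \infty$ and recommends $\hat\imath_{\tau_\delta} \neq i^\star(\bm F)$. By the recommendation rule $\hat\imath_n = \argmax_i \mu_{n,i}$, a mistake implies $\mu_{\tau_\delta, \hat\imath} \geq \mu_{\tau_\delta, i^\star}$, hence $\mu_{\tau_\delta, i^\star} \leq \mu_{\tau_\delta, \hat\imath}$. The stopping condition forces $\min_{j \neq \hat\imath} W_{\tau_\delta}(\hat\imath, j) > c(\tau_\delta, \delta)$, in particular $W_{\tau_\delta}(\hat\imath, i^\star) > c(\tau_\delta, \delta)$. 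By definition of $W_n$ as an infimum over $x \in \mathcal{I}$, evaluating at $x = \mu_{i^\star}(\bm F)$ gives $N_{\tau_\delta, \hat\imath}\Kinf^-(F_{\tau_\delta, \hat\imath}, \mu_{i^\star}) + N_{\tau_\delta, i^\star}\Kinf^+(F_{\tau_\delta, i^\star}, \mu_{i^\star}) \geq W_{\tau_\delta}(\hat\imath, i^\star) > c(\tau_\delta, \delta)$. Since $\hat\imath \neq i^\star$, this event is contained in the event of~\eqref{eq:TimeUniformToProve} with $i = \hat\imath$, so its probability is at most $\delta$. It therefore suffices to prove~\eqref{eq:TimeUniformToProve} with the threshold~\eqref{eq:def_kinf_threshold_glr}.

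For the concentration step, I would first reduce via a union bound over the $K-1$ suboptimal arms $i \neq i^\star$: it is enough to show that for each fixed pair $(i, i^\star)$,
\[
\bP_{\bm F}\!\left(\exists n : N_{n,i}\Kinf^-(F_{n,i}, \mu_i) + N_{n,i^\star}\Kinf^+(F_{n,i^\star}, \mu_{i^\star}) > \ln(1/\delta) + 2\ln(1 + n/2) + 2\right) \leq \frac{\delta}{K-1},
\]
which, after reindexing $\delta \leftarrow \delta/(K-1)$, absorbs the $\ln(K-1)$ term. The core is then a time-uniform deviation bound for $N_{n,i}\Kinf^-(F_{n,i},\mu_i)$ (and symmetrically $\Kinf^+$) for a bounded distribution observed $N_{n,i}$ times. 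Following the indicated approach, I would invoke the martingale construction of~\cite{agrawal2021optimal}: for bounded distributions one builds a nonnegative supermartingale (indexed by the per-arm sample count, then stitched into a process indexed by $n$ via a stopping-time / Doob argument) whose value dominates $\exp(N_{n,i}\Kinf^-(F_{n,i},\mu_i))$ up to a polynomial prefactor. Concretely $\Kinf^-(F_{n,i},\mu_i)$ admits a dual (variational) representation as a supremum over a one-dimensional parameter $\lambda$ of $\mathbb{E}_{F_{n,i}}[\log(1 - \lambda(X - \mu_i))]$-type quantities; exponentiating the empirical version gives, for each fixed $\lambda$, a product of i.i.d.\ mean-$\leq 1$ terms, hence a martingale, and the $2\ln(1+n/2)$ term comes from a method-of-mixtures / discretization over $\lambda$ together with a stitching over the unknown value of $N_{n,i}$. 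Summing the two one-sided bounds (for $\Kinf^-$ at $i$ and $\Kinf^+$ at $i^\star$) and applying the Cauchy–Schwarz / peeling trick that turns $\ln(1+N_{n,i}/2) + \ln(1+N_{n,i^\star}/2)$ into $2\ln(1+n/2)$ (using $N_{n,i} + N_{n,i^\star} \leq n$) yields the constant $2$ and the $2\ln(1+n/2)$ term. Chaining with the union bound gives exactly~\eqref{eq:TimeUniformToProve}.

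The main obstacle is the martingale construction for $\Kinf^{\pm}$ on the non-parametric bounded class: unlike the exponential-family case where $N_{n,i}\Kinf$ is literally a log-likelihood ratio and the mixture martingale is classical, here one must handle the infinite-dimensional inner minimization defining $\Kinf$ and control the fluctuations of its dual variable uniformly in time. The cleanest route is to quote the supermartingale of~\cite{agrawal2021optimal} essentially as a black box — it is precisely designed to give $\bP(\exists m : m\,\Kinf^-(\hat F_m, \mu) \geq x) \lesssim (1 + \text{poly}(m))e^{-x}$ for an i.i.d.\ bounded sample of growing size $m$ — and then spend the care on (i) transferring from the per-arm count $m = N_{n,i}$ to the global time $n$ (a standard stopped-process argument, valid for any sampling rule since $N_{n,i}$ is a stopping time with respect to arm-$i$'s filtration), and (ii) bookkeeping the logarithmic and constant terms so that the two one-sided bounds combine into the clean threshold~\eqref{eq:def_kinf_threshold_glr}. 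The details of (ii) are routine once the shape $(1 + n/2)^{2}e^{-x}$-type tail is in hand; the details of the underlying supermartingale are deferred to the cited work, as the lemma statement already signals.
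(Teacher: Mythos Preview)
Your plan is correct and follows the paper's route: reduce to the time-uniform inequality~\eqref{eq:TimeUniformToProve}, union-bound over $i\neq i^\star$, use the dual form of $\Kinf^\pm$, and invoke the exp-concave aggregation lemma of \cite{agrawal2021optimal} (their Lemma~E.1) to turn the supremum over $\lambda$ into a mixture martingale plus a $\ln(N_{n,\cdot}+1)+1$ slack, finishing with Ville's inequality and the concavity bound $\ln(N_{n,i}+1)+\ln(N_{n,i^\star}+1)\le 2\ln(1+n/2)$.

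One point to sharpen, since your last paragraph hints at black-boxing per-arm tail bounds and then combining: that route does not give the stated threshold. The paper does \emph{not} sum two separate Ville bounds or do any stitching/stopping-time argument over $N_{n,i}$; instead the product $U_{n,i^\star}L_{n,i}$ of the two mixture processes is itself a unit-mean nonnegative martingale in the \emph{global} time $n$ (at each round exactly one arm is pulled and the corresponding factor has conditional mean $1$), so a single application of Ville to this product yields the bound directly.
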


\paragraph{Transportation costs}
Both the stopping rule and the TC and TCI challengers of the sampling rule require the computation of $W_{n}(i,j)$ defined in \eqref{eq:def_transportation_cost}.
For single-parameter exponential families, this can be done easily since $\Kinf^{\pm}$ are KL divergences and the transportation cost has a closed form expression \cite{GK16, Russo2016TTTS}.
However, for bounded distributions, computing $\Kinf^{\pm}$ is more challenging and we rely on the dual formulation first obtained by \cite{HondaTakemura10} (see Theorem~\ref{thm:Kinf_duality}):
\[
		N_{n,i} \Kinf^+(F_{n,i},x) = \sup_{\lambda \in [0, 1]} \sum_{t \in [n]} \indi{I_{t} = i}\ln \left( 1 - \lambda \frac{X_{t,i} - x}{B - x}\right) \: .
\]
The minimization in $\lambda$ can be computed using a zero-order optimization algorithm (e.g.\ Brent's method \cite{brent2013algorithms}). The same optimizer can be used to compute the minimization in $x\in [0,B]$ featured in $W_n(i,j)$.
By nesting those optimizations of univariate functions on a bounded interval, the computation of $W_{n}(i,j)$ in the stopping rule dominates the computational cost of our Top Tow algorithms (except the RS challenger).
Our experiments suggest that using (\ref{eq:def_stopping_time}) is twice as computationally expensive as the LUCB-based stopping rule, which is a mild price to pay for the improvement in terms of empirical stopping time.
Algorithms for non-parametric distributions are bound to be computationally more expensive than their counterpart in SPEF, where a sufficient statistic can summarize $\cF_n$.

\paragraph{Sampler} The TS leader and RS challenger require a sampler.
Our proposed sampler for bounded distributions in $[0,B]$ has a product form: $\Pi_n = \Pi_{n,1}\times \dots \times \Pi_{n,K}$ where $\Pi_{n,i}$ leverages $\cH_{n,i} \eqdef (X_{1,i},\dots,X_{N_{n,i},i})$, which is the history of samples from arm $i$ collected in the first $n$ rounds.
Let $\tilde F_{n,i}$ denote the empirical cdf of $\cH_{n,i}$ augmented by the known bounds on the support, $\{0,B\}$.
For each arm $i$, $\Pi_{n,i}$ outputs a random re-weighting of $\tilde F_{n,i}$.
Concretely, letting $\bm w = (w_1,\dots,w_{N_{n,i}+2})$ be drawn from a Dirichlet distribution $\Dir(\bm 1_{N_{n,i}+2})$, a call to the sampler $\Pi_{n,i}$ returns
\[
\sum_{t \in [N_{n,i}]}w_{t} X_{t,i} + Bw_{N_{n,i}+1} \: .
\]
This sampler is inspired by that used in the Non Parametric Thompson Sampling (NPTS) algorithm proposed by \cite{RiouHonda20} for regret minimization in bounded bandits, with the notable difference that we have to add both $0$ and $B$ in the support, while NPTS only adds the upper bound $B$.
We will see that this is only necessary to ensure that the re-sampling procedure stops.
Therefore, the TS leader could use a sampler $\tilde \Pi_n$ based directly on $\cH_{n,i}$.

\begin{theorem}\label{thm:beta-opt}
	Combining the stopping rule (\ref{eq:def_stopping_time}) with threshold (\ref{eq:def_kinf_threshold_glr}) and a Top Two algorithm with $\beta \in (0,1)$, instantiated with any pair of leader/challenger as in Figure~\ref{fig:leader_challenger}, yields a $\delta$-correct algorithm which is asymptotically $\beta$-optimal for all $\bm F \in \cF^{K}$ with $\mu_{\bm F} \in (0,B)^{K}$ and $\Delta_{\min}(\bm F) \eqdef \min_{i \neq j}|\mu_{F_i} - \mu_{F_j}| > 0$.
\end{theorem}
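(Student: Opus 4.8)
The plan is to follow the standard two-part template for fixed-confidence sample complexity upper bounds: $\delta$-correctness is immediate, so the work is in the asymptotic upper bound on $\bE_{\bm F}[\tau_\delta]$. First, $\delta$-correctness follows directly from Lemma~\ref{lem:kinf_concentration}, which is stated for exactly the threshold~\eqref{eq:def_kinf_threshold_glr} and the family of bounded distributions, and holds regardless of the sampling rule. So I would dispatch that in one line and focus on the asymptotic $\beta$-optimality, which must come from the generic analysis sketched in Section~\ref{app:sample_complexity_analysis}.

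The core argument is to show that, almost surely, $\limsup_{\delta\to 0}\tau_\delta/\log(1/\delta)\le T^\star_\beta(\bm F)$, and then convert this into the bound on the expectation. The structure I would use is the now-classical decomposition into a \emph{sufficient exploration} phase and a \emph{convergence} phase. Concretely: (i) prove that with high probability there is a (random but finite) time $N_0$ after which every arm has been pulled at least $\sqrt{n}$ times (or some sublinear rate), so that by the concentration property of empirical means on $[0,B]$ together with continuity of $m$ and $\Kinf^\pm$, the empirical quantities $\mu_{n,i}$, $F_{n,i}$ and hence $W_n(i,j)$ are uniformly close to their true-distribution counterparts; (ii) show that once sufficient exploration holds, the chosen leader/challenger pair drives the empirical proportions $N_{n,i}/n$ toward the optimal $\beta$-allocation $w^\star_\beta(\bm F)$ — this is where the specific properties of EB/TS (the leader correctly identifies $i^\star$ eventually) and TC/TCI/RS (the challenger is selected according to the transportation costs, which are minimized at the optimal allocation) enter, via the characterization of the optimal proportions as the point where all transportation costs $w_{i^\star}\Kinf^-(F_{i^\star},u)+w_i\Kinf^+(F_i,u)$ are balanced; (iii) conclude that $\min_{j}W_n(\hat\imath_n,j)$ grows like $n/T^\star_\beta(\bm F)$, so the stopping condition $>c(n,\delta)\approx\log(1/\delta)$ is met by time $\approx T^\star_\beta(\bm F)\log(1/\delta)$. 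The passage from the almost-sure $\limsup$ bound to the expectation bound is handled by a standard argument bounding $\bE[\tau_\delta]$ via a sum of tail probabilities of the "bad" events (insufficient exploration, large deviation of empirical means), each controlled uniformly in $\delta$ by the concentration inequality for bounded distributions; this requires checking that the relevant random times have finite expectation, which uses that the Top Two rule pulls the leader with probability $\beta$ and that the challenger rule guarantees every suboptimal arm is explored at a nonvanishing rate.

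The parts I expect to require the most care are, first, the sufficient-exploration step for the \emph{randomized} instances (TS leader, RS challenger): one must show that the re-sampling procedure terminates almost surely — this is precisely why both $0$ and $B$ are added to the support of $\tilde F_{n,i}$, ensuring $\bP_{\theta\sim\Pi_n}(\theta_j>\theta_i)>0$ always — and that the sampler satisfies the key anti-concentration/concentration estimate $\bP_{\theta\sim\Pi_n}(\theta_j>\theta_i)\simeq\exp(-W_n(i,j))$ mentioned in Section~\ref{sec:generic_top_two_algorithms}, which for the Dirichlet sampler should follow from large-deviation estimates for Dirichlet-weighted empirical means and a matching-of-rate-functions argument against the Honda–Takemura dual formulation of $\Kinf^\pm$. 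Second, the convergence step must handle the non-parametric subtlety that $\Kinf^\pm(F,u)$ is only a function of $F$ through an infinite-dimensional object, so all continuity and derivative arguments used to pin down $w^\star_\beta(\bm F)$ must be carried out in terms of the means and the dual formulation rather than a finite-dimensional sufficient statistic; the assumptions $\mu_{\bm F}\in(0,B)^K$ and $\Delta_{\min}(\bm F)>0$ are exactly what guarantee uniqueness and stability of $w^\star_\beta(\bm F)$ and keep $\Kinf^\pm$ bounded and continuous in a neighborhood. Since the theorem asserts this for \emph{any} of the six leader/challenger combinations at once, the cleanest route is to isolate, as the generic analysis does, a small list of abstract properties of the leader and of the challenger, verify each property once for EB, TS, TC, TCI, RS separately, and then invoke the generic $\beta$-optimality result; I would organize the proof that way rather than redoing the full argument six times.
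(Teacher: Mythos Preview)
Your proposal is correct and follows essentially the same route as the paper: $\delta$-correctness from Lemma~\ref{lem:kinf_concentration}, then a modular two-phase analysis (sufficient exploration under $\Delta_{\min}>0$, then convergence of $N_n/n$ to $w^\star_\beta$) in which abstract properties of the leader and challenger are isolated and verified once each for EB, TS, TC, TCI, RS, including the Dirichlet BCP estimates you anticipate. The only cosmetic difference is that the paper bypasses the almost-sure $\limsup$ step and instead directly shows that the convergence time $T^\epsilon_\beta$ has finite expectation, from which the bound on $\bE_{\bm F}[\tau_\delta]$ follows by a deterministic comparison with the threshold.
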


Theorem~\ref{thm:beta-opt} gives the asymptotic $\beta$-optimality for six algorithms (Figure~\ref{fig:leader_challenger}). Choosing our favorite Top Two instances therefore requires further empirical and computational considerations. Computing the EB leader has a constant computational cost, while the TS leader is computationally costly for large time $n$ since it requires to sample from a Dirichlet distribution with $N_{n,i} + 2$ parameters for each arm $i$. On the challenger side, the RS challenger is computationally very expensive for large time $n$ as the sampler becomes concentrated around the true mean vector. On the contrary, by leveraging computations done in the stopping rule \eqref{eq:def_stopping_time}, the TC and TCI challengers can be computed in constant time. Based on these computational considerations, the most appealing Top Two algorithm for bounded distribution appears to be the fully deterministic $\beta$-EB-TC. But experiments performed in Section~\ref{sec:experiments} reveal its lack of robustness, and for bounded distributions the best trade-off between robustness and computational complexity is $\beta$-EB-TCI. More generally, $\beta$-TS-TC can also be a good choice provided that we have access to an efficient sampler.

%

\paragraph{Distinct means} Restricting to instances such that $\Delta_{\min}(\bm F) > 0$ (which implies $|i^\star(\bm F)|=1$) is an uncommon assumption in BAI.
However, known Top Two algorithms \cite{Russo2016TTTS, Qin2017TTEI, Shang20TTTS} only have guarantees on those instances.
Our generic analysis reveals that it is solely used to prove sufficient exploration, characterized by \eqref{eq:sufficient_exploration_cdt} (Appendix~\ref{app:ss_how_to_explore}).
Experiments highlights that all our Top Two algorithms except $\beta$-EB-TC perform well on instances where $|i^\star(\bm F)|=1$ and $\Delta_{\min}(\bm F) = 0$ (Figure~\ref{fig:bernoulli_instances}(b)).
Proving theoretical guarantees in this situation is an interesting problem for future work (see Appendix~\ref{app:ss_beyond_all_distinct_means} for a discussion).


\section{Sample complexity analysis}
\label{app:sample_complexity_analysis}

In this section, we sketch the proof of Theorem~\ref{thm:beta-opt}, which follows from the generic sample complexity analysis of Top Two algorithms presented in Appendix~\ref{app:unified_analysis_top_two}. Our proof strategy is the same as that first introduced by \cite{Qin2017TTEI} for the analysis of TTEI and also used by \cite{Shang20TTTS} for TTTS and T3C. It consists in upper bounding the expectation of the \emph{convergence time}, defined as
\begin{equation} \label{eq:rv_T_eps_beta}
	T^{\epsilon}_{\beta} \eqdef \inf \left\{ T \ge 1 \mid \forall n \geq T, \: \max_{i \in [K]} \left| \frac{N_{n,i}}{n} -w_{i}^{\beta} \right| \leq \epsilon \right\}  \: ,
\end{equation}
for $\varepsilon$ small enough. Indeed, we prove in Appendix~\ref{app:ss_asymptotic_optimality} that for any sampling rule
\begin{equation} \label{eq:finite_Tepsbeta_optimality}
		\exists \epsilon_0(\bm F)> 0, \: \forall \epsilon \in (0,\epsilon_0(\bm F)], \: \bE_{\bm F}[T^{\epsilon}_{\beta}] < + \infty \quad \implies  \quad \limsup_{\delta \to 0} \frac{\bE_{\bm F}[\tau_{\delta}]}{\log \left(1/\delta \right)} \leq T_{\beta}^\star(\bm F ) \: .
\end{equation}
This implication only leverages the expression of the stopping rule and the threshold.
It was previously established for Gaussian bandits by \cite{Qin2017TTEI} and we extend this property to bounded distributions and SPEF of sub-exponential distributions.
Up to technicalities ($\Kinf$ continuity and second order terms), this implication is shown by using that if $\tau_{\delta} \ge n$, then
\[
	\ln \left( 1/\delta \right) \approx_{\delta \to 0} c(n,\delta) \ge \min_{j \neq \hat \imath_n} W_n(\hat \imath_n, j) \approx_{n \ge T^{\epsilon}_{\beta}} n T^\star_{\beta}(\bm F)^{-1} \: .
\]

To upper bound the expected convergence time, as prior work we first establish \emph{sufficient exploration}:
\begin{equation} \label{eq:sufficient_exploration_cdt}
    \exists N_1 \text{ s.t. }   \bE_{\bm F}[N_1] < + \infty, \: \forall n \ge N_1, \quad \min_{i\in [K]} N_{n,i} \ge \sqrt{n/K} \: .
\end{equation}
By generalizing \cite{Shang20TTTS} which considered Gaussian, we identify two generic properties for the leader and the challenger under which \eqref{eq:sufficient_exploration_cdt} hold (Appendix~\ref{app:ss_how_to_explore}), provided that we assume $\Delta_{\min}>0$.

We proceed similarly to prove convergence by identifying in Appendix~\ref{app:unified_analysis_top_two} desired properties for the leader and challenger, which are satisfied by all our leaders and challengers for bounded distributions (Appendix~\ref{app:top_two_instances}).
We sketch these conditions below. Let $i^\star$ be the unique element of $i^\star(\bm F)$.

The requirements on the leader and the challenger to ensure $\bE_{\bm F}[T^{\epsilon}_{\beta}] < + \infty$ become apparent when looking at generic properties of Top Two algorithms.
Under any Top Two algorithm, the probability to select arm $i$ at round $n$, $\psi_{n,i} \eqdef \bP_{\mid (n-1)}[I_{n} = i]$, can be written as
\begin{equation}\label{eq:selection-proba}
		\psi_{n,i} = \beta \bP_{\mid (n-1)}[B_n = i] + (1-\beta) \sum_{j \neq i} \bP_{\mid (n-1)}[B_n = j] \bP_{\mid (n-1)}[C_n = i| B_n = j] \: .
\end{equation}
We let $\Psi_{n,i} \eqdef \sum_{t \in [n]} \psi_{t,i}$. For the leader, we can prove using \eqref{eq:selection-proba} that
\[
\forall M \in \N, \quad \left|\frac{\Psi_{n,i^\star}}{n} - \beta \right| \leq   \frac{M-1}{n} + \frac{1}{n} \sum_{t=M}^{n} \bP_{\mid (t-1)}[B_t \neq i^\star] \: .
\]
This suggests that a \textit{good} leader should satisfy that there exists $N_2$ with $ \bE_{\bm F}[N_2] < + \infty$ s.t.
\begin{equation} \label{eq:good_leader_convergence}
	 \forall n \ge N_2, \quad  \bP_{\mid n}[ B_{n+1} \neq i^\star] \leq g(n)	\: ,
\end{equation}
where $g(n) =_{+\infty} o(n^{-\alpha})$ for some $\alpha >0$. For the challenger, noticing that
\[
\forall M \in \N, \: \forall i \ne i^\star, \quad \frac{\Psi_{n,i}}{n} \leq \frac{M-1}{n} + \frac{1}{n} \sum_{t=M}^{n} \bP_{\mid (t-1)}[B_t \neq i^\star] + \frac{1}{n} \sum_{t=M}^{n} \bP_{\mid (t-1)}[C_t = i| B_t = i^\star] \: ,
\]
suggests that a \textit{good} challenger should satisfy that there exists $N_3$ with $ \bE_{\bm F}[N_3] < + \infty$ s.t.
\begin{equation} \label{eq:good_challenger_convergence}
			\forall n \ge N_3, \: \forall i \neq i^\star , \quad \frac{\Psi_{n,i}}{n} \geq w_{i}^{\beta} + \epsilon  \: \Rightarrow \: \bP_{\mid n}[C_{n+1} = i {\mid} B_{n+1} = i^\star] \leq h(n)	\: ,
\end{equation}
where $h(n) =_{+\infty} o(n^{-\alpha})$ for some $\alpha >0$. Then, Cesaro's theorem further
yields
\[
	\exists N_4 \text{ s.t. }   \bE_{\bm F}[N_4] < + \infty, \: \forall n \ge N_4, \quad \max_{i \in [K]} \left| \frac{\Psi_{n,i}}{n} -w_{i}^{\beta} \right| \leq \epsilon \: .
\]
Using that $(N_{n,i} - \Psi_{n,i})/\sqrt{n}$ are sub-Gaussian random variables, we obtain $\bE_{\bm F}[T^{\epsilon}_{\beta}] < + \infty$.

We now explain why \eqref{eq:good_leader_convergence} and \eqref{eq:good_challenger_convergence} are satisfied for the leaders and challengers in  Figure~\ref{fig:leader_challenger} when $\cF$ is the class of bounded distributions.
This follows from concentration properties.
Using the fact that $\sqrt{n}\| F_{n,i} - F\|_{\infty}$ is sub-Gaussian, which follows for the Dvoretzky–Kiefer–Wolfowitz inequality \cite{massart1990}, the continuity of the mean operator $m$ on $\cF$ and the sufficient exploration property \eqref{eq:sufficient_exploration_cdt}, we establish that for all $\alpha > 0$, there exists a random variable $N_{\alpha}$ with finite expectation such that
\begin{equation}\label{eq:concentration-eps}
	\forall n \geq N_{\alpha}, \ \  \max_{i \in [K]} \| F_{n,i} - F_i \|_{\infty} \leq \alpha \ \ \text{ and } \ \ \max_{i \in [K]} |\mu_{n,i} - \mu_i | \leq \alpha \: .
\end{equation}

\paragraph{Deterministic instances}
Recall that $B_{n+1}^{\text{EB}} \in \argmax_{i \in [K]} \mu_{n,i}$. Choosing $\alpha$ in \eqref{eq:concentration-eps} smaller than half the gap between the best and second best arm (which is possible as $|i^\star(\bm F)| = 1$) yields that for all $n \geq N_{\alpha}$, $B_{n+1}^{\text{EB}} = i^\star$.
This proves (\ref{eq:good_leader_convergence}) with $g(n) = 0$.
Using continuity and convexity properties of $\Kinf^{\pm}$, we then establish that there exists $\alpha>0$ and a problem-dependent constant $C_{\bm F} > 0$ such that for $n \geq N_{\alpha}$ and for all $i\neq i^\star$,
\[
	 \frac{\Psi_{n,i}}{n} \geq w_{i}^{\beta} + \epsilon  \quad \implies \quad \frac{1}{n}\left(W_{n}(i^\star,i) - \min_{j\neq i^\star} W_{n}(i^\star,j) \right) \ge C_{\bm F} \: .
\]
This implies that $i \notin \min_{j\neq i^\star} W_{n}(i^\star,j)$, hence $\bP_{\mid n}[C_{n+1}^{\text{TC}} = i \mid B_{n+1} = i^\star] = 0$ for $n\geq N_{\alpha}$. Therefore, (\ref{eq:good_challenger_convergence}) holds with $h(n) =0$.
A similar argument holds for $C_{n+1}^{\text{TCI}}$.

\paragraph{Randomized instances}
Let $a_{n+1,i} \eqdef \bP_{\theta \sim \Pi_n}( i \in \argmax_{j \in [K]} \theta_j)$ be the probability that arm $i$ is the best arm in a sampled model at round $n$.
Since
\[
\bP_{\mid n}[ B_{n+1}^{\text{TS}} \neq i^\star] \le (K-1)\max_{i \neq i^\star} a_{n+1,i} \le (K-1) \max_{i \neq i^\star} \bP_{\theta \sim \Pi_n}(\theta_i \ge \theta_{i^\star}) \: ,
\]
an upper bound on $\bP_{\theta \sim \Pi_n}[\theta_i \ge \theta_{i^\star}]$ is sufficient to prove \eqref{eq:good_leader_convergence}.
We show in Lemma~\ref{lem:from_bcp_one_to_bcp_two} that this can be obtained by leveraging upper bound on the Boundary Crossing Probability (BCP) of the Dirichlet sampler, $\bP_{\theta \sim \Pi_n}[\theta_i \ge u]$ for a fixed threshold $u \in (0,B)$. An upper bound on the BCP can be obtained using the work of \cite{RiouHonda20} and is given in Theorem~\ref{thm:upper_bound_one_arm_bcp_bounded} for the sake of completeness. Putting things together yields that, for all $n$,
\begin{align*}
		\bP_{\theta \sim \Pi_n} [\theta_i \geq \theta_{i^\star}]
		&\leq f\left( \inf_{u \in [0,B]} [ (N_{n,i^\star} + 2) \Kinf^-(\tilde F_{n,i^\star},u)  + (N_{n,i} + 2) \Kinf^+(\tilde F_{n,i},u) ] \right) \: ,
\end{align*}
where $f(x) = (1+x)e^{-x}$. Using again continuity and concentration \eqref{eq:concentration-eps}, we conclude that (\ref{eq:good_leader_convergence}) holds with $g(n) = (K-1)f\left( \left(\sqrt{\frac{n}{K}} + 2 \right) D_{\bm F}\right)$, where $D_{\bm F} > 0$ is a problem dependent constant.

For the challenger, we first observe that
\[
\bP_{\mid n}[ C_{n+1}^{\text{RS}} = i \mid B_{n+1} = i^\star] = \frac{a_{n+1,i}}{1 - a_{n+1,i^\star}} \le \frac{\bP_{\theta \sim \Pi_n} [\theta_i \geq \theta_{i^\star}]}{\max_{j \neq i^\star} \bP_{\theta \sim \Pi_n} [\theta_j \geq \theta_{i^\star}]} \: .
\]
Further upper bounding this quantity to prove (\ref{eq:good_challenger_convergence}) requires a lower bound on $\bP_{\theta \sim \Pi_n}[\theta_i \ge \theta_{i^\star}]$ which can again be obtained using a lower bound on the BCP.
In Appendix~\ref{sub:lower_bound} we provide a tight lower bound on $\bP_{\theta \sim \Pi_n}[\theta_i \ge \theta_{i^\star}]$ featuring the $\Kinf^\pm$ functions.
It permits to prove that (\ref{eq:good_challenger_convergence}) holds with $- \log (h(n))/n =_{+ \infty} \tilde C_{\bm F} + o(1) $ where $\tilde C_{\bm F}> 0$ is a problem dependent constant.

The above derivations all use the concentration property \eqref{eq:concentration-eps}, which requires the sufficient exploration property \eqref{eq:sufficient_exploration_cdt}.
For our deterministic challengers, sufficient exploration is obtained by noticing that $W_{n}(i,j)$ can be upper and lower bounded by linear functions of the number of samples.
Proving sufficient exploration is more challenging for a randomized challenger, and existing proofs were exploiting the symmetry of the Gaussian posterior.
In our analysis we show that a coarse lower bound on the BCP is sufficient to obtain \eqref{eq:concentration-eps}, and prove such lower bound for the Dirichlet sampler:
\begin{align*}
\bP_{\theta \sim \Pi_n}[\theta_i \ge u]
&\ge \left(1 - u/B \right)^{n+1} \quad \text{and} \quad
\bP_{\theta \sim \Pi_n}[\theta_i \le u]
\ge \left(u / B \right)^{n+1} \: .
\end{align*}
These lower bounds ensure that any arm has some (small) probability of being the challenger thanks to re-sampling.
Without adding $\{0,B\}$ to $\cH_{n,i}$, those probabilities could be equal to zero.

Our analysis is easily amenable to tackle different families of distributions $\cF$.
This requires continuity and convexity properties for the corresponding $\Kinf$ functions, an appropriate concentration result and further upper and lower bounds on the BCP of the sampler if one wish to analyze randomized algorithms.
As an illustration, we show asymptotic $\beta$-optimality of the $\beta$-EB-TC, $\beta$-EB-TCI algorithms for SPEF with sub-exponential distributions, see Appendix~\ref{app:spef}.

\section{Experiments}
\label{sec:experiments}

We assess the empirical performance of our Top Two algorithms on the DSSAT real-world data and on Bernoulli instances in the moderate regime ($\delta= 0.01$).
The stopping rule (\ref{eq:def_stopping_time}) is used with the threshold $c(n,\delta)$ defined in (\ref{eq:def_kinf_threshold_glr}).
As Top Two sampling rules, we present results for $\beta$-EB-TC, $\beta$-EB-TCI, $\beta$-TS-TC and $\beta$-TS-TCI with $\beta=0.5$.
Additional experiments are available in Appendix~\ref{app:ss_supplementary_experiments}: on the RS challenger whose computational cost prevent it to be evaluated with (\ref{eq:def_kinf_threshold_glr}) and on larger sets of arms (up to $K=1000$). 

As benchmarks for the sampling rule, we use KL-LUCB with Bernoulli divergence \citep{COLT13} (whose theoretical guarantees extend to any distribution bounded in $[0,1]$), ``fixed'' sampling which is an oracle playing with proportions $w^\star(\bm F)$ and uniform sampling.
We also propose a heuristic adaptation of the DKM algorithm \citep{Degenne19GameBAI} (which is asymptotically optimal for SPEF) to tackle bounded distributions, which we denote by $\Kinf$-DKM, and uses forced exploration instead of optimism.
Inspired by the regret minimization algorithm $\Kinf$-UCB \citep{Agrawal21Regret}, we propose its LUCB variant \cite{Shivaramal12}, named $\Kinf$-LUCB.
The upper/lower confidence indices are obtained by inverting of $\Kinf^{\pm}$, i.e.
\begin{align*}
	\forall i \neq \hat \imath_n, \quad & U_{n+1,i} = \max \left\{ u \in [\mu_{n,i}, B] \mid N_{n,i} \Kinf^+(F_{n,i}, u) \leq c(n,\delta) \right\}	\: , \\
	& L_{n+1,\hat \imath_n} = \min \left\{ u \in [0, \mu_{n,\hat \imath_n}] \mid N_{n,\hat \imath_n} \Kinf^-(F_{n,\hat \imath_n}, u) \leq c(n,\delta) \right\}	\: .
\end{align*}
LUCB-based algorithms \citep{Shivaramal12} use their own stopping rule, namely they stop when $L_{n+1,\hat \imath_n} \ge \max_{j\neq \hat \imath_n} U_{n+1,j}$.
For Bernoulli distributions, $\Kinf$-LUCB recovers KL-LUCB.
While being asymptotically optimal for heavy-tailed distributions \cite{Agrawal20GeneBAI} with an adequate stopping threshold, the Track-and-Stop algorithm is computationally intractable for bounded distributions as it requires to compute $w^\star(\bm F_{n})$ at each time $n$ (or on a geometric grid).
We hence omit it from our experiments.

\paragraph{Crop-management problem}
We benchmark our algorithms on the DSSAT simulator\footnote{DSSAT is an Open-Source project maintained by the DSSAT	Foundation, see https://dssat.net.} \cite{hoogenboom2019dssat}.
Each arm corresponds to a choice of planting date and fixed soil conditions (details in Appendix~\ref{app:additional_experiments}).
To illustrate the problem's difficulty we represent an empirical estimate (independent of the runs of our algorithms) of the yield distributions in Figure~\ref{fig:dsat_instances}(b).
Since the gaps between means are small, the identification problem is hard.
Moreover, $\Kinf$ computations for non-parametric distributions are costlier than Bernoulli ones (see Appendix~\ref{app:ss_implementation_details}), so we only present the results for $100$ runs.

\begin{figure}[ht]
	\centering
	\includegraphics[width=0.485\linewidth]{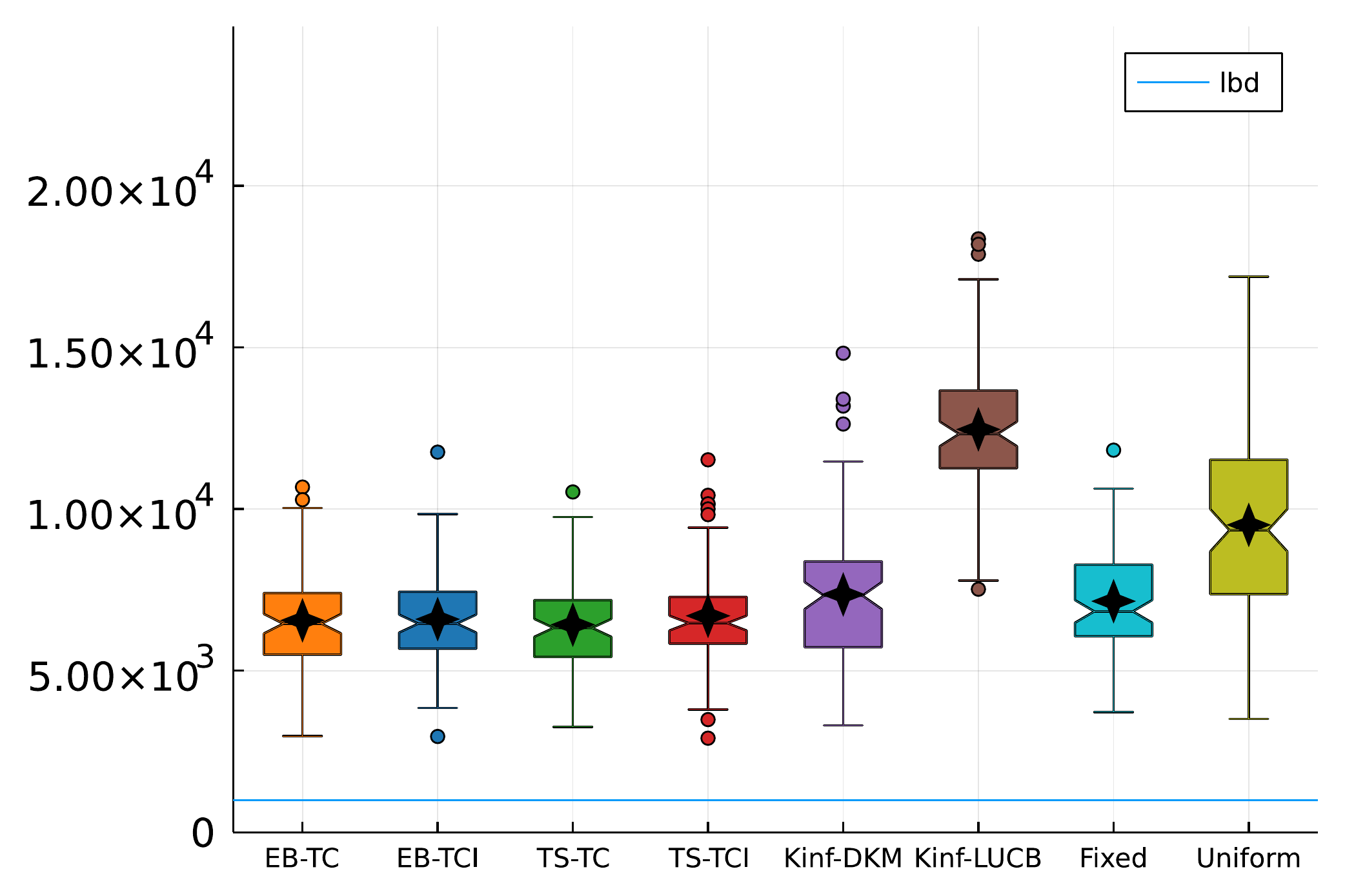}
	\includegraphics[width=0.485\linewidth]{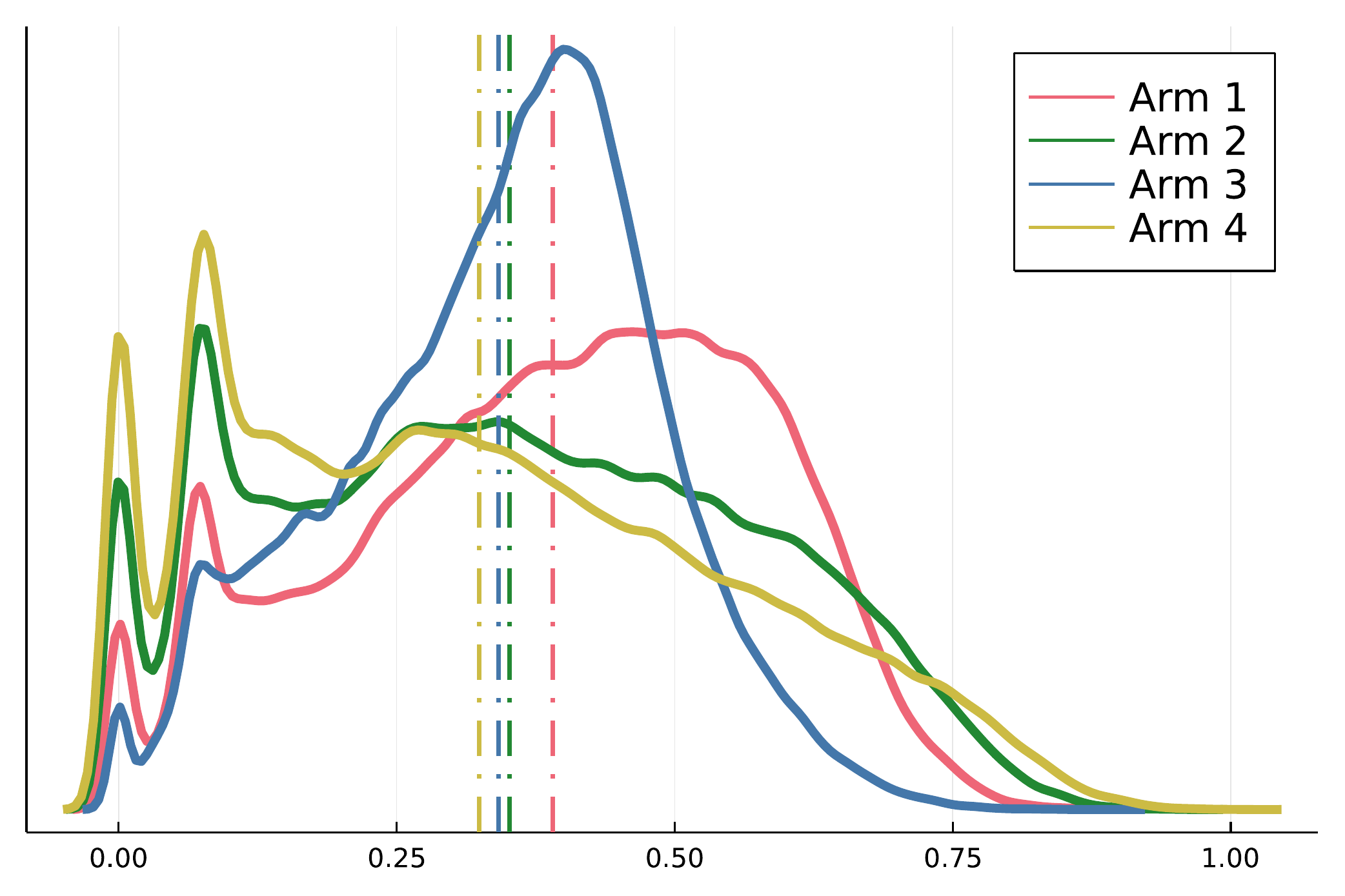}
	\caption{Empirical stopping time (a) on scaled DSSAT instances with their density and mean (b). Lower bound is $T^\star(\bm F) \ln(1/\delta)$. ``stars'' equal means.}
	\label{fig:dsat_instances}
\end{figure}

In Figure~\ref{fig:dsat_instances}, $\beta$-EB-TCI, $\beta$-TS-TC and $\beta$-TS-TCI slightly outperform $\Kinf$-DKM and the fixed (oracle) sampling rule.
Moreover, $\Kinf$-LUCB performs significantly worse than uniform sampling.
Due to the small number of runs, we don't observe large outliers for $\beta$-EB-TC (see Appendix~\ref{app:ss_supplementary_experiments}).
KL-LUCB performs ten times worse than $\Kinf$-LUCB, hence we omit it from Figure~\ref{fig:dsat_instances}.

\paragraph{Bernoulli instances}
Next we assess the performance on $1000$ random Bernoulli instances with $K=10$ such that $\mu_{1} = 0.6$ and $\mu_{i} \sim \mathcal U ([0.2, 0.5])$ for all $i \neq 1$, where we enforce that $\Delta_{\min} \ge 0.01$.
We also study the instance $\mu = (0.5, 0.45, 0.45)$, in which $\Delta_{\min} = 0$, and perform $1000$ runs.

\begin{figure}[ht]
	\centering
	\includegraphics[width=0.485\linewidth]{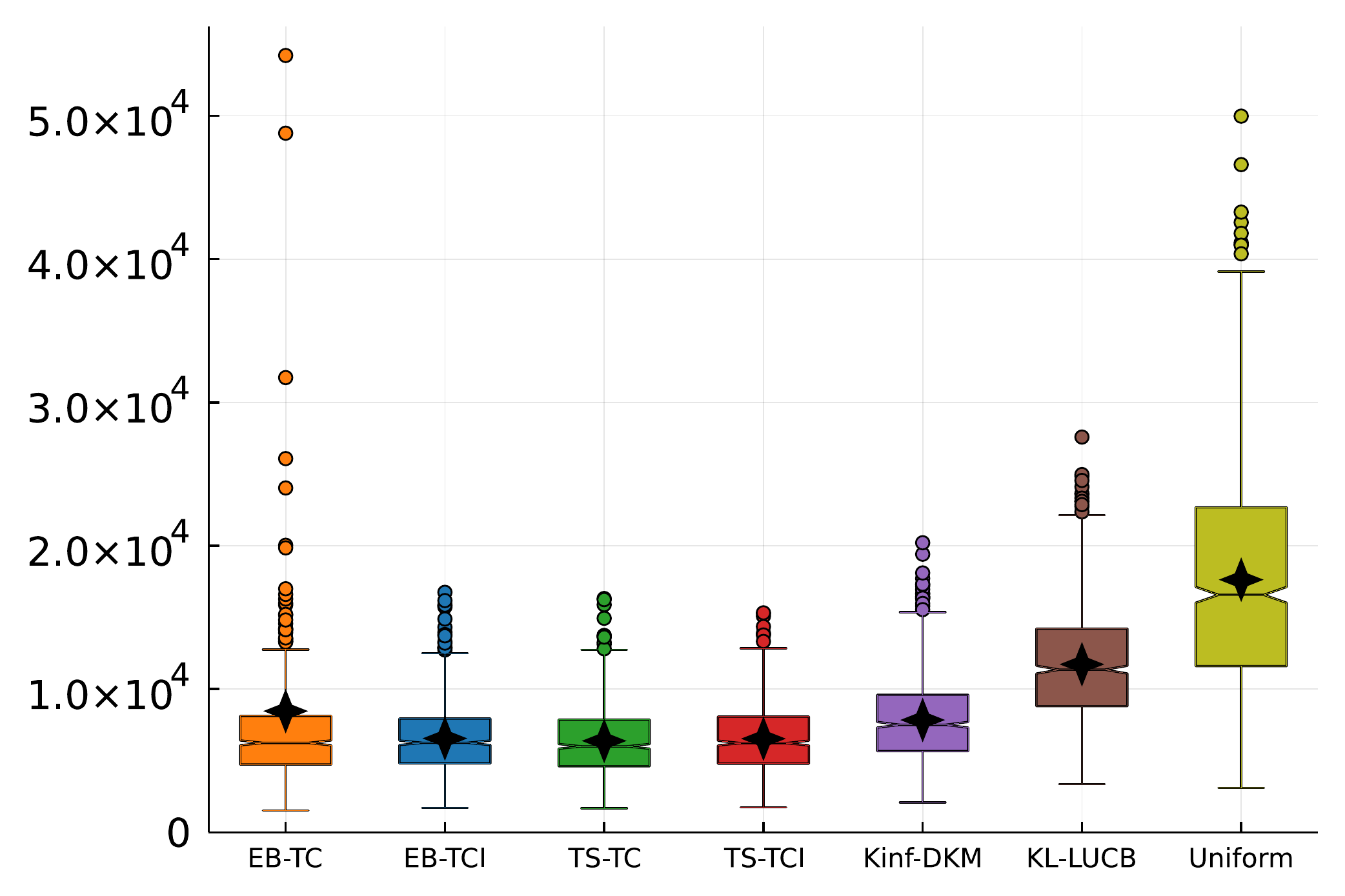}
	\includegraphics[width=0.485\linewidth]{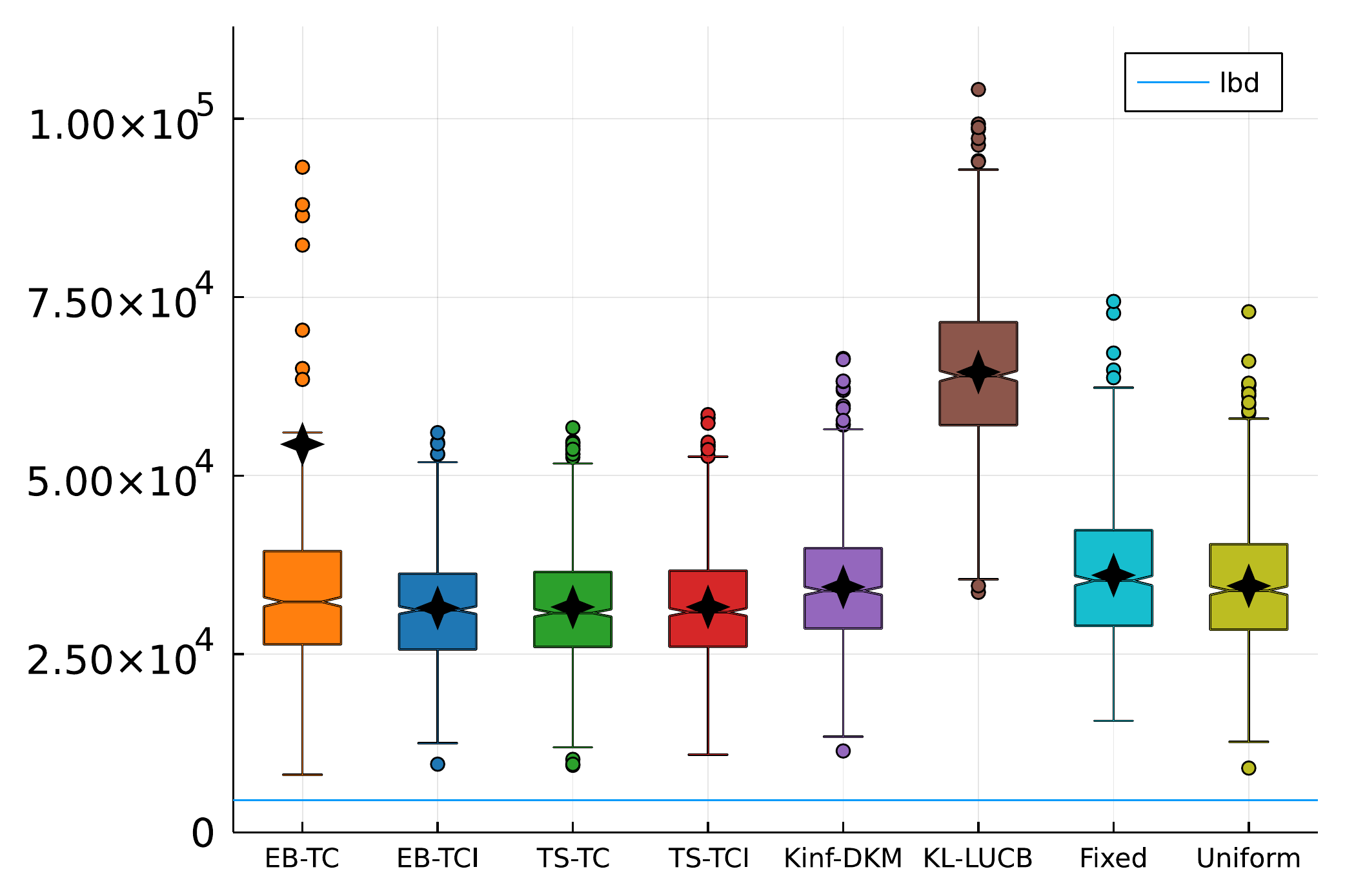}
	\caption{Empirical stopping time on Bernoulli (a) random instances with $K = 10$ and (b) instance $\mu = (0.5, 0.45, 0.45)$. }
	\label{fig:bernoulli_instances}
\end{figure}

In Figure~\ref{fig:bernoulli_instances}(a), we see that $\beta$-EB-TCI, $\beta$-TS-TC and $\beta$-TS-TCI outperform other algorithms.
While this gain is slim compared to $\Kinf$-DKM, the empirical stopping time is twice (resp.\ three times) as large for KL-LUCB (resp.\ uniform sampling).
Even when $\Delta_{\min} = 0$, Figure~\ref{fig:bernoulli_instances}(b) hints that their empirical performance might be preserved.
Figure~\ref{fig:bernoulli_instances} confirms the lack of robustness of $\beta$-EB-TC, which is prone to large outliers.
For the symmetric instance in Figure~\ref{fig:bernoulli_instances}(b), uniform sampling outperforms KL-LUCB and perform on par with the ``fixed'' sampling.

\section{Perspectives}
\label{sec:conclusion}

We provided a general analysis of Top Two algorithms, including new variants using the EB leader and TCI challenger, and proved their asymptotic $\beta$-optimality on the non-parametric class of bounded distributions. On experiments on distributions coming from a real world application, several Top Two variants (in particular $\beta$-TS-TC and $\beta$-EB-TCI) proved more effective than all baselines. Furthermore, $\beta$-EB-TCI is computationally not costlier than computing the stopping rule.

As in previous work on Top Two methods our result only characterizes the asymptotic performance of the algorithms, and obtaining bounds on the sample complexity for any $\delta$ that would reflect their good empirical performance is a most pressing open question.
Our work also hints at what is needed to obtain non-asymptotic guarantees: the only variant for which the empirical behavior does not reflect the asymptotic bound is $\beta$-EB-TC, which is also the most greedy variant. Algorithms using a sampler naturally explore, and the penalized version $\beta$-EB-TCI successfully corrects the shortcomings of $\beta$-EB-TC by penalizing over-sampling.
Quantifying the amount of exploration required by Top Two algorithms should also allow the removal of the hypothesis $\Delta_{\min} > 0$ from Theorem~\ref{thm:beta-opt}.

Finally, Top Two algorithms are promising algorithms to tackle the setting of fixed budget identification, in which the algorithms have to stop at a given time and should then make as few mistakes as possible.
As their sampling rule is anytime (i.e.\ independent of $\delta$), Top Two algorithms might also have theoretical guarantees for BAI in the fixed-budget setting or even the anytime one, in which guarantees on the error probability should be given at all time.

\begin{ack}
	Experiments presented in this paper were carried out using the Grid'5000 testbed, supported by a scientific interest group hosted by Inria and including CNRS, RENATER and several Universities as well as other organizations (see https://www.grid5000.fr). This work has been partially supported by the THIA ANR program ``AI\_PhD@Lille''. The authors acknowledge the funding of the French National Research Agency under the project BOLD (ANR-19-CE23-0026-04), and the Dutch Research Council (NWO) Rubicon grant number 019.202EN.004.
\end{ack}

\bibliographystyle{abbrvnat}
\bibliography{npbai}

\clearpage
\appendix

\section{Outline} \label{app:outline}

The appendices are organized as follows:
\begin{itemize}
	\item Notation are summarized in Appendix~\ref{app:notation}.
	\item We present a unified analysis of Top Two algorithms in Appendix~\ref{app:unified_analysis_top_two}, which highlights key properties on the leader and challenger mechanisms.
	\item In Appendix~\ref{app:top_two_instances}, we analyze several instances for the leader and the challenger mechanisms.
	\item In Appendix~\ref{app:concentration}, we show Lemma~\ref{lem:kinf_concentration} and derive results from concentration on sub-Gaussian random variables.
	\item Appendix~\ref{app:kinf_for_bounded_distributions} gathers key properties on $\Kinf^{\pm}$, including new ones which are required for BAI.
	\item In Appendix~\ref{app:boundary_crossing_probability_bounds}, we show lower and upper bounds on Boundary Crossing Probability (BCP) and on $\bP_{n}[\theta_i \geq \theta_j]$ for the Dirichlet sampler.
	\item The generalization to single-parameter exponential families is done in Appendix~\ref{app:spef}.
	\item Implementation details and additional experiments are presented in Appendix~\ref{app:additional_experiments}.
\end{itemize}

\begin{table}[H]
\caption{Notation for the setting.}
\label{tab:notation_table_setting}
\begin{center}
\begin{tabular}{c c l}
	\toprule
Notation & Type & Description \\
\midrule
$K$ & $\N$ & Number of arms \\
$B$ & $\R^{\star}_{+}$ & Upper bound for bounded distributions \\
$\mathcal P(\mathbb{R})$ & & Probability distributions over $\mathbb{R}$ \\
$\cF$ & & Set of distributions, e.g. bounded distributions on $[0, B]$ \\
$F_i$ & $\cF$ & CDF of the distribution of arm $i \in [K]$ \\
$\bm F$ & $\cF^K$ & $\bm F \eqdef (F_i)_{i \in [K]}$ \\
$m$ & $\cF \to \R$ & Mean operator, $m(F) \eqdef \bE_{X \sim F}[X]$ \\
$\cI \subseteq \R$ &  & Interval of means $\cI \eqdef \{m(F) \mid F \in \cF\}$, e.g. $[0, B]$ for bounded \\
$\mu_i$ & $\mathring \cI$ & Mean of arm $i \in [K]$, i.e. $\mu_i \eqdef m(F_i)$ \\
$\mu$ & $(\mathring \cI)^K$ & Vector of means, $\mu \eqdef (\mu_i)_{i \in [K]}$ \\
$i^\star$ & $\cF^K \to [K]$ & Best arm operator, $i^\star(\bm F) \in \argmax_{i \in [K]} \mu_i$ \\
$T^\star(\bm F)$ & $\R^{\star}_{+}$ & Asymptotic characteristic time \\
$T^\star_{\beta}(\bm F)$ & $\R^{\star}_{+}$ & Asymptotic $\beta$-characteristic time \\
$w^\star(\bm F)$ & $\simplex$ &  Asymptotic optimal allocation \\
$w^\star_{\beta}(\bm F)$ & $\simplex$ & Asymptotic $\beta$-optimal allocation \\
	\bottomrule
\end{tabular}
\end{center}
\end{table}

\section{Notation} \label{app:notation}

We recall some commonly used notation:
the set of integers $[n] \eqdef \{1, \cdots, n\}$,
the complement $\overline{X}$ and interior $\mathring X$ of a set $X$,
the Kullback-Leibler (KL) divergence $\KL(F,G)$ between two distributions $F$ and $G$,
the KL for Bernoulli distributions $\kl$,
the Kinf $\Kinf^{\pm}(F, u)$ between a distribution $F$ and a scalar $u$,
Landau's notation $o$ and $\cO$ and
the $K$-dimensional probability simplex $\simplex \eqdef \left\{w \in \R_{+}^{K} \mid w \geq 0 , \: \sum_{i \in [K]} w_i = 1 \right\}$,
the infinity norm $\| \cdot \|_{\infty}$, i.e. $\| f \|_{\infty} = \sup_{x \in \R} f(x)$.
For all $\cF_n$-measurable set $A$, we denote by $\bP_{\mid n}[A] \eqdef \bP[A \mid \cF_{n}]$ its probability.
For all $\cF_n$-measurable set $A_{\theta}$ depending on $\theta \sim \Pi_n$, we denote by $\bP_{n}[A_{\theta}] \eqdef \bP_{\theta \sim \Pi_n }[A_{\theta} \mid \cF_n]$.
In Table~\ref{tab:notation_table_setting}, we summarize problem-specific notation.
Table~\ref{tab:notation_table_algorithms} gathers notation for the algorithms.

\begin{table}[H]
\caption{Notation for algorithms.}
\label{tab:notation_table_algorithms}
\begin{center}
\begin{tabular}{c c l}
	\toprule
Notation & Type & Description \\
\midrule
$B_{n}$ & $[K]$ & Leader at time $n$ \\
$C_{n}$ & $[K]$ & Challenger at time $n$ \\
$I_{n}$ & $[K]$ & Arm sampled at time $n$ \\
$\beta$ & $(0,1)$ & Probability of sampling the leader instead of the challenger \\
$X_{n, I_{n}}$ & $\cI$ & Sample observed at the end of time $n$, i.e. $X_{n, I_{n}} \sim F_{I_{n}}$ \\
$U_{n}$ &  & Internal randomization at time $n$ \\
$\cF_n$ &  & History at time $n$, i.e. $\cF_{n} \eqdef \sigma(U_1, I_1, X_{1,I_1}, \cdots, I_{n}, X_{n,I_{n}}, U_{n+1})$ \\
$\hat \imath_n$ & $[K]$ & Arm recommended after time $n$, i.e. $\hat \imath_n \in \argmax_{i\in [K]} \mu_{n,i}$ \\
$\tau_{\delta}$ & $\N$ & Sample complexity (stopping time of the algorithm) \\
$\hat{\imath}$ & $[K]$ & Arm recommended by the algorithm \\
$c(n,\delta)$ & $\N \times (0,1) \to \R^{\star}_{+}$ & Stopping threshold function\\
$N_{n,i}$ & $\N$ & Number of pulls of arm $i$ at time $n$, i.e. $N_{n,i} \eqdef \sum_{t \in [n]} \indi{I_{t} = i}$ \\
$F_{n,i}$ & $\cF$ & Empirical distribution, i.e. $F_{n,i} \eqdef \frac{1}{N_{n,i}} \sum_{t \in [n]} \delta_{X_{t, I_{t}}} \indi{I_{t} = i}$ \\
$\mu_{n,i}$ & $\cI$ & Empirical mean, i.e. $\mu_{n,i} \eqdef m(F_{n,i})$ \\
$W_{n}(i,j)$ & $\R_{+}$ & Empirical transportation between arms $i$ and $j$, defined in (\ref{eq:def_transportation_cost}) \\
$\Pi_n$ &  & Sampler at time $n$, e.g. Dirichlet sampler for bounded \\
$\theta$ & $\cI^{K}$ & Observation from the sampler, i.e. $\theta \sim \Pi_n$ \\
\addlinespace
$a_{n,i}$ & $[0,1]$ & $a_{n,i} \eqdef \bP_{n-1}[i \in \argmax_{j\in [K]} \theta_j]$  \\
$\psi_{n,i}$ & $[0,1]$ & Probability of sampling arm $i$ at time $n$: $\psi_{n,i} \eqdef \bP_{\mid (n-1)}[I_{n} = i]$  \\
$\Psi_{n,i}$ & $\R^{\star}_{+}$ & Cumulative sampling probability: $\Psi_{n,i} \eqdef \sum_{t \in [n]} \psi_{t,i}$ \\
$\widehat B_{n}$ & $[K]$ & Effective leader at time $n$ \\
$\widehat C_{n}$ & $[K]$ & Effective challenger at time $n$ \\
	\bottomrule
\end{tabular}
\end{center}
\end{table}


\section{Unified analysis of Top Two algorithms}
\label{app:unified_analysis_top_two}

In this section, we present a unified analysis of Top Two algorithms (Appendix~\ref{app:ss_generic_top_two_algorithms}).
The analysis is split into three parts that will highlight how to explore (Appendix~\ref{app:ss_how_to_explore}), how to converge towards the $\beta$-optimal allocation (Appendix~\ref{app:ss_how_to_converge}) and finally proving asymptotic optimality (Appendix~\ref{app:ss_asymptotic_optimality}).

In this section, we identify the required properties that the leader and the challenger should satisfy.
In Appendix~\ref{app:top_two_instances}, we prove that those properties are verified by the EB and TS leader and the TC, TCI and RS challenger for bounded distributions.
In Appendix~\ref{app:spef}, we discuss the proofs of those properties for single-parameter exponential families.

The general proof strategy follows that first proposed by \cite{Qin2017TTEI} for the TTEI algorithm and later also used by \cite{Shang20TTTS} for TTTS and T3C. However we contribute with a new, modular proof structure which furthermore get rids of several Gaussian-specific arguments.

Striving to tackle simultaneously bounded distributions (Appendix~\ref{app:kinf_for_bounded_distributions}) and single-parameter exponential families (Appendix~\ref{app:spef}), we need to introduce some notation to unify both formulation (Appendix~\ref{app:ss_generic_beta_optimality}).

\subsection{Generic asymptotic $\beta$-optimality}
\label{app:ss_generic_beta_optimality}

In the case of single-parameter exponential families, a distribution $F \in \cF$ is characterized by its mean parameter $m(F) \in \R$.
Therefore, convergence/continuity results can be formulated directly with the $| \cdot |$ norm.

For general bounded distributions, it is not possible to characterize them by using a scalar (or vector) parameter.
Therefore, we need to consider the space of probability measures on $(\R, \mathcal B(\R))$ with the topology of weak convergence of measures, denoted by $\mathcal P(\R)$.
Recall that weak convergence is equivalent to the convergence of the respective cdfs for the infinity norm $\| \cdot \|_{\infty}$, defined as $\| f \|_{\infty} = \sup_{x \in \R} f(x)$.

To unify both approach, we introduce an operator $\cT$ from $\cF$ to a topological space, which associates the distribution $F \in \cF$ with a transformation $\cT (F)$ that characterizes it.
When considering single-parameter exponential families, $\cT$ coincides with the mean operator $m: F \mapsto \bE_{F}[X]$.
For bounded distributions, $\cT$ will be the identity.
Moreover, we define $\cT(\bm F) \eqdef (\cT(F_i) )_{i \in [K]}$ for all $\bm F \in \cF^{K}$ and $\cT(\cF^{K}) \eqdef \left\{ \cT(\bm F) \mid \bm F \in \cF^{K} \right\}$.

Let $\cI \subseteq \R $ be the interval of means $\cI \eqdef \{m(F) \mid F \in \cF\}$.
The functions $(F,u) \mapsto \Kinf^{\pm}(F,u)$ are defined on $\cF \times \cI$.
Two archetypal examples for $\cI$ are $\cI = [0,B]$ (bounded, Bernoulli, Beta, etc) and $\cI =\R$ (Gaussian, etc).

\paragraph{Condition on the means}
We detail the assumptions on the means of $\bm F \in \cF^{K}$ under which Top Two algorithms can be studied.
\begin{assumption} \label{ass:standard_assumptions_BAI}
	There is a unique best arm denoted by $i^\star(\bm F)$ and the means are away from the boundary, i.e. $\mu_i \eqdef m(F_i) \in \mathring \cI$ for all $i \in [K]$.
\end{assumption}

The first part of Assumption~\ref{ass:standard_assumptions_BAI} is a standard assumption in BAI problem, where a unique best arm has to be identified.
Indeed, in the presence of two best arms existing algorithms would only stop with a small probability as they would try to statistically distinguish two identical distributions.
In order to circumvent this hurdle, one can relax the BAI problem in which the goal is to find one arm which is $\varepsilon$-close to the best arm, for some parameter $\varepsilon > 0$.
When it comes to asymptotic optimality, this setting is known to be much more complex than standard BAI \cite{Degenne19Multiple,jourdan2022choosing}.

The second part of Assumption~\ref{ass:standard_assumptions_BAI} is also standard, as we require the distribution to have mass away from the boundary.
When $\cI = \R$, the condition $\mu_{i} \in \mathring \cI = \R$ is always satisfied since we consider finite means.
When $\cI = [0,B]$, the assumption $\mu_i \in (0,B)$ is often made when studying Bernoulli or bounded distributions.
For the bounded setting, this excludes $\delta_{B}$ and $\delta_{0}$, where $\delta_{x}$ denotes the Dirac distributions in $x$.
Since those requirements are mild, we consider that Assumption~\ref{ass:standard_assumptions_BAI} holds in the following, without mentioning it further.

\begin{assumption} \label{ass:all_arms_distinct_bounded_mean}
	All the arms have distinct means, i.e. $\Delta_{\min}(\bm F) \eqdef \min_{i\neq j} |\mu_{F_i} - \mu_{F_j}| > 0$.
\end{assumption}

Requiring $\Delta_{\min} > 0$, is stronger than the unique best arm condition from Assumption~\ref{ass:standard_assumptions_BAI}.
While this is a unusual requirement to study BAI problem, previous works on Top Two algorithms \citep{Russo2016TTTS,Qin2017TTEI,Shang20TTTS} also supposed that $\Delta_{\min} > 0$.
Our unified analysis of Top Two algorithms highlights the role of this condition in the analysis.
It is solely used to prove sufficient exploration (Appendix~\ref{app:ss_how_to_explore}).
Provided enough exploration, the convergence towards the $\beta$-optimal allocation (Appendix~\ref{app:ss_how_to_converge}) only relies on Assumption~\ref{ass:standard_assumptions_BAI}.

As we aim to shed light on the role of Assumption~\ref{ass:all_arms_distinct_bounded_mean} in the analysis, we will explicitly highlight where it is used in the proof of sufficient exploration.
The empirical performance of Top Two algorithms on instances where $\Delta_{\min}= 0$ is assessed in Appendix~\ref{app:sss_expe_on_distinct_means}.
In Appendix~\ref{app:ss_beyond_all_distinct_means}, we discuss possible relaxations of this Assumption for some leaders and challengers.

\paragraph{Transportation costs and $\beta$-optimal allocation}
With the notation introduced above, the transportation cost between arms $(i,j) \in [K]^2$ for an allocation $w \in \simplex$ rewrites as
\begin{equation} \label{eq:def_transportation_cost_general_formulation}
	C_{i,j}(\cT(\bm F), w) \eqdef \inf_{u \in \cI} \left\{ w_i \Kinf^{-}(\cT(F_i), u) + w_j \Kinf^{+}(\cT(F_j), u)\right\} \: ,
\end{equation}
and the empirical transportation cost rewrites as
\begin{equation} \label{eq:def_emp_transportation_cost_general_formulation}
	\frac{1}{n} W_{n}(i,j) = C_{i,j}\left(\cT(\bm F_n), \frac{N_n}{n}\right) \: .
\end{equation}
Similarly, the $\beta$-characteristic time and $\beta$-optimal allocation
\begin{align*}
		T_{\beta}^\star(\bm F)^{-1} \eqdef \max_{w \in \simplex : w_{i^\star(\bm F)} = \beta} \min_{j \neq i^\star(\bm F)} C_{i^\star(\bm F),j}(\cT(\bm F), w) \: , \\
		w_{\beta}^\star(\bm F) \eqdef \argmax_{w \in \simplex : w_{i^\star(\bm F)} = \beta} \min_{j \neq i^\star(\bm F)} C_{i^\star(\bm F),j}(\cT(\bm F), w) \: .
\end{align*}

Property~\ref{prop:singleton_beta_allocation} requires $w_{\beta}^\star(\bm F)$ to be a singleton.
For single-parameter exponential families, it is well known that Property~\ref{prop:singleton_beta_allocation} holds \cite{Russo2016TTTS}.
For bounded distribution, we showed it in Lemma~\ref{lem:properties_characteristic_times}.
As Property~\ref{prop:singleton_beta_allocation} holds for the distributions of interest, we won't mention it further.

\begin{property} \label{prop:singleton_beta_allocation}
		For all $\bm F \in \cF^{K}$ satisfying Assumption~\ref{ass:standard_assumptions_BAI}, $w_{\beta}^\star(\bm F)$ is a singleton.
\end{property}

To ease the notation, in the sequel we denote the unique $\beta$-optimal allocation by $w^{\beta} = (w^{\beta}_i)_{i \in [K]}$. %
For an algorithm to be asymptotically $\beta$-optimal, its empirical allocation $(N_{n,i}/n)_{i\in [K]}$ should converge towards $w^{\beta}$.

\subsection{Generic Top Two algorithms}
\label{app:ss_generic_top_two_algorithms}

The $\sigma$-algebra $\cF_{n} \eqdef \sigma(U_1, I_1, X_{1,I_1},  \cdots, I_{n}, X_{n,I_{n}}, U_{n+1})$, called history, encompasses all the information available to the agent at time $n$ and the internal randomization denoted by $(U_{t})_{t \in [n+1]}$, which is independent of everything else.
For all $\cF_n$-measurable set $A$, we denote by $\bP_{\mid n}[A] \eqdef \bP[A \mid \cF_{n}]$ its probability.
As most BAI algorithms, out methods pull each arm once for the initialization.
At time $n +1$, a Top Two sampling rule outputs an arm $I_{n+1}$ which is $\cF_{n}$-measurable.
The choice $I_{n+1}$ is defined by two mechanisms: the choice of a leader $B_{n+1} \in [K]$ which is $\cF_{n}$-measurable and the choice of the challenger $C_{n+1} \in [K] \setminus \{ B_{n+1} \}$ is $\cF_{n}$-measurable.

Following the proof strategy first introduced by \cite{Qin2017TTEI}, our goal is to upper bound the expectation of the \emph{convergence time}.
For $\varepsilon> 0$, the random variable $T^{\varepsilon}_{\beta}$ (already defined in \eqref{eq:rv_T_eps_beta}) quantifies the number of samples required for the empirical allocations $\frac{N_{n}}{n}$ to be $\epsilon$-close to $w^{\beta}$:
\begin{equation*} 
	T^{\epsilon}_{\beta} \eqdef \inf \left\{ T \ge 1 \mid \forall n \geq T, \: \left\| \frac{N_{n}}{n} -w^{\beta} \right\|_{\infty} \leq \epsilon \right\}  \: .
\end{equation*}
To this end, we first leverage generic properties of Top Two algorithms to understand how the average probability to select an arm can converge to $w^{\beta}$. We denote by $\psi_{n,i} \eqdef \bP_{\mid (n-1)}[I_{n} = i]$, the probability that an arm is sampled at round $n$, and by $\Psi_{n,i} \eqdef \sum_{t \in [n]} \psi_{t,i}$ its cumulative version.
For a Top Two sampling rule, we have
\begin{equation} \label{eq:def_probability_of_being_sampled}
		\psi_{n,i} = \beta \bP_{\mid (n-1)}[B_n = i] + (1-\beta) \sum_{j \neq i} \bP_{\mid (n-1)}[B_n = j] \bP_{\mid (n-1)}[C_n = i| B_n = j] \: .
\end{equation}

\paragraph{Mean probability of being sampled}
Even before specifying the leader and the challenger mechanisms, we can study the general properties of Top Two algorithms, given in Lemmas~\ref{lem:deviation_wrt_the_optimal_arm} and \ref{lem:upper_bound_probability_sampling_other_arms}.
While being obtained by simple algebra, they highlight quite naturally the respective roles of the leader and the challenger mechanisms in order to achieve asymptotic $\beta$-optimality.

Lemma~\ref{lem:deviation_wrt_the_optimal_arm} upper bounds the deviation between the fixed allocation $\beta$ and the mean probability of sampling the optimal arm $\frac{\Psi_{n,i^\star(\bm F)}}{n}$.
\begin{lemma} \label{lem:deviation_wrt_the_optimal_arm}
For all $M \in \N^*$,
	\begin{equation} \label{eq:deviation_wrt_the_optimal_arm}
			\left|\frac{\Psi_{n,i^\star(\bm F)}}{n} - \beta \right| \leq \frac{M-1}{n} + \frac{1}{n} \sum_{t=M}^{n} \bP_{\mid (t-1)}[B_t \neq i^\star(\bm F)]	\: .
	\end{equation}
\end{lemma}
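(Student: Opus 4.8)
The plan is to read off the deviation $\psi_{t,i^\star}-\beta$ directly from the decomposition \eqref{eq:def_probability_of_being_sampled} and then sum over $t$. Abbreviate $i^\star \eqdef i^\star(\bm F)$. Since $\bP_{\mid (t-1)}[B_t = i^\star] = 1 - \bP_{\mid (t-1)}[B_t \neq i^\star]$, substituting into \eqref{eq:def_probability_of_being_sampled} with $i = i^\star$ gives
\[
\psi_{t,i^\star} - \beta = - \beta\, \bP_{\mid (t-1)}[B_t \neq i^\star] + (1-\beta) \sum_{j \neq i^\star} \bP_{\mid (t-1)}[B_t = j]\, \bP_{\mid (t-1)}[C_t = i^\star \mid B_t = j] \: .
\]
The first summand is nonpositive and bounded below by $-\bP_{\mid (t-1)}[B_t \neq i^\star]$ (using $\beta \leq 1$). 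The second summand is nonnegative and, bounding every conditional probability by $1$ and using $\sum_{j \neq i^\star} \bP_{\mid (t-1)}[B_t = j] = \bP_{\mid (t-1)}[B_t \neq i^\star]$ together with $1-\beta \leq 1$, is bounded above by $\bP_{\mid (t-1)}[B_t \neq i^\star]$. As one summand is $\leq 0$ and the other $\geq 0$, both the resulting lower and upper bounds on $\psi_{t,i^\star} - \beta$ have magnitude at most $\bP_{\mid (t-1)}[B_t \neq i^\star]$, so $\lvert \psi_{t,i^\star} - \beta \rvert \leq \bP_{\mid (t-1)}[B_t \neq i^\star]$ for every $t$.

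Next I would sum over $t \in [n]$. By definition $\Psi_{n,i^\star} - n\beta = \sum_{t=1}^{n}(\psi_{t,i^\star} - \beta)$; splitting this sum at $M$, I use the crude bound $\lvert \psi_{t,i^\star} - \beta \rvert \leq 1$ (both quantities lie in $[0,1]$) for $t \leq M-1$ and the bound above for $M \leq t \leq n$. The triangle inequality then gives $\lvert \Psi_{n,i^\star} - n\beta \rvert \leq (M-1) + \sum_{t=M}^{n} \bP_{\mid (t-1)}[B_t \neq i^\star]$, and dividing by $n$ yields exactly \eqref{eq:deviation_wrt_the_optimal_arm}.

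The argument is entirely elementary, so I do not expect a real obstacle; the only point needing a little care is tracking the signs of the two contributions to $\psi_{t,i^\star} - \beta$ in order to obtain the clean one-sided bounds rather than an extra factor $2$. Keeping the free parameter $M$ — even though $\lvert \psi_{t,i^\star} - \beta \rvert \leq \bP_{\mid (t-1)}[B_t \neq i^\star]$ already holds for all $t$ — is deliberate: in the subsequent analysis $\bP_{\mid (t-1)}[B_t \neq i^\star]$ is only controlled via the \emph{good leader} bound \eqref{eq:good_leader_convergence} for $t$ beyond some random time of finite expectation, so the first $M-1$ terms are absorbed through the trivial bound instead.
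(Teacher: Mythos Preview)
Your proof is correct and follows essentially the same approach as the paper: both start from the decomposition \eqref{eq:def_probability_of_being_sampled} with $\bP_{\mid (t-1)}[B_t=i^\star]=1-\bP_{\mid (t-1)}[B_t\neq i^\star]$, bound the two resulting terms of opposite sign by $\bP_{\mid (t-1)}[B_t\neq i^\star]$ using $\beta\le 1$ and $1-\beta\le 1$, and then split the sum at $M$. The only cosmetic difference is that you bound $|\psi_{t,i^\star}-\beta|$ termwise before summing, whereas the paper sums first and then bounds each side separately.
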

\begin{proof}
Let $i^\star = i^\star(\bm F)$ and $M \in \N^*$. Summing (\ref{eq:def_probability_of_being_sampled}) for $i^\star$ and using $\bP_{\mid (t-1)}[B_t = i^\star] = 1 - \bP_{\mid (t-1)}[B_t \neq i^\star]$ yields
\begin{align*}
	\frac{\Psi_{n,i^\star}}{n} - \beta = \frac{1-\beta}{n} \sum_{t \in [n]} \sum_{j \neq i^\star} \bP_{\mid (t-1)}[B_t = j] \bP_{\mid (t-1)}[C_t = i^\star| B_t = j] - \frac{\beta}{n} \sum_{t \in [n]} \bP_{\mid (t-1)}[B_t \neq i^\star]\: .
\end{align*}
Dropping the second negative term, splitting the sum into two and using that $\bP_{\mid (t-1)}[C_t = i^\star| B_t = j] \le 1$ and $\sum_{j \neq i^\star} \bP_{\mid (t-1)}[B_t = j] = \bP_{\mid (t-1)}[B_t \neq i^\star]$, we obtain the following upper bound
\begin{align*}
	\frac{\Psi_{n,i^\star}}{n} - \beta \le (1-\beta) \left( \frac{M-1}{n} + \frac{1}{n} \sum_{t = M}^{n}  \bP_{\mid (t-1)}[B_t \neq i^\star] \right) \: .
\end{align*}
Dropping the first positive term and splitting the sum into two, we obtain the following lower bound
\begin{align*}
	\frac{\Psi_{n,i^\star}}{n} - \beta \ge -\beta \left( \frac{M-1}{n} + \frac{1}{n} \sum_{t = M}^{n}  \bP_{\mid (t-1)}[B_t \neq i^\star] \right) \: .
\end{align*}
Combining the upper and the lower bound and using that $\max\{\beta, 1-\beta\} \leq 1$ yields the result.
\end{proof}
Since an asymptotically $\beta$-optimal algorithm should allocate a proportion $\beta$ of its samples to the best arm, the right-hand side of (\ref{eq:deviation_wrt_the_optimal_arm}) should vanish.
Cesaro's theorem yields the result when
\[
\lim_{t \to +\infty}\bP_{\mid (t-1)}[B_t \neq i^\star(\bm F)] = 0 \: .
\]
This means that a \textit{good} leader should asymptotically identify $i^\star(\bm F)$.
As we will see, the convergence almost surely won't be enough to obtain an upper bound on $\bE_{\bm F}[\tau_{\delta}]$.
To that end, we will need to specify the rate of convergence.

Lemma~\ref{lem:upper_bound_probability_sampling_other_arms} upper bounds the probability of sampling an arm different from the optimal one.
\begin{lemma} \label{lem:upper_bound_probability_sampling_other_arms}
For all $M \in \N^*$ and $i \neq i^\star(\bm F)$,
	\begin{equation} \label{eq:upp_proba_other_arms}
		\frac{\Psi_{n,i}}{n} \leq \frac{M-1}{n} + \frac{1}{n} \sum_{t=M}^{n} \bP_{\mid (t-1)}[B_t \neq i^\star(\bm F)] + \frac{1}{n} \sum_{t=M}^{n} \bP_{\mid (t-1)}[C_t = i| B_t = i^\star(\bm F)]	\: .
	\end{equation}
\end{lemma}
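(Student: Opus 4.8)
The plan is to mirror the elementary argument already used for Lemma~\ref{lem:deviation_wrt_the_optimal_arm}: sum the per-round selection probability \eqref{eq:def_probability_of_being_sampled} and bound away the contributions coming from a suboptimal leader or challenger. Write $i^\star = i^\star(\bm F)$ and fix $i \neq i^\star$ and $M \in \N^*$. Since $\psi_{t,i} \le 1$, the rounds $t < M$ contribute at most $M-1$ to $\Psi_{n,i}$, so it suffices to establish the per-round inequality
\[
\psi_{t,i} \le \bP_{\mid (t-1)}[B_t \neq i^\star] + \bP_{\mid (t-1)}[C_t = i \mid B_t = i^\star]
\]
for every $t \ge M$, and then sum from $t = M$ to $n$ and divide by $n$.

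To prove this per-round bound I would start from \eqref{eq:def_probability_of_being_sampled} and treat its two summands in turn. For the leader term, since $i \neq i^\star$ we have $\bP_{\mid (t-1)}[B_t = i] \le \bP_{\mid (t-1)}[B_t \neq i^\star]$. For the challenger term, split the sum over $j \neq i$ according to whether $j = i^\star$: the contribution of $j = i^\star$ is $\bP_{\mid (t-1)}[B_t = i^\star]\,\bP_{\mid (t-1)}[C_t = i \mid B_t = i^\star] \le \bP_{\mid (t-1)}[C_t = i \mid B_t = i^\star]$, while the remaining terms, using $\bP_{\mid (t-1)}[C_t = i \mid B_t = j] \le 1$ and $\sum_{j \notin \{i,i^\star\}} \bP_{\mid (t-1)}[B_t = j] \le \bP_{\mid (t-1)}[B_t \neq i^\star]$, are bounded by $\bP_{\mid (t-1)}[B_t \neq i^\star]$. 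Substituting these bounds gives
\[
\psi_{t,i} \le \beta\,\bP_{\mid (t-1)}[B_t \neq i^\star] + (1-\beta)\bigl(\bP_{\mid (t-1)}[C_t = i \mid B_t = i^\star] + \bP_{\mid (t-1)}[B_t \neq i^\star]\bigr),
\]
and collecting the two $\bP_{\mid (t-1)}[B_t \neq i^\star]$ terms (coefficients summing to $\beta + (1-\beta) = 1$) together with $1-\beta \le 1$ yields the claimed per-round inequality.

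There is no real obstacle here: the result is pure bookkeeping, completely parallel to the proof of Lemma~\ref{lem:deviation_wrt_the_optimal_arm}. The only point requiring a little care is isolating the $j = i^\star$ term in the challenger sum cleanly, so that the leftover mass $\sum_{j \notin \{i,i^\star\}} \bP_{\mid (t-1)}[B_t = j]$ is legitimately bounded by $\bP_{\mid (t-1)}[B_t \neq i^\star]$ without double counting. Summing the per-round bound over $t \in \{M,\dots,n\}$, adding the trivial bound $M-1$ for the initial rounds, and dividing by $n$ produces exactly \eqref{eq:upp_proba_other_arms}.
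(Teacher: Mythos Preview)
Your proposal is correct and essentially identical to the paper's proof: both bound $\psi_{t,i}$ per round by splitting the challenger sum at $j=i^\star$, using $\bP_{\mid(t-1)}[C_t=i\mid B_t=j]\le 1$ for $j\notin\{i,i^\star\}$, and then summing with the trivial bound on the first $M-1$ rounds. The only cosmetic difference is that the paper passes through the intermediate constant $\max\{\beta,1-\beta\}$ whereas you directly observe that the coefficients $\beta$ and $1-\beta$ sum to $1$.
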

\begin{proof}
Let $i^\star = i^\star(\bm F)$. Using (\ref{eq:def_probability_of_being_sampled}) for $i \neq i^\star$ and $\bP[C_n = i| B_n = j] \leq 1$, we have
\begin{align*}
			\psi_{n,i} &\leq \beta \bP_{\mid (n-1)}[B_n = i] + (1-\beta) \sum_{j \notin \{i, i^\star\}} \bP_{\mid (n-1)}[B_n = j]\\
			&\qquad + (1-\beta) \bP_{\mid (n-1)}[B_n = i^\star] \bP_{\mid (n-1)}[C_n = i| B_t = i^\star] \\
			&\leq \max\{\beta, 1-\beta\} \bP_{\mid (n-1)}[B_n \neq i^\star] + (1-\beta) \bP_{\mid (n-1)}[B_n = i^\star] \bP_{\mid (n-1)}[C_n = i| B_n = i^\star] \\
			& \leq \bP_{\mid (n-1)}[B_n \neq i^\star] +  \bP_{\mid (n-1)}[C_n = i| B_n = i^\star]
\end{align*}
where we used that $\sum_{j \neq i^\star } \bP_{\mid (n-1)}[B_n = j] = \bP_{\mid (n-1)}[B_n \neq i^\star]$, $\max\{\beta, 1-\beta\} \leq 1$ and $\bP_{\mid (n-1)}[B_n = i^\star] \leq 1$.
Summing over $t \in [n]$ after splitting the sum into two, we obtain
\begin{align*}
	\frac{\Psi_{n,i}}{n} \leq \frac{M-1}{n} + \frac{1}{n} \sum_{t=M}^{n} \bP_{\mid (t-1)}[B_t \neq i^\star] + \frac{1}{n} \sum_{t=M}^{n} \bP_{\mid (t-1)}[C_t = i| B_t = i^\star]
\end{align*}
\end{proof}
Given a good leader, the first two terms on the right-hand side of (\ref{eq:upp_proba_other_arms}) vanish.
An asymptotically $\beta$-optimal algorithm should allocate a proportion $w^\beta_i$ of its samples to the sub-optimal arms.
As $\sum_{i \in [K]} \frac{\Psi_{n,i}}{n} = 1$, Cesaro's theorem yields the result when
\[
\lim_{t \to +\infty} \bP_{\mid (t-1)}[C_t = i| B_t = i^\star(\bm F)] = w^\beta_{i} \: .
\]
Given a good leader, a \textit{good} challenger should asymptotically have a probability $w^\beta_{i}$ of pulling a sub-optimal arm $i$.
Likewise, a rate of convergence will be necessary to upper bound $\bE_{\bm F}[\tau_{\delta}]$.

\paragraph{From $\Psi_{n,i}$ to $N_{n,i}$}
The above results feature $\frac{\Psi_{n,i}}{n}$, which is the mean probability of an arm to be sampled.
Thanks to Lemma~\ref{lem:subG_alloc}, it can be linked to the empirical allocation $\frac{N_{n,i}}{n}$.
Its proof, deferred to Appendix~\ref{ssub:sub_gaussian_random_variables}, is a direct consequence of concentration inequalities for sub-Gaussian random variables. A similar result was already derived in the work of \cite{Qin2017TTEI}, who first introduced this style of $W$-based concentration results, which we will use also in Appendix~\ref{app:top_two_instances}.

\begin{lemma} \label{lem:subG_alloc}
	There exists a sub-Gaussian random variable $W_1$ such that for all $(n,i) \in \N \times [K]$
	\begin{equation} \label{eq:subG_alloc}
		|N_{n,i}-\Psi_{n,i}| \leq W_1\sqrt{(n+1)\log(e+n)} \quad \text{a.s.} \: .
	\end{equation}
	In particular, $\bE \left[e^{\lambda W_1}\right] < + \infty$ for all $\lambda > 0$.
\end{lemma}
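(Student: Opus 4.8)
\textbf{Proof strategy.} The plan is to recognize $N_{n,i} - \Psi_{n,i} = \sum_{t=1}^n (\indi{I_t = i} - \psi_{t,i})$ as a martingale with increments bounded in $[-1,1]$ with respect to the filtration $(\cF_{n})_n$, since $\psi_{t,i} = \bP_{\mid (t-1)}[I_t = i] = \bE[\indi{I_t=i} \mid \cF_{t-1}]$ by definition. The natural tool is a time-uniform (anytime) control of this martingale. Rather than bounding the supremum over all $n$ directly (which would give an unbounded constant), I would bound the rescaled quantity $\sup_{n \ge 1} \frac{|N_{n,i}-\Psi_{n,i}|}{\sqrt{(n+1)\log(e+n)}}$ and show this random variable $W_1$ (taking a maximum over the finitely many $i \in [K]$, up to a harmless constant factor) has all exponential moments finite.

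\textbf{Key steps, in order.} First, fix $i \in [K]$ and set $M_n = N_{n,i} - \Psi_{n,i}$, a martingale with $|M_n - M_{n-1}| \le 1$ a.s. Second, apply a maximal Hoeffding/Azuma-type inequality: for any fixed $n$ and any $x>0$, $\bP(\max_{t \le n} |M_t| \ge x) \le 2\exp(-x^2/(2n))$. Third, to get an anytime statement, peel over dyadic blocks $n \in (2^{k-1}, 2^k]$: on such a block, $\sqrt{(n+1)\log(e+n)} \ge c\sqrt{2^k \log(e + 2^{k-1})} \ge c' \sqrt{2^k \cdot k}$ for absolute constants $c,c'>0$, so
\begin{align*}
\bP\Big( \sup_{n \in (2^{k-1},2^k]} \tfrac{|M_n|}{\sqrt{(n+1)\log(e+n)}} \ge \lambda \Big)
&\le \bP\big( \max_{t \le 2^k} |M_t| \ge \lambda c' \sqrt{2^k k} \big)
\le 2 \exp\!\big( - \tfrac{(\lambda c')^2 k}{2} \big) \,.
\end{align*}
Fourth, sum over $k \ge 1$: the geometric-in-$k$ tail gives $\bP(W_{1,i} \ge \lambda) \le C e^{-c'' \lambda^2}$ for $\lambda$ large, where $W_{1,i} \eqdef \sup_{n\ge 1} |M_n|/\sqrt{(n+1)\log(e+n)}$, so $W_{1,i}$ is sub-Gaussian. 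Fifth, set $W_1 = \max_{i \in [K]} W_{1,i}$ (still sub-Gaussian, since a maximum of finitely many sub-Gaussians is sub-Gaussian); this $W_1$ satisfies \eqref{eq:subG_alloc} for every $(n,i)$ by construction, and $\bE[e^{\lambda W_1}] < +\infty$ for all $\lambda > 0$ follows from the Gaussian-type tail bound (the standard fact that $\bP(W_1 \ge x) \le C e^{-c x^2}$ implies $\bE[e^{\lambda W_1}] < \infty$ for all $\lambda$).

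\textbf{Main obstacle.} The delicate point is the interplay between the $\sqrt{n\log\log n}$-type normalization one would get from a naive law-of-the-iterated-logarithm bound and the cleaner $\sqrt{(n+1)\log(e+n)}$ normalization claimed in the statement: the latter is slightly larger, which is exactly what makes the dyadic peeling sum converge with a constant that is itself sub-Gaussian rather than merely finite-a.s. I would be careful that in each dyadic block the normalizer is uniformly comparable to $\sqrt{2^k k}$ (both the lower bound used inside the probability and a matching upper bound are needed to claim \eqref{eq:subG_alloc} holds pointwise with this same $W_1$), and that the exponent $(\lambda c')^2 k/2$ grows linearly in $k$ so that $\sum_k 2e^{-(\lambda c')^2 k /2}$ is a convergent geometric series with sum $O(e^{-c'' \lambda^2})$. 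An alternative, perhaps cleaner, route avoiding explicit peeling is to invoke an off-the-shelf time-uniform martingale concentration inequality (e.g.\ a mixture/stitched bound of the form $|M_n| \le \sqrt{2(n+1)(\log(e+n) + Z)}$ with $Z$ sub-exponential, as in \cite{KK18Mixtures}), from which the sub-Gaussianity of $W_1$ is immediate; I would mention this as the quickest justification and relegate the self-contained peeling argument to a remark.
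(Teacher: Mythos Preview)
Your proposal is correct and follows essentially the same approach as the paper: recognize $N_{n,i}-\Psi_{n,i}$ as a bounded-increment martingale, use Azuma to get per-$n$ sub-Gaussian tails, pass to an anytime bound via the extra $\log(e+n)$ factor, and take a maximum over $i\in[K]$. The only technical difference is that the paper replaces your dyadic peeling by a direct union bound over all $n\ge 1$ (packaged as a standalone lemma: if $(X_n)$ are sub-Gaussian with uniformly bounded-below constants then $\sup_n X_n/\sqrt{n\log(e+n)/(n+1)}$ is sub-Gaussian), which is slightly simpler since the $\log(e+n)$ normalizer already makes $\sum_n e^{-c x^2 \log(e+n)}$ summable without needing to group into blocks.
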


In the following, we take $W_1$ as in Lemma~\ref{lem:subG_alloc}. Since $\bE \left[e^{\lambda W_1}\right] < + \infty$ for all $\lambda > 0$, we have in particular that for all $N = Poly(W_1)$, $\bE \left[N\right] < + \infty$.

\paragraph{Convergence towards $\beta$-optimal allocation}
In Appendix~\ref{app:ss_asymptotic_optimality}, we show that if $\bE_{\bm F}\left[ T^{\epsilon}_{\beta} \right] < + \infty$ for all $\varepsilon$ small enough, then
\[
	\limsup_{\delta \to 0} \frac{\bE_{\bm F}[\tau_{\delta}]}{\log \left(\frac{1}{\delta} \right)} \leq T_{\beta}^\star(\bm F ) \: .
\]
The proof of $\bE_{\bm F}\left[ T^{\epsilon}_{\beta} \right] < + \infty$ can be naturally split into two distinct parts.
In Appendix~\ref{app:ss_how_to_explore}, under Assumption~\ref{ass:all_arms_distinct_bounded_mean}, we show that any Top Two algorithm ensures sufficient exploration provided its leader and challenger each satisfy one property.
In Appendix~\ref{app:ss_how_to_converge}, given sufficient exploration, the convergence of the empirical allocation towards the $\beta$-optimal one is proven for any Top Two algorithm provided its leader and challenger pairs each satisfy one property.

\subsection{How to explore}
\label{app:ss_how_to_explore}

In this section, we identify one property for the leader (Property \ref{prop:leader_cdt_suff_explo}) and one property for the challenger (Property \ref{prop:challenger_cdt_suff_explo}) under which we prove that the corresponding Top Two algorithm ensures sufficient exploration, when Assumption~\ref{ass:all_arms_distinct_bounded_mean} holds. More precisely, we prove in Lemma~\ref{lem:suff_exploration} that
\[\exists N_1 \text{s.t.} \ \  \bE_{\bm F}[N_1] < +\infty  : \ \ \forall n \geq N_1, \min_{i \in [K]} N_{n,i} \geq \sqrt{n/K}\]
We discuss other algorithmic choices that could ensure sufficient exploration without Assumption~\ref{ass:all_arms_distinct_bounded_mean} in Appendix~\ref{app:ss_beyond_all_distinct_means}. This section borrows several elements from existing proofs of sufficient exploration for Top Two algorithms in Gaussian bandits \cite{Qin2017TTEI,Shang20TTTS} but we managed to simplify the argument in order to put forward the key properties needed from a leader and a challenger. First, our generic analysis needs to define an appropriate notion of effective leader and challenger.

\paragraph{Effective leader and challenger}
For an algorithm to alleviate under-sampling some arms, it should have a strictly positive probability of sampling them.
In Top Two algorithms, the choice of the arm to pull $I_{n}$ is defined by the leader $B_n$ and the challenger $C_n$.
Due to possible randomization, it is not trivial to manipulate $B_{n}$ and $C_n$.
Therefore, we define the \textit{effective} leader $\widehat B_n$ and the \textit{effective} challenger $\widehat C_n$ as the arms maximizing the respective probability of being sampled:
\begin{equation} \label{eq:def_leader_challenger}
		\widehat B_n \in \argmax_{i \in [K]} \bP_{\mid (n-1)}[B_n = i] \quad \text{and} \quad \widehat C_n \in \argmax_{i \neq \widehat B_n} \bP_{\mid (n-1)}[C_n = i| B_n = \widehat B_n ] \: ,
\end{equation}
where $\widehat C_n$ is defined conditioned on the effective leader $\widehat B_n$.
We assume that ties are broken uniformly at random.
Note that they are fully determined by the leader and challenger mechanisms.

Lemma~\ref{lem:lower_bound_on_probability_of_sampling_leader_challenger} gives a strictly positive lower bound on the probability of sampling $\widehat B_n$ and $\widehat C_n$.
\begin{lemma} \label{lem:lower_bound_on_probability_of_sampling_leader_challenger}
		Let $\psi_{\min} \eqdef \frac{1}{K}\min \{ \beta, \frac{1-\beta}{K-1}\}$. Then, $\psi_{n,i} \geq \psi_{\min}$ \ for all $i \in \{\widehat B_n, \widehat C_n\}$.
\end{lemma}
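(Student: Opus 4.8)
The plan is to bound from below the probability $\psi_{n,i} = \bP_{\mid(n-1)}[I_n = i]$ for $i \in \{\widehat B_n, \widehat C_n\}$ by exhibiting a chain of events that forces $I_n = i$ and carries at least the claimed probability. I start from the decomposition \eqref{eq:def_probability_of_being_sampled}, which expresses $\psi_{n,i}$ as a nonnegative combination of $\bP_{\mid(n-1)}[B_n = i]$ (weight $\beta$) and the terms $\bP_{\mid(n-1)}[B_n = j]\bP_{\mid(n-1)}[C_n = i \mid B_n = j]$ (weight $1-\beta$).

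For the effective leader, $i = \widehat B_n$: by definition \eqref{eq:def_leader_challenger}, $\widehat B_n$ maximizes $\bP_{\mid(n-1)}[B_n = \cdot]$ over the $K$ arms, so by a pigeonhole argument $\bP_{\mid(n-1)}[B_n = \widehat B_n] \geq 1/K$. Dropping all but the leader term in \eqref{eq:def_probability_of_being_sampled} gives $\psi_{n,\widehat B_n} \geq \beta/K \geq \psi_{\min}$.

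For the effective challenger, $i = \widehat C_n$: here I keep only the single summand $j = \widehat B_n$ in \eqref{eq:def_probability_of_being_sampled}, which is legitimate since $\widehat C_n \neq \widehat B_n$, obtaining $\psi_{n,\widehat C_n} \geq (1-\beta)\bP_{\mid(n-1)}[B_n = \widehat B_n]\,\bP_{\mid(n-1)}[C_n = \widehat C_n \mid B_n = \widehat B_n]$. The first factor in the product is again at least $1/K$ by pigeonhole. For the second factor, $\widehat C_n$ maximizes $\bP_{\mid(n-1)}[C_n = \cdot \mid B_n = \widehat B_n]$ over the $K-1$ arms different from $\widehat B_n$, so by pigeonhole it is at least $1/(K-1)$. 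Multiplying, $\psi_{n,\widehat C_n} \geq (1-\beta)/(K(K-1)) \geq \psi_{\min}$. Taking the minimum of the two bounds over the definition of $\psi_{\min}$ closes the argument.

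I do not anticipate a real obstacle here: the statement is a soft consequence of the averaging/pigeonhole principle applied to \eqref{eq:def_probability_of_being_sampled}, and the only mild subtlety is making sure the conditioning on $B_n = \widehat B_n$ is well-defined, which it is because $\bP_{\mid(n-1)}[B_n = \widehat B_n] \geq 1/K > 0$, so the conditional probability $\bP_{\mid(n-1)}[C_n = \cdot \mid B_n = \widehat B_n]$ makes sense; the uniform tie-breaking convention guarantees the relevant arg-max sets are nonempty and measurable.
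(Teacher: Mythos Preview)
Your proposal is correct and follows essentially the same argument as the paper: drop all but the relevant term(s) in the decomposition \eqref{eq:def_probability_of_being_sampled}, then apply the pigeonhole/averaging bound to get $\bP_{\mid(n-1)}[B_n = \widehat B_n] \geq 1/K$ and $\bP_{\mid(n-1)}[C_n = \widehat C_n \mid B_n = \widehat B_n] \geq 1/(K-1)$. Your additional remark on the well-definedness of the conditioning is a nice touch but not needed beyond what the paper does.
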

\begin{proof}
		Since $\sum_{i \in [K]}\bP_{\mid (n-1)}[B_n = i] = 1$ and $\widehat B_n \in \argmax_{i \in [K]} \bP_{\mid (n-1)}[B_n = i]$, we have
		\[
		\psi_{n,\widehat B_n} \geq \beta \bP_{\mid (n-1)}[B_n = \widehat B_n] = \frac{\beta}{K} \geq \psi_{\min} \: .
		\]

		Similarly, $\sum_{i \in [K]} \bP_{\mid (n-1)}[C_n = i| B_n = \widehat B_n ] = 1$ and $\widehat C_n \in \argmax_{i \neq \widehat B_n} \bP_{\mid (n-1)}[C_n = i| B_n =  \widehat B_n ]$ yields that $\bP_{\mid (n-1)}[C_n = \widehat C_n| B_n = \widehat B_n ] \geq \frac{1}{K-1}$.
		Therefore,
		\[
		\psi_{n,\widehat C_n} \geq (1-\beta) \bP_{\mid (n-1)}[B_n = \widehat B_n ]\bP_{\mid (n-1)}[C_n = \widehat C_n| B_n = \widehat B_n ]  \geq \frac{1-\beta}{K(K-1)} \geq \psi_{\min} \: .
		\]
\end{proof}

In light of Lemma~\ref{lem:lower_bound_on_probability_of_sampling_leader_challenger}, the sufficient exploration can be proven if we show that either $\widehat B_n$ or $\widehat C_n$ is among the under-sampled arms if some still exists.
Before formalizing the properties required by the leader and challenger pair to ensure sufficient exploration, we introduce the relevant notation.

Given an arbitrary threshold $L \in \R_{+}^{*}$, we define the sampled enough set and its arms with highest 
mean (when not empty) as
\begin{equation} \label{eq:def_sampled_enough_sets}
	S_{n}^{L} \eqdef \{i \in [K] \mid N_{n,i} \ge L \} \quad \text{and} \quad \cI_n^\star \eqdef \argmax_{ i \in S_{n}^{L}} \mu_{i} \: .
\end{equation}
Assumption~\ref{ass:all_arms_distinct_bounded_mean} ensures that $\cI_n^\star$ is unique.
To highlight why it is necessary, we view $\cI_n^\star$ as a set with potentially multiple values, and derive properties without assuming that $\cI_n^\star$ is a singleton.
At time $n$, $S_{n}^{L}$ can only be non-empty for $L\le n$, hence it depends explicitly on $n$.

To prove sufficient exploration, we aim at finding a threshold $L(n)$ such that $S_{n}^{L(n)}=[K]$ for $n \ge \tilde N_0$ where $\bE_{\bm F}[\tilde N_0] < + \infty $.
We proceed by contradiction.
The idea is to show that if some arms are still highly under-sampled, then either $\widehat B_n$ or $\widehat C_n$ will be mildly under-sampled.
Since they have a strictly positive probability of being sampled (Lemma~\ref{lem:lower_bound_on_probability_of_sampling_leader_challenger}), this would yield a contradiction by the pigeonhole principle.
We define the highly and the mildly under-sampled sets
\begin{equation} \label{eq:def_undersampled_sets}
	U_n^L \eqdef \{i \in [K]\mid N_{n,i} < \sqrt{L} \} \quad \text{and} \quad V_n^L \eqdef \{i \in [K] \mid N_{n,i} < L^{3/4}\} \: .
\end{equation}
The choice of $\sqrt{L}$ and $L^{3/4}$ is arbitrary and we could consider instead $L^{\alpha_1}$ and $L^{\alpha_2}$ with $0<\alpha_1 < \alpha_2 < 1$.
Note that $U_n^L = \overline{S_{n}^{\sqrt{L}}}$ and $V_n^L = \overline{S_{n}^{L^{3/4}}}$ where $S_{n}^{L}$ is the set defined in (\ref{eq:def_sampled_enough_sets}). We are now ready to state the properties that the leader and the challenger should satisfy in order to show sufficient exploration under Assumption~\ref{ass:all_arms_distinct_bounded_mean}.

\paragraph{Exploring with leader and challenger pair}
We describe the properties that a good leader/challenger should have to ensure sufficient exploration.
In order for the challenger to explore, a \textit{good} leader should first identify the best arm among the arms that are sampled enough (Property~\ref{prop:leader_cdt_suff_explo}).
Then, given a good leader, a \textit{good} challenger should enforce exploration on the arms that are not sampled enough yet when the leader doesn't do it already (Property~\ref{prop:challenger_cdt_suff_explo}).

Property~\ref{prop:leader_cdt_suff_explo} states that if $\widehat B_{n+1}$ is sampled enough, then $\widehat B_{n+1}$ is an arm with highest mean among the sampled enough arms.
\begin{property} \label{prop:leader_cdt_suff_explo}
		There exists $L_0$ with $\bE_{\bm F}[(L_0)^{\alpha}] < +\infty$ for all $\alpha > 0$ such that if $L \ge L_0$, for all $n$ such that $S_{n}^{L} \neq \emptyset$, $\widehat B_{n+1} \in S_{n}^{L}$ implies $\widehat B_{n+1} \in \cI_n^\star$.
\end{property}

Property~\ref{prop:leader_cdt_suff_explo} holds for the EB leader (Lemma~\ref{lem:EB_ensures_suff_explo}) and the TS leader (Lemma~\ref{lem:TS_ensures_suff_explo}).

Property~\ref{prop:challenger_cdt_suff_explo} states that if some arms are still highly under-sampled, i.e. $U_n^L \neq \emptyset$, then having sampled $\widehat B_{n+1}$ enough implies that $\widehat C_{n+1}$ is mildly under-sampled or has highest true mean among the sampled enough arms.
\begin{property} \label{prop:challenger_cdt_suff_explo}
			Let $B_{n+1}$ be a leader satisfying Property~\ref{prop:leader_cdt_suff_explo} and $C_n$ the associated challenger.
			Let $\mathcal J_n^\star = \argmax_{ i \in \overline{V_{n}^{L}}} \mu_{i}$. There exists $L_1$ with $\bE_{\bm F}[L_1] < +\infty$ such that if $L \ge L_1$, for all $n$ such that $U_n^L \neq \emptyset$, $\widehat B_{n+1} \notin V_{n}^{L}$ implies $\widehat C_{n+1} \in V_{n}^{L} \cup \left(\mathcal J_n^\star \setminus \left\{\widehat B_{n+1} \right\}\right)$.
\end{property}
Property~\ref{prop:challenger_cdt_suff_explo} holds for the TC challenger (Lemma~\ref{lem:TC_ensures_suff_explo}), the TCI challenger (Lemma~\ref{lem:TCI_ensures_suff_explo}) and the RS challenger (Lemma~\ref{lem:RS_ensures_suff_explo}).

Provided Assumption~\ref{ass:all_arms_distinct_bounded_mean} holds, Lemma~\ref{lem:suff_exploration} shows that sufficient exploration is achieved for any Top Two algorithm satisfying Properties~\ref{prop:leader_cdt_suff_explo} and~\ref{prop:challenger_cdt_suff_explo}.
\begin{lemma} \label{lem:suff_exploration}
	Assume Assumption~\ref{ass:all_arms_distinct_bounded_mean} holds.
	Under a Top Two algorithm whose leader $B_{n+1}$ and challenger $C_{n+1}$ satisfy Properties~\ref{prop:leader_cdt_suff_explo} and~\ref{prop:challenger_cdt_suff_explo}, there exist $N_0$ with $\bE_{\bm F}[N_0] < + \infty$ such that for all $ n \geq N_0$ and all $i\in [K]$, $N_{n,i} \geq \sqrt{\frac{n}{K}}$.
\end{lemma}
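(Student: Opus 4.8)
The plan is to argue by contradiction: fix a deterministic function $L(n)$ growing polynomially in $n$ — say $L(n) = \sqrt{n/K}$ raised to an appropriate power, or more convenient, work with $L$ as a free parameter and a ``bad'' time $n$ at which $U_n^{L(n)} \neq \emptyset$ — and show that this situation cannot persist beyond a random time $N_0$ with $\bE_{\bm F}[N_0] < +\infty$. The core mechanism is the pigeonhole principle applied to the effective leader and challenger: by Lemma~\ref{lem:lower_bound_on_probability_of_sampling_leader_challenger}, at every round $t$ the arms $\widehat B_t$ and $\widehat C_t$ each have probability at least $\psi_{\min}>0$ of being pulled. So if over a long stretch of rounds the pair $(\widehat B_t, \widehat C_t)$ always intersects the set of under-sampled arms, those arms accumulate pulls (in expectation, and then almost surely up to sub-Gaussian fluctuations), contradicting the assumption that they stayed under-sampled. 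The technical work is converting ``$\widehat B_t$ or $\widehat C_t$ lies in the under-sampled set'' into an actual lower bound on $N_{n,i}$ for the under-sampled arms.

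First I would set up the chain of inclusions among the sets $S_n^L$, $U_n^L = \overline{S_n^{\sqrt L}}$, $V_n^L = \overline{S_n^{L^{3/4}}}$, and combine Properties~\ref{prop:leader_cdt_suff_explo} and~\ref{prop:challenger_cdt_suff_explo} to show: for $L \ge \max\{L_0, L_1\}$ and any round $n$ with $U_n^L \neq \emptyset$, either $\widehat B_{n+1} \in V_n^L$ (so the leader is mildly under-sampled) or $\widehat B_{n+1}\notin V_n^L$, in which case Property~\ref{prop:leader_cdt_suff_explo} forces $\widehat B_{n+1}\in\cI_n^\star$ (using $\Delta_{\min}>0$, hence $\cI_n^\star$ a singleton) and then Property~\ref{prop:challenger_cdt_suff_explo} forces $\widehat C_{n+1} \in V_n^L \cup (\mathcal J_n^\star \setminus\{\widehat B_{n+1}\})$. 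One then checks that $\mathcal J_n^\star \setminus\{\widehat B_{n+1}\}$, when nonempty and disjoint from $V_n^L$, still sits inside a controllable set — the point being that whenever highly under-sampled arms remain, at least one of $\widehat B_{n+1}, \widehat C_{n+1}$ belongs to $V_n^L$, the mildly under-sampled set of size at most $K-1$. (The case analysis around $\mathcal J_n^\star$ versus $\cI_n^\star$ is where Assumption~\ref{ass:all_arms_distinct_bounded_mean} is genuinely used, since with equal means a mildly under-sampled arm could tie for the max and escape $V_n^L$ forever.)

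Next I would run the counting argument. Suppose towards a contradiction that at some large time $n$ there is an arm $i$ with $N_{n,i} < \sqrt{n/K}$. Pick $L$ so that $\sqrt L \approx \sqrt{n/K}$, i.e. $L \approx n/K$; then $i \in U_m^L$ for all $m \le n$, so $U_m^L \neq \emptyset$ throughout, so at every such round an element of $V_m^L$ is pulled with probability $\ge \psi_{\min}$. Since $|V_m^L| \le K-1$ and each arm in $V_m^L$ has fewer than $L^{3/4}$ pulls, the total number of pulls landing in $\bigcup_{m\le n} V_m^L$ is at most $K L^{3/4}$, which is $o(n)$; but a Bernoulli-type lower bound (each round contributes $\ge \psi_{\min}$ in conditional expectation to ``pull something in $V_m^L$'') together with Lemma~\ref{lem:subG_alloc} — to pass from the $\Psi$-sums back to the realized counts $N_{n,i}$ up to a $\sqrt{n\log n}$ deviation — says this total is at least $\psi_{\min} n - \mathcal{O}(W_1\sqrt{n\log n})$ eventually. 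For $n$ large enough relative to $W_1$ and the thresholds $L_0,L_1$, these are incompatible. This yields a random time $N_0$, expressible as a polynomial in $W_1$, $L_0$, $L_1$ (all with finite expectation by Lemma~\ref{lem:subG_alloc} and Properties~\ref{prop:leader_cdt_suff_explo}, \ref{prop:challenger_cdt_suff_explo}), beyond which $\min_i N_{n,i} \ge \sqrt{n/K}$.

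The main obstacle, I expect, is the bookkeeping in the contradiction step: making the choice of $L$ as a function of $n$ consistent across all rounds $m \le n$ simultaneously (so that the same under-sampled arm $i$ certifies $U_m^L \neq \emptyset$ for every $m$), correctly tracking that $V_m^L$ can change arm-membership over time while its cardinality stays bounded, and carefully converting the ``$\ge \psi_{\min}$ per round'' conditional lower bound into an almost-sure lower bound on realized counts via the sub-Gaussian control of $N_{n,i}-\Psi_{n,i}$ — all while keeping the resulting $N_0$ a polynomial in quantities with finite expectation. The leader/challenger-specific facts (Properties~\ref{prop:leader_cdt_suff_explo} and~\ref{prop:challenger_cdt_suff_explo}) are taken as given here, so the difficulty is entirely in this combinatorial/probabilistic accounting rather than in anything distribution-specific.
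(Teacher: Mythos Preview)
Your plan is correct. The first half --- combining Properties~\ref{prop:leader_cdt_suff_explo} and~\ref{prop:challenger_cdt_suff_explo} under $\Delta_{\min}>0$ to conclude that whenever $U_n^L \neq \emptyset$ some arm in $V_n^L$ receives $\psi_{n+1,\cdot} \ge \psi_{\min}$, using that $\mathcal J_n^\star$ is a singleton so $\mathcal J_n^\star\setminus\{\widehat B_{n+1}\}=\emptyset$ --- is exactly the paper's Lemma~\ref{lem:mildly_under_sampled_arms_with_strict_positive_probability_of_being_sampled}. The counting step differs in structure. The paper (Lemma~\ref{lem:lemma_11_Shang20TTTS}) runs an induction on $k\in[K]$: over each window $[\lfloor kL\rfloor,\lfloor (k+1)L\rfloor]$ it \emph{freezes} the set $V_{\lfloor kL\rfloor}^L$, sums $\Psi$ over that fixed set, applies Lemma~\ref{lem:subG_alloc} per arm, and concludes that one more arm must leave $V$; after $K$ windows $U_{\lfloor KL\rfloor}^L=\emptyset$, whence $N_0=KL_3$. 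Your single-shot contradiction with $L=n/K$ is more direct and avoids the induction; the price is that applying Lemma~\ref{lem:subG_alloc} to the sum over the \emph{moving} set $V_m^L$ needs one extra observation you did not spell out: $\sum_{m<n}\indi{I_{m+1}\in V_m^L}$ decomposes as $\sum_i N_{\min(n,\tau_i^L),i}$ (and similarly for the $\Psi$-sum), where $\tau_i^L$ is the exit time of $i$ from $V^L$, so the uniform-in-time bound of Lemma~\ref{lem:subG_alloc} applies at each random time $\min(n,\tau_i^L)\le n$ and yields the $KW_1\sqrt{(n+1)\log(e+n)}$ deviation you need. The paper's freezing trick makes this step immediate at the cost of the $K$-step peel; both routes produce $N_0$ as a polynomial in $W_1,L_0,L_1$ with finite expectation.
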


\paragraph{Proof of Lemma~\ref{lem:suff_exploration}}
When Assumption~\ref{ass:all_arms_distinct_bounded_mean} holds, combining Properties~\ref{prop:leader_cdt_suff_explo} and~\ref{prop:challenger_cdt_suff_explo} and Lemma~\ref{lem:lower_bound_on_probability_of_sampling_leader_challenger} yields Lemma~\ref{lem:mildly_under_sampled_arms_with_strict_positive_probability_of_being_sampled}.
\begin{lemma} \label{lem:mildly_under_sampled_arms_with_strict_positive_probability_of_being_sampled}
	Assume Assumption~\ref{ass:all_arms_distinct_bounded_mean} holds.
	Under a Top Two algorithm whose leader $B_{n+1}$ and challenger $C_{n+1}$ satisfy Properties \ref{prop:leader_cdt_suff_explo} and \ref{prop:challenger_cdt_suff_explo}, there exists $L_2$ with $\bE_{\bm F}[L_2] < +\infty$ such that if $L \geq L_2$, for all $n$, $U_n^L \neq \emptyset$ implies that there exists $J_{n+1} \in V_{n}^{L}$ such that $\psi_{n+1,J_{n+1}} \geq \psi_{\min}$.
\end{lemma}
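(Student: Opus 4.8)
The plan is to reduce the claim to a short case analysis driven by Lemma~\ref{lem:lower_bound_on_probability_of_sampling_leader_challenger}: the only arms guaranteed to have selection probability at least $\psi_{\min}$ at round $n+1$ are the effective leader $\widehat B_{n+1}$ and the effective challenger $\widehat C_{n+1}$, so it suffices to exhibit one of them inside the mildly under-sampled set $V_n^L$ whenever $U_n^L \ne \emptyset$ and $L$ is large. For the threshold I would take $L_2 \eqdef \max\{L_1,\, L_0^{4/3}\}$, where $L_0$ is the variable from Property~\ref{prop:leader_cdt_suff_explo} and $L_1$ the one from Property~\ref{prop:challenger_cdt_suff_explo}; then $\bE_{\bm F}[L_2] \le \bE_{\bm F}[L_0^{4/3}] + \bE_{\bm F}[L_1] < +\infty$ since Property~\ref{prop:leader_cdt_suff_explo} supplies all moments of $L_0$. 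The exponent $4/3$ is chosen so that $L \ge L_2$ forces $L^{3/4} \ge L_0$, which is exactly what lets me invoke Property~\ref{prop:leader_cdt_suff_explo} at the threshold $L^{3/4}$ instead of $L$ — the threshold at which the sets $V_n^L$, $\overline{V_n^L} = S_n^{L^{3/4}}$ and $\mathcal J_n^\star$ naturally live.

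First I would fix $L \ge L_2$ and $n$ with $U_n^L \ne \emptyset$, noting $U_n^L \subseteq V_n^L$ so that $V_n^L \ne \emptyset$. If $\widehat B_{n+1} \in V_n^L$, take $J_{n+1} = \widehat B_{n+1}$ and conclude by Lemma~\ref{lem:lower_bound_on_probability_of_sampling_leader_challenger}. Otherwise $\widehat B_{n+1} \in \overline{V_n^L} = S_n^{L^{3/4}}$, so in particular $S_n^{L^{3/4}} \ne \emptyset$; since moreover $L \ge L_1$, $U_n^L \ne \emptyset$ and $\widehat B_{n+1} \notin V_n^L$, Property~\ref{prop:challenger_cdt_suff_explo} gives $\widehat C_{n+1} \in V_n^L \cup \big(\mathcal J_n^\star \setminus \{\widehat B_{n+1}\}\big)$. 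If $\widehat C_{n+1} \in V_n^L$, take $J_{n+1} = \widehat C_{n+1}$ and again close with Lemma~\ref{lem:lower_bound_on_probability_of_sampling_leader_challenger}. Thus the whole argument comes down to ruling out the last alternative, $\widehat C_{n+1} \in \mathcal J_n^\star \setminus \{\widehat B_{n+1}\}$.

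To rule it out, I would note that $\mathcal J_n^\star = \argmax_{i \in \overline{V_n^L}} \mu_i$ is precisely the set ``$\cI_n^\star$'' of Property~\ref{prop:leader_cdt_suff_explo} instantiated at threshold $L^{3/4}$; since $L^{3/4} \ge L_0$ and $\widehat B_{n+1} \in S_n^{L^{3/4}} \ne \emptyset$, Property~\ref{prop:leader_cdt_suff_explo} forces $\widehat B_{n+1} \in \mathcal J_n^\star$. But under Assumption~\ref{ass:all_arms_distinct_bounded_mean} all means are distinct, so $\mathcal J_n^\star$ is a singleton, hence $\mathcal J_n^\star = \{\widehat B_{n+1}\}$ and $\mathcal J_n^\star \setminus \{\widehat B_{n+1}\} = \emptyset$ — contradicting $\widehat C_{n+1} \in \mathcal J_n^\star \setminus \{\widehat B_{n+1}\}$. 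This closes the proof. The only delicate point I anticipate is the threshold bookkeeping: matching the exponent $L^{3/4}$ appearing in $V_n^L$ and $\mathcal J_n^\star$ with the threshold at which Properties~\ref{prop:leader_cdt_suff_explo} and~\ref{prop:challenger_cdt_suff_explo} are stated, and checking that $L_2$ so defined has finite expectation; everything else is pure set bookkeeping plus the single use of $\Delta_{\min} > 0$ to collapse $\mathcal J_n^\star$ to a point.
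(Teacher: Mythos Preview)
Your proof is correct and follows essentially the same approach as the paper: the same threshold $L_2 = \max\{L_1, L_0^{4/3}\}$, the same case split on whether $\widehat B_{n+1} \in V_n^L$, the same application of Property~\ref{prop:leader_cdt_suff_explo} at level $L^{3/4}$ to force $\widehat B_{n+1} \in \mathcal J_n^\star$, and the same use of Assumption~\ref{ass:all_arms_distinct_bounded_mean} to make $\mathcal J_n^\star$ a singleton and thereby empty out $\mathcal J_n^\star \setminus \{\widehat B_{n+1}\}$. Your exposition is in fact more explicit than the paper's about the threshold bookkeeping (why $4/3$ is the right exponent) and about where exactly $\Delta_{\min} > 0$ enters.
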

\begin{proof}
	Let $\mathcal J_n^\star = \argmax_{ i \in \overline{V_{n}^{L}}} \mu_{i}$.
	Under Assumption~\ref{ass:all_arms_distinct_bounded_mean}, we know that $|\mathcal J_n^\star| = 1$.
	Let $L_0$ as in Property~\ref{prop:leader_cdt_suff_explo}.
	If $L \ge L_0^{4/3}$, for all $n$, $\widehat B_{n+1} \in \overline{V_{n}^{L}}$ implies $\mathcal J_n^\star = \{\widehat B_{n+1} \}$.
	Let $L_1$ as in Property~\ref{prop:challenger_cdt_suff_explo}.
	Therefore, we have if $L \ge L_{2} \eqdef \max\{L_1, L_0^{4/3}\}$, for all $n$ such that $U_n^L \neq \emptyset$, $\widehat B_{n+1} \notin V_{n}^{L}$ implies $\widehat C_{n+1} \in V_{n}^{L}$.
	By Lemma~\ref{lem:lower_bound_on_probability_of_sampling_leader_challenger}, we know that $\psi_{n+1,i} \geq \psi_{\min}$ for all $i \in \{\widehat B_{n+1}, \widehat C_{n+1}\}$.
	Since $\bE_{\bm F}[L_2] \leq \bE_{\bm F}[L_1] + \bE_{\bm F}[L_0^{4/3}] < + \infty$, this concludes the proof.
\end{proof}

Using concentration on $\|T_{n}-\Psi_{n}\|_{\infty}$ (Lemma~\ref{lem:subG_alloc}) and the pigeonhole principle yield a contradiction for a large enough $L$.
Therefore, the set of highly under-sampled arms is empty (Lemma~\ref{lem:lemma_11_Shang20TTTS}).
This technical result was proven in \cite{Shang20TTTS} for TTTS (Lemma 11) and T3C (Lemma 18).
For the sake of completeness, we include the proof.
\begin{lemma} \label{lem:lemma_11_Shang20TTTS}
	Assume Assumption~\ref{ass:all_arms_distinct_bounded_mean} holds.
	Under a Top Two algorithm whose leader $B_{n+1}$ and challenger $C_{n+1}$ satisfy Properties~\ref{prop:leader_cdt_suff_explo} and~\ref{prop:challenger_cdt_suff_explo}, there exists $L_3$ with $\bE_{\bm F}[L_3] < +\infty$ such that for all $L \ge L_3, U_{\lfloor KL \rfloor}^L$ is empty.
\end{lemma}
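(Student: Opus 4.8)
The plan is to argue by contradiction, in the spirit of \cite{Shang20TTTS} but organised entirely around the single inequality of Lemma~\ref{lem:mildly_under_sampled_arms_with_strict_positive_probability_of_being_sampled}. Fix $L \ge L_2$, set $T \eqdef \lfloor KL\rfloor$, and suppose $U_T^L \neq \emptyset$. Since $N_{n,i}$ is nondecreasing in $n$, the sets $U_n^L$ and $V_n^L$ are nonincreasing in $n$, so $U_{n-1}^L \supseteq U_T^L \neq \emptyset$ for every $n \in \{1,\dots,T\}$. Hence Lemma~\ref{lem:mildly_under_sampled_arms_with_strict_positive_probability_of_being_sampled} applies at each such round and provides an arm $J_n \in V_{n-1}^L$ with $\psi_{n,J_n} \ge \psi_{\min}$, so that
\[
\sum_{n=1}^{T}\sum_{i \in V_{n-1}^L}\psi_{n,i}\ \ge\ \sum_{n=1}^{T}\psi_{n,J_n}\ \ge\ T\,\psi_{\min}\,,
\]
a quantity that is essentially linear in $L$.

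The core of the argument is to contradict this with an $o(L)$ upper bound on the same sum. First I would exchange the order of summation, writing $\sum_{n=1}^{T}\sum_{i \in V_{n-1}^L}\psi_{n,i} = \sum_{i\in[K]}\Psi_{m_i,i}$, where $m_i \eqdef \min\bigl\{T,\ 1+\max\{n \le T : i \in V_n^L\}\bigr\}$ is (one past) the last round at which arm $i$ is still mildly under-sampled; by construction $N_{m_i,i} < L^{3/4}+1$. Then Lemma~\ref{lem:subG_alloc} transfers this bound to $\Psi$: $\Psi_{m_i,i} \le N_{m_i,i} + W_1\sqrt{(T+1)\log(e+T)} < L^{3/4}+1+W_1\sqrt{(T+1)\log(e+T)}$. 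Summing over the $K$ arms and using $T \le KL$ gives
\[
\sum_{n=1}^{T}\sum_{i \in V_{n-1}^L}\psi_{n,i}\ <\ K\bigl(L^{3/4}+1\bigr)+KW_1\sqrt{(KL+1)\log(e+KL)}\,.
\]
Combining the two displays, $U_T^L \neq \emptyset$ forces $\lfloor KL\rfloor\,\psi_{\min} < K(L^{3/4}+1)+KW_1\sqrt{(KL+1)\log(e+KL)}$.

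To conclude, observe that the left-hand side is $\Theta(L)$ while the right-hand side is $L^{3/4}+\Theta\bigl(W_1\sqrt{L\log L}\bigr)=o(L)$ for fixed $W_1$; hence there is a threshold $\tilde L$, polynomial in $W_1$ (of order $(W_1/\psi_{\min})^2$ up to logarithmic factors), above which the displayed inequality cannot hold. Setting $L_3 \eqdef \max\{L_2,\tilde L\}$ then makes the assumption $U_{\lfloor KL\rfloor}^L \neq \emptyset$ untenable for every $L \ge L_3$, which is the claim. Finally, $\bE_{\bm F}[L_3] \le \bE_{\bm F}[L_2] + \bE_{\bm F}[\tilde L] < +\infty$, using $\bE_{\bm F}[L_2] < +\infty$ and the fact that $\tilde L = Poly(W_1)$ has finite expectation since $\bE[e^{\lambda W_1}] < +\infty$ for all $\lambda>0$ (Lemma~\ref{lem:subG_alloc}).

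The only real obstacles are bookkeeping: making sure a nonempty $U_T^L$ genuinely propagates backward so that Lemma~\ref{lem:mildly_under_sampled_arms_with_strict_positive_probability_of_being_sampled} is applicable at every round $n < T$, and pinning down the last-exit index $m_i$ so that the $\Psi$-to-$N$ conversion via Lemma~\ref{lem:subG_alloc} is used at a round where arm $i$ still has fewer than $L^{3/4}+1$ pulls. No new idea beyond the two exploration properties already isolated is required.
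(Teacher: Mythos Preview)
Your proof is correct and takes a genuinely different route from the paper. The paper argues by an induction over $K$ time windows: on each interval $[\lfloor kL\rfloor,\lfloor (k+1)L\rfloor]$ it combines Lemma~\ref{lem:mildly_under_sampled_arms_with_strict_positive_probability_of_being_sampled} and Lemma~\ref{lem:subG_alloc} to show that the arms still in $V_{\lfloor kL\rfloor}^L$ collectively receive at least $KL^{3/4}$ pulls, so by pigeonhole one more arm exits and $|V_{\lfloor kL\rfloor}^L|\le K-k$; after $K$ windows $V^L$ (hence $U^L$) is empty. You replace this peeling argument by a single global contradiction: you lower-bound $\sum_{n\le T}\sum_{i\in V_{n-1}^L}\psi_{n,i}$ by $T\psi_{\min}$ in one shot, then upper-bound it arm-by-arm through the last-exit time $m_i$, using $\Psi_{m_i,i}\le N_{m_i,i}+W_1\sqrt{(T+1)\log(e+T)}$ and $N_{m_i,i}<L^{3/4}+1$. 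Both routes rest on the same two lemmas, but yours is cleaner: it avoids the induction and the window-local pigeonhole, and makes the quantitative trade-off ($\Theta(L)$ versus $L^{3/4}+W_1\sqrt{L\log L}$) explicit in a single display. The paper's windowed argument is slightly more local and perhaps closer to the original \cite{Shang20TTTS} presentation, but your approach yields a more direct threshold $\tilde L=Poly(W_1)$ without intermediate $L_5,L_6$.
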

\begin{proof}
	Assume Assumption~\ref{ass:all_arms_distinct_bounded_mean} holds and we are given a Top Two algorithm whose leader $B_{n+1}$ and challenger $C_{n+1}$ satisfy Properties~\ref{prop:leader_cdt_suff_explo} and \ref{prop:challenger_cdt_suff_explo}.
	Let $L_2$ as in Lemma~\ref{lem:mildly_under_sampled_arms_with_strict_positive_probability_of_being_sampled}, with $\bE_{\bm F}[L_2] < + \infty$.
	We proceed by contradiction, and we assume that $U_{\lfloor K L\rfloor}^{L}$ is not empty.
	Then for any $1 \leq \ell \leq\lfloor K L\rfloor, U_{\ell}^{L}$ and $V_{\ell}^{L}$ are non empty as well.
There exists a deterministic $L_{4}$ such that for all $L \ge L_{4}$, $\lfloor L\rfloor \geq K L^{3 / 4}$.
In particular, $\bE_{\bm F}[L_4] = L_4 < + \infty$. In the following, we consider $L \ge \max \{L_2, L_4\}$.

Using the pigeonhole principle, there exists some $i \in [K]$ such that $N_{\lfloor L\rfloor, i} \geq L^{3 / 4}$.
Thus, we have $\left|V_{\lfloor L\rfloor}^{L}\right| \leq K-1$.
Next, we prove $\left|V_{\lfloor 2 L\rfloor}^{L}\right| \leq K-2$.
Otherwise, since $U_{\ell}^{L}$ is non-empty for any $\lfloor L\rfloor+1 \leq \ell \leq\lfloor 2 L\rfloor$, thus by Lemma~\ref{lem:mildly_under_sampled_arms_with_strict_positive_probability_of_being_sampled}, there exists $J_{\ell +1} \in V_{\ell}^{L}$ such that $\psi_{\ell+1, J_{\ell+1}} \geq \psi_{\min }$. Since $V_{\ell}^{L} \subset V_{\lfloor L\rfloor}^{L}$, we have
\[
\sum_{i \in V_{\ell}^{L}} \psi_{\ell+1, i} \geq \psi_{\min } \quad \text{and} \quad  \sum_{i \in V_{\lfloor L\rfloor}^{L}} \psi_{\ell+1, i} \geq \psi_{\min } \: .
\]
Therefore,
\[
\sum_{i \in V_{\lfloor L\rfloor}^{L}}\left(\Psi_{\lfloor 2 L\rfloor +1, i}-\Psi_{\lfloor L\rfloor +1, i}\right)=\sum_{\ell=\lfloor L\rfloor+1}^{\lfloor 2 L\rfloor} \sum_{i \in V_{\lfloor L\rfloor}^{L}} \psi_{\ell + 1, i} \geq \psi_{\min }\lfloor L\rfloor
\]
Then, using Lemma~\ref{lem:subG_alloc}, there exists $L_{5}=Poly\left(W_{1}\right)$ such that for all $ L \ge  \max \{L_2, L_4, L_5\}$, we have
\begin{align*}
&\sum_{i \in V_{\lfloor L\rfloor}^{L}}\left(N_{\lfloor 2 L\rfloor + 1 , i}-N_{\lfloor L\rfloor + 1, i}\right) \\
&\geq \sum_{i \in V_{\lfloor L\rfloor}^{L}}\left(\Psi_{\lfloor 2 L\rfloor + 1, i}-\Psi_{\lfloor L\rfloor +1, i}-2 W_{1} \sqrt{(\lfloor 2 L\rfloor +1)\log \left(e+\lfloor 2 L\rfloor +1\right)}\right) \\
&\geq \psi_{\min }\lfloor L\rfloor-2 K W_{1} \sqrt{(\lfloor 2 L\rfloor +1)\log \left(e+\lfloor 2 L\rfloor + 1\right)}
\: .
\end{align*}
Then, there exists $L_{3}=Poly\left(W_{1}\right)$ such that for all $ L \ge L_3 \eqdef \max \{L_2, L_4, L_5, L_6\}$,
\[
\sum_{i \in V_{\lfloor L\rfloor}^{L}}\left(N_{\lfloor 2 L\rfloor+1, i}-N_{\lfloor L\rfloor+1, i}\right) \geq K L^{3 / 4} \: ,
\]
which implies that we have one arm in $V_{\lfloor L\rfloor}^{L}$ that is pulled at least $L^{3 / 4}$ times between $\lfloor L\rfloor+1$ and $\lfloor 2 L\rfloor$, thus $\left|V_{\lfloor 2 L\rfloor}^{L}\right| \leq K-2$.

By induction, for any $1 \leq k \leq K$, we have $\left|V_{\lfloor k L\rfloor}^{L}\right| \leq K-k$, and finally $U_{\lfloor K L\rfloor}^{L}=\emptyset$ for all $ L \ge L_{3}$.
Since $\bE[e^{\lambda W_1}] < + \infty$ for all $\lambda > 0$, we have in particular that $\bE_{\bm F }[Poly(W_1)] < + \infty$. Since
\[
	\bE_{\bm F}[L_3] \leq \sum_{i \in \{2,4,5,6\}} \bE_{\bm F}[ L_i ] < + \infty \: ,
\]
this concludes the proof.
\end{proof}


	 Let $L_3$ as Lemma~\ref{lem:lemma_11_Shang20TTTS}. Defining $N_0 = K L_3$, we have $\bE_{\bm F}[ N_0 ]  = K \bE_{\bm F}[ L_3 ] < + \infty $. For all $n \ge N_0$, we let $L = \frac{n}{K}$, then by Lemma~\ref{lem:lemma_11_Shang20TTTS}, we have $U_{\lfloor KL \rfloor}^L = U_n^{n/K}$ is empty, which concludes the proof of Lemma~\ref{lem:suff_exploration}.

\qed

\subsection{How to converge}
\label{app:ss_how_to_converge}

In this section, we identify one property for the leader (Property~\ref{prop:leader_cdt_convergence}) and one property for the challenger (Property~\ref{prop:challenger_cdt_convergence}) under which we prove that the convergence time of the corresponding Top Two algorithm has finite expectation expectation (Lemma~\ref{lem:finite_mean_time_eps_convergence_beta_opti_alloc}), provided sufficient exploration occurs. Sufficient exploration is formalized by Property~\ref{prop:suff_exploration}.
\begin{property} \label{prop:suff_exploration}
	There exist $N_1$ with $\bE_{\bm F}[N_1] < +\infty$ such that for all $ n \geq N_1$ and all $i\in [K]$, $N_{n,i} \geq \sqrt{\frac{n}{K}}$.
\end{property}
For a Top Two algorithm whose leader $B_n$ and challenger $C_n$ satisfy Properties~\ref{prop:leader_cdt_suff_explo} and~\ref{prop:challenger_cdt_suff_explo}, Lemma~\ref{lem:suff_exploration} shows that Property~\ref{prop:suff_exploration} holds provided that Assumption~\ref{ass:all_arms_distinct_bounded_mean} holds.
While Assumption~\ref{ass:all_arms_distinct_bounded_mean} allows to ensure sufficient exploration, other algorithmic choices could ensure this property without having the constraint that all means are distinct.
We discuss algorithmic fixes in Appendix~\ref{app:ss_beyond_all_distinct_means}.
As mentioned above the dependency $\sqrt{n}$ is arbitrary and we could consider $n^{\alpha}$ with $\alpha \in (0,1)$.
A similar Lemma~\ref{lem:suff_exploration} could be obtained for this choice.

Let $S_{n}^{L} \eqdef \{i \in [K] \mid N_{n,i} \ge L \}$ as in (\ref{eq:def_sampled_enough_sets}) and $N_1$ as in Property~\ref{prop:suff_exploration}.
Using Assumption~\ref{ass:standard_assumptions_BAI}, we have that $\argmax_{ i \in S_{n}^{\sqrt{n/K}}} \mu_i = i^\star(\bm F)$ for all $n \ge N_1$.

\paragraph{Converging with leader and challenger pair}
We describe the properties that a good leader/challenger should have to ensure convergence towards the $\beta$-optimal allocation, assuming Property~\ref{prop:suff_exploration} holds.
In order for the challenger to converge, a \textit{good} leader should first identify the best arm $i^\star(\bm F)$ (Property~\ref{prop:leader_cdt_convergence}).
Then, given a good leader, a \textit{good} challenger should sample each sub-optimal arm with probability $w_{i}^{\beta}$: in particular, when the mean probability of sampling a sub-optimal arm $i$ exceeds $w_{i}^{\beta}$, this arm should have a small probability of being sampled again (Property~\ref{prop:challenger_cdt_convergence}).


\begin{property} \label{prop:leader_cdt_convergence}
	Assume Property~\ref{prop:suff_exploration} holds.
	There exists $N_2$ with $\bE_{\bm F}[N_2] < +\infty$ such that for all $n \ge N_2$,
	\[
	\bP_{\mid n}[ B_{n+1} \neq i^\star (\bm F)] \leq g(n) \: ,
	\]
	where $g : \N^\star \to (0, +\infty)$ such that $g(n) =_{+\infty} o(n^{-\alpha})$ with $\alpha >0$.
\end{property}
Property~\ref{prop:leader_cdt_convergence} holds for the EB leader (Lemma~\ref{lem:EB_ensures_convergence}) and the TS leader (Lemma~\ref{lem:TS_ensures_convergence}).


\begin{property} \label{prop:challenger_cdt_convergence}
	Assume Property~\ref{prop:suff_exploration} holds.
	Let $B_{n+1}$ be a leader satisfying Property~\ref{prop:leader_cdt_convergence} and $C_{n+1}$ the associated challenger.
	Let $\epsilon \in (0, \epsilon_{0}(\bm F)]$ where $\epsilon_0(\bm F) > 0$ is a problem dependent constant. There exists $N_3$ with $\bE_{\bm F}[N_3] < +\infty$ such that for all $n \geq N_3$ and all $i \neq i^\star(\bm F)$,
	\begin{equation} \label{eq:overshooting_implies_not_sampled_anymore}
	\frac{\Psi_{n,i}}{n} \geq w_{i}^{\beta} + \epsilon  \quad \implies \quad \bP_{\mid n}[C_{n+1} = i \mid B_{n+1} = i^\star(\bm F)] \leq h(n) \: ,
	\end{equation}
	where $h: \N^\star \to (0,+ \infty)$ such that $h(n) =_{+\infty} o(n^{-\alpha})$ with $\alpha >0$.
\end{property}
Property~\ref{prop:challenger_cdt_convergence} holds for the TC challenger (Lemma~\ref{lem:TC_ensures_convergence}), the TCI challenger (Lemma~\ref{lem:TCI_ensures_convergence}) and the RS challenger (Lemma~\ref{lem:RS_ensures_convergence}).

Provided that Property~\ref{prop:suff_exploration} holds, Lemma~\ref{lem:finite_mean_time_eps_convergence_beta_opti_alloc} shows that $\bE_{\bm F}[T_{\beta}^{\epsilon}] < +\infty$ for any Top Two algorithm satisfying Properties~\ref{prop:leader_cdt_convergence} and~\ref{prop:challenger_cdt_convergence}.
\begin{lemma} \label{lem:finite_mean_time_eps_convergence_beta_opti_alloc}
	Assume Property~\ref{prop:suff_exploration} holds.
	Let $\epsilon \in (0, \epsilon_1(\bm F)]$ where $\epsilon_1(\bm F) > 0$ is a problem dependent constant.
	Let $T_{\beta}^{\epsilon}$ as in (\ref{eq:rv_T_eps_beta}).
	Under a Top Two algorithm whose leader $B_{n+1}$ and challenger $C_{n+1}$ satisfy Properties~\ref{prop:leader_cdt_convergence} and~\ref{prop:challenger_cdt_convergence}, we have $\bE_{\bm F}[T_{\beta}^{\epsilon}] < +\infty$.
\end{lemma}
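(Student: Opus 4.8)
The plan is to upper bound $\bE_{\bm F}[T^{\epsilon}_{\beta}]$ by first showing that the mean allocation $\Psi_n/n$ reaches an $O(K\epsilon)$-neighbourhood of $w^{\beta}$ after a time with finite expectation, and then transferring this to the empirical allocation $N_n/n$ via Lemma~\ref{lem:subG_alloc}. Throughout, Property~\ref{prop:suff_exploration} is in force, so Properties~\ref{prop:leader_cdt_convergence} and~\ref{prop:challenger_cdt_convergence} apply; write $i^\star = i^\star(\bm F)$ and recall $w^{\beta}_{i^\star} = \beta$. For the optimal arm I would invoke Lemma~\ref{lem:deviation_wrt_the_optimal_arm} with $M = N_2 + 1$, where $N_2$ is as in Property~\ref{prop:leader_cdt_convergence}: the summands are then bounded by $g(t-1)$, and since $g(n) \to 0$ the Cesàro averages $\frac1n\sum_{s \leq n} g(s)$ vanish \emph{deterministically}. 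Hence there is $N^{(1)}$, a deterministic function of $\epsilon$ plus a finite-expectation multiple of $N_2$, such that $\lvert \Psi_{n,i^\star}/n - \beta\rvert \leq \epsilon$ for all $n \geq N^{(1)}$.

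For a sub-optimal arm $i$, the key is to rule out a lasting overshoot. From the proof of Lemma~\ref{lem:upper_bound_probability_sampling_other_arms} one has $\psi_{t,i} \leq \bP_{\mid (t-1)}[B_t \neq i^\star] + \bP_{\mid (t-1)}[C_t = i \mid B_t = i^\star]$, so Properties~\ref{prop:leader_cdt_convergence} and~\ref{prop:challenger_cdt_convergence} together give, whenever $t - 1 \geq \max\{N_2,N_3\}$ and $\Psi_{t-1,i}/(t-1) \geq w^{\beta}_i + \epsilon$, the bound $\psi_{t,i} \leq g(t-1) + h(t-1)$. I would then argue by a last-exceedance device: supposing $\Psi_{n,i}/n > w^{\beta}_i + 2\epsilon$ for some large $n$, let $m \leq n$ be the last index with $\Psi_{m,i}/m \leq w^{\beta}_i + \epsilon$. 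If no such $m \geq \max\{N_2,N_3\}$ exists, then every increment past $\max\{N_2,N_3\}$ is of size $g(t-1)+h(t-1)$, forcing $\Psi_{n,i}/n \to 0$, which already contradicts the hypothesis for $n$ past a finite-expectation threshold. Otherwise, on $(m,n]$ the increments $\psi_{t,i}$ are at most $g(t-1)+h(t-1)$, whose partial sum $\gamma_n \eqdef \sum_{s \leq n}(g(s)+h(s))$ is $o(n)$ by Cesàro, so $\Psi_{n,i} \leq m(w^{\beta}_i + \epsilon) + 1 + \gamma_n \leq n(w^{\beta}_i + \epsilon) + 1 + \gamma_n$, i.e.\ $\Psi_{n,i}/n \leq w^{\beta}_i + \epsilon + (1 + \gamma_n)/n$, which is below $w^{\beta}_i + 2\epsilon$ for $n$ past a deterministic threshold: a contradiction. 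This produces $N^{(2)}$ with $\bE_{\bm F}[N^{(2)}] < +\infty$ (a deterministic function of $\max\{N_2,N_3\}$) such that $\Psi_{n,i}/n \leq w^{\beta}_i + 2\epsilon$ for all $n \geq N^{(2)}$ and all $i \neq i^\star$.

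Combining these bounds with the identity $\sum_k \Psi_{n,k}/n = 1 = \sum_k w^{\beta}_k$ yields, for $n \geq \max\{N^{(1)}, N^{(2)}\}$, a matching lower bound $\Psi_{n,i}/n \geq w^{\beta}_i - (2K-3)\epsilon$ for every $i$, hence $\| \Psi_n/n - w^{\beta}\|_{\infty} \leq 2(K-1)\epsilon$ from that time on. Lemma~\ref{lem:subG_alloc} then gives $\| N_n/n - \Psi_n/n\|_{\infty} \leq W_1\sqrt{(n+1)\log(e+n)}/n$, which drops below $\epsilon$ once $n$ exceeds a polynomial in $W_1$, a quantity of finite expectation. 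Altogether $\| N_n/n - w^{\beta}\|_{\infty} \leq (2K-1)\epsilon$ for all $n$ beyond a time $N$ with $\bE_{\bm F}[N] < +\infty$; choosing $\epsilon_1(\bm F)$ small enough that $\epsilon \leq \epsilon_0(\bm F)$ (the constant of Property~\ref{prop:challenger_cdt_convergence}) and rescaling $\epsilon \mapsto \epsilon/(2K-1)$, we obtain $T^{\epsilon}_{\beta} \leq N$ and hence $\bE_{\bm F}[T^{\epsilon}_{\beta}] < +\infty$.

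The main obstacle is the last-exceedance argument of the second paragraph: $\Psi_{n,i}/n$ is not monotone, so one cannot merely claim it stays put once it reaches the target, and the book-keeping must carefully separate the random thresholds (the $N_j$'s, which enter only through finite-expectation quantities and polynomials of $W_1$) from the deterministic Cesàro cut-offs so that the final $N$ still has finite expectation. A secondary point worth making explicit is that no summability of $g$ or $h$ is available nor needed: every use of these bounds is in a Cesàro-averaged form, for which $g(n), h(n) = o(n^{-\alpha})$ — indeed merely $o(1)$ — is enough.
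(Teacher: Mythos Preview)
Your proposal is correct and follows essentially the same route as the paper: both handle $i^\star$ via Lemma~\ref{lem:deviation_wrt_the_optimal_arm} plus Ces\`aro, then bound overshoots for $i\neq i^\star$ by a last-exceedance argument combined with Properties~\ref{prop:leader_cdt_convergence}--\ref{prop:challenger_cdt_convergence}, recover the lower bound from $\sum_k\Psi_{n,k}/n=1$, and finally pass to $N_n/n$ via Lemma~\ref{lem:subG_alloc} before rescaling $\epsilon$. The only cosmetic difference is in the last-exceedance bookkeeping: the paper defines $t_{n-1,i}(\epsilon)=\max\{t\le n\mid \Psi_{t-1,i}/(n-1)\le w_i^\beta+\epsilon\}$ (denominator $n-1$) and splits the sum in Lemma~\ref{lem:upper_bound_probability_sampling_other_arms} with indicators, whereas you take $m$ as the last time $\Psi_{m,i}/m\le w_i^\beta+\epsilon$ and bound increments on $(m,n]$ directly---both yield $\Psi_{n,i}/n\le w_i^\beta+O(\epsilon)$ past a finite-expectation threshold.
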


\paragraph{Proof of Lemma~\ref{lem:finite_mean_time_eps_convergence_beta_opti_alloc}}

We first establish in Lemma~\ref{lem:convergence_towards_optimal_allocation_best_arm} the convergence towards the optimal allocation for the best arm,  $w_{i^\star(\bm F)}^{\beta}=\beta$.
\begin{lemma} \label{lem:convergence_towards_optimal_allocation_best_arm}
		Let $\varepsilon>0$.
		Assume Property~\ref{prop:suff_exploration} holds.
		Under a Top Two algorithm whose leader $B_{n}$ satisfies Property~\ref{prop:leader_cdt_convergence}, there exists $N_{4}$ with $\bE_{\bm F}[N_4] < +\infty$ such that for all $n \geq N_{4}$,
\[
 \left| \frac{N_{n, i^\star(\bm F)}}{n} - \beta \right| \leq  \varepsilon \: .
\]
\end{lemma}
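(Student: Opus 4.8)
The plan is to control $N_{n,i^\star}/n$ (writing $i^\star = i^\star(\bm F)$) by combining three ingredients already available: the deterministic bound of Lemma~\ref{lem:deviation_wrt_the_optimal_arm} relating $\Psi_{n,i^\star}/n$ to $\beta$, the quantitative leader guarantee of Property~\ref{prop:leader_cdt_convergence}, and the pathwise concentration of $N_{n,i^\star}$ around $\Psi_{n,i^\star}$ from Lemma~\ref{lem:subG_alloc}.

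First I would bound $|\Psi_{n,i^\star}/n - \beta|$. Applying Lemma~\ref{lem:deviation_wrt_the_optimal_arm} with $M = 1$ gives
\[
\left|\frac{\Psi_{n,i^\star}}{n} - \beta\right| \le \frac{1}{n}\sum_{t=1}^{n}\bP_{\mid (t-1)}[B_t \ne i^\star] \: .
\]
Let $N_2$ be the random, integrable time from Property~\ref{prop:leader_cdt_convergence}. Splitting the sum at $t - 1 = N_2$, the terms with $t \le N_2$ are each at most $1$ (contributing at most $N_2$) while those with $t - 1 \ge N_2$ are at most $g(t-1)$, so
\[
\left|\frac{\Psi_{n,i^\star}}{n} - \beta\right| \le \frac{N_2}{n} + \frac{1}{n}\sum_{t=1}^{n} g(t) \: .
\]
Since $g(n) =_{+\infty} o(n^{-\alpha})$ with $\alpha > 0$ implies $g(n) \to 0$, Cesaro's theorem provides a \emph{deterministic} $n_\varepsilon$ with $\tfrac{1}{n}\sum_{t=1}^{n} g(t) \le \varepsilon/4$ for $n \ge n_\varepsilon$. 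Hence $|\Psi_{n,i^\star}/n - \beta| \le \varepsilon/2$ for all $n \ge \max\{n_\varepsilon, \lceil 4 N_2/\varepsilon\rceil\}$.

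Next I would pass from $\Psi_{n,i^\star}$ to $N_{n,i^\star}$: by Lemma~\ref{lem:subG_alloc}, $|N_{n,i^\star} - \Psi_{n,i^\star}| \le W_1\sqrt{(n+1)\log(e+n)}$ almost surely, so $|N_{n,i^\star}/n - \Psi_{n,i^\star}/n| \to 0$ for fixed $W_1$, and there is $N_5 = Poly(W_1)$ (with $\varepsilon$ held constant) such that this deviation is at most $\varepsilon/2$ on $\{n \ge N_5\}$; since $\bE[e^{\lambda W_1}] < +\infty$ for all $\lambda > 0$, $\bE_{\bm F}[N_5] < +\infty$. Setting $N_4 = \max\{n_\varepsilon, \lceil 4 N_2/\varepsilon\rceil, N_5\}$, the triangle inequality combined with the previous paragraph yields $|N_{n,i^\star}/n - \beta| \le \varepsilon$ for all $n \ge N_4$, while $\bE_{\bm F}[N_4] \le n_\varepsilon + 1 + \tfrac{4}{\varepsilon}\bE_{\bm F}[N_2] + \bE_{\bm F}[N_5] < +\infty$.

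The only genuinely delicate point — the one I would write out carefully — is the interplay between the \emph{random} time $N_2$ after which the leader guarantee takes effect and the Cesaro averaging in Lemma~\ref{lem:deviation_wrt_the_optimal_arm}: one cannot simply assert that the tail average of $\bP_{\mid (t-1)}[B_t \ne i^\star]$ is small for large $n$, but must absorb the $O(N_2)$ contribution of the pre-$N_2$ terms into the threshold via a requirement of the form $n \gtrsim N_2/\varepsilon$, and then check that the resulting $N_4$ is still integrable because $\bE_{\bm F}[N_2] < +\infty$. Everything else is deterministic bookkeeping together with the pathwise sub-Gaussian bound; note that Property~\ref{prop:suff_exploration} enters only as the standing hypothesis needed to invoke Property~\ref{prop:leader_cdt_convergence}.
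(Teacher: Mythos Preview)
Your proposal is correct and follows essentially the same route as the paper: bound $|\Psi_{n,i^\star}/n - \beta|$ via Lemma~\ref{lem:deviation_wrt_the_optimal_arm}, absorb the pre-$N_2$ terms into a threshold of order $N_2/\varepsilon$, control the tail by Cesaro on $g$, and then pass from $\Psi$ to $N$ with Lemma~\ref{lem:subG_alloc}. The only cosmetic difference is that the paper applies Lemma~\ref{lem:deviation_wrt_the_optimal_arm} with a random $M \ge \max\{N_1,N_2\}$ rather than $M=1$ followed by an explicit split, but the resulting bound and the integrability check for $N_4$ are identical; your remark that Property~\ref{prop:suff_exploration} enters only as the hypothesis enabling Property~\ref{prop:leader_cdt_convergence} is also accurate.
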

\begin{proof}
Let $i^\star = i^\star(\bm F) $ and $\varepsilon > 0 $.
Let $N_1$ as in Property~\ref{prop:suff_exploration} and $N_{2}$ as in Property~\ref{prop:leader_cdt_convergence}.

For all $n \geq \max \{ N_1, N_2 \}$, we have $\bP_{\mid (n-1)}[ B_n \neq i^\star] \leq g(n-1)$.
Let $M \ge \max \{ N_1, N_2 \}$.
Using Lemma~\ref{lem:deviation_wrt_the_optimal_arm}, we have
	\begin{align*}
			\left|\frac{\Psi_{n,i^\star}}{n} - \beta \right| &\leq \frac{M-1}{n} + \frac{1}{n} \sum_{t=M}^{n} \bP_{\mid (t-1)}[B_t \neq i^\star] \leq \frac{M-1}{n} + \frac{1}{n} \sum_{t=M}^{n} g(t-1)
	\end{align*}

	By Property~\ref{prop:leader_cdt_convergence}, $g(n) = o(n^{-\alpha})$ with $\alpha > 0$. Using Cesaro's theorem, there exists $N_5 = Poly(W_1) $ such that for all $n \ge \max \{ N_1, N_2, N_5 \}$,
	\[
	\frac{M-1}{n} \leq \frac{\epsilon}{4} \quad \text{and} \quad \frac{1}{n} \sum_{t=M}^{n} g(t-1) \leq \frac{\epsilon}{4} \: .
	\]

	Therefore, we have shown that $\left|\frac{\Psi_{n,i^\star}}{n} - \beta \right| \leq \frac{\epsilon}{2}$ for all $n \ge \max \{ N_1, N_2, N_5 \}$. Using Lemma~\ref{lem:subG_alloc}, we have
\[
\forall n \in \N , \quad \left| \frac{N_{n,i^\star}}{n} - \frac{\Psi_{n,i^\star}}{n} \right| \leq W_1\sqrt{\frac{n+1}{n^2}\log(e+n)} \: .
\]
Therefore, there exists $N_6 = Poly\left(W_{1}\right)$ such that for all $n \geq N_4 \eqdef \max \{ N_1, N_2, N_5, N_6 \}$,
\[
\left| \frac{N_{n,i^\star}}{n} - \frac{\Psi_{n,i^\star}}{n} \right| \leq \frac{\epsilon}{2} \: .
\]
Using the triangular inequality, we obtain
\[
\left| \frac{N_{n,i^\star}}{n}  - \beta \right| \leq \epsilon \: .
\]
Finally $N_4$ verifies the condition $\bE_{\bm F}[N_4] < \infty$ since
\[
	\bE_{\bm F}[N_4] \leq \sum_{i \in \{1,2,5,6\}} \bE_{\bm F}[ N_i ] < + \infty \: .
\]
\end{proof}

We then prove in Lemma~\ref{lem:convergence_towards_optimal_allocation_other_arms} the convergence towards the optimal allocation for all arms. We notably use the previous convergence result established for the optimal arm.

\begin{lemma} \label{lem:convergence_towards_optimal_allocation_other_arms}
	Assume Property~\ref{prop:suff_exploration} holds.
	Let $\epsilon \in (0, \epsilon_1(\bm F)]$ where $\epsilon_1(\bm F) > 0$ is a problem dependent constant.
	Under a Top Two algorithm whose leader $B_n$ and challenger $C_n$ satisfy Properties~\ref{prop:leader_cdt_convergence} and~\ref{prop:challenger_cdt_convergence}, there exists $N_{5}$ with $\bE_{\bm F}[N_5] < +\infty$ such that for all $n \geq N_{5}$,
	\[
		\forall i \in [K], \quad \left| \frac{N_{n,i}}{n} - w_{i}^{\beta} \right| \leq  \varepsilon \: .
	\]
\end{lemma}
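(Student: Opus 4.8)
The plan is to first prove convergence of the cumulative sampling probabilities $\Psi_{n,i}/n$ towards $w_i^{\beta}$ for every suboptimal arm $i$, and only then transfer this to the empirical allocation $N_{n,i}/n$ via Lemma~\ref{lem:subG_alloc}; for $i = i^\star(\bm F)$ the statement is already given by Lemma~\ref{lem:convergence_towards_optimal_allocation_best_arm}, so it suffices to treat $i \neq i^\star(\bm F)$ and then take a finite maximum of the resulting thresholds.

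For the one-sided ("overshoot") control I would reuse the per-step inequality extracted inside the proof of Lemma~\ref{lem:upper_bound_probability_sampling_other_arms}, namely $\psi_{t,i} \le \bP_{\mid (t-1)}[B_t \neq i^\star(\bm F)] + \bP_{\mid (t-1)}[C_t = i \mid B_t = i^\star(\bm F)]$, together with Properties~\ref{prop:leader_cdt_convergence} and~\ref{prop:challenger_cdt_convergence}. Fix $i \neq i^\star(\bm F)$ and call a round $t$ \emph{overshooting} if $\Psi_{t-1,i}/(t-1) \ge w_i^{\beta} + \epsilon$. For $t$ larger than $\max\{N_2, N_3\}$ (with $N_3$ the variable associated to $\epsilon$ in Property~\ref{prop:challenger_cdt_convergence}), the leader term is always at most $g(t-1)$ and, on an overshooting round, the challenger term is at most $h(t-1)$, so $\psi_{t,i} \le r(t-1) \eqdef g(t-1) + h(t-1)$, which tends to $0$. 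On a non-overshooting round one instead has trivially $\Psi_{t,i} = \Psi_{t-1,i} + \psi_{t,i} < (t-1)(w_i^{\beta}+\epsilon) + 1 \le t(w_i^{\beta}+\epsilon) + 1$. Splitting $[1,n]$ at the last non-overshooting round past the relevant threshold then yields $\Psi_{n,i} \le n(w_i^{\beta}+\epsilon) + 1 + \sum_{t \le n} r(t-1)$ for $n$ past a $\mathrm{Poly}(W_1)$-type threshold, and since $r(t) \to 0$, Cesàro's theorem gives $\limsup_n \Psi_{n,i}/n \le w_i^{\beta}+\epsilon$; quantitatively there is $N$ with $\bE_{\bm F}[N] < +\infty$ after which $\Psi_{n,i}/n \le w_i^{\beta} + 2\epsilon$ for all $i \neq i^\star(\bm F)$ simultaneously.

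The matching lower bounds are then free from the identity $\sum_{j \in [K]} \Psi_{n,j}/n = 1$: for a fixed $i \neq i^\star(\bm F)$, write $\Psi_{n,i}/n = 1 - \Psi_{n,i^\star(\bm F)}/n - \sum_{j \neq i^\star(\bm F), j\neq i} \Psi_{n,j}/n$, bound $\Psi_{n,i^\star(\bm F)}/n \le \beta + \epsilon$ (from Lemma~\ref{lem:deviation_wrt_the_optimal_arm} and Property~\ref{prop:leader_cdt_convergence}, exactly as in the proof of Lemma~\ref{lem:convergence_towards_optimal_allocation_best_arm}) and each of the other $K-2$ suboptimal terms by $w_j^{\beta}+2\epsilon$, and use $\sum_{j \neq i^\star(\bm F)} w_j^{\beta} = 1-\beta$ to obtain $\Psi_{n,i}/n \ge w_i^{\beta} - (2K-3)\epsilon$. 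Hence $\|\Psi_n/n - w^{\beta}\|_{\infty} \le (2K-3)\epsilon$ past a variable with finite expectation, and shrinking $\epsilon$ by a $K$-dependent constant at the outset (this is the purpose of the problem-dependent constant $\epsilon_1(\bm F) \le \epsilon_0(\bm F)$) makes this at most $\varepsilon/2$. Finally, Lemma~\ref{lem:subG_alloc} gives $\|N_n/n - \Psi_n/n\|_{\infty} \le W_1 \sqrt{(n+1)\log(e+n)}/n$, which is below $\varepsilon/2$ for $n$ past a $\mathrm{Poly}(W_1)$ threshold $N_6$ with finite expectation; setting $N_5 = \max\{N, N_6, N_4\}$ with $N_4$ from Lemma~\ref{lem:convergence_towards_optimal_allocation_best_arm} (to also cover $i = i^\star(\bm F)$) and using that a finite maximum of integrable random variables is integrable concludes, since $\bE_{\bm F}[N_5] \le \bE_{\bm F}[N] + \bE_{\bm F}[N_6] + \bE_{\bm F}[N_4] < +\infty$.

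The main obstacle I anticipate is the overshoot argument of the second paragraph: one must make the "last non-overshooting round" decomposition fully rigorous while keeping every auxiliary threshold either of the form $\mathrm{Poly}(W_1)$ or a finite maximum of the variables $N_1, N_2, N_3$ from Properties~\ref{prop:suff_exploration}, \ref{prop:leader_cdt_convergence} and~\ref{prop:challenger_cdt_convergence}, so that the final $N_5$ is integrable; and one must carry the $\epsilon/K$ bookkeeping carefully, since the $\limsup$ upper bounds, when fed into $\sum_j \Psi_{n,j}/n = 1$, a priori only produce an $O(K\epsilon)$ two-sided bound and must be re-scaled at the start.
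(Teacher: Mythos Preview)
Your proposal is correct and follows essentially the same approach as the paper's proof: bound $\Psi_{n,i}/n$ from above for each $i\neq i^\star$ via an overshoot/non-overshoot split driven by Property~\ref{prop:challenger_cdt_convergence}, recover the lower bound from $\sum_j \Psi_{n,j}/n = 1$, then transfer to $N_{n,i}/n$ via Lemma~\ref{lem:subG_alloc} and rescale $\epsilon$ by a $K$-dependent constant. The only cosmetic difference is that the paper splits on the event $\{\Psi_{t-1,i}/(n-1) \ge w_i^\beta + \epsilon\}$ (with denominator $n-1$) rather than your $\{\Psi_{t-1,i}/(t-1) \ge w_i^\beta + \epsilon\}$, using the monotonicity $\Psi_{t-1,i}/(n-1) \le \Psi_{t-1,i}/(t-1)$ to invoke the challenger property; this makes the ``last non-overshooting round'' bookkeeping marginally cleaner and yields $\Psi_{n,i}/n \le w_i^\beta + 4\epsilon$ and a final constant $(4K-3)$ where you get $(2K-3)$, but the argument is otherwise identical and your anticipated obstacles are exactly the ones the paper handles.
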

\begin{proof}
	Let $i^\star = i^\star(\bm F)$.
Let $\epsilon_0 = \epsilon_0 (\bm F)$ and $N_3$ as in Property~\ref{prop:challenger_cdt_convergence} and $\epsilon \in (0, \epsilon_0]$.
Let $N_1$ and $N_{2}$ as in Properties~\ref{prop:suff_exploration} and~\ref{prop:leader_cdt_convergence}.
Let $N_{4}$ as in Lemma~\ref{lem:convergence_towards_optimal_allocation_best_arm}.
For all $n \geq \max_{i \in [4]} N_i$, we have $\left| \frac{N_{n, i^\star}}{n} - \beta \right| \leq  \varepsilon$ and for all $i \neq i^\star$,
\[
 	\frac{\Psi_{n-1,i}}{n-1} \geq w_{i}^{\beta} + \epsilon \: \implies \: \bP_{\mid (n-1)}[C_n = i \mid B_n = i^\star] \leq h(n-1)  \: .
\]

	Let $M \ge \max_{i \in [4]} N_i$. By Properties~\ref{prop:leader_cdt_convergence} and~\ref{prop:challenger_cdt_convergence}, $g(n) = o(n^{-\alpha})$ with $\alpha > 0$ and $h(n) = o(n^{-\alpha})$ with $\alpha > 0$.
	Using Cesaro's theorem, there exists a deterministic $N_6$ such that for all $n \ge N_6$,
	\[
\frac{M-1}{n} \leq \epsilon \quad \text{,} \quad \frac{1}{n} \sum_{t=M}^{n} g(t-1) \leq \epsilon \quad \text{and} \quad \frac{1}{n} \sum_{t=M}^{n} h(t-1) \leq \epsilon \: .
	\]
	In particular, $\bE_{\bm F}[N_6] = N_6 < +\infty$. Let $t_{n-1,i}(\epsilon)=\max \left\{t \leq n \mid \frac{\Psi_{t-1, i}}{n-1} \leq w_{i}^{\beta}+ \epsilon \right\}$.
Using Lemma~\ref{lem:upper_bound_probability_sampling_other_arms} and $\frac{\Psi_{t-1, i}}{n-1} \le \frac{\Psi_{t-1, i}}{t-1}$ for $t\leq n$, we obtain for all $n \ge \max_{i \in [4] \cup \{6\}} N_i$
\begin{align*}
		\frac{\Psi_{n,i}}{n} &\leq \frac{M-1}{n} + \frac{1}{n} \sum_{t=M}^{n} \bP_{\mid (t-1)}[B_t \neq i^\star] + \frac{1}{n} \sum_{t=M}^{n} \bP_{\mid (t-1)}[C_t = i| B_t = i^\star] \\
		&\leq  \epsilon + \frac{1}{n} \sum_{t=M}^{n} g(t-1) + \frac{1}{n}\sum_{t=M}^{n} \bP_{\mid (t-1)}[C_t = i| B_t = i^\star] \indi{\frac{\Psi_{t-1, i}}{n-1} \geq w_{i}^{\beta}+\epsilon}\\
		&\quad +\frac{1}{n}\sum_{t=M}^{n} \bP_{\mid (t-1)}[C_t = i| B_t = i^\star] \indi{\frac{\Psi_{t, i}}{n-1} \leq w_{i}^{\beta}+\epsilon} \\
		&\leq 2\epsilon + \frac{1}{n}\sum_{t=M}^{n} h(t-1)  +\frac{1}{n}\sum_{t=M}^{t_{n,i}(\epsilon)} \bP_{\mid (t-1)}[C_t = i| B_t = i^\star] \indi{\frac{\Psi_{t-1, i}}{n-1} \leq w_{i}^{\beta}+\epsilon} \\
		&\leq 3\epsilon + \frac{\Psi_{t_{n-1,i}(\epsilon),i}}{n-1} \\
		&\leq w_{i}^{\beta} + 4\epsilon
\end{align*}
As a similar upper bound was already shown in the proof of Lemma~\ref{lem:convergence_towards_optimal_allocation_best_arm}, we obtain $\frac{\Psi_{n, i}}{n}  \leq w_{i}^{\beta} + 4\epsilon$ for all $i \in [K]$ and all $n \ge \max_{i \in [4] \cup \{6\}} N_i$.

Since $\frac{ \Psi_{n,i}}{n}$ and $ w_i^\beta$ sum to $1$, we obtain for all $n \geq \max_{i \in [4] \cup \{6\}} N_i$ and all $i \in [K]$,
\begin{align*}
\frac{ \Psi_{n,i}}{ n} &= 1 - \sum_{j \neq i} \frac{ \Psi_{n,j}}{ n}
\geq 1 - \sum_{j \neq i} \left(  w_j^\beta + 4\epsilon \right)
=  w_i^\beta - 4(K-1)\epsilon \: .
\end{align*}
Therefore, for all $ n \geq \max_{i \in [4] \cup \{6\}} N_i$ and all $i \in [K]$,
$
\left|\frac{\Psi_{n, i}}{n} - w_{i}^{\beta}\right| \leq 4(K-1)\epsilon \: .
$

Using Lemma~\ref{lem:subG_alloc}, we have for all $n \in \N$ and all $i \in [K]$,
$
\left| \frac{N_{n,i}}{n} - \frac{\Psi_{n,i}}{n} \right| \leq W_1\sqrt{\frac{n+1}{n^2}\log(e+n)} \: ,
$
 hence there exist $N_7 = Poly\left(W_{1}\right) $ such that for all $n \geq N_5 \eqdef \max_{i \in [4] \cup \{6, 7\}} N_i$ and all $i \in [K]$,
\[
\left| \frac{N_{n,i}}{n} - \frac{\Psi_{n,i}}{n} \right| \leq \epsilon \: .
\]

Using the triangular inequality, we obtain that for all $n \geq N_5 $ and all $i \in [K]$,
\[
\left| \frac{N_{n,i}}{n}  - w_{i}^{\beta} \right| \leq (4K-3)\epsilon \: .
\]
Since
\[
	\bE_{\bm F}[N_5] \leq \sum_{i \in [4] \cup \{6, 7\}} \bE_{\bm F}[ N_i ] < + \infty \: ,
\]
taking $\epsilon_1 = \frac{\epsilon_0}{4K-3}$ yields the result for all $\epsilon \in (0, \epsilon_1]$.
\end{proof}

Let $N_{5}$ as in Lemma~\ref{lem:convergence_towards_optimal_allocation_other_arms}.
By definition of $T_{\beta}^{\epsilon}$ in (\ref{eq:rv_T_eps_beta}), we have $\bE_{\bm F}[T_{\beta}^{\epsilon}] \le \bE_{\bm F}[N_{5}] < + \infty$.
This concludes the proof of Lemma~\ref{lem:finite_mean_time_eps_convergence_beta_opti_alloc}.

\subsection{Asymptotic $\beta$-optimality}
\label{app:ss_asymptotic_optimality}

Provided some regularity assumption on the class of distribution $\cF$, we show that asymptotic $\beta$-optimality is a direct consequence of $\bE_{\bm F}[T^{\epsilon}_{\beta}] < + \infty$. More precisely, we show \eqref{eq:finite_Tepsbeta_optimality}:
\[
		\exists \epsilon_1(\bm F)> 0, \: \forall \epsilon \in (0,\epsilon_1(\bm F)], \: \bE_{\bm F}[T^{\epsilon}_{\beta}] < + \infty \quad \implies  \quad \limsup_{\delta \to 0} \frac{\bE_{\bm F}[\tau_{\delta}]}{\log \left(1/\delta \right)} \leq T_{\beta}^\star(\bm F ) \: .
\]

In \cite{Qin2017TTEI}, \eqref{eq:finite_Tepsbeta_optimality} was proven for Gaussian. We generalize the proof from \cite{Qin2017TTEI} to hold provided we have joint continuity of the minimal transportation cost (Property~\ref{prop:joint_continuity_true_tc}) and rate of convergence for $\cT(F_{n,i})$ (Property~\ref{prop:rate_convergence_mean_and_distribution}).
Those properties hold for bounded distributions and SPEF with sub-exponential distributions, and potentially many other distributions.

Before stating the adequate properties, we recall the notation introduced in Appendix~\ref{app:ss_generic_beta_optimality}.
Let $C_{i,j}(\cT(\bm F), w)$ be transportation costs defined in (\ref{eq:def_transportation_cost_general_formulation}) as
\[
	C_{i,j}(\cT(\bm F), w) \eqdef \inf_{u \in \cI} \left\{ w_i \Kinf^{-}(\cT(F_i), u) + w_j \Kinf^{+}(\cT(F_j), u)\right\} \: ,
\]
where $\cI \subseteq \R$, and their empirical version in (\ref{eq:def_emp_transportation_cost_general_formulation}) as
\[
 \frac{1}{n} W_{n}(i,j) = C_{i,j}\left(\cT(\bm F_n), \frac{N_n}{n}\right) \: .
\]
Similarly, the $\beta$-characteristic time and $\beta$-optimal allocation
\begin{align*}
		T_{\beta}^\star(\bm F)^{-1} \eqdef \max_{w \in \simplex : w_{i^\star(\bm F)} = \beta} \min_{j \neq i^\star(\bm F)} C_{i^\star(\bm F),j}(\cT(\bm F), w) \: , \\
		w_{\beta}^\star(\bm F) \eqdef \argmax_{w \in \simplex : w_{i^\star(\bm F)} = \beta} \min_{j \neq i^\star(\bm F)} C_{i^\star(\bm F),j}(\cT(\bm F), w) \: .
\end{align*}

\begin{property} \label{prop:joint_continuity_true_tc}
	$\cT(F) \mapsto m(F)$ is continuous on $\cT(\cF)$ and
$
 \left( \cT(\bm F), w  \right) \mapsto \min_{j \neq i^\star(\bm F)} C_{i^\star(\bm F),j}(\cT(\bm F), w)
$
is continuous on $\cT(\cF^{K}) \times \simplex$.
If $|i^\star(\bm F)|=1$, then $w_{\beta}^\star(\bm F) = \{w^{\beta}\}$ is a singleton such that $\min_{i \in [K]}w^{\beta}_i > 0$.
\end{property}

For single-parameter exponential families, Property~\ref{prop:joint_continuity_true_tc} is a known result from the literature \cite{Russo2016TTTS} as $\cT(F) = m(F)$.
Property~\ref{prop:joint_continuity_true_tc} holds for bounded distributions: $F \mapsto m(F)$ continuous (bounded), using proof of Lemma~\ref{lem:T_star_and_w_star_continuous} (consequence of Lemma~\ref{lem:unique_continuous_mu_star}) and by Lemmas~\ref{lem:properties_characteristic_times} and~\ref{lem:w_star_positive}.

\begin{property} \label{prop:rate_convergence_mean_and_distribution}
	For all $\epsilon > 0$, there exists $N_{\epsilon}$ with $\bE_{\bm F}[N_{\epsilon}] < + \infty$ such that
	\[
	\forall i \in [K], \: \forall N_{n,i} \ge N_{\epsilon}, \quad  \| \cT(F_{n,i}) - \cT(F_i)\| \le \epsilon  \: ,
	\]
	where $\|\cdot\|$ is the norm on $\cT(\cF)$.
\end{property}
For SPEF, we have $\cT(F_{n,i}) = \mu_{n,i}$, hence Property~\ref{prop:rate_convergence_mean_and_distribution} holds for any SPEF with sub-exponential distributions (see Lemma~\ref{lem:W_concentration_spef}).
For bounded distributions, Property~\ref{prop:rate_convergence_mean_and_distribution} is a direct corollary of Lemma~\ref{lem:subG_cdf}.
Since $N_{\epsilon} = Poly(\frac{1}{\epsilon}, W_2)$ and $\bE_{\bm F} [e^{\lambda W_2}] < + \infty$ for all $\lambda > 0$, we have directly that $\bE_{\bm F}[N_{\epsilon}] < + \infty$.

Using the empirical transportation defined in (\ref{eq:def_emp_transportation_cost_general_formulation}), generalizing the stopping time in (\ref{eq:def_stopping_time}) yields
\begin{equation} \label{eq:def_stopping_time_general_formulation}
  \tau_{\delta} = \inf \left\{ n \in \N \mid \min_{j \neq \hat \imath_n} W_n(\hat \imath_n, j) > c(n,\delta) \right\} \: .
\end{equation}

Calibrating the stopping threshold to obtain $\delta$-correctness of the stopping rule (\ref{eq:def_stopping_time_general_formulation}) highly depends on the considered $\cF$.
Definition~\ref{def:asymptotically_tight_threshold} introduces \textit{asymptotically tight} thresholds, whose $(n,\delta)$ dependencies ensure asymptotic ($\beta$-)optimality.
\begin{definition}[Asymptotically tight threshold] \label{def:asymptotically_tight_threshold}
A threshold $c : \N \times (0,1] \to \R_+$ is said to be asymptotically tight if there exists $\alpha \in [0,1)$, $\delta_0 \in (0,1]$, functions $f,\bar{T} : (0,1] \to \R_+$ and $C$ independent of $\delta$ satisfying:
(1) for all $\delta \in  (0,\delta_0]$ and $n \ge \bar{T}(\delta)$, then $c(n,\delta) \le f(\delta) + C n^\alpha$,
(2) $\limsup_{\delta \to 0} f(\delta)/\log(1/\delta) \le 1$ and
(3) $\limsup_{\delta \to 0} \bar{T}(\delta)/\log(1/\delta) = 0$.
\end{definition}
For bounded distributions, the stopping threshold defined in (\ref{eq:def_kinf_threshold_glr}) is asymptotically tight, e.g. take $(\alpha, \delta_0, C) = (1/2,1,1)$, $f(\delta) = \ln \left( \frac{K-1}{\delta}\right) + 2$ and $\bar{T}(\delta) = 1$.
Lemma~\ref{lem:kinf_concentration} shows that it is also $\delta$-correct for bounded distributions.

For single-parameter exponential families, the thresholds for which $\delta$-correctness has been proved are also asymptotically tight, e.g. the ones derived in \cite{KK18Mixtures}. Those thresholds are all upper bounded by some threshold of the form $c(n,\delta) = \ln\left( \frac{D n^{\kappa}}{\delta}\right)$.
This stylized threshold is asymptotically tight, e.g. by taking $(\alpha, \delta_0, C) = (1/2,1,\kappa)$, $f(\delta) = \ln \left( \frac{D}{\delta}\right) $ and $\bar{T}(\delta) = 1$.

Theorem~\ref{thm:asymptotic_optimality_top_two_algorithms} shows \eqref{eq:finite_Tepsbeta_optimality} when using the stopping rule (\ref{eq:def_stopping_time_general_formulation}) with an asymptotically tight threshold.
\begin{theorem} \label{thm:asymptotic_optimality_top_two_algorithms}
	Assume that Properties~\ref{prop:joint_continuity_true_tc} and~\ref{prop:rate_convergence_mean_and_distribution} hold on $\cF^K$.
	Let $(\delta, \beta) \in (0,1)^2$.
	Assume that there exists $\epsilon_1(\bm F)> 0$ such that for all $\epsilon \in (0,\epsilon_1(\bm F)]$, $\bE_{\bm F}[T^{\epsilon}_{\beta}] < + \infty$.
	Combining the stopping rule (\ref{eq:def_stopping_time_general_formulation}) with an asymptotically tight threshold
	yields an algorithm such that for all $\bm F \in \cF^{K}$, with $|i^\star(\bm F)|=1$ and $\mu_{\bm F} \in \left(\mathring \cI \right)^{K}$,
	\[
		\limsup_{\delta \to 0} \frac{\bE_{\bm F}[\tau_{\delta}]}{\log \left(\frac{1}{\delta} \right)} \leq T_{\beta}^\star(\bm F ) \: .
	\]
\end{theorem}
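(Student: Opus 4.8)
The plan is to follow the strategy of \cite{Qin2017TTEI}, turning the convergence of the empirical allocation into an explicit almost-sure upper bound on $\tau_\delta$ valid after a random time with finite expectation. Fix $\xi \in (0, T_{\beta}^\star(\bm F)^{-1})$. Property~\ref{prop:joint_continuity_true_tc} provides that $(\cT(\bm F'), w) \mapsto \min_{j \ne i^\star(\bm F)} C_{i^\star(\bm F),j}(\cT(\bm F'), w)$ is continuous on $\cT(\cF^K)\times\simplex$, that $w^{\beta}$ is its \emph{unique} maximizer over $\{w_{i^\star(\bm F)} = \beta\}$ at the true instance, with $\min_k w^{\beta}_k > 0$, and that $\cT(F)\mapsto m(F)$ is continuous. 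Combining these, I would extract $\epsilon(\xi) > 0$ such that whenever $\max_i \|\cT(F'_i) - \cT(F_i)\| \le \epsilon(\xi)$ and $\|w - w^{\beta}\|_\infty \le \epsilon(\xi)$, one has both $\argmax_i m(F'_i) = \{i^\star(\bm F)\}$ and $\min_{j \ne i^\star(\bm F)} C_{i^\star(\bm F),j}(\cT(\bm F'), w) \ge T_{\beta}^\star(\bm F)^{-1} - \xi$. Then set $\epsilon \eqdef \min\{\epsilon(\xi), \epsilon_1(\bm F), \tfrac12 \min_k w^{\beta}_k\}$.

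Next I would build a single random time with finite expectation after which $\tfrac1n W_n(\hat\imath_n, j)$ is uniformly close to $C_{i^\star,j}(\cT(\bm F), w^{\beta})$. By hypothesis $\bE_{\bm F}[T^{\epsilon}_{\beta}] < +\infty$, so for $n \ge T^{\epsilon}_{\beta}$ we get $N_{n,i}/n \ge w^{\beta}_i - \epsilon \ge \tfrac12 \min_k w^{\beta}_k > 0$ for every $i$; hence for $n \ge \tilde N_\epsilon \eqdef \max\{T^{\epsilon}_{\beta},\, \lceil 2 N_\epsilon / \min_k w^{\beta}_k \rceil\}$ we have $N_{n,i} \ge N_\epsilon$ for all $i$, where $N_\epsilon$ is the (finite-expectation) random variable of Property~\ref{prop:rate_convergence_mean_and_distribution}, so $\max_i \|\cT(F_{n,i}) - \cT(F_i)\| \le \epsilon$. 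Together with $\|N_n/n - w^{\beta}\|_\infty \le \epsilon$ and the first paragraph, for all $n \ge \tilde N_\epsilon$ we get $\hat\imath_n = i^\star$ and, using \eqref{eq:def_emp_transportation_cost_general_formulation}, $\min_{j \ne \hat\imath_n} W_n(\hat\imath_n, j) = n\, \min_{j \ne i^\star} C_{i^\star,j}(\cT(\bm F_n), N_n/n) \ge n\,(T_{\beta}^\star(\bm F)^{-1} - \xi)$. Here $\bE_{\bm F}[\tilde N_\epsilon] \le \bE_{\bm F}[T^{\epsilon}_{\beta}] + 1 + 2\,\bE_{\bm F}[N_\epsilon]/\min_k w^{\beta}_k < +\infty$.

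Then I would plug this into the stopping rule and the asymptotic tightness of the threshold. If $\tau_\delta > n$, then $\min_{j \ne \hat\imath_n} W_n(\hat\imath_n, j) \le c(n,\delta)$, and by Definition~\ref{def:asymptotically_tight_threshold}(1), $c(n,\delta) \le f(\delta) + C n^\alpha$ once $n \ge \bar T(\delta)$ and $\delta \le \delta_0$. Hence for $n \ge \max\{\tilde N_\epsilon, \bar T(\delta)\}$ and $n < \tau_\delta$, $n\,(T_{\beta}^\star(\bm F)^{-1} - \xi) \le f(\delta) + C n^\alpha$; since $\alpha < 1$ and $T_{\beta}^\star(\bm F)^{-1} - \xi > 0$, this holds for only finitely many integers, whose supremum I call $T^\star_\xi(\delta)$ (with the convention $\sup\emptyset = 0$). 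A short argument by contradiction then gives $\tau_\delta \le \max\{\tilde N_\epsilon, \bar T(\delta), T^\star_\xi(\delta)\} + 1$ almost surely, so $\bE_{\bm F}[\tau_\delta] \le \bE_{\bm F}[\tilde N_\epsilon] + \bar T(\delta) + T^\star_\xi(\delta) + 1$ for $\delta \le \delta_0$.

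Finally I would divide by $\log(1/\delta)$ and let $\delta \to 0$: $\bE_{\bm F}[\tilde N_\epsilon]$ is a $\delta$-free constant, $\bar T(\delta)/\log(1/\delta) \to 0$ by Definition~\ref{def:asymptotically_tight_threshold}(3), and from $T^\star_\xi(\delta)\,(T_{\beta}^\star(\bm F)^{-1} - \xi) \le f(\delta) + C\, T^\star_\xi(\delta)^\alpha$ together with $T^\star_\xi(\delta) \to \infty$ one gets $\limsup_{\delta\to 0} T^\star_\xi(\delta)/f(\delta) \le (T_{\beta}^\star(\bm F)^{-1} - \xi)^{-1}$, which with Definition~\ref{def:asymptotically_tight_threshold}(2) yields $\limsup_{\delta\to 0} \bE_{\bm F}[\tau_\delta]/\log(1/\delta) \le T_{\beta}^\star(\bm F)/(1 - \xi\, T_{\beta}^\star(\bm F))$. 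Letting $\xi \to 0$ concludes. I expect the main obstacle to be the first step: squeezing out of Property~\ref{prop:joint_continuity_true_tc} a lower bound $T_{\beta}^\star(\bm F)^{-1}-\xi$ on the \emph{empirical} minimal transportation cost that is uniform over all large $n$ — this needs the joint continuity to trade $\cT(\bm F_n)$ for $\cT(\bm F)$ and crucially uses that $w^{\beta}$ is the \emph{unique} maximizer (so a neighborhood of it still gives a near-optimal value); the rest is bookkeeping around the asymptotically-tight-threshold definition and the sub-Gaussian tails encoded in the finite expectations of $T^{\epsilon}_{\beta}$ and $N_\epsilon$.
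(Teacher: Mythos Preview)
Your proposal is correct and follows essentially the same strategy as the paper: use the joint continuity from Property~\ref{prop:joint_continuity_true_tc} together with Property~\ref{prop:rate_convergence_mean_and_distribution} and $\bE_{\bm F}[T^{\epsilon}_{\beta}]<\infty$ to get a uniform lower bound $\min_{j\neq\hat\imath_n}W_n(\hat\imath_n,j)\ge n(T_\beta^\star(\bm F)^{-1}-\xi)$ after a finite-expectation random time, then invert the asymptotically tight threshold. The paper packages the last step via a $\min\{\tau_\delta,T\}\le \kappa T+\sum \indi{\tau_\delta>n}$ argument with two slack parameters $(\zeta,\kappa)$ and an explicit inversion lemma (Lemma~\ref{lem:inversion_lemma_technicality}), whereas you obtain a cleaner almost-sure bound $\tau_\delta\le \max\{\tilde N_\epsilon,\bar T(\delta),T^\star_\xi(\delta)\}+1$ and do the inversion inline; both routes are equivalent. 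Two cosmetic remarks: your appeal to uniqueness of $w^\beta$ is not actually what drives the lower bound (plain continuity at the point $(\cT(\bm F),w^\beta)$ suffices), and your claim ``$T^\star_\xi(\delta)\to\infty$'' is not guaranteed by Definition~\ref{def:asymptotically_tight_threshold} but is also not needed, since if $T^\star_\xi(\delta)$ stays bounded the contribution vanishes anyway.
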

\begin{proof}
	Let $i^\star = i^\star(\bm F)$ and $\epsilon_1 = \epsilon_1(\bm F)$.
	Let $c_{\beta} = \frac{1}{2}\min_{i \in [K]}w^{\beta}_i > 0$ and $\Delta = \min_{j \neq i^\star}|\mu_{i^\star} - \mu_i| > 0$.
	Let $\zeta > 0$. Using Property~\ref{prop:joint_continuity_true_tc}, the continuity of
	\[
	 \left( \cT(\bm F), w  \right) \mapsto \min_{j \neq i^\star(\bm F)} C_{i^\star(\bm F),j}(\cT(\bm F), w) \quad \text{and} \quad \cT(F) \mapsto m(F)
	\]
	yields that there exists $\epsilon_2 > 0$ such that
\begin{align*}
			&\max_{i \in [K]} \left| \frac{N_{n,i}}{n} -w^{\beta}_i \right| \leq \epsilon_2 \quad \text{and} \quad \max_{i \in [K]} \left\| \cT (F_{n,i}) - \cT(F_i) \right\| \leq \epsilon_2 \\
			 \implies \quad &  \max_{i \in [K]}|\mu_{n,i} - \mu_{i}| \le \frac{\Delta}{4} \quad  \text{and} \quad \frac{1}{n}\min_{j\neq i^\star} W_{n}(i^\star, j) \geq \frac{1 - \zeta}{T_{\beta}^\star(\bm F )} \: .
\end{align*}
	Choosing such a $\epsilon_2$, we take $\epsilon \in (0, \min\{\epsilon_1, \epsilon_2, c_{\beta}\})$.
	By assumption, we have $\bE_{\bm F}[T^{\epsilon}_{\beta}] < + \infty$, hence $\frac{N_{n,i}}{n} \ge w^{\beta}_{i} - \epsilon \ge c_{\beta}$ for all $i \in [K]$.

	Let $N_{\epsilon}$ as in Property~\ref{prop:rate_convergence_mean_and_distribution}. Using Property~\ref{prop:rate_convergence_mean_and_distribution}, for all $n \ge  c_{\beta}^{-1}N_{\epsilon}$, we have $\max_{i \in [K]}\| \cT(F_{n,i}) - \cT(F_i)\| \le \epsilon \le \epsilon_2$ as $\min_{i \in [K]} N_{n,i} \ge N_{\epsilon}$.
	Therefore, we have $\hat \imath_n \in \argmax_{i \in [K]} \mu_{n,i} = \argmax_{i \in [K]} \mu_i$ as $\max_{i \in [K]}|\mu_{n,i} - \mu_{i}| \le \frac{\Delta}{4}$.

	Let $\alpha \in [0,1)$, $\delta_0 \in (0,1]$, functions $f,\bar{T} : (0,1] \to \mathbb{R}_+$ and $C$ as in the definition of an asymptotically tight family of thresholds.
		In the following, we consider $\delta \leq \delta_{0}$.
		Let $\kappa > 0 $.
		Let $T \ge \frac{1}{\kappa}\max\{T^{\epsilon}_{\beta}, c_{\beta}^{-1}N_{\epsilon}, \bar{T}(\delta) \}$.
		Using the definition of the stopping rule (\ref{eq:def_stopping_time}) with a family of asymptotically tight threshold, we have
		\begin{align*}
		\min \left\{\tau_{\delta}, T\right\} \leq \kappa T + \sum_{n=\kappa T}^{T} \indi{\tau_{\delta}>n}  &\leq \kappa T + \sum_{n=\kappa T}^{T} \indi{ \min_{j\neq i^\star} W_{n}(i^\star, j) \leq c(n,\delta)}\\
		&\leq \kappa T + \sum_{n=\kappa T}^{T} \indi{ n \frac{1 - \zeta}{T_{\beta}^\star(\bm F )}	 \leq  f(\delta) + CT^\alpha}\\
		&\leq \kappa T  + \frac{T_{\beta}^\star(\bm F )}{1 - \zeta}  \left( f(\delta) + CT^\alpha\right) \: .
		\end{align*}

	Let $T_{\zeta}(\delta)$ defined as
	\[
				T_{\zeta}(\delta) \eqdef \inf \left\{ T \geq 1 \mid  \frac{T_{\beta}^\star(\bm F )}{(1 - \zeta)(1 - \kappa)} \left( f(\delta) + CT^\alpha\right) \leq  T \right\} \: .
	\]
	 For every $T \ge \max \{ T_{\zeta}(\delta), \frac{1}{\kappa}\max \{T^{\epsilon}_{\beta}, c_{\beta}^{-1}N_{\epsilon}, \bar{T}(\delta) \} \}$, we have $\tau_{\delta} \le T$, hence
\[
	\bE_{\bm F}[\tau_{\delta}] \leq \frac{1}{\kappa} \bE_{\bm F}[T^{\epsilon}_{\beta}] + \frac{1}{\kappa c_{\beta}} \bE_{\bm F}[N_{\epsilon}] + \frac{1}{\kappa} \bar{T}(\delta) + T_{\zeta}(\delta) \: .
\]
	As $\bE_{\bm F}[T^{\epsilon}_{\beta}] + c_{\beta}^{-1} \bE_{\bm F}[N_{\epsilon}] < + \infty$ and $\lim_{\delta \to 0} \frac{\bar{T}(\delta)}{\log (1 / \delta)}$, we obtain for all $\zeta, \kappa > 0$
	 \begin{align*}
		 \limsup _{\delta \to 0} \frac{\bE_{\bm F}\left[\tau_{\delta}\right]}{\log (1 / \delta)}
		 \leq \limsup_{\delta \to 0} \frac{T_{\zeta}(\delta)}{\log (1 / \delta)}
		 \le \frac{T_{\beta}^\star(\bm F )}{(1 - \zeta)(1 - \kappa)}
		 \: ,
	 \end{align*}
	 where the last inequality uses Lemma~\ref{lem:inversion_lemma_technicality}, which is an inversion result.
	 Letting $\zeta$ and $\kappa$ go to zero yields that
 \begin{align*}
	 \limsup _{\delta \to 0} \frac{\bE_{\bm F}\left[\tau_{\delta}\right]}{\log (1 / \delta)} \leq T_{\beta}^\star(\bm F )
	 \: .
 \end{align*}
\end{proof}

Corollary~\ref{cor:asymptotic_optimality_top_two_algorithms} is a direct consequence of Lemma~\ref{lem:suff_exploration}, Lemma~\ref{lem:finite_mean_time_eps_convergence_beta_opti_alloc} and Theorem~\ref{thm:asymptotic_optimality_top_two_algorithms}.

\begin{corollary} \label{cor:asymptotic_optimality_top_two_algorithms}
	Assume that Properties~\ref{prop:joint_continuity_true_tc} and~\ref{prop:rate_convergence_mean_and_distribution} hold on $\cF^K$.
	Let $(\delta, \beta) \in (0,1)^2$.
	Combining the stopping rule (\ref{eq:def_stopping_time_general_formulation}) with an asymptotically tight threshold and a Top Two algorithm,
	whose leader $B_{n}$ and challenger $C_{n}$ satisfy Properties~(\ref{prop:leader_cdt_suff_explo},~\ref{prop:leader_cdt_convergence}) and (\ref{prop:challenger_cdt_suff_explo},~\ref{prop:challenger_cdt_convergence}),
	yields an algorithm such that for all $\bm F \in \cF^{K}$, with $\Delta_{\min}(\bm F) \eqdef \min_{j\neq i}|\mu_{i} - \mu_{j}| > 0$ and $\mu_{\bm F} \in \left(\mathring \cI \right)^{K}$,
	\[
		\limsup_{\delta \to 0} \frac{\bE_{\bm F}[\tau_{\delta}]}{\log \left(\frac{1}{\delta} \right)} \leq T_{\beta}^\star(\bm F ) \: .
	\]
\end{corollary}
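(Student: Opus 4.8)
The plan is to obtain the statement as the straightforward composition of the three results quoted immediately above it, since by design all the technical content has been isolated into Lemma~\ref{lem:suff_exploration}, Lemma~\ref{lem:finite_mean_time_eps_convergence_beta_opti_alloc} and Theorem~\ref{thm:asymptotic_optimality_top_two_algorithms}. First I would observe that the hypothesis $\Delta_{\min}(\bm F) > 0$ is exactly Assumption~\ref{ass:all_arms_distinct_bounded_mean}, and that together with $\mu_{\bm F} \in (\mathring \cI)^{K}$ it entails Assumption~\ref{ass:standard_assumptions_BAI}; in particular $|i^\star(\bm F)| = 1$, which is what Property~\ref{prop:joint_continuity_true_tc} needs for $w^{\beta}$ to be the unique $\beta$-optimal allocation with $\min_{i} w^{\beta}_i > 0$.

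Next I would invoke Lemma~\ref{lem:suff_exploration}: since Assumption~\ref{ass:all_arms_distinct_bounded_mean} holds and the leader $B_n$ and challenger $C_n$ satisfy Properties~\ref{prop:leader_cdt_suff_explo} and~\ref{prop:challenger_cdt_suff_explo}, there is $N_0$ with $\bE_{\bm F}[N_0] < +\infty$ such that $N_{n,i} \ge \sqrt{n/K}$ for all $n \ge N_0$ and all $i \in [K]$, i.e.\ Property~\ref{prop:suff_exploration} holds. Feeding this into Lemma~\ref{lem:finite_mean_time_eps_convergence_beta_opti_alloc}, and using that the leader and challenger additionally satisfy Properties~\ref{prop:leader_cdt_convergence} and~\ref{prop:challenger_cdt_convergence}, produces a problem-dependent $\epsilon_1(\bm F) > 0$ with $\bE_{\bm F}[T^{\epsilon}_{\beta}] < +\infty$ for every $\epsilon \in (0, \epsilon_1(\bm F)]$. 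Finally, Properties~\ref{prop:joint_continuity_true_tc} and~\ref{prop:rate_convergence_mean_and_distribution} hold on $\cF^K$ by assumption and the threshold is asymptotically tight, so Theorem~\ref{thm:asymptotic_optimality_top_two_algorithms} applies with this $\epsilon_1(\bm F)$ and yields $\limsup_{\delta \to 0} \bE_{\bm F}[\tau_{\delta}] / \log(1/\delta) \le T^{\star}_{\beta}(\bm F)$, which is the claim.

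The only thing that needs care is the bookkeeping of the several problem-dependent constants $\epsilon_0(\bm F), \epsilon_1(\bm F)$ appearing in Property~\ref{prop:challenger_cdt_convergence}, in Lemma~\ref{lem:finite_mean_time_eps_convergence_beta_opti_alloc} and in the hypothesis of Theorem~\ref{thm:asymptotic_optimality_top_two_algorithms}: one takes the minimum of the relevant ones so that all three steps hold simultaneously. Beyond that there is no genuine obstacle, because the substantive difficulties --- sufficient exploration under $\Delta_{\min} > 0$, the Cesaro-type convergence of $\Psi_{n,i}/n$ towards $w^{\beta}$, and the inversion argument turning $\bE_{\bm F}[T^{\epsilon}_{\beta}] < +\infty$ into an asymptotic bound on $\bE_{\bm F}[\tau_{\delta}]$ --- are precisely what those three cited statements already encapsulate, so the corollary is their clean composition.
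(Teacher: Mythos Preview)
Your proposal is correct and follows exactly the paper's own proof: observe $\Delta_{\min}(\bm F)>0$ gives a unique best arm, apply Lemma~\ref{lem:suff_exploration} to obtain Property~\ref{prop:suff_exploration}, then Lemma~\ref{lem:finite_mean_time_eps_convergence_beta_opti_alloc} to obtain $\bE_{\bm F}[T^{\epsilon}_{\beta}]<+\infty$, and conclude via Theorem~\ref{thm:asymptotic_optimality_top_two_algorithms}. Your additional remark on tracking the $\epsilon$-constants is a fair clarification, but the paper simply omits it as immediate.
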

\begin{proof}
	Having $\Delta_{\min}(\bm F) > 0$ yields that $i^\star(\bm F)$ is a singleton.
	Since $\Delta_{\min}(\bm F) > 0$ and leader $B_{n}$ and challenger $C_{n}$ satisfies Properties~\ref{prop:leader_cdt_suff_explo} and~\ref{prop:challenger_cdt_suff_explo}, Lemma~\ref{lem:suff_exploration} shows that Property~\ref{prop:suff_exploration} holds.
	As leader $B_{n}$ and challenger $C_{n}$ satisfies Properties~\ref{prop:leader_cdt_convergence} and~\ref{prop:challenger_cdt_convergence}, we can use Lemma~\ref{lem:finite_mean_time_eps_convergence_beta_opti_alloc}.
	Directly applying Theorem~\ref{thm:asymptotic_optimality_top_two_algorithms} yields the result.
\end{proof}

\begin{lemma} \label{lem:inversion_lemma_technicality}
	Let $C, D \in \R^{\star}_{+}$, $\alpha \in [0,1)$, $f: (0,1] \to \R_{+}$ such that $\lim_{\delta \to 0} \frac{f(\delta)}{\log (1/\delta)} \le 1$ and
	\begin{equation} \label{eq:def_t0_delta}
				T_{D}(\delta) \eqdef \inf \left\{ T \geq 1 \mid D \left( f(\delta) + CT^\alpha\right) \leq  T \right\} \: .
	\end{equation}
	Then,
	\[
	\limsup_{\delta \to 0} \frac{T_{D}(\delta)}{\log (1 / \delta)} \le D \: .
	\]
\end{lemma}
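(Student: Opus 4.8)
The plan is a direct inversion argument that exploits the sublinearity of $T\mapsto CT^\alpha$ when $\alpha\in[0,1)$: the defining inequality $D(f(\delta)+CT^\alpha)\le T$ is then satisfied by some $T$ of order $f(\delta)$, and by hypothesis $f(\delta)$ is of order at most $\log(1/\delta)$, so $T_D(\delta)$ is of order at most $D\log(1/\delta)$.

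First I would fix real numbers $\gamma>1$ and $a>D\gamma$, and use the hypothesis $\lim_{\delta\to0}f(\delta)/\log(1/\delta)\le1$ to obtain $\delta_1\in(0,1]$ such that $f(\delta)\le\gamma\log(1/\delta)$ for all $\delta\le\delta_1$. I would then propose the explicit candidate $T_0(\delta)\eqdef a\log(1/\delta)$ and check that it eventually belongs to the set whose infimum defines $T_D(\delta)$. On one hand $Df(\delta)\le D\gamma\log(1/\delta)$; on the other hand $DC\,T_0(\delta)^\alpha = DC a^\alpha(\log(1/\delta))^\alpha$, which is $o(\log(1/\delta))$ since $\alpha<1$. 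Hence $D(f(\delta)+CT_0(\delta)^\alpha)\le D\gamma\log(1/\delta)+o(\log(1/\delta))$, and because $a>D\gamma$ this is $\le a\log(1/\delta)=T_0(\delta)$ for all $\delta$ small enough, say $\delta\le\delta_2\le\delta_1$, where $\delta_2$ is also taken small enough that $T_0(\delta)\ge1$ (so the constraint $T\ge1$ in the infimum is met). In particular $T_0(\delta)$ witnesses non-emptiness of that set, so $T_D(\delta)<\infty$ and $T_D(\delta)\le T_0(\delta)$ for all $\delta\le\delta_2$.

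This gives $\limsup_{\delta\to0}T_D(\delta)/\log(1/\delta)\le a$. Since $a$ was an arbitrary real with $a>D\gamma$ and $\gamma$ an arbitrary real with $\gamma>1$, letting $a\downarrow D\gamma$ and then $\gamma\downarrow1$ yields $\limsup_{\delta\to0}T_D(\delta)/\log(1/\delta)\le D$, which is the claim.

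I do not expect a genuine obstacle here; the only points requiring a little care are making the ``$o(\log(1/\delta))$'' bookkeeping rigorous (it reduces to $(\log(1/\delta))^{\alpha-1}\to0$) and ensuring the candidate $T_0(\delta)$ satisfies the constraint $T\ge1$, both of which hold once $\delta$ is sufficiently small.
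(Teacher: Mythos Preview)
Your proof is correct and follows essentially the same approach as the paper's: both exploit that $CT^\alpha$ is sublinear in $T$ when $\alpha<1$, so that a candidate $T$ of order $D(1+\text{slack})\log(1/\delta)$ satisfies the defining inequality, and then let the slack vanish. The only cosmetic difference is that the paper first rearranges the inequality as $T/D - CT^\alpha \ge f(\delta)$ and lower-bounds the left-hand side by $T/(D(1+\gamma))$ for large $T$, whereas you plug in an explicit candidate $T_0(\delta)=a\log(1/\delta)$ directly; the underlying computation is the same.
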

\begin{proof}
	Let $\gamma > 0$. Since $\alpha \in [0,1)$, there exists $T_{\gamma}$ (depending on $D$) such that for all $T \ge T_{\gamma}$,
		\[
			T \frac{1}{D} - C T^{\alpha} \ge T \frac{1}{D(1+\gamma)} \: .
		\]
	 Then,
	 \begin{align*}
	 T_{D}(\delta) &\le  T_{\gamma} + \inf \left\{T \ge 1 \mid f(\delta) \le T \frac{1}{D(1+\gamma)} \right\} \le   T_{\gamma}   + D(1+\gamma) f(\delta) + 1
	 \: .
	 \end{align*}
	 Since $\limsup_{\delta \to 0} \frac{f(\delta)}{\log(1/\delta)} \le 1$, we obtain for all $\gamma > 0$
	 \begin{align*}
	 \limsup_{\delta \to 0}\frac{T_{D}(\delta)}{\log(1/\delta)}
	 &\le D(1+\gamma)
	 \: .
	 \end{align*}
	  Letting $\gamma$ go to zero yields the result.
\end{proof}


\section{Top Two instances for bounded distributions}
\label{app:top_two_instances}

While we provided a unified analysis of Top Two algorithms in Appendix~\ref{app:unified_analysis_top_two}, we are interested in specific instances.
We distinguish between the deterministic mechanisms in Appendix~\ref{app:ss_deterministic_mechanisms} and the randomized mechanisms in Appendix~\ref{app:ss_randomized_mechanisms}.
After introducing them, we will show they each satisfies the properties required on the leader and the challenger to ensure sufficient exploration (Appendix~\ref{app:ss_how_to_explore}) and to converge towards the $\beta$-optimal allocation (Appendix~\ref{app:ss_how_to_converge}).

As deterministic mechanisms, we study the EB leader (Appendix~\ref{app:sss_eb_leader}), the TC challenger (Appendix~\ref{app:sss_tc_challenger}) and the TCI challenger (Appendix~\ref{app:sss_tci_challenger}).
For the randomized mechanisms which are based on a sampler $\Pi_n$ (Appendix~\ref{app:sss_how_to_sample}), we consider the TS leader (Appendix~\ref{app:sss_ts_leader}) and the RS challenger (Appendix~\ref{app:sss_rs_challenger}).
While those leaders and challengers are defined and analyzed for bounded distributions, we will also discuss why the analysis still hold for single-parameter exponential families (Appendix~\ref{app:spef}).
This is especially simple for deterministic mechanisms.
For randomized mechanisms, a natural sampler is the posterior distribution.
However, proving the properties on $\Pi_n$ (Appendix~\ref{app:sss_how_to_sample}) in all generality is more cumbersome.

By the end of Appendix~\ref{app:top_two_instances}, we will have shown that Properties~\ref{prop:leader_cdt_suff_explo} and~\ref{prop:leader_cdt_convergence} hold for the EB and TS leaders, and that Properties~\ref{prop:challenger_cdt_suff_explo} and~\ref{prop:challenger_cdt_convergence} hold for the TC, TCI and RS challengers, which leads to Theorem~\ref{thm:beta-opt}.

\paragraph{Proof of Theorem~\ref{thm:beta-opt}}
The threshold (\ref{eq:def_kinf_threshold_glr}) is asymptotically tight (Definition~\ref{def:asymptotically_tight_threshold}), e.g. $(\alpha, \delta_0, C) = (1/2,1,1)$, $f(\delta) = \ln \left( \frac{K-1}{\delta}\right) + 2$ and $\bar{T}(\delta) = 1$.
Lemma~\ref{lem:kinf_concentration} shows that it is also $\delta$-correct for bounded distributions.
Therefore, Theorem~\ref{thm:beta-opt} is obtained by applying Corollary~\ref{cor:asymptotic_optimality_top_two_algorithms}.

\paragraph{Proof of Property~\ref{prop:rate_convergence_mean_and_distribution}}
Lemma~\ref{lem:subG_cdf} gives the convergence rate of empirical cdfs $(F_{n,i})_{i \in [K]}$ towards the true cdfs $(F_{i})_{i \in [K]}$.
Deferred to Appendix~\ref{ssub:sub_gaussian_random_variables}, its proof is a direct consequence of concentration inequalities for sub-Gaussian random variables.
\begin{lemma} \label{lem:subG_cdf}
	There exists a sub-Gaussian random variable $W_2$ such that for all $(n,i) \in \N \times [K]$
	\begin{equation} \label{eq:subG_cdf}
		\left\|{F}_{n,i} - F_{i}\right\|_{\infty} \leq W_2 \sqrt{\frac{\log(e+N_{n,i})}{1+N_{n,i}}}  \quad \text{a.s.} \: .
	\end{equation}
	In particular, $\bE \left[e^{\lambda W_2}\right] < + \infty$ for all $\lambda > 0$.
\end{lemma}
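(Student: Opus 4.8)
The plan is to recognize \eqref{eq:subG_cdf} as a version of the Dvoretzky–Kiefer–Wolfowitz (DKW) inequality \cite{massart1990} made uniform over the random number of pulls $N_{n,i}$. First I would set up the standard ``stack of rewards'' coupling: introduce independent arrays $(Y_{m,i})_{m \ge 1,\, i \in [K]}$ with $Y_{m,i} \sim F_i$, independent of the internal randomization $(U_t)_t$, and couple them with the bandit interaction so that the $m$-th pull of arm $i$ reveals $Y_{m,i}$. This is legitimate because each $I_t$ is $\cF_{t-1}$-measurable, so the adaptive sampling never depends on unrevealed entries of the arrays. Under this coupling, $F_{n,i} = \hat F_{N_{n,i},i}$ almost surely, where $\hat F_{m,i}$ denotes the empirical cdf of the genuinely i.i.d.\ block $Y_{1,i},\dots,Y_{m,i}$.

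Next I would define the candidate random variable
\[
	W_2 \eqdef \max_{i \in [K]} \sup_{m \ge 1} \frac{\| \hat F_{m,i} - F_i \|_{\infty}}{\sqrt{\log(e+m)/(1+m)}} \: ,
\]
so that \eqref{eq:subG_cdf} holds by construction for every $(n,i)$ with $N_{n,i} \ge 1$ (guaranteed once each arm has been pulled at initialization), no matter how $N_{n,i}$ depends on the observations. The remaining work is a tail bound on $W_2$. Applying a union bound over $i \in [K]$ and $m \ge 1$ together with Massart's form of the DKW inequality, $\bP( \| \hat F_{m,i} - F_i \|_{\infty} > \varepsilon ) \le 2 e^{-2 m \varepsilon^2}$, and using the elementary fact that $2m/(1+m) \ge 1$ for all $m \ge 1$, each term with $\varepsilon = t\sqrt{\log(e+m)/(1+m)}$ is at most $2(e+m)^{-t^2}$. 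Hence $\bP(W_2 > t) \le 2K \sum_{m \ge 1} (e+m)^{-t^2}$, which for $t > 1$ decays like $e^{-t^2}$ up to a constant depending on $K$ (by comparison with $\int_0^\infty (e+x)^{-t^2}\,dx$). In particular $\bP(W_2 = \infty) = 0$, so $W_2$ is finite almost surely and has a Gaussian-type upper tail.

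Finally, I would deduce the moment generating function bound from the tail: writing $\bE[e^{\lambda W_2}] = 1 + \lambda \int_0^\infty e^{\lambda t} \bP(W_2 > t)\, dt$ and noting the integrand is eventually dominated by $e^{\lambda t - t^2}$ times a constant, hence integrable for every fixed $\lambda > 0$. This gives $\bE[e^{\lambda W_2}] < + \infty$ for all $\lambda > 0$, which is the notion of sub-Gaussian r.v.\ used throughout the paper (as also needed in Lemma~\ref{lem:subG_alloc}). I expect the only genuinely delicate point to be the passage from the fixed-sample-size DKW bound to one valid at the data-dependent index $N_{n,i}$ under an adaptive sampling rule; this is exactly why the supremum over all $m$ appears in the definition of $W_2$, and why the normalization involves $\sqrt{\log(e+m)/(1+m)}$ rather than the plain $\sqrt{1/(1+m)}$ — the extra logarithmic factor is precisely what makes the union bound over $m$ summable while inflating the DKW rate only by a $\sqrt{\log}$ factor.
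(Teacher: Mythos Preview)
Your proposal is correct and follows essentially the same route as the paper: define $W_2$ as a max over arms of a supremum over sample sizes of the DKW-normalized deviation, then use Massart's DKW inequality together with a union bound over $m$ (the paper packages this step as the general Lemma~\ref{lem:subG_sup_N} applied to Lemma~\ref{lem:subG_DKW}, but the computation is the same). Your explicit discussion of the ``stack of rewards'' coupling to pass from the adaptive count $N_{n,i}$ to a supremum over deterministic $m$ is exactly what the paper does implicitly when it rewrites $W_{2,i}$ in terms of $\hat F_{n,i}$.
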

In the following, we take $W_2$ as in Lemma~\ref{lem:subG_cdf}.
Let $\epsilon > 0$. Using Lemma~\ref{lem:subG_cdf}, there exists $N_{\epsilon} = Poly(\frac{1}{\epsilon}, W_2)$ such that for all $i \in [K]$ and all $N_{n,i} \ge N_{\epsilon}$,
\[
 \max_{i \in [K]} \left\|{F}_{n,i} - F_{i}\right\|_{\infty} \le \epsilon \: .
\]
As $\bE_{\bm F} [e^{\lambda W_2}] < + \infty$ for all $\lambda > 0$, we have $\bE_{\bm F}[N_{\epsilon}] < + \infty$.
Therefore, Property~\ref{prop:rate_convergence_mean_and_distribution} holds for bounded distributions.

Property~\ref{prop:joint_continuity_true_tc} holds for bounded distributions: $F \mapsto m(F)$ continuous (bounded), using proof of Lemma~\ref{lem:T_star_and_w_star_continuous} (consequence of Lemma~\ref{lem:unique_continuous_mu_star}) and by Lemmas~\ref{lem:properties_characteristic_times} and~\ref{lem:w_star_positive}.

\subsection{Deterministic mechanisms}
\label{app:ss_deterministic_mechanisms}

Conditioned on the history $\cF_n$, deterministic mechanisms for the leader and the challenger don't depend on a sampler $\Pi_n$.
The sole randomness in those mechanisms occurs in case of ties, which are broken uniformly at random.
In Appendix~\ref{app:sss_eb_leader}, we define the EB leader and shows that it satisfies Properties~\ref{prop:leader_cdt_suff_explo} and~\ref{prop:leader_cdt_convergence}.
In Appendix~\ref{app:sss_tc_challenger}, we define the TC challenger and proves that Properties~\ref{prop:challenger_cdt_suff_explo} and~\ref{prop:challenger_cdt_convergence} hold.
In Appendix~\ref{app:sss_tci_challenger}, we define the TCI challenger and proves that Properties~\ref{prop:challenger_cdt_suff_explo} and~\ref{prop:challenger_cdt_convergence} hold.

\paragraph{Rates for empirical transportation costs}
Analyzing deterministic mechanisms heavily relies on properties of the empirical transportation costs.
Given two arms having distinct mean, Lemma~\ref{lem:fast_rate_emp_tc} shows that the transportation cost is strictly positive and increases linearly.
\begin{lemma} \label{lem:fast_rate_emp_tc}
		Let $S_{n}^{L}$ and $\cI_n^\star$ as in (\ref{eq:def_sampled_enough_sets}).
		There exists $L_4$ with $\bE_{\bm F}[(L_4)^{\alpha}] < +\infty$ for all $\alpha > 0$ such that if $L \ge L_4$, for all $n$ such that $S_{n}^{L} \neq \emptyset$,
		\[
		\forall (i, j) \in \cI_n^\star \times \left(S_{n}^{L} \setminus  \cI_n^\star \right), \quad	W_{n}(i, j) \geq  L D_{\bm F} \: ,
		\]
		where $ D_{\bm F} > 0$ is a problem dependent constant.
\end{lemma}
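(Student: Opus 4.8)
The plan is to first collapse $W_n(i,j)$ to a per-sample transportation cost, then bound that cost away from zero uniformly in the transportation point $x$, and finally replace the empirical means appearing in the bound by the true ones via concentration.

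\emph{Reduction to a per-sample cost.} Fix $L$ and $n$ with $S_n^L \neq \emptyset$, and let $(i,j)\in \cI_n^\star\times(S_n^L\setminus\cI_n^\star)$, with $S_n^L$ and $\cI_n^\star$ as in \eqref{eq:def_sampled_enough_sets}. Then both arms lie in $S_n^L$, so $N_{n,i}\wedge N_{n,j}\ge L$, and since $i$ realises the maximum of $\mu_\cdot$ over $S_n^L$ while $j$ does not, $\mu_i>\mu_j$; in fact $\mu_i-\mu_j\ge \Delta\eqdef\min\{\mu_k-\mu_\ell:k,\ell\in[K],\ \mu_k>\mu_\ell\}>0$ (this minimum is over a nonempty set under Assumption~\ref{ass:standard_assumptions_BAI}, and $\Delta\ge \Delta_{\min}(\bm F)$ under Assumption~\ref{ass:all_arms_distinct_bounded_mean}; the statement is vacuous when $K=1$). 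As every term in the infimum defining $W_n(i,j)$ in \eqref{eq:def_transportation_cost} is nonnegative,
\[
W_n(i,j)\ \ge\ L\ \inf_{x\in\cI}\Big[\Kinf^-(F_{n,i},x)+\Kinf^+(F_{n,j},x)\Big].
\]

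\emph{Uniform lower bound on the per-sample cost.} I will use a Pinsker-type lower bound on $\Kinf^{\pm}$ for bounded distributions (which follows from the Donsker–Varadhan formula together with Hoeffding's lemma for $[0,B]$-valued variables, and is among the $\Kinf^{\pm}$ properties of Appendix~\ref{app:kinf_for_bounded_distributions}): for all $F\in\cF$ and $u\in\cI$, $\Kinf^-(F,u)\ge \tfrac{2}{B^2}(m(F)-u)_+^2$ and $\Kinf^+(F,u)\ge \tfrac{2}{B^2}(u-m(F))_+^2$. Substituting these into the infimum and minimising the resulting piecewise-quadratic function of $x$ over $\R$ (when $\mu_{n,i}>\mu_{n,j}$ the minimiser is $x=(\mu_{n,i}+\mu_{n,j})/2$, and restricting to $\cI=[0,B]$ only increases the value) gives
\[
\inf_{x\in\cI}\Big[\Kinf^-(F_{n,i},x)+\Kinf^+(F_{n,j},x)\Big]\ \ge\ \frac{\big(\mu_{n,i}-\mu_{n,j}\big)_+^2}{B^2}.
\]

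\emph{From empirical to true means.} By Lemma~\ref{lem:subG_cdf}, for every $n$ and every arm $k$, $|\mu_{n,k}-\mu_k|\le B\|F_{n,k}-F_k\|_\infty\le BW_2\sqrt{\log(e+N_{n,k})/(1+N_{n,k})}$ with $W_2$ sub-Gaussian. Since $N\mapsto\sqrt{\log(e+N)/(1+N)}$ is eventually decreasing, there is a random variable $L_4$, polynomial in $W_2$ (with $B$ and $\Delta$ fixed), such that $L\ge L_4$ forces $|\mu_{n,k}-\mu_k|\le \Delta/4$ for every arm $k$ with $N_{n,k}\ge L$; because $W_2$ has finite exponential moments, $\bE_{\bm F}[(L_4)^\alpha]<+\infty$ for all $\alpha>0$. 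For $L\ge L_4$ this yields $\mu_{n,i}-\mu_{n,j}\ge(\mu_i-\mu_j)-\Delta/2\ge\Delta/2$, and chaining the three displays gives $W_n(i,j)\ge L\,\Delta^2/(4B^2)$, i.e. the claim with $D_{\bm F}\eqdef \Delta^2/(4B^2)$.

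The substantive point is the \emph{uniform-in-$x$} lower bound on the per-sample transportation cost; the Pinsker-type inequality makes it both uniform in $x$ and insensitive to the delicate behaviour of $\Kinf^{\pm}$ near the endpoints of $\cI$, which is why I prefer it to a pure joint-continuity argument. The only bookkeeping that needs care is showing that the threshold $L_4$ past which the two empirical means are $\Delta/2$-separated has all polynomial moments finite — exactly what the sub-Gaussian rate of Lemma~\ref{lem:subG_cdf} provides.
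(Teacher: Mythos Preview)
Your proof is correct but takes a genuinely different route from the paper's. The paper lower-bounds $\inf_{u}[\Kinf^-(F_{n,i},u)+\Kinf^+(F_{n,j},u)]$ by appealing to the joint continuity of $(G_i,G_j)\mapsto\inf_u[\Kinf^-(G_i,u)+\Kinf^+(G_j,u)]$ (Lemma~\ref{lem:unique_continuous_mu_star}): since this function is strictly positive at $(F_i,F_j)$ whenever $m(F_i)>m(F_j)$, there is a neighbourhood in $\|\cdot\|_\infty$ on which it stays above some abstract constant $D_{\bm F}>0$ (Lemma~\ref{lem:positive_strict_geometric_cst}), and Lemma~\ref{lem:subG_cdf} then places $(F_{n,i},F_{n,j})$ in that neighbourhood for $L$ large enough. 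You instead use the explicit Pinsker-type bound $\Kinf^{\pm}(F,u)\ge \tfrac{2}{B^2}(m(F)\!\mp\! u)_+^2$, reduce to the empirical gap $\mu_{n,i}-\mu_{n,j}$, and control that gap by the true $\Delta$ via mean concentration. What you gain is an explicit constant $D_{\bm F}=\Delta^2/(4B^2)$ and a shorter, more elementary argument that avoids the Berge-type continuity machinery; what the paper's approach buys is modularity, since the same continuity lemma is reused elsewhere (e.g.\ in Lemma~\ref{lem:fast_rate_posterior} and in the SPEF analogue, Lemma~\ref{lem:SPEF_W_lower_bound}), and it does not rely on a quadratic lower bound that is specific to sub-Gaussian-type families. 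One small inaccuracy: the Pinsker-type inequality you invoke is not actually stated among the $\Kinf^{\pm}$ properties in Appendix~\ref{app:kinf_for_bounded_distributions}; your parenthetical derivation via Donsker--Varadhan and Hoeffding is correct and self-contained, so simply drop the reference to the appendix.
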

\begin{proof}
		Let $S_{n}^{L}$ and $\cI_n^\star$ as in (\ref{eq:def_sampled_enough_sets}).
		Assume that $S_{n}^{L} \neq \emptyset$.
		If $S_{n}^{L} \setminus  \cI_n^\star $ is empty, then the statement is not informative.
		Assume $S_{n}^{L} \setminus  \cI_n^\star $ is not empty.
		Let $(i, j) \in \cI_n^\star \times \left(S_{n}^{L} \setminus  \cI_n^\star \right)$.

		By definition of $W_{n}$ in (\ref{eq:def_transportation_cost}) and using $\{i, j\} \subseteq S_{n}^{L}$, we obtain
	\begin{align*}
	W_{n}(i, j) &= \inf_{u \in [0,B]} \left\{ N_{n,i} \Kinf^{-}( F_{n,i}, u) + N_{n,j} \Kinf^{+}( F_{n,j}, u) \right\}	\\
		&\geq L \inf_{u \in [0,B]} \left\{ \Kinf^{-}( F_{n,i}, u) + \Kinf^{+}( F_{n,j}, u)  \right\}  \: .
	\end{align*}
	Using Lemma~\ref{lem:positive_strict_geometric_cst}, there exists $\alpha > 0$ such that
	\begin{equation*}
		D_{\bm F} = \min_{(i, j) : m(F_i) > m(F_j) } \inf_{\substack{G_i,G_j : \\ \forall k \in \{i,j\}, \|G_k - F_k\|_{\infty}\leq \alpha}} \inf_{u \in [0,B]} \left\{ \Kinf^{-}(G_i , u) + \Kinf^{+}(G_j , u) \right\} > 0 \: .
	\end{equation*}

	Using Lemma~\ref{lem:subG_cdf}, there exists $L_4 = Poly(W_2) $ such that for all $L\geq L_4$ and all $i \in S_{n}^{L}$,
	\[
		 \left\| F_{n,i} - F_i \right\|_{\infty} \leq \alpha \: .
	\]

	Further lower bounding by using that $\mu_i > \mu_j$, we obtain
	\begin{align*}
		W_{n}(i, j) &\geq  L \inf_{\substack{G_i,G_{j} : \\ \forall k \in \{i,j\}, \|G_k - F_k\|_{\infty}\leq \alpha}} \inf_{u \in [0,B]} \left\{ \Kinf^{-}( G_{i}, u) + \Kinf^{+}( G_{j}, u)  \right\} \geq L D_{\bm F} \: .
	\end{align*}
	As $\bE_{\bm F} [e^{\lambda W_2}] < + \infty$ for all $\lambda > 0$, we have $\bE_{\bm F}[(L_4)^{\alpha}] < + \infty$ for all $\alpha > 0$ since $(L_4)^{\alpha} = Poly(W_2) $.
	This concludes the proof.
\end{proof}

Lemma~\ref{lem:small_transportation_cost_undersampled_arms} gives an upper bound on the transportation costs between a sampled enough arm and an under-sampled one.
\begin{lemma} \label{lem:small_transportation_cost_undersampled_arms}
	Let $S_{n}^{L}$ as in (\ref{eq:def_sampled_enough_sets}). There exists $L_5$ with $\bE_{\bm F}[(L_5)^{\alpha}] < +\infty$ for all $\alpha > 0$ such that for all $L \geq L_5$ and all $n \in \N$,
	\[
	\forall  (i,j) \in  S_{n}^{L} \times \overline{S_{n}^{L}} , \quad 	W_{n}( i, j) \leq  L D_1  \: ,
	\]
where $D_1 > 0$ is a problem dependent constant.
\end{lemma}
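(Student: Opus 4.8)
The plan is to upper bound $W_n(i,j)$ by evaluating the infimum in \eqref{eq:def_transportation_cost} at a single point $u^\star$ chosen independently of $n,i,j$. Since Assumption~\ref{ass:standard_assumptions_BAI} forces $\mu_k\in(0,B)$ for every $k$, I would set $u^\star\eqdef\tfrac12\bigl(B+\max_{k\in[K]}\mu_k\bigr)$, so that $\max_k\mu_k<u^\star<B$. Two elementary facts about $\Kinf^{\pm}$ then suffice. First, for any $F\in\cF$ with $m(F)<u^\star$ one has $\Kinf^-(F,u^\star)=0$, since $F$ itself is feasible in the infimum defining $\Kinf^-(F,u^\star)$. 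Second, the dual formula for $\Kinf^+$ (Theorem~\ref{thm:Kinf_duality}) gives the uniform bound $\Kinf^+(F,u^\star)\le\ln\frac{B}{B-u^\star}$ for every $F$ supported on $[0,B]$, because $1-\lambda\frac{x-u^\star}{B-u^\star}\le 1+\frac{u^\star}{B-u^\star}=\frac{B}{B-u^\star}$ for all $x\in[0,B]$ and $\lambda\in[0,1]$, hence the supremum over $\lambda$ of $\bE_{X\sim F}\bigl[\ln\bigl(1-\lambda\tfrac{X-u^\star}{B-u^\star}\bigr)\bigr]$ is at most $\ln\frac{B}{B-u^\star}$.

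Next I would convert the almost-sure rate of Lemma~\ref{lem:subG_cdf} into a deterministic threshold on the number of pulls. On distributions supported in $[0,B]$ the mean operator satisfies $|\mu_{n,i}-\mu_i|\le B\|F_{n,i}-F_i\|_{\infty}$ (write $m(F)=\int_0^B(1-F)$), and $N\mapsto\sqrt{\log(e+N)/(1+N)}$ is non-increasing on $\N$ (its derivative is negative for $N\ge0$), so Lemma~\ref{lem:subG_cdf} yields $|\mu_{n,i}-\mu_i|\le BW_2\sqrt{\log(e+L)/(1+L)}$ whenever $N_{n,i}\ge L$. Solving $BW_2\sqrt{\log(e+L)/(1+L)}\le\tfrac14(B-\max_k\mu_k)$ produces a random $L_5=Poly(W_2)$ such that for all $L\ge L_5$, all $n$, and all $i$ with $N_{n,i}\ge L$ we obtain $\mu_{n,i}\le\max_k\mu_k+\tfrac14(B-\max_k\mu_k)<u^\star$; moreover $\bE_{\bm F}[(L_5)^{\alpha}]<+\infty$ for every $\alpha>0$ since $W_2$ has all exponential moments.

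Combining the pieces: for any $L\ge L_5$, any $n$, and any $(i,j)\in S_n^L\times\overline{S_n^L}$, evaluating \eqref{eq:def_transportation_cost} at $u^\star\in\cI$ and using $m(F_{n,i})=\mu_{n,i}<u^\star$ (so the first term vanishes), $\Kinf^+(F_{n,j},u^\star)\le\ln\frac{B}{B-u^\star}$, and $N_{n,j}<L$ gives
\[
W_n(i,j)\ \le\ N_{n,i}\,\Kinf^-(F_{n,i},u^\star)+N_{n,j}\,\Kinf^+(F_{n,j},u^\star)\ \le\ L\,\ln\frac{B}{B-u^\star}\,,
\]
which is the claim with $D_1\eqdef\ln\frac{2B}{B-\max_k\mu_k}>0$. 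The only mildly technical point is the second step, i.e.\ turning the high-probability rate into a deterministic bound valid uniformly over $n$ on the event $\{N_{n,i}\ge L\}$; this is handled by the monotonicity of the rate function. In contrast with the matching lower bound in Lemma~\ref{lem:fast_rate_emp_tc}, no convexity or fine continuity property of $\Kinf^{\pm}$ is needed here.
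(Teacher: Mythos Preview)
Your proof is correct and follows essentially the same approach as the paper: evaluate the infimum in \eqref{eq:def_transportation_cost} at a point where the $\Kinf^-$ term vanishes, then use the coarse bound $\Kinf^+(F,u)\le\ln\frac{B}{B-u}$ (the paper's Lemma~\ref{lem:Kinf_finite_distribution_coarse_upper_bound}, which you re-derive from the dual) together with $N_{n,j}<L$. The only difference is that the paper plugs in the random point $u=\mu_{n,i}$ and then needs a second concentration step to control $-\ln(1-\mu_{n,i}/B)$, whereas your fixed choice $u^\star=\tfrac12(B+\max_k\mu_k)$ collapses this into a single concentration argument and yields a slightly tighter constant.
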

\begin{proof}
For bounded distributions, $F \mapsto m(F)$ is continuous on $\cF$ for the weak convergence.
Since $\mu_i \in (0,B)$ for all $i \in [K]$ (Assumption~\ref{ass:standard_assumptions_BAI}), Lemma~\ref{lem:subG_cdf} yields that there exists $L_6 = Poly(W_1)$ such that for all $L \ge L_6$ and all $i \in S_{n}^{L}$, we have $\mu_{n,i} \in (0,B)$. In the following, we consider $L \ge L_6$.

Let $(i, j) \in S_{n}^{L} \times \overline{S_{n}^{L}}$. By definition and taking $u = \mu_{n,i} \in (0,B)$ yields
\begin{align*}
	W_{n}(i, j) &= \inf_{u \in [0,B]} \left\{ N_{n,i} \Kinf^{-}( F_{n,i}, u) + N_{n,j} \Kinf^{+}( F_{n,j}, u) \right\}	\\
	&\leq N_{n,j} \Kinf^{+}( F_{n,j}, \mu_{n,i}) \leq L \Kinf^{+}( F_{n,j}, \mu_{n,i}) \leq -L \ln \left( 1 - \frac{ \mu_{n,i}}{B}\right) \: ,
\end{align*}
where we used that $j \in \overline{S_{n}^{L}}$ and Lemma~\ref{lem:Kinf_finite_distribution_coarse_upper_bound}. By continuity of $F \mapsto m(F)$, Lemma~\ref{lem:subG_cdf} yields that there exists $L_7 = Poly(W_1)$ such that for all $L \ge L_5 \eqdef \max \{L_6, L_7\}$ and all $i \in S_{n}^{L}$
\[
 - \ln \left( 1 - \frac{ \mu_{n,i}}{B}\right) \leq -2  \ln \left( 1 - \frac{ \mu_{i}}{B}\right) \leq D_1 \: ,
\]
where $D_1 = -2\log\left(1 - \frac{\max_{k \in [K]} \mu_{k}}{B}\right)$.
As $\bE_{\bm F} [e^{\lambda W_2}] < + \infty$ for all $\lambda > 0$, we have $\bE_{\bm F}[(L_5)^{\alpha}] \leq \bE_{\bm F}[(L_6)^{\alpha}] + \bE_{\bm F}[(L_7)^{\alpha}] < + \infty$ since $(L_6)^{\alpha} = Poly(W_2)$ and $(L_7)^{\alpha} = Poly(W_2)$.
This concludes the proof.
\end{proof}

\subsubsection{EB leader}
\label{app:sss_eb_leader}

Conditioned on $\cF_n$, the Empirical Best (EB) leader is defined as an arm with highest empirical mean
\begin{equation} \label{eq:def_eb_based_leader}
	B_{n+1}^{\text{EB}} \in \argmax_{i \in [K]} \mu_{n,i} \quad \text{,} \quad \bP_{\mid n}[B_{n+1}^{\text{EB}} = i] = \frac{\indi{i \in \argmax_{i \in [K]} \mu_{n,i}}}{|\argmax_{i \in [K]} \mu_{n,i}|} \: .
\end{equation}
and $\widehat B_{n+1}^{\text{EB}} = B_{n+1}^{\text{EB}}$.

\paragraph{Property~\ref{prop:leader_cdt_suff_explo}}
Lemma~\ref{lem:EB_ensures_suff_explo} shows that Property~\ref{prop:leader_cdt_suff_explo} is satisfied by $B_{n+1}^{\text{EB}}$.
\begin{lemma} \label{lem:EB_ensures_suff_explo}
	Let $S_{n}^{L}$ and $\cI_n^\star$ as in (\ref{eq:def_sampled_enough_sets}).
	Let $L_4$ in Lemma~\ref{lem:fast_rate_emp_tc}.
	Then, for all $L \ge L_4$, for all $n$ such that $S_{n}^{L} \neq \emptyset$, $\widehat B_{n+1}^{\text{EB}} \in S_{n}^{L}$ implies $\widehat B_{n+1}^{\text{EB}} \in \cI_n^\star$.
\end{lemma}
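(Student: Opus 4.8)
The plan is to argue by contradiction and exploit the fact that the empirical means of sufficiently sampled arms are close to the true means, so that the empirical ordering of the arms in $S_n^L$ matches the true ordering. Suppose $L \ge L_4$, $S_n^L \neq \emptyset$, and $\widehat B_{n+1}^{\text{EB}} \in S_n^L$ but $\widehat B_{n+1}^{\text{EB}} \notin \cI_n^\star$. By definition of the EB leader, $\widehat B_{n+1}^{\text{EB}} \in \argmax_{i \in [K]} \mu_{n,i}$, so in particular $\mu_{n, \widehat B_{n+1}^{\text{EB}}} \ge \mu_{n,i}$ for every $i \in \cI_n^\star$. Pick any $i^\star_n \in \cI_n^\star$; since $\widehat B_{n+1}^{\text{EB}} \notin \cI_n^\star$ and $\cI_n^\star$ collects the arms with highest true mean in $S_n^L$, we have $\mu_{i^\star_n} > \mu_{\widehat B_{n+1}^{\text{EB}}}$, hence $W_n(i^\star_n, \widehat B_{n+1}^{\text{EB}}) \ge L D_{\bm F} > 0$ by Lemma~\ref{lem:fast_rate_emp_tc} (both arms lie in $S_n^L$ and $i^\star_n \in \cI_n^\star$, $\widehat B_{n+1}^{\text{EB}} \in S_n^L \setminus \cI_n^\star$).

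Next I would translate the strict positivity of $W_n(i^\star_n, \widehat B_{n+1}^{\text{EB}})$ into a contradiction with the empirical ordering. Recall from the remark after \eqref{eq:def_transportation_cost} that $W_n(i,j) = 0$ whenever $\mu_{n,i} \le \mu_{n,j}$, because then the infimum defining $W_n(i,j)$ is attained (as zero) by taking $u$ between $\mu_{n,j}$ and $\mu_{n,i}$, where both $\Kinf^-(F_{n,i},u)$ and $\Kinf^+(F_{n,j},u)$ vanish. Applying this with $i = i^\star_n$ and $j = \widehat B_{n+1}^{\text{EB}}$: since $\mu_{n,\widehat B_{n+1}^{\text{EB}}} \ge \mu_{n,i^\star_n}$ (the EB leader maximizes the empirical mean), we get $W_n(i^\star_n, \widehat B_{n+1}^{\text{EB}}) = 0$. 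This contradicts $W_n(i^\star_n, \widehat B_{n+1}^{\text{EB}}) \ge L D_{\bm F} > 0$. Therefore the assumption $\widehat B_{n+1}^{\text{EB}} \notin \cI_n^\star$ is untenable, and $\widehat B_{n+1}^{\text{EB}} \in \cI_n^\star$, which is exactly Property~\ref{prop:leader_cdt_suff_explo} with $L_0 = L_4$ (and $\bE_{\bm F}[(L_4)^\alpha] < +\infty$ for all $\alpha > 0$ as shown in Lemma~\ref{lem:fast_rate_emp_tc}).

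The argument is short because the heavy lifting has already been done in Lemma~\ref{lem:fast_rate_emp_tc}, which establishes the linear lower bound on the transportation cost between a highest-mean arm and a strictly lower-mean arm among the sufficiently sampled ones. The only point requiring slight care is the direction of the inequality: $W_n$ is asymmetric, and we must use it with the true-best arm $i^\star_n$ in the first (the $\Kinf^-$) slot and the candidate leader in the second (the $\Kinf^+$) slot, which is precisely the form in which Lemma~\ref{lem:fast_rate_emp_tc} is stated. The main (mild) obstacle is simply to notice that Lemma~\ref{lem:fast_rate_emp_tc} already does all the work: once $\widehat B_{n+1}^{\text{EB}} \in S_n^L \setminus \cI_n^\star$ and $i^\star_n \in \cI_n^\star$, the positivity of $W_n(i^\star_n, \widehat B_{n+1}^{\text{EB}})$ is incompatible with $\widehat B_{n+1}^{\text{EB}}$ having the largest empirical mean, since $W_n(i,j) = 0$ whenever $\mu_{n,i} \le \mu_{n,j}$.
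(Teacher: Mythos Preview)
Your proof is correct and essentially identical to the paper's: both argue by contradiction, invoke Lemma~\ref{lem:fast_rate_emp_tc} to get $W_n(i^\star_n,\widehat B_{n+1}^{\text{EB}})\ge L D_{\bm F}>0$, and then contradict this using the fact that $W_n(i,j)=0$ whenever $\mu_{n,i}\le \mu_{n,j}$ since $\widehat B_{n+1}^{\text{EB}}$ maximizes the empirical mean. The only cosmetic difference is that the paper explicitly records $\widehat B_{n+1}^{\text{EB}}=B_{n+1}^{\text{EB}}$ (determinism of the EB leader) before invoking $B_{n+1}^{\text{EB}}\in\argmax_i\mu_{n,i}$, whereas you use that identification directly.
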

\begin{proof}
	Let $S_{n}^{L}$ and $\cI_n^\star$ as in (\ref{eq:def_sampled_enough_sets}).
	Assume that $S_{n}^{L} \neq \emptyset$.
	If $S_{n}^{L} \setminus  \cI_n^\star $ is empty, then the result is true.
	Assume $S_{n}^{L} \setminus  \cI_n^\star $ is not empty.
	Let $L_4$ in Lemma~\ref{lem:fast_rate_emp_tc}. Then,
	\[
	\forall (i, j) \in \cI_n^\star \times \left(S_{n}^{L} \setminus  \cI_n^\star \right), \quad	W_{n}(i, j) \geq  L D_{\bm F} \: ,
	\]
Assume that $\widehat B_{n+1}^{\text{EB}} \in S_{n}^{L}$. Suppose towards contradiction that $\widehat B_{n+1}^{\text{EB}} \notin \cI_n^\star$.
Therefore, $W_{n}(i, \widehat B_{n+1}^{\text{EB}}) \geq L D_{\bm F} > 0$ for all $i \in \cI_n^\star$.
Since the choice of the leader is deterministic, we have $B_{n+1}^{\text{EB}} = \widehat B_{n+1}^{\text{EB}}$.
Since $B_{n+1}^{\text{EB}} \in \argmax_{i \in [K]} \mu_{n,i}$, we have $W_{n}(i, \widehat B_{n+1}^{\text{EB}}) = 0$.
This is a contradiction, hence $\widehat B_{n+1}^{\text{EB}} \in \cI_n^\star$.
\end{proof}

\paragraph{Property~\ref{prop:leader_cdt_convergence}}
Lemma~\ref{lem:EB_ensures_convergence} shows that Property~\ref{prop:leader_cdt_convergence} is satisfied by $B_{n+1}^{\text{EB}}$.
More precisely, we show that after enough time, the leader is the best arm almost surely.
\begin{lemma} \label{lem:EB_ensures_convergence}
		Assume Property~\ref{prop:suff_exploration} holds.
		There exists $N_6$ with $\bE_{\bm F}[N_6] < + \infty$ such that for all $n \ge N_6$,
		\[
		\bP_{\mid n}[ B_{n+1}^{\text{EB}} \neq i^\star (\bm F)]  = 0 \: .
		\]
\end{lemma}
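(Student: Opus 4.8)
The idea is that sufficient exploration forces every arm to be sampled a polynomial number of times, and once all arms have enough samples the empirical means concentrate around the true means by less than half the minimum gap to the best arm, so the empirical best arm coincides with $i^\star(\bm F)$ and the leader is chosen deterministically. Concretely, set $\Delta \eqdef \min_{j \neq i^\star(\bm F)} (\mu_{i^\star(\bm F)} - \mu_j) > 0$, which is strictly positive by Assumption~\ref{ass:standard_assumptions_BAI} (unique best arm). Let $W_2$ be the sub-Gaussian random variable of Lemma~\ref{lem:subG_cdf}; since the mean operator $m$ is continuous on $\cF$ for the topology of weak convergence (for bounded distributions it is $1$-Lipschitz in $\|\cdot\|_\infty$ up to the constant $B$), we have $|\mu_{n,i} - \mu_i| \le B \|F_{n,i} - F_i\|_\infty \le B W_2 \sqrt{\log(e + N_{n,i})/(1 + N_{n,i})}$ for all $n, i$.

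First I would invoke Property~\ref{prop:suff_exploration}: there is $N_1$ with $\bE_{\bm F}[N_1] < +\infty$ such that $N_{n,i} \ge \sqrt{n/K}$ for all $n \ge N_1$ and all $i$. Plugging this into the bound above, $\max_{i \in [K]} |\mu_{n,i} - \mu_i| \le B W_2 \sqrt{\log(e + \sqrt{n/K})/(1 + \sqrt{n/K})}$, whose right-hand side tends to $0$ as $n \to \infty$. Hence there is $N_6' = \mathrm{Poly}(W_2)$ (depending also on $\Delta$, $B$, $K$) such that for all $n \ge \max\{N_1, N_6'\}$, $\max_{i \in [K]} |\mu_{n,i} - \mu_i| < \Delta/2$. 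On this event, for every $j \neq i^\star(\bm F)$, $\mu_{n,i^\star(\bm F)} > \mu_{i^\star(\bm F)} - \Delta/2 \ge \mu_j + \Delta/2 > \mu_{n,j}$, so $\argmax_{i} \mu_{n,i} = \{i^\star(\bm F)\}$, and therefore by the definition \eqref{eq:def_eb_based_leader} of the EB leader, $\bP_{\mid n}[B_{n+1}^{\text{EB}} \neq i^\star(\bm F)] = 0$.

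Finally I would set $N_6 \eqdef \max\{N_1, N_6'\}$. Since $\bE_{\bm F}[N_1] < +\infty$ by Property~\ref{prop:suff_exploration}, and $N_6' = \mathrm{Poly}(W_2)$ with $\bE_{\bm F}[e^{\lambda W_2}] < +\infty$ for all $\lambda > 0$ (Lemma~\ref{lem:subG_cdf}) so that $\bE_{\bm F}[N_6'] < +\infty$, we get $\bE_{\bm F}[N_6] \le \bE_{\bm F}[N_1] + \bE_{\bm F}[N_6'] < +\infty$, which concludes the proof. This also establishes Property~\ref{prop:leader_cdt_convergence} with $g(n) = 0$ (which is $o(n^{-\alpha})$ for any $\alpha > 0$). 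The only mild subtlety — not really an obstacle — is to record the dependence of $N_6'$ on $W_2$ as a polynomial so that the finite-expectation bookkeeping goes through; everything else is a direct chaining of Property~\ref{prop:suff_exploration}, Lemma~\ref{lem:subG_cdf}, and continuity of the mean.
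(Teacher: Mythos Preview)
Your proposal is correct and follows essentially the same approach as the paper: invoke Property~\ref{prop:suff_exploration} for $N_1$, use Lemma~\ref{lem:subG_cdf} together with the continuity (Lipschitzness) of $F\mapsto m(F)$ to find a $\mathrm{Poly}(W_2)$ time after which all empirical means are within a fraction of $\Delta$ of the truth, conclude that the empirical argmax is $\{i^\star\}$ so the conditional probability vanishes, and bound $\bE_{\bm F}[N_6]$ by summing the two expectations. The only cosmetic differences are your explicit Lipschitz constant $B$ and your use of $\Delta/2$ versus the paper's $\Delta/4$.
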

\begin{proof}
	Let $i^\star = i^\star(\bm F)$.
	Let $N_1$ as in Property~\ref{prop:suff_exploration}, then $N_{n,i} \geq \sqrt{\frac{n}{K}}$ for all $n \ge N_1$.
	Since $i^\star$ is unique, we have $\Delta \eqdef \min_{j\neq i^\star}|\mu_{i^\star} - \mu_{j}| > 0$.
	For bounded distributions, $F \mapsto m(F)$ is continuous on $\cF$ for the weak convergence.
	Lemma~\ref{lem:subG_cdf} yields that there exists $N_7 = Poly(W_2) $ such that for all $n \ge N_6 \eqdef \max\{N_1,N_7\}$ and all $i \in [K]$, we have $|\mu_{n,i} - \mu_{i}| \leq \frac{\Delta}{4}$.
	Therefore, for all $n \ge N_6$, $\argmax_{i \in [K]} \mu_{n,i} = \argmax_{i \in [K]} \mu_{i} = i^\star$ and
	\begin{align*}
		\bP_{\mid n}[B_{n+1}^{\text{EB}} \neq i^\star] = 1 -  \bP_{\mid n}[B_{n+1}^{\text{EB}} = i^\star] = 1 - \frac{\indi{i^\star \in \argmax_{i \in [K]} \mu_{n,i}}}{|\argmax_{i \in [K]} \mu_{n,i}|} = 0  \: .
	\end{align*}
	As $\bE_{\bm F} [e^{\lambda W_2}] < + \infty$ for all $\lambda > 0$, we have $\bE_{\bm F}[N_7] < + \infty$.
	Therefore, $\bE_{\bm F}[N_6] \le \bE_{\bm F}[N_1] + \bE_{\bm F}[N_7] < + \infty$ yields the result.
\end{proof}

\subsubsection{TC challenger}
\label{app:sss_tc_challenger}

Conditioned on $\cF_n$ and given a leader $B_{n+1}$, the Transportation Cost (TC) challenger is defined as the arm with smallest transportation cost compared to the leader
\begin{equation} \label{eq:def_tc_based_challenger}
	C_{n+1}^{\text{TC}} \in \argmin_{j \neq   B_{n+1}} W_{n}(  B_{n+1},j)  \quad \text{,} \quad  \bP_{\mid n}[C_{n+1}^{\text{TC}} = j| B_{n+1} = i] = \frac{\indi{j \in \argmin_{k \neq i} W_{n}(i,k)}}{|\argmin_{k \neq i} W_{n}(i,k)|}  \: ,
\end{equation}
and $\widehat C_{n+1}^{\text{TC}} \in \argmin_{j \neq \widehat  B_{n+1}} W_{n}(\widehat  B_{n+1},j)$.

\paragraph{Property~\ref{prop:challenger_cdt_suff_explo}}
We prove Property~\ref{prop:challenger_cdt_suff_explo} for $C_{n+1}^{\text{TC}}$ in Lemma~\ref{lem:TC_ensures_suff_explo} by comparing the rates at which $W_n$ increases (Lemmas~\ref{lem:fast_rate_emp_tc} and~\ref{lem:small_transportation_cost_undersampled_arms}).
The effective challenger $\widehat C_{n+1}^{\text{TC}}$ is taken as an arm minimizing the transportation cost compared to the leader $\widehat B_{n+1}$.
Therefore, it is sufficient to show that the sampled enough arms have higher transportation costs than the mildly under-sampled ones.
This implies that $\widehat C_{n+1}^{\text{TC}}$ has to be mildly under-sampled or be an arm with highest mean among the sampled enough arms.
\begin{lemma} \label{lem:TC_ensures_suff_explo}
	Let $B_{n+1}$ be a leader satisfying Property~\ref{prop:leader_cdt_suff_explo}.
	Given $(B_{n+1}, \widehat B_{n+1})$, let $(C_{n+1}^{\text{TC}}, \widehat C_{n+1}^{\text{TC}})$ as in (\ref{eq:def_tc_based_challenger}).
	Let $U_n^L$ and $V_n^L$ as in (\ref{eq:def_undersampled_sets}) and $\mathcal J_n^\star = \argmax_{ i \in \overline{V_{n}^{L}}} \mu_{i}$.
	There exists $L_6$ with $\bE_{\bm F}[L_6] < + \infty$ such that if $L \ge L_6$, for all $n$ such that $U_n^L \neq \emptyset$, $\widehat B_{n+1} \notin V_{n}^{L}$ implies $\widehat C_{n+1}^{\text{TC}} \in V_{n}^{L} \cup \left( \mathcal J_n^\star \setminus \left\{\widehat B_{n+1} \right\} \right)$.
\end{lemma}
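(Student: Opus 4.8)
The plan is to argue by contradiction, playing off the two growth‑rate estimates for empirical transportation costs, Lemma~\ref{lem:fast_rate_emp_tc} and Lemma~\ref{lem:small_transportation_cost_undersampled_arms}, applied at \emph{two different} thresholds. Write $M = L^{3/4}$, so that in the notation of \eqref{eq:def_sampled_enough_sets} we have $\overline{V_n^L} = S_n^{M}$ and $\mathcal J_n^\star = \argmax_{i \in S_n^{M}} \mu_i$ is exactly the set $\mathcal I_n^\star$ associated with the threshold $M$. Note also that any arm $i\in U_n^L$ has $N_{n,i}<\sqrt L\le L^{3/4}$, so $U_n^L\subseteq V_n^L$; in particular, when $U_n^L\neq\emptyset$ it is nonempty and furnishes a witness arm $j^\star\in U_n^L=\overline{S_n^{\sqrt L}}$.

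First I would fix $L$ large enough that $M=L^{3/4}\ge\max\{L_0,L_4\}$ and $\sqrt L\ge L_5$, where $L_0$ is from Property~\ref{prop:leader_cdt_suff_explo} and $L_4,L_5$ are from the two rate lemmas; this holds as soon as $L\ge\max\{L_0^{4/3},L_4^{4/3},L_5^2\}$, a quantity with finite moments of every order (hence finite expectation) since $\bE_{\bm F}[(L_0)^\alpha],\bE_{\bm F}[(L_4)^\alpha],\bE_{\bm F}[(L_5)^\alpha]<+\infty$ for all $\alpha>0$. Now assume $U_n^L\neq\emptyset$ and $\widehat B_{n+1}\notin V_n^L$, i.e.\ $\widehat B_{n+1}\in S_n^{M}$, so $S_n^{M}\neq\emptyset$. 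Applying Property~\ref{prop:leader_cdt_suff_explo} at threshold $M$ yields $\widehat B_{n+1}\in\argmax_{i\in S_n^{M}}\mu_i=\mathcal J_n^\star$.

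Next, suppose towards a contradiction that $\widehat C:=\widehat C_{n+1}^{\text{TC}}\notin V_n^L\cup(\mathcal J_n^\star\setminus\{\widehat B_{n+1}\})$. Since $\widehat C\neq\widehat B_{n+1}$ by the definition of the challenger in \eqref{eq:def_tc_based_challenger}, this forces $\widehat C\in S_n^{M}\setminus\mathcal J_n^\star$. Then Lemma~\ref{lem:fast_rate_emp_tc} at threshold $M$, using $\widehat B_{n+1}\in\mathcal J_n^\star$, gives $W_n(\widehat B_{n+1},\widehat C)\ge M D_{\bm F}=L^{3/4}D_{\bm F}$. On the other hand, $\widehat B_{n+1}\in S_n^{M}\subseteq S_n^{\sqrt L}$ and $j^\star\in\overline{S_n^{\sqrt L}}$ with $j^\star\neq\widehat B_{n+1}$, so Lemma~\ref{lem:small_transportation_cost_undersampled_arms} at threshold $\sqrt L$ gives $W_n(\widehat B_{n+1},j^\star)\le\sqrt L\,D_1$. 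Since $\widehat C$ minimizes $W_n(\widehat B_{n+1},\cdot)$ over arms $\neq\widehat B_{n+1}$, we obtain $L^{3/4}D_{\bm F}\le W_n(\widehat B_{n+1},\widehat C)\le W_n(\widehat B_{n+1},j^\star)\le\sqrt L\,D_1$, that is $L^{1/4}\le D_1/D_{\bm F}$, which fails for $L$ larger than the deterministic constant $L_8:=(D_1/D_{\bm F})^4+1$. Taking $L_6:=\max\{L_0^{4/3},L_4^{4/3},L_5^2,L_8\}$, which has finite expectation, concludes the proof.

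I expect the argument to be essentially bookkeeping; the one genuinely delicate point is the choice of the two thresholds. The transportation cost from the leader to a sampled‑enough sub‑optimal arm grows like $L^{3/4}$, whereas the cost to a highly under‑sampled arm grows only like $\sqrt L$, and the resulting $L^{1/4}$ separation in the exponents is exactly what forces the TC minimizer either to pick an under‑sampled arm or to stay inside $\mathcal J_n^\star$. Arranging this separation correctly — and making sure the problem‑dependent random thresholds $L_0,L_4,L_5$ combine into a single $L_6$ with finite expectation — is the crux; there is no hidden analytic obstacle once the rate lemmas are available.
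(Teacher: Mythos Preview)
Your argument is correct and follows essentially the same route as the paper: apply Property~\ref{prop:leader_cdt_suff_explo} at threshold $L^{3/4}$ to place $\widehat B_{n+1}$ in $\mathcal J_n^\star$, then use Lemma~\ref{lem:fast_rate_emp_tc} at threshold $L^{3/4}$ and Lemma~\ref{lem:small_transportation_cost_undersampled_arms} at threshold $\sqrt{L}$ to force the $L^{3/4}D_{\bm F}$ versus $\sqrt{L}D_1$ separation, and combine the random thresholds into $L_6=\max\{L_0^{4/3},L_4^{4/3},L_5^2,L_8\}$ with $\bE_{\bm F}[L_6]<+\infty$. The only cosmetic differences are that the paper states the two rate inequalities uniformly over the relevant sets rather than singling out a witness $j^\star\in U_n^L$, and names the deterministic constant $L_7$ where you write $L_8$.
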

\begin{proof}
	Let $\mathcal J_n^\star = \argmax_{ i \in \overline{V_{n}^{L}}} \mu_{i}$.
	In the following, we consider $U_n^L \neq \emptyset$ (hence $V_n^L \neq \emptyset$) and $\widehat B_{n+1} \in V_n^L$.
	Let $B_{n+1}$ be a leader satisfying Property~\ref{prop:leader_cdt_suff_explo}, and $L_0$ defined therein.
	Then, for $L \geq L_{0}^{4/3}$, we have $\widehat B_{n+1} \in \mathcal J_n^\star$.
	If $\widehat C_{n+1}^{\text{TC}} \in \mathcal J_n^\star \setminus \left\{\widehat B_{n+1} \right\}$, we are done.
	Assume that $\widehat C_{n+1}^{\text{TC}} \notin \mathcal J_n^\star \setminus \left\{\widehat B_{n+1} \right\}$.

	Let $(L_4, D_{\bm F})$ and $(L_5, D_1)$ as in Lemmas~\ref{lem:fast_rate_emp_tc} and \ref{lem:small_transportation_cost_undersampled_arms}.
	Then, for all $L \geq \max\{L_{0}^{4/3}, L_4^{4/3}, L_5^2\}$,
		\begin{align*}
			&\widehat B_{n+1} \in \mathcal J_n^\star \: , \\
			\forall (i,j) \in \mathcal J_n^\star \times \left(\overline{V_n^L} \setminus \mathcal J_n^\star\right), \quad &W_{n}(i, j) \geq  L^{3/4} D_{\bm F}   \: , \\
			\forall (i,j) \in \overline{U_n^L} \times U_n^L, \quad 	&W_{n}(i,j) \leq  \sqrt{L} D_1 \: .
		\end{align*}
	There exists a deterministic $L_7$ such that for all $L \ge L_7$,
	\[
	L^{3/4} D_{\bm F} >  \sqrt{L} D_1 \: .
	\]

	Since $\mathcal J_n^\star \subseteq \overline{V_n^L} \subseteq \overline{U_n^L}$, for all $L \geq L_6 \eqdef \max\{L_{0}^{4/3}, L_4^{4/3}, L_5^2, L_7\}$ we have
	\[
	\forall  (i,k,j) \in \mathcal J_n^\star \times U_n^L \times \left(\overline{V_n^L}  \setminus \mathcal J_n^\star \right), \quad  W_{n}(i, j) > W_{n}(i, k) \: .
	\]
	As $\widehat B_{n+1} \in \mathcal J_n^\star$ and $\widehat C_{n+1}^{\text{TC}} \notin \mathcal J_n^\star \setminus \left\{\widehat B_{n+1} \right\}$, the definition $\widehat C_{n+1}^{\text{TC}}  \in \argmax_{j \neq \widehat B_{n+1}} W_{n}(\widehat B_{n+1}, j)$ yields that $\widehat C_{n+1}^{\text{TC}} \in V_{n}^{L}$.
	Otherwise the above strict inequality would wield a contradiction.
	Since
	\[
		\bE_{\bm F}[L_6] \le L_7 + \bE_{\bm F}[(L_{0})^{4/3}] + \bE_{\bm F}[(L_4)^{4/3}] +  \bE_{\bm F}[(L_5)^2] < + \infty	\: ,
	\]
	this concludes the proof.
\end{proof}

\paragraph{Property~\ref{prop:challenger_cdt_convergence}}
Lemma~\ref{lem:TC_ensures_convergence} shows that the Property~\ref{prop:challenger_cdt_convergence} is satisfied by $C_{n+1}^{\text{TC}}$.
More precisely, it shows that if the mean probability of sampling a sub-optimal arm overshoots its $\beta$-optimal allocation, then it won't be sampled almost surely if the leader is the best arm.
\begin{lemma} \label{lem:TC_ensures_convergence}
		Assume Property~\ref{prop:suff_exploration} holds.
		Let $\epsilon > 0$.
		Let $B_{n+1}$ be a leader satisfying Property~\ref{prop:leader_cdt_convergence} and $C_{n+1}^{\text{TC}}$ as in (\ref{eq:def_tc_based_challenger}).
		There exists $N_7$ with $\bE_{\bm F}[N_7] < + \infty$ such that for all $n \geq N_7$ and all $i \neq i^\star(\bm F)$,
		\begin{equation} \label{eq:overshooting_implies_not_sampled_anymore_tc}
			\frac{\Psi_{n,i}}{n} \geq w_{i}^{\beta} + \epsilon  \quad \implies \quad \bP_{\mid n}[C_{n+1}^{\text{TC}} = i \mid B_{n+1} = i^\star(\bm F)] = 0 \: .
		\end{equation}
\end{lemma}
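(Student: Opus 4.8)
The plan is to reduce the claim, via the definition of the TC challenger in~(\ref{eq:def_tc_based_challenger}), to showing that on the event $\{\Psi_{n,i}/n\geq w_i^\beta+\epsilon\}$ and for $n$ past a random time with finite expectation one has $i\notin\argmin_{j\neq i^\star}W_n(i^\star,j)$; indeed~(\ref{eq:def_tc_based_challenger}) gives $\bP_{\mid n}[C_{n+1}^{\text{TC}}=i\mid B_{n+1}=i^\star]=\indi{i\in\argmin_{j\neq i^\star}W_n(i^\star,j)}/|\argmin_{j\neq i^\star}W_n(i^\star,j)|$. Write $i^\star=i^\star(\bm F)$, let $w^\beta$ be the unique $\beta$-optimal allocation (Property~\ref{prop:joint_continuity_true_tc}), $c^\star\eqdef T_\beta^\star(\bm F)^{-1}$, and for $j\neq i^\star$ let $u_j^\star$ be the minimizer defining $C_{i^\star,j}(\cT(\bm F),w^\beta)$. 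I will freely use the facts, collected in Appendix~\ref{app:kinf_for_bounded_distributions} and Lemmas~\ref{lem:properties_characteristic_times},~\ref{lem:w_star_positive}, that $\min_k w_k^\beta>0$, that the transportation costs are equalized at the optimum, $C_{i^\star,j}(\cT(\bm F),w^\beta)=c^\star$ for all $j\neq i^\star$, that each $u_j^\star$ lies strictly between $\mu_j$ and $\mu_{i^\star}$, whence $m_0\eqdef\min_{j\neq i^\star}\Kinf^+(F_j,u_j^\star)>0$ and $\bar K\eqdef\max\{2\max_k\Kinf^-(F_{i^\star},\mu_k),\ \max_{j\neq i^\star}\Kinf^-(F_{i^\star},u_j^\star)\}<\infty$, that $w\mapsto C_{i^\star,j}(\cT(\bm F),w)$ is non-decreasing in each coordinate, and that $C_{i^\star,j}(\cdot,\cdot)$ is continuous, hence uniformly continuous on the (compact, by tightness on $[0,B]$) set of cdfs of bounded distributions times $\simplex$. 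The case $K=2$ is trivial: then $N_{n,i}/n=1-N_{n,i^\star}/n\to 1-\beta=w_i^\beta$ by Lemma~\ref{lem:convergence_towards_optimal_allocation_best_arm}, so the premise of~(\ref{eq:overshooting_implies_not_sampled_anymore_tc}) eventually fails; so assume $K\geq 3$.

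Next I would fix constants $\epsilon'\eqdef\epsilon/(4(K-2))$, $\gamma\eqdef\epsilon' m_0/8$ and $\eta_0\in(0,\min\{\epsilon/4,\ \beta/2,\ \epsilon' m_0/(16\bar K)\}]$, and assemble the relevant finite-expectation times. By Property~\ref{prop:suff_exploration}, $N_{n,k}\geq\sqrt{n/K}$ for $n\geq N_1$; combining this with Lemma~\ref{lem:subG_cdf} and uniform continuity of $C$ gives $N_\eta\eqdef\max\{N_1,\mathrm{Poly}(1/\eta_0,W_2)\}$, so $\bE_{\bm F}[N_\eta]<\infty$, such that for $n\geq N_\eta$ one has $\max_k\|F_{n,k}-F_k\|_\infty$ small enough that $|C_{i^\star,j}(\cT(\bm F_n),w)-C_{i^\star,j}(\cT(\bm F),w)|\leq\gamma$ for all $j\neq i^\star$ and all $w\in\simplex$; by~(\ref{eq:def_emp_transportation_cost_general_formulation}) this places $W_n(i^\star,j)/n$ within $\gamma$ of $C_{i^\star,j}(\cT(\bm F),N_n/n)$. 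Lemma~\ref{lem:subG_alloc} gives $N_8=\mathrm{Poly}(1/\eta_0,W_1)$ with $|N_{n,k}/n-\Psi_{n,k}/n|\leq\eta_0$ for $n\geq N_8$, and Lemma~\ref{lem:convergence_towards_optimal_allocation_best_arm} (applied with $\varepsilon=\eta_0$, using Property~\ref{prop:leader_cdt_convergence}) gives $N_4'$ with $\bE_{\bm F}[N_4']<\infty$ and $|N_{n,i^\star}/n-\beta|\leq\eta_0$ for $n\geq N_4'$. Set $N_7\eqdef\max\{N_1,N_\eta,N_8,N_4'\}$, so $\bE_{\bm F}[N_7]<\infty$.

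Finally, fix $n\geq N_7$ and $i\neq i^\star$ with $\Psi_{n,i}/n\geq w_i^\beta+\epsilon$. Then $N_{n,i}/n\geq w_i^\beta+\epsilon-\eta_0\geq w_i^\beta+\epsilon/2$. Since $\sum_{k\neq i^\star}(N_{n,k}/n-w_k^\beta)=\beta-N_{n,i^\star}/n\in[-\eta_0,\eta_0]$, summing over $k\notin\{i^\star,i\}$ gives a total $\leq\eta_0-\epsilon/2\leq-\epsilon/4$, so the pigeonhole principle yields $j_0\notin\{i,i^\star\}$ with $N_{n,j_0}/n\leq w_{j_0}^\beta-\epsilon'$ (in particular $w_{j_0}^\beta\geq\epsilon'$, or else the premise is contradictory). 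Evaluating the infimum in $C_{i^\star,j_0}(\cT(\bm F),N_n/n)$ at $u_{j_0}^\star$ and using $N_{n,i^\star}/n\leq\beta+\eta_0$, $N_{n,j_0}/n\leq w_{j_0}^\beta-\epsilon'$, $\Kinf^\pm\geq0$, and $\beta\Kinf^-(F_{i^\star},u_{j_0}^\star)+w_{j_0}^\beta\Kinf^+(F_{j_0},u_{j_0}^\star)=c^\star$,
\[
 C_{i^\star,j_0}\!\left(\cT(\bm F),\frac{N_n}{n}\right)\ \leq\ c^\star+\eta_0\,\Kinf^-(F_{i^\star},u_{j_0}^\star)-\epsilon'\,\Kinf^+(F_{j_0},u_{j_0}^\star)\ \leq\ c^\star+\eta_0\bar K-\epsilon' m_0.
\]
For arm $i$, coordinatewise monotonicity with $N_{n,i^\star}/n\geq\beta-\eta_0$ and $N_{n,i}/n\geq w_i^\beta$ gives $C_{i^\star,i}(\cT(\bm F),N_n/n)\geq\inf_u[(\beta-\eta_0)\Kinf^-(F_{i^\star},u)+w_i^\beta\Kinf^+(F_i,u)]$; comparing with the value at $u=\mu_i$ bounds the $\Kinf^-$ term at the relevant minimizer by $2\Kinf^-(F_{i^\star},\mu_i)\leq\bar K$, so $C_{i^\star,i}(\cT(\bm F),N_n/n)\geq c^\star-\eta_0\bar K$. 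Transferring both inequalities to $W_n/n$ (at cost $\gamma$ each) and using the choices of $\eta_0,\gamma$,
\[
 \frac1n\bigl(W_n(i^\star,i)-\min_{j\neq i^\star}W_n(i^\star,j)\bigr)\ \geq\ \frac1n\bigl(W_n(i^\star,i)-W_n(i^\star,j_0)\bigr)\ \geq\ \epsilon' m_0-2\eta_0\bar K-2\gamma\ \geq\ \frac{\epsilon' m_0}{2}\ >\ 0,
\]
so $i\notin\argmin_{j\neq i^\star}W_n(i^\star,j)$ and hence $\bP_{\mid n}[C_{n+1}^{\text{TC}}=i\mid B_{n+1}=i^\star]=0$, which is~(\ref{eq:overshooting_implies_not_sampled_anymore_tc}). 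I expect the main obstacle to be the gap estimate of this last paragraph: it rests on the $\Kinf^\pm$ regularity — equalization of transportation costs at $w^\beta$, strict interiority of the dual optimizers $u_j^\star$ so that $m_0>0$, finiteness of the $\Kinf^-$ terms, coordinatewise monotonicity, and uniform continuity of $C$ in the distribution argument over bounded distributions — all of which must be extracted carefully from Appendix~\ref{app:kinf_for_bounded_distributions}; everything else is bookkeeping over finitely many random times of finite expectation.
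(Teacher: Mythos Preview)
Your proof is correct and takes a genuinely different route from the paper. Both arguments begin identically, reducing the claim to $i\notin\argmin_{j\neq i^\star}W_n(i^\star,j)$ and invoking Lemma~\ref{lem:convergence_towards_optimal_allocation_best_arm} to control $N_{n,i^\star}/n$. From there the paper introduces the auxiliary function
\[
G_i(\bm F,\tilde\beta)=\inf_{u}\Big\{\tilde\beta\,\Kinf^-(F_{i^\star},u)+\big(w_i^\beta+\tfrac{\epsilon}{2}\big)\Kinf^+(F_i,u)\Big\}-\sup_{w:w_{i^\star}=\tilde\beta}\min_{j\neq i^\star}C_{i^\star,j}(\cT(\bm F),w),
\]
lower bounds $\tfrac1n\big(W_n(i^\star,i)-\min_j W_n(i^\star,j)\big)$ by $\inf_{|\tilde\beta-\beta|\leq\xi}G_i(\bm F_n,\tilde\beta)$, and then passes to $G_i(\bm F,\beta)>0$ via the Berge-type continuity Lemma~\ref{lem:continuity_results_for_analysis} and the strict monotonicity Lemma~\ref{lem:inf_Kinf_increasing_in_w}. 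You instead produce an explicit witness $j_0\notin\{i,i^\star\}$ with $N_{n,j_0}/n\leq w_{j_0}^\beta-\epsilon'$ by pigeonhole, and bound the two transportation costs directly by evaluating at the true optimizers $u_j^\star$; this yields a quantitative gap $\epsilon' m_0/2$ in place of the paper's abstract $G_i(\bm F,\beta)>0$. Your argument is more elementary (no Berge machinery beyond the joint continuity of $C$) and gives explicit constants, at the cost of having to treat $K=2$ separately and to track $m_0$ and $\bar K$ by hand; the paper's route absorbs all of this into the single strict inequality from Lemma~\ref{lem:inf_Kinf_increasing_in_w}.

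Two minor points of sloppiness, neither fatal. First, for $K=2$ you phrase the argument in terms of $N_{n,i}/n$, but the premise concerns $\Psi_{n,i}/n$; the fix is immediate since $\Psi_{n,i}/n=1-\Psi_{n,i^\star}/n$ and the proof of Lemma~\ref{lem:convergence_towards_optimal_allocation_best_arm} first controls $\Psi_{n,i^\star}/n$. Second, the uniform continuity claim for $C$ on all of $\cF^K\times\simplex$ is stated a bit too strongly: Lemma~\ref{lem:unique_continuous_mu_star} gives continuity only where the means lie in $(0,B)$, so what you actually use (and what suffices) is joint continuity at $(\bm F,w)$ for every $w\in\simplex$, which by compactness of $\simplex$ yields the $w$-uniform modulus you need in a neighbourhood of the true $\bm F$.
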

\begin{proof}
Let $\epsilon > 0$ and $i^\star = i^\star(\bm F)$.
Let $N_1$ as in Property~\ref{prop:suff_exploration}, then $N_{n,i} \geq \sqrt{\frac{n}{K}}$ for all $n \ge N_1$.
Since $i^\star$ is unique, we have $\Delta \eqdef \min_{j\neq i^\star}|\mu_{i^\star} - \mu_{j}| > 0$.
For bounded distributions, $F \mapsto m(F)$ is continuous on $\cF$ for the weak convergence.
Lemma~\ref{lem:subG_cdf} yields that there exists $N_8 = Poly(W_2)$ such that for all $n \ge \max\{N_1, N_8\}$ and all $i \in [K]$, we have $|\mu_{n,i} - \mu_{i}| \leq \frac{\Delta}{4}$.
Therefore, for all $n \ge \max\{N_1, N_8\}$, $\argmax_{i \in [K]} \mu_{n,i} = \argmax_{i \in [K]} \mu_{i} = i^\star$.

Let $\xi >0$. Since Property~\ref{prop:suff_exploration} holds and $B_{n+1}$ satisfies Property~\ref{prop:leader_cdt_convergence}, we can use the results from Lemma~\ref{lem:convergence_towards_optimal_allocation_best_arm}. Let $N_4$ defined in Lemma~\ref{lem:convergence_towards_optimal_allocation_best_arm}, we have $\left| \frac{N_{n,i^\star}}{n} - \beta \right| \leq \xi$ for all $n \ge \max \{ N_1, N_4\}$.

Using the definition of $C_{n+1}^{\text{TC}}$ in (\ref{eq:def_tc_based_challenger}), we have
\begin{align*}
	\bP_{\mid n}[C_{n+1}^{\text{TC}} = i \mid B_{n+1} = i^\star] = 0 \quad &\iff \quad  i \notin \argmin_{k \neq i^\star} W_{n}(i^\star,k) \\
	& \iff \frac{1}{n}\left(W_{n}(i^\star,i) - \min_{j\neq i^\star}W_{n}(i^\star,j) \right) > 0 \: .
\end{align*}

Let $i \neq i^\star$ such that $\frac{\Psi_{n,i}}{n} \geq w_{i}^{\beta} + \epsilon$.
Using Lemma~\ref{lem:subG_alloc}, there exists $N_9 = Poly(W_1) $, such that for all $n \ge \max\{N_1, N_9\}$, we have $\frac{N_{n,i}}{n} \geq w_{i}^{\beta} + \frac{\epsilon}{2}$.
Therefore, for all $n \ge \max \{N_1, N_4, N_8, N_9\}$,
\begin{align*}
	 &\frac{1}{n}\left(W_{n}(i^\star,i) - \min_{j\neq i^\star}W_{n}(i^\star,j) \right)\\
	 &\geq \inf_{u \in  [0,B]} \left\{  \frac{N_{n,i^\star}}{n} \Kinf^{-}( F_{n,i^\star}, u)  + \left( w_{i}^{\beta} + \frac{\epsilon}{2} \right) \Kinf^{+}( F_{n,i}, u)  \right\}  \\
	 &\quad - \min_{j\neq i^\star}\inf_{u \in  [0,B]} \left\{  \frac{N_{n,i^\star}}{n} \Kinf^{-}( F_{n,i^\star}, u)  +  \frac{N_{n,j}}{n} \Kinf^{+}( F_{n,j}, u)  \right\} \\
	 &\geq \inf_{u \in  [0,B]} \left\{  \frac{N_{n,i^\star}}{n} \Kinf^{-}( F_{n,i^\star}, u)  + \left( w_{i}^{\beta} + \frac{\epsilon}{2} \right) \Kinf^{+}( F_{n,i}, u)  \right\}  \\
	 &\quad - \sup_{w \in \simplex: w_{i^\star} = \frac{N_{n,i^\star}}{n}} \min_{j\neq i^\star}\inf_{u \in  [0,B]} \left\{  w_{i^\star} \Kinf^{-}( F_{n,i^\star}, u)  +  w_j \Kinf^{+}( F_{n,j}, u)  \right\} \\
	 &\geq \inf_{\tilde \beta : \left|\tilde \beta - \beta \right| \leq \xi} G_{i}(\bm F_{n}, \tilde \beta)
\end{align*}
where
\begin{align*}
	G_{i}(\bm F, \tilde \beta) & = \inf_{u \in [0,B]} \left\{  \tilde \beta \Kinf^{-}( F_{i^\star}, u)  + \left( w_{i}^{\beta} + \frac{\epsilon}{2} \right) \Kinf^{+}( F_{i}, u)  \right\}  \\
	&\quad - \sup_{w \in \simplex: w_{i^\star} = \tilde \beta} \min_{j\neq i^\star} \inf_{u \in  \cI} \left\{  w_{i^\star}  \Kinf^{-}( F_{i^\star}, u)  + w_{j} \Kinf^{+}( F_{j}, u)  \right\} \: ,
\end{align*}
where we lower bounded by considering the best possible allocation such that $w_{i^\star} = \frac{N_{n,i^\star}}{n}$.

Using Lemma~\ref{lem:continuity_results_for_analysis}, the functions $(\bm F, \tilde \beta) \mapsto G_{i}(\bm F, \tilde \beta)$ and $\bm F \mapsto \inf_{\tilde \beta : \left|\tilde \beta - \beta \right| \leq \xi} G_{i}(\bm F, \tilde \beta)$ are continuous.
Therefore, there exists $N_{10} = Poly(W_2)$ and $\xi_0$ such that for $n \ge N_{7} \eqdef \{N_1, N_4, N_8, N_9, N_{10}\}$ and all $\xi \leq \xi_0$,
\begin{align*}
	\inf_{\tilde \beta : \left|\tilde \beta - \beta \right| \leq \xi} G_{i}(\bm F_{n}, \tilde \beta) \geq \frac{1}{2}  \inf_{\tilde \beta : \left|\tilde \beta - \beta \right| \leq \xi} G_{i}(\bm F, \tilde \beta) \geq  \frac{1}{4} G_{i}(\bm F, \beta) \: .
\end{align*}

At the $\beta$-equilibrium all transportation costs are equal (Lemma~\ref{lem:properties_characteristic_times}).
Therefore, by definition of $w^{\beta}$,
\begin{align*}
	&\sup_{w \in \simplex : w_{i^\star} = \beta} \min_{j \neq i^\star}  \inf_{u  \in [0,B]}\left\{w_{i^\star}\Kinf^-( F_{i^\star}, u) + w_j \Kinf^+(F_j, u) \right\} \\
	&\qquad= \min_{j \neq i^\star}  \inf_{u  \in [0,B]}\left\{\beta \Kinf^-(F_{i^\star}, u) + w^\beta_j \Kinf^+(F_j, u) \right\} \\
	&\qquad= \inf_{u  \in [0,B]}\left\{\beta \Kinf^-( F_{i^\star}, u) + w^\beta_i \Kinf^+(F_i, u) \right\} \\
	&\qquad< \inf_{u  \in [0,B]}\left\{\beta \Kinf^-( F_{i^\star}, u) + \left( w^\beta_i + \frac{\epsilon}{2} \right) \Kinf^+(F_i, u) \right\}
\end{align*}
where the strict inequality is obtained because the transportation costs are increasing in their allocation arguments (Lemma~\ref{lem:inf_Kinf_increasing_in_w}).
Therefore, we have $G_{i}(\bm F, \beta) > 0$. This yields that $W_{n}(i^\star,i) > \min_{j\neq i^\star} W_{n}(i^\star,j)$.
As $\bE_{\bm F} [e^{\lambda W_1}] < + \infty$ and $\bE_{\bm F} [e^{\lambda W_2}] < + \infty$ for all $\lambda > 0$, we have $\bE_{\bm F}[N_i] < + \infty$ for $i \in \{8,9,10\}$.
Since
\[
	\bE_{\bm F}[N_7] \le \sum_{i \in \{1,4,8,9,10\}}\bE_{\bm F}[N_i] < + \infty	\: ,
\]
this concludes the proof.
\end{proof}

\subsubsection{TCI challenger}
\label{app:sss_tci_challenger}

Conditioned on $\cF_n$ and given a leader $B_{n+1}$, the Transportation Cost Improved (TCI) challenger is defined as the arm with smallest penalized transportation cost compared to the leader
\begin{equation} \label{eq:def_tci_based_challenger}
	C_{n+1}^{\text{TCI}} \in \argmin_{j \neq   B_{n+1}} \left\{ W_{n}(B_{n+1}, j) + \ln(N_{n,j}) \right\} \quad \text{,} \quad  \widehat C_{n+1}^{\text{TCI}} \in \argmin_{j \neq \widehat  B_{n+1}} \left\{ W_{n}(\widehat  B_{n+1},j)+ \ln(N_{n,j}) \right\}  \: ,
\end{equation}
and
\[
\bP_{\mid n}[C_{n+1}^{\text{TCI}} = j| B_{n+1} = i] = \frac{\indi{j \in \argmin_{k \neq i} \left\{ W_{n}(i,k) + \ln(N_{n,k}) \right\}  }}{|\argmin_{k \neq i} \left\{ W_{n}(i,k) + \ln(N_{n,k}) \right\}|}
\]
The TCI challenger is inspired by IMED \cite{Honda15IMED}.
As we will see in Appendix~\ref{app:ss_supplementary_experiments}, it is more stable than the TC challenger.
In Appendix~\ref{app:ss_beyond_all_distinct_means}, we explain intuitively why.
The analysis of the TCI challenger is very close to the one of the TC challenger.

\paragraph{Property~\ref{prop:challenger_cdt_suff_explo}}
With similar arguments as in Lemma~\ref{lem:TC_ensures_suff_explo}, Lemma~\ref{lem:TCI_ensures_suff_explo} shows that the Property~\ref{prop:challenger_cdt_suff_explo} is satisfied by $C_{n+1}^{\text{TCI}}$.
\begin{lemma} \label{lem:TCI_ensures_suff_explo}
	Let $B_{n+1}$ be a leader satisfying Property~\ref{prop:leader_cdt_suff_explo}.
	Let $(C_{n+1}^{\text{TCI}}, \widehat C_{n+1}^{\text{TCI}})$ as in (\ref{eq:def_tci_based_challenger}).
	Let $U_n^L$ and $V_n^L$ as in (\ref{eq:def_undersampled_sets}) and $\mathcal J_n^\star = \argmax_{ i \in \overline{V_{n}^{L}}} \mu_{i}$.
	There exists $\tilde L_6$ with $\bE_{\bm F}[\tilde L_6] < + \infty$ such that if $L \ge \tilde L_6$, for all $n$ such that $U_n^L \neq \emptyset$, $\widehat B_{n+1} \notin V_{n}^{L}$ implies $\widehat C_{n+1}^{\text{TCI}} \in V_{n}^{L} \cup \left( \mathcal J_n^\star \setminus \left\{\widehat B_{n+1} \right\} \right)$.
\end{lemma}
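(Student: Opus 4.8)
The plan is to mimic the proof of Lemma~\ref{lem:TC_ensures_suff_explo} almost verbatim, the only new ingredient being control of the penalty term $\ln N_{n,j}$ appearing in the definition of $\widehat C_{n+1}^{\text{TCI}}$ in \eqref{eq:def_tci_based_challenger}. I would fix $L \ge \tilde L_6$ (to be specified) and $n$ with $U_n^L \neq \emptyset$ and $\widehat B_{n+1} \notin V_{n}^{L}$. Since $\sqrt{L} < L^{3/4}$ for $L>1$ we have $U_n^L \subseteq V_n^L$, so $\widehat B_{n+1}$ differs from every arm of $U_n^L$ and each $k \in U_n^L$ is an admissible challenger. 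Applying Property~\ref{prop:leader_cdt_suff_explo} with threshold $L^{3/4}$ (valid once $L \ge L_0^{4/3}$), and using $\widehat B_{n+1} \in \overline{V_n^L} = S_{n}^{L^{3/4}} \neq \emptyset$, one gets $\widehat B_{n+1} \in \mathcal J_n^\star = \argmax_{i \in \overline{V_n^L}} \mu_i$. If $\widehat C_{n+1}^{\text{TCI}} \in \mathcal J_n^\star \setminus \{\widehat B_{n+1}\}$ there is nothing left to prove, so I would assume not and show $\widehat C_{n+1}^{\text{TCI}} \notin \overline{V_n^L} \setminus \mathcal J_n^\star$.

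The two key estimates are the same as for TC, lifted to the penalized cost. By Lemma~\ref{lem:fast_rate_emp_tc} with threshold $L^{3/4}$ (valid once $L \ge L_4^{4/3}$), for every $j \in \overline{V_n^L}\setminus \mathcal J_n^\star$ we have $W_n(\widehat B_{n+1},j) \ge L^{3/4} D_{\bm F}$; since $N_{n,j}\ge 1$ the penalty is nonnegative, hence $W_n(\widehat B_{n+1},j)+\ln N_{n,j} \ge L^{3/4}D_{\bm F}$. On the other hand, by Lemma~\ref{lem:small_transportation_cost_undersampled_arms} with threshold $\sqrt{L}$ (valid once $L \ge L_5^2$), using $\mathcal J_n^\star \subseteq \overline{V_n^L} \subseteq \overline{U_n^L}$, for every $k \in U_n^L$ we have $W_n(\widehat B_{n+1},k) \le \sqrt{L}\, D_1$, and since $N_{n,k} < \sqrt{L}$ the penalty obeys $\ln N_{n,k} < \tfrac12 \ln L$, so $W_n(\widehat B_{n+1},k)+\ln N_{n,k} < \sqrt{L}\,D_1 + \tfrac12 \ln L$.

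Finally, because $L^{3/4}$ dominates both $\sqrt{L}$ and $\ln L$, there exists a deterministic $\tilde L_7$ with $L^{3/4} D_{\bm F} > \sqrt{L}\, D_1 + \tfrac12 \ln L$ for all $L \ge \tilde L_7$. Setting $\tilde L_6 \eqdef \max\{L_0^{4/3}, L_4^{4/3}, L_5^2, \tilde L_7\}$, for $L \ge \tilde L_6$ any $j \in \overline{V_n^L}\setminus\mathcal J_n^\star$ has strictly larger penalized transportation cost from $\widehat B_{n+1}$ than any (existing) $k \in U_n^L$, so the minimizer $\widehat C_{n+1}^{\text{TCI}} \in \argmin_{j\neq \widehat B_{n+1}}\{W_n(\widehat B_{n+1},j)+\ln N_{n,j}\}$ cannot lie in $\overline{V_n^L}\setminus\mathcal J_n^\star$, i.e.\ $\widehat C_{n+1}^{\text{TCI}} \in V_{n}^{L} \cup (\mathcal J_n^\star \setminus \{\widehat B_{n+1}\})$. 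Integrability follows from $\bE_{\bm F}[\tilde L_6] \le \tilde L_7 + \bE_{\bm F}[(L_0)^{4/3}] + \bE_{\bm F}[(L_4)^{4/3}] + \bE_{\bm F}[(L_5)^2] < +\infty$, using the corresponding statements in Lemmas~\ref{lem:fast_rate_emp_tc} and~\ref{lem:small_transportation_cost_undersampled_arms} and Property~\ref{prop:leader_cdt_suff_explo}.

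I do not expect a genuine obstacle here: the structure is identical to the TC case, and the only point requiring any care — and it is mild — is verifying that the logarithmic over-sampling penalty $\ln N_{n,j} \le \ln n$ cannot overturn the polynomial-rate separation $L^{3/4}D_{\bm F}$ vs.\ $\sqrt{L}\,D_1$ between the transportation cost on sampled-enough and under-sampled arms, which is exactly the inequality guaranteed by $\tilde L_7$.
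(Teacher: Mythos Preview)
Your proposal is correct and follows essentially the same approach as the paper's proof: both apply Lemma~\ref{lem:fast_rate_emp_tc} at threshold $L^{3/4}$ and Lemma~\ref{lem:small_transportation_cost_undersampled_arms} at threshold $\sqrt{L}$, then use a deterministic $\tilde L_7$ to separate the resulting rates on the penalized costs, with the same $\tilde L_6 = \max\{L_0^{4/3}, L_4^{4/3}, L_5^2, \tilde L_7\}$. The only cosmetic difference is that the paper bounds $\ln N_{n,j} \ge \tfrac34 \ln L$ for $j \in \overline{V_n^L}$ while you use the weaker $\ln N_{n,j} \ge 0$, which is equally sufficient.
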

\begin{proof}
	In the following, we consider $U_n^L \neq \emptyset$ (hence $V_n^L \neq \emptyset$) and $\widehat B_{n+1} \in V_n^L$.
	Let $L_0$ be defined as in Property~\ref{prop:leader_cdt_suff_explo}.
	Then, for $L \geq L_{0}^{4/3}$, we have $\widehat B_{n+1} \in \mathcal J_n^\star$.
	If $\widehat C_{n+1}^{\text{TCI}} \in \mathcal J_n^\star \setminus \left\{\widehat B_{n+1} \right\}$, we are done.
	Assume that $\widehat C_{n+1}^{\text{TCI}} \notin \mathcal J_n^\star \setminus \left\{\widehat B_{n+1} \right\}$.

	Let $(L_4, D_{\bm F})$ and $(L_5, D_1)$ as in Lemmas~\ref{lem:fast_rate_emp_tc} and \ref{lem:small_transportation_cost_undersampled_arms}.
		Then, for all $L \geq \max\{L_{0}^{4/3}, L_4^{4/3}, L_5^2\}$,
			\begin{align*}
				&\widehat B_{n+1} \in \mathcal J_n^\star \: , \\
				\forall (i,j) \in \mathcal J_n^\star \times \left(\overline{V_n^L} \setminus \mathcal J_n^\star\right), \quad &W_{n}(i, j) + \ln(N_{n,j}) \geq  L^{3/4} D_{\bm F}  + \frac{3}{4} \ln L  \: , \\
				\forall (i,j) \in \overline{U_n^L} \times U_n^L, \quad 	&W_{n}(i,j)  + \ln(N_{n,j})  \leq  \sqrt{L} D_1 + \frac{1}{2}\ln L  \: .
			\end{align*}
		There exists a deterministic $\tilde L_7$ such that for all $L \ge \tilde L_7$,
		\[
		L^{3/4} D_{\bm F} + \frac{3}{4}  >  \sqrt{L} D_1 + \frac{1}{2}\ln L \: .
		\]
		Since $\mathcal J_n^\star \subseteq \overline{V_n^L} \subseteq \overline{U_n^L}$, for all $L \geq \tilde L_6 \eqdef \max\{L_{0}^{4/3}, L_4^{4/3}, L_5^2, \tilde L_7\}$ we have
	\[
	\forall  (i,k,j) \in \mathcal J_n^\star \times U_n^L \times \left(\overline{V_n^L}  \setminus \mathcal J_n^\star \right), \quad  W_{n}(i, j)+ \ln(N_{n,j}) > W_{n}(i, k)+ \ln(N_{n,k}) \: .
	\]
	As $\widehat B_{n+1} \in \mathcal J_n^\star$ and $\widehat C_{n+1}^{\text{TCI}} \notin \mathcal J_n^\star \setminus \left\{\widehat B_{n+1} \right\}$, the definition $\widehat C_{n+1}^{\text{TCI}}  \in \argmax_{j \neq \widehat B_{n+1}} \left\{ W_{n}(\widehat B_{n+1}, j) + \ln(N_{n,j}) \right\}$ yields that $\widehat C_{n+1}^{\text{TCI}} \in V_{n}^{L}$.
	Otherwise the above strict inequality would wield a contradiction.
	Since
	\[
		\bE_{\bm F}[\tilde L_6] \le \tilde L_7 + \bE_{\bm F}[L_{0}^{4/3}] + \bE_{\bm F}[L_4^{4/3}] +  \bE_{\bm F}[L_5^2] < + \infty	\: ,
	\]
	this concludes the proof.
\end{proof}

\paragraph{Property~\ref{prop:challenger_cdt_convergence}}
With similar arguments as in Lemma~\ref{lem:TC_ensures_convergence}, Lemma~\ref{lem:TCI_ensures_convergence} shows that the Property~\ref{prop:challenger_cdt_convergence} is satisfied by $C_{n+1}^{\text{TCI}}$.
\begin{lemma} \label{lem:TCI_ensures_convergence}
		Assume Property~\ref{prop:suff_exploration} holds.
		Let $\epsilon > 0$.
		Let $B_{n+1}$ be a leader satisfying Property~\ref{prop:leader_cdt_convergence} and $C_{n+1}^{\text{TCI}}$ as in (\ref{eq:def_tci_based_challenger}).
		There exists $\tilde N_7$ with $\bE_{\bm F}[\tilde N_7] < + \infty$ such that for all $n \geq \tilde N_7$ and all $i \neq i^\star(\bm F)$,
		\begin{equation} \label{eq:overshooting_implies_not_sampled_anymore_tci}
			\frac{\Psi_{n,i}}{n} \geq w_{i}^{\beta} + \epsilon  \quad \implies \quad \bP_{\mid n}[C_{n+1}^{\text{TCI}} = i \mid B_{n+1} = i^\star(\bm F)] = 0 \: .
		\end{equation}
\end{lemma}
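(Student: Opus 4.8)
The plan is to follow the proof of Lemma~\ref{lem:TC_ensures_convergence} almost verbatim, the only genuinely new work being to show that the logarithmic penalty $\ln N_{n,j}$ of the TCI challenger is asymptotically dominated by the linear growth of the transportation costs. First I would invoke Property~\ref{prop:suff_exploration} together with the continuity of $F \mapsto m(F)$ and the concentration bound of Lemma~\ref{lem:subG_cdf} to obtain a random time $N = Poly(W_2)$ past which $\max_{i\in[K]}|\mu_{n,i}-\mu_i| \le \Delta/4$, so that $\argmax_i \mu_{n,i} = i^\star(\bm F)$; then Lemma~\ref{lem:convergence_towards_optimal_allocation_best_arm} gives $|N_{n,i^\star}/n - \beta| \le \xi$ for $n$ large. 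By the definition of $C_{n+1}^{\text{TCI}}$ in (\ref{eq:def_tci_based_challenger}) with uniform tie-breaking, $\bP_{\mid n}[C_{n+1}^{\text{TCI}} = i \mid B_{n+1} = i^\star] = 0$ is equivalent to the strict inequality $W_n(i^\star,i)+\ln N_{n,i} > \min_{j\neq i^\star}\{W_n(i^\star,j)+\ln N_{n,j}\}$, so it suffices to establish this inequality whenever $\Psi_{n,i}/n \ge w_i^\beta + \epsilon$.

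Next, as in the TC analysis, from $\Psi_{n,i}/n \ge w_i^\beta + \epsilon$ and Lemma~\ref{lem:subG_alloc} I would deduce $N_{n,i}/n \ge w_i^\beta + \epsilon/2$ for $n$ past a $Poly(W_1)$ time, and lower bound
\begin{equation*}
\frac{1}{n}\left(W_n(i^\star,i) - \min_{j\neq i^\star} W_n(i^\star,j)\right) \ \ge\ \inf_{\tilde\beta:\,|\tilde\beta-\beta|\le \xi} G_i(\bm F_n,\tilde\beta),
\end{equation*}
with $G_i$ the functional from the proof of Lemma~\ref{lem:TC_ensures_convergence}. The joint continuity results of Lemma~\ref{lem:continuity_results_for_analysis} make the right-hand side at least $\tfrac{1}{4}\,G_i(\bm F,\beta)$ for $\xi$ small and $n$ past a further $Poly(W_2)$ time, while $G_i(\bm F,\beta) > 0$ follows exactly as in the TC case from the equalizer property of $w^\beta$ (Lemma~\ref{lem:properties_characteristic_times}) together with strict monotonicity of the transportation cost in its allocation argument (Lemma~\ref{lem:inf_Kinf_increasing_in_w}).

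It remains to absorb the penalties: since $1 \le N_{n,j} \le n$ for every arm, $\ln N_{n,i} \ge 0$ and $\min_{j\neq i^\star}\{W_n(i^\star,j)+\ln N_{n,j}\} \le \min_{j\neq i^\star} W_n(i^\star,j) + \ln n$, whence
\begin{equation*}
W_n(i^\star,i)+\ln N_{n,i} - \min_{j\neq i^\star}\{W_n(i^\star,j)+\ln N_{n,j}\} \ \ge\ \frac{n}{4}\,G_i(\bm F,\beta) - \ln n,
\end{equation*}
which is strictly positive once $n$ exceeds a deterministic threshold depending only on $G_i(\bm F,\beta)$ and $K$ (there are finitely many arms $i\neq i^\star$). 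Setting $\tilde N_7$ equal to the maximum of all the above thresholds, each deterministic or of the form $Poly(W_1,W_2)$, yields $\bE_{\bm F}[\tilde N_7] < +\infty$ by the exponential integrability of $W_1,W_2$, and $h(n)\equiv 0$ then fulfils Property~\ref{prop:challenger_cdt_convergence}.

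I expect the only delicate point to be the verification that the logarithmic penalty is negligible, i.e.\ that the gap between $W_n(i^\star,i)$ and $\min_{j\neq i^\star}W_n(i^\star,j)$ grows linearly in $n$ with a problem-dependent slope bounded away from zero; this hinges on the strict inequality $G_i(\bm F,\beta)>0$, hence on the convexity and monotonicity properties of $\Kinf^{\pm}$ and the characterization of the $\beta$-optimal allocation already exploited for the TC challenger. Everything else is routine bookkeeping of the various random thresholds.
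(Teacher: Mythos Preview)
Your proposal is correct and follows essentially the same approach as the paper. The only cosmetic difference is in how the logarithmic penalties are bounded: the paper uses $N_{n,i}\ge\sqrt{n/K}$ (from Property~\ref{prop:suff_exploration}) to get $\ln N_{n,i}\ge\tfrac{1}{2}\ln(n/K)$ and hence the sufficient condition $\tfrac{1}{n}\big(W_n(i^\star,i)-\min_{j\neq i^\star}W_n(i^\star,j)\big)>\tfrac{\ln(nK)}{2n}$, whereas you use the cruder $\ln N_{n,i}\ge 0$ and obtain the slightly weaker sufficient condition involving $\ln n$; both are $o(1)$ after dividing by $n$, so both are absorbed by the linear gap $\tfrac14 G_i(\bm F,\beta)>0$ in exactly the same way.
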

\begin{proof}
	Let $\epsilon > 0$ and $i^\star = i^\star(\bm F)$.
	Using the definition of $C_{n+1}^{\text{TCI}}$ in (\ref{eq:def_tci_based_challenger}), we have
	\begin{align*}
		&\bP_{\mid n}[C_{n+1}^{\text{TCI}} = i \mid B_{n+1} = i^\star] = 0 \\
		\iff & \:  i \notin \argmin_{k \neq i^\star} \left\{ W_{n}(i^\star,k) + \ln (N_{n,k})\right\} \\
		\iff & \frac{1}{n}\left(W_{n}(i^\star,i) + \ln (N_{n,i}) - \min_{j\neq i^\star} \left\{ W_{n}(i^\star,j) + \ln (N_{n,j})\right\} \right) > 0 \\
		\impliedby & \frac{1}{n}\left(W_{n}(i^\star,i) - \min_{j\neq i^\star} W_{n}(i^\star,j) \right) > \frac{\ln(nK)}{2n} \: ,
	\end{align*}
	where we used that $N_{n,i} \ge \sqrt{\frac{n}{K}}$ and $N_{n,j} \leq n$.

	Let $N_7$ as in Lemma~\ref{lem:TCI_ensures_convergence}. In the proof of Lemma~\ref{lem:TCI_ensures_convergence}, we showed that there exists $C_{\bm F} > 0$ such that for all $n \geq N_7$ and all $i \neq i^\star$,
	\[
	\frac{\Psi_{n,i}}{n} \geq w_{i}^{\beta} + \epsilon  \quad \implies \quad \frac{1}{n}\left(W_{n}(i^\star,i) - \min_{j\neq i^\star} W_{n}(i^\star,j) \right) \ge C_{\bm F} \: .
	\]
	Since $\frac{\ln(nK)}{2n} \to_{\infty} 0$, there exists a deterministic $N_8$ such that for all $n \ge \tilde N_8$,
	\[
	\frac{\ln(nK)}{2n} < C_{\bm F} \: .
	\]
	Therefore, for all $n \geq \tilde N_7 \eqdef \max \{ N_8, N_7\}$ and all $i \neq i^\star$,
	\[
	\frac{\Psi_{n,i}}{n} \geq w_{i}^{\beta} + \epsilon  \quad \implies \quad \bP_{\mid n}[C_{n+1}^{\text{TCI}} = i \mid B_{n+1} = i^\star] = 0 \: .
	\]
	Since $\bE_{\bm F}[\tilde N_7] = N_8 + \bE_{\bm F}[ N_7] < + \infty$, this concludes the proof.
\end{proof}

\subsection{Randomized mechanisms}
\label{app:ss_randomized_mechanisms}

Conditioned on the history $\cF_n$, randomized mechanisms for the leader and the challenger depend on a sampler $\Pi_n$.
In addition, ties will be broken uniformly at random. In Appendix~\ref{app:sss_how_to_sample}, we introduce the general properties that a \textit{good} sampler should verify. Depending on whether we are using the sampler for the leader or for the challenger different properties are necessary.
In Appendix~\ref{app:sss_ts_leader}, we define the TS leader and shows that it satisfies Properties~\ref{prop:leader_cdt_suff_explo} and~\ref{prop:leader_cdt_convergence}.
In Appendix~\ref{app:sss_rs_challenger}, we define the RS challenger and proves that Properties~\ref{prop:challenger_cdt_suff_explo} and~\ref{prop:challenger_cdt_convergence} hold.

\subsubsection{How to sample}
\label{app:sss_how_to_sample}

As both TS leader and RS challenger rely on a sampler $\Pi_n$, it is crucial for this sampler to be tailored to the considered set of distributions $\cF^{K}$. For bounded distributions, the sampler $\Pi_n$ under scrutiny is the Dirichlet sampler introduced in Section~\ref{sec:dirichlet_sampler}, which produces for each arm a random re-weighting of the current history of rewards, augmented with $\{0,B\}$. Yet, aiming for a unified analysis of Top Two algorithms relying on a sampler, we put forward some general properties that the sampler should satisfy. Those properties are only expressed in terms of the Boundary Crossing Probability (BCP) associated to each arm, i.e. the probabilities $\bP_{n}[\theta_i \geq u]$ and $\bP_{n}[\theta_i \leq u]$ where $u$ is a fixed threshold.
For all measurable sets $A_{\theta}$, we denote by $\bP_{n}[A_{\theta}] \eqdef \bP_{\theta \sim \Pi_n }[A_{\theta} \mid \cF_n]$.


\paragraph{From $\bm{a_{n+1,i}}$ to BCP}
Let $a_{n,i} \eqdef \bP_{n-1}[i \in \argmax_{j \in [K]} \theta_j]$.
From the definition of the TS leader in (\ref{eq:def_ts_based_leader}) and the RS challenger in (\ref{eq:def_rs_based_challenger}), it becomes apparent that we should control the quantity $a_{n,i}$. We will need both upper and lower bounds on $a_{n+1,i}$.
To derive those, we can write
\begin{align*}
	\forall j \neq i , \quad a_{n+1,i} &\leq \bP_n[\theta_i \geq \theta_j] \: , \\
	a_{n+1,i} &\leq 1 - \max_{j\neq i}\bP_n[\theta_j \geq \theta_i] \: , \\
		\forall u \in (0,B), \quad a_{n+1,i} &\geq
		\bP_{n} \left[ \theta_{i} \geq u \right] \prod_{j \neq i } \bP_{n} \left[ \theta_{j} \leq u \right]  \: .
\end{align*}
The lower bound is already expressed using BCPs, however the upper bound requires to control $\bP_{n}[\theta_i \geq \theta_j]$. In Lemma~\ref{lem:from_bcp_one_to_bcp_two} stated in Appendix~\ref{app:boundary_crossing_probability_bounds}, we provide upper bounds on this probability featuring only BCPs (for some well-chosen threshold).

As we will see, a sampler $\Pi_n$ is tailored to the considered set of distributions when the upper and lower bounds on the BCP involve $\Kinf^{\pm}$.
Those bounds will be referred to as \textit{tight}, the ones without $\Kinf^{\pm}$ will be referred as \textit{coarse}.
To show that the TS leader satisfies Properties~\ref{prop:leader_cdt_suff_explo} and~\ref{prop:leader_cdt_convergence}, we need a tight upper bound on the BCP. Proving Property~\ref{prop:challenger_cdt_suff_explo} for the RS challenger requires a tight upper bound and a coarse lower bound on the BCP.
However, the proof of Property~\ref{prop:challenger_cdt_convergence} for the RS challenger relies on a tight upper \emph{and} lower bound on the BCP.

In Appendix~\ref{app:boundary_crossing_probability_bounds}, we prove the corresponding bounds on the BCP for the Dirichlet sampler. These bounds use some ingredients from BPCs bounds obtained for variants of Non-Parametric Thompson Sampling \cite{RiouHonda20,baudry21a} in the regret minimization literature. Our new Lemma~\ref{lem:from_bcp_one_to_bcp_two} is instrumental to bring those to the best arm identification literature.

\paragraph{Coarse lower bound on $\bm{a_{n+1,i}}$}
Recall that $ F_{n,i}$ is the empirical cdf of arms $i$.
To ensure that the sampling stops, the sampler $\Pi_n$ should rely on modified cdfs instead of simply using $\bm F_n$.
Those probability measures are denoted by $\tilde F_{n,i}$ and their means by $\tilde \mu_{n,i}$.
For single-parameter exponential family, this modification corresponds to the posterior update based on the prior.
For bounded distributions (and Bernoulli), this modification amounts to adding $\{0,B\}$ to the support.
This ensures that $\tilde \mu_{n,i} \in (0,B)$, hence it is not a Dirac in $0$ or $B$.
Alternatively, we can view this step as mixing $F_{n,i}$ with the distribution $G = \frac{1}{2}\left( \delta_{0} + \delta_{B}\right)$ which is in the interior of the domain, i.e.
\begin{equation} \label{eq:def_transformed_cdf_for_bounded_distributions}
		\tilde F_{n,i} = \left(1 - \frac{2}{n+2} \right) F_{n,i}  + \frac{1}{n+2} \left( \delta_{0} + \delta_{B}\right) \: .
\end{equation}
The necessity of adding $B$ to the support was already known \citep{Baudry21DS} to obtain a lower bound on $\bP_{n}[\theta_i \geq u]$.
Since we also need to control $\bP_{n}[\theta_i \leq u]$, we should add $0$ in the support (by symmetry).

In all generality, the considered modified cdfs should be close to the empirical cdf, i.e.
\begin{equation} \label{eq:def_closeness_modified_cdfs_true_cdfs}
	\forall n \in \N , \quad \max_{i \in [K]} \left\| \tilde F_{n,i} -  F_{n,i} \right\|_{\infty} \le d_0(n) \; ,
\end{equation}
where the function $d_0: \N^* \to \N^*$ satisfies $d_0(n) =_{+\infty} o(n^{-\alpha})$ with $\alpha > 0$.
For bounded distributions (and Bernoulli), $ \bm{\tilde{F}}_{n}$ defined in (\ref{eq:def_transformed_cdf_for_bounded_distributions}) verifies Property~(\ref{eq:def_closeness_modified_cdfs_true_cdfs}) for $d_{0}(n) = \frac{3}{n+2}$.

Property~\ref{prop:one_arm_bcp_coarse_lower_bound} states that the sampler $\Pi_n$ based on $ \bm{\tilde{F}}_{n}$ yields an exponential lower bound on the BCP.
It is coarse as it doesn't involve $\Kinf^{\pm}$.
\begin{property} \label{prop:one_arm_bcp_coarse_lower_bound}
	There exists functions $\kappa^{+}, \kappa^{-}: (0,B) \to \R_{+}^{*}$ such that for all $u \in (0,B)$, all $n \ge 1$ and all $i \in [K]$
\begin{align*}
	\bP_n [\theta_i \ge u ] \ge e^{-c_0(N_{n,i}) \kappa^+(u) } \qquad \text{and} \qquad \bP_n [\theta_i \le u] \ge e^{-c_0(N_{n,i}) \kappa^-(u) }  \: .
\end{align*}
The function $c_0: \N^* \to \N^*$ is increasing with $c_0(x) \sim_{+\infty} x$. Moreover, $\kappa^-(B) = \kappa^+(0) = 0$ and $\lim_{u \to B} \kappa^+(u) = \lim_{u \to 0} \kappa^-(u) = + \infty$.
\end{property}

Property~\ref{prop:one_arm_bcp_coarse_lower_bound} plays a role in the proof of sufficient exploration for the RS challenger. For bounded distributions (and Bernoulli), Lemma~\ref{lem:coarse_lower_bound_BCP} in Appendix~\ref{app:boundary_crossing_probability_bounds} shows that Property~\ref{prop:one_arm_bcp_coarse_lower_bound} holds with
\begin{align*}
	\kappa^{-}(u) = - \ln \left(\frac{u}{B}\right) \quad \text{,} \quad
	\kappa^{+}(u) = - \ln \left( 1 - \frac{u}{B}\right)  \quad \text{,} \quad
	c_{0}(n) = n+1 \: .
\end{align*}

\paragraph{Tight upper bound on $\bm{a_{n+1,i}}$}
Property~\ref{prop:one_arm_bcp_upper_bound} states that the sampler $\Pi_n$ based on $ \bm{\tilde{F}}_{n}$ yields an exponential upper bound on the BCP.
Importantly, this upper bound is tailored to the considered family of distributions $\cF$ as it involves the $\Kinf^{\pm}$ for the set of distributions $\cF$.
\begin{property} \label{prop:one_arm_bcp_upper_bound}
	For all $u \in (0,B)$, all $n \ge 1$ and all $i \in [K]$,
\begin{align*}
	\bP_n [\theta_i \ge u] \le e^{-c_1(N_{n,i}) \Kinf^+(\tilde F_{n,i},u)} \qquad \text{and} \qquad \bP_n [\theta_i \le u] \le e^{-c_1(N_{n,i}) \Kinf^-(\tilde F_{n,i},u)}  \: ,
\end{align*}
where $c_1: \N^* \to \N^*$ is an increasing function such that $c_1(x) \sim_{+\infty} x$.
\end{property}

Property~\ref{prop:one_arm_bcp_upper_bound} plays an important role for the TS leader and the RS challenger, both in the proof of sufficient exploration and convergence towards the $\beta$-optimal allocation.
For bounded distributions (and Bernoulli), Theorem~\ref{thm:upper_bound_one_arm_bcp_bounded} in Appendix~\ref{app:boundary_crossing_probability_bounds} shows that Property~\ref{prop:one_arm_bcp_upper_bound} holds with $c_{1}(n) = n+2$.

Lemma~\ref{lem:two_arm_BCP_tight_upper_bound} is a direct corollary of Property~\ref{prop:one_arm_bcp_upper_bound} by using Lemma~\ref{lem:from_bcp_one_to_bcp_two}.
For bounded distributions, it is exactly Corollary~\ref{cor:upper_bound_two_arms_bcp_bounded}.
\begin{lemma} \label{lem:two_arm_BCP_tight_upper_bound}
	Let $\Pi_{n}$ satisfying Property~\ref{prop:one_arm_bcp_upper_bound}.
	Then, for all $n \ge 1$ and all $(i,j) \in [K]^2$
	\begin{align*}
		\bP_n [\theta_j \geq \theta_i]
		&\leq f\left( \inf_{u \in [0,B]} \left\{ c_1(N_{n,i}) \Kinf^-(\tilde F_{n,i},u)  + c_1(N_{n,j}) \Kinf^+(\tilde F_{n,j},u) \right\} \right) \: ,
	\end{align*}
	where $f : x \mapsto (1+x)e^{-x}$ is decreasing on $\R_{+}$ with values in $(0,1]$.
\end{lemma}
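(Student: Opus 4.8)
The plan is to deduce Lemma~\ref{lem:two_arm_BCP_tight_upper_bound} from the one-arm bounds of Property~\ref{prop:one_arm_bcp_upper_bound} together with the fact that the samplers under consideration have product form, so that $\theta_i$ and $\theta_j$ are independent conditionally on $\cF_n$. The core is an abstract comparison lemma (this is what Lemma~\ref{lem:from_bcp_one_to_bcp_two} provides): if $X,Y$ are independent real random variables and $h_X,h_Y:[0,B]\to\R_+$ are such that $h_X$ is non-increasing and continuous, $h_Y$ is non-decreasing and continuous, and $\bP[X\le u]\le e^{-h_X(u)}$, $\bP[Y\ge u]\le e^{-h_Y(u)}$ for all $u\in(0,B)$, then $\bP[Y\ge X]\le f\!\left(\inf_{u\in[0,B]}\{h_X(u)+h_Y(u)\}\right)$ with $f(x)=(1+x)e^{-x}$. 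Applying this with $X=\theta_i$, $Y=\theta_j$, $h_X(u)=c_1(N_{n,i})\Kinf^-(\tilde F_{n,i},u)$ and $h_Y(u)=c_1(N_{n,j})\Kinf^+(\tilde F_{n,j},u)$ yields the statement: the hypotheses on $h_X,h_Y$ are exactly the (standard) monotonicity and continuity of $u\mapsto\Kinf^-(\tilde F_{n,i},u)$ (non-increasing, since enlarging $u$ relaxes the constraint $\bE_{X\sim G}[X]<u$) and $u\mapsto\Kinf^+(\tilde F_{n,j},u)$ (non-decreasing), recalled in Appendix~\ref{app:kinf_for_bounded_distributions}, and the two tail bounds are the two halves of Property~\ref{prop:one_arm_bcp_upper_bound}; the asserted monotonicity and range of $f$ follow from $f(0)=1$, $f'(x)=-xe^{-x}\le 0$ and $\lim_{x\to\infty}f(x)=0$. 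The case $i=j$ is trivial: the infimum is $0$, attained at $u=\tilde\mu_{n,i}$, and $f(0)=1$.

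For the abstract lemma, set $x\eqdef\inf_{u\in[0,B]}\{h_X(u)+h_Y(u)\}$ and define $A\eqdef h_X(X)\ge 0$ and $B'\eqdef h_Y(Y)\ge 0$, which are independent. The crux is a deterministic inclusion: on $\{Y\ge X\}$, evaluating the objective at $u=X$ and using that $h_Y$ is non-decreasing gives $x\le h_X(X)+h_Y(X)\le h_X(X)+h_Y(Y)=A+B'$, so $\{Y\ge X\}\subseteq\{A+B'\ge x\}$ and hence $\bP[Y\ge X]\le\bP[A+B'\ge x]$. Next I would show the marginal tails $\bP[A\ge s]\le e^{-s}$ and $\bP[B'\ge s]\le e^{-s}$ for every $s\ge 0$: by continuity and monotonicity of $h_X$, $\{A\ge s\}=\{X\le v_s\}$ where $v_s\eqdef\sup\{u\in[0,B]:h_X(u)\ge s\}$ satisfies $h_X(v_s)\ge s$ whenever this set is non-empty (and the event is empty, hence of probability $0$, otherwise), so $\bP[A\ge s]=\bP[X\le v_s]\le e^{-h_X(v_s)}\le e^{-s}$; the bound for $B'$ is symmetric.

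It remains to bound $\bP[A+B'\ge x]$ using only independence of $A,B'$ and the tails above. Conditioning on $A$ and using that $\bP[B'\ge x-A\mid A]\le\min(1,e^{A-x})$ (the bound being $1$ when $A\ge x$), I get $\bP[A+B'\ge x]\le\bE\!\left[\min(1,e^{A-x})\right]=e^{-x}\,\bE\!\left[\min(e^x,e^A)\right]=e^{-x}\int_0^{e^x}\bP[e^A>t]\,dt\le e^{-x}\left(1+\int_0^{x}\bP[A>s]\,e^{s}\,ds\right)\le e^{-x}(1+x)=f(x)$, where the last steps split the integral at $t=1$ and substitute $s=\log t$. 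Equivalently, $A$ and $B'$ are each stochastically dominated by an $\mathrm{Exp}(1)$ variable and independent, so $A+B'\preceq_{\mathrm{st}}\mathrm{Gamma}(2,1)$ and $\bP[A+B'\ge x]\le\int_x^\infty s e^{-s}\,ds=(1+x)e^{-x}$. The main obstacle is precisely obtaining the sharp prefactor $1+x$: a union bound over a single threshold $u$ only gives $\min_u\{e^{-h_X(u)}+e^{-h_Y(u)}\}\le 2e^{-x/2}$, and splitting the estimate above into the two terms $\bP[A\ge x]$ and $\bE[e^{A-x}\indi{A<x}]$ separately loses a constant ($2+x$ instead of $1+x$); one must keep $\bE[\min(1,e^{A-x})]$ unsplit (equivalently, use the $\mathrm{Gamma}(2,1)$ comparison) to reach the form stated in the lemma.
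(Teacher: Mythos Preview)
Your argument is correct and gives the bound, but it is not the same route as the paper's, and your parenthetical ``this is what Lemma~\ref{lem:from_bcp_one_to_bcp_two} provides'' slightly misrepresents that lemma. The paper's Lemma~\ref{lem:from_bcp_one_to_bcp_two} is stated purely in terms of the raw tail probabilities: for independent $\theta_1,\theta_2$ and $x\in\argmax_u \bP(\theta_2\ge u)\bP(\theta_1\le u)$ one has $\bP(\theta_2\ge\theta_1)\le g\big(\bP(\theta_2\ge x)\bP(\theta_1\le x)\big)$ with $g(p)=p(1-\log p)$; the proof is by splitting the double integral $\int\!\!\int \indi{u\ge v}\,dF_1(v)\,dF_2(u)$ at the threshold $x$. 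The paper then plugs in Property~\ref{prop:one_arm_bcp_upper_bound} at that single point $u_{i,j}=x$, uses that $g$ is increasing on $(0,1]$ and $g(e^{-y})=f(y)$, and finally uses that $f$ is decreasing to replace $y$ by the infimum over $u$.

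Your route is different and arguably more transparent: you push the exponent functions inside via $A=h_X(\theta_i)$, $B'=h_Y(\theta_j)$, obtain the deterministic inclusion $\{\theta_j\ge\theta_i\}\subseteq\{A+B'\ge x\}$ from monotonicity of $h_Y$, show from Property~\ref{prop:one_arm_bcp_upper_bound} and monotonicity/continuity of $\Kinf^\pm$ that $A$ and $B'$ are each stochastically dominated by $\mathrm{Exp}(1)$, and conclude via the $\mathrm{Gamma}(2,1)$ tail. This lands directly on $f(\inf_u\{\cdots\})$ without passing through a specific threshold $u_{i,j}$, and it makes the shape $(1+x)e^{-x}$ appear naturally as a gamma survival function. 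The price is that your abstract lemma needs the monotonicity and continuity of $h_X,h_Y$ (true here for $\Kinf^\pm$, though continuity at the endpoints $\{0,B\}$ deserves a word, and you implicitly use $\theta_i\in[0,B]$ a.s.\ when evaluating at $u=X$). The paper's Lemma~\ref{lem:from_bcp_one_to_bcp_two} requires no such hypotheses and simultaneously yields the matching lower bound used elsewhere in the analysis.
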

\begin{proof}
	Using Property~\ref{prop:one_arm_bcp_upper_bound} and Lemma~\ref{lem:from_bcp_one_to_bcp_two}, we obtain for all $n \ge 1$ and all $(i,j) \in [K]^2$
	\begin{align*}
		\bP_n [\theta_j \geq \theta_i] &\leq f\left( c_1(N_{n,i}) \Kinf^-(\tilde F_{n,i},u_{i,j})  + c_1(N_{n,j}) \Kinf^+(\tilde F_{n,j},u_{i,j}) \right)\\
		&\leq f\left( \inf_{u \in [0,B]} \left\{ c_1(N_{n,i}) \Kinf^-(\tilde F_{n,i},u)  + c_1(N_{n,j}) \Kinf^+(\tilde F_{n,j},u) \right\} \right) \: ,
	\end{align*}
	where $u_{i,j} = \argmax_{c \in [0,B]}\bP_{n}[\theta_j \geq c]\bP_{n}[\theta_i \leq c]$. When $\tilde \mu_{n,i} \leq \tilde \mu_{n,j}$, this result is non informative as $f(0) = 1$.
\end{proof}

While the proof of Property~\ref{prop:one_arm_bcp_coarse_lower_bound} heavily relies on the transformed cdfs, Property~\ref{prop:one_arm_bcp_upper_bound} also holds for the empirical cdfs $\bm F_{n}$.
There is no need to add $\{0,B\}$ in the support for this property.
We aim at presenting a unified sampler $\Pi_n$, which could be used both for the TS leader and the RS challenger.
Therefore, we present all the results with the modified cdfs $\bm{\tilde{F}}_{n}$ instead of differentiating between a sampler $\Pi_n$ for the TS leader and a sampler $\tilde \Pi_{n}$ for the RS challenger.

\paragraph{Tight lower bound on $\bm{a_{n+1,i}}$}
Property~\ref{prop:tight_time_uniform_lower_bound_2arm_BCP} states that the sampler $\Pi_n$ based on $ \bm{\tilde{F}}_{n}$ yields an exponential lower bound on $\bP_{n}[\theta_i \geq \theta_{i^\star}]$.
Importantly, this lower bound is tailored to the considered family of distributions $\cF$ as it involves the $\Kinf^{\pm}$ for the set of distributions $\cF$.
\begin{property} \label{prop:tight_time_uniform_lower_bound_2arm_BCP}
	Let $\varepsilon>0$ and $i^\star = i^\star(\bm F)$. There exists $N_8$, with $\bE_{\bm F}[(N_8)^{\alpha}] < + \infty$ for all $\alpha > 0$, such that for all $n$ with $\min_{i\in [K]} N_{n,i} \ge N_8$ and all $i \neq i^\star$,
	\[
	\bP_{n}\left[\theta_i \geq \theta_{i^\star} \right] \ge \frac{e^{-\varepsilon (N_{n,i^\star} + N_{n,i})}}{h_{\epsilon}(N_{n,i^\star}, N_{n,i})} \exp\left(- \inf_{x\in [0,B]}\left\{N_{n,i^\star}\Kinf^-(F_{i^\star}, x) + N_{n,i} \Kinf^+(F_{i}, x) \right\} \right) \: ,
	\]
	where $h_{\epsilon} : (\N^\star)^2 \to (0, +\infty)$ is an increasing function of both arguments, such that $h_{\epsilon}(n, m) =_{+\infty} o\left( e^{(n+m)^{\alpha}} \right)$ where $\alpha < 1$.
\end{property}
For bounded distributions (and Bernoulli), Property~\ref{prop:tight_time_uniform_lower_bound_2arm_BCP} is a direct corollary of Theorem~\ref{thm:two_arms_BCP_tight_lower_bound} given in Appendix~\ref{app:boundary_crossing_probability_bounds}.
Let $\epsilon > 0$ and $\eta > 0$ as in Theorem~\ref{thm:two_arms_BCP_tight_lower_bound}.
Using Lemma~\ref{lem:subG_cdf}, there exists $N_8 = Poly(W_2)$ such that for all $n$ such that $\min_{i\in [K]} N_{n,i} \ge N_8$, we have $\max_{i \in [K]} \|F_{n,i} - F_{i}\|_{\infty} \le \eta$.
As $\bE_{\bm F} [e^{\lambda W_2}] < + \infty$ for all $\lambda > 0$, we have $\bE_{\bm F}[(N_8)^{\alpha}] < + \infty$ for all $\alpha > 0$ since $(N_8)^{\alpha} = Poly(W_2)$.
Therefore, Property~\ref{prop:tight_time_uniform_lower_bound_2arm_BCP} holds for bounded distributions with
\[
h_{\epsilon}(n, m) = (n m)^{\frac{M_{\epsilon}+1}{2}}C_{\epsilon} \quad \text{and} \quad C_{\epsilon} =  \frac{4 (8 \pi)^{M_{\epsilon}-1}}{M_{\epsilon}^{M_{\epsilon}}} \: ,
\]
where $h_{\epsilon}$ satisfies the conditions from Property~\ref{prop:tight_time_uniform_lower_bound_2arm_BCP}.

Using Lemma~\ref{lem:from_bcp_one_to_bcp_two}, Theorem~\ref{thm:two_arms_BCP_tight_lower_bound} was shown thanks to tight lower bound on the BCP for bounded distributions (Lemma~\ref{lem:bcp_lower_bound_discretized_one_arm}).
Given a family of distribution for which a tight lower bound on the BCP exists, similar manipulations would yield a tight lower bound on $\bP_{n}[\theta_i \geq \theta_j]$, hence showing Property~\ref{prop:tight_time_uniform_lower_bound_2arm_BCP}. The proof of Theorem~\ref{thm:two_arms_BCP_tight_lower_bound} heavily relies on the modified empirical cdf featured in the tight BCP lower bound.
Indeed, for bounded distributions Lemma~\ref{lem:bcp_lower_bound_discretized_one_arm} follows from a discretization of the empirical cdf, which allows to use results on multinomial distributions. Since the technicalities depend on the considered distribution, we don't provide a general proof of Property~\ref{prop:tight_time_uniform_lower_bound_2arm_BCP} given a tight lower bound on the BCP.

\paragraph{Rates for $\bm{a_{n+1,i}}$}
Analyzing randomized mechanisms heavily relies on properties of $a_{n+1,i}$.
Lemma~\ref{lem:fast_rate_posterior} shows that $a_{n+1,i}$ decreases exponentially with a linear rate for the arms not having highest means.
\begin{lemma} \label{lem:fast_rate_posterior}
		Let $\Pi_n$ satisfying Property~\ref{prop:one_arm_bcp_upper_bound}, and $c_1$ therein.
		Let $S_{n}^{L}$ and $\cI_n^\star$ as in (\ref{eq:def_sampled_enough_sets}).
		There exists $L_7$ with $\bE_{\bm F}[(L_7)^{\alpha}] < + \infty$ for all $\alpha > 0$ such that if $L \ge L_7$, for all $n$ such that $S_{n}^{L} \neq \emptyset$,
		\[
		\forall i \in S_{n}^{L} \setminus  \cI_n^\star, \quad	a_{n+1,i} \leq  f(c_{1}(L) D_{\bm F}) \: ,
		\]
		where $f(x) = (1+x)e^{-x}$ and $ D_{\bm F} > 0$ is the problem dependent constant from Lemma~\ref{lem:fast_rate_emp_tc}.
\end{lemma}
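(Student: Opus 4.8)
The plan is to bound $a_{n+1,i}$ by a two‑arm boundary crossing probability against a best sampled‑enough arm, and then to reuse the geometric positivity argument behind Lemma~\ref{lem:fast_rate_emp_tc}. Assume $S_{n}^{L}\neq\emptyset$ and $S_{n}^{L}\setminus\cI_n^\star\neq\emptyset$, since otherwise the claim is vacuous. Fix $i\in S_{n}^{L}\setminus\cI_n^\star$ and pick any $j\in\cI_n^\star$; by \eqref{eq:def_sampled_enough_sets} we then have $\mu_j>\mu_i$ and $\min\{N_{n,i},N_{n,j}\}\ge L$. Since $\{i\in\argmax_{k\in[K]}\theta_k\}\subseteq\{\theta_i\ge\theta_j\}$, we get $a_{n+1,i}\le\bP_n[\theta_i\ge\theta_j]$, and applying Lemma~\ref{lem:two_arm_BCP_tight_upper_bound} with the roles of $i$ and $j$ exchanged gives
\[
a_{n+1,i}\le f\!\left(\inf_{u\in[0,B]}\left\{c_1(N_{n,j})\Kinf^-(\tilde F_{n,j},u)+c_1(N_{n,i})\Kinf^+(\tilde F_{n,i},u)\right\}\right).
\]
By monotonicity of $c_1$ together with $\min\{N_{n,i},N_{n,j}\}\ge L$ and $\Kinf^\pm\ge 0$, the infimum inside is at least $c_1(L)\,\inf_{u\in[0,B]}\{\Kinf^-(\tilde F_{n,j},u)+\Kinf^+(\tilde F_{n,i},u)\}$, so it remains to lower bound this residual infimum by $D_{\bm F}$ and to invoke that $f$ is decreasing.

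Recall that the constant $D_{\bm F}$ from Lemma~\ref{lem:fast_rate_emp_tc} is, by Lemma~\ref{lem:positive_strict_geometric_cst}, the minimum over pairs of arms with $m(F_{i'})>m(F_{j'})$ of $\inf_{G_{i'},G_{j'}}\inf_{u\in[0,B]}\{\Kinf^-(G_{i'},u)+\Kinf^+(G_{j'},u)\}$, where the inner infimum ranges over distributions that are $\alpha_0$‑close to $F_{i'}$ and $F_{j'}$ in $\|\cdot\|_\infty$ for a suitable $\alpha_0>0$, and that it is strictly positive. Hence, if $\max_{k\in S_{n}^{L}}\|\tilde F_{n,k}-F_k\|_\infty\le\alpha_0$, then because $\mu_j>\mu_i$ the pair $(\tilde F_{n,j},\tilde F_{n,i})$ is admissible in this minimization and $\inf_{u\in[0,B]}\{\Kinf^-(\tilde F_{n,j},u)+\Kinf^+(\tilde F_{n,i},u)\}\ge D_{\bm F}$. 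Combined with the previous display and the monotonicity of $f$ this yields $a_{n+1,i}\le f(c_1(L)D_{\bm F})$.

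It remains to exhibit $L_7$ with the required moment control such that $L\ge L_7$ forces $\max_{k\in S_{n}^{L}}\|\tilde F_{n,k}-F_k\|_\infty\le\alpha_0$. Split $\|\tilde F_{n,k}-F_k\|_\infty\le\|\tilde F_{n,k}-F_{n,k}\|_\infty+\|F_{n,k}-F_k\|_\infty$. The first term is at most $d_0(n)$ by \eqref{eq:def_closeness_modified_cdfs_true_cdfs}, and since $d_0(n)\to 0$ and $S_{n}^{L}\neq\emptyset$ forces $n\ge L$, there is a deterministic $L'$ with $d_0(n)\le\alpha_0/2$ whenever $n\ge L\ge L'$. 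The second term is at most $W_2\sqrt{\log(e+N_{n,k})/(1+N_{n,k})}$ by Lemma~\ref{lem:subG_cdf}; since $x\mapsto\sqrt{\log(e+x)/(1+x)}$ is nonincreasing and $N_{n,k}\ge L$, there is $L''=Poly(W_2)$ such that this term is at most $\alpha_0/2$ whenever $L\ge L''$. Taking $L_7\eqdef\max\{L',L''\}=Poly(W_2)$ gives the closeness, and $\bE_{\bm F}[(L_7)^\alpha]<+\infty$ for every $\alpha>0$ because $(L_7)^\alpha=Poly(W_2)$ and $\bE[e^{\lambda W_2}]<+\infty$ for all $\lambda>0$ (Lemma~\ref{lem:subG_cdf}).

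The only delicate point is the strict positivity, uniform over $n$ and over admissible arm pairs, of $\inf_{u\in[0,B]}\{\Kinf^-(\tilde F_{n,j},u)+\Kinf^+(\tilde F_{n,i},u)\}$; this is exactly what the geometric lower bound underlying $D_{\bm F}$ (Lemma~\ref{lem:positive_strict_geometric_cst}) supplies, so the rest reduces to the concentration bookkeeping above. The same argument, with $\tilde F_{n,k}$ replaced by the posterior and $\|\cdot\|_\infty$‑closeness replaced by closeness of means, covers the single‑parameter exponential family case.
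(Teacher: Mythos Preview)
Your proof is correct and follows essentially the same route as the paper's: bound $a_{n+1,i}$ by $\bP_n[\theta_i\ge\theta_j]$ for some $j\in\cI_n^\star$, apply Lemma~\ref{lem:two_arm_BCP_tight_upper_bound}, pull out the factor $c_1(L)$ by monotonicity, and invoke the geometric constant $D_{\bm F}$ from Lemma~\ref{lem:positive_strict_geometric_cst} once the modified empirical cdfs are $\alpha_0$-close to the true ones via Lemma~\ref{lem:subG_cdf} and \eqref{eq:def_closeness_modified_cdfs_true_cdfs}. Your explicit splitting of $\|\tilde F_{n,k}-F_k\|_\infty$ into the $d_0$ part and the $W_2$ part, together with the observation that $S_n^L\neq\emptyset$ forces $n\ge L$, makes the concentration bookkeeping slightly more transparent than in the paper, but the argument is otherwise identical.
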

\begin{proof}
		Let $S_{n}^{L}$ and $\cI_n^\star$ as in (\ref{eq:def_sampled_enough_sets}).
		Assume that $S_{n}^{L} \neq \emptyset$.
		If $S_{n}^{L} \setminus  \cI_n^\star $ is empty, then the statement is not informative.
		Assume $S_{n}^{L} \setminus  \cI_n^\star $ is not empty.
		Let $(i, j) \in \cI_n^\star \times \left(S_{n}^{L} \setminus  \cI_n^\star \right)$.

		Since $\Pi_n$ satisfies Property~\ref{prop:one_arm_bcp_upper_bound}, using Lemma~\ref{lem:two_arm_BCP_tight_upper_bound} yields
	\begin{align*}
		a_{n+1,j} \le  \bP_n [\theta_j \geq \theta_{i}]
		&\leq f\left( \inf_{u \in [0,B]} \left\{ c_1(N_{n,i}) \Kinf^-(\tilde F_{n,i},u)  + c_1(N_{n,j}) \Kinf^+(\tilde F_{n,j},u) \right\} \right) \\
		&\leq f\left( c_1(L) \inf_{u \in [0,B]} \left\{ \Kinf^-(\tilde F_{n,i},u)  + \Kinf^+(\tilde F_{n,j},u) \right\} \right) \: ,
	\end{align*}
	where we used that $\{i,j\} \subset S_{n}^{L}$, $c_1$ increasing and $f$ decreasing.

	Using Lemma~\ref{lem:positive_strict_geometric_cst}, there exists $\alpha > 0$ such that
	\begin{equation*}
		D_{\bm F} = \min_{(i, j) : m(F_i) > m(F_j) } \inf_{\substack{G_i,G_j : \\ \forall k \in \{i,j\}, \|G_k - F_k\|_{\infty}\leq \alpha}} \inf_{u \in [0,B]} \left\{ \Kinf^{-}(G_i , u) + \Kinf^{+}(G_j , u) \right\} > 0 \: .
	\end{equation*}

	Using Lemma~\ref{lem:subG_cdf} and (\ref{eq:def_closeness_modified_cdfs_true_cdfs}), i.e. $\max_{i \in [K]} \left\| \tilde F_{n,i} -  F_{n,i} \right\|_{\infty} \le d_0(n)$ where $d_0(n) =_{+\infty} o(n^{-\alpha})$, there exists $L_7 = Poly(W_2) $ such that for all $L\geq L_7$ and all $i \in S_{n}^{L}$,
	$
		 \left\| \tilde F_{n,i} - F_i \right\|_{\infty} \leq \alpha \: .
	$

	Since $f$ is decreasing, further upper bounding yields directly that for all $L\geq L_7$ and all $j \in S_{n}^{L} \setminus  \cI_n^\star$, we have
	\[
	a_{n+1,j} \le f\left( c_1(L) D_{\bm F} \right) \: .
	\]
	As $\bE_{\bm F} [e^{\lambda W_2}] < + \infty$ for all $\lambda > 0$, we have $\bE_{\bm F}[(L_7)^{\alpha}] < + \infty$ for all $\alpha > 0$ since $(L_7)^{\alpha} = Poly(W_2)$.
	This concludes the proof.
\end{proof}

Lemma~\ref{lem:slow_posterior_rate_undersampled_arms} gives a lower bound on $a_{n,i}$ for under-sampled arms.
	\begin{lemma} \label{lem:slow_posterior_rate_undersampled_arms}
		Let $\Pi_n$ satisfying Properties~\ref{prop:one_arm_bcp_coarse_lower_bound} and~\ref{prop:one_arm_bcp_upper_bound}.
		Let $S_{n}^{L}$ as in (\ref{eq:def_sampled_enough_sets}).
		There exists $L_8$ with $\bE_{\bm F}[(L_8)^{\alpha}] <  + \infty$ for all $\alpha > 0$ such that for all $L \geq L_8$ and all $n \in \N$,
		\[
		\forall i \in \overline{S_{n}^{L}}, \quad a_{n+1, i} \geq \frac{e^{-D_0 c_0(L)}}{2^{K-1}}	\: ,
		\]
	where $c_0$ is defined in Property~\ref{prop:one_arm_bcp_coarse_lower_bound} and $D_0 > 0$ is a problem dependent constant.
	\end{lemma}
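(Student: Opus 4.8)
The plan is to start from the product‑form lower bound on $a_{n+1,i}$ recalled just before Property~\ref{prop:one_arm_bcp_coarse_lower_bound}, namely $a_{n+1,i}\ge \bP_n[\theta_i\ge u]\prod_{j\neq i}\bP_n[\theta_j\le u]$ for any $u\in(0,B)$, which holds because the sampler is of product form. The threshold will be taken to be a \emph{problem‑dependent constant}: fix $u^\star\in(\max_k\mu_k,B)$, for instance $u^\star=\tfrac12(\max_k\mu_k+B)$, and set $\gamma=\tfrac14(B-\max_k\mu_k)>0$, so that $\mu_k\le u^\star-2\gamma$ for all $k$. This choice is made so that (i) $\kappa^+(u^\star)$ and $\kappa^-(u^\star)$ from Property~\ref{prop:one_arm_bcp_coarse_lower_bound} are finite positive constants (since $u^\star$ is strictly interior), and (ii) by continuity and compactness properties of $\Kinf^+$ (Appendix~\ref{app:kinf_for_bounded_distributions}, in the spirit of the constant $D_{\bm F}$ of Lemma~\ref{lem:fast_rate_emp_tc}), the quantity $D_2\eqdef\inf\{\Kinf^+(G,u^\star)\mid G\in\cF,\ m(G)\le u^\star-\gamma\}$ is strictly positive.

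For the arm $i$ itself, $i\in\overline{S_n^L}$ means $N_{n,i}<L$, so by the coarse lower bound (Property~\ref{prop:one_arm_bcp_coarse_lower_bound}) and monotonicity of $c_0$ one gets $\bP_n[\theta_i\ge u^\star]\ge e^{-c_0(N_{n,i})\kappa^+(u^\star)}\ge e^{-c_0(L)\kappa^+(u^\star)}$. For $j\neq i$ we split according to $S_n^L$ of \eqref{eq:def_sampled_enough_sets}. If $j\in\overline{S_n^L}$, then $N_{n,j}<L$ and the $\le u$ version of Property~\ref{prop:one_arm_bcp_coarse_lower_bound} gives $\bP_n[\theta_j\le u^\star]\ge e^{-c_0(N_{n,j})\kappa^-(u^\star)}\ge e^{-c_0(L)\kappa^-(u^\star)}$. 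If $j\in S_n^L$, i.e.\ $N_{n,j}\ge L$, then for $L\ge L_8$ with $L_8=Poly(W_2)$ large enough, Lemma~\ref{lem:subG_cdf} together with the closeness \eqref{eq:def_closeness_modified_cdfs_true_cdfs} of $\tilde F_{n,j}$ to $F_{n,j}$ and the bound $|m(F)-m(G)|\le B\|F-G\|_\infty$ yield $|\tilde\mu_{n,j}-\mu_j|\le\gamma$, hence $\tilde\mu_{n,j}\le u^\star-\gamma$ and $\Kinf^+(\tilde F_{n,j},u^\star)\ge D_2$; then the tight upper bound (Property~\ref{prop:one_arm_bcp_upper_bound}) gives $\bP_n[\theta_j\ge u^\star]\le e^{-c_1(N_{n,j})D_2}\le e^{-c_1(L)D_2}$, which is $\le\tfrac12$ once $L$ exceeds a further deterministic constant (using $c_1(x)\to\infty$), so $\bP_n[\theta_j\le u^\star]\ge 1-\bP_n[\theta_j\ge u^\star]\ge\tfrac12$.

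Multiplying these bounds over the at most $K-1$ arms $j\neq i$ (at most $K-1$ factors of $\tfrac12$, at most $K-1$ factors $e^{-c_0(L)\kappa^-(u^\star)}$) gives $a_{n+1,i}\ge e^{-c_0(L)\kappa^+(u^\star)}\cdot 2^{-(K-1)}\cdot e^{-(K-1)c_0(L)\kappa^-(u^\star)}=2^{-(K-1)}e^{-D_0c_0(L)}$ with $D_0=\kappa^+(u^\star)+(K-1)\kappa^-(u^\star)>0$. Taking $L_8$ to be the maximum of the deterministic constant above and the $Poly(W_2)$ quantity, $\bE_{\bm F}[(L_8)^\alpha]<+\infty$ for all $\alpha>0$ because all polynomial moments of $W_2$ are finite. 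The main obstacle — and the reason both Properties~\ref{prop:one_arm_bcp_coarse_lower_bound} and~\ref{prop:one_arm_bcp_upper_bound} are needed — is controlling the factors $\bP_n[\theta_j\le u^\star]$ for \emph{heavily} sampled arms $j$: there the coarse lower bound is useless (it degrades like $e^{-c_0(N_{n,j})\kappa^-(u^\star)}$ with $N_{n,j}$ as large as $n$), so one must instead argue via concentration that $\tilde F_{n,j}$ sits near $F_j$ and then invoke the tight upper bound, which in turn relies on the uniform positivity of $\Kinf^+(\cdot,u^\star)$ away from the boundary guaranteed by Assumption~\ref{ass:standard_assumptions_BAI}.
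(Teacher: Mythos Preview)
Your proof is correct and follows essentially the same approach as the paper: split the product lower bound for $a_{n+1,i}$ according to $S_n^L$, use the coarse bound (Property~\ref{prop:one_arm_bcp_coarse_lower_bound}) on under-sampled arms, and use the tight upper bound (Property~\ref{prop:one_arm_bcp_upper_bound}) combined with concentration to get $\bP_n[\theta_j\le u]\ge\tfrac12$ for well-sampled arms. The only cosmetic difference is that the paper bounds $\Kinf^+(\tilde F_{n,j},u)$ directly via continuity of $F\mapsto\Kinf^+(F,u)$ at $F_j$ (obtaining $\Kinf^+(\tilde F_{n,j},u)\ge\tfrac12\min_k\Kinf^+(F_k,u)>0$), whereas you first control the mean $\tilde\mu_{n,j}$ and then invoke a uniform positivity $D_2>0$ over a mean sublevel set; both arguments land on the same constant $D_0=\kappa^+(u)+(K-1)\kappa^-(u)$.
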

	\begin{proof}
	Let $i \in \overline{S_{n}^{L}}$ and $u \in (0,B)$. As explained above, we have
	\[
		a_{n+1,i} \geq
		\bP_{n} \left[ \theta_{i} \geq u \right] \prod_{j \in S_{n}^{L} } \bP_{n} \left[ \theta_{j} \leq u \right] \prod_{j \in \overline{S_{n}^{L}} \setminus\{i\}} \bP_{n} \left[ \theta_{j} \leq u \right]  \: .
	\]
	For all $j \in S_{n}^{L}$, Property~\ref{prop:one_arm_bcp_upper_bound} yields
	\begin{align*}
		 \bP_{n} \left[ \theta_{j} \leq u \right]
		 = 1 -  \bP_{n} \left[ \theta_{j} \geq u \right]
		 \geq 1 - e^{-c_1(N_{n,j}) \Kinf^{+}(\tilde F_{n,j}, u)}
		 \geq 1 - e^{-c_1(L) \Kinf^{+}(\tilde F_{n,j}, u)} \: ,
	\end{align*}
	where we used that $N_{n,j}\geq L $ for all $j \in S_{n}^{L}$ and $c_1$ increasing.

	By Theorem~\ref{thm:Kinf_continuous}, the function $F \mapsto \Kinf^{+}(F, u)$ is continuous on $\cF$.
	Using Lemma~\ref{lem:subG_cdf} and (\ref{eq:def_closeness_modified_cdfs_true_cdfs}), i.e. $\max_{i \in [K]} \left\| \tilde F_{n,i} -  F_{n,i} \right\|_{\infty} \le d_0(n)$ where $d_0(n) =_{+\infty} o(n^{-\alpha})$, there exists $L_9 = Poly(W_2) $ such that for all $L\geq L_9$ and all $j \in S_{n}^{L}$,
	\[
	\Kinf^{+}(\tilde F_{n,j}, u) \geq \frac{1}{2} \Kinf^{+}( F_{j}, u) \geq \frac{1}{2} \min_{j \in [K]} \Kinf^{+}( F_{j}, u) \: .
	\]
	Since $\mu_{j} \in (0,B)$ for all $j \in [K]$, there exists $u \in (0,B)$ such that $\min_{j\in [K]} \Kinf^{+}(F_{j}, u) > 0$.
	Choosing such a $u$, there exists a deterministic $L_{10}$ such that for all $L \geq L_8 \eqdef \max\{L_9, L_{10}\}$
	\[
	\forall j \in S_{n}^{L}, \quad \bP_n \left[ \theta_{j} \leq u \right] \geq \frac{1}{2} \: .
	\]
	For the under-sampled arms $j \in \overline{S_{n}^{L}}$, Property~\ref{prop:one_arm_bcp_coarse_lower_bound} yields that
	\begin{align*}
	\forall j \in \overline{S_{n}^{L}} \setminus\{i\}, \qquad
	&\bP_{n} \left[ \theta_j \le u \right]
	\ge e^{- c_0(N_{n,j}) \kappa^-(u)}
	\ge e^{- c_0(L) \kappa^-(u)}
	\: , \\
	&\bP_{n}\left[ \theta_{i} \ge u \right]
	\ge e^{- c_0(N_{n,i})\kappa^+(u)}
	\ge e^{- c_0(L)\kappa^+(u)}
	\: ,
	\end{align*}
	where we used that $N_{n,i} < L$ for $j \in \overline{S_{n}^{L}}$, $c_0$ increasing and $\kappa^-$, $\kappa^+$ strictly positive.

	Combining the above and further lower bounding, we have shown that for $L \geq L_8$,
	\begin{align*}
		\forall i \in \overline{S_{n}^{L}}, \quad a_{n+1,i} &\geq \frac{e^{-D_0 c_0(L)}}{2^{K-1}} \: ,
	\end{align*}
	where $D_0 = \kappa^+(u) + (K-1)\kappa^-(u) $.
	As $\bE_{\bm F} [e^{\lambda W_2}] < + \infty$ for all $\lambda > 0$, we have $\bE_{\bm F}[(L_9)^{\alpha}] < + \infty$ for all $\alpha > 0$.
	Since $\bE_{\bm F}[(L_8)^{\alpha}] \le (L_{10})^{\alpha}+ \bE_{\bm F}[(L_9)^{\alpha}] < + \infty$, this concludes the proof.
	\end{proof}

\subsubsection{TS leader}
\label{app:sss_ts_leader}

Conditioned on $\cF_n$, the internal randomness of the Thompson Sampling (TS) leader is parameterized by a sampler $\Pi_n$, where $a_{n+1,i} = \bP_{n}[i \in \argmax_{j \in [K]} \theta_j]$.
Given an observation $\theta \sim \Pi_n$, the TS leader is defined as an arm with highest mean for $\theta$,
\begin{equation} \label{eq:def_ts_based_leader}
	B_{n+1}^{\text{TS}} \in \argmax_{i \in [K]} \theta_{i} \quad \text{,} \quad \bP_{\mid n}[B_{n+1}^{\text{TS}} = i] = a_{n+1,i} \quad \text{and} \quad \widehat B_{n+1}^{\text{TS}} \in \argmax_{i \in [K]} a_{n+1,i} \: ,
\end{equation}
where $\widehat B_{n+1}^{\text{TS}}$ is defined as an arm with highest $a_{n,i}$.

\paragraph{Property~\ref{prop:leader_cdt_suff_explo}}
Lemma~\ref{lem:TS_ensures_suff_explo} shows that Property~\ref{prop:leader_cdt_suff_explo} is satisfied by $B_{n+1}^{\text{TS}}$.
\begin{lemma} \label{lem:TS_ensures_suff_explo}
	Let $\Pi_n$ satisfying Property~\ref{prop:one_arm_bcp_upper_bound}.
	Let $S_{n}^{L}$ and $\cI_n^\star$ as in (\ref{eq:def_sampled_enough_sets}).
	There exists $\tilde L_7$ with $\bE_{\bm F}[(\tilde L_7)^{\alpha}] < + \infty$ for all $\alpha > 0$ such that if $L \ge \tilde L_7$, for all $n$ such that $S_{n}^{L} \neq \emptyset$, $\widehat B_{n+1}^{\text{TS}} \in S_{n}^{L}$ implies $\widehat B_{n+1}^{\text{TS}} \in \cI_n^\star$.
\end{lemma}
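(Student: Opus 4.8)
The plan is to argue by contradiction, using the tight upper bound on the selection probabilities $a_{n+1,i}$ of arms that are sampled enough but not of maximal mean, together with the elementary observation that the effective leader $\widehat B_{n+1}^{\text{TS}} \in \argmax_i a_{n+1,i}$ must have an $a$-value bounded away from $0$. Concretely, I would first invoke Lemma~\ref{lem:fast_rate_posterior} (whose hypothesis is exactly Property~\ref{prop:one_arm_bcp_upper_bound}, satisfied by the Dirichlet sampler via Theorem~\ref{thm:upper_bound_one_arm_bcp_bounded}): there is $L_7 = Poly(W_2)$ such that for all $L \ge L_7$ and all $n$ with $S_n^L \neq \emptyset$, every $j \in S_n^L \setminus \cI_n^\star$ satisfies $a_{n+1,j} \le f(c_1(L) D_{\bm F})$, with $f(x)=(1+x)e^{-x}$ and $D_{\bm F}>0$ the constant from Lemma~\ref{lem:fast_rate_emp_tc}.

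Next I would record the trivial lower bound $\max_{i\in[K]} a_{n+1,i} \ge 1/K$. Indeed, for every sampled $\theta\sim\Pi_n$ the set $\argmax_{k}\theta_k$ is nonempty, so $\sum_{i\in[K]} \indi{i \in \argmax_k \theta_k} \ge 1$ and, taking expectation, $\sum_{i\in[K]} a_{n+1,i} \ge 1$; hence the arm maximizing $a_{n+1,\cdot}$ has value at least $1/K$. Since $c_1(x)\sim_{+\infty}x$ and $f$ is decreasing to $0$ on $\R_+$, there is a deterministic $L_8$ such that $f(c_1(L) D_{\bm F}) < 1/K$ for all $L \ge L_8$. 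I then set $\tilde L_7 \eqdef \max\{L_7, L_8\}$.

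To conclude, fix $L \ge \tilde L_7$ and $n$ with $S_n^L \neq \emptyset$, and suppose $\widehat B_{n+1}^{\text{TS}} \in S_n^L$ but $\widehat B_{n+1}^{\text{TS}} \notin \cI_n^\star$. Then $\widehat B_{n+1}^{\text{TS}} \in S_n^L \setminus \cI_n^\star$, so by the first step $a_{n+1,\widehat B_{n+1}^{\text{TS}}} \le f(c_1(L) D_{\bm F}) < 1/K$, while by the second step $a_{n+1,\widehat B_{n+1}^{\text{TS}}} = \max_i a_{n+1,i} \ge 1/K$, a contradiction. Hence $\widehat B_{n+1}^{\text{TS}} \in \cI_n^\star$. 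The moment bound follows from $\bE_{\bm F}[(\tilde L_7)^{\alpha}] \le L_8^{\alpha} + \bE_{\bm F}[(L_7)^{\alpha}] < +\infty$ since $L_7 = Poly(W_2)$ and $\bE_{\bm F}[e^{\lambda W_2}] < +\infty$ for all $\lambda>0$.

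The only nontrivial input is the tight $\Kinf$-based exponential control of $a_{n+1,i}$ for the ``wrong'' arms, i.e. Lemma~\ref{lem:fast_rate_posterior}, which rests on the one-arm BCP upper bound for the Dirichlet sampler and on Lemma~\ref{lem:from_bcp_one_to_bcp_two} converting it into a bound on $\bP_n[\theta_j \ge \theta_i]$; these are already established, so the argument above is essentially bookkeeping. One point to double-check is that the relevant mean gap is strict — for $j \in S_n^L \setminus \cI_n^\star$ and $i \in \cI_n^\star$ one has $\mu_j < \mu_i$ by definition of $\cI_n^\star = \argmax_{k \in S_n^L}\mu_k$ — so that the positive constant $D_{\bm F}$ indeed applies; this is already handled inside Lemma~\ref{lem:fast_rate_posterior}, so no extra case analysis (and in particular no use of Assumption~\ref{ass:all_arms_distinct_bounded_mean}) is needed here.
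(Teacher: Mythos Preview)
Your proof is correct and follows essentially the same approach as the paper's: invoke Lemma~\ref{lem:fast_rate_posterior} to get $a_{n+1,j} < 1/K$ for $j\in S_n^L\setminus\cI_n^\star$ once $L$ is large enough, then contradict the definition of $\widehat B_{n+1}^{\text{TS}}\in\argmax_i a_{n+1,i}$. Your explicit justification of $\max_i a_{n+1,i}\ge 1/K$ via $\sum_i a_{n+1,i}\ge 1$ is a nice touch that the paper leaves implicit.
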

\begin{proof}
	Let $S_{n}^{L}$ and $\cI_n^\star$ as in (\ref{eq:def_sampled_enough_sets}).
	Assume that $S_{n}^{L} \neq \emptyset$.
	If $S_{n}^{L} \setminus  \cI_n^\star $ is empty, then the result is true.
	Assume $S_{n}^{L} \setminus  \cI_n^\star $ is not empty.
	Let $j \in S_{n}^{L} \setminus  \cI_n^\star$ and $L_7$ as in Lemma~\ref{lem:fast_rate_posterior}, hence
\[
a_{n+1,j} \le f\left( c_1(L) D_{\bm F} \right) \: .
\]
As $c_1(x) \sim_{+ \infty} x$ and $\lim_{+ \infty} f(x) = 0$, there exists a deterministic $L_8 $ such that for all $L\geq L_8$,
\[
f\left( c_1(L) D_{\bm F} \right) < \frac{1}{K} \: .
\]
Therefore, for all $L \ge \tilde L_7 \eqdef \max\{ L_7, L_8\}$ and all $j \in S_{n}^{L} \setminus  \cI_n^\star$, we have $a_{n+1,j} < \frac{1}{K}$.

Assume that $\widehat B_{n+1}^{\text{TS}} \in S_{n}^{L}$. Suppose towards contradiction that $\widehat B_{n+1}^{\text{TS}} \notin \cI_n^\star$.
Then, the above shows that $a_{n+1,\widehat B_{n+1}^{\text{TS}}} < \frac{1}{K}$.
This is a contradiction with $\widehat B_{n+1}^{\text{TS}} \in \argmax_{i \in [K]} a_{n+1,i}$, hence $\widehat B_{n+1}^{\text{TS}} \in \cI_n^\star$.
Since $\bE_{\bm F}[(\tilde L_7)^{\alpha}] \le (L_8)^{\alpha} + \bE_{\bm F}[( L_7)^{\alpha}]  $, this concludes the proof.
\end{proof}

\paragraph{Property~\ref{prop:leader_cdt_convergence}}
Lemma~\ref{lem:TS_ensures_convergence} shows that Property~\ref{prop:leader_cdt_convergence} is satisfied by $B_{n+1}^{\text{TS}}$.
More precisely, we show that after enough time, the probability for the leader to not be the best arm is decreasing exponentially fast.
\begin{lemma} \label{lem:TS_ensures_convergence}
		Assume Property~\ref{prop:suff_exploration} holds.
		Let $\Pi_n$ satisfying Property~\ref{prop:one_arm_bcp_upper_bound}, and $c_1$ therein.
		There exists $N_{9}$ with $\bE_{\bm F}[N_{9}] < + \infty$ such that for all $n \ge N_{9}$,
		\[
		\bP_{\mid n}[ B_{n+1}^{\text{TS}} \neq i^\star (\bm F)] \leq (K-1)f\left(c_{1}\left( \sqrt{\frac{n}{K}}\right) D_{\bm F}\right)\: ,
		\]
		where $f(x) = (1+x)e^{-x}$ and $ D_{\bm F} > 0$ is the problem dependent constant from Lemma~\ref{lem:fast_rate_emp_tc}.
\end{lemma}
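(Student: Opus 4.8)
The plan is to combine the sufficient exploration guarantee (Property~\ref{prop:suff_exploration}) with the tight upper bound on the two-arm boundary crossing probability (Lemma~\ref{lem:two_arm_BCP_tight_upper_bound}, itself a corollary of Property~\ref{prop:one_arm_bcp_upper_bound}) and the concentration of empirical cdfs (Lemma~\ref{lem:subG_cdf}) together with the closeness of the modified cdfs to the empirical ones \eqref{eq:def_closeness_modified_cdfs_true_cdfs}. First I would bound the error event for the leader by a union bound over the suboptimal arms: $\bP_{\mid n}[B_{n+1}^{\text{TS}}\neq i^\star] = \sum_{i\neq i^\star} a_{n+1,i} \le (K-1)\max_{i\neq i^\star}a_{n+1,i} \le (K-1)\max_{i\neq i^\star}\bP_n[\theta_i\geq\theta_{i^\star}]$, using $a_{n+1,i}\le \bP_n[\theta_i\geq\theta_{i^\star}]$. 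Then Lemma~\ref{lem:two_arm_BCP_tight_upper_bound} gives, for each $i\neq i^\star$,
\[
\bP_n[\theta_i\geq\theta_{i^\star}] \le f\!\left( \inf_{u\in[0,B]}\left\{ c_1(N_{n,i^\star})\Kinf^-(\tilde F_{n,i^\star},u) + c_1(N_{n,i})\Kinf^+(\tilde F_{n,i},u)\right\}\right),
\]
with $f(x)=(1+x)e^{-x}$ decreasing on $\R_+$.

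Next I would lower-bound the infimum inside $f$ by a quantity of the form $c_1(\sqrt{n/K})\,D_{\bm F}$. Under Property~\ref{prop:suff_exploration}, for $n\ge N_1$ we have $N_{n,j}\ge\sqrt{n/K}$ for all $j$, so since $c_1$ is increasing, $c_1(N_{n,i^\star})\wedge c_1(N_{n,i}) \ge c_1(\sqrt{n/K})$; factoring this out leaves $\inf_{u\in[0,B]}\{\Kinf^-(\tilde F_{n,i^\star},u)+\Kinf^+(\tilde F_{n,i},u)\}$. The issue is that $\tilde F_{n,i^\star}$ and $\tilde F_{n,i}$ are random, so this infimum is random; I would use Lemma~\ref{lem:subG_cdf} (together with \eqref{eq:def_closeness_modified_cdfs_true_cdfs}) to conclude that there is $N_{\alpha}=Poly(W_2)$ with finite expectation such that $\|\tilde F_{n,j}-F_j\|_\infty\le\alpha$ for all $j$ once $n\ge\max\{N_1,N_\alpha\}$, where $\alpha$ is chosen via Lemma~\ref{lem:positive_strict_geometric_cst} exactly as in the definition of $D_{\bm F}$ in Lemma~\ref{lem:fast_rate_emp_tc}. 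Since $\mu_{i^\star}>\mu_i$ for every $i\neq i^\star$ (Assumption~\ref{ass:standard_assumptions_BAI}), this yields $\inf_{u\in[0,B]}\{\Kinf^-(\tilde F_{n,i^\star},u)+\Kinf^+(\tilde F_{n,i},u)\}\ge D_{\bm F}>0$ for all $i\neq i^\star$. Monotonicity of $c_1$ and decreasingness of $f$ then give $\bP_n[\theta_i\geq\theta_{i^\star}]\le f(c_1(\sqrt{n/K})D_{\bm F})$, and summing over $i\neq i^\star$ produces the claimed bound. Setting $N_9\eqdef\max\{N_1,N_\alpha\}$, finiteness of $\bE_{\bm F}[N_9]$ follows from $\bE_{\bm F}[N_1]<+\infty$ and $\bE_{\bm F}[e^{\lambda W_2}]<+\infty$ for all $\lambda>0$.

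Finally I would check that this indeed verifies Property~\ref{prop:leader_cdt_convergence}, i.e. that $g(n)=(K-1)f(c_1(\sqrt{n/K})D_{\bm F})$ satisfies $g(n)=o(n^{-\alpha})$ for some $\alpha>0$: this holds because $c_1(x)\sim_{+\infty}x$, so $c_1(\sqrt{n/K})D_{\bm F}$ grows like $\sqrt{n}$, and $f(x)=(1+x)e^{-x}$ decays faster than any polynomial, hence $g(n)$ decays (stretched-)exponentially and is in particular $o(n^{-\alpha})$ for every $\alpha>0$. The main obstacle is the step controlling the random infimum of the $\Kinf^\pm$ sum away from zero uniformly in $n$ and over suboptimal arms; this is where the interplay between sufficient exploration, the DKW-type concentration of Lemma~\ref{lem:subG_cdf}, the continuity/strict-positivity input of Lemma~\ref{lem:positive_strict_geometric_cst}, and the closeness bound \eqref{eq:def_closeness_modified_cdfs_true_cdfs} all come together, and it must be done so that the resulting threshold $N_9$ has finite expectation.
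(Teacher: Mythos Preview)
Your proof is correct and follows essentially the same route as the paper. The paper packages the core argument (Lemma~\ref{lem:two_arm_BCP_tight_upper_bound}, factoring out $c_1(L)$, then Lemma~\ref{lem:subG_cdf} with \eqref{eq:def_closeness_modified_cdfs_true_cdfs} and Lemma~\ref{lem:positive_strict_geometric_cst} to get $D_{\bm F}$) into Lemma~\ref{lem:fast_rate_posterior}, and then simply applies that lemma with $L=\sqrt{n/K}$ once sufficient exploration ensures $S_n^{\sqrt{n/K}}=[K]$ and $\cI_n^\star=\{i^\star\}$; you unroll the same steps inline. One small bookkeeping point: the concentration from Lemma~\ref{lem:subG_cdf} is in terms of $N_{n,j}$, so the threshold you call $N_\alpha$ is really a threshold $L_\alpha=Poly(W_2)$ on the pull counts, which translates via sufficient exploration to $n\ge K L_\alpha^2$; hence $N_9=\max\{N_1,K L_\alpha^2\}$, exactly as in the paper's $N_9=\max\{N_1,K(L_7)^2\}$.
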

\begin{proof}
	Let $i^\star = i^\star(\bm F)$.
	Let $N_1$ as in Property~\ref{prop:suff_exploration}, then $N_{n,i} \geq \sqrt{\frac{n}{K}}$ for all $n \ge N_1$.
	Let $L_{7}$ as in Lemma~\ref{lem:fast_rate_posterior}.
	For all $n \geq N_{9} = \max\{N_1, K (L_{7})^2\}$, Lemma~\ref{lem:TS_ensures_suff_explo} and Property~\ref{prop:suff_exploration} yields that
	\[
		\forall i \neq i^\star, \quad a_{n+1,i} \leq  f\left(c_{1}\left( \sqrt{\frac{n}{K}}\right) D_{\bm F}\right) \: .
	\]

	Using the definition of $B_{n+1}^{\text{TS}}$ in (\ref{eq:def_ts_based_leader}), we obtain
	\begin{align*}
			\bP_{\mid n}[ B_{n+1}^{\text{TS}} \neq i^\star ]
			= \sum_{i \neq i^\star} \bP_{\mid n}[ B_{n+1}^{\text{TS}} = i ]
			\leq (K-1) \max_{i \neq i^\star} a_{n+1,i}
			\leq (K-1)f\left(c_{1}\left( \sqrt{\frac{n}{K}}\right) D_{\bm F}\right) \: .
	\end{align*}
	Since $\bE_{\bm F}[ N_{9}] \le \bE_{\bm F}[ N_{1}] + K \bE_{\bm F}[ (L_{7})^2 ]< + \infty$, this concludes the result.
\end{proof}

\subsubsection{RS challenger}
\label{app:sss_rs_challenger}

Conditioned on $\cF_n$, the internal randomness of the Re-Sampling (RS) challenger is parameterized by a sampler $\Pi_n$, where $a_{n+1,i} \eqdef \bP_{n}[i \in \argmax_{j \in [K]} \theta_j]$.
Given a leader $B_{n+1}$, the RS challenger is defined by repeatedly sampling $\tilde \theta \sim \Pi_n$ until $B_{n+1} \notin \argmax_{i \in [K]} \tilde \theta_{i}$ and by taking an arm with highest mean for this $\tilde \theta$
\begin{equation} \label{eq:def_rs_based_challenger}
	C_{n+1}^{\text{RS}} \in \argmax_{i \in [K]} \tilde \theta_{i} \not\owns B_{n+1} \quad \text{and} \quad \widehat C_{n+1}^{\text{RS}} \in \argmax_{j \neq \widehat B_{n+1}} a_{n+1,j} \: ,
\end{equation}
where
\begin{align*}
	\bP_{\mid n}[C_{n+1}^{\text{RS}} = j| B_{n+1} = i] = \sum_{k = 0}^{+\infty} a_{n+1,i}^{k} a_{n+1,j} = \frac{a_{n+1,j}}{1-a_{n+1,i}} \: .
\end{align*}

\paragraph{Property~\ref{prop:challenger_cdt_suff_explo}}
We prove Property~\ref{prop:challenger_cdt_suff_explo} for $C_{n+1}^{\text{RS}}$ in Lemma~\ref{lem:RS_ensures_suff_explo} by comparing the rates with which $a_{n+1,i}$ decreases.
The effective challenger $\widehat C_{n+1}^{\text{RS}} $ is taken as an arm different from $\widehat B_{n+1}$ which maximizes $a_{n+1,i}$.
Therefore, it is sufficient to show that the sampled enough arms have lower $a_{n+1,i}$ than the mildly under-sampled ones.
This will imply that $\widehat C_{n+1}^{\text{RS}} $ has to be mildly under-sampled or be an arm with highest true mean among the sampled enough arms.
\begin{lemma} \label{lem:RS_ensures_suff_explo}
	Let $\Pi_n$ satisfying Properties~\ref{prop:one_arm_bcp_coarse_lower_bound} and~\ref{prop:one_arm_bcp_upper_bound}.
	Let $B_{n+1}$ be a leader satisfying Property~\ref{prop:leader_cdt_suff_explo}.
	Given $(B_{n+1}, \widehat B_{n+1})$, let $(C_{n+1}^{\text{RS}}, \widehat C_{n+1}^{\text{RS}})$ as in (\ref{eq:def_rs_based_challenger}).
	Let $U_n^L$ and $V_n^L$ as in (\ref{eq:def_undersampled_sets}) and $\mathcal J_n^\star = \argmax_{ i \in \overline{V_{n}^{L}}} \mu_{i}$.
	There exists $L_9$ with $\bE_{\bm F}[L_9] < + \infty$ such that if $L \ge L_9$, for all $n$ such that $U_n^L \neq \emptyset$, $\widehat B_{n+1} \notin V_{n}^{L}$ implies $\widehat C_{n+1}^{\text{RS}} \in V_{n}^{L} \cup \left(\mathcal J_n^\star \setminus \left\{\widehat B_{n+1} \right\}\right)$.
\end{lemma}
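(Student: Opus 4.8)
The plan is to mirror the proof of Lemma~\ref{lem:TC_ensures_suff_explo}, replacing the empirical transportation cost $W_n(\widehat B_{n+1},\cdot)$ by the quantities $a_{n+1,\cdot}$ and reversing every inequality, since the effective RS challenger $\widehat C_{n+1}^{\text{RS}}$ \emph{maximizes} $a_{n+1,\cdot}$ over $j\neq\widehat B_{n+1}$ rather than minimizing a cost. Throughout I would assume $U_n^L\neq\emptyset$ (hence also $V_n^L\neq\emptyset$) and $\widehat B_{n+1}\notin V_n^L$, i.e.\ $\widehat B_{n+1}\in\overline{V_n^L}=S_n^{L^{3/4}}$; if $\overline{V_n^L}\setminus\mathcal J_n^\star=\emptyset$ the conclusion is immediate, so I would assume otherwise. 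First I would invoke Property~\ref{prop:leader_cdt_suff_explo} with the threshold $L^{3/4}$ (admissible once $L\ge L_0^{4/3}$), which, since $\widehat B_{n+1}\in S_n^{L^{3/4}}\neq\emptyset$, gives $\widehat B_{n+1}\in\argmax_{i\in S_n^{L^{3/4}}}\mu_i=\mathcal J_n^\star$. If $\widehat C_{n+1}^{\text{RS}}\in\mathcal J_n^\star\setminus\{\widehat B_{n+1}\}$ we are done, so it remains only to exclude $\widehat C_{n+1}^{\text{RS}}\in\overline{V_n^L}\setminus\mathcal J_n^\star$.

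Next I would collect two rate estimates on $a_{n+1,\cdot}$. Applying Lemma~\ref{lem:fast_rate_posterior} with threshold $L^{3/4}$ gives, for every moderately sampled non-best arm $j\in\overline{V_n^L}\setminus\mathcal J_n^\star=S_n^{L^{3/4}}\setminus\mathcal J_n^\star$, the upper bound $a_{n+1,j}\le f(c_1(L^{3/4})D_{\bm F})$, valid once $L^{3/4}$ exceeds the random threshold supplied by that lemma. Applying Lemma~\ref{lem:slow_posterior_rate_undersampled_arms} with threshold $\sqrt L$ gives, for every highly under-sampled arm $k\in U_n^L=\overline{S_n^{\sqrt L}}$, the lower bound $a_{n+1,k}\ge e^{-D_0 c_0(\sqrt L)}/2^{K-1}$, valid once $\sqrt L$ exceeds the corresponding random threshold; note that $U_n^L\subseteq V_n^L$ and $\widehat B_{n+1}\notin V_n^L$, so any such $k$ satisfies $k\neq\widehat B_{n+1}$, hence is an admissible competitor for the challenger.

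The heart of the argument is the comparison of these two rates, and I expect it to be the main (indeed only) obstacle. Using $c_1(x)\sim_{+\infty}x$, the fact that $f(x)=(1+x)e^{-x}$ vanishes super-polynomially at $+\infty$, and $c_0(x)\sim_{+\infty}x$, the quantity $f(c_1(L^{3/4})D_{\bm F})$ decays essentially like $e^{-D_{\bm F}L^{3/4}}$ whereas $e^{-D_0 c_0(\sqrt L)}/2^{K-1}$ decays only like $e^{-D_0\sqrt L}$; since $3/4>1/2$ there is a deterministic $L_{10}$ such that
\[
f\big(c_1(L^{3/4})D_{\bm F}\big)<\frac{e^{-D_0 c_0(\sqrt L)}}{2^{K-1}}\qquad\text{for all }L\ge L_{10}.
\]
This is exactly where the two distinct under-sampling scales $\sqrt L$ and $L^{3/4}$ appearing in $U_n^L$ and $V_n^L$ are used, as in Lemma~\ref{lem:TC_ensures_suff_explo}; the remaining care is only in checking that the threshold substitutions into Lemmas~\ref{lem:fast_rate_posterior} and~\ref{lem:slow_posterior_rate_undersampled_arms} are admissible, which is routine.

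Finally I would set $L_9\eqdef\max\{L_0^{4/3},L_7^{4/3},L_8^{2},L_{10}\}$, where $L_7$ and $L_8$ denote the random thresholds from Lemmas~\ref{lem:fast_rate_posterior} and~\ref{lem:slow_posterior_rate_undersampled_arms}. For $L\ge L_9$, any witness $k\in U_n^L$ then satisfies $a_{n+1,k}>a_{n+1,j}$ for every $j\in\overline{V_n^L}\setminus\mathcal J_n^\star$, so $\argmax_{j\neq\widehat B_{n+1}}a_{n+1,j}$ contains no element of $\overline{V_n^L}\setminus\mathcal J_n^\star$; combined with the reduction of the first paragraph and $\widehat B_{n+1}\in\mathcal J_n^\star$ this forces $\widehat C_{n+1}^{\text{RS}}\in V_n^L\cup(\mathcal J_n^\star\setminus\{\widehat B_{n+1}\})$. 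The integrability statement then follows from $\bE_{\bm F}[L_9]\le L_{10}+\bE_{\bm F}[L_0^{4/3}]+\bE_{\bm F}[L_7^{4/3}]+\bE_{\bm F}[L_8^{2}]<+\infty$, since $L_0$, $L_7$ and $L_8$ each have finite moments of all orders.
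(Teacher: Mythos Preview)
Your proposal is correct and follows essentially the same approach as the paper's proof: invoke Property~\ref{prop:leader_cdt_suff_explo} at level $L^{3/4}$ to place $\widehat B_{n+1}\in\mathcal J_n^\star$, use Lemma~\ref{lem:fast_rate_posterior} at level $L^{3/4}$ and Lemma~\ref{lem:slow_posterior_rate_undersampled_arms} at level $\sqrt{L}$ to bound $a_{n+1,\cdot}$ from above on $\overline{V_n^L}\setminus\mathcal J_n^\star$ and from below on $U_n^L$, compare the rates via a deterministic $L_{10}$, and set $L_9=\max\{L_0^{4/3},L_7^{4/3},L_8^2,L_{10}\}$. The paper's argument is the same down to the choice of constants and the integrability bookkeeping.
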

\begin{proof}
		Let $\mathcal J_n^\star = \argmax_{ i \in \overline{V_{n}^{L}}} \mu_{i}$.
		In the following, we consider $U_n^L \neq \emptyset$ (hence $V_n^L \neq \emptyset$) and $\widehat B_{n+1} \in V_n^L$.
		Let $B_{n+1}$ be a leader satisfying Property~\ref{prop:leader_cdt_suff_explo}, and $L_0$ defined therein.
		Then, for $L \geq L_{0}^{4/3}$, we have $\widehat B_{n+1} \in \mathcal J_n^\star$.
		If $\widehat C_{n+1}^{\text{RS}} \in \mathcal J_n^\star \setminus \left\{\widehat B_{n+1} \right\}$, we are done.
		Assume that $\widehat C_{n+1}^{\text{RS}} \notin \mathcal J_n^\star \setminus \left\{\widehat B_{n+1} \right\}$.

		Since $\Pi_n$ satisfies Properties~\ref{prop:one_arm_bcp_coarse_lower_bound} and~\ref{prop:one_arm_bcp_upper_bound}, let $L_7$ and $L_8$ as in Lemmas~\ref{lem:fast_rate_posterior} and~\ref{lem:slow_posterior_rate_undersampled_arms}.
		Then, for all $L \geq \max\{L_{0}^{4/3}, L_7^{4/3}, L_8^2\}$,
		\begin{align*}
			&\widehat B_{n+1} \in \mathcal J_n^\star \: , \\
			\forall i \in \overline{V_n^L} \setminus \mathcal J_n^\star, \quad & a_{n+1,i} \leq f(c_{1}(L^{3/4}) D_{\bm F})     \: , \\
			\forall j \in U_n^L, \quad 	& a_{n+1,j} \geq  \frac{e^{-D_0 c_0(\sqrt{L})}}{2^{K-1}} \: .
		\end{align*}

		Since $f(x) = (1+x)e^{-x}$, $c_0(x) \sim_{+\infty} x$ and $c_1(x) \sim_{+\infty} x$, there exists a deterministic $L_{10}$ such that for all $L \ge L_{10}$,
		\[
			f(c_{1}(L^{3/4}) D_{\bm F}) < \frac{e^{-D_0 c_0(\sqrt{L})}}{2^{K-1}}  \: .
		\]
		Therefore, for all $L \ge L_{9} \eqdef \max \{L_{0}^{4/3}, L_7^{4/3}, L_8^2, L_{10}\}$,
		\[
		\forall  (j, i) \in U_n^L \times \left(\overline{V_n^L}  \setminus \mathcal J_n^\star \right), \quad  a_{n+1,j} > a_{n+1,i} \: .
		\]
		As $\widehat B_{n+1} \in \mathcal J_n^\star$ and $\widehat C_{n+1}^{\text{RS}} \notin \mathcal J_n^\star \setminus \left\{\widehat B_{n+1} \right\}$, the definition $\widehat C_{n+1}^{\text{RS}}  \in \argmax_{j \neq \widehat B_{n+1}} a_{n+1,j}$ yields that $\widehat C_{n+1}^{\text{RS}} \in V_{n}^{L}$.
		Otherwise the above strict inequality would wield a contradiction.
		Since
		\[
		\bE_{\bm F}[L_9] \le L_{10} + \bE_{\bm F}[(L_{0})^{4/3}] + \bE_{\bm F}[(L_7)^{4/3}] + \bE_{\bm F}[(L_8)^2] < + \infty		\: ,
		\]
		this concludes the proof.
\end{proof}

\paragraph{Property~\ref{prop:challenger_cdt_convergence}}
Lemma~\ref{lem:RS_ensures_convergence} shows that Property~\ref{prop:challenger_cdt_convergence} is satisfied by $C_{n+1}^{\text{RS}}$.
\begin{lemma} \label{lem:RS_ensures_convergence}
		Assume Property~\ref{prop:suff_exploration} holds.
		Let $\Pi_n$ satisfying Properties~\ref{prop:one_arm_bcp_upper_bound} and~\ref{prop:tight_time_uniform_lower_bound_2arm_BCP}.
		Let $B_{n+1}$ be a leader satisfying Property~\ref{prop:leader_cdt_convergence}.
		Let $\epsilon \in (0, \epsilon_0]$ where $\epsilon_0$ is a problem dependent constant.
		Given $B_{n+1}$, let $C_{n+1}^{\text{RS}}$ as in (\ref{eq:def_rs_based_challenger}).
		There exists $N_{10}$ with $\bE_{\bm F}[N_{10}] < + \infty$ such that for all $n \geq N_{10}$ and all $i \neq i^\star(\bm F)$,
		\begin{equation} \label{eq:overshooting_implies_not_sampled_anymore_rs}
				\frac{\Psi_{n,i}}{n} \geq w_{i}^{\beta} + \epsilon  \quad \implies \quad \bP_{\mid n}[C_{n+1}^{\text{RS}} = i \mid B_{n+1} = i^\star(\bm F)] \leq h(n) \: ,
		\end{equation}
		where $h : \N^\star \to (0, + \infty)$ such that $h(n) =_{+\infty} o(n^{-\alpha})$ with $\alpha > 0$.
\end{lemma}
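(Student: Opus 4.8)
The plan is to control the conditional probability, which by~(\ref{eq:def_rs_based_challenger}) equals $\bP_{\mid n}[C_{n+1}^{\text{RS}} = i \mid B_{n+1} = i^\star] = a_{n+1,i}/(1 - a_{n+1,i^\star})$ (writing $i^\star = i^\star(\bm F)$), by bounding the numerator above and the denominator below with boundary crossing probabilities of the sampler $\Pi_n$. For the numerator, $a_{n+1,i} \le \bP_n[\theta_i \ge \theta_{i^\star}]$, and Lemma~\ref{lem:two_arm_BCP_tight_upper_bound} (valid since $\Pi_n$ satisfies Property~\ref{prop:one_arm_bcp_upper_bound}) gives $a_{n+1,i} \le f\left(\inf_{u \in [0,B]}\{c_1(N_{n,i^\star})\Kinf^-(\tilde F_{n,i^\star},u) + c_1(N_{n,i})\Kinf^+(\tilde F_{n,i},u)\}\right)$ with $f(x) = (1+x)e^{-x}$ decreasing and $c_1(x) = x+2 \ge x$, so with $f$ decreasing the exponent is at least $n\,C_{i^\star,i}(\cT(\bm{\tilde F}_n), N_n/n)$. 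For the denominator, $1 - a_{n+1,i^\star} \ge \max_{j \neq i^\star}\bP_n[\theta_j \ge \theta_{i^\star}]$, and Property~\ref{prop:tight_time_uniform_lower_bound_2arm_BCP} provides, once $\min_k N_{n,k}$ is large, a lower bound on $\bP_n[\theta_j \ge \theta_{i^\star}]$ of the form $e^{-\zeta(N_{n,i^\star}+N_{n,j})} h_\zeta(N_{n,i^\star},N_{n,j})^{-1}\exp(-n\,C_{i^\star,j}(\cT(\bm F), N_n/n))$ for any fixed $j \neq i^\star$. Sufficient exploration (Property~\ref{prop:suff_exploration}) gives $N_{n,k} \ge \sqrt{n/K}$ for $n \ge N_1$, which both activates Property~\ref{prop:tight_time_uniform_lower_bound_2arm_BCP} and, through Lemma~\ref{lem:subG_cdf} and~(\ref{eq:def_closeness_modified_cdfs_true_cdfs}), makes $\bm{\tilde F}_n$ uniformly close to $\bm F$.

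The core of the argument is to read off the two exponents at scale $n$ and to choose the right comparison arm in the denominator. For the numerator, the overshoot hypothesis $\Psi_{n,i}/n \ge w_i^\beta + \epsilon$ yields $N_{n,i}/n \ge w_i^\beta + \epsilon/2$ for $n$ large (Lemma~\ref{lem:subG_alloc}), while $N_{n,i^\star}/n \in [\beta - \epsilon', \beta + \epsilon']$ for $n$ large (Lemma~\ref{lem:convergence_towards_optimal_allocation_best_arm}, whose hypothesis is Property~\ref{prop:leader_cdt_convergence}); since $C_{i^\star,i}(\cT(\bm F), \cdot)$ is non-decreasing and \emph{strictly} increasing in its two allocation arguments (Lemma~\ref{lem:inf_Kinf_increasing_in_w}) and equals $T_\beta^\star(\bm F)^{-1}$ at the allocation with coordinates $(\beta, w_i^\beta)$ (Lemma~\ref{lem:properties_characteristic_times}), monotonicity plus continuity of the transportation cost and concentration of $\cT(\bm{\tilde F}_n)$ give $\inf_{u}\{\cdots\} \ge n(T_\beta^\star(\bm F)^{-1} + \gamma)$ for a problem-dependent $\gamma > 0$ (take $\gamma$ as the minimum over $i \neq i^\star$; this is where $\epsilon \le \epsilon_0$ is used, to keep $\gamma$ and the constants below consistent). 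For the denominator, when $K \ge 3$ not all suboptimal arms can be over-allocated: since $i$ overshoots by $\epsilon/2$ while $i^\star$ is within $\epsilon' < \epsilon/2$ of $\beta$, summing the allocations to one forces some $j_1 \notin\{i,i^\star\}$ with $N_{n,j_1}/n < w_{j_1}^\beta$; monotonicity of $C_{i^\star,j_1}$ together with the equality of all transportation costs at the $\beta$-equilibrium then bounds $n\,C_{i^\star,j_1}(\cT(\bm F), N_n/n) \le n(T_\beta^\star(\bm F)^{-1} + \rho(\epsilon'))$ with $\rho(\epsilon') \to 0$ as $\epsilon' \to 0$, so $1 - a_{n+1,i^\star} \ge e^{-2\zeta n} h_\zeta(n,n)^{-1}\exp(-n(T_\beta^\star(\bm F)^{-1}+\rho(\epsilon')))$ using that $h_\zeta$ is increasing. (For $K = 2$ the overshoot hypothesis cannot hold for $n$ large, since $\Psi_{n,i}/n = 1 - \Psi_{n,i^\star}/n \to 1-\beta = w_i^\beta$, so the implication is vacuous.)

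Combining the two bounds, for $n$ past a suitable random time the ratio is at most $\mathrm{poly}(n)\, h_\zeta(n,n)\, e^{2\zeta n}\, e^{-n(\gamma - \rho(\epsilon'))}$. Fixing $\epsilon'$ and $\zeta$ small enough that $\rho(\epsilon') \le \gamma/4$ and $\zeta \le \gamma/16$, and recalling $h_\zeta(n,n) = o(e^{n^{\alpha}})$ with $\alpha < 1$, this is $o(e^{-n\gamma/16})$, hence bounded by the deterministic $h(n) := \mathrm{poly}(n)\, h_\zeta(n,n)\, e^{2\zeta n - n(\gamma - \rho(\epsilon'))}$, which satisfies $h(n) =_{+\infty} o(n^{-\alpha})$ for every $\alpha > 0$. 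All the thresholds beyond which the estimates hold ($N_1$ from Property~\ref{prop:suff_exploration}, $N_4$ from Lemma~\ref{lem:convergence_towards_optimal_allocation_best_arm}, the activation time $\mathrm{poly}(N_8)$ of Property~\ref{prop:tight_time_uniform_lower_bound_2arm_BCP}, and polynomials in $W_1, W_2$ from Lemmas~\ref{lem:subG_alloc} and~\ref{lem:subG_cdf}) have finite expectation, so their maximum $N_{10}$ does too. The main obstacle is the denominator: one must simultaneously locate the comparison arm $j_1$ via the allocation-conservation argument and invoke the \emph{tight} BCP lower bound of Property~\ref{prop:tight_time_uniform_lower_bound_2arm_BCP} so that its exponent is $n\,T_\beta^\star(\bm F)^{-1} + o(n)$ rather than merely $O(n)$; a secondary nuisance is that the upper bound involves the modified cdfs $\tilde F_{n,k}$ while the lower bound involves the true $F_k$, so both exponents must be brought back to $\cT(\bm F)$ through concentration and the continuity of $\Kinf^\pm$.
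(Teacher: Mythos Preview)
Your proposal is correct and follows the same skeleton as the paper: express $\bP_{\mid n}[C_{n+1}^{\text{RS}} = i \mid B_{n+1} = i^\star] = a_{n+1,i}/(1-a_{n+1,i^\star})$, upper bound the numerator via Lemma~\ref{lem:two_arm_BCP_tight_upper_bound}, lower bound the denominator via Property~\ref{prop:tight_time_uniform_lower_bound_2arm_BCP}, and compare the two exponents at scale $n$ using the overshoot $N_{n,i}/n \ge w_i^\beta + \epsilon/2$ (Lemma~\ref{lem:subG_alloc}), the concentration $|N_{n,i^\star}/n - \beta| \le \epsilon'$ (Lemma~\ref{lem:convergence_towards_optimal_allocation_best_arm}), and the equality of transportation costs at the $\beta$-equilibrium (Lemma~\ref{lem:properties_characteristic_times}).

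The one genuine difference is your treatment of the denominator. You locate, via an allocation-conservation argument, a specific arm $j_1 \notin \{i,i^\star\}$ with $N_{n,j_1}/n < w_{j_1}^\beta$, and bound $C_{i^\star,j_1}(\cT(\bm F), N_n/n)$ above by $T_\beta^\star(\bm F)^{-1} + \rho(\epsilon')$ through monotonicity; this forces the $K=2$ vs.\ $K\ge 3$ case split. The paper instead lower bounds $\max_{j\neq i^\star}\bP_n[\theta_j \ge \theta_{i^\star}]$ by $\exp\bigl(-n\min_{j\neq i^\star} C_{i^\star,j}(\cT(\bm F), N_n/n)\bigr)$ and then, since $N_n/n$ is a feasible allocation with first coordinate $N_{n,i^\star}/n$, further by $\exp\bigl(-n\sup_{w:\,w_{i^\star}=N_{n,i^\star}/n}\min_{j\neq i^\star} C_{i^\star,j}(\cT(\bm F), w)\bigr)$. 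The resulting difference of exponents is packaged into a single function $H_{\tilde\epsilon}(\bm G,\tilde\beta)$, whose joint continuity is provided by Lemma~\ref{lem:continuity_results_for_analysis}, and whose positivity at $(\cT(\bm F),\beta)$ follows directly from strict monotonicity (Lemma~\ref{lem:inf_Kinf_increasing_in_w}). This route avoids the pigeonhole argument and works uniformly in $K$ without the vacuous-implication remark; your route is slightly more explicit about \emph{which} arm witnesses the denominator bound, at the cost of the extra case analysis.
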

\begin{proof}
	Let $\epsilon > 0$ and $i^\star = i^\star(\bm F)$.
	Let $N_1$ as in Property~\ref{prop:suff_exploration}, then $N_{n,i} \geq \sqrt{\frac{n}{K}}$ for all $n \ge N_1$.
	Since $i^\star$ is unique, we have $\Delta \eqdef \min_{j\neq i^\star}|\mu_{i^\star} - \mu_{j}| > 0$.
	For bounded distributions, $F \mapsto m(F)$ is continuous on $\cF$ for the weak convergence.
	Lemma~\ref{lem:subG_cdf} yields that there exists $N_{11} = Poly(W_2)$ such that for all $n \ge \max\{N_1, N_{11}\}$ and all $i \in [K]$, we have $|\mu_{n,i} - \mu_{i}| \leq \frac{\Delta}{4}$.
	Therefore, for all $n \ge \max\{N_1, N_8\}$, $\argmax_{i \in [K]} \mu_{n,i} = \argmax_{i \in [K]} \mu_{i} = i^\star$.

	Let $\xi >0$. Since Property~\ref{prop:suff_exploration} holds and $B_{n+1}$ satisfies Property~\ref{prop:leader_cdt_convergence}, we can use the results from Lemma~\ref{lem:convergence_towards_optimal_allocation_best_arm}.
	Let $N_4$ defined in Lemma~\ref{lem:convergence_towards_optimal_allocation_best_arm}, we have $\left| \frac{N_{n,i^\star}}{n} - \beta \right| \leq \xi$ for all $n \ge \max \{ N_1, N_4\}$.

	Using the definition of $C_{n+1}^{\text{RS}}$ in (\ref{eq:def_rs_based_challenger}), we have
	\begin{align*}
		\bP_{\mid n}[C_{n+1}^{\text{RS}} = i \mid B_{n+1} = i^\star] = \frac{a_{n+1,i}}{1-a_{n+1,i^\star}} \leq \frac{\bP_{n}[\theta_i \ge \theta_{i^\star}]}{\max_{j \neq i^\star} \bP_{n}[\theta_j \ge \theta_{i^\star}]} \: ,
	\end{align*}
	where we used that $\{\theta_j > \theta_{i^\star}\} \subseteq \bigcup_{j\neq i^\star} \{\theta_j > \theta_{i^\star}\} = \{i^\star \notin \argmax_{j \in [K]} \theta_j\}$.

	Let $i \neq i^\star$ such that $\frac{\Psi_{n,i}}{n} \geq w_{i}^{\beta} + \epsilon$.
	Using Lemma~\ref{lem:subG_alloc}, there exists $N_{12} = Poly(W_1) $, such that for all $n \ge \max\{N_1, N_{12}\}$, we have $\frac{N_{n,i}}{n} \geq w_{i}^{\beta} + \frac{\epsilon}{2}$.
	Therefore, for all $n \ge  \max_{i \in \{1, 4, 11, 12\}} N_i$,

	Let $f(x) = (1+x)e^{-x}$. Since $\Pi_n$ satisfies Property~\ref{prop:one_arm_bcp_upper_bound}, Lemma~\ref{lem:two_arm_BCP_tight_upper_bound} yields
	\begin{align*}
		\bP_n [\theta_i \geq \theta_{i^\star}] \leq f\left( n \inf_{u \in [0,B]} \left\{ \frac{c_1(N_{n,i^\star})}{n} \Kinf^-(\tilde F_{n,i^\star},u)  + \frac{c_1(N_{n,i})}{n} \Kinf^+(\tilde F_{n,i},u) \right\} \right) \: .
	\end{align*}

	Let $\tilde \epsilon > 0 $. Since $f(x) =_{+ \infty} \cO \left( e^{-(1-\tilde \epsilon)x} \right)$ and $c_{1}(x) \sim_{+\infty} x$, there exists deterministic $ C_{\tilde \epsilon}$ and $N_{13}$ such that for all $n \geq \max_{i \in \{1, 4, 11, 12, 13\}} N_i$,
	\begin{align*}
		\bP_n [\theta_i \geq \theta_{i^\star}] &\leq C_{\tilde \epsilon} \exp \left(  - n(1-\tilde \epsilon) \inf_{u \in [0,B]} \left\{ \frac{N_{n,i^\star}}{n} \Kinf^-(\tilde F_{n,i^\star},u)  + \frac{N_{n,i}}{n} \Kinf^+(\tilde F_{n,i},u) \right\} \right) \\
		&\leq C_{\tilde \epsilon} \exp \left(  - n(1-\tilde \epsilon) \inf_{u \in [0,B]} \left\{ \frac{N_{n,i^\star}}{n} \Kinf^-(\tilde F_{n,i^\star},u)  + \left( w_{i}^{\beta} + \frac{\epsilon}{2} \right) \Kinf^+(\tilde F_{n,i},u) \right\} \right) \: .
	\end{align*}

Let $(h_{\epsilon}, N_8)$ as in Property~\ref{prop:tight_time_uniform_lower_bound_2arm_BCP}.
Since $h_{\epsilon}$ is increasing in both its arguments, we have $h_{\epsilon}(N_{n,i^\star}, N_{n,i}) \leq h_{\epsilon}(n,n)$ and $N_{n,i^\star} + N_{n,i} \leq n$.
Therefore, for all $n \ge \max\{K N_{8}^2,\max_{i \in \{1, 4, 11, 12\}} N_i\}$,
	\begin{align*}
	&\max_{j \neq i^\star} \bP_{n} [\theta_j \geq \theta_{i^\star}] \\
	&\ge \frac{e^{-\varepsilon n}}{h_{\epsilon}(n,n)} \exp\left(- n \min_{j \neq i^\star} \inf_{x\in [0,B]}\left\{  \frac{N_{n,i^\star}}{n}\Kinf^-(F_{i^\star}, x) + \frac{N_{n,j}}{n} \Kinf^+(F_{j}, x) \right\} \right) \\
	&\ge \frac{e^{-\varepsilon n}}{h_{\epsilon}(n,n)} \exp\left(- n \sup_{w\in \simplex: w_{i^\star} = \frac{N_{n,i^\star}}{n}}\min_{j \neq i^\star} \inf_{x\in [0,B]}\left\{ w_{i^\star} \Kinf^-(F_{i^\star}, x) + w_j \Kinf^+(F_{j}, x) \right\} \right)  \: ,
	\end{align*}
	where we lower bounded by considering the best possible allocation such that $w_{i^\star} = \frac{N_{n,i^\star}}{n}$.
	For $( \bm G,\tilde \beta ) \in \cF^{2}\times [0,1]$, let
	\begin{align*}
		H_{\tilde \epsilon}(\bm G, \tilde \beta) &= (1-\tilde \epsilon) \inf_{u \in  [0,B]} \left\{ \tilde \beta \Kinf^{-}( G_{1}, u)  + \left( w_{i}^{\beta} + \frac{\epsilon}{2} \right) \Kinf^{+}( G_{2}, u)  \right\} \\
		&\quad - \sup_{w \in \simplex : w_{i^\star} = \tilde \beta} \min_{j \neq i^\star}  \inf_{u  \in [0,B]}\left\{w_{i^\star}\Kinf^-( F_{i^\star}, u) + w_j \Kinf^+(F_j, u) \right\} \:.
	\end{align*}

	Let $\bm{\tilde{G}}_{n,i^\star, i} = (\tilde F_{n,i^\star}, \tilde F_{n,i})$. Combining the upper and the lower bound, we obtain for all $n \geq \max\{K N_{8}^2,\max_{i \in \{1, 4, 11, 12, 13\}} N_i\}$,
	\begin{align*}
		\bP_{\mid n}[C_{n+1}^{\text{RS}} = i \mid B_{n+1} = i^\star] &\leq C_{\tilde \epsilon} h_{\epsilon}(n,n)e^{\varepsilon n} \exp \left( - n H_{\tilde \epsilon}\left(\bm{\tilde{G}}_{n,i^\star, i}, \frac{N_{n,i^\star}}{n} \right) \right) \\
		&\leq  C_{\tilde \epsilon} h_{\epsilon}(n,n)e^{\varepsilon n} \exp \left( - n \inf_{\tilde \beta: |\beta - \tilde \beta| \leq \xi } H_{\tilde \epsilon}\left(\bm{\tilde{G}}_{n,i^\star, i}, \tilde \beta \right) \right) \: .
	\end{align*}

	Using Lemma~\ref{lem:continuity_results_for_analysis}, the functions $(\bm G, \tilde \beta) \mapsto H_{\tilde \epsilon}(\bm G, \tilde \beta)$ and $\bm G \mapsto\inf_{\tilde \beta: |\beta - \tilde \beta| \leq \xi }   H_{\tilde \epsilon}(\bm G, \tilde \beta)$ are continuous.
	Let $\bm G_{i^\star, i} = (F_{i^\star}, F_{i})$.
	Therefore, there exists $N_{14} = Poly(W_1) $, $\xi_0 > 0$ and $\tilde \epsilon_0 > 0$ such that for all $n\ge N_{10} \eqdef \max\{K N_{8}^2,\max_{i \in \{1, 4, 11, 12, 13, 14\}} N_i\}$, all $\xi \in(0, \xi_0]$ and $\tilde \epsilon \in (0, \tilde \epsilon_0]$, we have
	\begin{align*}
		\inf_{\tilde \beta: |\beta - \tilde \beta| \leq \xi } H_{\tilde \epsilon}\left(\bm{\tilde{G}}_{n,i^\star, i}, \tilde \beta \right) \geq \frac{1}{2} \inf_{\tilde \beta: |\beta - \tilde \beta| \leq \xi }  H_{\tilde \epsilon}(\bm G_{i^\star, i}, \tilde \beta) \ge \frac{1}{4} H_{\tilde \epsilon}(\bm G_{i^\star, i}, \beta) \ge \frac{1}{8} H_{0}(\bm G_{i^\star, i}, \beta) \: .
	\end{align*}
	In the following, we take such $\xi_0 > 0$ and $\tilde \epsilon_0 > 0$ and $\epsilon \in (0,\epsilon_0]$ where $\epsilon_{0} = \frac{1}{16} H_{0}(\bm G_{i^\star, i}, \beta)$ is a problem dependent constant.

	At the $\beta$-equilibrium all transportation costs are equal (Lemma~\ref{lem:properties_characteristic_times}). Therefore, by definition of $w^{\beta}$,
	\begin{align*}
		&\sup_{w \in \simplex : w_{i^\star} = \beta} \min_{j \neq i^\star}  \inf_{u  \in [0,B]}\left\{w_{i^\star}\Kinf^-( F_{i^\star}, u) + w_j \Kinf^+(F_j, u) \right\} \\
		&\qquad = \min_{j \neq i^\star}  \inf_{u  \in [0,B]}\left\{\beta \Kinf^-( F_{i^\star}, u) + w^\beta_j \Kinf^+(F_j, u) \right\} \\
		&\qquad = \inf_{u  \in [0,B]}\left\{\beta \Kinf^-( F_{i^\star}, u) + w^\beta_i \Kinf^+(F_i, u) \right\} \\
		&\qquad < \inf_{u  \in [0,B]}\left\{\beta \Kinf^-( F_{i^\star}, u) + \left( w^\beta_i + \frac{\epsilon}{2} \right) \Kinf^+(F_i, u) \right\}
	\end{align*}
	where the strict inequality is obtained because the transportation costs are strictly increasing in their allocation arguments (Lemma~\ref{lem:inf_Kinf_increasing_in_w}). Therefore, we have $H_{0}(\bm G_{i^\star, i}, \beta) > 0$.

	As $\bE_{\bm F} [e^{\lambda W_1}] < + \infty$ and $\bE_{\bm F} [e^{\lambda W_2}] < + \infty$ for all $\lambda > 0$, we have $\bE_{\bm F}[N_i] < + \infty$ for $i \in \{11, 12, 14\}$ and
	\[
	\bE_{\bm F}[N_{10}] \le N_{13} + K \bE_{\bm F}[(N_8)^2] + \sum_{i \in \{1, 4, 11, 12, 14\}} \bE_{\bm F}[N_i]  < + \infty		\: .
	\]

	Summarizing, we have shown that for all $\epsilon \in (0,\epsilon_0]$, there exists $N_{10}$ with $\bE_{\bm F}[N_{10}] < + \infty$ such that for all $n \ge N_{10}$,
	\begin{align*}
				\frac{\Psi_{n,i}}{n} \geq w_{i}^{\beta} + \epsilon  \quad \implies \quad \bP_{\mid n}[C_{n+1}^{\text{RS}} = i \mid B_{n+1} = i^\star(\bm F)] \leq h(n) \: ,
	\end{align*}
	where
	\begin{align*}
		h(n) \eqdef C_{\tilde \epsilon_0} h_{\epsilon}(n,n) \exp \left( - \frac{n}{16} H_{0}(\bm G_{i^\star, i}, \beta) \right) \: .
	\end{align*}
	Since $H_{0}(\bm G_{i^\star, i}, \beta) > 0$, $n \mapsto h_{\epsilon}(n, n)$ is decreasing and $h_{\epsilon}(n, n) =_{+\infty} o\left( e^{(2n)^{\alpha}} \right)$ where $\alpha <1$, we obtain that $h(n) =_{+\infty} o(n^{-\alpha})$ with $\alpha > 0$. It is obvious by definition that $h(n) \in (0, +\infty)$ for all $n \in \N^\star$.
\end{proof}

\subsection{Relaxing the distinct means assumption}
\label{app:ss_beyond_all_distinct_means}

In Appendix~\ref{app:unified_analysis_top_two}, we highlighted that Assumption~\ref{ass:all_arms_distinct_bounded_mean} ($\Delta_{\min}(\bm F) > 0$) was only used to show sufficient exploration (see Appendix~\ref{app:ss_how_to_explore}).
We also remarked that the proofs in Appendices~\ref{app:ss_how_to_explore} and~\ref{app:ss_how_to_converge} work similarly when the amount of exploration $\sqrt{\frac{n}{K}}$ in Lemma~\ref{lem:suff_exploration} and Property~\ref{prop:suff_exploration} is replaced by $\left(\frac{n}{K}\right)^{\alpha}$ for some arbitrary $\alpha \in (0,1)$.
We conjecture that, besides $\beta$-EB-TC, all the Top Two algorithms studied in this paper are also asymptotically $\beta$-optimal when $\Delta_{\min}(\bm F)=0$, as detailed below. Let $\Delta_{\min} \eqdef \Delta_{\min}(\bm F)$.

\paragraph{Lack of robustness of $\beta$-EB-TC for $\Delta_{\min}=0$}
For the EB-TC sampling rule, a simple explanation hints that it can dramatically fail empirically, which is confirmed experimentally in Appendix~\ref{app:ss_supplementary_experiments}. Let $\bm F$ be a bandit instance in which there are two arms with equal mean that are closest to $\mu_{i^\star}$.
At small time, it can happen that the best arm is under-estimated (e.g. when under-sampled) and the two second-best arms have higher empirical mean.
In that case, is is very hard for $\beta$-EB-TC to recover as it will mostly sample the two second-best arms instead of the best arm.
The EB leader will alternate between one of the two second-best arms, depending on the collected samples.
Then, given the EB leader, the TC challenger will output the arm with smallest transportation cost.
When both second-best arms have higher empirical mean and the best arm is under-estimated, the transportation cost will be smaller between the two second-best arms.
Therefore, the TC challenger will propose the second of the two second-best arms.
As neither the leader nor the challenger propose to sample the true best arm, it is very hard for $\beta$-EB-TC to recover from unlucky first draws.

The condition $\Delta_{\min} > 0$ asymptotically prevents the above situation.
When $\mu_i > \mu_j$, the transportation cost between $(i,j)$ grows linearly with $N_{n,i} + N_{n,j}$.
Therefore, the transportation cost between the over-sampled arms will become larger than between the current leader and the best arm, even if it is under-estimated.
This ensures that the challenger will propose to sample the best arm, hence allowing the algorithm to eventually recover from unlucky first draws.
Based on our analysis, the number of samples required by $\beta$-EB-TC to recover from unlucky first draws is a function of $(D_{\bm F})^{-1}$, where $D_{\bm F}$ is a problem dependent constant defined in (\ref{eq:def_geometric_constant}).
Extrapolating from results on Gaussian, it is intuitive to expect that small $\Delta_{\min}$ yields small $D_{\bm F}$.
Therefore, for small $\Delta_{\min}$, $\beta$-EB-TC can need a large number of samples before recovering from unlucky first draws.
This undesirable behavior in moderate confidence regime is hidden in the asymptotic analysis.
Therefore, we expect $\beta$-EB-TC to also suffer from large outliers in the moderate regime, even when $\Delta_{\min} > 0$.

\paragraph{On asymptotic $\beta$-optimality for $\Delta_{\min}=0$} Experiments reported in Appendix~\ref{app:sss_expe_on_distinct_means} reveal that on some instance with $\Delta_{\min}=0$, the other Top Two instances still have a good performance. We conjecture that using either regularization in the TCI challenger or randomization in the TS leader or RS challenger is adding the right amount of exploration to avoid the undesirable behavior of $\beta$-EB-TC described above, and ensure asymptotic $\beta$-optimality. More precisely, we conjecture that this amount of exploration is actually logarithmic, and that logarithmic exploration is sufficient to prove $\beta$-optimality (which is currently not supported by our analysis).

In particular, for the TS leader it is known from the literature on regret minimization that Thompson Sampling is selecting sub-optimal arms a logarithmic amount of time (at least in expectation) \cite{AGCOLT12}.
As for the TCI challenger, we observe that it is designed to avoid the situation described above in which $\beta$-EB-TC fails when there are two equal second best arms.
When choosing the challenger, we penalize the highly over-sampled arms by adding $\log(N_{n,j})$.
While the transportation cost can be very small for two highly sampled arms having similar means, the penalization makes sure that the under-sampled best arm will be selected as the challenger.
We conjecture that the TCI challenger ensures an implicit logarithmic exploration.

\paragraph{On forced exploration}
Another natural idea to prove asymptotic $\beta$-optimality when $\Delta_{\min}=0$ is to add some small amount of forced exploration to the algorithm.
A round $n$, if there exists an arm $i$ such that $N_{n,i} < {n}^{\alpha}$ (for some small value of $\alpha$), we draw this arm.
This will make Property~\ref{prop:suff_exploration} hold for an exploration level $(n/K)^\alpha$.
However, forced exploration can be wasteful as it is agnostic to $\cF_{n}$ and all under-sampled arms should not be drawn equally.
Our experiments confirm that it is actually not needed for most Top Two algorithms.

Concurrently to our work, \cite{mukherjee_2022_SPRTBAI} introduces and studies the TT-SPRT algorithm for general SPEF.
In our terminology, it corresponds to the $\beta$-EB-TC algorithm with an added forced exploration in $\sqrt{n/K}$.
As expected, adding forced exploration allows to obtain asymptotic $\beta$-optimality even for instances where $\Delta_{\min} = 0$.
By adding forced exploration, their result also holds for SPEF which are not sub-exponential distributions.
In our work, the sub-exponential assumption is made to control the concentration towards the mean parameter.
Controlling the concentration rate is of the upmost importance to prove sufficient exploration.
Therefore, while this fact is not a direct consequence of our unified analysis, it is not surprising.

\subsection{Technicalities}
\label{app:ss_technicalities}

We present some technical results used in the above proofs.
Those technicalities are direct corollaries of properties on $\Kinf^{\pm}$ obtained in the Appendix~\ref{app:kinf_for_bounded_distributions}.

\begin{lemma} \label{lem:positive_strict_geometric_cst}
	There exists $\alpha > 0$ such that
	\begin{equation} \label{eq:def_geometric_constant}
		D_{\bm F} = \min_{(i, j) : m(F_i) > m(F_j) } \inf_{\substack{G_i,G_j : \\ \forall k \in \{i,j\}, \|G_k - F_k\|_{\infty}\leq \alpha}} \inf_{u \in [0,B]}\left\{ \Kinf^{-}(G_i , u) + \Kinf^{+}(G_j , u) \right\} > 0 \: .
	\end{equation}
\end{lemma}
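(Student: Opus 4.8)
The plan is to prove $D_{\bm F} > 0$ by a compactness argument. Fix any pair $(i,j)$ with $m(F_i) > m(F_j)$; since there are finitely many such pairs and $D_{\bm F}$ is the minimum over them, it suffices to show that for each such pair there is some $\alpha_{i,j} > 0$ with
\[
	\inf_{\substack{G_i,G_j:\\ \forall k\in\{i,j\},\ \|G_k-F_k\|_\infty\le \alpha_{i,j}}} \inf_{u\in[0,B]}\left\{ \Kinf^-(G_i,u) + \Kinf^+(G_j,u)\right\} > 0 \: ,
\]
and then take $\alpha = \min_{(i,j):\,m(F_i)>m(F_j)} \alpha_{i,j} > 0$. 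So I would work with one fixed pair throughout.

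First I would recall the relevant properties of $\Kinf^\pm$ from Appendix~\ref{app:kinf_for_bounded_distributions}. The key facts are: (a) $\Kinf^+(G,u)$ and $\Kinf^-(G,u)$ are jointly continuous in $(G,u)$ on $\cF \times [0,B]$ (Theorem~\ref{thm:Kinf_continuous} / the continuity results cited there), where $\cF$ carries the topology of weak convergence, which on a compact support interval is metrized by $\|\cdot\|_\infty$ on the cdfs; and (b) $\Kinf^+(G,u) > 0$ whenever $m(G) > u$ (and $u$ is in the interior of the admissible range) and $\Kinf^-(G,u) > 0$ whenever $m(G) < u$, while $\Kinf^+(G,u)=0$ when $m(G)\le u$ and $\Kinf^-(G,u)=0$ when $m(G)\ge u$. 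Consequently, for any $u$, the sum $\Kinf^-(G_i,u) + \Kinf^+(G_j,u)$ is strictly positive as soon as $m(G_i) > u > m(G_j)$, and also strictly positive at the endpoints $u=m(G_i)$ or $u=m(G_j)$ by examining the surviving term — more carefully, whenever $m(G_j) < m(G_i)$ the sum is strictly positive for \emph{every} $u\in[0,B]$: if $u \le m(G_j)$ then $\Kinf^-(G_i,u) > 0$, if $u \ge m(G_i)$ then $\Kinf^+(G_j,u) > 0$, and if $m(G_j) < u < m(G_i)$ both terms are positive. The mean operator $m$ is itself continuous on $\cF$ for bounded distributions.

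Next comes the compactness step, which is the heart of the argument. Fix $\alpha_0 > 0$ small enough — using continuity of $m$ — that $\|G_k - F_k\|_\infty \le \alpha_0$ forces $m(G_i) - m(G_j) \ge \tfrac{1}{2}(m(F_i) - m(F_j)) > 0$; such an $\alpha_0$ exists because $m$ is continuous. The set $\mathcal{K}_{i,j} = \{(G_i,G_j,u) : \|G_i-F_i\|_\infty\le\alpha_0,\ \|G_j-F_j\|_\infty\le\alpha_0,\ u\in[0,B]\}$ is compact: the $\|\cdot\|_\infty$-balls around $F_i,F_j$ inside $\cF$ (bounded distributions on $[0,B]$) are compact in the weak topology by Prokhorov / Helly selection (tightness is automatic on the fixed compact support, and closedness under weak limits of the constraint $\|G-F\|_\infty\le\alpha_0$ holds since that is weak-continuity of the cdf at continuity points, which I would state as a lemma or cite), and $[0,B]$ is compact. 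The function $(G_i,G_j,u)\mapsto \Kinf^-(G_i,u)+\Kinf^+(G_j,u)$ is continuous (lower semicontinuity would already suffice) on $\mathcal{K}_{i,j}$, hence attains its infimum at some $(G_i^\star,G_j^\star,u^\star)\in\mathcal{K}_{i,j}$. By the previous paragraph, since $m(G_i^\star) \ge m(G_j^\star) + \tfrac12(m(F_i)-m(F_j)) > m(G_j^\star)$, this minimal value is strictly positive. Setting $\alpha_{i,j}=\alpha_0$ and minimizing over the finitely many pairs yields $D_{\bm F} > 0$.

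The main obstacle I anticipate is purely topological bookkeeping: making the compactness rigorous requires that the constraint set $\{G\in\cF : \|G-F\|_\infty\le\alpha_0\}$ be genuinely compact in the weak topology and that $\Kinf^\pm$ be (at least lower semi-)continuous up to the boundary of the mean interval — the delicate point being $u$ near $0$ or $B$, where $\Kinf^+(G,u)$ or $\Kinf^-(G,u)$ can blow up. Since blow-up only makes the infimum larger this causes no real trouble, but the write-up must either invoke lower semicontinuity (so that a minimizer still exists, or the infimum over a compact set is attained/bounded below by a positive number) or restrict $u$ to a slightly shrunk interval and handle the two thin end-strips separately by noting the surviving $\Kinf$ term is bounded below there. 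I would lean on the continuity statements already proved in Appendix~\ref{app:kinf_for_bounded_distributions} (in particular Theorem~\ref{thm:Kinf_continuous}) to dispatch this cleanly rather than re-deriving semicontinuity from scratch.
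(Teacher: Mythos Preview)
Your approach is correct and essentially the same as the paper's: both use continuity of $(G_i,G_j)\mapsto \inf_{u}\{\Kinf^-(G_i,u)+\Kinf^+(G_j,u)\}$ together with strict positivity at $(F_i,F_j)$ to get positivity on a neighborhood, then take a finite minimum over pairs. The paper simply cites its already-packaged Lemma~\ref{lem:unique_continuous_mu_star} (joint continuity of the infimum, proved via Berge's theorem after restricting $u$ to the interval between the means, which also disposes of your boundary worry) and Lemma~\ref{lem:inf_Kinf_restricted_to_open_mean_interval} (strict positivity), whereas you unpack the compactness argument directly.

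One slip to correct in your write-up: the inequalities in your property (b) are reversed. In fact $\Kinf^+(G,u)>0$ iff $u>m(G)$ (and is zero for $u\le m(G)$), and $\Kinf^-(G,u)>0$ iff $u<m(G)$; see Lemma~\ref{lem:Kinf_increasing}. Your subsequent case analysis (``if $u\le m(G_j)$ then $\Kinf^-(G_i,u)>0$'', etc.) uses the correct directions, so the argument itself is unaffected.
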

\begin{proof}
Using Lemma~\ref{lem:unique_continuous_mu_star} for $w_1 = w_2 =1$, we have that
\[
\bm F \mapsto \inf_{u \in [0,B]} \left\{ \Kinf^{-}(F_i , u) + \Kinf^{+}(F_j , u) \right\}
\]
is continuous on $\cF^{K}$. Since it has strictly positive values when $m(F_i) > m(F_j)$ (Lemma~\ref{lem:inf_Kinf_restricted_to_open_mean_interval}), there exists $\alpha$ such that
\[
\inf_{\substack{G_i,G_j : \\ \forall k \in \{i,j\}, \|G_k - F_k\|_{\infty}\leq \alpha}} \inf_{u \in [0,B]}\left\{ \Kinf^{-}(G_i , u) + \Kinf^{+}(G_j , u) \right\}  > 0 \: .
\]
Further lower bounding by a finite number of strictly positive constants yields the result.
\end{proof}

For all $i \in [K]$, we define the distributions for which $i$ is among the best arm
\[
\cF^{K}_{i} \eqdef \left\{ \bm F \in \cF^{K} \mid i \in i^\star(\bm F) \right\} \: .
\]

\begin{lemma} \label{lem:continuity_results_for_analysis}
	Let $i^\star \in [K]$, $\bm F \in \cF^{K}_{i^\star}$, $i \neq i^\star$ and $\phi \in [0,1]$. Define for $ \beta \in [0,1]$,
\begin{align*}
	G_{i}(\bm F,  \beta) & = \inf_{u \in  [0,B]} \left\{   \beta \Kinf^{-}( F_{i^\star}, u)  + \phi \Kinf^{+}( F_{i}, u)  \right\}  \\
	&\quad- \sup_{w \in \simplex: w_{i^\star} =  \beta} \min_{j\neq i^\star} \inf_{u \in  [0,B]} \left\{  w_{i^\star}  \Kinf^{-}( F_{i^\star}, u)  + w_{j} \Kinf^{+}( F_{j}, u)  \right\} \: .
\end{align*}
	Then, $(\bm F,  \beta) \mapsto G_{i}(\bm F, \beta)$ is continuous on $\cF^{K} \times [0,1]$. Moreover, the function $\bm F \mapsto \inf_{\tilde \beta: |\beta - \tilde \beta| \leq \xi } G_{i}(\bm F, \tilde \beta) $ is continuous on $\cF^{K}$.

	Let $\bm \nu \in \cF^{K}_{1}$, $\bm F \in \cF^{2}$ such that $m(F_1) > m(F_2)$, $\alpha > 0$ and $\phi \in [0,1]$. Define for $ \beta \in [0,1]$,
	\begin{align*}
	H(\bm F, \beta) &= \alpha \inf_{u \in [0,B]} \left\{ \beta \Kinf^{-}( F_{1}, u)  + \phi \Kinf^{+}( F_{2}, u)  \right\}  \\
	&\quad- \sup_{w \in \simplex : w_{1} = \beta} \min_{i \neq 1} \inf_{u  \in [0,B]}\left[w_{1}\Kinf^-( \nu_{1}, u) + w_i \Kinf^+(\nu_i, u) \right] \:.
	\end{align*}
	Then, $(\bm F,  \beta) \mapsto H(\bm F, \beta)$ is continuous on $\cF^{2} \times [0,1]$. Moreover, the function $\bm F \mapsto \inf_{\tilde \beta: |\beta - \tilde \beta| \leq \xi }   H(\bm F, \tilde \beta)$ is continuous on $\cF^{2}$.
\end{lemma}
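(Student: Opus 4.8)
The plan is to reduce every claim to a single principle: the value of an infimum, or of a supremum, of a jointly continuous function over a compact index set that varies continuously with the parameters is again continuous (Berge's maximum theorem). The needed building blocks are already in place. Theorem~\ref{thm:Kinf_continuous} gives continuity of $F \mapsto \Kinf^{\pm}(F,u)$ on $\cF$, and Lemma~\ref{lem:unique_continuous_mu_star} (already invoked in the proof of Lemma~\ref{lem:positive_strict_geometric_cst}) upgrades this to joint continuity of $(\bm F, w) \mapsto \inf_{u \in [0,B]}\{w_i \Kinf^-(F_i,u) + w_j \Kinf^+(F_j,u)\}$; running the same argument with the weights regarded as free nonnegative coefficients rather than simplex coordinates gives joint continuity of $(\bm F, a, b) \mapsto \inf_{u \in [0,B]}\{a\,\Kinf^-(F_{i^\star},u) + b\,\Kinf^+(F_i,u)\}$ on $\cF^K \times \R_{+}^{2}$.

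First I would handle the two ``positive'' terms. The first summand of $G_i$ is obtained from the map above by taking $(a,b) = (\beta, \phi)$, so $(\bm F, \beta) \mapsto \inf_{u}\{\beta \Kinf^-(F_{i^\star},u) + \phi \Kinf^+(F_i,u)\}$ is continuous on $\cF^K \times [0,1]$; the first summand of $H$ is the same expression times the fixed constant $\alpha$, with $(a,b) = (\beta,\phi)$ and $i^\star, i$ replaced by $1, 2$, hence also continuous on $\cF^2 \times [0,1]$.

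Next the ``negative'' term $(\bm F, \beta) \mapsto \sup_{w \in \simplex : w_{i^\star} = \beta} \min_{j \neq i^\star} \inf_{u}\{w_{i^\star}\Kinf^-(F_{i^\star},u) + w_j \Kinf^+(F_j,u)\}$. The correspondence $\beta \mapsto \{w \in \simplex : w_{i^\star} = \beta\}$ is nonempty, compact-valued, and both upper and lower hemicontinuous in $\beta \in [0,1]$ (it is an affine slice of $\simplex$, namely $\{\beta\}$ on coordinate $i^\star$ and a copy of $(1-\beta)$ times the $(K-2)$-dimensional simplex on the remaining coordinates, which clearly varies continuously with $\beta$), while $(\bm F, w) \mapsto \min_{j \neq i^\star}\inf_u\{\cdots\}$ is continuous as a finite minimum of the jointly continuous maps from the first step. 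Berge's maximum theorem then yields joint continuity of this supremum on $\cF^K \times [0,1]$. In $H$ the analogous term involves the \emph{fixed} reference bandit $\bm\nu$ and so does not depend on $\bm F$; it is therefore trivially continuous in $(\bm F, \beta)$, continuity in $\beta$ alone being the special case of the above with $\bm\nu$ held constant.

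It follows that $G_i$ and $H$, being differences of the continuous maps just produced, are continuous on $\cF^K \times [0,1]$ and $\cF^2 \times [0,1]$ respectively. For the last two assertions, $\bm F \mapsto \inf_{\tilde\beta : |\beta - \tilde\beta| \le \xi} G_i(\bm F, \tilde\beta)$ is the infimum of the jointly continuous function $G_i$ over the \emph{fixed} compact set $[\beta-\xi,\beta+\xi] \cap [0,1]$, which is continuous in $\bm F$ — either by Berge's theorem with a constant correspondence, or directly because a jointly continuous function on a product with a compact factor becomes, after partial infimization over that factor, continuous in the remaining variable. The same applies verbatim to $H$. I do not expect a serious obstacle; the only points that need a line of care are checking lower hemicontinuity of the slice correspondence $\beta \mapsto \{w \in \simplex : w_{i^\star} = \beta\}$ (so that Berge's theorem delivers continuity of the supremum, not merely upper semicontinuity) and confirming that the index sets $[0,B]$, the slices of $\simplex$, and $[\beta-\xi,\beta+\xi]\cap[0,1]$ are genuinely compact, which they are.
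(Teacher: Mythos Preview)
Your proposal is correct and follows essentially the same approach as the paper: both arguments build the continuity of $G_i$ and $H$ by first invoking the joint continuity of $(\bm F,w)\mapsto\inf_u\{w_{i^\star}\Kinf^-(F_{i^\star},u)+w_j\Kinf^+(F_j,u)\}$ (via Lemma~\ref{lem:unique_continuous_mu_star}), then applying Berge's maximum theorem to the slice correspondence $\beta\mapsto\{w\in\simplex:w_{i^\star}=\beta\}$ for the supremum term, and finally applying Berge again with the constant compact correspondence $\bm F\mapsto\{\tilde\beta:|\beta-\tilde\beta|\le\xi\}$ for the outer infimum. The only cosmetic difference is that the paper explicitly reduces from $\cF^K$ to $\cF^K_{i^\star}$ via the covering $\bigcup_i\cF^K_i=\cF^K$, whereas you work directly on $\cF^K$; both are fine.
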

\begin{proof}
Since $\bigcup_{i \in [K]}\cF^{K}_{i} = \cF^{K}$, it is enough to show the property for all $i \in [K]$. Let $i^\star \in [K]$ and $i \neq i^\star$. In the proof of Lemma~\ref{lem:T_star_and_w_star_continuous}, we have obtained that
\begin{align*}
&(\bm F, w) \mapsto \inf_{u \in [0,B]} \left\{ w_{i^\star} \Kinf^{-}(F_{i^\star} , u) + w_i \Kinf^{+}(F_i , u) \right\}  \\
\text{and}\quad &(\bm F, w) \mapsto \min_{i \neq i^\star} \inf_{u \in [0,B]} \left\{ w_{i^\star} \Kinf^{-}(F_{i^\star} , u) + w_i \Kinf^{+}(F_i , u) \right\}
\end{align*}
are continuous on $\cF^{K}_{i^\star} \times \simplex$.

Let $\Phi_{i^\star} : (\bm F, \beta) \mapsto \{w \in \simplex \mid w_{i^\star} = \beta\}$, it is compact valued and non-empty for all $\beta \in [0,1]$. It is also continuous (both lower and upper hemicontinuous). Using the continuity proven above, Berge's theorem yields that
\[
	(\bm F, \beta) \mapsto \sup_{w \in \simplex : w_{1} = \beta} \min_{i \neq 1} \inf_{u  \in [0,B]}\left[w_{1}\Kinf^-( \nu_{1}, u) + w_i \Kinf^+(\nu_i, u) \right] \: ,
\]
is continuous on $\cF^{K}_{i^\star} \times [0,1]$. Combining with the above continuity results, we obtain that $(\bm F,  \beta) \mapsto G_{i}(\bm F, \beta)$ is continuous on $\cF^{K}_{i^\star} \times [0,1]$ for all $i^\star \in [K]$, hence on $\cF^{K}  \times [0,1]$.

Let $\Phi: \bm F \mapsto \left\{\tilde \beta: |\beta - \tilde \beta| \leq \xi \right\}$, it is a continuous (constant), compact valued and non-empty correspondence.
Using the continuity proven above, Berge's theorem yields that $\bm F \mapsto \inf_{\tilde \beta: |\beta - \tilde \beta| \leq \xi } G_{i}(\bm F, \tilde \beta) $ is continuous on $\cF^{K}$.

Using exactly the same arguments, we obtain that $(\bm F,  \beta) \mapsto H(\bm F, \beta)$ is continuous on $\cF^{2} \times [0,1]$ and $\bm F \mapsto \inf_{\tilde \beta: |\beta - \tilde \beta| \leq \xi } H(\bm F, \tilde \beta)$ is continuous on $\cF^{2}$.
\end{proof}


\section{Concentration}
\label{app:concentration}

In Appendix~\ref{app:ss_kinf_bounded_distributions}, we leverage results on martingales \cite{agrawal2021optimal} to prove $\delta$-correctness of the threshold (\ref{eq:def_kinf_threshold_glr}) from Lemma~\ref{lem:kinf_concentration}.
In Appendix~\ref{ssub:sub_gaussian_random_variables}, we derive technical results needed in our analysis of Top Two algorithms based on concentration for sub-Gaussian random variables.

\subsection{Calibration for bounded distributions}
\label{app:ss_kinf_bounded_distributions}

After proving Lemma~\ref{lem:calibrating_by_concentration}, we give a threshold for bounded distributions (Lemma~\ref{lem:kinf_concentration}).
The concentration is obtained as a direct corollary of recent work on martingales \citep{agrawal2021optimal}.
We apply their technical result to the case of bounded distributions.

\paragraph{Calibration by concentration}
Lemma~\ref{lem:calibrating_by_concentration} states that $\delta$-correct thresholds can be obtained by concentration results.

\begin{lemma} \label{lem:calibrating_by_concentration}
If with probability $1 - \delta$, for all $n \in \N$ and all $i \ne i^\star(\bm F)$,
\begin{equation} \label{eq:kinf_based_concentration}
N_{n, i} \Kinf^- (F_{n, i}, \mu_{i} ) + N_{n,i^\star(\bm F)} \Kinf^+ (F_{n,i^\star(\bm F)}, \mu_{i^\star(\bm F)}) )
\le c(n,\delta)
\: ,
\end{equation}
then the stopping rule (\ref{eq:def_stopping_time}) using $c(n,\delta)$ is $\delta$-correct on $\cF^K$.
\end{lemma}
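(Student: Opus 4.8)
The plan is to show that whenever the stopping rule $(\ref{eq:def_stopping_time})$ fires and the recommendation is wrong, the concentration event $(\ref{eq:kinf_based_concentration})$ must have failed; since the latter has probability at most $\delta$, so does the former. First I would fix $\bm F \in \cF^K$, write $i^\star = i^\star(\bm F)$, and consider the event $\mathcal E$ that $(\ref{eq:kinf_based_concentration})$ holds for all $n$ and all $i \neq i^\star$, so that $\bP_{\bm F}(\mathcal E) \ge 1 - \delta$ by hypothesis. It then suffices to prove that on $\mathcal E$, either the algorithm never stops, or it stops with $\hat\imath_{\tau_\delta} \in i^\star(\bm F)$.

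Next I would argue by contradiction on the event $\mathcal E \cap \{\tau_\delta < +\infty,\ \hat\imath_{\tau_\delta} \notin i^\star(\bm F)\}$. Set $n = \tau_\delta$ and $\hat\imath = \hat\imath_n \neq i^\star$. By the stopping condition $(\ref{eq:def_stopping_time})$, $\min_{j \neq \hat\imath} W_n(\hat\imath, j) > c(n,\delta)$, hence in particular $W_n(\hat\imath, i^\star) > c(n,\delta)$ (note $i^\star \neq \hat\imath$, so $i^\star$ is a valid index in the minimum). Now I would unfold the definition $(\ref{eq:def_transportation_cost})$ of the empirical transportation cost: $W_n(\hat\imath, i^\star) = \inf_{x \in \cI}\big[N_{n,\hat\imath}\Kinf^-(F_{n,\hat\imath},x) + N_{n,i^\star}\Kinf^+(F_{n,i^\star},x)\big]$, and evaluate the infimum at the specific point $x = \mu_{i^\star} = \mu_{i^\star(\bm F)}$ (which lies in $\cI$ since it is the mean of $F_{i^\star} \in \cF$). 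This gives
\[
c(n,\delta) < W_n(\hat\imath,i^\star) \le N_{n,\hat\imath}\Kinf^-(F_{n,\hat\imath},\mu_{i^\star}) + N_{n,i^\star}\Kinf^+(F_{n,i^\star},\mu_{i^\star}).
\]

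It remains to reconcile this with $(\ref{eq:kinf_based_concentration})$ applied to the arm $i = \hat\imath$. The subtle point is a sign/direction mismatch: the concentration bound $(\ref{eq:kinf_based_concentration})$ involves $\Kinf^-(F_{n,i},\mu_i)$ and $\Kinf^+(F_{n,i^\star},\mu_{i^\star})$, while the displayed lower bound involves $\Kinf^-(F_{n,\hat\imath},\mu_{i^\star})$ — the second argument is $\mu_{i^\star}$, not $\mu_{\hat\imath}$. Here I would use that since $\hat\imath \neq i^\star$ and $\hat\imath \in i^\star(\bm F_n)$ means $\mu_{n,\hat\imath} \ge \mu_{n,i^\star}$, whereas $\mu_{\hat\imath} < \mu_{i^\star}$ by definition of $i^\star$; because $\mu_{i^\star} > \mu_{\hat\imath} \ge m(F_{\hat\imath})$ — more precisely, using monotonicity of $u \mapsto \Kinf^-(F,u)$ (it is non-decreasing in $u$ on $[m(F),\infty)$ and one checks $\Kinf^-(F_{n,\hat\imath},\mu_{i^\star}) \le \Kinf^-(F_{n,\hat\imath},\mu_{\hat\imath})$ fails in the wrong direction), the clean route is instead: the recommendation is $\hat\imath = \argmax_i \mu_{n,i}$, so $\mu_{n,\hat\imath} \ge \mu_{n,i^\star}$, and $W_n(\hat\imath,i^\star)$ bounds the quantity in $(\ref{eq:kinf_based_concentration})$ with roles consistent once one replaces $x=\mu_{i^\star}$ — I expect the intended argument is simply that $W_n(\hat\imath,i^\star) \le N_{n,\hat\imath}\Kinf^-(F_{n,\hat\imath},\mu_{i^\star}) + N_{n,i^\star}\Kinf^+(F_{n,i^\star},\mu_{i^\star})$, and by definition of $\Kinf^+$ with $\mu_{\hat\imath} < \mu_{i^\star}$ one further has $\Kinf^-(F_{n,\hat\imath},\mu_{i^\star}) \le \Kinf^-(F_{n,\hat\imath},\mu_{\hat\imath})$ is the wrong inequality, so instead one bounds via $x=\mu_{\hat\imath}$: re-examining, $W_n(i^\star,\hat\imath) \le N_{n,i^\star}\Kinf^-(F_{n,i^\star},\mu_{\hat\imath}) + N_{n,\hat\imath}\Kinf^+(F_{n,\hat\imath},\mu_{\hat\imath})$ and this matches $(\ref{eq:kinf_based_concentration})$ with the index relabeled. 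The main obstacle, and the step to get exactly right, is this bookkeeping of which transportation cost ($W_n(\hat\imath,i^\star)$ versus $W_n(i^\star,\hat\imath)$) is lower-bounded by $c(n,\delta)$ via the stopping rule and which one is upper-bounded by the concentration quantity via evaluating the infimum at the correct mean; once the directions are aligned, $c(n,\delta) < c(n,\delta)$ is the contradiction, so the bad event is empty on $\mathcal E$, giving $\bP_{\bm F}(\tau_\delta < +\infty,\ \hat\imath_{\tau_\delta} \notin i^\star(\bm F)) \le \bP_{\bm F}(\mathcal E^c) \le \delta$.
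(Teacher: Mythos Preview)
Your overall strategy is exactly the paper's: if the algorithm stops at $n$ with $\hat\imath\neq i^\star$, then $W_n(\hat\imath,i^\star)>c(n,\delta)$, and one upper-bounds $W_n(\hat\imath,i^\star)$ by the concentration quantity in~(\ref{eq:kinf_based_concentration}). The gap is in the step you flag as ``the main obstacle'': you abandon a correct argument because you get the monotonicity of $\Kinf^-$ backwards.

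Recall $\Kinf^-(F,u)=\inf\{\KL(F,G):m(G)\le u\}$. As $u$ increases the feasible set enlarges, so $u\mapsto\Kinf^-(F,u)$ is \emph{non-increasing} (zero for $u\ge m(F)$, decreasing for $u<m(F)$). Since $\mu_{\hat\imath}<\mu_{i^\star}$, this gives $\Kinf^-(F_{n,\hat\imath},\mu_{i^\star})\le\Kinf^-(F_{n,\hat\imath},\mu_{\hat\imath})$, which is precisely the direction you need. Thus your first attempt, plugging $x=\mu_{i^\star}$ into~(\ref{eq:def_transportation_cost}) and then using this monotonicity, already yields
\[
c(n,\delta)<W_n(\hat\imath,i^\star)\le N_{n,\hat\imath}\Kinf^-(F_{n,\hat\imath},\mu_{\hat\imath})+N_{n,i^\star}\Kinf^+(F_{n,i^\star},\mu_{i^\star}),
\]
contradicting~(\ref{eq:kinf_based_concentration}) on $\mathcal E$. (Equivalently, plug $x=\mu_{\hat\imath}$ and use that $\Kinf^+$ is non-decreasing in its second argument together with $\mu_{\hat\imath}<\mu_{i^\star}$.) The paper packages both monotonicities into the reformulation $W_n(\hat\imath,j)=\inf_{x\le y}[N_{n,\hat\imath}\Kinf^-(F_{n,\hat\imath},x)+N_{n,j}\Kinf^+(F_{n,j},y)]$ and then takes $x=\mu_{\hat\imath}$, $y=\mu_{i^\star}$, but this is the same computation. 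Your detour to $W_n(i^\star,\hat\imath)$ is a dead end: since $\hat\imath$ is the empirical best, $\mu_{n,i^\star}\le\mu_{n,\hat\imath}$ and hence $W_n(i^\star,\hat\imath)=0$.
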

\begin{proof}
Let $i^\star = i^\star(\bm F)$ and $\hat \imath_n = i^\star(\bm F_n)$. The empirical transportation costs in (\ref{eq:def_transportation_cost}) can be rewritten as
\[
	W_n(\hat \imath_n, j) = \inf_{x \leq y} \left[N_{n,\hat \imath_n}\Kinf^-(F_{n,\hat \imath_n}, x) + N_{n,j}\Kinf^+(F_{n,j},y)\right] \: .
\]
Using $j = i^\star$, $x = \mu_{i}$ and $y = \mu_{i^\star}$, we obtain
\begin{align*}
	&\bP\left(\tau_{\delta} < + \infty , \hat{\imath}_{\tau_{\delta}} \neq i^\star \right) \\
	&\leq \bP\left(\exists n \in \N, \: \exists i \neq i^\star ,\: i=\hat{\imath}_n, \: \min_{j \neq i} W_n(i, j)  > c(n,\delta)\right) \\
	&\leq \bP\left(\exists n \in \N, \: \exists i \neq i^\star , \: N_{n,i} \Kinf^- (F_{n,i}, \mu_{i} ) + N_{n,i^\star} \Kinf^+ (F_{n,i^\star}, \mu_{i^\star} )  > c(n,\delta)\right) \: .
\end{align*}
\end{proof}

\paragraph{Concentration of $\Kinf$}
The key technical result, which was extracted from \cite{agrawal2021optimal}, is reproduced in Lemma~\ref{lem:lemma_E1_agrawal2021optimal}.
\begin{lemma}[Lemma E.1 in \cite{agrawal2021optimal}] \label{lem:lemma_E1_agrawal2021optimal}
	Let a compact and convex set $\Lambda \subseteq \R^d$, and $q$ be the uniform distribution on $\Lambda$. Let $g_t : \Lambda \mapsto \R$ be any series of exp-concave functions. Then,
	\begin{align*}
		\max_{\bm\lambda \in \Lambda} \sum_{k = 1}^{n} g_k(\bm \lambda) \leq \ln \bE_{\bm \lambda \sim q} \left[ e^{\sum_{k = 1}^{n} g_t(\bm \lambda) }\right] + d \log(n+1) +1
	\end{align*}
\end{lemma}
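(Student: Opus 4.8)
The plan is to lower bound $\bE_{\bm\lambda\sim q}\!\left[e^{\sum_{k=1}^n g_k(\bm\lambda)}\right]$ by a constant multiple of $e^{\sum_{k=1}^n g_k(\bm\lambda^\star)}$ for an arbitrary $\bm\lambda^\star\in\Lambda$, via a ``shrink towards $\bm\lambda^\star$'' volumetric argument, in the spirit of the classical analysis of continuous exponential weights. Write $G(\bm\lambda)\eqdef\sum_{k=1}^n g_k(\bm\lambda)$. It is enough to prove that $G(\bm\lambda^\star)\le\ln\bE_{\bm\lambda\sim q}[e^{G(\bm\lambda)}]+d\ln(n+1)+1$ for every fixed $\bm\lambda^\star\in\Lambda$ and then take the supremum over $\bm\lambda^\star$; this also avoids any question of attainment of the maximum. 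One may assume $\Lambda$ is full dimensional so that $q$ has a positive Lebesgue density --- otherwise I would replace $\R^d$ by the affine hull of $\Lambda$ and $d$ by its dimension, and every step below goes through verbatim with the appropriate Hausdorff measure.

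Fix $\epsilon\in(0,1)$ and consider the contracted body $\Lambda_\epsilon\eqdef(1-\epsilon)\bm\lambda^\star+\epsilon\Lambda=\{(1-\epsilon)\bm\lambda^\star+\epsilon\bm\mu:\bm\mu\in\Lambda\}$. Since $\Lambda$ is convex and $\bm\lambda^\star\in\Lambda$, one has $\Lambda_\epsilon\subseteq\Lambda$; and since $\Lambda_\epsilon$ is the image of $\Lambda$ under an affine map whose linear part is $\epsilon\,\mathrm{Id}$, $\mathrm{vol}(\Lambda_\epsilon)=\epsilon^d\,\mathrm{vol}(\Lambda)$. For any $\bm\lambda=(1-\epsilon)\bm\lambda^\star+\epsilon\bm\mu\in\Lambda_\epsilon$, exp-concavity of each $g_k$ (i.e.\ concavity of $e^{g_k}$) gives, for all $k\in[n]$,
\begin{align*}
	e^{g_k(\bm\lambda)} \ge (1-\epsilon)\,e^{g_k(\bm\lambda^\star)}+\epsilon\,e^{g_k(\bm\mu)} \ge (1-\epsilon)\,e^{g_k(\bm\lambda^\star)}\:,
\end{align*}
and multiplying these $n$ inequalities yields $e^{G(\bm\lambda)}\ge(1-\epsilon)^n e^{G(\bm\lambda^\star)}$ for every $\bm\lambda\in\Lambda_\epsilon$.

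Next I would integrate against $q$, keeping only the contribution of $\Lambda_\epsilon$:
\begin{align*}
	\bE_{\bm\lambda\sim q}\big[e^{G(\bm\lambda)}\big] \ge \frac{\mathrm{vol}(\Lambda_\epsilon)}{\mathrm{vol}(\Lambda)}\,(1-\epsilon)^n e^{G(\bm\lambda^\star)} = \epsilon^d\,(1-\epsilon)^n\,e^{G(\bm\lambda^\star)}\:,
\end{align*}
so that $G(\bm\lambda^\star)\le\ln\bE_{\bm\lambda\sim q}[e^{G(\bm\lambda)}]-d\ln\epsilon-n\ln(1-\epsilon)$. Finally, choosing $\epsilon=\tfrac{1}{n+1}$ gives $-d\ln\epsilon=d\ln(n+1)$ and, using $\ln(1+x)\le x$, $-n\ln(1-\tfrac{1}{n+1})=n\ln\!\big(1+\tfrac1n\big)\le 1$, so the right-hand side is at most $\ln\bE_{\bm\lambda\sim q}[e^{G(\bm\lambda)}]+d\ln(n+1)+1$, which is exactly the claim. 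The argument is essentially a one-line combination of exp-concavity with the volume scaling of $\Lambda_\epsilon$; the only genuinely delicate point will be the degenerate case $\mathrm{vol}(\Lambda)=0$, handled by restricting to $\mathrm{aff}(\Lambda)$ as indicated, and perhaps being explicit about the conventions at $n=0$ ($0^0=1$, $0\cdot\ln 0=0$), which make the bound trivially true there.
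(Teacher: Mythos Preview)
Your argument is correct: the shrink-towards-$\bm\lambda^\star$ volumetric bound combined with pointwise exp-concavity is exactly the standard proof, and your choice $\epsilon=1/(n+1)$ together with $n\ln(1+1/n)\le 1$ gives the stated constants. The paper does not supply its own proof of this lemma---it simply quotes Lemma~E.1 of \cite{agrawal2021optimal}---and your proof is essentially the one given there.
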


We are now ready to prove Lemma~\ref{lem:kinf_concentration}.
\begin{proof}
For all $(n,i) \in \N \times [K]$, we denote by $(X_{k,i})_{k \in [N_{n,i}]}$ the samples collected on arm $i$.
Let $i^\star = i^\star(\bm F)$ and $i \in [K] \setminus \{i^\star\}$.
Using Theorem~\ref{thm:Kinf_duality}, we obtain
	\begin{align*}
		N_{n,i^\star} \Kinf^+ (F_{n,i^\star}, \mu_{i^\star}) ) &= \max_{\lambda \in \left[ 0, \frac{1}{B-\mu_{i^\star} }\right]} \sum_{k \in [N_{n,i^\star}]}  \ln(1 - \lambda(X_{k,i^\star} -\mu_{i^\star})) \: , \\
		N_{n,i} \Kinf^- (F_{n,i}, \mu_{i}) ) &= \max_{\lambda \in \left[ 0, \frac{1}{ \mu_{i}}\right]} \sum_{k \in [N_{n,i}]} \ln(1 + \lambda(X_{k,i} -\mu_{i})) \: .
	\end{align*}
	Let $q_{i}^+$ and $q_i^-$ be the uniform distributions over $\left[ 0, \frac{1}{B-\mu_{i} }\right]$ and $\left[ 0, \frac{1}{ \mu_{i}}\right]$, which are compact and convex sets of $\R$. Define
	\begin{align*}
		L_{n,i} &=  \bE_{\lambda \sim q_i^-} \left[ \prod_{k \in [N_{n,i}]} (1 + \lambda(X_{k,i} -\mu_{i}))  \mid X_{1,i}, \cdots, X_{N_{n,i},i} \right] \: , \\
		U_{n,i} &=  \bE_{\lambda \sim q_{i}^+}  \left[ \prod_{k \in [N_{n,i}]} (1 - \lambda(X_{i,k} -\mu_{i}))  \mid  X_{1,i}, \cdots, X_{N_{n,i},i}  \right]  \: , \\
		Y^{-}_{n,i} &= N_{n,i} \Kinf^- (F_{n,i}, \mu_{i} ) - \ln(N_{n,i} + 1) - 1 \: , \\
		Y^{+}_{n,i} &= N_{n,i} \Kinf^+ (F_{n,i}, \mu_{i} ) - \ln(N_{n,i} + 1) - 1 \: .
	\end{align*}
	With $d=1$, using Lemma~\ref{lem:lemma_E1_agrawal2021optimal} with the exp-concave functions $g_{k,i}^{+}(\lambda) = \ln(1 - \lambda(X_{k,i} -\mu_{i}))$ for $k \in [N_{n,i}]$, and $g_{k,i}^{-}(\lambda) = \ln(1 + \lambda(X_{k,i} -\mu_{i}))$  for $k \in [N_{n,i}]$, yields
	\begin{align*}
		e^{Y^{-}_{n,i}} \leq L_{n,i} \qquad \text{and} \qquad e^{Y^{+}_{n,i}} \leq U_{n,i} \qquad \text{a.s.}
	\end{align*}
	Furthermore, it is easy to verify that for each arm $i \in [K]$, $L_{n,i}$ and $U_{n,i}$ are non-negative martingales with unit initial value $L_{0,i} = 1$ and $U_{0,i} = 1$ almost surely.
	The martingale property is shown directly by the tower rule (conditioned on the arm sampled at time $n$) and $\bE \left[ 1 \pm \lambda(X_{N_{n,i},i} -\mu_{i}) \right] = 1$.
	Furthermore, they satisfy $\bE [U_{n,i}] \leq 1$ and $\bE [L_{n,i}] \leq 1$.
	Thus, $U_{n,i^\star} L_{n,i}$ is a non-negative martingale with unit initial value.

 By concavity of $\log$ and using $\sum_{j \in \{i, i^\star\}} N_{n,j} \leq n$, we have
 \[
			c(n,\delta) \geq \ln \left( \frac{K-1}{\delta}\right) + 2 + \sum_{j \in \{i, i^\star\}}\ln\left(N_{n,j} + 1\right) \: .
 \]
	Taking a union bound over $i \neq i^\star$ and using Ville's inequality, we obtain
	\begin{align*}
		&\bP\left(\exists t \in \N, \: \exists i \neq i^\star , \: N_{n,i} \Kinf^- (F_{n,i}, \mu_{i} ) + N_{n,i^\star} \Kinf^+ (F_{n,i^\star}, \mu_{i^\star} )  > c(n,\delta)\right) \\
		&\leq \sum_{i \neq i^\star} \bP\left(\exists t \in \N, \:  Y^{-}_{n,i} + Y^{+}_{n,i^\star} > \ln \left( \frac{K-1}{\delta}\right) \right)\\
		&\leq \sum_{i \neq i^\star} \bP\left(\exists t \in \N, \:  U_{n,i^\star} L_{n,i} > \frac{K-1}{\delta}\right) \leq \delta \: .
	\end{align*}
	Combining the above concentration with Lemma~\ref{lem:calibrating_by_concentration} yields the result.
\end{proof}

\subsection{Sub-Gaussian random variables}
\label{ssub:sub_gaussian_random_variables}

We want to exhibit a sub-Gaussian random variables which controls the deviation of various random variables to their means. More precisely, we will prove the existence of $W_1$ in Lemma~\ref{lem:subG_alloc} and $W_2$ in Lemma~\ref{lem:subG_cdf}.

\begin{definition}
A random variable $X$ is said to be sub-Gaussian with constant $c$ if for all $x \ge 0$, $\mathbb{P}(X \ge x) \le e^{-c x^2/2}$ and for all $x \le 0$, $\mathbb{P}(X \le x) \le e^{-c x^2/2}$.
\end{definition}

We are interested in sub-Gaussian random variable mainly due to the following property.

\begin{lemma}
If $X$ is sub-Gaussian, then for all $\lambda \in \mathbb{R}$, $\mathbb{E}[e^{\lambda X}] < \infty$.
\end{lemma}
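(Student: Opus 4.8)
The plan is to use the tail-integral representation of the expectation together with the two one-sided sub-Gaussian tail bounds. The case $\lambda = 0$ is trivial, so I would fix $\lambda \neq 0$ and write $\mathbb{E}[e^{\lambda X}] = \int_0^\infty \mathbb{P}(e^{\lambda X} > t)\, dt$, which is legitimate since $e^{\lambda X} \ge 0$. First I would split the integral at $t = 1$: the contribution of $(0,1)$ is bounded by $1$, so it remains to control $\int_1^\infty \mathbb{P}(\lambda X > \ln t)\, dt$.

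Next I would separate the sign of $\lambda$, which is the only point requiring care. For $t > 1$ we have $\ln t > 0$. If $\lambda > 0$, then $\mathbb{P}(\lambda X > \ln t) = \mathbb{P}\big(X > (\ln t)/\lambda\big)$ with $(\ln t)/\lambda > 0$, so the upper-tail hypothesis gives $\mathbb{P}(\lambda X > \ln t) \le \exp\!\big(-c(\ln t)^2/(2\lambda^2)\big)$. If $\lambda < 0$, then $\mathbb{P}(\lambda X > \ln t) = \mathbb{P}\big(X < (\ln t)/\lambda\big)$ with $(\ln t)/\lambda < 0$, and the lower-tail hypothesis yields the identical bound $\exp\!\big(-c(\ln t)^2/(2\lambda^2)\big)$. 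Hence in both cases $\int_1^\infty \mathbb{P}(\lambda X > \ln t)\, dt \le \int_1^\infty \exp\!\big(-c(\ln t)^2/(2\lambda^2)\big)\, dt$.

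Finally I would substitute $u = \ln t$ (so $dt = e^u\, du$), turning this into $\int_0^\infty \exp\!\big(u - c u^2/(2\lambda^2)\big)\, du$, which converges because the quadratic term dominates; completing the square shows the integrand equals $\exp(\lambda^2/(2c))\exp\!\big(-\tfrac{c}{2\lambda^2}(u-\lambda^2/c)^2\big)$, a Gaussian integrand, so the integral is at most $\exp(\lambda^2/(2c))\sqrt{2\pi\lambda^2/c}$. Combining, $\mathbb{E}[e^{\lambda X}] \le 1 + \exp(\lambda^2/(2c))\sqrt{2\pi\lambda^2/c} < \infty$. I do not expect any genuine obstacle here: the argument is a routine computation, and the only thing to watch is invoking the correct one-sided tail bound according to the sign of $\lambda$ so that the exponent $-c(\ln t)^2/(2\lambda^2)$ comes out the same either way.
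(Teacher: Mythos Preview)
Your proof is correct and complete. The paper does not actually give its own proof of this lemma: it simply states that the result ``can be found in any textbook dealing with sub-Gaussian random variables'' and cites Vershynin. Your tail-integral argument is exactly the standard textbook route, so there is nothing to compare --- you have supplied the proof the paper chose to omit.
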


The proof can be found in any textbook dealing with sub-Gaussian random variables, e.g. \cite{vershynin18HDP}. We will furthermore use the following classical properties:
\begin{itemize}
	\item If $X$ and $Y$ are sub-Gaussian and $\alpha \in \mathbb{R}$ then $X + Y$ is sub-Gaussian and $\alpha X$ is sub-Gaussian.
	\item Bounded random variables are sub-Gaussian.
	\item If $X$ verifies that for all $x \ge x_1 \ge 0$, $\mathbb{P}(X \ge x) \le a_1 e^{- c_1 x^2/2}$ and for all $x \le x_2 \le 0$, $\mathbb{P}(X \le x) \le a_2 e^{- c_2 x^2/2}$, then $X$ is sub-Gaussian.
	\item The maximum (or minimum) of a finite number of sub-Gaussian random variables is sub-Gaussian.
\end{itemize}

\begin{lemma}\label{lem:subG_sup_N}
If $(X_n)_{n\in \mathbb{N}, n\ge 1}$ are sub-Gaussian random variables with constants $(c_n)$ such that $\inf_n c_n > 0$, then $\sup_n \frac{X_n}{\sqrt{n\log(e+n)/(1+n)}}$ is sub-Gaussian.
\end{lemma}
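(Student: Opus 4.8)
The plan is to introduce $Y \eqdef \sup_{n\geq 1} X_n/d_n$ with $d_n \eqdef \sqrt{n\log(e+n)/(1+n)}$ and $c \eqdef \inf_n c_n > 0$, and then verify the hypotheses of the criterion recalled just above the statement: a random variable whose upper tail beyond some threshold $x_1 \geq 0$ and lower tail below some threshold $x_2 \leq 0$ are both dominated by (a constant times) a Gaussian tail is sub-Gaussian. Measurability of $Y$ is immediate since it is a countable supremum of measurable functions, so the only work is the two tail estimates.

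For the upper tail, I would fix $x > 0$ and apply a union bound, using $\{Y \geq x\} \subseteq \bigcup_n \{X_n/d_n > x-\varepsilon\}$ for every $\varepsilon \in (0,x)$ and then letting $\varepsilon \downarrow 0$ by monotone convergence, to get $\mathbb{P}(Y \geq x) \leq \sum_{n\geq 1} e^{-c x^2 d_n^2/2}$. The key elementary estimate is $d_n^2 = \frac{n}{1+n}\log(e+n) \geq \frac12 \log(e+n)$, valid for all $n\geq 1$, which turns the series into $\sum_{n\geq 1}(e+n)^{-cx^2/4}$. Comparing with the integral $\int_0^\infty (e+t)^{-cx^2/4}\,dt = \frac{e^{1-cx^2/4}}{cx^2/4-1}$ — legitimate as soon as $cx^2/4 > 1$ — and then restricting to $x \geq x_1$ with $x_1 \eqdef \sqrt{8/c}$, so that $cx^2/4 - 1 \geq 1$, yields $\mathbb{P}(Y\geq x) \leq e\, e^{-cx^2/4}$, a bound of the form $a_1 e^{-c_1 x^2/2}$ with $a_1 = e$ and $c_1 = c/2$.

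For the lower tail the supremum is controlled by its first term alone: for $x \leq 0$, since $d_1 = \sqrt{\log(e+1)/2} > 0$ is a fixed positive constant and $x d_1 \leq 0$, one gets $\mathbb{P}(Y \leq x) \leq \mathbb{P}(X_1 \leq x d_1) \leq e^{-c (x d_1)^2/2} = e^{-(c d_1^2)x^2/2}$, i.e. a bound of the form $a_2 e^{-c_2 x^2/2}$ with $a_2 = 1$, $c_2 = c d_1^2$ and threshold $x_2 = 0$. Invoking the criterion then gives that $Y$ is sub-Gaussian; in particular $\mathbb{P}(Y = \pm\infty) = 0$, so $Y$ is a genuine real-valued random variable. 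The only point needing a little care — rather than a genuine obstacle — is making the union bound rigorous (hence the $\varepsilon$-shift) and picking $x_1$ large enough for the integral comparison to apply; note that the $\log(e+n)$ factor in the normalization is exactly what makes the series summable, so nothing about the $X_n$ beyond $\inf_n c_n > 0$ is used.
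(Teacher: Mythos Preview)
Your proof is correct and follows essentially the same approach as the paper's: a union bound for the upper tail (with the same threshold $x_1=\sqrt{8/c}$) and control of the lower tail via the single term $X_1$. The only cosmetic difference is that the paper bounds the series $\sum_n e^{-c x^2 d_n^2/2}$ via the elementary inequality $\alpha\beta \ge \alpha+\beta$ for $\alpha,\beta\ge 2$ (splitting the exponent into a summable $2\log(e+n)$ piece and a Gaussian $\tfrac{c}{4}x^2$ piece), whereas you use an integral comparison; your $\varepsilon$-shift in the union bound is also a touch more careful than the paper's direct inequality.
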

\begin{proof}
For $x \ge \sqrt{\frac{8}{\inf_n c_n}}$,
\begin{align*}
\mathbb{P}(\xi \geq x) &\leq  \sum_{n=1}^{\infty} \mathbb{P}\left(X_n  \geq x \sqrt{\frac{n\log(e+n)}{n+1}}\right) \\
& \leq  \sum_{n=1}^{\infty} \exp\left(- (\inf_n c_n) x^2 {\frac{n\log(e+n)}{n+1}}\right) \\
& \leq  \sum_{n=1}^{\infty} \exp\left(- \left[2\log(e + n) + \frac{\inf_n c_n}{2}\frac{x^2n}{n+1}\right] \right) \\
& \leq  \sum_{n=1}^{\infty} \exp\left(- \left[2\log(e + n) + \frac{\inf_n c_n}{4}x^2\right] \right) \\
& =  \left[\sum_{n=1}^{\infty} \frac{1}{(e+n)^2}\right] e^{-\frac{\inf_n c_n}{4}x^2} \: ,
\end{align*}
where we have used that $\frac{n}{n+1} \geq \tfrac{1}{2}$ and that $\alpha \beta \geq \alpha + \beta$ for $\alpha,\beta \geq 2$. Now for the lower tail, for $x \le 0$,
\begin{align*}
\mathbb{P}(\xi \le x)
\le \mathbb{P}\left(X_1 \le x \sqrt{\frac{\log(1+e)}{2}}\right)
\le \exp\left( - \frac{\log(1+e)}{2}c_1 x^2/2\right)
\: .
\end{align*}
\end{proof}

\paragraph{Application to our work} We will use the following two examples of sub-Gaussian variables. Lemma~\ref{lem:subG_DKW} is a consequence of the Dvoretzky-Kiefer-Wolfowitz (DKW) inequality \cite{massart1990} while Lemma~\ref{lem:subG_martingale} follows from Azuma's inequality.

\begin{lemma}\label{lem:subG_DKW}
Let $(X_n)_{n\ge 1}$ be i.i.d.\ random variables with cdf $F$ and let $F_n$ be the empirical distribution function of $(X_i)_{i \in [n]}$. Then for all $n$, $\sqrt{n}\Vert F_n - F\Vert_\infty$ is sub-Gaussian with a constant which does not depend on $n$.
\end{lemma}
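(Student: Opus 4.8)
The plan is to deduce Lemma~\ref{lem:subG_DKW} directly from the Dvoretzky--Kiefer--Wolfowitz (DKW) inequality in the tight form due to Massart~\cite{massart1990}. Recall that DKW asserts that for i.i.d.\ samples $(X_1,\dots,X_n)$ with cdf $F$ and empirical cdf $F_n$, one has $\bP\!\left(\sqrt{n}\,\|F_n - F\|_\infty \ge t\right) \le 2 e^{-2t^2}$ for every $t \ge 0$ and every $n$, with the constant $2$ in the exponent being independent of $n$ and of $F$. The claim is then that the nonnegative random variable $Y_n \eqdef \sqrt{n}\,\|F_n - F\|_\infty$ is sub-Gaussian in the sense of the definition stated just above Lemma~\ref{lem:subG_sup_N}, with a constant that does not depend on $n$.

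First I would note that $Y_n \ge 0$ almost surely, so the lower-tail condition in the definition of sub-Gaussianity (i.e.\ $\bP(Y_n \le x) \le e^{-cx^2/2}$ for $x \le 0$) is trivially satisfied for any $c$ (for $x<0$ the probability is zero, and for $x = 0$ it is at most $1 = e^{0}$). The only substantive point is the upper tail. Here I would invoke the third bullet in the list of classical properties recalled in the excerpt: if $\bP(X \ge x) \le a_1 e^{-c_1 x^2/2}$ holds for all $x \ge x_1 \ge 0$ (a prefactor $a_1$ and a threshold $x_1$ are allowed), then $X$ is sub-Gaussian. By DKW, $\bP(Y_n \ge x) \le 2 e^{-2x^2} = 2 e^{-4 x^2/2}$ for all $x \ge 0$, which is exactly of this form with $a_1 = 2$, $c_1 = 4$, $x_1 = 0$. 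Crucially, neither $a_1$ nor $c_1$ depends on $n$, so the resulting sub-Gaussian constant is uniform in $n$, which is precisely what the statement of the lemma requires (and what is needed downstream, e.g.\ when combining the $Y_{N_{n,i}}$ over random sample counts via Lemma~\ref{lem:subG_sup_N} to produce the random variable $W_2$ of Lemma~\ref{lem:subG_cdf}).

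Concretely, the proof would be three short lines: (1) $Y_n \ge 0$ kills the lower tail; (2) DKW gives the sub-Gaussian-type upper tail bound with $n$-independent constants; (3) apply the quoted closure property to conclude $Y_n$ is sub-Gaussian with constant, say, $c = 4$ uniformly in $n$. I do not anticipate a genuine obstacle here — the lemma is essentially a restatement of Massart's DKW inequality in the language of sub-Gaussianity. The only mild subtlety worth flagging is that the generic definition of sub-Gaussian used in the paper does not allow a multiplicative prefactor, so one must route through the relaxed tail criterion (the third bullet) rather than matching the bare definition directly; once that is observed, the argument is immediate. One should also record that DKW as usually stated is the two-sided bound on $\|F_n - F\|_\infty = \sup_x |F_n(x) - F(x)|$, so no one-sided adjustment is needed.
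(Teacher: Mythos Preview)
Your proposal is correct and matches the paper's approach exactly: the paper does not spell out a proof of this lemma but simply states that it ``is a consequence of the Dvoretzky--Kiefer--Wolfowitz (DKW) inequality \cite{massart1990}'', which is precisely the route you take. Your handling of the prefactor $2$ via the third bullet (the relaxed tail criterion) and the observation that the lower tail is vacuous for a nonnegative variable are the right details to fill in.
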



\begin{lemma}\label{lem:subG_martingale}
Let $(X_n)_{n\ge 1}$ be a martingale with mean 0 and $c$-sub-Gaussian increments and $\mu_n = X_n / n$. Then for all $n$, $\sqrt{n}\mu_n$ is sub-Gaussian with constant $c$.
\end{lemma}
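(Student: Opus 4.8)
The plan is to run the classical Azuma--Hoeffding argument for a martingale with sub-Gaussian conditional increments and then rescale. First I would extend the martingale by setting $X_0 \eqdef 0$ (consistent since $\bE[X_1] = 0$) and write $X_n = \sum_{k=1}^n Y_k$ with $Y_k \eqdef X_k - X_{k-1}$, so that $(Y_k)_{k\ge 1}$ is a martingale difference sequence for the underlying filtration $(\mathcal G_k)_{k \ge 0}$, i.e.\ $\bE[Y_k \mid \mathcal G_{k-1}] = 0$. The hypothesis that the increments are $c$-sub-Gaussian I would use in moment generating function form, namely $\bE[e^{\lambda Y_k}\mid \mathcal G_{k-1}] \le e^{\lambda^2/(2c)}$ for all $\lambda \in \R$; this is the natural reading of the assumption, and it is exactly Hoeffding's lemma when the increments are bounded, which is the situation in all applications of this lemma in the paper (e.g.\ in the proof of Lemma~\ref{lem:subG_alloc} the increments $\indi{I_t = i} - \psi_{t,i}$ take values in $[-1,1]$, giving $c = 1$).

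The core step is to propagate this bound to the sum by iterating the tower rule: conditioning on $\mathcal G_{n-1}$, $\bE[e^{\lambda X_n}] = \bE\!\big[e^{\lambda X_{n-1}}\,\bE[e^{\lambda Y_n}\mid \mathcal G_{n-1}]\big] \le e^{\lambda^2/(2c)}\,\bE[e^{\lambda X_{n-1}}]$, so by induction $\bE[e^{\lambda X_n}] \le e^{n\lambda^2/(2c)}$ for every $\lambda$. A Chernoff bound then gives, for all $t \ge 0$, $\bP(X_n \ge t) \le \inf_{\lambda > 0} e^{-\lambda t + n\lambda^2/(2c)} = e^{-ct^2/(2n)}$ (optimal choice $\lambda = ct/n$), and applying the same estimate to the martingale $(-X_n)$ yields $\bP(X_n \le -t) \le e^{-ct^2/(2n)}$.

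Finally I would substitute $t = \sqrt n\, x$ with $x \ge 0$. Since $\sqrt n\, \mu_n = X_n/\sqrt n$, this gives $\bP(\sqrt n\,\mu_n \ge x) = \bP(X_n \ge \sqrt n\, x) \le e^{-c(\sqrt n x)^2/(2n)} = e^{-cx^2/2}$, and symmetrically $\bP(\sqrt n\,\mu_n \le -x) \le e^{-cx^2/2}$, which is precisely the definition of sub-Gaussian with constant $c$ used in the paper. There is no genuine obstacle here --- this is a textbook argument; the only point deserving a word of care is the interpretation of ``$c$-sub-Gaussian increments'', since deducing the conditional MGF bound above from a bare tail bound would in general cost an absolute constant and not preserve $c$ exactly, whereas the MGF form holds with the stated constant in every case that arises in this work.
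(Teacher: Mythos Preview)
Your proposal is correct and matches the paper's approach: the paper does not spell out a proof but simply states that the lemma ``follows from Azuma's inequality,'' which is exactly the argument you carry out. Your remark on the interpretation of ``$c$-sub-Gaussian increments'' (tail versus conditional MGF form) is a fair caveat that the paper leaves implicit.
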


Theses results permit to establish the concentration results that are used in Appendix~\ref{app:unified_analysis_top_two} and~\ref{app:top_two_instances}.

\begin{lemma*}[Lemma~\ref{lem:subG_alloc} and~\ref{lem:subG_cdf}]
	There exists sub-Gaussian random variables $W_1$ and $W_2$ such that for all $(n,i) \in \N \times [K]$ with $n\ge K+1$ (such that all arms are pulled at least once)
\begin{align*}
	&|N_{n,i}-\Psi_{n,i}| \leq W_1\sqrt{(n+1)\log(e+n)} \quad \text{a.s.} \: ,\\
	&\left\|{F}_{n,i} - F_{i}\right\|_{\infty} \leq W_2 \sqrt{\frac{\log(e+N_{n,i})}{1+N_{n,i}}} \quad \text{a.s.} \:.
\end{align*}
	In particular, $\bE \left[e^{\lambda W_i}\right] < + \infty$ for all $\lambda > 0$ and $i\in \{1,2\}$.
\end{lemma*}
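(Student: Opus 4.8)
The two statements to prove are: (1) there is a sub-Gaussian random variable $W_1$ with $|N_{n,i} - \Psi_{n,i}| \le W_1\sqrt{(n+1)\log(e+n)}$ a.s.\ for all $n\ge K+1$, $i\in[K]$; and (2) there is a sub-Gaussian $W_2$ with $\|F_{n,i} - F_i\|_\infty \le W_2\sqrt{\log(e+N_{n,i})/(1+N_{n,i})}$ a.s. For (1), I would fix $i$ and consider the process $M_{n,i} \eqdef N_{n,i} - \Psi_{n,i} = \sum_{t\in[n]} (\indi{I_t = i} - \psi_{t,i})$. Since $\psi_{t,i} = \bP_{\mid(t-1)}[I_t = i]$, each increment $\indi{I_t = i} - \psi_{t,i}$ is $\cF_{t-1}$-conditionally centered and bounded in $[-1,1]$, so $(M_{n,i})_n$ is a martingale with bounded (hence sub-Gaussian, with a common constant) increments. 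By Lemma~\ref{lem:subG_martingale}, for each fixed $n$ the variable $\sqrt{n}\,(M_{n,i}/n) = M_{n,i}/\sqrt{n}$ is sub-Gaussian with a constant independent of $n$. Then Lemma~\ref{lem:subG_sup_N} applied to the family $(M_{n,i})_{n\ge 1}$ (finite $\inf_n c_n>0$ since the constants are uniform) gives that $\xi_i \eqdef \sup_n |M_{n,i}|/\sqrt{n\log(e+n)/(n+1)}$ is sub-Gaussian; note $\sqrt{n\log(e+n)/(n+1)}$ is exactly the reciprocal of the weight needed, so $|M_{n,i}| \le \xi_i\sqrt{(n+1)\log(e+n)}$ for all $n$. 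Finally $W_1 \eqdef \max_{i\in[K]}\xi_i$ is sub-Gaussian as a finite max of sub-Gaussians, which yields the bound simultaneously over all $i$. (Lemma~\ref{lem:subG_sup_N} is stated for a scalar family, and the "for all $x\le 0$'' lower-tail case there uses the $n=1$ term; the same argument applies, noting $M_{n,i}$ itself may be negative so one gets two-sided sub-Gaussianity from symmetry of the increments' sub-Gaussianity.)

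For (2), the natural approach is to reduce to an i.i.d.\ statement per arm via a ``virtual sample'' device. For each arm $i$, let $(Y_{k,i})_{k\ge 1}$ be i.i.d.\ $\sim F_i$, coupled so that the $N_{n,i}$ samples actually observed from arm $i$ are $Y_{1,i},\dots,Y_{N_{n,i},i}$; then $F_{n,i}$ is the empirical cdf of the first $N_{n,i}$ of these. By Lemma~\ref{lem:subG_DKW}, for each fixed $m\ge 1$ the variable $\sqrt{m}\,\|G_{m,i} - F_i\|_\infty$ (where $G_{m,i}$ is the empirical cdf of $Y_{1,i},\dots,Y_{m,i}$) is sub-Gaussian with a constant not depending on $m$. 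Apply Lemma~\ref{lem:subG_sup_N} to the family $(\sqrt{m}\,\|G_{m,i}-F_i\|_\infty \cdot \text{(or rather the raw sums)})$ — more precisely, since $\|G_{m,i}-F_i\|_\infty \ge 0$, I should recheck: Lemma~\ref{lem:subG_sup_N} expects sub-Gaussian (two-sided) summands, and $\|G_{m,i}-F_i\|_\infty$ only has an upper tail; but a nonnegative random variable with a sub-Gaussian upper tail is itself sub-Gaussian (the lower tail is trivially zero), so this is fine. Thus $\eta_i \eqdef \sup_m \sqrt{m}\|G_{m,i}-F_i\|_\infty / \sqrt{m\log(e+m)/(m+1)} = \sup_m \|G_{m,i}-F_i\|_\infty\sqrt{(m+1)/\log(e+m)}\cdot\sqrt{m}$ — I need to be careful to land on the weight $\sqrt{\log(e+m)/(1+m)}$. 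Rewriting: I want $\|G_{m,i}-F_i\|_\infty \le \eta_i \sqrt{\log(e+m)/(1+m)}$, i.e.\ $\eta_i = \sup_m \|G_{m,i}-F_i\|_\infty\sqrt{(1+m)/\log(e+m)}$. Writing $\|G_{m,i}-F_i\|_\infty = \frac{1}{\sqrt m}\big(\sqrt m\|G_{m,i}-F_i\|_\infty\big)$ and noting $\sqrt{(1+m)/(m\log(e+m))}$ is (up to constants) the reciprocal of the Lemma~\ref{lem:subG_sup_N} weight, the lemma applies directly to the sub-Gaussian family $(\sqrt m\|G_{m,i}-F_i\|_\infty)_m$ and gives $\eta_i$ sub-Gaussian. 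Since $N_{n,i}\ge 1$ for $n\ge K+1$, evaluating at $m = N_{n,i}$ gives $\|F_{n,i}-F_i\|_\infty \le \eta_i\sqrt{\log(e+N_{n,i})/(1+N_{n,i})}$ a.s. Set $W_2 \eqdef \max_{i\in[K]}\eta_i$, sub-Gaussian as a finite max.

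The ``$\bE[e^{\lambda W_j}]<\infty$ for all $\lambda>0$'' conclusions are then immediate from the lemma recalled just after the definition of sub-Gaussian. The main obstacle is bookkeeping rather than mathematical depth: I must be scrupulous that Lemma~\ref{lem:subG_sup_N} is applied to exactly the right weighted family so that the target weights $\sqrt{(n+1)\log(e+n)}$ and $\sqrt{\log(e+N_{n,i})/(1+N_{n,i})}$ come out correctly, and that the coupling in (2) is legitimate — i.e.\ that the observed samples from arm $i$, read in order of acquisition, are genuinely i.i.d.\ $\sim F_i$ and independent of the (adaptively chosen) pull times, which holds because the sampling rule is $\cF_{n}$-measurable and the reward $X_{n,I_n}$ is drawn fresh from $F_{I_n}$. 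A minor subtlety is that Lemma~\ref{lem:subG_sup_N} as stated handles only two-sided sub-Gaussian summands, so for the nonnegative quantities in (2) I note explicitly that a nonnegative variable with a sub-Gaussian upper tail is sub-Gaussian, and for (1) I use that the martingale increments' sub-Gaussianity is two-sided (they are bounded).
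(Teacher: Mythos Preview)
Your proposal is correct and follows essentially the same approach as the paper: for (1) you recognize $N_{n,i}-\Psi_{n,i}$ as a martingale with bounded increments and combine Lemma~\ref{lem:subG_martingale} with Lemma~\ref{lem:subG_sup_N}, then take a max over arms; for (2) you reduce via the i.i.d.\ coupling to the empirical cdf of the first $m$ samples from arm $i$ and combine Lemma~\ref{lem:subG_DKW} with Lemma~\ref{lem:subG_sup_N}, again taking a max over arms. The paper's proof is exactly this, with the same bookkeeping to land on the weights $\sqrt{(n+1)\log(e+n)}$ and $\sqrt{\log(e+N_{n,i})/(1+N_{n,i})}$.
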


\begin{proof} For the first inequality, we use that $N_{n,i} - \Psi_{n,i}$ is a martingale and combine Lemma~\ref{lem:subG_martingale} with Lemma~\ref{lem:subG_sup_N} to get that for all $i \in [K]$\[W_{1,i} \eqdef \sup_{n \ge 1} \frac{|N_{n,i}-\Psi_{n,i}| / \sqrt{n}}{\sqrt{n\log(e+n) / (1 + n)}}\]
is sub-Gaussian. Therefore $W_1 = \max_{i \in [K]} W_{1,i}$ is sub-Gaussian and we have, for all $(n,i)$,
\[|N_{n,i}-\Psi_{n,i}| \leq W_1 {\sqrt{\frac{n^2}{n+1}\log(e+n)}} \leq W_1 \sqrt{(n+1)\log(e+n)}.\]
For the second inequality, we define $W_{2,i} \eqdef \sup_{n\ge K+1} \left\|{F}_{n,i} - F_{i}\right\|_{\infty} \sqrt{\frac{1+N_{n,i}}{\log(e+N_{n,i})}}$. Letting $\hat{F}_{n,i}$ be the empirical distribution fo the first $n$ samples from arm $i$ (while $F_{n,i}$ is the empirical distribution of the samples collected up to time $n$), one can rewrite
\[W_{2,i} = \sup_{n\ge 1} \left\|\hat{F}_{n,i} - F_{i}\right\|_{\infty} \sqrt{\frac{1+n}{\log(e+n)}} = \sup_{n\ge 1}  \frac{\sqrt{n}\left\|\hat{F}_{n,i} - F_{i}\right\|_{\infty}}{\sqrt{n \log(e+n) / (1+n)}}.\]
Combining Lemma~\ref{lem:subG_DKW} with Lemma~\ref{lem:subG_sup_N} yields that $W_{2,i}$ is sub-Gaussian for all $i\in [K]$. Then $W_2 = \max_{i \in [K]} W_{2,i}$ is sub-Gaussian and we have, for all $(n,i)$,
\[ \left\|{F}_{n,i} - F_{i}\right\|_{\infty} \leq W_1\sqrt{\frac{1+N_{n,i}}{\log(e+N_{n,i})}}.\]
\end{proof}


\section{Kinf for bounded distributions}
\label{app:kinf_for_bounded_distributions}

Here, $\cF$ is the set of probability distributions with support in the interval $[0,B]$.
The goal of this section is to study the properties of $\Kinf^+$ and $\Kinf^-$, which are functions $\cF \times [0,B] \to \mathbb{R}_+$ defined by
\begin{align*}
\Kinf^+(F, \mu)
&= \inf\{\KL(F,G) \mid G \in \cF, \mathbb{E}_G[X] > \mu\}
\:, \\
\Kinf^-(F, \mu)
&= \inf\{\KL(F,G) \mid G \in \cF, \mathbb{E}_G[X] < \mu\}
\: .
\end{align*}

As a first step, we remark that for $\mu \in (0,B)$ we can rewrite the $\Kinf$ functions using non-strict inequalities, which will be more convenient \cite{HondaTakemura10,garivier2018kl}. We do so and will work in this section on
\begin{align*}
\Kinf^+(F, \mu)
&= \inf\{\KL(F,G) \mid G \in \cF, \mathbb{E}_G[X] \ge \mu\}
\:, \\
\Kinf^-(F, \mu)
&= \inf\{\KL(F,G) \mid G \in \cF, \mathbb{E}_G[X] \le \mu\}
\: .
\end{align*}
There is a strong link between these two definitions, which we will use to transport results from one function to the other.

\begin{lemma}\label{lem:Kinf+_symm_eq_Kinf-}
Let $F \in \cF$, $\mu \in [0,B]$ and let $f:[0,B] \to [0,B]$ be defined by $f(x) = B - x$.
Let $F^{B-X}$ be the pushforward measure of $F$ through $f$. Then
\begin{align*}
\Kinf^+(F^{B-X}, B - \mu) = \Kinf^-(F, \mu) \quad \text{ and } \quad \Kinf^-(F^{B-X}, B - \mu) = \Kinf^+(F, \mu) \: .
\end{align*}
\end{lemma}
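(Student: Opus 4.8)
The plan is to prove this by a change-of-variables argument on the Kullback-Leibler divergence, exploiting the symmetry induced by the reflection $f(x) = B - x$. The key observation is that $f$ is an involution on $[0,B]$ (i.e.\ $f \circ f = \mathrm{id}$), so pushing forward by $f$ is a bijection on $\mathcal P([0,B])$ that preserves $\cF$, and it negates and shifts the mean: if $G$ has mean $\mu_G$, then $G^{B-X}$ has mean $B - \mu_G$.

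First I would recall the pushforward notation: for $G \in \cF$, write $G^{B-X}$ for the law of $B-X$ when $X \sim G$. The first step is to verify that $G \mapsto G^{B-X}$ is an involutive bijection of $\cF$ onto itself (since $\supp(G^{B-X}) = B - \supp(G) \subseteq [0,B]$ whenever $\supp(G) \subseteq [0,B]$), and that $\mathbb E_{G^{B-X}}[X] = B - \mathbb E_G[X]$. The second step is the invariance of KL under a common measurable bijection applied to both arguments: $\KL(F^{B-X}, G^{B-X}) = \KL(F, G)$ for all $F, G \in \cF$. This follows because the Radon-Nikodym derivative is preserved: $\frac{dF^{B-X}}{dG^{B-X}}(y) = \frac{dF}{dG}(B-y)$ a.e., so $\int \ln\frac{dF^{B-X}}{dG^{B-X}} \, dF^{B-X} = \int \ln\frac{dF}{dG}(B-y)\, dF^{B-X}(y) = \int \ln\frac{dF}{dG}(x)\, dF(x)$ by the change of variables $x = B-y$.

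The third step assembles these facts. Using the non-strict formulation of $\Kinf$ valid for $\mu \in (0,B)$ (and noting the strict/non-strict formulations agree in the relevant range):
\begin{align*}
\Kinf^+(F^{B-X}, B-\mu)
&= \inf\{\KL(F^{B-X}, H) \mid H \in \cF,\ \mathbb E_H[X] \ge B-\mu\} \\
&= \inf\{\KL(F^{B-X}, G^{B-X}) \mid G \in \cF,\ \mathbb E_{G^{B-X}}[X] \ge B-\mu\} \\
&= \inf\{\KL(F, G) \mid G \in \cF,\ B - \mathbb E_G[X] \ge B-\mu\} \\
&= \inf\{\KL(F, G) \mid G \in \cF,\ \mathbb E_G[X] \le \mu\} = \Kinf^-(F,\mu) \: ,
\end{align*}
where the second equality reindexes the infimum over $H = G^{B-X}$ using the bijection from step~1, the third uses the KL-invariance from step~2 together with the mean formula, and the fourth is arithmetic. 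The identity $\Kinf^-(F^{B-X}, B-\mu) = \Kinf^+(F,\mu)$ follows by the same computation with the roles of $+$ and $-$ swapped, or simply by applying the first identity with $F$ replaced by $F^{B-X}$ and $\mu$ replaced by $B-\mu$, using involutivity $(F^{B-X})^{B-X} = F$.

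The main obstacle is essentially bookkeeping rather than difficulty: one must be careful about the case where $\KL(F,G) = +\infty$ (absolute continuity failing), but the change-of-variables identity for the Radon-Nikodym derivative holds in the extended-real-valued sense and $F \ll G \iff F^{B-X} \ll G^{B-X}$, so the infima are genuinely equal including the value $+\infty$. I would also want to double-check the edge cases $\mu \in \{0, B\}$ if the statement is meant to cover the closed interval, where the set over which the infimum is taken may be empty and the convention $\inf \emptyset = +\infty$ should be invoked consistently on both sides; the reflection symmetry makes the two sides empty simultaneously, so the identity still holds.
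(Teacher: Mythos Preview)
Your proof is correct and follows essentially the same approach as the paper: both use that $G \mapsto G^{B-X}$ is an involutive bijection of $\cF$ preserving $\KL$ and reflecting the mean, then reindex the infimum. Your version is slightly more explicit about the Radon--Nikodym justification and the edge cases, but the argument is the same.
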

\begin{proof}
The function $f$ is measurable, bijective and involutive. We have $\KL(F, G) = \KL(F^{B-X}, G^{B-X})$ for all $F,G \in \cF$.
\begin{align*}
\Kinf^+(F^{B-X}, B - \mu)
&= \inf\{\KL(F^{B-X},G) \mid G \in \cF, \mathbb{E}_G[X] \ge B - \mu\}
\\
&= \inf\{\KL(F^{B-X},G) \mid G \in \cF, \mathbb{E}_{G^{B-X}}[X] \le \mu\}
\\
&= \inf\{\KL(F^{B-X},(G^{B-X})^{B-X}) \mid G \in \cF, \mathbb{E}_{G^{B-X}}[X] \le \mu\}
\\
&= \inf\{\KL(F, G^{B-X}) \mid G \in \cF, \mathbb{E}_{G^{B-X}}[X] \le \mu\}
\\
&= \inf\{\KL(F, G) \mid G \in \cF, \mathbb{E}_{G}[X] \le \mu\}
\\
&= \Kinf^-(F, \mu)
\: .
\end{align*}
\end{proof}

Let $\mathcal F^+(\mu) = \{G \in \cF \mid \mathbb{E}_G[X] \ge \mu\}$ and define $\mathcal F^-(\mu)$ similarly.

\begin{lemma}\label{lem:F+_nonempty_compact_convex}
For all $\mu \in [0,B]$, $\mathcal F^+(\mu)$ is a nonempty compact convex set (for the weak convergence of measures).
\end{lemma}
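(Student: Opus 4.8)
\textbf{Proof plan for Lemma~\ref{lem:F+_nonempty_compact_convex}.}
The plan is to establish the three properties (nonemptiness, convexity, compactness) in turn, relying on the fact that $\cF$ is the set of probability measures with support in the compact interval $[0,B]$, which is itself weakly compact by Prokhorov's theorem. First I would check nonemptiness: the Dirac mass $\delta_B$ satisfies $\mathbb{E}_{\delta_B}[X] = B \ge \mu$ for every $\mu \in [0,B]$, so $\delta_B \in \mathcal F^+(\mu)$ and the set is never empty. Convexity is also immediate: if $G_1, G_2 \in \mathcal F^+(\mu)$ and $t \in [0,1]$, then $tG_1 + (1-t)G_2$ is again a probability measure supported in $[0,B]$, and by linearity of the mean, $\mathbb{E}_{tG_1+(1-t)G_2}[X] = t\,\mathbb{E}_{G_1}[X] + (1-t)\,\mathbb{E}_{G_2}[X] \ge \mu$, so the convex combination lies in $\mathcal F^+(\mu)$.

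For compactness I would argue as follows. The ambient set $\cF$ of probability measures on $[0,B]$ is weakly compact (it is tight since $[0,B]$ is compact, so Prokhorov applies, and it is weakly closed). It therefore suffices to show that $\mathcal F^+(\mu)$ is a weakly closed subset of $\cF$. Take a sequence (or net) $G_n \in \mathcal F^+(\mu)$ with $G_n \Rightarrow G$ weakly; I must show $\mathbb{E}_G[X] \ge \mu$. The key point is that the map $G \mapsto \mathbb{E}_G[X] = \int_{[0,B]} x\, dG(x)$ is continuous for weak convergence \emph{on $\cF$}, because the integrand $x \mapsto x$ is bounded and continuous on $[0,B]$; hence $\mathbb{E}_{G_n}[X] \to \mathbb{E}_G[X]$, and since each $\mathbb{E}_{G_n}[X] \ge \mu$, the limit satisfies $\mathbb{E}_G[X] \ge \mu$. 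Thus $G \in \mathcal F^+(\mu)$, so $\mathcal F^+(\mu)$ is closed in the weakly compact set $\cF$, hence itself weakly compact.

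The main (and really only) subtlety worth flagging is the use of the compact support: the continuity of the mean functional under weak convergence relies crucially on the test function $x \mapsto x$ being bounded on the support, which is exactly where the restriction to $[0,B]$ enters; for general probability measures on $\mathbb{R}$ the mean is only lower semicontinuous and the analogous statement would fail. Everything else is routine, and by Lemma~\ref{lem:Kinf+_symm_eq_Kinf-} (via the pushforward $F \mapsto F^{B-X}$) the symmetric statement for $\mathcal F^-(\mu)$ follows without additional work, so I would not reprove it separately.
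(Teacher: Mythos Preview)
Your proof is correct and follows essentially the same route as the paper: nonemptiness via $\delta_B$, convexity via linearity of the mean, and compactness by showing $\mathcal F^+(\mu)$ is weakly closed inside the weakly compact set $\cF$. The only cosmetic difference is that the paper invokes the Portmanteau theorem to get a $\limsup$ inequality for the mean (treating the integrand as bounded from above), whereas you use outright continuity of $G \mapsto \mathbb{E}_G[X]$ on $\cF$ since $x$ is bounded and continuous on $[0,B]$; your version is if anything slightly cleaner and makes the role of the compact support more transparent.
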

\begin{proof}
It is nonempty since the Dirac distribution at $B$ belongs to the set.

The set $\cF$ is compact, hence a sequence of distributions in $\mathcal F^+(\mu)$ admits a convergent subsequence. Suppose then that we have a convergent sequence $(F_n)_{n\in \mathbb{N}}$, converging to $F$, and let's show that $F \in \mathcal F^+(\mu)$. We can rewrite
$
\mathcal F^+(\mu) = \{G \in \cF \mid \mathbb{E}_G[\max\{X, B\}] \ge \mu\} \: .
$
This is useful since the function $x \mapsto \max\{x, B\}$ is bounded from above and continuous. We can thus apply the Portmanteau theorem to write
\begin{align*}
\mathbb{E}_F[\max\{X, B\}] \ge \limsup_n \mathbb{E}_{F_n}[\max\{X, B\}] \ge \mu \: .
\end{align*}
We conclude that $F \in \mathcal F^+(\mu)$, which is then compact.

To prove convexity, let $F,G \in \mathcal F^+(\mu)$ and let $\alpha \in [0,1]$: $\alpha F + (1 - \alpha)G \in \cF$ and
\begin{align*}
\mathbb{E}_{\alpha F + (1 - \alpha)G}[X] = \alpha \mathbb{E}_F[X] + (1 - \alpha) \mathbb{E}_G[X] \ge \alpha \mu + (1 - \alpha)\mu = \mu \: .
\end{align*}
\end{proof}

\begin{lemma}\label{lem:F+_upper_hemicontinuous}
$\mu \mapsto \mathcal F^+(\mu)$ is an upper hemicontinuous correspondence.
\end{lemma}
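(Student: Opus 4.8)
The plan is to use the standard fact that a correspondence whose values lie in a compact space is upper hemicontinuous if and only if it has a closed graph. By Lemma~\ref{lem:F+_nonempty_compact_convex} each $\mathcal F^+(\mu)$ is a nonempty compact subset of $\cF$, and $\cF$ itself is compact for the weak convergence of measures (the space of probability measures on the compact set $[0,B]$ is weak-$*$ compact and metrizable, e.g.\ by the L\'evy--Prokhorov metric, so sequential arguments are legitimate). It therefore suffices to show that the graph
\[
  \mathrm{Gr} = \{(\mu, G) \in [0,B] \times \cF \mid \mathbb{E}_G[X] \ge \mu\}
\]
is closed, or equivalently to verify the sequential form of upper hemicontinuity: for every $\mu \in [0,B]$, every sequence $\mu_n \to \mu$ and every choice $G_n \in \mathcal F^+(\mu_n)$, the sequence $(G_n)$ has a subsequence converging to some $G \in \mathcal F^+(\mu)$.

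The only real ingredient is continuity of the mean along weakly convergent sequences. Given $(\mu_n, G_n)$ as above, compactness of $\cF$ furnishes a subsequence $G_{n_k} \to G$ weakly with $G \in \cF$. Since the identity map $x \mapsto x$ is bounded and continuous on $[0,B]$, weak convergence gives $\mathbb{E}_{G_{n_k}}[X] \to \mathbb{E}_G[X]$; this is exactly continuity of the mean operator $m$ on $\cF$, and can also be obtained from the Portmanteau theorem as in the proof of Lemma~\ref{lem:F+_nonempty_compact_convex}. Passing to the limit in $\mathbb{E}_{G_{n_k}}[X] \ge \mu_{n_k}$ then yields $\mathbb{E}_G[X] \ge \mu$, i.e.\ $G \in \mathcal F^+(\mu)$. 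Together with compactness of the values, this proves upper hemicontinuity.

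There is essentially no hard step here; the only point requiring care is to invoke the two compactness facts in the right places — weak-$*$ compactness of $\cF$ (to extract a convergent subsequence) and compactness of each value $\mathcal F^+(\mu)$ (to upgrade ``closed graph'' to ``upper hemicontinuous''). Both are already at hand, the former being used inside the proof of Lemma~\ref{lem:F+_nonempty_compact_convex} and the latter being part of its statement.
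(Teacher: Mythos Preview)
The proposal is correct and follows essentially the same approach as the paper: both use the sequential characterization of upper hemicontinuity for compact-valued correspondences, extract a weakly convergent subsequence via compactness of $\cF$, and then pass the mean constraint to the limit. The only cosmetic difference is that you invoke boundedness and continuity of $x\mapsto x$ on $[0,B]$ to get full convergence of the means, whereas the paper uses the Portmanteau $\limsup$ inequality; either suffices.
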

\begin{proof}
Since $\mathcal F^+$ is a compact-valued correspondence (Lemma~\ref{lem:F+_nonempty_compact_convex}), it suffices to show that for all sequences $(\mu_n)$ and $(Q_n)$ with $Q_n \in \mathcal F^+(\mu_n)$, if $\mu_n \to \mu$ then there exists a convergent subsequence of $(Q_n)$, which converges to $Q \in \mathcal F^+(\mu)$ \cite[Proposition 9.8, p. 231]{sundaram1996first}. The existence of a convergent subsequence comes from the compactness of $\cF$. The limit $Q$ then belongs to $\cF$ since $\cF$ is closed. We need to show that $Q$ belongs to $\{G \mid \mathbb{E}_G[X] \ge \mu\}$.

We can rewrite $\mathcal F^+(\mu) = \{G \in \cF \mid \mathbb{E}_G[\max\{X, B\}] \ge \mu\}$ and prove that $Q$ belongs to $\{G \mid \mathbb{E}_G[\max\{X, B\}] \ge \mu\}$.
This is useful since the function $x \mapsto \max\{x, B\}$ is bounded from above and continuous. We can thus apply the Portmanteau theorem to write
\begin{align*}
\mathbb{E}_Q[\max\{X, B\}] \ge \limsup_n \mathbb{E}_{Q_n}[\max\{X, B\}] \ge \limsup_n \mu_n = \mu \: .
\end{align*}
We conclude that $Q \in \mathcal F^+(\mu)$. We have proved upper hemicontinuity.
\end{proof}

\begin{lemma}\label{lem:inf_attained_in_Kinf}
The infimum in the definition of $\Kinf^+$ is attained at a distribution in $\mathcal F^+(\mu)$.
\end{lemma}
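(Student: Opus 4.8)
The plan is to show that the infimum defining $\Kinf^+(F,\mu) = \inf\{\KL(F,G) \mid G \in \mathcal F^+(\mu)\}$ is attained by exploiting that $\mathcal F^+(\mu)$ is a nonempty compact set (Lemma~\ref{lem:F+_nonempty_compact_convex}) together with lower semicontinuity of the map $G \mapsto \KL(F,G)$ for the weak convergence of measures. First I would dispense with the trivial case: if $\Kinf^+(F,\mu) = +\infty$ there is nothing to prove (any distribution in the nonempty set $\mathcal F^+(\mu)$ attains it, or one simply notes the statement is vacuous), so assume the value is finite. Then take a minimizing sequence $(G_n)_{n \in \mathbb N}$ in $\mathcal F^+(\mu)$ with $\KL(F,G_n) \to \Kinf^+(F,\mu)$.

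Next I would extract a convergent subsequence: since $\mathcal F^+(\mu)$ is compact for the weak convergence (Lemma~\ref{lem:F+_nonempty_compact_convex}), there is a subsequence $G_{n_k} \rightharpoonup G^\star$ with $G^\star \in \mathcal F^+(\mu)$. The key remaining ingredient is that $G \mapsto \KL(F,G)$ is lower semicontinuous with respect to weak convergence (with the first argument fixed); this is a standard fact about relative entropy, following e.g. from the Donsker–Varadhan variational representation $\KL(F,G) = \sup_{h \text{ bounded continuous}} \left(\int h\, dF - \log \int e^{h}\, dG\right)$, in which each term inside the supremum is continuous in $G$ under weak convergence, so the supremum is l.s.c. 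Applying this to the subsequence gives
\[
\KL(F, G^\star) \le \liminf_{k \to \infty} \KL(F, G_{n_k}) = \Kinf^+(F,\mu).
\]
Since $G^\star \in \mathcal F^+(\mu)$, the reverse inequality $\KL(F,G^\star) \ge \Kinf^+(F,\mu)$ holds by definition of the infimum, so equality holds and $G^\star$ attains the infimum.

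The main obstacle — really the only non-bookkeeping point — is justifying the lower semicontinuity of $G \mapsto \KL(F,G)$ on $\cF$ under weak convergence. I would handle this either by invoking the Donsker–Varadhan representation as above, or, if one prefers to stay elementary, by writing $\KL(F,G) = \int \log\frac{dF}{dG}\, dF$ when $F \ll G$ (and $+\infty$ otherwise) and using the joint lower semicontinuity of relative entropy in the pair $(F,G)$, a classical result (see e.g. the theory of $f$-divergences); specialized to fixed first argument it yields the claim. Everything else — the finite-value reduction, compactness, subsequence extraction, the sandwiching argument — is routine. An analogous statement for $\Kinf^-$ then follows immediately via the reflection $x \mapsto B-x$ of Lemma~\ref{lem:Kinf+_symm_eq_Kinf-}, though that is outside the scope of this particular lemma.
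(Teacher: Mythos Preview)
Your proof is correct and follows essentially the same approach as the paper: lower semicontinuity of $G \mapsto \KL(F,G)$ in the weak topology combined with compactness of $\mathcal F^+(\mu)$ (Lemma~\ref{lem:F+_nonempty_compact_convex}) to conclude that the infimum is attained. The paper's proof is simply the one-line version of your argument, invoking these two ingredients directly without spelling out the minimizing-sequence extraction.
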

\begin{proof}
$G \mapsto \KL(F,G)$ is lower semicontinuous wrt the topology of weak convergence of measures and the set $\mathcal F^+(\mu)$ over which the minimization is performed is compact, hence the functions attains its infimum at a point in $\mathcal F^+(\mu)$.
\end{proof}

\subsection{Duality}
\label{sub:duality}

For $(\lambda, F, u) \in \mathbb{R}_+ \times \cF \times [0,B]$, let $H^+(\lambda, F, u) = \mathbb{E}_F[\log (1 - \lambda (X - u))]$, where we define $\log(x) = -\infty$ for $x \le 0$. Let $H^-(\lambda, F, u) = \mathbb{E}_F[\log (1 + \lambda (X - u))]$.

\begin{theorem}\label{thm:Kinf_duality}
For all $F \in \cF$ and $u \in [0,B]$,
\begin{align*}
\Kinf^+(F, u)
&= \sup_{\lambda \in [0, (B-u)^{-1}]} H^+(\lambda, F, u)
\:, \\
\Kinf^-(F, u)
&= \sup_{\lambda \in [0, u^{-1}]} H^-(\lambda, F, u)
\:.
\end{align*}
\end{theorem}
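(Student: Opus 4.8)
The plan is to prove the formula for $\Kinf^+$ and then deduce the one for $\Kinf^-$ for free via the reflection lemma (Lemma~\ref{lem:Kinf+_symm_eq_Kinf-}): applying that lemma with the pushforward through $x\mapsto B-x$ turns $\Kinf^-(F,u)$ into $\Kinf^+(F^{B-X},B-u)$, and a change of variables in the expectation defining $H^-$ shows $H^-(\lambda,F,u)=H^+(\lambda,F^{B-X},B-u)$, so the $\Kinf^+$ identity transfers verbatim (the dual constraint set $\lambda\in[0,u^{-1}]$ becomes $\lambda\in[0,(B-(B-u))^{-1}]=[0,u^{-1}]$, as required). So the whole work is the $\Kinf^+$ case.

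For the $\Kinf^+$ case I would argue in two inequalities. \textbf{Weak duality ($\ge$):} fix any $G\in\mathcal F^+(u)$ and any $\lambda\in[0,(B-u)^{-1}]$. Since $G$ is supported on $[0,B]$, the quantity $1-\lambda(X-u)$ is nonnegative $G$-a.s.\ (it equals $1$ at $x=u$, is $\ge 1$ for $x\le u$, and at $x=B$ equals $1-\lambda(B-u)\ge 0$), so $\mathbb E_G[\log(1-\lambda(X-u))]$ is well-defined in $[-\infty,0]$. The standard log-sum / Donsker--Varadhan type bound gives
\begin{align*}
\KL(F,G)\ \ge\ \mathbb E_F[\log(1-\lambda(X-u))] - \log \mathbb E_G[1-\lambda(X-u)]\ =\ H^+(\lambda,F,u) - \log(1-\lambda(\mathbb E_G[X]-u)),
\end{align*}
and since $\mathbb E_G[X]\ge u$ and $\lambda\ge 0$ we have $1-\lambda(\mathbb E_G[X]-u)\le 1$, hence $\log(\cdots)\le 0$ and $\KL(F,G)\ge H^+(\lambda,F,u)$. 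Taking the infimum over $G$ and the supremum over $\lambda$ yields $\Kinf^+(F,u)\ge \sup_\lambda H^+(\lambda,F,u)$. \textbf{Strong duality ($\le$):} here I would use that by Lemma~\ref{lem:inf_attained_in_Kinf} the infimum defining $\Kinf^+(F,u)$ is attained at some $G^\star\in\mathcal F^+(u)$ with $\KL(F,G^\star)<\infty$, so $G^\star\ll F$. One then writes the Lagrangian / first-order optimality conditions for minimizing $\KL(F,\cdot)$ over the convex set $\mathcal F^+(u)$ (a nonempty compact convex set, Lemma~\ref{lem:F+_nonempty_compact_convex}), which forces the optimal $G^\star$ to have a density of the form $\frac{dG^\star}{dF}(x)=\frac{1}{1-\lambda^\star(x-u)}$ for the optimal multiplier $\lambda^\star\in[0,(B-u)^{-1}]$ (the edge cases $u=0$, $u=B$, or $\mathbb E_F[X]\ge u$ where $\Kinf^+=0$ and $\lambda^\star=0$ are handled separately). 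Plugging this tilted distribution back in gives $\KL(F,G^\star)=\mathbb E_F[-\log(1-\lambda^\star(X-u))]\cdot(-1)$... more precisely $\KL(F,G^\star)=\mathbb E_F[\log(dF/dG^\star)]=\mathbb E_F[\log(1-\lambda^\star(X-u))]=H^+(\lambda^\star,F,u)\le \sup_\lambda H^+(\lambda,F,u)$. Alternatively, rather than re-deriving the optimizer's form I would cite the classical result of \cite{HondaTakemura10} (and its reformulation in \cite{garivier2018kl}) which is exactly this statement for bounded distributions; given the earlier sentence in the paper "we rely on the dual formulation first obtained by \cite{HondaTakemura10}", the cleanest route is to attribute the theorem there and only spell out the routine verification that the constraint set and sign conventions match.

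The main obstacle I anticipate is the boundary/degenerate bookkeeping in the strong-duality direction: making the variational argument rigorous requires care about when the minimizer puts mass at the endpoint $x=B$ (equivalently $\lambda^\star=(B-u)^{-1}$, where the density $1/(1-\lambda(x-u))$ blows up), and about the trivial regime $\Kinf^+(F,u)=0$. In the weak-duality direction the only subtlety is justifying the log-sum inequality when $H^+(\lambda,F,u)=-\infty$ (then it is vacuous) or when $G^\star\not\ll F$ (then $\KL=+\infty$); both are immediate. Given the paper's scope it is entirely reasonable to reduce this theorem to a citation of \cite{HondaTakemura10,garivier2018kl} plus the symmetry Lemma~\ref{lem:Kinf+_symm_eq_Kinf-}, and present only the short weak-duality computation above for self-containedness.
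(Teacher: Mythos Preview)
Your proposal is correct and matches the paper's own proof: the paper simply cites \cite{HondaTakemura10,garivier2018kl} for the $\Kinf^+$ identity and then invokes Lemma~\ref{lem:Kinf+_symm_eq_Kinf-} to deduce the $\Kinf^-$ identity, exactly as you suggest in your ``cleanest route''. Your additional weak-duality computation and sketch of the strong-duality argument are sound and more self-contained than what the paper actually writes, but the core strategy is identical.
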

\begin{proof}
A proof of this statement for $\Kinf^+$ can be found in any one of \cite{HondaTakemura10,garivier2018kl}. The result for $\Kinf^-$ then follows from Lemma~\ref{lem:Kinf+_symm_eq_Kinf-}.
\end{proof}

\begin{lemma}[\cite{HondaTakemura10}, Lemma 14]\label{lem:Kinf_finite_distribution_coarse_upper_bound}
	For all $(F,u) \in \cF \times [0, B)$, $\Kinf^{+}(F, u) \leq -\log\left(1 - \frac{u}{B}\right)$.
\end{lemma}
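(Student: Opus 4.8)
The statement to prove is Lemma~\ref{lem:Kinf_finite_distribution_coarse_upper_bound}: for all $(F,u) \in \cF \times [0,B)$, $\Kinf^{+}(F,u) \leq -\log(1 - u/B)$.

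\textbf{Plan.} The cleanest route is to exhibit a single feasible competitor distribution $G \in \mathcal{F}^+(u)$ (i.e.\ with support in $[0,B]$ and mean at least $u$) and bound $\KL(F,G)$ explicitly. A natural choice is a mixture $G_\lambda = (1-\lambda) F + \lambda \delta_B$ for $\lambda \in [0,1]$, which stays in $\cF$ (its support lies in $[0,B]$) and has mean $(1-\lambda)m(F) + \lambda B$; choosing $\lambda$ large enough forces this mean to be $\geq u$, in fact $\geq B \cdot 0 + \dots$ — actually one can even drive the mean up to $B$ by taking $\lambda \to 1$, so feasibility is never the obstacle. Then $\Kinf^+(F,u) \leq \KL(F,G_\lambda)$, and since $G_\lambda$ dominates $F$ with density $\frac{dF}{dG_\lambda} \le \frac{1}{1-\lambda}$ (because $G_\lambda \geq (1-\lambda) F$ as measures), we get $\KL(F,G_\lambda) = \mathbb{E}_F[\log \frac{dF}{dG_\lambda}] \leq \log\frac{1}{1-\lambda} = -\log(1-\lambda)$. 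It then remains to pick the smallest admissible $\lambda$.

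\textbf{Choosing $\lambda$.} We need $(1-\lambda)m(F) + \lambda B \geq u$, i.e.\ $\lambda \geq \frac{u - m(F)}{B - m(F)}$ when $m(F) < u$ (and $\lambda = 0$ works when $m(F) \geq u$, giving $\Kinf^+(F,u) = 0$, consistent with the bound). Actually, to get the clean bound $-\log(1-u/B)$ independent of $m(F)$, the slickest choice is $\lambda = u/B$: then the mean of $G_\lambda$ is $(1 - u/B)m(F) + (u/B) B = (1-u/B) m(F) + u \geq u$ since $m(F) \geq 0$. So $G_{u/B} \in \mathcal{F}^+(u)$, and $\Kinf^+(F,u) \leq \KL(F, G_{u/B}) \leq -\log(1 - u/B)$. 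This is the whole argument.

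\textbf{The main subtlety.} The one point requiring a small amount of care is the density bound. Write $G_\lambda = (1-\lambda)F + \lambda \delta_B$. As a measure $G_\lambda \geq (1-\lambda) F$, so $F \ll G_\lambda$ and $\frac{dF}{dG_\lambda} \leq \frac{1}{1-\lambda}$ holds $G_\lambda$-almost everywhere, hence $F$-almost everywhere. Then $\KL(F,G_\lambda) = \int \log\frac{dF}{dG_\lambda}\, dF \leq \log\frac{1}{1-\lambda}$. One should note this requires $\lambda < 1$, i.e.\ $u < B$, which is exactly the hypothesis $u \in [0,B)$; at $u = B$ the bound $-\log(1-u/B)$ is $+\infty$ anyway so nothing is claimed. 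Since this is quoted in the excerpt as ``\cite{HondaTakemura10}, Lemma 14'', I would simply cite that source and include the short mixture argument above as the proof; there is no genuine obstacle here — it is a one-paragraph verification.
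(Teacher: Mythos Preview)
Your proof is correct, but it takes a genuinely different route from the paper's. The paper argues via the dual formulation (Theorem~\ref{thm:Kinf_duality}): it writes $\Kinf^{+}(F,u) = \sup_{\lambda \in [0,(B-u)^{-1}]} \mathbb{E}_F[\log(1-\lambda(X-u))]$, then uses $X \ge 0$ on the support to bound the integrand by $\log(1+\lambda u)$, and finally maximizes over $\lambda$ to obtain $-\log(1-u/B)$. Your argument is primal: you exhibit an explicit feasible competitor $G_{u/B} = (1-u/B)F + (u/B)\,\delta_B$, observe its mean is at least $u$ because $m(F)\ge 0$, and bound $\KL(F,G_{u/B})$ by $-\log(1-u/B)$ via the measure domination $G_\lambda \ge (1-\lambda)F$. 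Your route is more self-contained (it does not invoke the duality theorem) and gives a concrete witness distribution; the paper's route is equally short once duality is in hand and makes the role of the support constraint $X\ge 0$ transparent on the dual side. Either would serve.
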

\begin{proof}
Let $(F,u) \in \cF \times [0, B)$. The proof relies on Theorem~\ref{thm:Kinf_duality} and $X \geq 0$ for all $X \in \supp(F)$. Using that $\log$ is increasing on $(0,+\infty)$ and $\bE_{F}[1]=1$,
\begin{align*}
	\Kinf^{+}(F, u) &= \sup_{\lambda \in [0, (B-u)^{-1}]} H^+(\lambda, F, u) \leq \sup_{\lambda \in [0, (B-u)^{-1}]}  \log \left(1 + \lambda u \right) = -\log\left(1 - \frac{u}{B}\right) \: .
\end{align*}
\end{proof}

\subsection{Continuity and differentiability}
\label{sub:continuity}

\begin{lemma}\label{lem:H_usc}
The function $\lambda, F, u \mapsto H^+(\lambda, F, u)$ is upper semicontinuous (jointly in all arguments) on $\mathbb{R}_+ \times \cF \times [0,B]$.
\end{lemma}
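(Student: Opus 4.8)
The statement to prove is that $(\lambda, F, u) \mapsto H^+(\lambda, F, u) = \mathbb{E}_F[\log(1 - \lambda(X-u))]$ is upper semicontinuous jointly on $\mathbb{R}_+ \times \cF \times [0,B]$, where $\cF$ carries the topology of weak convergence and $\log$ is extended by $\log(x) = -\infty$ for $x \le 0$. The natural route is to realize $H^+$ as an infimum of a family of jointly continuous functions, since an infimum of continuous functions is automatically upper semicontinuous. First I would observe that for a fixed bounded continuous function $\phi : [0,B] \to \mathbb{R}$, the map $F \mapsto \mathbb{E}_F[\phi(X)]$ is continuous on $\cF$ by the very definition of weak convergence, and if $\phi$ also depends continuously on auxiliary parameters $(\lambda,u)$ and the dependence is uniform over $x \in [0,B]$, then $(\lambda, F, u) \mapsto \mathbb{E}_F[\phi_{\lambda,u}(X)]$ is jointly continuous (continuity in $F$ for fixed $(\lambda,u)$ plus uniform-in-$x$ continuity in $(\lambda,u)$ gives joint continuity by a standard $\varepsilon$-argument, using that $\cF$ is a set of probability measures so $\mathbb{E}_F[\cdot]$ is a contraction in sup-norm).

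\textbf{Key steps.} The integrand $x \mapsto \log(1 - \lambda(x-u))$ is not bounded continuous — it blows down to $-\infty$ as $1 - \lambda(x-u) \downarrow 0$ — so I cannot apply the above directly. The fix is truncation from above: for each $M > 0$ define $g_M(\lambda, u, x) = \max\{\log(1 - \lambda(x-u)), -M\}$, which is bounded (between $-M$ and $\log(1 + \lambda B) \le \log(1 + B/\!\dots)$, finite on the relevant compact $\lambda$-range, but in any case bounded above on $[0,B]$ for each fixed compact set of $(\lambda,u)$) and jointly continuous in $(\lambda, u, x)$, with the continuity in $(\lambda,u)$ being uniform in $x \in [0,B]$ on compact subsets of the parameter space. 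Hence $H_M^+(\lambda, F, u) \eqdef \mathbb{E}_F[g_M(\lambda, u, X)]$ is jointly continuous. Then by the monotone convergence theorem, as $M \uparrow \infty$, $g_M \downarrow \log(1 - \lambda(X-u))$ pointwise (with the convention giving $-\infty$), so
\[
H^+(\lambda, F, u) = \inf_{M > 0} H_M^+(\lambda, F, u) = \inf_{M \in \mathbb{N}} H_M^+(\lambda, F, u),
\]
the last equality because the $H_M^+$ are monotone decreasing in $M$. An infimum of a family of (jointly) continuous functions is (jointly) upper semicontinuous, which is exactly the claim.

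\textbf{Main obstacle.} The only genuinely delicate point is justifying that the limit $\inf_M H_M^+ = H^+$ is valid at parameter values where the integrand is $-\infty$ on a set of positive $F$-measure — i.e., that $\mathbb{E}_F[g_M(\lambda,u,X)] \to -\infty$ there — which follows from monotone convergence applied to $-g_M \uparrow +\infty$ (or $+\infty$), taking care that $g_1$, say, is $F$-integrable (it is, being bounded). A secondary subtlety is confirming the uniform-in-$x$ continuity of $g_M$ in $(\lambda,u)$ so that joint continuity of $H_M^+$ really holds; this is routine since $g_M$ is continuous on the compact set (compact $(\lambda,u)$-range) $\times [0,B]$ and hence uniformly continuous there, and one only needs joint continuity locally. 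I would also note that the analogous statement for $H^-$ follows identically (or via the reflection $x \mapsto B-x$ of Lemma~\ref{lem:Kinf+_symm_eq_Kinf-}), though the excerpt only asks for $H^+$.
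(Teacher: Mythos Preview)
Your proof is correct and takes a genuinely different route from the paper's. The paper argues sequentially: given $(\lambda_n, F_n, u_n) \to (\lambda, F, u)$, it invokes Skorokhod's representation theorem to produce coupled random variables $X_n \to X$ almost surely, uses uniform integrability (compact support) to get $\mathbb{E}[\lambda_n(X_n - u_n)] \to \mathbb{E}[\lambda(X-u)]$, and then applies Fatou's lemma to the nonnegative quantity $-\lambda_n(X_n - u_n) - \log(1 - \lambda_n(X_n - u_n))$ to extract the $\limsup$ inequality. Your approach instead truncates the integrand from below to obtain bounded continuous test functions $g_M$, verifies joint continuity of each $H_M^+$ directly from the definition of weak convergence plus uniform continuity of $g_M$ on compacta, and then realizes $H^+ = \inf_M H_M^+$ by monotone convergence. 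Your argument is more elementary in that it avoids Skorokhod entirely and reduces the claim to the transparent principle that an infimum of continuous functions is upper semicontinuous; the paper's argument is more direct in that it handles the sequence in one shot without an auxiliary family, at the cost of the coupling machinery. Both are standard; yours is arguably cleaner here since the space $\cF$ is compact and the truncation is natural.
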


\begin{proof}
Let $(\lambda_n, F_n, u_n) \in \mathbb{R}_+ \times \cF \times [0,B]$ be a sequence converging to $(\lambda, F, u)$. We want to prove that
\begin{align*}
\limsup \mathbb{E}_{F_n}[\log (1 - \lambda_n (X - u_n))] \le \mathbb{E}_{F}[\log (1 - \lambda (X - u))] \: .
\end{align*}

By Skorokhod's representation theorem, there exists real random variables $(X_n)_{n\in \mathbb{N}}, X$ defined on a common probability space $(\Omega, \mathcal A, \mathbb{P})$ such that the law of $X_n$ is $F_n$ for all $n \in \mathbb{N}$, the law of $X$ is $F$ and $(X_n)$ converges to $X$ almost surely (hence also in probability).

The family $(X_n)$ has compact support, hence it is uniformly integrable.
By Vitali's theorem, since $X_n$ is uniformly integrable and converges in probability, it also converges in $L^1$.

We get that $\lambda_n(X_n - u_n) \xrightarrow{a.s.} \lambda(X - u)$ and $\mathbb{E}[\lambda_n(X_n - u_n)] \to \mathbb{E}[\lambda(X - u)]$. Now we want to translate this into a statement about the log.

Since $-\lambda_n(X_n - u_n) - \log(1 - \lambda_n(X_n - u_n)) \ge 0$, by Fatou's lemma,
\begin{align*}
& \mathbb{E}\left[-\lambda(X - u)\right] - \mathbb{E}\left[\limsup \log(1 -\lambda_n(X_n - u_n))\right]
\\
&= \mathbb{E}\left[\liminf \left( -\lambda_n(X_n - u_n) - \log(1 - \lambda_n(X_n - u_n)) \right)\right]
\\
&\le \liminf \left( \mathbb{E}\left[-\lambda_n(X_n - u_n) - \log(1 - \lambda_n(X_n - u_n)) \right]\right)
\\
&= \mathbb{E}\left[-\lambda(X - u)\right] - \limsup\mathbb{E}\left[\log(1 - \lambda_n(X_n - u_n)) \right]
\: .
\end{align*}
Canceling the first term, we get the inequality we were after.
\end{proof}

\begin{theorem}\label{thm:Kinf_continuous}
The function $\Kinf^+$ (resp. $\Kinf^-$) is continuous on $\cF \times [0,B)$ (resp. $\cF \times (0,B]$).
\end{theorem}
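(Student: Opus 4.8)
The plan is to prove that $\Kinf^+$ is simultaneously upper and lower semicontinuous on $\cF\times[0,B)$, and then transport the statement to $\Kinf^-$ via the symmetry of Lemma~\ref{lem:Kinf+_symm_eq_Kinf-}. The only tool I need is the variational formula $\Kinf^+(F,u)=\sup_{\lambda\in[0,(B-u)^{-1}]}H^+(\lambda,F,u)$ from Theorem~\ref{thm:Kinf_duality}, together with three structural facts about $H^+(\lambda,F,u)=\mathbb{E}_F[\log(1-\lambda(X-u))]$: it is jointly upper semicontinuous (Lemma~\ref{lem:H_usc}); it is concave in $\lambda$, being the $F$-average of the concave maps $\lambda\mapsto\log(1-\lambda(x-u))$; and it is \emph{jointly continuous} on the open region $\{(\lambda,F,u):u<B,\ \lambda(B-u)<1\}$. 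This last continuity I would establish by a uniform-approximation argument: near such a point, $1-\lambda(x-u)$ stays in a fixed compact subinterval of $(0,\infty)$ for all $x\in[0,B]$, so $x\mapsto\log(1-\lambda(x-u))$ is bounded and depends uniformly continuously on $(\lambda,u)$; writing $H^+(\lambda_n,F_n,u_n)-H^+(\lambda_0,F,u)$ as a $\|\cdot\|_\infty$-controlled term plus $\mathbb{E}_{F_n}[g_0]-\mathbb{E}_F[g_0]$ with $g_0$ bounded continuous then lets us conclude by weak convergence of $F_n$ (Portmanteau).

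For the upper semicontinuity direction, fix $(F_n,u_n)\to(F,u)$ with $u<B$, pass to a subsequence realizing $\limsup_n\Kinf^+(F_n,u_n)$, and on it select $\lambda_n\in[0,(B-u_n)^{-1}]$ attaining the supremum (this is legitimate since $H^+(\cdot,F_n,u_n)$ is u.s.c.\ on a compact interval and equals $0$ at $\lambda=0$, and $\Kinf^+(F_n,u_n)$ is finite by Lemma~\ref{lem:Kinf_finite_distribution_coarse_upper_bound}). Because $(B-u_n)^{-1}\to(B-u)^{-1}<\infty$, the sequence $(\lambda_n)$ is eventually bounded, so a further subsequence converges to some $\lambda^\star\le(B-u)^{-1}$. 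Joint u.s.c.\ of $H^+$ then gives $\limsup_n\Kinf^+(F_n,u_n)\le H^+(\lambda^\star,F,u)\le\Kinf^+(F,u)$.

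For the lower semicontinuity direction, fix any $\lambda_0\in[0,(B-u)^{-1})$. For $n$ large, $\lambda_0<(B-u_n)^{-1}$, hence $\Kinf^+(F_n,u_n)\ge H^+(\lambda_0,F_n,u_n)$, and by the joint continuity of $H^+$ on the open region noted above, $H^+(\lambda_0,F_n,u_n)\to H^+(\lambda_0,F,u)$. Thus $\liminf_n\Kinf^+(F_n,u_n)\ge\sup_{\lambda_0\in[0,(B-u)^{-1})}H^+(\lambda_0,F,u)$, and it only remains to see that this supremum over the half-open interval coincides with $\sup_{\lambda\in[0,(B-u)^{-1}]}H^+(\lambda,F,u)=\Kinf^+(F,u)$. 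That equality follows from concavity of $\lambda\mapsto H^+(\lambda,F,u)$, which forces $H^+((B-u)^{-1},F,u)\le\liminf_{\lambda\uparrow(B-u)^{-1}}H^+(\lambda,F,u)$, so the right endpoint cannot carry a value strictly above the supremum over the interior. Combining the two semicontinuities yields continuity of $\Kinf^+$ on $\cF\times[0,B)$; applying this to $F^{B-X}$ at the point $B-u$, together with $\Kinf^-(F,u)=\Kinf^+(F^{B-X},B-u)$ (Lemma~\ref{lem:Kinf+_symm_eq_Kinf-}) and the continuity of $F\mapsto F^{B-X}$ for weak convergence, gives continuity of $\Kinf^-$ on $\cF\times(0,B]$.

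The main obstacle is that the index set $[0,(B-u)^{-1}]$ of the supremum moves with $u$ and its right endpoint is precisely where $H^+$ degenerates (it drops to $-\infty$ as soon as $F$ charges $B$), so neither ``a supremum of u.s.c.\ functions is u.s.c.'' nor Berge's maximum theorem applies directly. The resolution is the deliberately asymmetric treatment above: compactness of the $\lambda$-range plus joint u.s.c.\ of $H^+$ handle the u.s.c.\ half, while for the l.s.c.\ half one stays strictly inside the interval where $H^+$ is genuinely continuous and then uses concavity in $\lambda$ to recover the full supremum at the endpoint. Everything else — the Skorokhod/Portmanteau bookkeeping for the continuity of $H^+$ and the boundedness estimates on $1-\lambda(x-u)$ — is routine.
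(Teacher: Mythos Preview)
Your proof is correct. For upper semicontinuity you are essentially unwinding Berge's maximum theorem on the dual formulation by hand, which is the paper's route for that half. The genuine difference is in lower semicontinuity: the paper switches back to the \emph{primal} definition $\Kinf^+(F,u)=\inf_{G\in\mathcal F^+(u)}\KL(F,G)$ and applies Berge there (using upper hemicontinuity of $\mu\mapsto\mathcal F^+(\mu)$ from Lemma~\ref{lem:F+_upper_hemicontinuous} together with joint lower semicontinuity of $\KL$), whereas you stay entirely in the dual, establish joint continuity of $H^+$ on the interior region $\{\lambda(B-u)<1\}$, and then use concavity in $\lambda$ to identify the supremum over $[0,(B-u)^{-1})$ with the full supremum. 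Your route is more self-contained---it avoids the primal machinery and the hemicontinuity lemma for $\mathcal F^+$---at the price of needing $H^+$ jointly continuous in all three variables on the interior, a fact the paper states only in $(\lambda,u)$ (Lemma~\ref{lem:H_continuous}) but which your Portmanteau/uniform-continuity argument correctly supplies.
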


\begin{proof}
We follow the proof method of \cite{agrawal2021optimal} (which applied to a slightly different setting).

We first prove lower semicontinuity. We want to apply Berge's Maximum Theorem \cite[Theorem 2, p. 116]{berge1997topological} to the correspondence $C(F,u) = \mathcal F^+(u)$ and the function $f((F,u), G) = -\KL(F,G)$. We will obtain the upper semicontinuity of $f^*(F,u) := \inf\{f((F,u),G) \mid G \in C(F,u)\} = -\inf\{\KL(F, G) \mid G \in \mathcal F^+(u)\}$, which gives us the lower semicontinuity we are after. We need to show that
\begin{itemize}
	\item $F,u \mapsto C(F,u) = \mathcal F^+(u)$ is upper hemicontinuous: this is proved in Lemma~\ref{lem:F+_upper_hemicontinuous},
	\item $F,u,G \mapsto f(F,u,G) = -\KL(F,G)$ is upper semicontinuous (jointly in all arguments): this is true since $\KL$ is jointly lower semicontinuous \cite{posner1975random}.
\end{itemize}
We have lower semicontinuity on $\cF \times [0,B]$.

To prove upper semicontinuity, we first use duality (Theorem~\ref{thm:Kinf_duality}) to write $\Kinf^+(F, u) = \sup_{\lambda \in [0, (B-u)^{-1}]} H^+(\lambda, F, u)$. Since we want to prove semicontinuity on $\cF \times [0,B)$, we can take any $\varepsilon > 0$ and prove it on $\cF \times [0,B-\varepsilon]$.

We want to apply Berge's Maximum Theorem \cite[Theorem 2, p. 116]{berge1997topological} to the correspondence $C(F,u) = \mathcal [0, (B-u)^{-1}]$ and the function $f((F,u), \lambda) = \mathbb{E}_F[\log (1 - \lambda (X - u))]$. We will obtain the upper semicontinuity of $f^*(F,u) = \sup\{f((F,u),G) \mid G \in C(F,u)\}$, which is exactly what we are after. We need to show that
\begin{itemize}
	\item $F,u \mapsto C(F,u) = [0, (B-u)^{-1}]$ is upper hemicontinuous, nonempty and compact,
	\item $F,u,\lambda \mapsto f(F,u,\lambda) = \mathbb{E}_F[\log (1 - \lambda (X - u))]$ is upper semicontinuous (jointly in all arguments): this is true by Lemma~\ref{lem:H_usc}.
\end{itemize}
For the first point, nonempty compact values are obvious for $u \le B-\varepsilon$. The upper hemicontinuity comes from the continuity of the upper bound of the interval.
\end{proof}

\begin{lemma}\label{lem:H_strict_concave}
$\lambda \mapsto H^+(\lambda, F, u)$ is strictly concave on $[0, (B-u)^{-1})$.
\end{lemma}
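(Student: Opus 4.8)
The plan is to prove strict concavity of $\lambda \mapsto H^+(\lambda, F, u) = \mathbb{E}_F[\log(1 - \lambda(X-u))]$ on the open interval $[0, (B-u)^{-1})$ by a pointwise argument on the integrand combined with a characterization of when equality can hold. For a fixed realization $x \in [0,B]$, consider the scalar function $\lambda \mapsto \phi_x(\lambda) = \log(1 - \lambda(x-u))$, which is well-defined and finite for $\lambda \in [0, (B-u)^{-1})$ since then $1 - \lambda(x-u) \ge 1 - \lambda(B-u) > 0$. Its second derivative is $\phi_x''(\lambda) = -\frac{(x-u)^2}{(1-\lambda(x-u))^2} \le 0$, so $\phi_x$ is concave, and it is \emph{strictly} concave precisely when $x \ne u$. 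Consequently $H^+(\cdot, F, u) = \mathbb{E}_F[\phi_X(\cdot)]$ is concave as an average (Fubini/linearity of expectation preserves concavity).

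For the strictness, I would argue as follows. Take $\lambda_1 \ne \lambda_2$ in $[0,(B-u)^{-1})$ and $t \in (0,1)$. Pointwise we have $\phi_x(t\lambda_1 + (1-t)\lambda_2) \ge t\phi_x(\lambda_1) + (1-t)\phi_x(\lambda_2)$ for all $x$, with strict inequality for every $x \ne u$. Taking expectations gives the non-strict concavity immediately; to upgrade, note that if $F(\{u\}) < 1$ then the set $\{x : x \ne u\}$ has positive $F$-measure, so the strict pointwise inequality on a set of positive measure yields a strict inequality after integration. The only remaining case is $F = \delta_u$, i.e. $F$ is the Dirac mass at $u$; but then $H^+(\lambda, \delta_u, u) = \log(1) = 0$ for all $\lambda$, which is not strictly concave. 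So the statement as phrased needs (and presumably intends) the hypothesis that $F$ is not a point mass at $u$ — note that in all the applications $u$ is a strict separator between means so $\mu = m(F) \neq u$ forces $F \neq \delta_u$. I would state the lemma for $F \ne \delta_u$ (or equivalently $m(F) \ne u$, or simply note it holds whenever $\Kinf^+(F,u) > 0$) and prove strict concavity under that proviso.

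Concretely, the steps in order: (1) verify the integrand $\phi_x$ is finite and $C^2$ on $[0,(B-u)^{-1})$ for each $x \in [0,B]$; (2) compute $\phi_x'' \le 0$ with equality iff $x = u$, giving concavity of $\phi_x$ and strict concavity when $x \ne u$; (3) integrate to get concavity of $H^+$; (4) observe that under $F \ne \delta_u$ the strict pointwise gap holds on a set of positive $F$-measure, hence the integrated inequality is strict, yielding strict concavity. The analogous statement for $H^-$ then follows by the symmetry $H^-(\lambda, F, u) = H^+(\lambda, F^{B-X}, B-u)$ established via Lemma~\ref{lem:Kinf+_symm_eq_Kinf-} (pushing forward through $x \mapsto B - x$), if needed.

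The main obstacle is not the concavity computation itself, which is elementary, but rather correctly handling the domain and the edge case: one must be careful that the expectation $\mathbb{E}_F[\phi_X(\lambda)]$ is well-defined and finite on the open interval (it is, since the integrand is bounded below by $\log(1 - \lambda(B-u))$ and above by $\log(1 + \lambda u)$, both finite), and one must not claim strict concavity when $F$ degenerates to $\delta_u$. A clean way to package step (4) is to use that $\mathbb{E}_F[(X-u)^2] > 0$ whenever $F \ne \delta_u$, together with the fact that the second derivative of $H^+$ equals $-\mathbb{E}_F\left[\frac{(X-u)^2}{(1-\lambda(X-u))^2}\right]$ (differentiation under the integral sign is justified by dominated convergence on compact sub-intervals of the domain), which is then strictly negative.
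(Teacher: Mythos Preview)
Your proposal is correct and takes essentially the same approach as the paper: both rely on the strict concavity of the logarithm applied pointwise to $\lambda \mapsto \log(1-\lambda(X-u))$, then take an expectation. The paper works directly from the definition of concavity (writing $\log(\alpha a + (1-\alpha)b) > \alpha\log a + (1-\alpha)\log b$ inside the expectation) rather than via the second derivative, but this is a cosmetic difference.

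One remark: you are in fact more careful than the paper. The paper's proof asserts the strict inequality without flagging the degenerate case $F = \delta_u$, where $H^+(\lambda,\delta_u,u)\equiv 0$ is affine, not strictly concave. You correctly isolate this edge case; as you note, it is irrelevant in every downstream use of the lemma (the thresholds $u$ arise as strict separators of means, so $F\neq\delta_u$ is always in force).
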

\begin{proof}
For $\alpha \in (0,1)$, by strict concavity of the logarithm,
\begin{align*}
H^+(\alpha\lambda + (1 - \alpha) \eta, F, u)
&= \mathbb{E}_F[\log (\alpha (1 - \lambda (X - u)) + (1 - \alpha) (1 - \eta (X - u)))]
\\
&> \mathbb{E}_F[\alpha \log (1 - \lambda (X - u)) + (1 - \alpha) \log (1 - \eta (X - u))]
\\
&= \alpha H^+(\lambda, F, u) + (1 - \alpha) H^+(\eta, F, u)
\: .
\end{align*}
The restriction to the interval $[0, (B-u)^{-1})$ guarantees that all quantities appearing in logarithms above are finite.
\end{proof}

\begin{lemma}\label{lem:H_continuous}
$(\lambda, u) \mapsto H^+(\lambda, F, u)$ is continuous on $\{(\lambda, u) \in \mathbb{R}_+ \times [0,B] \mid \lambda < (B-u)^{-1}\}$.
\end{lemma}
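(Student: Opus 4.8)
The plan is to prove joint continuity of $(\lambda,u)\mapsto H^+(\lambda,F,u)=\mathbb{E}_F[\log(1-\lambda(X-u))]$ on the open region $D=\{(\lambda,u)\in\mathbb{R}_+\times[0,B] \mid \lambda<(B-u)^{-1}\}$, for a fixed $F\in\cF$. The key point is that on any compact subset of $D$ the integrand $x\mapsto\log(1-\lambda(x-u))$ stays bounded, because $1-\lambda(x-u)$ is bounded away from $0$ and from $+\infty$ uniformly over the compact set and over $x\in[0,B]$; once we have uniform bounds, dominated convergence applies.

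First I would fix a point $(\lambda_0,u_0)\in D$ and a sequence $(\lambda_n,u_n)\to(\lambda_0,u_0)$ in $D$. Since $D$ is open, for $n$ large the points $(\lambda_n,u_n)$ lie in a compact box $[0,\bar\lambda]\times[0,\bar u]\subset D$ with $\bar\lambda<(B-\bar u)^{-1}$, i.e.\ $1-\bar\lambda(B-\bar u)>0$. For every $x\in[0,B]$ and every $(\lambda,u)$ in this box we then have
\begin{equation*}
0<1-\bar\lambda(B-\bar u)\le 1-\lambda(x-u)\le 1+\bar\lambda B,
\end{equation*}
using $x-u\le B-u\le B$ for the upper bound and $1-\lambda(x-u)\ge 1-\lambda(B-u)\ge 1-\bar\lambda(B-\bar u)$ for the lower bound (the map $u\mapsto 1-\lambda(B-u)$ is increasing and $\lambda\mapsto 1-\lambda(B-u)$ is decreasing). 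Consequently the functions $x\mapsto\log(1-\lambda_n(x-u_n))$ are uniformly bounded in absolute value by the constant $\max\{|\log(1-\bar\lambda(B-\bar u))|,\log(1+\bar\lambda B)\}$, which is $F$-integrable since $F$ is a probability measure. Pointwise in $x$, $\log(1-\lambda_n(x-u_n))\to\log(1-\lambda_0(x-u_0))$ by continuity of $(\lambda,u)\mapsto\log(1-\lambda(x-u))$ on the box. By the dominated convergence theorem, $H^+(\lambda_n,F,u_n)\to H^+(\lambda_0,F,u_0)$, which is the desired continuity. The analogous statement for $H^-$ follows either by the same argument or via the reflection $x\mapsto B-x$ as in Lemma~\ref{lem:Kinf+_symm_eq_Kinf-}.

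I do not expect a serious obstacle here; the only mild subtlety is being careful that the uniform lower bound on $1-\lambda(x-u)$ genuinely requires working on a \emph{closed} box strictly inside $D$ (so that the strict inequality $\lambda<(B-u)^{-1}$ becomes a uniform gap), rather than on all of $D$, where $\log(1-\lambda(x-u))\to-\infty$ as $(\lambda,u)$ approaches the boundary. Restricting to a sequence eventually trapped in such a box, which is legitimate since $D$ is open, handles this cleanly.
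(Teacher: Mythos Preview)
Your dominated-convergence strategy is sound, but the specific compact neighborhood you chose does not work. You claim the sequence eventually lies in a box $[0,\bar\lambda]\times[0,\bar u]\subset D$ with $\bar\lambda<(B-\bar u)^{-1}$. But containment of that box in $D$ requires $\bar\lambda<(B-u)^{-1}$ for \emph{every} $u\in[0,\bar u]$, hence $\bar\lambda<1/B$; if $\lambda_0\ge 1/B$ (which is allowed whenever $u_0>0$), no such box containing $(\lambda_0,u_0)$ exists. Relatedly, your lower bound is evaluated at the wrong corner: since $u\mapsto 1-\lambda(B-u)$ is increasing, its minimum over $u\in[0,\bar u]$ is attained at $u=0$, giving $1-\bar\lambda B$, not $1-\bar\lambda(B-\bar u)$. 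For instance with $B=1$, $\bar u=1/2$, $\bar\lambda=3/2$ (so $\bar\lambda<(B-\bar u)^{-1}=2$), the point $(\lambda,u)=(3/2,1/10)$ lies in your box and at $x=1$ one gets $1-\lambda(x-u)=-0.35$, so the log blows up.

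The fix is immediate: replace the box from the origin by a small product neighborhood $[\lambda_0-\eta,\lambda_0+\eta]\times[u_0-\eta,u_0+\eta]$ intersected with $\mathbb{R}_+\times[0,B]$. For $\eta$ small this lies in $D$, and the minimum of $1-\lambda(B-u)$ over it equals $1-(\lambda_0+\eta)(B-u_0+\eta)>0$. The paper does essentially this, parametrizing the margin by $\varepsilon$: from $\lambda_0\le(B-u_0)^{-1}(1-\varepsilon)$ it gets $\lambda_n\le(B-u_n)^{-1}(1-\varepsilon/2)$ eventually, whence $1-\lambda_n(X-u_n)\ge\varepsilon/2$. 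With that corrected uniform lower bound your dominated-convergence argument goes through and is in fact a touch more economical than the paper's route, which invokes the earlier upper-semicontinuity Lemma~\ref{lem:H_usc} and then handles lower semicontinuity separately via Fatou.
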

\begin{proof}
We already have upper semicontinuity by Lemma~\ref{lem:H_usc}. We only need lower semicontinuity. That is, we need that for $\lambda_n \to \lambda$ and $u_n \to u$ in that set, we have
\begin{align*}
\liminf_n \mathbb{E}_F[\log (1 - \lambda_n (X - u_n))] \ge \mathbb{E}_F[\log (1 - \lambda (X - u))] \: .
\end{align*}
There exists $\varepsilon > 0$ such that $\lambda \le (B-u)^{-1}(1 - \varepsilon)$. Then for $n$ big enough, $\lambda_n \le (B-u_n)^{-1}(1 - \varepsilon/2)$.
For all $n$ large enough, we get $\log (1 - \lambda_n (X - u_n)) - \log (\varepsilon/2) \ge 0$. By Fatou's lemma,
\begin{align*}
\liminf_n \mathbb{E}_F[\log (1 - \lambda_n (X - u_n)) - \log (\varepsilon/2)] \ge \mathbb{E}_F[\log (1 - \lambda (X - u)) - \log (\varepsilon/2)] \: .
\end{align*}
We cancel the $\log (\varepsilon/2)$ term and get the lower semicontinuity.
\end{proof}

Let $\lambda_\star^+(F, u) = \argmax_{\lambda \in [0, (B-u)^{-1}]} \mathbb{E}_F[\log (1 - \lambda (X - u))] = \argmax_{\lambda \in [0, (B-u)^{-1}]} H^+(\lambda, F, u)$ and $\lambda_\star^-(F, u) = \argmax_{\lambda \in [0, u^{-1}]} H^-(\lambda, F, u)$.

\begin{lemma}\label{lem:lambda_star_properties}
$u \mapsto \lambda_\star^+(F, u)$ is continuous over the set $\{u \in [0,B) \mid \lambda_\star^+(F, u) < (B-u)^{-1}\}$.
\end{lemma}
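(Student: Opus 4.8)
The statement to prove is the continuity of $u \mapsto \lambda_\star^+(F, u)$ on the set $\mathcal U_F := \{u \in [0,B) \mid \lambda_\star^+(F, u) < (B-u)^{-1}\}$. The natural tool is Berge's Maximum Theorem, exactly as used in the proof of Theorem~\ref{thm:Kinf_continuous}, but now applied to extract continuity of the \emph{argmax} rather than the max. The key ingredients are already available: by Lemma~\ref{lem:H_strict_concave}, $\lambda \mapsto H^+(\lambda, F, u)$ is strictly concave on $[0,(B-u)^{-1})$, so the maximiser over this half-open interval, when it exists, is unique; by Lemma~\ref{lem:H_continuous}, $(\lambda,u) \mapsto H^+(\lambda, F, u)$ is jointly continuous on $\{(\lambda,u) : \lambda < (B-u)^{-1}\}$.

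\textbf{Main steps.} First I would fix $u_0 \in \mathcal U_F$ and a small closed neighbourhood $[u_0 - \eta, u_0 + \eta] \subset [0,B)$, and note that since $\lambda_\star^+(F,u_0) < (B-u_0)^{-1}$ strictly, I can pick a constant $\Lambda < \inf_{u \in [u_0-\eta, u_0+\eta]} (B-u)^{-1}$ with $\lambda_\star^+(F,u_0) < \Lambda$; I must argue that on a (possibly smaller) neighbourhood the maximiser still lies in the compact interval $[0,\Lambda]$. This follows from upper semicontinuity of the max-value (Theorem~\ref{thm:Kinf_continuous}, giving $\Kinf^+$ continuous) together with the fact that $H^+(\lambda,F,u)$ is bounded away from the optimum for $\lambda$ near $(B-u)^{-1}$ --- more precisely, $H^+(\Lambda', F, u) \to \Kinf^+(F,u)$ is impossible for $\Lambda'$ close to the endpoint unless the true optimum is interior, and by strict concavity the optimum over $[0,\Lambda]$ coincides with the optimum over $[0,(B-u)^{-1})$ once $\Lambda$ is chosen above $\lambda_\star^+(F,u)$. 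Second, on the neighbourhood where the maximiser is confined to $[0,\Lambda]$, apply Berge's theorem to the constant compact-valued correspondence $u \mapsto [0,\Lambda]$ and the jointly continuous objective $(\lambda,u)\mapsto H^+(\lambda,F,u)$ (continuity on $[0,\Lambda]\times[u_0-\eta',u_0+\eta']$ holding by Lemma~\ref{lem:H_continuous} since $\Lambda < (B-u)^{-1}$ there): the argmax correspondence is upper hemicontinuous, and by strict concavity (Lemma~\ref{lem:H_strict_concave}) it is single-valued, hence a continuous function at $u_0$. Since $u_0 \in \mathcal U_F$ was arbitrary, continuity holds on all of $\mathcal U_F$.

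\textbf{Expected obstacle.} The delicate point is the first step: showing that the maximiser does not escape to the boundary $(B-u)^{-1}$ as $u$ varies near $u_0$, i.e.\ that $\mathcal U_F$ is (relatively) open and that on it the optimisation can be replaced by one over a fixed compact set on which the objective is continuous. The subtlety is that $H^+$ is only upper semicontinuous up to the boundary and takes the value $-\infty$ beyond it, so one cannot naively apply Berge to the closed interval $[0,(B-u)^{-1}]$. The clean way around this is to combine the continuity of $\Kinf^+ = \sup H^+$ (Theorem~\ref{thm:Kinf_continuous}) with strict concavity: fix $\Lambda$ with $\lambda_\star^+(F,u_0) < \Lambda < (B-u_0)^{-1}$; by joint continuity $H^+(\Lambda, F, u) \to H^+(\Lambda, F, u_0) < \Kinf^+(F,u_0)$, and by continuity of $\Kinf^+$, for $u$ near $u_0$ we still have $H^+(\Lambda,F,u) < \Kinf^+(F,u)$; strict concavity then forces the (unique) maximiser to lie in $(0,\Lambda)$, so the constrained problem over $[0,\Lambda]$ has the same solution. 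This reduces everything to the standard Berge argument and completes the proof.
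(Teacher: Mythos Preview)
Your approach is essentially the same as the paper's --- Berge's theorem plus strict concavity, after localising the optimisation to a compact interval on which $H^+$ is continuous --- but your localisation argument has a small logical gap.

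The issue is in the ``expected obstacle'' paragraph. From $H^+(\Lambda,F,u) < \Kinf^+(F,u)$ you conclude that ``strict concavity then forces the (unique) maximiser to lie in $(0,\Lambda)$''. This does not follow: that strict inequality only rules out $\lambda_\star^+(F,u)=\Lambda$, but the maximiser could equally well lie in $(\Lambda,(B-u)^{-1}]$. Strict concavity makes $H^+(\cdot,F,u)$ unimodal, so to pin the maximiser below $\Lambda$ you need a comparison between two points on the same side, e.g.\ $H^+(\lambda_0,F,u) > H^+(\Lambda,F,u)$ for some $\lambda_0 < \Lambda$. Taking $\lambda_0 = \lambda_\star^+(F,u_0)$ works: at $u_0$ this inequality is strict, and joint continuity of $H^+$ at both $\lambda_0$ and $\Lambda$ (Lemma~\ref{lem:H_continuous}) preserves it on a neighbourhood; unimodality then forces $\lambda_\star^+(F,u)<\Lambda$.

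The paper handles this localisation slightly differently: it first applies Berge to the correspondence $\Gamma(u)=[0,(B-u)^{-1}-\varepsilon]$ (varying with $u$ rather than constant) to get continuity of the \emph{restricted} argmax for each fixed $\varepsilon$, and then uses that continuity itself to argue the restricted argmax stays strictly below the right endpoint on a neighbourhood of $u_0$, whence it coincides with $\lambda_\star^+$. Your constant-correspondence version is just as clean once the gap above is patched.
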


\begin{proof}
We first show that for any $\varepsilon \in (0, B^{-1}]$, the function $u \mapsto \argmax_{\lambda \in [0, (B-u)^{-1} - \varepsilon]} H^+(\lambda, F, u)$ is continuous on $[0,B)$ and the argmax is unique. This is not exactly continuity of $u \mapsto \lambda_\star^+(F, u)$ because of the $[0, (B-u)^{-1} - \varepsilon]$ interval instead of $[0, (B-u)^{-1}]$.

We will apply Berge's Maximum theorem \cite[page 116]{berge1997topological}. For $\varepsilon \in (0, B^{-1}]$, let
\begin{align*}
\phi(\lambda, u)
&= H(\lambda, F, u)
\:, \\
\Gamma(u)
&= [0, (B-u)^{-1} - \varepsilon]
\:, \\
M(u)
&= \max \{H(\lambda, u) \mid \lambda \in \Gamma(u)\}
\:, \\
\Phi(u)
&= \argmax \{\phi(\lambda, u) \mid \lambda \in \Gamma(u)\}
\: .
\end{align*}
We verify the hypotheses of the theorem for any $\varepsilon'>0$ and $u < B-\varepsilon'$:
\begin{itemize}
	\item $H$ is continuous on $\{(\lambda, u) \in \mathbb{R}_+ \times [0,B] \mid \lambda < (B-u)^{-1}\}$, by Lemma~\ref{lem:H_continuous}, which is a domain containing $\Gamma(u) \times \{u\}$ for all $u$.
	\item $\Gamma$ is nonempty (since $(B-u)^{-1} - \varepsilon \ge 0$), compact-valued (since $u \le B - \varepsilon'$) and continuous.
\end{itemize}
We obtain that $M$ is continuous on $[0,B)$ and that $\Phi$ is upper hemicontinuous.

Now since $\phi$ is a strictly concave function of $\lambda$ (by Lemma~\ref{lem:H_strict_concave}) and $\Gamma$ is convex, we can argue as in \cite[Theorem 9.17]{sundaram1996first} to prove that $\Phi$ is a single-valued upper hemicontinuous correspondence, hence a continuous function.

Now that we have proved the continuity of the argmax restricted to the interval $[0, (B-u)^{-1} - \varepsilon]$, let's prove the continuity of $u \mapsto \lambda_\star^+(F, u)$ over the set $\{u \in [0,B) \mid \lambda_\star^+(F, u) < (B-u)^{-1}\}$.

let $u \in [0,B)$ such that $\lambda_\star^+(F, u) < (B-u)^{-1}$. Then there exists $\varepsilon > 0$ such that
\begin{align*}
\lambda_\star^+(F, u) &= \argmax_{\lambda \in [0, (B-u)^{-1} - \varepsilon]} H^+(\lambda, F, u) \: ,
\\
\text{and }\lambda_\star^+(F, u) &\le (B-u)^{-1} - 3\varepsilon \: .
\end{align*}

Remark that by concavity of $H$ in $\lambda$, for all $u$, if $\argmax_{\lambda \in [0, (B-u)^{-1} - \varepsilon]} H^+(\lambda, F, u) \ne (B-u)^{-1} - \varepsilon$ then $\lambda_\star^+(F, u) = \argmax_{\lambda \in [0, (B-u)^{-1} - \varepsilon]} H^+(\lambda, F, u)$.

For $v$ in some neighborhood of $u$, we have both $(B-v)^{-1} - \varepsilon > (B-u)^{-1} - 2\varepsilon$ and $\argmax_{\lambda \in [0, (B-v)^{-1} - \varepsilon]} H^+(\lambda, F, v) < (B-u)^{-1} - 2\varepsilon$. This means that for all $v$ in that neighborhood, $\lambda_\star^+(F, v) = \argmax_{\lambda \in [0, (B-v)^{-1} - \varepsilon]} H^+(\lambda, F, v)$. The continuity of the $\epsilon$ version then gives continuity of $\lambda_\star^+$ at $u$.
\end{proof}

\begin{lemma}[Theorem 6 in \cite{HondaTakemura10}] \label{lem:differentiability_Kinf}
	For all $F \in \cF$ and $u \in (m(F),B]$, $u \mapsto \Kinf^+(F,u)$ is differentiable and
	\begin{equation} \label{eq:differentiating_Kinf+}
		\frac{\partial \Kinf^{+}(F,u)}{\partial u } = \lambda_{\star}^+(F,u) \: .
	\end{equation}
	For all $F \in \cF$ and $u \in [0,m(F))$, $u \mapsto \Kinf^-(F,u)$ is differentiable and
	\begin{equation} \label{eq:differentiating_Kinf-}
		\frac{\partial \Kinf^{-}(F,u)}{\partial u } = - \lambda_{\star}^{-}(F,u) \: .
	\end{equation}
\end{lemma}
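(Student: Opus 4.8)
The plan is to prove the first identity \eqref{eq:differentiating_Kinf+}; the second follows immediately from the symmetry relation in Lemma~\ref{lem:Kinf+_symm_eq_Kinf-} together with the chain rule. So fix $F \in \cF$ and restrict attention to $u \in (m(F),B]$. By Theorem~\ref{thm:Kinf_duality} we have the variational representation $\Kinf^+(F,u) = \sup_{\lambda \in [0,(B-u)^{-1}]} H^+(\lambda,F,u)$, and by Lemma~\ref{lem:H_strict_concave} the supremand is strictly concave in $\lambda$, so the maximiser $\lambda_\star^+(F,u)$ is unique. The first step is to check that for $u$ strictly above the mean the maximiser is \emph{interior} to $[0,(B-u)^{-1})$: since $\partial_\lambda H^+(\lambda,F,u)|_{\lambda=0} = \mathbb{E}_F[X-u] = m(F)-u < 0$ would be the wrong sign, one actually argues that $\lambda_\star^+(F,u)>0$ because $\Kinf^+(F,u)>0$ when $u>m(F)$ (Lemma~\ref{lem:inf_Kinf_restricted_to_open_mean_interval}, assumed available from Appendix~\ref{app:kinf_for_bounded_distributions}), and that $\lambda_\star^+(F,u)<(B-u)^{-1}$ because $H^+(\lambda,F,u)\to-\infty$ as $\lambda\uparrow(B-u)^{-1}$ (the integrand $\log(1-\lambda(X-u))$ blows down to $-\infty$ on the part of $\supp(F)$ near $B$). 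Hence $u$ lies in the set on which Lemma~\ref{lem:lambda_star_properties} guarantees continuity of $u\mapsto\lambda_\star^+(F,u)$.

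The second step is a standard envelope/Danskin argument. Write $g(u) = \Kinf^+(F,u) = H^+(\lambda_\star^+(F,u),F,u)$. For fixed $\lambda$ in the interior, $u\mapsto H^+(\lambda,F,u) = \mathbb{E}_F[\log(1-\lambda(X-u))]$ is differentiable with $\partial_u H^+(\lambda,F,u) = \mathbb{E}_F\!\left[\frac{\lambda}{1-\lambda(X-u)}\right]$; one checks (dominated convergence, using the interiority bound $\lambda\le(B-u)^{-1}-\varepsilon$ on a neighbourhood, which keeps $1-\lambda(X-u)$ bounded away from $0$) that this partial derivative is jointly continuous in $(\lambda,u)$ near the point of interest. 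Then for the optimal-value function $g$, the usual one-sided estimates
\[
g(u+h) - g(u) \ge H^+(\lambda_\star^+(F,u),F,u+h) - H^+(\lambda_\star^+(F,u),F,u)
\]
(using suboptimality of the old maximiser at $u+h$) and the symmetric inequality with $\lambda_\star^+(F,u+h)$ sandwich the difference quotient; dividing by $h$, letting $h\to 0$, and invoking the continuity of $\lambda_\star^+(F,\cdot)$ and of $\partial_u H^+$ yields $g'(u) = \partial_u H^+(\lambda,F,u)\big|_{\lambda=\lambda_\star^+(F,u)}$. A short separate computation shows this equals $\lambda_\star^+(F,u)$: because $\lambda_\star^+$ is an interior maximiser we have the first-order condition $\partial_\lambda H^+(\lambda_\star^+,F,u) = \mathbb{E}_F\!\left[\frac{u-X}{1-\lambda_\star^+(X-u)}\right]=0$, i.e. $\mathbb{E}_F\!\left[\frac{X-u}{1-\lambda_\star^+(X-u)}\right]=0$; adding $\mathbb{E}_F\!\left[\frac{1-\lambda_\star^+(X-u)}{1-\lambda_\star^+(X-u)}\right] = 1$ appropriately gives $\mathbb{E}_F\!\left[\frac{\lambda_\star^+}{1-\lambda_\star^+(X-u)}\right] = \lambda_\star^+$, which is exactly $\partial_u H^+$ evaluated at $\lambda_\star^+$.

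The main obstacle I anticipate is the interchange of derivative and expectation and the domination step near the right endpoint: one must be careful that $\lambda_\star^+(F,u)$ stays bounded away from $(B-u)^{-1}$ uniformly over a small interval of $u$'s, so that $1-\lambda(X-u)\ge c>0$ $F$-a.s.\ and the integrands $\frac{1}{1-\lambda(X-u)}$ are uniformly integrable — this is where the interiority claim of the first step and the continuity statement of Lemma~\ref{lem:lambda_star_properties} do the real work, and where the restriction $u>m(F)$ (rather than $u\ge m(F)$) is essential, since at $u=m(F)$ one has $\lambda_\star^+=0$ on the boundary and the derivative formula degenerates. Everything else is routine envelope-theorem bookkeeping, and the $\Kinf^-$ case needs no new argument beyond applying Lemma~\ref{lem:Kinf+_symm_eq_Kinf-} with the reflection $x\mapsto B-x$, which sends $\lambda_\star^+$ to $\lambda_\star^-$ and introduces the sign change in \eqref{eq:differentiating_Kinf-}.
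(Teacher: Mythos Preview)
The paper does not supply a proof of this lemma; it is cited as Theorem~6 of \cite{HondaTakemura10}. Your envelope/Danskin approach is the natural one and works on the interior regime, but there is a genuine gap in the interiority claim that breaks the argument as written.

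You assert that $\lambda_\star^+(F,u) < (B-u)^{-1}$ because ``$H^+(\lambda,F,u)\to-\infty$ as $\lambda\uparrow(B-u)^{-1}$''. This is false in general: the integrand $\log(1-\lambda(X-u))$ blows down only if $F$ has mass arbitrarily close to $B$. When $\supp(F)$ is bounded away from $B$ (so $\mathbb{E}_F[1/(B-X)]<\infty$), the limit is finite and the maximiser can sit on the boundary. Indeed, Lemma~\ref{lem:explicit_implicit_solutions} in this same appendix shows $\lambda_\star^+(F,u) = (B-u)^{-1}$ precisely when $u \ge u^+(F) := B - 1/\mathbb{E}_F[1/(B-X)]$, and $u^+(F)<B$ whenever that expectation is finite. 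On this boundary regime your first-order-condition step also fails: the stationarity condition $\mathbb{E}_F[(X-u)/(1-\lambda_\star^+(X-u))]=0$ becomes a KKT inequality, so the algebraic identity $\mathbb{E}_F[\lambda_\star^+/(1-\lambda_\star^+(X-u))]=\lambda_\star^+$ no longer follows. Lemma~\ref{lem:lambda_star_properties}, which you invoke for continuity, is likewise stated only on the set $\{\lambda_\star^+ < (B-u)^{-1}\}$.

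The fix is to split into the two regimes of Lemma~\ref{lem:explicit_implicit_solutions}. For $u\in(m(F),u^+(F))$ your argument goes through as written. For $u\in[u^+(F),B)$ one has explicitly $\Kinf^+(F,u)=H^+((B-u)^{-1},F,u)=\mathbb{E}_F[\log(B-X)]-\log(B-u)$, which differentiates directly to $(B-u)^{-1}=\lambda_\star^+(F,u)$. Finally one checks the one-sided derivatives match at $u=u^+(F)$ (both equal $(B-u^+(F))^{-1}$, using that the implicit equation \eqref{eq:implicit_def_dual_variable} is consistent with the boundary value there). With this case distinction the proof is complete.
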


\subsection{Convexity}
\label{sub:convexity}

\begin{lemma}
The functions $\Kinf^+$ and $\Kinf^-$ are jointly convex on $\cF \times [0,B]$.
\end{lemma}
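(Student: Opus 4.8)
The plan is to prove joint convexity of $\Kinf^+$ on $\cF \times [0,B]$ directly from the duality representation in Theorem~\ref{thm:Kinf_duality}, and then deduce the statement for $\Kinf^-$ via the symmetry relation of Lemma~\ref{lem:Kinf+_symm_eq_Kinf-}. Recall that $\Kinf^+(F,u) = \sup_{\lambda \in [0,(B-u)^{-1}]} H^+(\lambda,F,u)$, where $H^+(\lambda,F,u) = \mathbb{E}_F[\log(1-\lambda(X-u))]$, with the convention $\log(x) = -\infty$ for $x \le 0$. A supremum of jointly convex functions is jointly convex, so it suffices to show that for each fixed $\lambda \ge 0$, the map $(F,u) \mapsto H^+(\lambda,F,u)$ is convex on the set where $\lambda < (B-u)^{-1}$ (and equals $-\infty$, hence causes no trouble, outside it). Actually one must be slightly careful: $\Kinf^+(F,u)$ is not literally a supremum over a fixed index set because the constraint set $[0,(B-u)^{-1}]$ depends on $u$. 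The clean fix is to write $\Kinf^+(F,u) = \sup_{\lambda \ge 0} \tilde H^+(\lambda,F,u)$ where $\tilde H^+(\lambda,F,u) \eqdef \mathbb{E}_F[\log(1-\lambda(X-u))]$ with the $\log(x) = -\infty$ convention; since $X \le B$, for $\lambda > (B-u)^{-1}$ the argument $1-\lambda(X-u)$ is negative at $X=B$ (or more precisely on a set of positive measure unless $F$ is degenerate), making $\tilde H^+ = -\infty$ there, so extending the sup to all $\lambda \ge 0$ does not change the value. This reduces everything to: for each fixed $\lambda \ge 0$, $(F,u) \mapsto \tilde H^+(\lambda,F,u)$ is convex.

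First I would fix $\lambda \ge 0$ and take $(F_0,u_0),(F_1,u_1) \in \cF \times [0,B]$ and $\alpha \in [0,1]$. Set $F_\alpha = \alpha F_1 + (1-\alpha)F_0$ and $u_\alpha = \alpha u_1 + (1-\alpha)u_0$. The key inequality is concavity of $t \mapsto \log t$: for each fixed $x$,
\[
1 - \lambda(x - u_\alpha) = \alpha(1-\lambda(x-u_1)) + (1-\alpha)(1-\lambda(x-u_0)),
\]
so
\[
\log(1-\lambda(x-u_\alpha)) \ge \alpha \log(1-\lambda(x-u_1)) + (1-\alpha)\log(1-\lambda(x-u_0)),
\]
with the convention making this valid even when some term is $-\infty$ (if either right-hand argument is $\le 0$ the RHS is $-\infty$ and the inequality is trivial; if both are positive then so is the convex combination). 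Wait — this shows concavity in $u$ for fixed $F$, not what we want; convexity is the target. Let me reconsider. For convexity in $u$ we would need the reverse inequality, which log-concavity does not give. The resolution is that convexity in the pair $(F,u)$ comes from the expectation over $F_\alpha$, not from pointwise manipulation. Indeed $\tilde H^+(\lambda,F_\alpha,u_\alpha) = \mathbb{E}_{F_\alpha}[\log(1-\lambda(X-u_\alpha))] = \alpha\,\mathbb{E}_{F_1}[\log(1-\lambda(X-u_\alpha))] + (1-\alpha)\,\mathbb{E}_{F_0}[\log(1-\lambda(X-u_\alpha))]$ by linearity of the mixture, and I would then need $\mathbb{E}_{F_1}[\log(1-\lambda(X-u_\alpha))] \le \mathbb{E}_{F_1}[\log(1-\lambda(X-u_1))]$ — but that is again false in general. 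So the naive approach fails, and the statement must instead be proved from the primal (infimum of KL) definition.

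The correct route, which I would carry out, is via the primal definition and joint convexity of KL. We have $\Kinf^+(F,u) = \inf\{\KL(F,G) : G \in \cF,\ \mathbb{E}_G[X] \ge u\}$, and by Lemma~\ref{lem:inf_attained_in_Kinf} the infimum is attained. Take $(F_0,u_0),(F_1,u_1)$ and $\alpha$; let $G_0,G_1$ be minimizers, so $\Kinf^+(F_i,u_i) = \KL(F_i,G_i)$ and $\mathbb{E}_{G_i}[X] \ge u_i$. Set $G_\alpha = \alpha G_1 + (1-\alpha)G_0$, $F_\alpha, u_\alpha$ as above. Then $G_\alpha \in \cF$ and $\mathbb{E}_{G_\alpha}[X] = \alpha\mathbb{E}_{G_1}[X] + (1-\alpha)\mathbb{E}_{G_0}[X] \ge \alpha u_1 + (1-\alpha)u_0 = u_\alpha$, so $G_\alpha$ is feasible for the problem defining $\Kinf^+(F_\alpha,u_\alpha)$. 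Hence
\[
\Kinf^+(F_\alpha,u_\alpha) \le \KL(F_\alpha,G_\alpha) \le \alpha\KL(F_1,G_1) + (1-\alpha)\KL(F_0,G_0) = \alpha\Kinf^+(F_1,u_1) + (1-\alpha)\Kinf^+(F_0,u_0),
\]
where the middle inequality is the well-known joint convexity of $(\mu,\nu) \mapsto \KL(\mu,\nu)$ (cited already, e.g.\ via \cite{posner1975random} or standard references). This gives joint convexity of $\Kinf^+$; the same argument with $\ge$ replaced by $\le$ in the mean constraint gives joint convexity of $\Kinf^-$, or alternatively one invokes Lemma~\ref{lem:Kinf+_symm_eq_Kinf-} together with the fact that $F \mapsto F^{B-X}$ and $u \mapsto B-u$ are affine, hence preserve convexity. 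The main obstacle is purely expository: one must make sure the infimum is attained (Lemma~\ref{lem:inf_attained_in_Kinf}), that the feasible set is genuinely nonempty and convex (Lemma~\ref{lem:F+_nonempty_compact_convex}), and that mixing the minimizers is legitimate — all of which are already available in the excerpt — so the proof is short once one resists the temptation to use duality.
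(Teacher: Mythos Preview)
Your final argument is correct and essentially identical to the paper's: take attained minimizers $G_0,G_1$ (Lemma~\ref{lem:inf_attained_in_Kinf}), form the mixture $G_\alpha$, check it is feasible for $\Kinf^+(F_\alpha,u_\alpha)$, and conclude via joint convexity of $\KL$. The only difference is expository --- you record a false start through the dual formulation before arriving at the primal argument, whereas the paper goes there directly.
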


\begin{proof}
We prove the result for $\Kinf^+$, but the proof for $\Kinf^-$ is identical.
Let $F_1, F_2 \in \cF$, $u_1, u_2 \in [0,B]$ and let $G_1, G_2 \in \cF$ be distributions at which the infimum is attained in $\Kinf^+(F_1, u_1)$ and $\Kinf^+(F_2, u_2)$ respectively (which exist by Lemma~\ref{lem:inf_attained_in_Kinf}). For all $\alpha \in [0,1]$, $\alpha G_1 + (1 - \alpha)G_2$ has expectation $\alpha u_1 + (1 - \alpha) u_2$. Hence for all $\alpha \in [0,1]$,
\begin{align*}
\Kinf^+(\alpha F_1 + (1 - \alpha) F_2, \alpha u_1 + (1 - \alpha) u_2)
&\le \KL(\alpha F_1 + (1 - \alpha) F_2, \alpha G_1 + (1 - \alpha) G_2)
\\
&\le \alpha \KL(F_1, G_1) + (1 - \alpha)\KL(F_2, G_2)
\\
&= \alpha \Kinf^+(F_1, u_1) + (1 - \alpha)\Kinf^+(F_2, u_2)
\end{align*}
The first inequality follows from the definition of $\Kinf^+$ as an infimum and the second inequality comes from the joint convexity of the Kullback-Leibler divergence. We have proved joint convexity.
\end{proof}

\begin{lemma}[Theorem~5 in \cite{HondaTakemura10}] \label{lem:explicit_implicit_solutions}
Let $F \in \cF$ and $u^{+}(F) = B - \frac{1}{\bE_{F}\left[ \frac{1}{B-X}\right]} \geq m(F)$. We have
\begin{equation} \label{eq:cdt_null_dual_variable}
	\lambda_{\star}^{+}(F,u) = 0 \: \iff \: u \leq m(F) \: ,
\end{equation}
\begin{equation} \label{eq:implicit_def_dual_variable}
	u \in (m(F), u^{+}(F)]  \: \implies \: \bE_{F} \left[ \frac{1}{1- \lambda_{\star}^{+}(F,u)(X - u)}\right] = 1 \: ,
\end{equation}
and
\begin{equation} \label{eq:explicit_def_dual_variable}
	\lambda_{\star}^{+}(F,u) = \frac{1}{B-u} \: \iff \: u \geq u^{+}(F) \: .
\end{equation}
\end{lemma}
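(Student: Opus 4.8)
The plan is to work entirely with the variational characterisation $\Kinf^+(F,u) = \sup_{\lambda \in [0,(B-u)^{-1}]} H^+(\lambda,F,u)$ from Theorem~\ref{thm:Kinf_duality}, where $H^+(\lambda,F,u) = \mathbb{E}_F[\log(1-\lambda(X-u))]$, and to exploit that $\lambda \mapsto H^+(\lambda,F,u)$ is strictly concave on $[0,(B-u)^{-1})$ (Lemma~\ref{lem:H_strict_concave}) and upper semicontinuous on $[0,(B-u)^{-1}]$ (Lemma~\ref{lem:H_usc}); together these already guarantee that the maximiser $\lambda_\star^+(F,u)$ exists and is unique. The only analytic ingredient needed is that on any compact subinterval $[0,(B-u)^{-1}-\varepsilon]$ one may differentiate under the expectation — the integrand and its $\lambda$-derivative are bounded there, since $X\in[0,B]$ and $1-\lambda(X-u)$ is bounded away from $0$ — yielding $\partial_\lambda H^+(\lambda,F,u) = -\mathbb{E}_F\!\left[\tfrac{X-u}{1-\lambda(X-u)}\right]$. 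By strict concavity $\partial_\lambda H^+(\cdot,F,u)$ is non-increasing, so the position of $\lambda_\star^+(F,u)$ in $[0,(B-u)^{-1}]$ is governed solely by the sign of this derivative at the two endpoints.

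For the left endpoint, $\partial_\lambda H^+(0,F,u) = u - m(F)$, so by the concavity argument $\lambda_\star^+(F,u)=0$ if and only if $u\le m(F)$: for $u\le m(F)$ the derivative is $\le 0$ throughout and upper semicontinuity places the supremum at $0$, whereas for $u>m(F)$ the function is strictly increasing at $0$. This is exactly \eqref{eq:cdt_null_dual_variable}. The inequality $u^+(F)\ge m(F)$ claimed in the statement follows from Jensen applied to the convex map $x\mapsto (B-x)^{-1}$: $\mathbb{E}_F[(B-X)^{-1}]\ge (B-m(F))^{-1}$, hence $u^+(F) = B - \mathbb{E}_F[(B-X)^{-1}]^{-1}\ge m(F)$.

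For the right endpoint I would study the left limit $\ell \eqdef \lim_{\lambda\uparrow(B-u)^{-1}}\partial_\lambda H^+(\lambda,F,u)$, which exists in $[-\infty,+\infty)$ by concavity. Using $1-(B-u)^{-1}(X-u) = (B-X)/(B-u)$ together with the identity $\tfrac{X-u}{B-X} = (B-u)\tfrac{1}{B-X} - 1$, one obtains $\ell = (B-u)\bigl(1 - (B-u)\,\mathbb{E}_F[(B-X)^{-1}]\bigr)$, so $\ell\ge 0$ precisely when $B-u \le \mathbb{E}_F[(B-X)^{-1}]^{-1}$, i.e.\ when $u\ge u^+(F)$ (with the degenerate case $\mathbb{E}_F[(B-X)^{-1}]=+\infty$, where $u^+(F)=B$, handled directly, since then $\ell<0$ for every $u<B$). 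When $\ell\ge0$, concavity makes $H^+$ non-decreasing on $[0,(B-u)^{-1})$ and upper semicontinuity puts the supremum at $\lambda=(B-u)^{-1}$; when $\ell<0$ the maximiser is strictly interior. This yields \eqref{eq:explicit_def_dual_variable}.

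Finally, for $u\in(m(F),u^+(F))$ the two endpoint analyses give $\lambda_\star^+(F,u)\in(0,(B-u)^{-1})$, so the first-order condition $\partial_\lambda H^+(\lambda_\star^+(F,u),F,u)=0$ holds, i.e.\ $\mathbb{E}_F\!\left[\tfrac{X-u}{1-\lambda_\star^+(X-u)}\right]=0$. Rewriting $\tfrac{X-u}{1-\lambda(X-u)} = \tfrac{1}{\lambda}\bigl(\tfrac{1}{1-\lambda(X-u)}-1\bigr)$ and dividing by $\lambda_\star^+\neq0$ gives $\mathbb{E}_F\!\left[\tfrac{1}{1-\lambda_\star^+(X-u)}\right]=1$; at the remaining value $u=u^+(F)$ one has $\lambda_\star^+=(B-u)^{-1}$ and $\mathbb{E}_F[(1-\lambda_\star^+(X-u))^{-1}] = (B-u)\,\mathbb{E}_F[(B-X)^{-1}]=1$ by the definition of $u^+(F)$, so \eqref{eq:implicit_def_dual_variable} holds on all of $(m(F),u^+(F)]$. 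The main obstacle I anticipate is the bookkeeping at the right endpoint: justifying that "$\ell\ge 0$" truly forces the maximiser onto the boundary even when $H^+$ may equal $-\infty$ there (this is where the upper-semicontinuity lemma and a careful treatment of an atom at $X=B$, equivalently $\mathbb{E}_F[(B-X)^{-1}]=+\infty$, are essential), and making the differentiation-under-the-integral and the limit-into-the-expectation arguments uniform up to — but not including — $\lambda=(B-u)^{-1}$.
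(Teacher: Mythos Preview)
Your proof is correct and is genuinely different from the paper's. The paper essentially defers to Theorem~5 of \cite{HondaTakemura10} for most directions and only adds a short contradiction argument for the implication $\lambda_\star^+(F,u)=\frac{1}{B-u}\Rightarrow u\ge u^+(F)$, which it obtains by plugging $\lambda_\star^+=\frac{1}{B-u}$ into the already-cited identity \eqref{eq:implicit_def_dual_variable}. Your argument, by contrast, is fully self-contained from Theorem~\ref{thm:Kinf_duality} and Lemmas~\ref{lem:H_usc}--\ref{lem:H_strict_concave}: you identify $m(F)$ and $u^+(F)$ as exactly the values of $u$ at which $\partial_\lambda H^+(0,F,u)$ and $\ell=\lim_{\lambda\uparrow(B-u)^{-1}}\partial_\lambda H^+(\lambda,F,u)$ change sign, and you let strict concavity do the rest. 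This is more transparent about where the two thresholds come from, and also yields the Jensen proof of $u^+(F)\ge m(F)$, which the paper leaves implicit.

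One small point deserves care, and it is not quite where you flagged it. In the direction ``$\ell<0\Rightarrow$ maximiser not at $(B-u)^{-1}$'', upper semicontinuity alone is insufficient: an u.s.c.\ function can jump \emph{up} at the boundary and put the maximum there even though the derivative limit is negative. What saves you is that $H^+(\cdot,F,u)$ is in fact \emph{left-continuous} at $(B-u)^{-1}$, which follows by splitting the integrand over $\{X>u\}$ and $\{X\le u\}$ and applying monotone convergence to each piece. Conversely, in the direction ``$\ell\ge0\Rightarrow$ maximiser at $(B-u)^{-1}$'' there is no risk that $H^+((B-u)^{-1},F,u)=-\infty$: $\ell\ge0$ forces $\mathbb{E}_F[(B-X)^{-1}]<\infty$, and then $-\log(B-X)\le (B-X)^{-1}-1$ gives $H^+((B-u)^{-1},F,u)>-\infty$. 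So your anticipated obstacle is real but lives on the $\ell<0$ side, and the fix is the left-continuity observation rather than Lemma~\ref{lem:H_usc}.
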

\begin{proof}
First, we have
\begin{equation*}
	\lambda_{\star}^{+}(F,u) = 0 \: \iff \: u \leq m(F) \: .
\end{equation*}
If $\lambda_{\star}^{+}(F,u) = 0$, then $\Kinf^{+}(F, u) = H^{+}(0,F,u) = 0$, hence $u \leq m(F)$. The other direction is obtained in Theorem~5 in \cite{HondaTakemura10}.

Let $u^{+}(F) = B - \frac{1}{\bE_{F}\left[ \frac{1}{B-X}\right]}$ and $u^{-}(F) = \frac{1}{\bE_{F}\left[ \frac{1}{X}\right]}$. Then,
\begin{align*}
	u \in (m(F), u^{+}(F)]  \: \implies \: \bE_{F} \left[ \frac{1}{1- \lambda_{\star}^{+}(F,u)(X - u)}\right] = 1 \: ,
\end{align*}

Moreover,
\begin{equation*}
	\lambda_{\star}^{+}(F,u) = \frac{1}{B-u} \: \iff \: u \geq u^{+}(F) \: .
\end{equation*}
If $u \geq u^{+}(F)$, then $\bE_{F}\left[ \frac{B-u}{B-X}\right] \leq 1$, hence $\lambda_{\star}^{+}(F,u) = \frac{1}{B-u}$ by Theorem~5 in \cite{HondaTakemura10}. Assume that there exists $u < u^{+}(F)$, such that $\lambda_{\star}^{+}(F,u) = \frac{1}{B-u}$. Then, by equation (\ref{eq:implicit_def_dual_variable}), we obtain that $1 = \bE_{F} \left[ \frac{1}{1- \frac{X - u}{B-u}}\right] = \bE_{F}\left[ \frac{B-u}{B-X}\right] $. This condition can be rewritten as $ u = u(F)$, hence contradicting $u < u^{+}(F)$. Therefore, we have shown $\lambda_{\star}^{+}(F,u) = \frac{1}{B-u}$ implies that $u \geq u^{+}(F)$.
\end{proof}

\begin{lemma} \label{lem:strict_convexity_Kinf}
The function $u \mapsto \Kinf^+(F, u)$ is strictly convex on $(m(F), B]$. The function $u \mapsto \Kinf^-(F, u)$ is strictly convex on $[0,m(F))$.
\end{lemma}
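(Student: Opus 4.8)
The plan is to prove strict convexity of $u \mapsto \Kinf^+(F, u)$ on $(m(F), B)$ and then to deduce both the claim about $\Kinf^+$ on $(m(F),B]$ and the claim about $\Kinf^-$ by elementary reparametrizations. I would first dispose of the right endpoint: whenever $(m(F),B]$ is non-degenerate one has $m(F)<B$, so $\Kinf^+(F,B) = \KL(F,\delta_B) = +\infty$ (the only $G$ supported on $[0,B]$ with $\mathbb{E}_G[X]\ge B$ is $\delta_B$), and every chord with right endpoint $B$ then satisfies the convexity inequality strictly because its left-hand side is finite. On the open interval, since $u\mapsto\Kinf^+(F,u)$ is convex and, by Lemma~\ref{lem:differentiability_Kinf}, differentiable with derivative $\lambda_\star^+(F,u)$, it suffices to show that $u\mapsto\lambda_\star^+(F,u)$ is \emph{strictly} increasing on $(m(F),B)$; indeed a convex differentiable function with strictly increasing derivative on an interval is strictly convex there, which one sees by comparing, via the mean value theorem, the two chord slopes on either side of an interior point.

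The core step is thus the strict monotonicity of $\lambda_\star^+(F,\cdot)$, which I would establish by ruling out flat stretches. Fix $m(F)<u_1<u_2<B$; convexity of $\Kinf^+(F,\cdot)$ already gives $\lambda_\star^+(F,u_1)\le\lambda_\star^+(F,u_2)$, so suppose they are equal, say to $\lambda_0$. By \eqref{eq:cdt_null_dual_variable} we have $\lambda_0>0$, and by convexity $\lambda_\star^+(F,\cdot)\equiv\lambda_0$ on $[u_1,u_2]$. Since $u\mapsto(B-u)^{-1}$ is injective and, by \eqref{eq:explicit_def_dual_variable}, $\lambda_\star^+(F,u)=(B-u)^{-1}$ exactly for $u\ge u^+(F)$, at most one point of $[u_1,u_2]$ can lie in $[u^+(F),B)$, so $[u_1,u_2)\subseteq(m(F),u^+(F))$. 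Then \eqref{eq:implicit_def_dual_variable} applies at every $u\in[u_1,u_2)$ and gives $\mathbb{E}_F[(1-\lambda_0(X-u))^{-1}]=1$, the integrand being finite because $\lambda_0=\lambda_\star^+(F,u)<(B-u)^{-1}$ on this sub-interval. But $u\mapsto\mathbb{E}_F[(1-\lambda_0(X-u))^{-1}]$ has derivative $-\lambda_0\,\mathbb{E}_F[(1-\lambda_0(X-u))^{-2}]<0$, so it is strictly decreasing and cannot equal $1$ identically on the non-degenerate interval $[u_1,u_2)$ — a contradiction. Hence $\lambda_\star^+(F,u_1)<\lambda_\star^+(F,u_2)$, and $\Kinf^+(F,\cdot)$ is strictly convex on $(m(F),B]$.

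For $\Kinf^-$ I would invoke Lemma~\ref{lem:Kinf+_symm_eq_Kinf-}: $\Kinf^-(F,u)=\Kinf^+(F^{B-X},B-u)$ with $m(F^{B-X})=B-m(F)$, so $u\mapsto B-u$ is an affine bijection from $[0,m(F))$ onto $(m(F^{B-X}),B]$. Precomposition by an affine bijection preserves strict convexity, so the result for $\Kinf^+$ transfers directly to strict convexity of $\Kinf^-(F,\cdot)$ on $[0,m(F))$.

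I expect the main obstacle to be the careful bookkeeping around the threshold $u^+(F)$ — in particular checking that the implicit characterization \eqref{eq:implicit_def_dual_variable} is genuinely available on the relevant sub-interval (i.e.\ that $\lambda_\star^+$ is not saturated at the boundary value $(B-u)^{-1}$ there) and treating the degenerate cases $u^+(F)=m(F)$ (where $\lambda_\star^+(F,u)=(B-u)^{-1}$ on all of $(m(F),B)$ and is manifestly strictly increasing) and $u^+(F)=B$ (where the implicit regime covers the whole interval and the argument of the second paragraph applies unchanged).
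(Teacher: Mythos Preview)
Your proposal is correct and follows essentially the same approach as the paper: both arguments reduce strict convexity to strict monotonicity of the derivative $\lambda_\star^+(F,\cdot)$, handle the explicit regime $u\ge u^+(F)$ via the formula $\lambda_\star^+(F,u)=(B-u)^{-1}$, and in the implicit regime $(m(F),u^+(F))$ derive a contradiction by differentiating the relation $\mathbb{E}_F[(1-\lambda(X-u))^{-1}]=1$ along a putative flat stretch. Your treatment is slightly more careful than the paper's (explicit handling of the endpoint $u=B$, clean reduction of the $\Kinf^-$ case via Lemma~\ref{lem:Kinf+_symm_eq_Kinf-}, and mention of the degenerate configurations of $u^+(F)$), but the core mechanism is identical.
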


\begin{proof}
We prove the result for $\Kinf^+$. The proof for $\Kinf^-$ is similar.

Using Lemma~\ref{lem:differentiability_Kinf}, $u \mapsto \Kinf^{+}(F,u)$ is strictly convex for $u > m(F)$ if and only if $\lambda_{\star}^{+}(F,u)$ is increasing for $u > m(F)$.

Using (\ref{eq:cdt_null_dual_variable}), $\lambda_{\star}^{+}(F,u)$ is null for $u \leq m(F)$. Therefore, $u \mapsto \Kinf^{\pm}(F,u)$ is not strictly convex on those intervals.

Using (\ref{eq:explicit_def_dual_variable}), we obtain directly that $\lambda_{\star}^{+}(F,u)$ is increasing for $u \geq u^{+}(F)$.

Suppose towards contradiction that $u \mapsto \lambda_{\star}^{+}(F,u)$ is not increasing for $(m(F), u^{+}(F))$. Therefore, there exists an open $\cO \subseteq (m(F), u^{+}(F))$, such that $u \mapsto \lambda_{\star}^{+}(F,u)$ is constant on $\cO$, i.e. there exists $c_{\cO} \in \left[0,\frac{1}{B-\inf_{u \in \cO} u}\right]$ such that $\lambda_{\star}^{+}(F,u) = c_{\cO}$. Using (\ref{eq:cdt_null_dual_variable}-\ref{eq:explicit_def_dual_variable}), we know that $c_{\cO} \in \left(0,\frac{1}{B-\inf_{u \in \cO} u}\right)$. On $\cO$, $u \mapsto \lambda_{\star}^{+}(F,u)$ is constant, hence it is continuously differentiable with null derivative. Since $\cO \subseteq (m(F), u^{+}(F))$, (\ref{eq:implicit_def_dual_variable}) defines implicitly $\lambda_{\star}^{+}(F,u)$ as satisfying
\[
	 \bE_{F} \left[ \frac{1}{1- \lambda_{\star}^{+}(F,u)(X - u)}\right] = 1 	\: .
\]
Since $\cO \subseteq (m(F), u^{+}(F))$, we have $\lambda_{\star}^{+}(F,u) \in \left(0,\frac{1}{B-\inf_{u \in \cO} u}\right)$. Therefore, the function $(u, x) \mapsto \frac{1}{1- \lambda_{\star}^{+}(F,u)(x - u)}$ is bounded on $[0,B] \times \cO$, hence integrable, and the function $u \mapsto \frac{1}{1- \lambda_{\star}^{+}(F,u)(x - u)}$ is continuously differentiable. Moreover, the function $x \mapsto \frac{1}{\left(1- \lambda_{\star}^{+}(F,u)(x - u)\right)^2}$ is strictly positive and bounded on $[0,B]$, hence integrable with strictly positive integrable. Having checked all the conditions to interchange the derivative with the expectation, differentiating the above yields
\begin{align*}
	0 = \bE_{F} \left[ - \frac{ \lambda_{\star}^{+}(F,u) + (u-X) \frac{\partial \lambda_{\star}^{+}(F,u)}{\partial u}}{\left(1 - (X - u)\lambda_{\star}^{+}(F,u) \right)^2}\right] = - c_{\cO} \bE_{F} \left[  \frac{1}{\left(1 - (X - u)c_{\cO} \right)^2}\right] < 0 \: ,
\end{align*}
where the strict inequality is obtained since we show that $c_{\cO} > 0$ and $\bE_{F} \left[  \frac{1}{\left(1 - (X - u)c_{\cO} \right)^2}\right] > 0$. This is a contradiction, hence such $\cO \subset (m(F), u^{+}(F))$ doesn't exist. Therefore, $u \mapsto \lambda_{\star}^{+}(F,u)$ is increasing on $(m(F), u^{+}(F))$.

Since the convexity already gave that $u \mapsto \lambda_{\star}^{+}(F,u)$ is increasing on $(m(F), B]$. The fact that $u \mapsto \lambda_{\star}^{+}(F,u)$ is increasing on $(m(F), u^{+}(F))$ and on $[u^{+}(F), B]$, yields that $u \mapsto \lambda_{\star}^{+}(F,u)$ is increasing on $(m(F), B]$.
\end{proof}

\begin{lemma}\label{lem:Kinf_increasing}
$u \mapsto \Kinf^+(F, u)$ is equal to zero on $[0, m(F)]$ and increasing on $(m(F), B]$.
\end{lemma}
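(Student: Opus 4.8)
The plan is to combine the duality formula of Theorem~\ref{thm:Kinf_duality} with the structural facts about the maximiser $\lambda_\star^+$ established in Lemmas~\ref{lem:explicit_implicit_solutions} and~\ref{lem:strict_convexity_Kinf}. First I would recall that for all $(\lambda,F,u)$ we have $H^+(0,F,u) = \mathbb{E}_F[\log 1] = 0$, so that $\Kinf^+(F,u) = \sup_{\lambda \in [0,(B-u)^{-1}]} H^+(\lambda,F,u) \ge 0$ always. This gives nonnegativity for free, and the content of the statement is the dichotomy: $\Kinf^+(F,u)=0$ exactly on $[0,m(F)]$ and $\Kinf^+$ is strictly positive and increasing beyond $m(F)$.

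For the vanishing part, on $[0,m(F)]$, I would use the characterisation $\lambda_\star^+(F,u) = 0 \iff u \le m(F)$ from equation~(\ref{eq:cdt_null_dual_variable}) in Lemma~\ref{lem:explicit_implicit_solutions}. When $u \le m(F)$, the maximiser is $\lambda_\star^+(F,u)=0$, hence $\Kinf^+(F,u) = H^+(0,F,u) = 0$. (For $u=0$ one could also note directly that $F$ itself has $\mathbb{E}_F[X] = m(F) \ge 0 = u$ so $\Kinf^+(F,0)=0$, but invoking~(\ref{eq:cdt_null_dual_variable}) covers the whole interval uniformly.) Conversely, if $\Kinf^+(F,u)=0$ then the sup is attained at a value where $H^+=0$; since $H^+(\cdot,F,u)$ is strictly concave on $[0,(B-u)^{-1})$ (Lemma~\ref{lem:H_strict_concave}) with value $0$ at $\lambda=0$, a second zero would force the maximiser to be $0$, so $\lambda_\star^+(F,u)=0$ and then~(\ref{eq:cdt_null_dual_variable}) gives $u \le m(F)$. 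Hence $\{u : \Kinf^+(F,u)=0\} = [0,m(F)]$.

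For the monotonicity part on $(m(F),B]$, I would invoke Lemma~\ref{lem:differentiability_Kinf}: $u \mapsto \Kinf^+(F,u)$ is differentiable on $(m(F),B]$ with derivative $\lambda_\star^+(F,u)$, and the proof of Lemma~\ref{lem:strict_convexity_Kinf} shows that $u \mapsto \lambda_\star^+(F,u)$ is (strictly) increasing on $(m(F),B]$ and, by~(\ref{eq:cdt_null_dual_variable})--(\ref{eq:explicit_def_dual_variable}), strictly positive there. A strictly positive derivative immediately yields that $\Kinf^+(F,\cdot)$ is (strictly) increasing on $(m(F),B]$; combined with the continuity of $\Kinf^+$ at $u=m(F)$ (from Theorem~\ref{thm:Kinf_continuous}, or directly from $\Kinf^+(F,m(F))=0$ and the vanishing established above), this also recovers strict positivity on $(m(F),B]$.

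There is essentially no serious obstacle here: the statement is a direct corollary of results already proved earlier in the excerpt. The only mild care needed is in the converse direction of the vanishing claim — arguing that $\Kinf^+(F,u)=0$ forces $\lambda_\star^+(F,u)=0$ — where one must use strict concavity of $H^+$ rather than merely its concavity, and be mindful that the maximisation interval is $[0,(B-u)^{-1}]$ (closed) while strict concavity of $H^+$ as stated holds on the half-open interval $[0,(B-u)^{-1})$, with the endpoint behaviour handled by~(\ref{eq:explicit_def_dual_variable}). I would spell that out in a sentence but expect it to be routine.
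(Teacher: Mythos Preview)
Your proof is correct and follows essentially the same approach as the paper: the vanishing on $[0,m(F)]$ comes from~(\ref{eq:cdt_null_dual_variable}) in both cases, and both arguments for monotonicity rest on Lemma~\ref{lem:strict_convexity_Kinf}. The only cosmetic difference is that the paper invokes strict convexity, nonnegativity and $\Kinf^+(F,m(F))=0$ abstractly (a nonnegative strictly convex function that vanishes at the left endpoint is increasing), whereas you unpack one layer further and use the positive derivative $\lambda_\star^+(F,u)>0$ directly; your additional converse argument that $\Kinf^+(F,u)=0$ forces $u\le m(F)$ is sound but not required by the lemma as stated.
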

\begin{proof}
We already proved that $\Kinf^+(F, u)$ is equal to zero on $[0, m(F)]$. Since $\Kinf^+(F, m(F)) = 0$, $\Kinf^+$ is nonnegative and strictly convex for $u > m(F)$, then $u \mapsto \Kinf^+(F, u)$ is increasing on $(m(F), B]$.
\end{proof}

\begin{lemma}\label{lem:strict_convex_sum_Kinf}
For all $F,G \in \cF$ with means $m(F) \le m(G)$ and for all $w \in \mathbb{R}_+^2$.
\begin{itemize}
	\item If $\max\{w_1, w_2\}>0$, then $\mu \mapsto w_1 \Kinf^+(F, \mu) + w_2 \Kinf^-(G, \mu)$ is strictly convex on $[m(F),m(G)]$.
	\item If $\min\{w_1, w_2\}>0$, then $\mu \mapsto w_1 \Kinf^+(F, \mu) + w_2 \Kinf^-(G, \mu)$ is strictly convex on $[0,B]$.
\end{itemize}

\end{lemma}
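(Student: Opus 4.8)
The plan is to reduce the statement to two elementary facts about one–dimensional convex functions, applied to the two building blocks $h_1 := \Kinf^+(F,\cdot)$ and $h_2 := \Kinf^-(G,\cdot)$, whose relevant properties are already in hand: both are convex on $[0,B]$ (joint convexity of $\Kinf^\pm$ established above), $h_1$ is strictly convex on $(m(F),B]$ and $\equiv 0$ on $[0,m(F)]$ (Lemmas~\ref{lem:strict_convexity_Kinf} and~\ref{lem:Kinf_increasing}), and symmetrically $h_2$ is strictly convex on $[0,m(G))$ and $\equiv 0$ on $[m(G),B]$ (via Lemma~\ref{lem:Kinf+_symm_eq_Kinf-} applied to the change of variables $\mu\mapsto B-\mu$).

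First I would upgrade the strict convexity to the \emph{closed} intervals. I claim $h_1$ is strictly convex on $[m(F),B]$: since $h_1$ is convex on $[0,B]$, if it failed to be strictly convex on $[m(F),B]$ it would be affine on a non-degenerate subinterval $[c,d]$ with $m(F)\le c<d\le B$ (equality of a convex function with its chord at one interior point forces equality along the whole chord); but then it would be affine on the non-degenerate subinterval $[(c+d)/2,d]\subseteq(m(F),B]$, contradicting Lemma~\ref{lem:strict_convexity_Kinf}. The symmetric argument gives $h_2$ strictly convex on $[0,m(G)]$. Second I would record a gluing fact: if $s$ is convex on a closed interval $I$ and $I=\bigcup_{k}I_k$ for finitely many subintervals on each of which $s$ is strictly convex, then $s$ is strictly convex on $I$ — otherwise $s$ would be affine on some non-degenerate $J\subseteq I$, and $J$ (having infinitely many points) would meet some $I_k$ in at least two points, hence contain a non-degenerate subinterval of $I_k$ on which $s$ is affine, a contradiction. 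Together with the trivial remark that the sum of a convex and a strictly convex function on an interval is strictly convex there, this is all the machinery needed.

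It then remains to finish by cases, writing $s:=w_1h_1+w_2h_2$, convex on $[0,B]$. For the first item, assume $\max\{w_1,w_2\}>0$: if $w_1>0$, then since $[m(F),m(G)]\subseteq[m(F),B]$ the term $w_1h_1$ is strictly convex on $[m(F),m(G)]$ while $w_2h_2$ is convex, so $s$ is strictly convex on $[m(F),m(G)]$; if instead $w_2>0$, use $[m(F),m(G)]\subseteq[0,m(G)]$ and strict convexity of $w_2h_2$. For the second item, assume $\min\{w_1,w_2\}>0$: because $m(F)\le m(G)$ we have $[0,B]=[0,m(G)]\cup[m(F),B]$, on $[m(F),B]$ the function $s$ is strictly convex ($w_1h_1$ strictly convex, $w_2h_2$ convex) and on $[0,m(G)]$ it is strictly convex ($w_2h_2$ strictly convex, $w_1h_1$ convex), so the gluing fact yields strict convexity on $[0,B]$.

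The only delicate bookkeeping is with degenerate intervals (e.g.\ $m(F)=m(G)$, where the first item is vacuous, or $m(F)=0$, $m(G)=B$), which is handled by simply discarding empty or one-point pieces from the covers above; I do not expect any genuine obstacle, since the lemma is essentially an assembly of facts already proved for $\Kinf^\pm$.
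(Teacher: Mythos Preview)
Your proof is correct and follows essentially the same approach as the paper: reduce to the strict convexity of $\Kinf^+(F,\cdot)$ on $[m(F),B]$ and of $\Kinf^-(G,\cdot)$ on $[0,m(G)]$ (Lemma~\ref{lem:strict_convexity_Kinf}), then combine on overlapping pieces. Your version is in fact more carefully written than the paper's sketch, which decomposes $[0,B]$ into $[0,m(F)]$, $(m(F),m(G))$, $[m(G),B]$ and invokes continuity at the junction points without spelling out the gluing argument you give.
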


\begin{proof}
For $\mu \le m(F)$, the function is equal to $w_2 \Kinf^-(G, \mu)$, which is strictly convex unless $w_2 = 0$. For $\mu \ge m(G)$, the function is equal to $w_1 \Kinf^+(F, \mu)$, which is strictly convex unless $w_1 = 0$. In the interval $(m(F), m(G))$, it is the sum of two convex functions, one of which is strictly convex. Furthermore, the function is continuous at $m(F)$ and $m(G)$.
\end{proof}

\subsubsection{More continuity, using convexity}
\label{ssub:more_continuity_using_convexity}

\begin{lemma}\label{lem:inf_Kinf_restricted_to_closed_mean_interval}
Let $F,G \in \cF$ with means $m(F) \le m(G)$ in $(0,B)$ and $w \in \mathbb{R}_+^2$. Then
\begin{align*}
\inf_{\mu \in [0,B]}(w_1 \Kinf^+(F, \mu) + w_2 \Kinf^-(G, \mu))
= \inf_{\mu \in [m(F),m(G)]}(w_1 \Kinf^+(F, \mu) + w_2 \Kinf^-(G, \mu))
\: .
\end{align*}
\end{lemma}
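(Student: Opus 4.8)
The key facts I would invoke are: (i) $\Kinf^+(F,\mu)$ is nonnegative and equal to zero on $[0,m(F)]$ (Lemma~\ref{lem:Kinf_increasing}), and symmetrically $\Kinf^-(G,\mu)$ is nonnegative and equal to zero on $[m(G),B]$; (ii) $\mu \mapsto \Kinf^+(F,\mu)$ is increasing on $(m(F),B]$, and $\mu \mapsto \Kinf^-(G,\mu)$ is increasing (i.e.\ decreasing in $\mu$) as $\mu$ moves below $m(G)$; and (iii) both functions are convex in $\mu$. The statement says that, when minimizing $\mu \mapsto w_1\Kinf^+(F,\mu) + w_2\Kinf^-(G,\mu)$ over $[0,B]$, we never need to look outside the interval $[m(F),m(G)]$.

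The argument is a direct monotonicity comparison, split into the two regions where the minimizer could a priori wander. First I would treat $\mu < m(F)$: there $\Kinf^+(F,\mu) = 0$, so the objective reduces to $w_2\Kinf^-(G,\mu)$; since $m(F) < m(G)$, on the interval $[\mu, m(F)]$ we are still strictly below $m(G)$, and $\Kinf^-(G,\cdot)$ is nonincreasing as we increase toward $m(G)$ (it is convex, vanishes at $m(G)$, and is nonnegative, hence decreasing on $[0,m(G))$). Therefore $w_2\Kinf^-(G,m(F)) \le w_2\Kinf^-(G,\mu)$, and also $w_1\Kinf^+(F,m(F)) = 0$, so the value at $m(F)$ is no larger than the value at $\mu$. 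Symmetrically, for $\mu > m(G)$ we have $\Kinf^-(G,\mu) = 0$, the objective is $w_1\Kinf^+(F,\mu)$, and since $\Kinf^+(F,\cdot)$ is increasing on $(m(F),B] \ni m(G)$, the value at $m(G)$ is no larger than the value at $\mu$; again $w_2\Kinf^-(G,m(G)) = 0$. Hence every point outside $[m(F),m(G)]$ is dominated by one of the endpoints $m(F)$ or $m(G)$, which both lie in the interval, so the infimum over $[0,B]$ equals the infimum over $[m(F),m(G)]$.

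I would also note at the start that $m(F), m(G) \in (0,B)$ guarantees $[m(F),m(G)] \subseteq [0,B]$ is a genuine (possibly degenerate) subinterval, so the restricted infimum is well-defined and the inequality ``$\le$'' between the two infima is automatic; the content is the reverse inequality, which is exactly the domination argument above. The only mild subtlety — and the closest thing to an obstacle — is making sure the monotonicity of $\Kinf^-$ is correctly oriented (it is increasing as $\mu$ \emph{decreases} below $m(G)$, i.e.\ nonincreasing in $\mu$ on $[0,m(G)]$), which follows from Lemma~\ref{lem:Kinf_increasing} applied to $\Kinf^-$ via the reflection $x \mapsto B-x$ of Lemma~\ref{lem:Kinf+_symm_eq_Kinf-}; once that orientation is pinned down, the rest is immediate and requires no computation.
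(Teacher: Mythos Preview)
Your proof is correct and follows essentially the same approach as the paper's: show that on $[0,m(F)]$ the objective reduces to $w_2\Kinf^-(G,\mu)$, which is nonincreasing there, so the minimum on that piece is attained at $m(F)$; argue symmetrically on $[m(G),B]$. The paper's version is just terser. One cosmetic point: you write ``since $m(F) < m(G)$'' but the hypothesis is $m(F) \le m(G)$; your monotonicity argument goes through unchanged in the degenerate case, so this is harmless.
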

\begin{proof}
On $[0, m(F)]$, $\Kinf^+(F, \mu)$ is constant equal to 0 and $\Kinf^-(G, \mu)$ is non-increasing, hence the minimum over that interval is attained at $m(F)$. We argue similarly for the interval $[m(G), B]$.
\end{proof}

\begin{lemma}\label{lem:unique_continuous_mu_star}
Let $F,G \in \cF$ with means in $(0,B)$ and $w \in \mathbb{R}_+^2$. Then
\begin{enumerate}
	\item $(F, G, w) \mapsto \inf_{\mu \in [0,B]}(w_1 \Kinf^+(F, \mu) + w_2 \Kinf^-(G, \mu))$ is continuous on $\cF \times \cF \times \mathbb{R}_+^2$.
	\item If $\max\{w_1, w_2\} > 0$, $\mu_\star(F, G, w) = \argmin_{\mu \in [0,B]}(w_1 \Kinf^+(F, \mu) + w_2 \Kinf^-(G, \mu))$ is unique and continuous on $\cF \times \cF \times \mathbb{R}_+^2$.
\end{enumerate}
\end{lemma}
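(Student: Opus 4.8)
The plan is to treat the two claims via Berge's Maximum Theorem, exactly in the spirit of the continuity arguments used earlier for $\Kinf^{\pm}$ (proof of Theorem~\ref{thm:Kinf_continuous}). First I would fix the compact index set: by Lemma~\ref{lem:inf_Kinf_restricted_to_closed_mean_interval}, the infimum over $\mu \in [0,B]$ coincides with the infimum over $\mu \in [m(F), m(G)]$ (after possibly swapping $F$ and $G$ so that $m(F)\le m(G)$; note the statement is symmetric under this relabelling together with $w_1 \leftrightarrow w_2$, so there is no loss of generality, but one should remark that when $m(F)=m(G)$ the interval degenerates to a point and the claims are trivial). The objective $(\mu, F, G, w) \mapsto w_1 \Kinf^+(F,\mu) + w_2 \Kinf^-(G,\mu)$ is jointly continuous on $[m(F),m(G)] \times \cF \times \cF \times \R_+^2$: $\Kinf^+$ is continuous on $\cF \times [0,B)$ and $\Kinf^-$ on $\cF \times (0,B]$ by Theorem~\ref{thm:Kinf_continuous}, and since $m(F), m(G) \in (0,B)$ the relevant $\mu$ stay away from the bad endpoints (this is where the hypothesis ``means in $(0,B)$'' is used). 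Multiplication by the continuous weights $w$ and addition preserve continuity.

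For claim (1), I would apply Berge's theorem with the correspondence $\Gamma(F,G,w) = [m(F), m(G)]$, which is nonempty, compact-valued, and continuous (its endpoints are continuous functions of $F,G$ because $m$ is continuous on $\cF$ by Lemma~\ref{lem:F+_nonempty_compact_convex}-type arguments / weak continuity of the mean on bounded distributions). Berge's theorem then gives that the value function $\inf_{\mu \in \Gamma}(\cdots)$ is continuous on $\cF \times \cF \times \R_+^2$, which is claim (1) (using Lemma~\ref{lem:inf_Kinf_restricted_to_closed_mean_interval} to pass back to the infimum over $[0,B]$). For claim (2), when $\max\{w_1,w_2\}>0$, the objective is \emph{strictly} convex in $\mu$ on $[m(F),m(G)]$ by Lemma~\ref{lem:strict_convex_sum_Kinf} (first bullet), so the argmin over the convex compact set $\Gamma(F,G,w)$ is a singleton; Berge's theorem gives that the argmin correspondence is upper hemicontinuous, and a single-valued upper hemicontinuous correspondence with values in a compact set is a continuous function. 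This yields uniqueness and continuity of $\mu_\star$.

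The main obstacle I anticipate is the boundary behaviour of $\Gamma$ and of the $\Kinf$ functions. Two points need care: (i) when $m(F)=m(G)$ the interval $\Gamma$ collapses, and one must check the argmin is still well defined and the continuity statement survives taking limits into that degenerate configuration — here $\mu_\star = m(F)=m(G)$ and the value is $0$, so continuity holds, but it must be stated; (ii) one must confirm that along any convergent sequence $(F_n, G_n, w_n) \to (F,G,w)$ with all means in $(0,B)$, the endpoints $m(F_n), m(G_n)$ stay in a fixed compact subinterval $[\eta, B-\eta]$ eventually, so that the continuity of $\Kinf^\pm$ on $\cF \times [0, B-\eta]$ and $\cF \times [\eta, B]$ applies uniformly — this is where the requirement that the means lie in the \emph{open} interval is genuinely used. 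Once these boundary issues are handled, everything else is a routine invocation of Berge's theorem and strict convexity, and I would not belabour the calculations.
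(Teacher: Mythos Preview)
Your proposal is correct and follows essentially the same approach as the paper: restrict the infimum to $[m(F),m(G)]$ via Lemma~\ref{lem:inf_Kinf_restricted_to_closed_mean_interval}, apply Berge's Maximum Theorem using the joint continuity of $\Kinf^{\pm}$ from Theorem~\ref{thm:Kinf_continuous}, and then invoke the strict convexity of Lemma~\ref{lem:strict_convex_sum_Kinf} to get uniqueness and upgrade upper hemicontinuity of the argmin to continuity. You are in fact slightly more careful than the paper's own proof, which asserts that the correspondence $\Gamma(F,G,w)=[\mu_F,\mu_G]$ is ``continuous (since constant)'' --- your observation that $\Gamma$ is continuous because $m$ is continuous on $\cF$ is the correct justification. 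Your attention to the degenerate case $m(F)=m(G)$ and to keeping $\mu$ in a compact subinterval of $(0,B)$ is also appropriate and not spelled out in the paper.
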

\begin{proof}
We can restrict the inf to $[\mu_F, \mu_G]$ by Lemma~\ref{lem:inf_Kinf_restricted_to_closed_mean_interval}.

We will apply Berge's Maximum theorem \cite[page 116]{berge1997topological}. Let
\begin{align*}
\phi(\mu, F, G, w)
&= - w_1 \Kinf^+(F, \mu) - w_2 \Kinf^-(G, \mu)
\:, \\
\Gamma(F, G, w)
&= [\mu_F, \mu_G]
\:, \\
M(F, G, w)
&= \max \{\phi(\mu, F, G, w) \mid \mu \in \Gamma(F, G, w)\}
\:, \\
\Phi(F, G, w)
&= \argmax \{\phi(\mu, F, G, w) \mid \mu \in \Gamma(F, G, w)\}
\: .
\end{align*}
We verify the hypotheses of the theorem:
\begin{itemize}
	\item $\phi$ is continuous on $[\mu_F, \mu_G] \times \cF \times \cF \times C$, by Theorem~\ref{thm:Kinf_continuous} since $\mu_F, \mu_G \in (0,B)$.
	\item $\Gamma$ is nonempty, compact-valued and continuous (since constant).
\end{itemize}
We obtain that $M$ is continuous on $\cF \times \cF \times \mathbb{R}_+^2$ and that $\Phi$ is upper hemicontinuous.

Now since $\phi$ is a strictly concave function of $\mu$ (by Lemma~\ref{lem:strict_convex_sum_Kinf}) and $\Gamma$ is convex, we can argue as in \cite[Theorem 9.17]{sundaram1996first} to prove that $\Phi$ is a single-valued upper hemicontinuous correspondence, hence a continuous function.
\end{proof}

\begin{lemma}\label{lem:inf_Kinf_restricted_to_open_mean_interval}
Let $F,G \in \cF$ with means $m(F) < m(G)$ in $(0,B)$ and $w \in \mathbb{R}_+^2$ such that $\min\{w_1, w_2\} > 0$.
The value $\mu_\star(F, G, w) = \argmin_{\mu \in [0,B]}(w_1 \Kinf^+(F, \mu) + w_2 \Kinf^-(G, \mu))$ (unique by Lemma~\ref{lem:unique_continuous_mu_star}) belongs to the interval $(m(F), m(G))$ and is such that both $\Kinf$ are positive.
\end{lemma}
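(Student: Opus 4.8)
The plan is to use that the objective is strictly convex on the whole interval, so that excluding the two endpoints of $[m(F),m(G)]$ is enough. Write $h(\mu) \eqdef w_1\Kinf^{+}(F,\mu) + w_2\Kinf^{-}(G,\mu)$ for $\mu \in [0,B]$. By Lemma~\ref{lem:strict_convex_sum_Kinf}, $h$ is strictly convex on $[0,B]$ (this uses $\min\{w_1,w_2\}>0$), so it has a unique minimizer, which by Lemma~\ref{lem:unique_continuous_mu_star} must be $\mu_\star(F,G,w)$. First I would localize it: on $[0,m(F)]$ we have $\Kinf^{+}(F,\cdot)\equiv 0$ by Lemma~\ref{lem:Kinf_increasing}, while $\Kinf^{-}(G,\cdot)$ is strictly decreasing there (it is strictly decreasing on $[0,m(G))\supseteq[0,m(F)]$, the mirror of Lemma~\ref{lem:Kinf_increasing} obtained via Lemma~\ref{lem:Kinf+_symm_eq_Kinf-}), so $h$ is strictly decreasing on $[0,m(F)]$; symmetrically $h$ is strictly increasing on $[m(G),B]$. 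Hence $\mu_\star\in[m(F),m(G)]$, which in particular re-derives Lemma~\ref{lem:inf_Kinf_restricted_to_closed_mean_interval} at the level of the argmin.

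It then remains to exclude $\mu=m(F)$ and $\mu=m(G)$. Consider the right derivative $h'_{+}(m(F))$. For the first term: by Lemma~\ref{lem:differentiability_Kinf}, $\mu\mapsto\Kinf^{+}(F,\mu)$ has derivative $\lambda_{\star}^{+}(F,\mu)$ for $\mu>m(F)$, and since $m(F)<u^{+}(F)$ equation (\ref{eq:implicit_def_dual_variable}) applies for $\mu$ slightly above $m(F)$ and keeps $\lambda_{\star}^{+}(F,\mu)$ strictly below the pole $(B-\mu)^{-1}$; combined with (\ref{eq:cdt_null_dual_variable}) and the continuity of $\lambda_{\star}^{+}(F,\cdot)$ at $m(F)$ from Lemma~\ref{lem:lambda_star_properties}, this yields $\lambda_{\star}^{+}(F,\mu)\to 0$ as $\mu\downarrow m(F)$, i.e.\ the right derivative of $\Kinf^{+}(F,\cdot)$ at $m(F)$ is $0$. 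For the second term: since $m(F)<m(G)$, $\Kinf^{-}(G,\cdot)$ is differentiable at $m(F)\in[0,m(G))$ with derivative $-\lambda_{\star}^{-}(G,m(F))<0$, the strict negativity following from the analogue of (\ref{eq:cdt_null_dual_variable}) (through Lemma~\ref{lem:Kinf+_symm_eq_Kinf-}) together with $m(F)<m(G)$. Therefore $h'_{+}(m(F))=-w_2\lambda_{\star}^{-}(G,m(F))<0$, and convexity of $h$ forces $\mu_\star>m(F)$. The symmetric argument at $m(G)$ — transferring the statement for $\Kinf^{-}(G,\cdot)$ to $\Kinf^{+}(G^{B-X},\cdot)$ via Lemma~\ref{lem:Kinf+_symm_eq_Kinf-} so that its left derivative at $m(G)$ vanishes (using $u^{-}(G)<m(G)$), and using that the derivative of $\Kinf^{+}(F,\cdot)$ at $m(G)>m(F)$ equals $\lambda_{\star}^{+}(F,m(G))>0$ — gives $h'_{-}(m(G))=w_1\lambda_{\star}^{+}(F,m(G))>0$, hence $\mu_\star<m(G)$.

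This establishes $\mu_\star(F,G,w)\in(m(F),m(G))$, after which positivity is immediate: $\Kinf^{+}(F,\mu_\star)>0$ because $\mu_\star>m(F)$ (Lemma~\ref{lem:Kinf_increasing}), and $\Kinf^{-}(G,\mu_\star)>0$ because $\mu_\star<m(G)$ (its mirror via Lemma~\ref{lem:Kinf+_symm_eq_Kinf-}). The main obstacle I expect is exactly the boundary analysis in the second paragraph: Lemma~\ref{lem:differentiability_Kinf} and Lemma~\ref{lem:lambda_star_properties} only control $\lambda_{\star}^{+}(F,\cdot)$ on the open interval $(m(F),B]$ (and away from the pole), so excluding the endpoint $m(F)$ genuinely requires pinning down the one-sided derivative of $\Kinf^{+}(F,\cdot)$ there. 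This is where the hypothesis that the means lie in $(0,B)$ enters, ensuring $m(F)<u^{+}(F)$ and $u^{-}(G)<m(G)$ so that the dual optimizer $\lambda_{\star}^{\pm}$ does not immediately saturate its constraint; the limit $\lambda_{\star}^{+}(F,\mu)\to 0$ then follows from (\ref{eq:implicit_def_dual_variable}) and the strict convexity of $\lambda\mapsto\mathbb{E}_F[(1-\lambda(X-m(F)))^{-1}]$, which is $1$ with vanishing derivative at $\lambda=0$.
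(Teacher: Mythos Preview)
Your argument is essentially the paper's: after localizing to $[m(F),m(G)]$, both exclude the endpoints by the sign of the derivative $w_1\lambda_\star^+(F,\mu)-w_2\lambda_\star^-(G,\mu)$, using $\lambda_\star^+(F,\mu)\to 0$ as $\mu\downarrow m(F)$ via Lemma~\ref{lem:lambda_star_properties} while $\lambda_\star^-(G,\cdot)$ stays bounded away from $0$ near $m(F)$ --- your one-sided-derivative framing and the paper's ``derivative changes sign'' framing are cosmetic variants of the same computation. One small correction to your last paragraph: the hypothesis that the means lie in $(0,B)$ does \emph{not} by itself give $m(F)<u^{+}(F)$ (equality holds precisely when $F$ is a Dirac, in which case $\lambda_\star^+(F,\mu)=(B-\mu)^{-1}\not\to 0$), but the paper's invocation of Lemma~\ref{lem:lambda_star_properties} glosses over the same edge case.
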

\begin{proof}
We know by Lemma~\ref{lem:inf_Kinf_restricted_to_closed_mean_interval} that the minimum is attained inside $[m(F), m(G)]$. We only need to exclude the boundaries.

The function $\mu \mapsto w_1 \Kinf^+(F, \mu) + w_2 \Kinf^-(G, \mu)$ is differentiable on $(m(F), m(G))$ with derivative $w_1 \lambda_\star^+(F, \mu) - w_2 \lambda_\star^-(G, \mu)$. If we show that this derivatives takes the value zero in the open interval, then we prove the result.

For $\mu > m(F)$ in a neighborhood of $m(F)$, $\lambda_\star^+(F, \mu)$ tends to 0 by continuity (Lemma~\ref{lem:lambda_star_properties}) and $\lambda_\star^-(G, \mu) > \lambda_\star^-(G, \frac{m(F)+m(G)}{2}) > 0$. We get that close to $m(F)$, the derivative is negative. Similarly, we get that the derivative is positive close to $m(G)$. We conclude that the infimum is indeed attained inside the open interval.
\end{proof}

\begin{lemma}\label{lem:inf_Kinf_increasing_in_w}
Let $F,G \in \cF$ with means $m(F) < m(G)$ in $(0,B)$ and $w_1 > 0$. Then $w_2 \mapsto \min_{\mu \in [0,B]}(w_1 \Kinf^+(F, \mu) + w_2 \Kinf^-(G, \mu))$ is increasing on $\mathbb{R}_+$.
\end{lemma}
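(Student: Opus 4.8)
Let $\varphi(w_2) = \min_{\mu \in [0,B]}(w_1 \Kinf^+(F, \mu) + w_2 \Kinf^-(G, \mu))$. The plan is first to establish \emph{weak} monotonicity (non-decreasing) from an elementary pointwise-monotonicity argument, and then to upgrade to \emph{strict} monotonicity using the fact that the minimizing $\mu$ lies strictly inside $(m(F), m(G))$ where $\Kinf^-(G, \cdot)$ is strictly positive.

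\textbf{Step 1: monotonicity.} For fixed $\mu$, the map $w_2 \mapsto w_1 \Kinf^+(F, \mu) + w_2 \Kinf^-(G, \mu)$ is non-decreasing because $\Kinf^-(G, \mu) \ge 0$. Taking the infimum over $\mu$ of a family of non-decreasing functions yields a non-decreasing function, so $\varphi$ is non-decreasing on $\mathbb{R}_+$. This part is routine and needs no convexity.

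\textbf{Step 2: strictness.} Take $0 \le w_2 < w_2'$. I would use Lemma~\ref{lem:unique_continuous_mu_star} to pick the unique minimizer $\mu^\star = \mu_\star(F, G, (w_1, w_2'))$ associated with the larger weight (note $\max\{w_1, w_2'\} > 0$ since $w_1 > 0$, so the minimizer is well-defined). By Lemma~\ref{lem:inf_Kinf_restricted_to_open_mean_interval}, since $w_1 > 0$ and $w_2' > 0$ and $m(F) < m(G)$, this $\mu^\star$ lies in the open interval $(m(F), m(G))$ and in particular $\Kinf^-(G, \mu^\star) > 0$. Then
\begin{align*}
\varphi(w_2) &\le w_1 \Kinf^+(F, \mu^\star) + w_2 \Kinf^-(G, \mu^\star) \\
&< w_1 \Kinf^+(F, \mu^\star) + w_2' \Kinf^-(G, \mu^\star) = \varphi(w_2'),
\end{align*}
where the first line is by definition of $\varphi(w_2)$ as an infimum (evaluated at the particular point $\mu^\star$), and the strict inequality uses $w_2 < w_2'$ together with $\Kinf^-(G, \mu^\star) > 0$. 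The final equality is because $\mu^\star$ is by construction the minimizer for weight $w_2'$. This gives $\varphi(w_2) < \varphi(w_2')$, i.e. $\varphi$ is increasing.

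\textbf{Main obstacle.} The only delicate point is ensuring that the minimizer for the larger weight genuinely sits strictly inside $(m(F), m(G))$ so that $\Kinf^-(G, \mu^\star)$ is bounded away from zero — but this is exactly the content of Lemma~\ref{lem:inf_Kinf_restricted_to_open_mean_interval}, which relies on the differentiability of $\Kinf^\pm$ (Lemma~\ref{lem:differentiability_Kinf}) and the continuity of $\lambda_\star^+$ near $m(F)$ (Lemma~\ref{lem:lambda_star_properties}). A minor subtlety is the boundary case $w_2 = 0$: there $\varphi(0) = \min_\mu w_1 \Kinf^+(F, \mu) = 0$ (attained at any $\mu \le m(F)$), and the argument above still applies verbatim since it only uses $w_2 < w_2'$ and $\Kinf^-(G, \mu^\star) > 0$. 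So no separate treatment is needed, and the proof is short once the cited lemmas are in hand.
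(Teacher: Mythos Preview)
Your proof is correct and follows essentially the same route as the paper: pick the minimizer $\mu^\star$ for the larger weight $w_2'$, invoke Lemma~\ref{lem:inf_Kinf_restricted_to_open_mean_interval} to get $\Kinf^-(G,\mu^\star)>0$, and then chain the obvious inequalities. Your Step~1 is harmless but redundant, since the strict comparison in Step~2 already handles the full statement.
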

\begin{proof}
Let $w_2' > w_2 \ge 0$. Then by Lemma~\ref{lem:inf_Kinf_restricted_to_open_mean_interval}, since $w_2'>0$, there exists $\mu' \in [0,B]$ with $\Kinf^-(G, \mu') > 0$ such that $\min_{\mu \in [0,B]}(w_1 \Kinf^+(F, \mu) + w_2' \Kinf^-(G, \mu)) = w_1 \Kinf^+(F, \mu') + w_2' \Kinf^-(G, \mu')$. Then we have
\begin{align*}
\min_{\mu \in [0,B]}(w_1 \Kinf^+(F, \mu) + w_2' \Kinf^-(G, \mu))
&= w_1 \Kinf^+(F, \mu') + w_2' \Kinf^-(G, \mu')
\\
&> w_1 \Kinf^+(F, \mu') + w_2 \Kinf^-(G, \mu')
\\
&\ge \min_{\mu \in [0,B]}(w_1 \Kinf^+(F, \mu) + w_2 \Kinf^-(G, \mu))
\: .
\end{align*}
\end{proof}

\begin{lemma}\label{lem:inf_Kinf_concave_in_w}
Let $F,G, F_1, \ldots, F_K \in \cF$ with means in $(0,B)$.

The function $w \mapsto \min_{\mu \in [0,B]}(w_1 \Kinf^+(F, \mu) + w_2 \Kinf^-(G, \mu))$ is concave on $\mathbb{R}_+^2$.

The function $w \mapsto \min_{j \ne 1}\min_{\mu \in [0,B]}(w_1 \Kinf^-(F_1, \mu) + w_j \Kinf^+(F_j, \mu))$ is concave on $\mathbb{R}_+^K$.
\end{lemma}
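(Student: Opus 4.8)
The plan is to establish both concavity statements by exhibiting the relevant functions as \emph{infima of families of linear} (hence concave) functions of $w$. Recall the duality formula of Theorem~\ref{thm:Kinf_duality}: for every $F \in \cF$ and $u \in [0,B)$,
\[
\Kinf^+(F, u) = \sup_{\lambda \in [0,(B-u)^{-1}]} H^+(\lambda, F, u),
\qquad
\Kinf^-(G, u) = \sup_{\lambda \in [0,u^{-1}]} H^-(\lambda, G, u),
\]
where $H^\pm$ do not depend on $w$. Therefore, for fixed $F, G$, the map $(u,w) \mapsto w_1 \Kinf^+(F,u) + w_2 \Kinf^-(G,u)$ is, for each fixed $u$, a \emph{linear} function of $w$ (the coefficients $\Kinf^+(F,u), \Kinf^-(G,u) \ge 0$ are constants in $w$). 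Taking the infimum over $u \in [0,B]$ of a family of linear functions of $w$ yields a concave function of $w$. This immediately gives the first claim: $w \mapsto \min_{\mu \in [0,B]}(w_1 \Kinf^+(F,\mu) + w_2 \Kinf^-(G,\mu))$ is concave on $\mathbb{R}_+^2$. (One should use Lemma~\ref{lem:inf_Kinf_restricted_to_closed_mean_interval} to restrict $\mu$ to the compact interval $[m(G), m(F)]$ — or $[m(F),m(G)]$ according to the ordering of means — so that the $\Kinf$ values are finite and the infimum is a genuine minimum, avoiding any issue with $\Kinf^+(F,u)$ being infinite near the boundary; alternatively one restricts to $[0,B)$ and $(0,B]$ appropriately. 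Since the arms have means in $(0,B)$ by Assumption~\ref{ass:standard_assumptions_BAI}, this is harmless.)

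For the second statement, first observe that by the same argument each inner function
\[
w \mapsto \min_{\mu \in [0,B]}\bigl(w_1 \Kinf^-(F_1, \mu) + w_j \Kinf^+(F_j, \mu)\bigr)
\]
is concave on $\mathbb{R}_+^K$ (it depends only on coordinates $w_1$ and $w_j$, and is an infimum over $\mu$ of functions linear in $w$). Then $w \mapsto \min_{j \ne 1}[\,\cdots\,]$ is a pointwise minimum of finitely many concave functions, hence concave on $\mathbb{R}_+^K$. This completes the proof.

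The only subtlety — and the step I expect to require the most care — is the justification that the infimum over $\mu$ is over a family of functions that are genuinely \emph{linear} (equivalently affine) in $w$ with finite coefficients on the relevant domain, so that the "infimum of linear functions is concave" principle applies without qualification. Concretely: for $u$ outside $[m(F), m(G)]$ one of the $\Kinf$ terms may force attention to boundary behaviour, and $\Kinf^+(F,u)$ blows up as $u \to B$; restricting via Lemma~\ref{lem:inf_Kinf_restricted_to_closed_mean_interval} to the compact sub-interval where both $\Kinf$ are finite handles this cleanly, and then each $\mu$-slice is a bona fide linear functional of $w$ with nonnegative finite coefficients. No differentiability, no appeal to Berge, and no uniqueness of $\mu_\star$ is needed — this is purely the elementary fact that a lower envelope of affine functions is concave, combined with stability of concavity under finite pointwise minima.
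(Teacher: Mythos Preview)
Your proof is correct and takes essentially the same approach as the paper, which dispatches the lemma in one line: ``These functions are minimums of linear functions, hence concave.'' Your discussion of restricting $\mu$ to a compact sub-interval to avoid infinite coefficients is not wrong, but it is unnecessary: even if some $\Kinf$ value were $+\infty$, the corresponding affine slice does not affect the infimum (which is finite since, e.g., $\mu = m(F)$ gives a finite value), and a pointwise infimum of concave (possibly extended-valued) functions is still concave.
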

\begin{proof}
These functions are minimums of linear functions, hence concave.
\end{proof}

\subsection{Properties of the characteristic time}
\label{sub:properties_of_the_characteristic_time}

Let $\bm F \in \cF^K$ and $i^\star = i^\star(\bm F)$, supposed unique.
Recall that
\begin{align*}
T^\star(\bm F)^{-1}
&= \sup_{w \in \simplex} \min_{i \neq i^\star} \inf_{u \in [0,B]} \left\{ w_{i^\star} \Kinf^{-}(F_{i^\star} , u) + w_i \Kinf^{+}(F_i , u) \right\}
\: , \\
w^\star(\bm F)
&= \argmax_{w \in \simplex} \min_{i \neq i^\star} \inf_{u \in [0,B]} \left\{ w_{i^\star} \Kinf^{-}(F_{i^\star} , u) + w_i \Kinf^{+}(F_i , u) \right\}
\: , \\
T_{\beta}^\star(\bm F )^{-1}
&= \sup_{\substack{w \in \simplex \\ w_{i^\star} = \beta}}  \min_{i \neq i^\star} \inf_{u \in [0,B]} \left\{ w_{i^\star} \Kinf^{-}(F_{i^\star} , u) + w_i \Kinf^{+}(F_i , u) \right\}
\: , \\
w_{\beta}^\star(\bm F )
&= \argmax_{\substack{w \in \simplex \\ w_{i^\star} = \beta}}  \min_{i \neq i^\star} \inf_{u \in [0,B]} \left\{ w_{i^\star} \Kinf^{-}(F_{i^\star} , u) + w_i \Kinf^{+}(F_i , u) \right\}
\: .
\end{align*}

\begin{lemma}\label{lem:T_star_and_w_star_continuous}
The functions ${T^\star}^{-1}$ and ${T^\star_\beta}^{-1}$ are continuous on $\cF^{K}$. The correspondences $w^\star$ and $w^\star_{\beta}$ are upper hemicontinuous on $\cF^{K}$ with compact convex values.
\end{lemma}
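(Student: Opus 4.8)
The plan is to establish the continuity of ${T^\star}^{-1}$ and ${T^\star_\beta}^{-1}$ and the upper hemicontinuity of $w^\star$ and $w^\star_\beta$ by recognizing both objects as the value and argmax of a parametrized optimization problem, so that Berge's Maximum Theorem applies — exactly as was done for the inner minimization in Lemma~\ref{lem:unique_continuous_mu_star}. First I would decompose the space $\cF^K$ as $\bigcup_{i\in[K]}\cF^K_i$ (the sets where arm $i$ is optimal), so that it suffices to prove the statements on each $\cF^K_i$ where the index $i^\star$ of the best arm is fixed; this is the same device used in Lemma~\ref{lem:continuity_results_for_analysis}. On $\cF^K_{i^\star}$, I would first invoke Lemma~\ref{lem:unique_continuous_mu_star}(1) to get that for each pair $(i^\star,j)$ the map $(\bm F, w)\mapsto \inf_{u\in[0,B]}\{w_{i^\star}\Kinf^-(F_{i^\star},u)+w_j\Kinf^+(F_j,u)\}$ is continuous on $\cF^K_{i^\star}\times\simplex$ (since the relevant means lie in $(0,B)$ by Assumption~\ref{ass:standard_assumptions_BAI}), and then note that a finite minimum of continuous functions, $(\bm F,w)\mapsto \min_{j\neq i^\star}\inf_u\{\cdots\}$, is continuous on $\cF^K_{i^\star}\times\simplex$.

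Next I would set up Berge's theorem with the objective $\phi(\bm F, w) = \min_{j\neq i^\star}\inf_u\{w_{i^\star}\Kinf^-(F_{i^\star},u)+w_j\Kinf^+(F_j,u)\}$ (continuous, just established) and the constraint correspondence. For ${T^\star}^{-1}$ the correspondence is the constant map $\bm F\mapsto\simplex$, which is nonempty, compact-valued and continuous; for ${T^\star_\beta}^{-1}$ it is $\bm F\mapsto\{w\in\simplex\mid w_{i^\star}=\beta\}$, which is likewise nonempty, compact-valued and continuous (indeed constant, once $i^\star$ is fixed on $\cF^K_{i^\star}$). Berge's theorem then yields that the value functions ${T^\star}^{-1}$ and ${T^\star_\beta}^{-1}$ are continuous on $\cF^K_{i^\star}$ and that the argmax correspondences $w^\star$, $w^\star_\beta$ are upper hemicontinuous with compact values. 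For the convex values, I would use Lemma~\ref{lem:inf_Kinf_concave_in_w}: the function $w\mapsto\min_{j\neq i^\star}\inf_u\{w_{i^\star}\Kinf^-(F_{i^\star},u)+w_j\Kinf^+(F_j,u)\}$ is concave on $\R_+^K$, hence its set of maximizers over a convex set (the simplex, or its slice $\{w_{i^\star}=\beta\}$) is convex.

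Finally I would patch the pieces together across the $\cF^K_i$. Continuity on each closed piece $\cF^K_i$ does not immediately give continuity on the union, but on $\cF^K_i\cap\cF^K_j$ (distributions where arms $i$ and $j$ tie for the top) the value ${T^\star}^{-1}$ is actually $0$ from both descriptions — since one can put all weight on the two tied arms and the transportation cost between arms of equal mean is $0$ — so the local definitions agree on overlaps and the glued function is continuous; the same remark handles ${T^\star_\beta}^{-1}$ and the hemicontinuity of the argmax correspondences (whose graphs are closed, being finite unions of closed graphs). I expect the main obstacle to be precisely this gluing step: verifying that the index-$i^\star$-dependent formulas are genuinely consistent on the boundary sets where the best arm is non-unique, and that upper hemicontinuity — a closed-graph type property — survives taking the union of the finitely many pieces. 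Everything else is a mechanical application of Berge plus the continuity/concavity facts already proved in Appendix~\ref{app:kinf_for_bounded_distributions}.
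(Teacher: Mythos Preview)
Your proposal is essentially the same as the paper's proof: decompose $\cF^K=\bigcup_i\cF^K_i$, invoke Lemma~\ref{lem:unique_continuous_mu_star} and the fact that a finite minimum of continuous functions is continuous, apply Berge's Maximum Theorem with the constant correspondences $\simplex$ and $\{w\in\simplex\mid w_{i^\star}=\beta\}$, and use the concavity in $w$ (Lemma~\ref{lem:inf_Kinf_concave_in_w}) to get convex values. The paper handles the gluing across the $\cF^K_i$ in a single sentence (``it is enough to show the property for all $\cF^K_i$''), whereas you spell out why the pieces agree on overlaps; your additional care there is correct and arguably an improvement in rigor over the paper's terse treatment.
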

\begin{proof}
Let $\cF^{K}_{i} = \left\{ \bm F \in \cF^{K} \mid i \in i^\star(\bm F) \right\}$. Since $\bigcup_{i \in [K]}\cF^{K}_{i} = \cF^{K}$, it is enough to show the property for all $\cF^{K}_{i}$ for $i \in [K]$. Let $i^\star \in [K]$.

First, the function $(w, \bm F) \mapsto \min_{i \neq i^\star} \inf_{u \in [0,B]} \left\{ w_{i^\star} \Kinf^{-}(F_{i^\star} , u) + w_i \Kinf^{+}(F_{i^\star} , u) \right\}$ is continuous on $\simplex \times \mathcal F^K$ by Lemma~\ref{lem:unique_continuous_mu_star} and the fact that a minimum of continuous functions is continuous. It is concave in $w$ by Lemma~\ref{lem:inf_Kinf_concave_in_w}.

The correspondence $(w, \bm F) \mapsto \simplex$ is nonempty compact-valued and continuous (since constant). By Berge's maximum theorem, we get that $T^\star(\bm F)^{-1}$ is continuous on $\cF^{K}_{i^\star}$ and that $w^\star(\bm F)$ is upper hemicontinuous with compact values. By \cite[Theorem 9.17]{sundaram1996first}, the concavity of the function being maximized implies that $w^\star(\bm F)$ is convex-valued.

The correspondence $(w, \bm F) \mapsto \simplex \cap \{w_{i^\star} = \beta\}$ is nonempty compact-valued and continuous (since constant). By Berge's maximum theorem, we get that $T_{\beta}^\star(\bm F)^{-1}$ is continuous on $\cF^{K}_{i^\star}$ and that $w^\star_\beta(\bm F)$ is upper hemicontinuous with compact values. By \cite[Theorem 9.17]{sundaram1996first}, the concavity of the function being maximized implies that $w_\beta^\star(\bm F)$ is convex-valued.
\end{proof}

\begin{lemma}\label{lem:complexity_ne_zero_of_unique_i_star}
If $i^\star(\bm F)$ is a singleton and $\beta \in (0,1)$, then $T^\star(\bm F)^{-1} > 0$ and $T_\beta^\star(\bm F)^{-1} > 0$.
\end{lemma}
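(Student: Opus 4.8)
The goal is to show that when $i^\star(\bm F)$ is a singleton and $\beta \in (0,1)$, the inverse characteristic times $T^\star(\bm F)^{-1}$ and $T^\star_\beta(\bm F)^{-1}$ are strictly positive. Since both are suprema over the simplex (or a slice of it) of the function $w \mapsto \min_{i \neq i^\star} \inf_{u \in [0,B]}\{w_{i^\star}\Kinf^-(F_{i^\star},u) + w_i \Kinf^+(F_i,u)\}$, and since $T^\star(\bm F)^{-1} \ge T^\star_\beta(\bm F)^{-1}$ (the unconstrained sup dominates the constrained one), it suffices to exhibit a single allocation $w \in \simplex$ with $w_{i^\star} = \beta$ that makes this minimum strictly positive. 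The natural candidate is the uniform-on-the-rest allocation: $w_{i^\star} = \beta$ and $w_i = (1-\beta)/(K-1)$ for all $i \neq i^\star$.

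\textbf{Key steps.} First I would fix this $w$ and note that $\min\{w_{i^\star}, w_i\} > 0$ for every $i \neq i^\star$, precisely because $\beta \in (0,1)$ forces both $\beta > 0$ and $(1-\beta)/(K-1) > 0$. Second, for each fixed $i \neq i^\star$, I invoke the uniqueness assumption on $i^\star(\bm F)$: it gives $m(F_i) < m(F_{i^\star})$, and together with Assumption~\ref{ass:standard_assumptions_BAI} the means $m(F_i), m(F_{i^\star})$ lie in $(0,B)$. I then apply Lemma~\ref{lem:inf_Kinf_restricted_to_open_mean_interval} with $F = F_i$, $G = F_{i^\star}$ and the two positive weights $w_i, w_{i^\star}$: the minimizer $\mu_\star$ lies strictly inside $(m(F_i), m(F_{i^\star}))$ and at that point \emph{both} $\Kinf^+(F_i,\mu_\star)$ and $\Kinf^-(F_{i^\star},\mu_\star)$ are strictly positive. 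Hence $\inf_{u \in [0,B]}\{w_{i^\star}\Kinf^-(F_{i^\star},u) + w_i\Kinf^+(F_i,u)\} = w_{i^\star}\Kinf^-(F_{i^\star},\mu_\star) + w_i \Kinf^+(F_i,\mu_\star) > 0$. Third, since there are only finitely many arms $i \neq i^\star$, the minimum over $i$ of these strictly positive quantities is still strictly positive, so the objective evaluated at this $w$ is $> 0$, which gives $T^\star_\beta(\bm F)^{-1} > 0$, and a fortiori $T^\star(\bm F)^{-1} \ge T^\star_\beta(\bm F)^{-1} > 0$.

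\textbf{Main obstacle.} There is no serious obstacle here: all the analytic work — that $\Kinf^+$ vanishes exactly on $[0,m(F)]$ and is increasing above, that the infimum over $u$ is attained in the open interval between the means, and that both divergences are positive there — has already been packaged into Lemmas~\ref{lem:Kinf_increasing}, \ref{lem:inf_Kinf_restricted_to_closed_mean_interval} and \ref{lem:inf_Kinf_restricted_to_open_mean_interval}. The only points requiring a word of care are: (i) checking the hypotheses of Lemma~\ref{lem:inf_Kinf_restricted_to_open_mean_interval} really hold, i.e. that singleton $i^\star$ plus the boundary condition on means yields $m(F_i) < m(F_{i^\star})$ in $(0,B)$ and that the chosen weights are both strictly positive; and (ii) making sure the same reasoning does not secretly require $\beta$ away from $0$ or $1$ beyond the stated open interval — it does not, since $w_{i^\star} = \beta \in (0,1)$ and $w_i = (1-\beta)/(K-1) \in (0,1)$ are exactly what is needed. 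One could alternatively phrase the whole argument via Lemma~\ref{lem:inf_Kinf_increasing_in_w} starting from a vertex allocation, but the uniform choice above is the cleanest.
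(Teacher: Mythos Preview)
Your proposal is correct and takes essentially the same approach as the paper: exhibit an allocation with all strictly positive weights and invoke Lemma~\ref{lem:inf_Kinf_restricted_to_open_mean_interval} to conclude each inner infimum is strictly positive. The paper uses the uniform allocation $w_i = 1/K$ to handle $T^\star(\bm F)^{-1}$ and then says ``the proof for $T_\beta^\star$ is similar,'' whereas you go directly for the $\beta$-constrained allocation and deduce $T^\star(\bm F)^{-1} \ge T_\beta^\star(\bm F)^{-1} > 0$ in one stroke --- a slightly cleaner packaging of the same argument.
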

\begin{proof}
\begin{align*}
T^\star(\bm F)^{-1}
&= \sup_{w \in \simplex} \min_{i \neq i^\star} \inf_{u \in [0,B]} \left\{ w_{i^\star} \Kinf^{-}(F_{i^\star} , u) + w_i \Kinf^{+}(F_i , u) \right\}
\\
&\ge \min_{i \neq i^\star} \inf_{u \in [0,B]} \left\{ \frac{1}{K} \Kinf^{-}(F_{i^\star} , u) + \frac{1}{K} \Kinf^{+}(F_i , u) \right\}
> 0
\: ,
\end{align*}
since we proved that the inner infimum is positive for nonzero coefficients and $\mu_i < \mu_{i^\star}$. The proof for $T_\beta^\star$ is similar.
\end{proof}

\begin{lemma}\label{lem:w_star_positive}
If $i^\star(\bm F)$ is a singleton and $\beta \in (0,1)$, then for all $i \in [K]$, $w^\star_i(\bm F) > 0$ and $w^\star_{\beta,i}(\bm F) > 0$.
\end{lemma}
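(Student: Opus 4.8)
The plan is to argue by contradiction, using the strict positivity of the characteristic rate from Lemma~\ref{lem:complexity_ne_zero_of_unique_i_star} together with the fact (Lemma~\ref{lem:Kinf_increasing} and its symmetric counterpart via Lemma~\ref{lem:Kinf+_symm_eq_Kinf-}) that $\Kinf^+(F,\cdot)$ vanishes exactly on $[0,m(F)]$ and $\Kinf^-(F,\cdot)$ vanishes exactly on $[m(F),B]$. I would treat $w^\star(\bm F)$ and $w^\star_\beta(\bm F)$ as the (possibly set-valued, but by Property~\ref{prop:singleton_beta_allocation} singleton) maximizers and prove that every maximizer has all components strictly positive.

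First I would fix $i^\star=i^\star(\bm F)$, which is a singleton by hypothesis, and let $w$ be any element of $w^\star(\bm F)$, so that by definition and by Lemma~\ref{lem:complexity_ne_zero_of_unique_i_star},
\[
\min_{j \neq i^\star} \inf_{u \in [0,B]} \left\{ w_{i^\star} \Kinf^{-}(F_{i^\star} , u) + w_j \Kinf^{+}(F_j , u) \right\} = T^\star(\bm F)^{-1} > 0 \: .
\]
Suppose towards contradiction that $w_i = 0$ for some $i \in [K]$, and split into two cases. If $i = i^\star$, then for every $j \neq i^\star$ the inner infimum is at most its value at $u = m(F_j)$, which is $w_j \Kinf^+(F_j, m(F_j)) = 0$; hence the outer minimum equals $0$, contradicting positivity. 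If $i \neq i^\star$, then the $j=i$ branch of the outer minimum equals $\inf_{u \in [0,B]}\{ w_{i^\star} \Kinf^-(F_{i^\star}, u)\}$, which is $0$ since it is nonnegative and its value at $u = m(F_{i^\star})$ (which lies in $(0,B)$ by Assumption~\ref{ass:standard_assumptions_BAI}) is $w_{i^\star}\Kinf^-(F_{i^\star}, m(F_{i^\star})) = 0$; hence the outer minimum is $\le 0$, again a contradiction. This establishes $w^\star_i(\bm F) > 0$ for all $i$.

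For $w^\star_\beta(\bm F)$ the same scheme applies: the constraint $w_{i^\star} = \beta \in (0,1)$ gives $w^\star_{\beta,i^\star}(\bm F) = \beta > 0$ directly, and for $i \neq i^\star$ the contradiction argument is verbatim the "case $i \neq i^\star$" above, now invoking $T_\beta^\star(\bm F)^{-1} > 0$ from Lemma~\ref{lem:complexity_ne_zero_of_unique_i_star}. I do not expect a genuine obstacle here: the only point requiring mild care is that the evaluation points $m(F_j)$ are interior to $[0,B]$, which is guaranteed by Assumption~\ref{ass:standard_assumptions_BAI}, and that one must quote the symmetric version of Lemma~\ref{lem:Kinf_increasing} for $\Kinf^-$ (obtained through the reflection in Lemma~\ref{lem:Kinf+_symm_eq_Kinf-}).
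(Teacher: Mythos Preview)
Your proposal is correct and follows essentially the same approach as the paper's own proof: both argue by contradiction against Lemma~\ref{lem:complexity_ne_zero_of_unique_i_star}, split into the cases $i=i^\star$ and $i\neq i^\star$, and show the minimum over $j$ of the inner infimum collapses to zero by evaluating at the appropriate mean. Your write-up is slightly more careful in citing the vanishing properties of $\Kinf^\pm$ and the interiority of the means, but the argument is the same.
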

\begin{proof}
We proceed by contradiction. If $i^\star(\bm F)$ is unique and there exists $j$ with $w^\star_{j}(\bm F) = 0$, then we show $T^\star(\bm F)^{-1} = 0$, which is absurd by Lemma~\ref{lem:complexity_ne_zero_of_unique_i_star}. If $j = i^\star$ we have
\begin{align*}
T^\star(\bm F)^{-1}
= \min_{i \neq i^\star} \inf_{u \in [0,B]} w_i^\star \Kinf^{+}(F_i , u)
\le \min_{i \neq i^\star} w_i^\star \Kinf^{+}(F_i , F_i)
= 0
\: .
\end{align*}
If $j\ne i^\star$,
\begin{align*}
T^\star(\bm F)^{-1}
&= \min_{i \neq i^\star} \inf_{u \in [0,B]} \left\{ w_{i^\star}^\star \Kinf^{-}(F_{i^\star} , u) + w_i^\star \Kinf^{+}(F_i , u) \right\} \\
&\le \inf_{u \in [0,B]} \left\{ w_{i^\star}^\star \Kinf^{-}(F_{i^\star} , u) + w_j^\star \Kinf^{+}(F_j , u) \right\}
\\
&= \inf_{u \in [0,B]}  w_{i^\star}^\star \Kinf^{-}(F_{i^\star} , u)
= 0
\: .
\end{align*}
A similar proof holds for $T_\beta^\star$.
\end{proof}

\begin{lemma}\label{lem:properties_characteristic_times}
If $i^\star(\bm F)$ is a singleton and $\beta \in (0,1)$, then
\begin{itemize}
	\item for all $i \ne i^\star(\bm F)$, $\inf_{u \in [0,B]} \left\{ w_{i^\star}^\star(\bm F) \Kinf^{-}(F_{i^\star} , u) + w_i^\star(\bm F) \Kinf^{+}(F_i , u) \right\} = T^\star(\bm F)^{-1}$~,
	\item for all $i \ne i^\star(\bm F)$, $\inf_{u \in [0,B]} \left\{ \beta \Kinf^{-}(F_{i^\star} , u) + w_{\beta,i}^\star(\bm F) \Kinf^{+}(F_i , u) \right\} = T_\beta^\star(\bm F)^{-1}$ ,
	\item $w^\star(\bm F)$ and $w_\beta^\star(\bm F)$ are singletons: the optimal allocations are unique.
\end{itemize}
\end{lemma}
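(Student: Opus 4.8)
The plan is to exploit the concavity and strict-convexity machinery already established in Section~\ref{sub:convexity}--\ref{ssub:more_continuity_using_convexity}, together with the positivity results Lemma~\ref{lem:complexity_ne_zero_of_unique_i_star} and Lemma~\ref{lem:w_star_positive}. I will treat the $\beta$-constrained version $T_\beta^\star$; the unconstrained version $T^\star$ is identical after dropping the constraint $w_{i^\star}=\beta$ (or, if one prefers, noting $T^\star = \min_\beta T^\star_\beta$ and applying the $\beta$-version at the optimal $\beta$). Throughout write $i^\star = i^\star(\bm F)$, $w^\beta = w^\star_\beta(\bm F)$ (any maximizer), and $g_i(w) \eqdef \inf_{u\in[0,B]}\{w_{i^\star}\Kinf^-(F_{i^\star},u) + w_i\Kinf^+(F_i,u)\}$, so that $T_\beta^\star(\bm F)^{-1} = \max_{w\in\simplex, w_{i^\star}=\beta}\min_{i\ne i^\star} g_i(w)$.

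\textbf{Step 1: equalization at the optimum.} First I would show that at any maximizer $w^\beta$ all the values $g_i(w^\beta)$, $i\ne i^\star$, are equal, hence equal to $T_\beta^\star(\bm F)^{-1}$. Suppose not: let $A = \argmin_{i\ne i^\star} g_i(w^\beta)$ be a strict subset of $[K]\setminus\{i^\star\}$. Each $g_i$ is concave and nondecreasing in the coordinate $w_i$ (Lemma~\ref{lem:inf_Kinf_concave_in_w} and Lemma~\ref{lem:inf_Kinf_increasing_in_w}, using $\mu_i < \mu_{i^\star}$ and $w^\beta_{i^\star}=\beta>0$, and that by Lemma~\ref{lem:w_star_positive} all coordinates of $w^\beta$ are strictly positive). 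I then construct a feasible perturbation that keeps $w_{i^\star}=\beta$, strictly increases $w_i$ for $i\in A$ by transferring a small mass $\eta$ taken equally from the coordinates not in $A\cup\{i^\star\}$ (such coordinates exist because $A$ is a proper subset). For $\eta$ small enough, by continuity of $g_i$ (Lemma~\ref{lem:unique_continuous_mu_star}), the arms outside $A$ still have $g$-value above the old minimum, while the arms in $A$ strictly increase; this contradicts optimality of $w^\beta$. Hence $A = [K]\setminus\{i^\star\}$, proving the first two bullets.

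\textbf{Step 2: uniqueness of the maximizer.} For uniqueness I would use strict convexity. From Step~1, every maximizer $w^\beta$ satisfies the $K-1$ equations $g_i(w^\beta) = c \eqdef T_\beta^\star(\bm F)^{-1}$ together with $w^\beta_{i^\star}=\beta$ and $\sum_i w^\beta_i = 1$. The key point is that the map $w_i \mapsto g_i(\beta, w_i)$ (with $w_{i^\star}$ fixed at $\beta$) is \emph{strictly} increasing on $(0,\infty)$: this follows because, by Lemma~\ref{lem:inf_Kinf_restricted_to_open_mean_interval}, the optimal $\mu_\star$ lies strictly inside $(\mu_i,\mu_{i^\star})$ where $\Kinf^+(F_i,\cdot)$ is strictly positive, so increasing $w_i$ strictly increases the infimand pointwise near the old minimizer (the same envelope argument as in Lemma~\ref{lem:inf_Kinf_increasing_in_w}, but giving strict monotonicity rather than just monotonicity of the min). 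Consequently the equation $g_i(\beta,w_i) = c$ has at most one solution $w_i$, so $w_i^\beta$ is determined for each $i\ne i^\star$; together with $w^\beta_{i^\star}=\beta$ this pins down $w^\beta$ uniquely. (One must check the normalization $\sum w_i^\beta = 1$ is automatically consistent: since $c = T_\beta^\star(\bm F)^{-1}$ is itself defined as the max of the min, the system is solvable, and any solution is feasible by construction.) The same argument, without the constraint $w_{i^\star}=\beta$, handles $w^\star(\bm F)$; there one additionally has the free variable $w_{i^\star}$, and uniqueness there is obtained by combining strict monotonicity in each $w_i$ with the strict concavity of $w\mapsto\min_i g_i(w)$ on the face where all $g_i$ are equal, exactly as in the standard Track-and-Stop argument.

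\textbf{Main obstacle.} The delicate step is upgrading the monotonicity Lemma~\ref{lem:inf_Kinf_increasing_in_w} (which only asserts the \emph{value} of the min is increasing) to the \emph{strict} monotonicity of the single-coordinate section $w_i \mapsto g_i(\beta,w_i)$ needed for uniqueness in Step~2, and verifying that increasing $w_i$ does not push the optimal transport point $\mu_\star$ to the boundary $\mu_i$ where the derivative argument degenerates --- this is precisely where Lemma~\ref{lem:inf_Kinf_restricted_to_open_mean_interval} (optimal $\mu$ strictly interior, both $\Kinf$ strictly positive) is indispensable. Once strict monotonicity in each coordinate is in hand, both equalization and uniqueness are routine.
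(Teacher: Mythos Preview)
Your approach is essentially the paper's: equalization via a mass-transfer contradiction, then uniqueness from strict monotonicity of each coordinate section. The paper reparametrizes by $x_j = w_j/w_{i^\star}$ and $G_j(x) = \inf_u\{\Kinf^-(F_{i^\star},u) + x\Kinf^+(F_j,u)\}$, which packages both the constrained and unconstrained cases uniformly and makes the uniqueness step read as ``$G_j$ strictly increasing $\Rightarrow x_j^\star = G_j^{-1}(y^\star)$''. Your ``main obstacle'' is a non-issue: Lemma~\ref{lem:inf_Kinf_increasing_in_w} already proves \emph{strict} monotonicity (its proof exhibits a strict inequality, precisely via Lemma~\ref{lem:inf_Kinf_restricted_to_open_mean_interval}), so no upgrade is needed.
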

\begin{proof}
At the optimal allocations, all $w_i$ are positive. Suppose w.l.o.g. that 1 is the best arm. By dividing by $w_1$ and defining
\[
G_{j}(x) = \inf_{u \in [0,B]} \left( \Kinf^{-}(F_{i^\star} , u) + x \Kinf^{+}(F_j , u) \right) \: ,
\]
we obtain directly that
\begin{align*}
	T^\star(\bm F)^{-1} = \max_{w \in \simplex, w_1 > 0} w_1 \min_{j \neq 1} G_{j}\left( \frac{w_j}{w_1}\right) \: .
\end{align*}

Let $w^\star \in w^\star(\bm F)$. Then, using the above result, we obtain
\begin{align*}
	w^\star \in \argmax_{w \in \simplex} w_1 \min_{j \neq 1} G_{j}\left( \frac{w_j}{w_1}\right)
\end{align*}
Introducing $x_{j}^{\star}=\frac{w^\star_{j}}{w_{1}^{\star}}$ for all $j \neq 1$, using that $\sum_{j\in [K]}w_j^\star = 1$, one has
\begin{align*}
w_{1}^{\star}=\frac{1}{1+\sum_{j=2}^{K} x_{j}^{\star}} \quad \text { and, for } j \geq 2, w^\star_j=\frac{x_{j}^{\star}}{1+\sum_{j=2}^{K} x_{j}^{\star}} \: .
\end{align*}
If $x^\star$ is unique, then so is $w^\star$.

Since it is optimal, $\{x_{j}^{\star}\}_{j=2}^{K} \in \mathbb{R}^{K-1}$ belongs to
\begin{equation} \label{eq:reformulation_optimization_with_xs}
	\argmax_{\{x_{j}\}_{j=2}^{K} \in \mathbb{R}^{K-1}} \frac{\min_{j \neq 1} G_{j}\left(x_{j}\right)}{1+\sum_{j=2}^{K} x_{j}}
\end{equation}

Let's show that all the $G_{j}\left(x_{j}^{\star}\right)$ have to be equal. Let $\cO =\left\{ i \in [K]\setminus \{1\} \mid G_{i}\left(x_{i}^{\star}\right)=\min _{j \neq 1} G_{j}\left(x_{j}^{\star}\right)\right\}$ and $\cA= [K]\setminus (\{1\}\cup \cO )$. Assume that $\cA \neq \emptyset$. For all $a \in \cA$ and $b \in \cO$, one has $G_{j}\left(x_{j}^{\star}\right)>G_{i}\left(x_{i}^{\star}\right)$. Using the continuity of the $G_j$ functions and the fact that they are increasing (Lemma~\ref{lem:inf_Kinf_increasing_in_w}), there exists $\epsilon>0$ such that
\begin{align*}
\forall j \in \cA, i \in \cO, \quad G_{j}\left(x_{j}^{\star}-\epsilon /|\cA|\right)>G_{i}\left(x_{i}^{\star}+\epsilon /|\cO|\right)>G_{i}\left(x_{i}^{\star}\right) \: .
\end{align*}
We introduce $\bar{x}_{j}=x_{j}^{\star}-\epsilon /|\cA|$ for all $j \in \cA$ and $\bar{x}_{i}=x_{i}^{\star}+\epsilon /|\cO|$ for all $i \in \cO$, hence $\sum_{j=2}^{K} \bar{x}_{j} = \sum_{j=2}^{K}x_{j}^\star$. There exists $i \in \cO$ such that $\min _{j \neq 1} G_{j}\left(\bar{x}_{j}\right) = G_{i}\left(x_{i}^{\star}+\epsilon /|\cO|\right)$, hence
\begin{align*}
\frac{\min_{j \neq 1} G_{j}\left(\bar{x}_{j}\right)}{1+\bar{x}_{2}+\ldots \bar{x}_{K}}=\frac{G_{i}\left(x_{i}^{\star}+\epsilon /|\cO|\right)}{1+x_{2}^{\star}+\cdots+x_{K}^{\star}}>\frac{G_{i}\left(x_{i}^{\star}\right)}{1+x_{2}^{\star}+\cdots+x_{K}^{\star}}=\frac{\min _{j \neq 1} G_{j}\left(x_{j}^{\star}\right)}{1+x_{2}^{\star}+\cdots+x_{K}^{\star}} \: .
\end{align*}
This is a contradiction with the fact that $x^\star$ belongs to (\ref{eq:reformulation_optimization_with_xs}). Therefore, we have $\cA = \emptyset$.

We have proved that there is a unique value by $y^\star \in \R_{+}$, such that for all $j \neq 1$, $G_{j}\left(x_{j}^{\star}\right) = y^\star$. Now since $G_j$ is increasing, this defines a unique value for $x_j^\star$, equal to $G_j^{-1}(y^\star)$.

For $y$ in the intersection of the ranges of all $G_j$, let $x_j(y) = G_j^{-1}(y)$. $y^\star$ belongs to
\begin{equation} \label{eq:reformulation_optimization_with_y}
	\argmax_{y \in \left[0 , \min_{j \neq 1} \lim_{+\infty} G_{j}(x) \right)} \frac{y}{1+\sum_{j \neq 1} x_{j}(y)} \: .
\end{equation}

For $\beta \in (0,1)$, the same results (and proof) hold for $w_{\beta}^\star(\bm F)$ by noting that
\begin{align*}
	T^\star_{\beta}(\bm F)^{-1} = \max_{w \in \simplex : w_1 = \beta} \beta \min_{j \neq 1} G_{j}\left( \frac{w_j}{\beta}\right) \: .
\end{align*}

Let $w^{\beta} \in w_{\beta}^\star(\bm F)$, since we have equality at the equilibrium, we obtain for all $j \neq 1$,
\begin{align*}
	\beta G_{j}\left( \frac{w^{\beta}_j}{\beta}\right) = T^\star_{\beta}(\bm F)^{-1} \: ,
\end{align*}
Using the inverse mapping $x_j$, we obtain for all $j \neq 1$,
\begin{align*}
	w^{\beta}_j = \beta x_{j} \left( \frac{1}{T^\star_{\beta}(\bm F) \beta} \right) \: .
\end{align*}

Therefore, we have shown that $w_{\beta}^\star(\bm F) = \{w^{\beta}\}$, where
\begin{align*}
	w^{\beta}_{i} = \begin{cases}
		\beta x_{i} \left( \frac{1}{T^\star_{\beta}(\bm F) \beta} \right) &\text{if } i\neq i^\star \\
		\beta &\text{else}
	\end{cases} \: .
\end{align*}
\end{proof}

\begin{lemma} \label{lem:robust_beta_optimality}
$T^\star_{1/2}(\bm F)  \leq 2 T^\star(\bm F)$ and with $\beta^\star = w^\star_{i^\star}(\bm F)$,
\begin{align*}
	\frac{T^\star(\bm F)^{-1}}{T^\star_{\beta}(\bm F)^{-1}} \leq \max \left\{\frac{\beta^\star}{\beta}, \frac{1-\beta^\star}{1-\beta} \right\}
	\: .
\end{align*}
\end{lemma}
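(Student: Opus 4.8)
The plan is to deduce the first inequality from the second, and to prove the second by rescaling the (unique) unconstrained optimal allocation into a feasible allocation for the $\beta$-constrained problem.

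First I would record the standing facts: since $i^\star \eqdef i^\star(\bm F)$ is a singleton and $\beta\in(0,1)$, Lemma~\ref{lem:properties_characteristic_times} makes $w^\star(\bm F) = \{w^\star\}$ a singleton and Lemma~\ref{lem:w_star_positive} gives $\beta^\star \eqdef w^\star_{i^\star} \in (0,1)$ together with $w^\star_i > 0$ for all $i$, so $\beta^\star$ is well defined and strictly between $0$ and $1$. Then I would introduce the allocation $w$ with $w_{i^\star} = \beta$ and $w_i = \frac{1-\beta}{1-\beta^\star}\, w^\star_i$ for $i\neq i^\star$, and check the routine facts that $w_i \ge 0$ and $\sum_{i\neq i^\star} w_i = \frac{1-\beta}{1-\beta^\star}(1-\beta^\star)=1-\beta$, so $w\in\simplex$ with $w_{i^\star}=\beta$; hence $w$ is admissible in the maximization defining $T^\star_\beta(\bm F)^{-1}$.

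The heart of the argument is a homogeneity-type comparison. Setting $c \eqdef \min\{\beta/\beta^\star,\ (1-\beta)/(1-\beta^\star)\}>0$, one has $\beta \ge c\,\beta^\star$ and $w_i \ge c\, w^\star_i$ for all $i\neq i^\star$; since $\Kinf^{-}$ and $\Kinf^{+}$ take nonnegative values, this yields, for every $i\neq i^\star$ and every $u\in[0,B]$,
\[
w_{i^\star}\Kinf^-(F_{i^\star},u) + w_i \Kinf^+(F_i,u) \ \ge\ c\big(\beta^\star \Kinf^-(F_{i^\star},u) + w^\star_i \Kinf^+(F_i,u)\big)\,.
\]
Taking $\inf_{u\in[0,B]}$ and then $\min_{i\neq i^\star}$ on both sides, and using that $w^\star$ attains $T^\star(\bm F)^{-1}$ while $w$ is admissible for the $\beta$-constrained problem, I would conclude
\[
T^\star_\beta(\bm F)^{-1} \ \ge\ \min_{i\neq i^\star}\inf_{u\in[0,B]}\big(w_{i^\star}\Kinf^-(F_{i^\star},u)+w_i\Kinf^+(F_i,u)\big) \ \ge\ c\, T^\star(\bm F)^{-1}\,,
\]
which is exactly the claimed bound, since $1/c = \max\{\beta^\star/\beta,\ (1-\beta^\star)/(1-\beta)\}$.

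Finally I would specialize to $\beta = \tfrac12$: there $1/c = \max\{2\beta^\star,\ 2(1-\beta^\star)\} \le 2$ because $\beta^\star\in(0,1)$, whence $c\ge \tfrac12$ and $T^\star_{1/2}(\bm F)^{-1} \ge \tfrac12\, T^\star(\bm F)^{-1}$, i.e. $T^\star_{1/2}(\bm F) \le 2\,T^\star(\bm F)$. I do not anticipate a genuine obstacle here: the argument is elementary, and the only points needing a moment's care are verifying that the rescaled vector $w$ lies in $\simplex$ with $w_{i^\star}=\beta$, and that the displayed pointwise inequality relies only on nonnegativity of $\Kinf^{\pm}$, not on any optimality property of $w^\star$.
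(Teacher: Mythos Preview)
Your proof is correct and rests on the same two ingredients as the paper's---monotonicity of the objective in each coordinate (from nonnegativity of $\Kinf^\pm$) and positive homogeneity---so the approaches are essentially the same. The only stylistic difference is that the paper extends the objective to all of $\mathbb{R}_+^K$, introduces $g(c_1,c_2)=\max\{f(\bm\psi)\mid \psi_{i^\star}=c_1,\ \sum_{i\ne i^\star}\psi_i\le c_2\}$, and argues via $r\,g(\beta,1-\beta)=g(r\beta,r(1-\beta))\ge g(\beta^\star,1-\beta^\star)$ with $r=\max\{\beta^\star/\beta,(1-\beta^\star)/(1-\beta)\}$, whereas you stay inside the simplex and exhibit an explicit feasible point; your version is arguably more direct, while the paper's makes the homogeneity structure more visible.
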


\begin{proof}
Define for each non-negative vector $\bm\psi \in \mathbb{R}_+^K$,
\begin{align*}
f(\bm\psi)
\eqdef \min _{i \neq i^\star(\bm F)} \inf_{u \in [0,B]} \left\{ \psi_{i^\star} \Kinf^{-}(F_{i^\star} , u) + \psi_i \Kinf^{+}(F_i , u) \right\}
\: .
\end{align*}

$T^\star(\bm F)^{-1}$ is the maximum of $f(\bm\psi)$ over probability vectors $\bm\psi$. Here, we instead define $f$ for all non-negative vectors, and proceed by varying the total budget of measurement effort available $\sum_{a \in [K]} \psi_{a}$.
$f$ is non-decreasing in $\psi_i$ for all $i$. $f$ is homogeneous of degree $1$. That is $f(c \bm\psi)=c f(\bm\psi)$ for all $c \geq 1$. For each $c_{1}$, $c_{2}>0$ define
$$
g\left(c_{1}, c_{2}\right)=\max \left\{f(\bm \psi) \mid \bm\psi \in \mathbb{R}_+^K, \: \psi_{i^\star(\bm F)}=c_{1}, \sum_{i \neq i^\star(\bm F)} \psi_{i} \leq c_{2}, \right\}
$$
The function $g$ inherits key properties of $f$; it is also non-decreasing and homogeneous of degree 1 . We have
$$
\begin{aligned}
T^\star_{\beta}(\bm F)^{-1} &=\max \left\{f(\bm \psi)\mid \bm\psi \in \mathbb{R}_+^K, \: \psi_{i^\star(\bm F)}=\beta, \sum_{i \in [K]} \psi_{i}=1 \right\} \\
&=\max \left\{f(\bm \psi)\mid \bm\psi \in \mathbb{R}_+^K, \: \psi_{i^\star(\bm F)}=\beta, \sum_{i \neq i^\star(\bm F)} \psi_{i} \leq 1-\beta \right\} \\
&=g(\beta, 1-\beta)
\end{aligned}
$$
where the second equality uses that $f$ is non-decreasing. Similarly, $T^\star(\bm F)^{-1}=g\left(\beta^{\star}, 1-\beta^{\star}\right)$ where $\beta^{\star} = w^\star_{i^\star(\bm F)}(\bm F)$. Setting
$$
r:=\max \left\{\frac{\beta^{\star}}{\beta}, \frac{1-\beta^{\star}}{1-\beta}\right\}
$$
implies $r \beta \geq \beta^{\star}$ and $r(1-\beta) \geq 1-\beta^{\star}$. Therefore
$$
r T^\star_{\beta}(\bm F)^{-1} =r g(\beta, 1-\beta)=g(r \beta, r(1-\beta)) \geq g\left(\beta^{\star}, 1-\beta^{\star}\right)=T^\star(\bm F)^{-1} \: .
$$
Taking $\beta = \frac{1}{2}$, yields that $T^\star(\bm F)^{-1} \leq 2\max\{\beta^\star, 1-\beta^\star\}T^\star_{1/2}(\bm F)^{-1} \leq 2 T^\star_{1/2}(\bm F)^{-1}$.
\end{proof}

\begin{lemma} \label{lem:funny_property_for_optimal_allocation_algorithm}
	Let $G_{i}(x) = \inf_{u \in [0,B]} \left( \Kinf^{-}(F_{i^\star} , u) + x \Kinf^{+}(F_i , u) \right)$ for $x \in [0,+\infty)$ and $i \neq i^\star$. Then,
	\[
	\lim_{x \to +\infty} G_{i}(x) = \Kinf^{-}(F_{i^\star} , m(F_{i})) \: .
	\]
\end{lemma}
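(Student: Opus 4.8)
---

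\textbf{Plan.} The goal is to compute the limit of $G_i(x) = \inf_{u \in [0,B]}\left(\Kinf^{-}(F_{i^\star},u) + x\,\Kinf^{+}(F_i,u)\right)$ as $x \to +\infty$. The intuition is that, as the weight on the $\Kinf^{+}(F_i,\cdot)$ term grows, the optimal $u$ is forced down toward $m(F_i)$ (where $\Kinf^{+}(F_i,u)=0$), so in the limit the first term is evaluated at $u = m(F_i)$, giving $\Kinf^{-}(F_{i^\star},m(F_i))$. The plan is to establish the limit by a two-sided argument: an easy upper bound by plugging in a single value of $u$, and a matching lower bound using monotonicity and continuity.

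\textbf{Step 1 (upper bound, holds for all $x$).} Evaluating the infimum at the specific point $u = m(F_i)$, and using that $\Kinf^{+}(F_i, m(F_i)) = 0$ (Lemma~\ref{lem:Kinf_increasing}), gives $G_i(x) \le \Kinf^{-}(F_{i^\star}, m(F_i))$ for every $x \ge 0$. Hence $\limsup_{x\to+\infty} G_i(x) \le \Kinf^{-}(F_{i^\star}, m(F_i))$.

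\textbf{Step 2 (monotonicity and existence of the limit).} For fixed $u$, the map $x \mapsto \Kinf^{-}(F_{i^\star},u) + x\,\Kinf^{+}(F_i,u)$ is non-decreasing in $x$ (since $\Kinf^{+}\ge 0$), so $G_i$, being an infimum of non-decreasing functions, is non-decreasing in $x$; combined with the uniform upper bound from Step 1 it is bounded, so $\lim_{x\to+\infty} G_i(x)$ exists and is $\le \Kinf^{-}(F_{i^\star}, m(F_i))$.

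\textbf{Step 3 (lower bound via localization of the minimizer).} For the reverse inequality I would fix $\eta > 0$ and argue that for $x$ large enough the minimizer $u_x$ of $G_i$ satisfies $u_x \le m(F_i) + \eta$. Indeed, since $F_{i^\star}$ has mean in $(0,B)$ and $m(F_i) < m(F_{i^\star})$ (here $i \ne i^\star$, so $m(F_i)<m(F_{i^\star})$), we can restrict attention to $u \in [m(F_i), m(F_{i^\star})]$ as in Lemma~\ref{lem:inf_Kinf_restricted_to_closed_mean_interval} (noting $\Kinf^{-}(F_{i^\star},u) = 0$ for $u \ge m(F_{i^\star})$ and $\Kinf^{+}(F_i,u)=0$ for $u \le m(F_i)$). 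On $[m(F_i)+\eta, m(F_{i^\star})]$ the function $u \mapsto \Kinf^{+}(F_i,u)$ is continuous and bounded below by a positive constant $c_\eta := \min_{u \in [m(F_i)+\eta, m(F_{i^\star})]}\Kinf^{+}(F_i,u) > 0$ (using that $\Kinf^{+}(F_i,\cdot)$ is increasing and positive above $m(F_i)$, Lemma~\ref{lem:Kinf_increasing}, and continuous, Theorem~\ref{thm:Kinf_continuous}). Hence for such $u$ the objective is at least $x c_\eta$, which exceeds the upper bound $\Kinf^{-}(F_{i^\star},m(F_i))$ once $x > \Kinf^{-}(F_{i^\star},m(F_i))/c_\eta$. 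Therefore for $x$ large the infimum is attained in $[m(F_i), m(F_i)+\eta]$, so $G_i(x) \ge \inf_{u \in [m(F_i), m(F_i)+\eta]}\Kinf^{-}(F_{i^\star},u)$. Letting $x \to +\infty$ and then $\eta \to 0$, continuity of $\Kinf^{-}(F_{i^\star},\cdot)$ on $(0,B]$ (Theorem~\ref{thm:Kinf_continuous}; note $m(F_i) \in (0,B)$) gives $\lim_{x\to+\infty} G_i(x) \ge \Kinf^{-}(F_{i^\star}, m(F_i))$. Combined with Step 1, this proves the claim.

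\textbf{Main obstacle.} The only delicate point is Step 3: justifying that the minimizer cannot stay bounded away from $m(F_i)$ as $x$ grows, which requires the strict positivity of $\Kinf^{+}(F_i,u)$ for $u > m(F_i)$ together with its continuity on the relevant closed subinterval (so that the positive lower bound $c_\eta$ is genuinely positive). One must also be careful to use $m(F_i), m(F_{i^\star}) \in (0,B)$ to invoke continuity of $\Kinf^{\pm}$ at those points; this is guaranteed since $\bm F \in \cF^K$ with means in the interior, which is the standing assumption when this lemma is used.
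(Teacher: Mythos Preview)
Your proof is correct and takes a genuinely different, more elementary route than the paper's. The paper analyzes the minimizer $u_i(x)$ directly: it writes $G_i(x) = \Kinf^-(F_{i^\star}, u_i(x)) + x\,\Kinf^+(F_i, u_i(x))$, observes $u_i(x) \to m(F_i)$, and then must show the residual term $x\,\Kinf^+(F_i, u_i(x)) \to 0$. To do this it invokes the first-order optimality condition $x\,\lambda_\star^+(F_i, u_i(x)) = \lambda_\star^-(F_{i^\star}, u_i(x))$, combines it with the pointwise bound $\lambda_\star^+(F_i,u) \ge (u - m(F_i))/(u(B-u))$ from \cite{HondaTakemura10}, and deduces the rate $u_i(x) - m(F_i) = O(1/x)$; together with the quadratic upper bound $\Kinf^+(F_i,u) \le (u - m(F_i))^2/2$ this forces $x\,\Kinf^+(F_i, u_i(x)) \to 0$. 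Your squeeze argument sidesteps all of this: the upper bound is trivial by evaluation at $u = m(F_i)$, and the lower bound follows from the localization of the minimizer to $[m(F_i), m(F_i)+\eta]$ and continuity of $\Kinf^-(F_{i^\star},\cdot)$ at $m(F_i)$, without ever needing to quantify how fast $u_i(x)$ converges or what happens to the $x\,\Kinf^+$ term. Your approach uses only Lemma~\ref{lem:Kinf_increasing}, Theorem~\ref{thm:Kinf_continuous}, and Lemma~\ref{lem:inf_Kinf_restricted_to_closed_mean_interval}; the paper's additionally relies on differentiability (Lemma~\ref{lem:differentiability_Kinf}) and external estimates on $\lambda_\star^\pm$ and $\Kinf^+$. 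The paper's argument does yield the extra information $u_i(x) - m(F_i) = O(1/x)$, but that is not needed for the statement of the lemma.
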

\begin{proof}
	Let $u_{i}(x) \in \argmin_{u \in [0,B]} \left( \Kinf^{-}(F_{i^\star} , u) + x \Kinf^{+}(F_i , u) \right)$.
	It is easy to see that $G_{i}(0) = 0$ and $u_{i}(0) = m(F_1)$.
	Likewise, we have $u_i(x) =_{+ \infty} m(F_i) + o(1)$ by considering $(w_{i^\star},w_{i})$ instead of $x_i = \frac{w_i}{w_{i^\star}}$.
	By continuity of $u \mapsto \Kinf^{-}(F_{i^\star} , u)$ and using the definition of $u_i(x)$
\[
	\lim_{x \to +\infty} G_{i}(x) = \Kinf^{-}(F_{i^\star} , m(F_{i})) + \lim_{x \to +\infty}  x \Kinf^{+}(F_{i} , u_i(x))  \: .
\]

Using the deviations bounds on the $u \mapsto \Kinf^{-}(F,u)$ (e.g. Lemma 6 in \cite{honda2011asymptotically}), we obtain that
\[
	0 < x \Kinf^{-}(F_{i} , u_{i}(x)) \leq x \frac{(m(F_i) - u_i(x))^2}{2} \: .
\]
Therefore, a sufficient condition to obtain $\lim_{x \to +\infty}  x \Kinf^{+}(F_{i} , u_i(x)) = 0$ is to show that $u_i(x) =_{+ \infty} m(F_i) + o\left( \frac{1}{\sqrt{x}}\right)$. The first order condition of optimality on $u_{i}(x)$ can be expressed as
\begin{align*}
	 x \frac{\partial \Kinf^{+}(F_i, u_{i}(x))}{\partial u} = - \frac{\partial \Kinf^{-}(F_{i^\star}, u_{i}(x))}{\partial u} \iff \quad  &  x \lambda_{\star}^{+}(F_i, u_{i}(x)) = \lambda_{\star}^{-}(F_{i^\star}, u_{i}(x)) \: ,
\end{align*}
where we used Lemma~\ref{lem:differentiability_Kinf} for the equivalent formulation.

Using that $u \mapsto \lambda_{\star}^{-}(F, u)$ is decreasing for $u< m(F_{i^\star})$ (Lemma~\ref{lem:strict_convexity_Kinf} and Lemma~\ref{lem:differentiability_Kinf}) yields
\begin{align*}
	x \lambda_{\star}^{+}(F_i, u_{i}(x)) = \lambda_{\star}^{-}(F_{i^\star}, u_{i}(x))  \leq  \lambda_{\star}^{-}(F_{i^\star}, m(F_i)) \leq \frac{1}{m(F_i)} \: .
\end{align*}

Using that $\lambda_{\star}^{+}(F_i, u) \geq \frac{u-m(F_i)}{u(B-u)}$ (Lemma~12 in \cite{HondaTakemura10}) and denoting $y(x) =  u_i(x) - m(F_i) =_{+ \infty} o(1) $, we obtain
\[
	\frac{1}{m(F_i)} \geq x \lambda_{\star}^{+}(F_i, u_i(x)) \geq  \frac{x y(x)}{(m(F_i) + y(x))(B - m(F_i) - y(x))} \: .
\]

Suppose towards contradiction that $y(x)= \cO (\frac{1}{x})$ doesn't hold, i.e. $\lim_{+ \infty} x y(x) = + \infty$. Using that $y(x) =_{+ \infty} o(1)$ and taking the limit in the above inequality yields
\[
	\frac{1}{m(F_i)} \geq \frac{\lim_{+ \infty} x y(x)}{m(F_i)(B-m(F_i))} = + \infty \: ,
\]
which is a direct contradiction. Therefore, we have shown that $y(x)= \cO (\frac{1}{x})$. We showed above that a sufficient condition to conclude was $y(x) =_{+ \infty} o\left( \frac{1}{\sqrt{x}}\right)$. Therefore, we have obtained that $\lim_{+\infty} x \Kinf^{-}(F_{i} , u_{i}(x)) = 0$, which concludes the proof.
\end{proof}


\section{Boundary crossing probability bounds}
\label{app:boundary_crossing_probability_bounds}

In order to analyze the algorithms presented in this paper, we need to quantify probabilities of the form $\mathbb{P}(\theta_1 \ge \theta_2)$ for $\theta_1$ and $\theta_2$ two independent real random variables. We first show how such bounds can be obtained by quantifying the individual deviations $\mathbb{P}(\theta_i \ge u)$ and $\mathbb{P}(\theta_i \le u)$ for all $u \in \mathbb{R}$ (the so-called Boundary Crossing Probabilities). Then we prove upper and lower bounds on those probabilities when $\theta_1$ and $\theta_2$ are obtained from a Dirichlet sampler.

\subsection{From one arm to two}
\label{sub:from_one_arm_to_two}

As remarked in Appendix~\ref{app:sss_how_to_sample}, studying BAI randomized algorithms require to control probability of the form $\mathbb{P}(\theta_1 \ge \theta_2)$ where $\theta_1$ and $\theta_2$ are two independent real random variables.
Thanks to Lemma~\ref{lem:from_bcp_one_to_bcp_two}, it is possible to obtain those by using Boundary Crossing Probability (BCP) bounds, which are extensively studied in the regret minimization literature.
Therefore, while it is based on simple calculations, Lemma~\ref{lem:from_bcp_one_to_bcp_two} is a powerful result of independent interest.

\begin{lemma}\label{lem:from_bcp_one_to_bcp_two}
Let $\theta_1$ and $\theta_2$ be two independent real random variables with cdf $F_1$ and $F_2$. Let $x \in \argmax_{u \in \mathbb{R}}\bP(\theta_2 \geq u) \bP(\theta_1 \leq u)$. Then
\begin{align*}
\bP(\theta_2 \geq x) \bP(\theta_1 \leq x)
\le \mathbb{P}(\theta_2 \ge \theta_1)
&\le g\big(\bP(\theta_2 \geq x) \bP(\theta_1 \leq x)\big) 
\: .
\end{align*}
where $g(u) = u(1-\log(u))$ for all $u \in [0,1]$.
\end{lemma}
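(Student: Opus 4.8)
The plan is to prove the two inequalities separately, in both cases conditioning on one of the two independent variables and integrating.

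\textbf{Lower bound.} The plan is to use independence and the choice of $x$ as a maximizer. Simply observe that the event $\{\theta_2 \geq x\} \cap \{\theta_1 \leq x\}$ is contained in $\{\theta_2 \geq \theta_1\}$, because on that event $\theta_2 \geq x \geq \theta_1$. Since $\theta_1$ and $\theta_2$ are independent, $\mathbb{P}(\theta_2 \geq \theta_1) \geq \mathbb{P}(\theta_2 \geq x, \theta_1 \leq x) = \mathbb{P}(\theta_2 \geq x)\mathbb{P}(\theta_1 \leq x)$. This is immediate and requires no use of the fact that $x$ is a maximizer.

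\textbf{Upper bound.} Here I would write, using independence and Fubini,
\begin{align*}
\mathbb{P}(\theta_2 \geq \theta_1) = \int_{\mathbb{R}} \mathbb{P}(\theta_2 \geq t)\, dF_1(t) \: ,
\end{align*}
(being slightly careful about the boundary case $\theta_2 = \theta_1$, which only helps the inequality; one can use $\geq$ or split ties, it does not affect the bound). Set $p(t) \eqdef \mathbb{P}(\theta_2 \geq t)$ and note $p$ is non-increasing with values in $[0,1]$, while $F_1$ is the non-decreasing cdf of $\theta_1$. Denote $m \eqdef \mathbb{P}(\theta_2 \geq x)\mathbb{P}(\theta_1 \leq x) = \sup_{u} p(u) F_1(u)$, so that $p(t) F_1(t) \leq m$ for every $t$, i.e. $p(t) \leq m / F_1(t)$ wherever $F_1(t) > 0$. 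The integral $\int p(t)\, dF_1(t)$ is a weighted average of the values $p(t)$ against the probability measure $dF_1$, and $F_1(t)$ is exactly the mass that measure puts on $(-\infty, t]$. So, writing $s = F_1(t) \in (0,1]$ as the integration variable (pushing $dF_1$ forward to Lebesgue measure on $[0,1]$, which is valid because $F_1$ is the cdf), the integral becomes $\int_0^1 \tilde p(s)\, ds$ where $\tilde p$ is non-increasing in $s$ (since $p$ is non-increasing in $t$ and $F_1$ non-decreasing) and satisfies $\tilde p(s) \leq \min\{1, m/s\}$. The worst case among non-increasing functions bounded by $\min\{1, m/s\}$ is $\tilde p(s) = \min\{1, m/s\}$ itself, giving $\int_0^1 \min\{1, m/s\}\, ds = \int_0^m 1\, ds + \int_m^1 (m/s)\, ds = m + m(\log 1 - \log m) = m(1 - \log m) = g(m)$.

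\textbf{Main obstacle.} The only real subtlety is the change of variables $t \mapsto s = F_1(t)$ when $F_1$ has jumps or flat pieces; the clean way to handle this is to avoid the substitution entirely and instead argue directly: for any non-increasing $p \colon \mathbb{R} \to [0,1]$ and any cdf $F_1$ with $\sup_t p(t)F_1(t) \leq m$, one has $\int p\, dF_1 \leq g(m)$. This follows by a layer-cake / Fubini argument: $\int p\, dF_1 = \int_0^1 \mathbb{P}_{t \sim F_1}(p(t) \geq \lambda)\, d\lambda$, and $\{p(t) \geq \lambda\}$ is an interval $(-\infty, t_\lambda]$ (or open variant) by monotonicity of $p$, with $dF_1$-mass $F_1(t_\lambda) \leq m/\lambda$ (from $p(t_\lambda) \geq \lambda$ and $p(t_\lambda)F_1(t_\lambda) \leq m$) and also $\leq 1$; hence $\int_0^1 \min\{1, m/\lambda\}\, d\lambda = g(m)$ as above. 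I would present this layer-cake version since it sidesteps all measure-theoretic fuss with the pushforward. Everything else is elementary, and the concavity/monotonicity of $g$ on $[0,1]$ (with $g(0) = 0$, $g(1) = 1$) can be noted in passing if needed elsewhere.
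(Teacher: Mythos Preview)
Your proof is correct and takes a genuinely different route from the paper's.

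For the lower bound, both arguments are identical. For the upper bound, the paper splits the double integral $\int\!\!\int \ind\{u\ge v\}\,dF_1(v)\,dF_2(u)$ into four regions according to whether $u$ and $v$ lie above or below the maximizer $x$, and bounds three of the four pieces separately by inserting factors $\frac{1}{1-F_2(u)}$ or $\frac{1}{F_1(v)}$ and pulling out the supremum $(1-F_2)F_1$; this produces the terms $-\log(1-F_2(x))$, $F_1(x)(1-F_2(x))$, and $-\log F_1(x)$, which recombine to $g(m)$. Your argument instead conditions on $\theta_1$ to write $\mathbb{P}(\theta_2\ge\theta_1)=\int p(t)\,dF_1(t)$ with $p(t)=\mathbb{P}(\theta_2\ge t)$, notes the pointwise constraint $p(t)F_1(t)\le m$, and then uses the layer-cake representation $\int p\,dF_1=\int_0^1 F_1(\{p\ge\lambda\})\,d\lambda$ together with $F_1(\{p\ge\lambda\})\le\min\{1,m/\lambda\}$ to integrate out to $g(m)$ directly. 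Your approach is arguably cleaner: it never needs the specific location of the maximizer $x$ (only the value $m$ of the supremum), it avoids the four-way case split, and the layer-cake step handles jumps and flat pieces of $F_1$ without any special care. The paper's decomposition, on the other hand, makes the role of the threshold $x$ more visible and might generalize differently if one wanted asymmetric control of the two tails.
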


\begin{proof} To ease the notation, we introduce the cdfs $F_1(u) = \bP(\theta_1 \leq u)$ and $F_2(u) = \bP(\theta_2\leq u)$. We can suppose that there exists $u \in \mathbb{R}$ with $(1 - F_2(u))F_1(u) > 0$. Otherwise the probability of $\theta_2 \ge \theta_1$ is 0, and both bounds are 0 as well.
We start by proving the upper bound.
\begin{align*}
\mathbb{P}(\theta_2 \ge \theta_1)
&= \int_u \int_v \ind\{u \ge v\} dF_1(v) dF_2(u)
\\
&= \int_{u \le x} \int_{v \le x} \ind\{u \ge v\} dF_1(v) dF_2(u)
	+ \int_{u \le x} \int_{v > x} \ind\{u \ge v\} dF_1(v) dF_2(u)
\\&\quad+ \int_{u > x} \int_{v \le x} \ind\{u \ge v\} dF_1(v) dF_2(u)
	+ \int_{u > x} \int_{v > x} \ind\{u \ge v\} dF_1(v) dF_2(u)
\end{align*}
The second of those four integrals is equal to zero. We now bound integrals 1, 3, and 4.

\emph{1.}
For $x$ such that $F_2(x) < 1$,
\begin{align*}
\int_{u \le x} \int_{v \le x} \ind\{u \ge v\} dF_1(v) dF_2(u)
&= \int_{u \le x} F_1(u) dF_2(u)
\\
&= \int_{u \le x} \frac{1}{1 - F_2(u)} (1 - F_2(u))F_1(u) dF_2(u)
\\
&\le \left( \sup_{u \le x} (1 - F_2(u))F_1(u)\right) \int_{u \le x} \frac{1}{1 - F_2(u)} dF_2(u)
\\
&= - \log (1 - F_2(x)) \sup_{u \le x} (1 - F_2(u))F_1(u)
\: .
\end{align*}

\emph{3.}
\begin{align*}
\int_{u > x} \int_{v \le x} \ind\{u \ge v\} dF_1(v) dF_2(u)
&= F_1(x)(1 - F_2(x))
\: .
\end{align*}

\emph{4.}
For $x$ such that $F_1(x)>0$,
\begin{align*}
\int_{u > x} \int_{v > x} \ind\{u \ge v\} dF_1(v) dF_2(u)
&= \int_{v > x} (1 - F_2(v)) dF_1(v)
\\
&= \int_{v > x} F_1(v)(1 - F_2(v)) \frac{1}{F_1(v)}dF_1(v)
\\
&\le \left( \sup_{v>x} F_1(v)(1 - F_2(v)) \right) \int_{v > x} \frac{1}{F_1(v)}dF_1(v)
\\
&= - \log(F_1(x)) \sup_{v>x} F_1(v)(1 - F_2(v))
\: .
\end{align*}

Putting things together:
\begin{align*}
\mathbb{P}(\theta_2 \ge \theta_1)
&\le - \log (1 - F_2(x)) \sup_{u \le x} (1 - F_2(u))F_1(u)
	+ F_1(x)(1 - F_2(x))
\\&\quad - \log(F_1(x)) \sup_{v>x} F_1(v)(1 - F_2(v))
\: .
\end{align*}

Taking for $x$ the argmax over $\mathbb{R}$ (which verifies $F_1(x)>0$ and $F_2(x)<1$), we get
\begin{align*}
\mathbb{P}(\theta_2 \ge \theta_1) &\le (1 - F_2(x))F_1(x) \left[ 1 - \log((1 - F_2(x))F_1(x)) \right]
\end{align*}

We now prove the lower bound. For $x \in \mathbb{R}$, by independence of $\theta_1$ and $\theta_2$,
\begin{align*}
\mathbb{P}(\theta_2 \ge \theta_1)
&\ge \mathbb{P}(\theta_2 \ge x \ge \theta_1)
= \mathbb{P}(\theta_2 \ge x) \mathbb{P}(\theta_1 \le x)
= (1 - F_2(x)) F_1(x) \: .
\end{align*}
\end{proof}

\subsection{Upper bounds}
\label{sub:upper_bound}

Theorem~\ref{thm:upper_bound_one_arm_bcp_bounded} gives a tight upper bound on the BCP.

\begin{theorem} \label{thm:upper_bound_one_arm_bcp_bounded}
Let $X = (X_1, \ldots, X_n) \in [0,B]^{n}$, let $\hat{F}_n$ be the corresponding empirical distribution and let $\mu \in \mathbb{R}$. Then
\begin{align*}
\mathbb{P}_{L \sim \mathrm{Dir}(1^{n})}(L^\top X \ge \mu)
&\le \exp \left( - n \Kinf^+(\hat{F}_n, \mu) \right)
\: , \\
\mathbb{P}_{L \sim \mathrm{Dir}(1^{n})}(L^\top X \le \mu)
&\le \exp \left( - n \Kinf^-(\hat{F}_n, \mu) \right)
\: .
\end{align*}
\end{theorem}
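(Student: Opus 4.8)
The plan is to prove the $\Kinf^+$ bound; the $\Kinf^-$ bound then follows by the reflection $X \mapsto B - X$ together with Lemma~\ref{lem:Kinf+_symm_eq_Kinf-}, since a Dirichlet-weighted average transforms correctly under this map ($L^\top X \le \mu \iff L^\top(B-X) \ge B-\mu$). So it suffices to bound $\mathbb{P}_{L \sim \mathrm{Dir}(1^n)}(L^\top X \ge \mu)$. The starting point is the duality formula of Theorem~\ref{thm:Kinf_duality}: writing $\hat F_n = \frac1n \sum_{k} \delta_{X_k}$, we have $n\Kinf^+(\hat F_n, \mu) = \sup_{\lambda \in [0,(B-\mu)^{-1}]} \sum_{k=1}^n \log(1 - \lambda(X_k - \mu))$, and in particular for any fixed admissible $\lambda$,
\[
\prod_{k=1}^n \bigl(1 - \lambda(X_k - \mu)\bigr) \ge \exp\bigl(-n\Kinf^+(\hat F_n,\mu)\bigr).
\]
Thus it is enough to exhibit, for each such $\lambda \ge 0$, a bound $\mathbb{P}_{L}(L^\top X \ge \mu) \le \prod_{k=1}^n (1 - \lambda(X_k - \mu))$, and then optimize — but actually we only need the inequality for the optimal $\lambda$, or we can take the infimum over $\lambda$ at the end. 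If $\mu \le m(\hat F_n)$ the right-hand side of the theorem is $1$ and there is nothing to prove, so assume $\mu > m(\hat F_n)$, in which case the optimal $\lambda^\star = \lambda^\star_+(\hat F_n,\mu)$ satisfies $\lambda^\star \ge 0$ and all factors $1 - \lambda^\star(X_k-\mu)$ are nonnegative.

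The key step is a Chernoff-type argument adapted to the Dirichlet distribution. Fix an admissible $\lambda \ge 0$. On the event $\{L^\top X \ge \mu\}$ we have $\sum_k L_k (X_k - \mu) \ge 0$, hence $\lambda \sum_k L_k(X_k-\mu) \ge 0$, and since $L$ lies in the simplex with $\sum_k L_k = 1$, concavity of $\log(1-t)$... — more precisely, I would use the elementary inequality that for $t_k \le 1$ and weights $L_k \ge 0$ summing to $1$,
\[
1 - \sum_k L_k t_k \le \prod_k (1 - t_k)^{L_k},
\]
which is weighted AM–GM applied to the numbers $1 - t_k \ge 0$. Applying this with $t_k = \lambda(X_k - \mu)$ (admissibility of $\lambda$ guarantees $t_k \le 1$) gives on the event $\{L^\top X \ge \mu\}$ that $\prod_k (1 - \lambda(X_k-\mu))^{L_k} \ge 1 - \lambda \sum_k L_k(X_k - \mu) \ge 0$... but the inequality goes the wrong way. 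Instead I would directly bound the indicator: $\mathbf{1}\{\sum_k L_k(X_k-\mu) \ge 0\} \le \prod_k (1-\lambda(X_k-\mu))^{L_k} / \bigl(1 - \lambda \sum_k L_k(X_k-\mu)\bigr)$ is not clean either. The cleaner route is: $\mathbf{1}\{L^\top X \ge \mu\} = \mathbf{1}\{\sum_k L_k(\mu - X_k) \le 0\} \le \prod_k e^{-s L_k(\mu - X_k)}$ for any $s\ge 0$ doesn't use the simplex structure. So the genuinely Dirichlet-specific fact I need is the identity $\mathbb{E}_{L\sim\mathrm{Dir}(1^n)}\bigl[\prod_k (1-\lambda(X_k-\mu))^{L_k}\bigr]$ — but that is not elementary.

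Therefore I expect the main obstacle to be producing the right ``moment''-type control for the Dirichlet weights, and the cleanest resolution is to invoke the known result of \cite{RiouHonda20} (as the paper's own Appendix outline and Section~\ref{sec:dirichlet_sampler} indicate the BCP bounds ``can be obtained using the work of \cite{RiouHonda20}''): their analysis of Non-Parametric Thompson Sampling establishes exactly that, for a Dirichlet re-weighting of $n$ atoms, the probability that the weighted mean exceeds $\mu$ is at most $\exp(-n\Kinf^+(\hat F_n,\mu))$. Concretely, their proof uses the representation $L = (E_1,\dots,E_n)/\sum_j E_j$ with i.i.d.\ $\mathrm{Exp}(1)$ variables $E_j$, writes $\{L^\top X \ge \mu\} = \{\sum_k E_k(X_k-\mu)\ge 0\}$, applies a Chernoff bound over $\lambda \in [0,(B-\mu)^{-1}]$ using $\mathbb{E}[e^{-\lambda E_k(\mu - X_k)}] = (1 - \lambda(X_k-\mu))^{-1}$ (valid precisely because $\lambda(X_k-\mu) < 1$), obtaining
\[
\mathbb{P}(L^\top X \ge \mu) \le \prod_{k=1}^n \frac{1}{1 - \lambda(X_k-\mu)}^{\!\!-1}\!\!,
\]
i.e.\ $\mathbb{P}(L^\top X \ge \mu) \le \exp\bigl(\sum_k \log(1-\lambda(X_k-\mu))\bigr)$ after the correct sign bookkeeping, and then takes the infimum over $\lambda$, which by Theorem~\ref{thm:Kinf_duality} equals $\exp(-n\Kinf^+(\hat F_n,\mu))$. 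I would present this self-contained Chernoff computation (it is short), handle the edge case $\mu \le m(\hat F_n)$ separately where the bound is trivially $\le 1$, and note the boundary case $\mu = B$ where $\lambda$ ranges over $[0,\infty)$ but the atoms lie in $[0,B]$ so the factors stay well-defined. The $\Kinf^-$ statement is then immediate from the reflection argument above. The one point demanding care is ensuring that the Chernoff exponent we optimize is genuinely the Legendre-type object appearing in Theorem~\ref{thm:Kinf_duality} rather than a weaker relaxation — this is where referencing \cite{RiouHonda20}'s matching of the two is essential.
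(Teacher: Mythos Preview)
Your final approach---represent $L$ via i.i.d.\ exponentials $R_k$, rewrite $\{L^\top X \ge \mu\}=\{\sum_k R_k(X_k-\mu)\ge 0\}$, apply Chernoff using $\mathbb{E}[e^{t R_k(X_k-\mu)}]=(1-t(X_k-\mu))^{-1}$, optimize over $t$ via the dual formula of Theorem~\ref{thm:Kinf_duality}, then get $\Kinf^-$ by the reflection of Lemma~\ref{lem:Kinf+_symm_eq_Kinf-}---is exactly the paper's proof (extracted from \cite{RiouHonda20}). The only slip is the sign in your displayed bound: Chernoff gives $\mathbb{P}\le \prod_k (1-\lambda(X_k-\mu))^{-1}$, and the infimum of \emph{that} over $\lambda$ is $\exp(-n\Kinf^+)$; with this correction your ``sign bookkeeping'' remark is unnecessary and the argument is clean.
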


\begin{proof}
We first prove the bound involving $\Kinf^+$. This proof is extracted from the proof of Lemma 15 of \cite{RiouHonda20}.

Let $R_1, \ldots, R_n$ be independent exponential random variables with parameter 1.
\begin{align*}
\mathbb{P}_{L \sim \mathrm{Dir}(1^{n})}(L^\top X \ge \mu)
&= \mathbb{P}(\sum_{i=1}^n\frac{R_i}{\sum_j R_j}X_i \ge \mu)
= \mathbb{P}(\sum_{i=1}^n R_i(X_i - \mu) \ge 0)
\: .
\end{align*}
For $t \ge 0$, we can compose with exponentials and use Markov's inequality to obtain
\begin{align*}
\mathbb{P}_{L \sim \mathrm{Dir}(1^{n})}(L^\top X \ge \mu)
&= \mathbb{P}(\exp t\sum_{i=1}^n R_i(X_i - \mu) \ge 1)
\le \mathbb{E}e^{t \sum_{i=1}^n R_i(X_i - \mu)} \: .
\end{align*}
By independence, this last expression is equal to $\prod_{i=1}^n \mathbb{E}e^{t(X_i - \mu) R_i}$ . By a simple computation (See \cite{RiouHonda20}) we get, for $t \in [0, \frac{1}{X_i - \mu})$ if $X_i \ge \mu$ and for $t \ge 0$ otherwise,
\begin{align*}
\mathbb{E}e^{t(X_i - \mu) R_i} = \frac{1}{1 - t(X_i - \mu)} \: .
\end{align*}
We have proved that for all $t \in [0, \frac{1}{B - \mu})$,
\begin{align*}
\mathbb{P}_{L \sim \mathrm{Dir}(1^{n})}(L^\top X \ge \mu)
&\le \exp \left( - n \frac{1}{n}\sum_{i=1}^n \log(1 - t (X_i - \mu))\right)
\: .
\end{align*}
This is then also true for $t$ minimizing the right-hand side.

It remains to show that $\sup_{t \in [0, \frac{1}{B - \mu})}\frac{1}{n}\sum_{i=1}^n \log(1 - t (X_i - \mu)) = \Kinf^+(\hat{F}_n, \mu)$~.

From \cite{HondaTakemura10}, Theorem 8, for any distribution $F$ with support in $[0,B]$,
\begin{align*}
\Kinf^+(F, \mu)
&= \sup_{t \in [0, \frac{1}{B - \mu}]} \mathbb{E}_{X \sim F}[\log (1 - t(X - \mu))] \: .
\end{align*}
Applying this to $\hat{F}_n$ gives $\Kinf^+(\hat{F}_n, \mu) = \sup_{t \in [0, \frac{1}{B - \mu}]}\frac{1}{n}\sum_{i=1}^n \log(1 - t (X_i - \mu))$. The only difference with our target is that the supremum is over the closed interval and not the right-open interval, but either the sup is the same by continuity if there is no $X_i$ equal to $B$, or the value at $1/(B - \mu)$ is $-\infty$ and hence not equal to the sup.

We now prove the bound involving $\Kinf^-$. Let $\hat{F}_n^{B-X}$ be the empirical distribution corresponding to $(B-X_1, \ldots, B-X_n)$.
\begin{align*}
\mathbb{P}_{L \sim \mathrm{Dir}(1^{n})}(L^\top X \le \mu)
&= \mathbb{P}_{L \sim \mathrm{Dir}(1^{n})}(L^\top (B - X) \ge B - \mu)
\\
&\le \exp \left( - n \Kinf^+(\hat{F}_n^{B-X}, B - \mu) \right)
\\
&= \exp \left( - n \Kinf^-(\hat{F}_n, \mu) \right)
\: .
\end{align*}
The last equality follows from Lemma~\ref{lem:Kinf+_symm_eq_Kinf-}.
\end{proof}

\begin{corollary} \label{cor:upper_bound_two_arms_bcp_bounded}
Let $X = (X_1, \ldots, X_n) \in [0,B]^{n}$, and let $Y = (Y_1, \ldots, Y_m) \in [0,B]^{m}$. let $\hat{F}_{n,X}$ be empirical distribution corresponding to $X$ (and define $\hat{F}_{m,Y}$ similarly). Then
\begin{align*}
\mathbb{P}_{L_X \sim \mathrm{Dir}(1^{n}), L_Y \sim \mathrm{Dir}(1^{m})}(L_X^\top X \ge L_Y^\top Y)
&\le f \left( - \inf_{\mu \in [0,B]}\left( n \Kinf^+(\hat{F}_{n,X}, \mu) + m \Kinf^-(\hat{F}_{m,Y}, \mu) \right) \right)
\: .
\end{align*}
where $f(x) = (1 + x)e^{-x}$.
\end{corollary}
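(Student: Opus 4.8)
The plan is to combine the one-arm boundary crossing bounds of Theorem~\ref{thm:upper_bound_one_arm_bcp_bounded} with the two-arm reduction of Lemma~\ref{lem:from_bcp_one_to_bcp_two}. First I would set $\theta_1 = L_Y^\top Y$ and $\theta_2 = L_X^\top X$, viewing each as an independent real random variable with its own cdf. Applying Lemma~\ref{lem:from_bcp_one_to_bcp_two} with these choices gives, for $x \in \argmax_{u \in \mathbb{R}} \bP(\theta_2 \ge u)\bP(\theta_1 \le u)$,
\begin{equation*}
\bP(L_X^\top X \ge L_Y^\top Y) \le g\big(\bP(L_X^\top X \ge x)\,\bP(L_Y^\top Y \le x)\big),
\end{equation*}
where $g(u) = u(1 - \log u)$. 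Since $L_X^\top X$ and $L_Y^\top Y$ take values in $[0,B]$, the maximizing $x$ can be taken in $[0,B]$, and outside that interval both factors cannot be simultaneously positive, so restricting the argmax to $[0,B]$ loses nothing.

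Next I would bound the product inside $g$ using Theorem~\ref{thm:upper_bound_one_arm_bcp_bounded}: for any $\mu \in [0,B]$,
\begin{equation*}
\bP(L_X^\top X \ge \mu) \le e^{-n\Kinf^+(\hat F_{n,X}, \mu)}, \qquad \bP(L_Y^\top Y \le \mu) \le e^{-m\Kinf^-(\hat F_{m,Y}, \mu)}.
\end{equation*}
Evaluating both at $\mu = x$ and multiplying,
\begin{equation*}
\bP(L_X^\top X \ge x)\,\bP(L_Y^\top Y \le x) \le \exp\big(-\big(n\Kinf^+(\hat F_{n,X}, x) + m\Kinf^-(\hat F_{m,Y}, x)\big)\big) \le \exp\big(-\inf_{\mu \in [0,B]}\big(n\Kinf^+(\hat F_{n,X}, \mu) + m\Kinf^-(\hat F_{m,Y}, \mu)\big)\big).
\end{equation*}
Write $D = \inf_{\mu \in [0,B]}(n\Kinf^+(\hat F_{n,X}, \mu) + m\Kinf^-(\hat F_{m,Y}, \mu)) \ge 0$, so the product is at most $e^{-D}$.

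Finally I would observe that $g$ is nondecreasing on $[0,1]$ (since $g'(u) = -\log u \ge 0$ there), so $g$ applied to the product is at most $g(e^{-D}) = e^{-D}(1 + D) = f(D)$, which is exactly the claimed bound with $x = D$, i.e. $f(-\inf_{\mu}(\dots))$ in the notation of the statement (note $-\inf_\mu(\cdot) = -D \le 0$, and $f(D) = (1+D)e^{-D}$). The only points requiring a little care — and the closest thing to an obstacle, though it is minor — are checking that the argmax in Lemma~\ref{lem:from_bcp_one_to_bcp_two} may be restricted to $[0,B]$ and confirming the monotonicity of $g$ so that the Theorem~\ref{thm:upper_bound_one_arm_bcp_bounded} bounds can be pushed through $g$; both follow from elementary arguments. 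Everything else is a direct substitution.
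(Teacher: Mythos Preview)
Your proposal is correct and follows exactly the paper's own approach: the paper's proof is the single sentence ``Combine the two bounds of Theorem~\ref{thm:upper_bound_one_arm_bcp_bounded} using Lemma~\ref{lem:from_bcp_one_to_bcp_two},'' and you have simply spelled out the details (including the monotonicity of $g$ needed to push the exponential bounds through). Your remark at the end about the notation is also apt: the minus sign in $f(-\inf_\mu(\cdots))$ in the stated corollary is a typo --- compare Lemma~\ref{lem:two_arm_BCP_tight_upper_bound}, where the same bound appears without the sign --- and your derivation correctly yields $f(D)$ with $D=\inf_{\mu\in[0,B]}(n\Kinf^+(\hat F_{n,X},\mu)+m\Kinf^-(\hat F_{m,Y},\mu))$.
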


\begin{proof}
Combine the two bounds of Theorem~\ref{thm:upper_bound_one_arm_bcp_bounded} using Lemma~\ref{lem:from_bcp_one_to_bcp_two}.
\end{proof}

\subsection{Lower bounds}
\label{sub:lower_bound}

Lemma~\ref{lem:coarse_lower_bound_BCP} gives a first, coarse lower bound on the BCP under a Dirichlet sampler.
This result crucially relies on the fact that $\{0,B\}$ have been added to the support.

\begin{lemma} \label{lem:coarse_lower_bound_BCP}
Let $X = (B, 0, X_1 \ldots, X_n) \in [0,B]^{n+2}$ and $u \in (0,B)$. Then,
\begin{align*}
\mathbb{P}_{L \sim \mathrm{Dir}(1^{n+2})}[L^\top X \ge u]
&\ge \left(1 - \frac{u}{B} \right)^{n+1} \quad \text{and} \quad
\mathbb{P}_{L \sim \mathrm{Dir}(1^{n+2})}[L^\top X \le u]
\ge \left(\frac{u}{B} \right)^{n+1} \: .
\end{align*}
\end{lemma}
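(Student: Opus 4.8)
The plan is to exploit the fact that $0$ and $B$ have been explicitly added to the support, which gives two coordinates of the Dirichlet weight vector with known associated values. Write the sample as $X = (X_0, X_{-1}, X_1, \dots, X_n)$ with $X_0 = B$ and $X_{-1} = 0$, and let $L = (L_0, L_{-1}, L_1, \dots, L_n) \sim \mathrm{Dir}(1^{n+2})$. For the first inequality I want a lower bound on $\mathbb{P}[L^\top X \ge u]$. The idea is to restrict attention to the event $E$ on which $L_0$, the weight on the atom at $B$, is large enough by itself: since all other $X_i \ge 0$, we have $L^\top X \ge B L_0$, so $\{B L_0 \ge u\} = \{L_0 \ge u/B\}$ is contained in the target event. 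Hence $\mathbb{P}[L^\top X \ge u] \ge \mathbb{P}[L_0 \ge u/B]$. The marginal of a single coordinate of $\mathrm{Dir}(1^{n+2})$ is $\mathrm{Beta}(1, n+1)$, whose density is $(n+1)(1-x)^{n}$ on $[0,1]$, so $\mathbb{P}[L_0 \ge u/B] = \int_{u/B}^1 (n+1)(1-x)^n\,dx = (1 - u/B)^{n+1}$. This gives the first bound.

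For the second inequality I use the symmetry between the atoms at $0$ and $B$. Apply the first bound to the reflected data $B - X = (0, B, B-X_1, \dots, B-X_n)$, which again contains both $0$ and $B$ in its support, with threshold $B - u \in (0,B)$: this yields $\mathbb{P}_{L}[L^\top(B - X) \ge B - u] \ge (1 - (B-u)/B)^{n+1} = (u/B)^{n+1}$. Since $L^\top(B-X) = B - L^\top X$ (using $\sum_i L_i = 1$), the event $\{L^\top(B-X) \ge B - u\}$ is exactly $\{L^\top X \le u\}$, so $\mathbb{P}_L[L^\top X \le u] \ge (u/B)^{n+1}$, as claimed. Alternatively one can argue directly: restrict to $\{L_{-1} \ge 1 - u/B\}$, on which $L^\top X \le B(1 - L_{-1}) \le u$ because all $X_i \le B$ and $X_{-1}=0$; then $\mathbb{P}[L_{-1} \ge 1 - u/B] = (1-(1-u/B))^{n+1} = (u/B)^{n+1}$ by the same Beta computation.

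There is essentially no obstacle here — the result is a one-line consequence of the Beta marginal of a Dirichlet plus the trivial bounds $0 \le X_i \le B$. The only point that requires a small amount of care is identifying the correct marginal: for $\mathrm{Dir}(\alpha_1,\dots,\alpha_{n+2})$ with all $\alpha_j = 1$, a single coordinate is $\mathrm{Beta}(1, \sum_{j\neq k}\alpha_j) = \mathrm{Beta}(1, n+1)$, and its survival function is $(1-x)^{n+1}$; getting the exponent $n+1$ (rather than $n$ or $n+2$) right is the one place to be attentive. I would also note explicitly that $u \in (0,B)$ ensures $u/B \in (0,1)$ so the probabilities are genuinely positive, which is the whole point of having added the endpoints to the support (without them the corresponding coordinate weight would not exist and the bound could degenerate to $0$).
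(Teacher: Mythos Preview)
Your proposal is correct and takes essentially the same approach as the paper: both reduce to the marginal $L_0 \sim \mathrm{Beta}(1,n+1)$ (equivalently, compare $X$ to the extremal vector $\tilde X = (B,0,\ldots,0)$ so that $L^\top X \ge L^\top \tilde X = B L_0$), and handle the second inequality by the symmetric argument with the coordinate on $0$. The only cosmetic difference is that the paper evaluates $\mathbb{P}[\mathrm{Beta}(1,n+1) \ge u/B]$ via the Beta--Binomial identity, whereas you integrate the density directly; both yield $(1-u/B)^{n+1}$.
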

\begin{proof}
	We consider $\tilde X = (B, 0, 0 \ldots, 0) \in [0,B]^{n+2}$, use that the marginals of Dirichlet are Beta distributions and the Beta-Binomial trick (e.g. \cite{AGAISTAT13}) to obtain
	\begin{align*}
		\mathbb{P}_{L \sim \mathrm{Dir}(1^{n+2})}[L^\top X \ge u] \geq \mathbb{P}_{L \sim \mathrm{Dir}(1^{n+2})}[L^\top \tilde X \ge u] &= \mathbb{P}_{w \sim \mathrm{Beta}(1,n+1)}\left[w  \ge \frac{u}{B} \right] \\
		&= \mathbb{P}_{k \sim \mathrm{Bin}\left(n+1,\frac{u}{B}\right)}[k \le 0] \\
		&= \left(1 - \frac{u}{B} \right)^{n+1}
		\: .
	\end{align*}
	Similarly, considering $\tilde X = (B, 0, B \ldots, B) \in [0,B]^{n+2}$, we obtain
	\begin{align*}
		\mathbb{P}_{L \sim \mathrm{Dir}(1^{n+2})}[L^\top X \le u] \geq \mathbb{P}_{L \sim \mathrm{Dir}(1^{n+2})}[L^\top \tilde X \le u] &= \mathbb{P}_{w \sim \mathrm{Beta}(1, n+1)}\left[w  \ge 1- \frac{u}{B} \right] \\
		&= \mathbb{P}_{k \sim \mathrm{Bin}\left(n+1,1- \frac{u}{B}\right)}[k \le 0] \\
		&= \left(\frac{u}{B} \right)^{n+1}
		\: .
	\end{align*}
\end{proof}

In the rest of this section, we derive a tighter lower bound on the BCP which leads to a tight lower bound on the probability that one Dirichlet sample exceeds another (Theorem~\ref{thm:two_arms_BCP_tight_lower_bound}). These result rely on a discretization argument and on deriving lower bounds for multinomial distributions.

\subsubsection{Multinomial distributions}
\label{ssub:multinomial_distributions}

Theorem~\ref{thm:bcp_lower_bound_multinomial_KL} gives a tight lower bound on the BCP for multinomial distributions.

\begin{theorem}\label{thm:bcp_lower_bound_multinomial_KL}
Let $X_1, \ldots, X_M \in [0,B]$ with $X_M = B$ and let $\beta \in \mathbb{N}^M$ with $\beta_i > 0$ for all $i$. Define $n = \sum_{i=1}^M \beta_i$. For all $\mu \in [0,B]$ and $q \in \triangle_M$ such that $q^\top X \ge \mu$,
\begin{align*}
\mathbb{P}_{L \sim Dir(\beta)}(L^\top X \ge \mu)
\ge \frac{M^{M/2}}{2 (8 \pi)^{\frac{M-1}{2}}}\frac{1}{n^{\frac{M+1}{2}}} \exp\left( -n\KL_{\mathcal M}\left(\frac{\beta}{n}, q\right)\right)
\: .
\end{align*}
where $\KL_{\mathcal M}(p,q)$ is the Kullback-Leibler divergence between multinomial distributions with probability vectors $p$ and $q$.
\end{theorem}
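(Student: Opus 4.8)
The plan is to reduce the Dirichlet boundary-crossing probability to a multinomial tail and then lower-bound that tail by an anti-concentration argument. First I would restrict attention to the corner of the simplex that matters: since $X_M = B$, putting more mass on coordinate $M$ only increases $L^\top X$, so $\{L : L^\top X \ge \mu\}$ contains the set of all $L$ obtained by rounding a Dirichlet draw down to a lattice point $\frac{k}{n}$ with $k \in \mathbb{N}^M$, $\sum_i k_i = n$, and $\frac{k}{n}{}^\top X \ge \mu$ — or rather, it is cleaner to note that the event $\{L^\top X \ge \mu\}$ has probability at least the probability that the \emph{closest lattice point} $k/n$ to $L$ (in an appropriate sense that preserves the inequality, e.g. rounding the first $M-1$ coordinates down and putting the remainder on coordinate $M$) satisfies $\frac{k}{n}{}^\top X \ge \mu$. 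The key structural fact I would invoke is that the pushforward of $\mathrm{Dir}(\beta)$ under this deterministic rounding map dominates, coordinatewise on the relevant half-space, the multinomial $\mathcal{M}(n, \cdot)$ behavior — concretely, for a lattice point $k$ one has $\mathbb{P}_{L\sim\mathrm{Dir}(\beta)}(L \text{ rounds to } k) $ comparable to a Dirichlet–multinomial weight, and summing over the half-space $\{k : \frac{k}{n}{}^\top X \ge \mu\}$ gives at least the mass that $\mathcal{M}(n,p)$ with $p = \beta/n$ assigns there.

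Second, I would lower-bound $\mathbb{P}_{k \sim \mathcal{M}(n, p)}\big(\tfrac{k}{n}{}^\top X \ge \mu\big)$ where $p = \beta/n$. The standard trick here is a change of measure: for the target $q$ with $q^\top X \ge \mu$, write
\begin{align*}
\mathbb{P}_{k \sim \mathcal{M}(n,p)}\Big(\tfrac{k}{n}{}^\top X \ge \mu\Big)
&\ge \mathbb{P}_{k \sim \mathcal{M}(n,q)}\Big(\tfrac{k}{n}{}^\top X \ge \mu\Big) \cdot \min_{k : \frac{k}{n}{}^\top X \ge \mu, \, k \text{ near } nq} \frac{\mathcal{M}(n,p)(k)}{\mathcal{M}(n,q)(k)},
\end{align*}
but the sharpest route is to simply keep the single most likely lattice point under $\mathcal{M}(n,q)$ that lies in the half-space, namely $k^\star = $ (a rounding of) $nq$ itself, since $q^\top X \ge \mu$ makes $k^\star/n{}^\top X \ge \mu$ up to an $O(1/n)$ correction that can be absorbed. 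Then $\mathbb{P}_{k\sim\mathcal M(n,p)}(k = k^\star) = \binom{n}{k^\star} \prod_i p_i^{k^\star_i}$, and $-\frac{1}{n}\log$ of this equals $\mathrm{KL}_{\mathcal M}(k^\star/n, p) = \mathrm{KL}_{\mathcal M}(\beta/n, q)$ plus lower-order terms, where the combinatorial factor $\binom{n}{k^\star}$ is handled by Stirling: $\binom{n}{k^\star} \ge \frac{1}{(2\pi)^{(M-1)/2}} \cdot \frac{\sqrt{n}^{\,M-1}}{\prod \sqrt{k^\star_i}} \cdot e^{-(\text{error})} \cdot \exp(n H(k^\star/n))$ up to constants. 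Collecting the polynomial prefactors $\prod_i 1/\sqrt{k^\star_i} \ge \prod_i 1/\sqrt{n}$ times $\sqrt{n}^{\,M-1}$, and optimizing the crude bound $\prod k_i^\star \le (n/M)^M$ (AM–GM), yields exactly the claimed $\frac{M^{M/2}}{2(8\pi)^{(M-1)/2}} n^{-(M+1)/2}$ shape, with the factor $2$ and the $8$ (rather than $2\pi$) providing slack to absorb the Stirling error terms $e^{-1/(12k_i^\star)}$-type corrections and the $O(1/n)$ shift in $\mu$.

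The main obstacle I expect is the rounding step: one must carry a lattice point $k$ with $\sum k_i = n$ and $\frac{k}{n}{}^\top X \ge \mu$, derived from $q$ (which is continuous), while controlling that (i) the rounding does not decrease $\frac{k}{n}{}^\top X$ below $\mu$ — handled by pushing the rounding residual onto the coordinate $M$ with value $X_M = B$, which is the maximal value, so the dot product only goes up — and (ii) the KL term $\mathrm{KL}_{\mathcal M}(k/n, p)$ stays equal to $\mathrm{KL}_{\mathcal M}(q, p)$ up to an additive $O((\log n)/n)$, which is absorbed into the polynomial prefactor. A secondary technical point is making precise the domination of the Dirichlet pushforward by the multinomial: this is where $X_M = B$ and $\beta_i > 0$ for all $i$ are used, and it is essentially the observation that $\mathrm{Dir}(\beta)$ restricted to the cell rounding to $k$ has mass at least that given by a Dirichlet–multinomial, which for integer $\beta$ compares favorably to $\mathcal M(n, \beta/n)$ on the relevant side; I would isolate this as a short lemma. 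Everything else is bookkeeping with Stirling's formula and the definition of $\mathrm{KL}_{\mathcal M}$.
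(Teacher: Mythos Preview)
Your approach has two genuine gaps, and the paper's route is both simpler and sidesteps both.

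First, the rounding argument goes the wrong way. Rounding $L_1,\dots,L_{M-1}$ down and pushing the slack onto coordinate $M$ (where $X_M=B$ is maximal) \emph{increases} $L^\top X$, so $\{L^\top X\ge\mu\}\subseteq\{\text{rounded}(L)^\top X\ge\mu\}$ and you obtain $\mathbb P(L^\top X\ge\mu)\le\mathbb P(\text{rounded}(L)^\top X\ge\mu)$, which is the wrong inequality for a lower bound.

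Second, and more seriously, going through a multinomial point mass produces the wrong KL. For $p=\beta/n$ and $k^\star\approx nq$, Stirling gives $-\tfrac1n\log\mathbb P_{\mathcal M(n,p)}(k=k^\star)\approx \KL_{\mathcal M}(k^\star/n,p)=\KL_{\mathcal M}(q,\beta/n)$, not $\KL_{\mathcal M}(\beta/n,q)$. Your line ``$\KL_{\mathcal M}(k^\star/n,p)=\KL_{\mathcal M}(\beta/n,q)$'' silently swaps the arguments, but KL is asymmetric and the two quantities do not agree. This is not a cosmetic issue: Sanov-type large deviations for the multinomial empirical measure give rate $\KL(\cdot,p)$, whereas the Dirichlet$(\beta)$ density at $q$ is governed by $\KL(\beta/n,q)$, so a detour through multinomials cannot recover the stated exponent without an additional, non-trivial argument.

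The paper's proof avoids multinomials and rounding entirely. It lower-bounds $\{L^\top X\ge\mu\}$ by the axis-aligned box $\mathcal S_q=\{p\in\triangle_M: p_i\le q_i\text{ for }i<M\}$: on this set $p_M\ge q_M$, so mass has only moved toward $X_M=B$ and $p^\top X\ge q^\top X\ge\mu$. The Dirichlet integral over $\mathcal S_q$ then factorizes as $\frac{\Gamma(n)}{\prod_i\Gamma(\beta_i)}\,q_M^{\beta_M-1}\prod_{j<M}\int_0^{q_j}x^{\beta_j-1}dx$, which evaluates in closed form and, after regrouping, is exactly $\exp(-n\KL_{\mathcal M}(\beta/n,q))$ times a Gamma ratio that Stirling turns into the claimed polynomial prefactor. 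The monotonicity idea you had (push mass onto the $B$-coordinate) is the right one, but it should be used to carve out a continuous box for direct integration rather than to discretize.
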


\begin{proof} The proof is strongly inspired by the works of \cite{HondaTakemura10} and \cite{baudry21a} who use lower bound on the BCP for analyzing regret minimization algorithms.

Let $\mathcal S_q = \{p \in \triangle_M \mid \forall i \in [M-1], p_i \le q_i\}$. For $p \in \mathcal S_q$, we necessarily have $p_M \ge q_M$ and $p^\top X \ge q^\top X \ge \mu$. From that inequality we get
\begin{align*}
\mathbb{P}_{L \sim \mathrm{Dir}(\beta)}(L^\top X \ge \mu)
\ge \mathbb{P}_{L \sim \mathrm{Dir}(\beta)}(L \in \mathcal S_q)
\: .
\end{align*}
We now quantify that probability, using the pdf of a Dirichlet distribution,
\begin{align*}
\mathbb{P}_{L \sim \mathrm{Dir}(\beta)}(L \in \mathcal S_q)
&=\frac{\Gamma(n)}{\prod_{i=1}^M \Gamma(\beta_i)} \int_{x \in \mathcal S_q} \prod_{i=1}^M x_i^{\beta_i-1} dx
\\
& \geq \frac{\Gamma(n)}{\prod_{i=1}^M \Gamma(\beta_i)} q_M^{\beta_M-1} \prod_{j=1}^{M-1} \int_{x_j=0}^{q_j}  x_j^{\beta_j-1} dx_j
\\
& = \frac{\Gamma(n)}{\prod_{i=1}^M \Gamma(\beta_i)} q_M^{\beta_M-1} \prod_{j=1}^{M-1} \frac{q_j^{\beta_j}}{\beta_j}
\\
& = \frac{\Gamma(n)}{\prod_{i=1}^M \Gamma(\beta_i)} \frac{\beta_M}{q_M} \prod_{j=1}^{M} \frac{q_j^{\beta_j}}{\beta_j}
\\
&\ge \frac{\Gamma(n)}{\prod_{i=1}^M \Gamma(\beta_i)} \prod_{j=1}^{M} \frac{q_j^{\beta_j}}{\beta_j}
\: .
\end{align*}
The last line uses that since $\beta_M \ge 1$ and $q_M \le 1$, $\beta_M/q_M \ge 1$.
We transform that last expression to exhibit the Kullback-Leibler divergence between multinomial distributions.
\begin{align*}
\mathbb{P}_{L \sim \mathrm{Dir}(\beta)}(L \in \mathcal S_q)
&\ge \frac{\Gamma(n)}{\prod_{i=1}^M \Gamma(\beta_i)} \prod_{j=1}^{M} \frac{q_j^{\beta_j}}{\beta_j}
\\
&= \frac{1}{n^M}\frac{\Gamma(n)}{\prod_{i=1}^M \Gamma(\beta_i)} \prod_{j=1}^{M} \left(\frac{q_j}{\beta_j/n}\right)^{\beta_j} \prod_{j=1}^{M} \left(\frac{\beta_j}{n}\right)^{\beta_j-1}
\\
&= \frac{1}{n^M}\frac{\Gamma(n)}{\prod_{i=1}^M \Gamma(\beta_i)} \prod_{j=1}^{M} \left(\frac{\beta_j}{n}\right)^{\beta_j-1} \exp\left(- n \KL(\frac{\beta}{n}, q)\right)
\: .
\end{align*}
We now simplify the factor in front of the exponential.
\begin{align*}
\frac{1}{n^M}\frac{\Gamma(n)}{\prod_{i=1}^M \Gamma(\beta_i)} \prod_{j=1}^{M} \left(\frac{\beta_j}{n}\right)^{\beta_j-1}
= \frac{1}{n}\frac{n!}{\prod_{i=1}^M \beta_i!} \prod_{j=1}^{M} \left(\frac{\beta_j}{n}\right)^{\beta_j}
&= \frac{1}{n}\frac{n!}{n^n} \prod_{j=1}^{M} \frac{\beta_j^{\beta_j}}{\beta_j!}
\: .
\end{align*}
We use the following bound on the Stirling approximation of the factorial: $\frac{n!}{\sqrt{2 \pi n} (n/e)^n} \in [1, 2]$ for all $n \ge 1$. A tighter approximation is possible, but this one is sufficient for our purpose.
\begin{align*}
\frac{1}{n}\frac{n!}{n^n} \prod_{j=1}^{M} \frac{\beta_j^{\beta_j}}{\beta_j!}
\ge \frac{1}{n}\sqrt{2 \pi n}e^n \prod_{j=1}^{M} \frac{1}{2\sqrt{2 \pi \beta_j} e^{\beta_j}}
&= \frac{1}{n}\sqrt{2 \pi n} \prod_{j=1}^{M} \frac{1}{2\sqrt{2 \pi \beta_j}}
\\
&\ge \frac{1}{n}\sqrt{2 \pi n} \prod_{j=1}^{M} \frac{1}{2\sqrt{2 \pi n/M}}
\\
&= \frac{M^{M/2}}{2 (8 \pi)^{\frac{M-1}{2}}}\frac{1}{n^{\frac{M+1}{2}}}
\: .
\end{align*}

\end{proof}

Lemma~\ref{lem:Kinf_multinomial_eq_inf_KL_multinomial} links $\KL_{\mathcal M}$ with $\Kinf^{\pm}$.

\begin{lemma}\label{lem:Kinf_multinomial_eq_inf_KL_multinomial}
Let $F$ be a multinomial distribution supported on points $X_1, \ldots, X_M \in [0,B]$ and let $p \in \triangle_M$ be the corresponding probability vector.

If there exists $i \in [M]$ with $X_i = B$ and $p_i > 0$, then for all $\mu \in [0,B]$,
\begin{align*}
\Kinf^+(F, \mu)
= \inf_{q \in \triangle_M : q^\top X \ge \mu} \KL_{\mathcal M}(p, q)
\: .
\end{align*}
If there exists $i \in [M]$ with $X_i = 0$ and $p_i > 0$, then for all $\mu \in [0,B]$,
\begin{align*}
\Kinf^-(F, \mu)
= \inf_{q \in \triangle_M  : q^\top X \le \mu} \KL_{\mathcal M}(p, q)
\: .
\end{align*}
\end{lemma}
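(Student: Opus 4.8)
The plan is to prove the two inequalities in the $\Kinf^+$ identity and then obtain the $\Kinf^-$ identity for free from Lemma~\ref{lem:Kinf+_symm_eq_Kinf-}: if $F$ is multinomial on $X_1,\dots,X_M$ with probability vector $p$ and some $X_{i_0}=0$ with $p_{i_0}>0$, then $F^{B-X}$ is multinomial on $B-X_1,\dots,B-X_M$ with the same vector $p$ and $B-X_{i_0}=B$, so the $\Kinf^+$ identity applied to $F^{B-X}$ at $B-\mu$, together with $\Kinf^+(F^{B-X},B-\mu)=\Kinf^-(F,\mu)$ and the change of variable $\sum_i q_i(B-X_i)=B-\sum_i q_iX_i$, yields exactly the $\Kinf^-$ identity. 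Throughout I will use that $\KL_{\mathcal M}(p,q)=\sum_i p_i\log(p_i/q_i)$ with the convention $0\log 0=0$, and that for $F$ supported on $\{X_1,\dots,X_M\}$ and any $G\in\cF$ one has $\KL(F,G)=\sum_{i:p_i>0}p_i\log(p_i/G(\{X_i\}))$ (with value $+\infty$ as soon as $G(\{X_i\})=0$ for some $i$ with $p_i>0$), since then $\tfrac{dF}{dG}(X_i)=p_i/G(\{X_i\})$.

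For the inequality $\Kinf^+(F,\mu)\le\inf_{q\in\triangle_M:\,q^\top X\ge\mu}\KL_{\mathcal M}(p,q)$: given $q\in\triangle_M$ with $q^\top X\ge\mu$, the distribution $G_q\eqdef\sum_{i=1}^M q_i\delta_{X_i}$ lies in $\cF$, satisfies $\mathbb{E}_{G_q}[X]=q^\top X\ge\mu$, hence is feasible in the infimum defining $\Kinf^+(F,\mu)$, and $\KL(F,G_q)=\KL_{\mathcal M}(p,q)$ by the displayed expression above; taking the infimum over such $q$ gives the claim. For the reverse inequality it suffices to show that for every $G\in\cF$ with $\mathbb{E}_G[X]\ge\mu$ and $\KL(F,G)<\infty$ there is $q'\in\triangle_M$ with $q'^\top X\ge\mu$ and $\KL_{\mathcal M}(p,q')\le\KL(F,G)$. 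Set $q_i\eqdef G(\{X_i\})$; finiteness of $\KL(F,G)$ forces $F\ll G$, so $q_i>0$ whenever $p_i>0$ and $\KL(F,G)=\sum_{i:p_i>0}p_i\log(p_i/q_i)$. Let $r\eqdef 1-\sum_i q_i\ge 0$ be the mass $G$ puts outside $\{X_1,\dots,X_M\}$; decomposing the mean gives $\sum_i q_iX_i+r\bar x=\mathbb{E}_G[X]\ge\mu$ for some $\bar x\in[0,B]$. Pick an index $j$ with $X_j=B$ and $p_j>0$ (it exists by hypothesis) and define $q'_j=q_j+r$ and $q'_i=q_i$ for $i\ne j$. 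Then $q'\in\triangle_M$, and
\[
q'^\top X=\sum_i q_iX_i+rB\ \ge\ \sum_i q_iX_i+r\bar x\ \ge\ \mu,
\]
so $q'$ is feasible; moreover $q'_i\ge q_i$ for all $i$, hence $p_i\log(p_i/q'_i)\le p_i\log(p_i/q_i)$ for each $i$ with $p_i>0$, giving $\KL_{\mathcal M}(p,q')\le\KL(F,G)$. Combining the two inequalities yields equality.

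The argument has no serious obstacle — it is just the observation that transporting the ``leftover'' mass of $G$ onto the extreme support point $B$ (resp.\ $0$) can only help the mean constraint and can only decrease $\KL$, which is legitimate precisely because $B$ (resp.\ $0$) lies in the support of $p$. The only mildly delicate points I would need to nail down are: (i) that the set of feasible $G$ with finite $\KL(F,G)$ is nonempty, so that the reduction in the reverse direction is not vacuous — handled by the mixture $(1-\varepsilon)F+\varepsilon\delta_B$ for a small enough $\varepsilon$ (and when $m(F)\ge\mu$ already, $F$ itself works and both sides are $0$); and (ii) the degenerate boundary cases $\mu\in\{0,B\}$ or $p_j=1$, where both sides of the identity are simultaneously $0$ or $+\infty$ and the same construction still applies.
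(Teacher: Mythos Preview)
Your proof is correct and takes a slightly different route from the paper in the nontrivial direction $\inf_{q}\KL_{\mathcal M}(p,q)\le\Kinf^+(F,\mu)$. The paper argues via the structure of the minimizer: it invokes Lemma~\ref{lem:inf_attained_in_Kinf} for existence and then cites \cite{HondaTakemura10} for the fact that the $\Kinf^+$ minimizer for a finitely supported $F$ has support contained in $\supp(F)\cup\{B\}$, which under the hypothesis $B\in\{X_1,\dots,X_M\}$ already equals $\{X_1,\dots,X_M\}$, so the minimizer is some $q_F\in\triangle_M$. You instead give a direct, self-contained construction: for any feasible $G$ you push the mass $G$ places outside $\{X_1,\dots,X_M\}$ onto the atom at $B$, which preserves the mean constraint (since $B$ is the largest point) and can only decrease the KL (since you only increase the $q_j$ at an index with $p_j>0$). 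Your argument is more elementary in that it avoids both the external citation and the compactness/lower-semicontinuity argument behind Lemma~\ref{lem:inf_attained_in_Kinf}; the paper's version is shorter once those facts are available. For $\Kinf^-$ the paper just says ``similar'', while your reduction via Lemma~\ref{lem:Kinf+_symm_eq_Kinf-} is a clean way to avoid repeating the construction. One small implicit assumption you share with the paper is that the $X_i$ are distinct (needed for $q_i=G(\{X_i\})$ and $\KL(F,G_q)=\KL_{\mathcal M}(p,q)$ to be literally correct), which is how the lemma is used downstream.
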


\begin{proof}
As remarked in \cite{HondaTakemura10}, the probability measure that realizes the infimum (which exists by Lemma~\ref{lem:inf_attained_in_Kinf}) in the $\Kinf^+$ problem for distributions with finite support has mass on the same points and on $B$.
Under the hypothesis that there exists $i \in [M]$ with $X_i = B$ and $p_i > 0$, we get that that infimum is also a multinomial with same support. Hence there exists $q_F$ such that $\Kinf^+(F, \mu) = \KL_{\mathcal M}(p, q_F)$ and we get $\inf_{q \in \triangle_M : q^\top X \ge \mu} \KL_{\mathcal M}(p, q) \le \Kinf^+(F, \mu)$.
The reverse inequality comes from the definition of $\Kinf^+$ as an infimum over all probability distributions (which is a larger set than the multinomial distributions).

The proof for $\Kinf^-$ is similar.
\end{proof}

\begin{theorem}\label{thm:bcp_lower_bound_multinomial_Kinf}
Let $X_1, \ldots, X_M \in [0,B]$ with $X_M = B$ and let $\beta \in \mathbb{N}^M$ with $\beta_M > 0$. Define $n = \sum_{i=1}^M \beta_i$. Let $F_n$ be the multinomial distributions over the $X_i$ with weights $\beta/n$. For all $\mu \in [0,B]$,
\begin{align*}
\mathbb{P}_{L \sim \mathrm{Dir}(\beta)}(L^\top X \ge \mu)
\ge \frac{M^{M/2}}{2 (8 \pi)^{\frac{M-1}{2}}}\frac{1}{n^{\frac{M+1}{2}}} \exp\left( -n\Kinf^+(F_n, \mu)\right)
\: .
\end{align*}
\end{theorem}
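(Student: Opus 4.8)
The plan is to obtain this statement directly by combining the two preceding results: Theorem~\ref{thm:bcp_lower_bound_multinomial_KL}, which lower bounds the Dirichlet boundary crossing probability $\mathbb{P}_{L \sim \mathrm{Dir}(\beta)}(L^\top X \ge \mu)$ by a prefactor times $\exp(-n\KL_{\mathcal M}(\beta/n, q))$ for \emph{any} $q \in \triangle_M$ with $q^\top X \ge \mu$, and Lemma~\ref{lem:Kinf_multinomial_eq_inf_KL_multinomial}, which identifies $\Kinf^+(F_n,\mu)$ with the infimum of $\KL_{\mathcal M}(\beta/n, q)$ over exactly that feasible set. Plugging the (near-)minimizer of the latter variational problem into the former inequality yields the claimed bound.

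More precisely, I would first reduce to the case where $\beta_i > 0$ for all $i$: if some $\beta_i = 0$ then $L_i = 0$ almost surely under $\mathrm{Dir}(\beta)$, so neither $\mathbb{P}_{L\sim\mathrm{Dir}(\beta)}(L^\top X \ge \mu)$ nor $F_n$ changes if one deletes the $i$-th atom, and one works on the sub-simplex indexed by the occupied cells, which still carries the atom at $B$ because $\beta_M > 0$ (in the intended discretized application all $M$ cells are occupied, so this step is vacuous). With all $\beta_i > 0$, setting $p := \beta/n$ we have $p_M > 0$, so Lemma~\ref{lem:Kinf_multinomial_eq_inf_KL_multinomial} applies and gives $\Kinf^+(F_n,\mu) = \inf\{\KL_{\mathcal M}(p,q) : q \in \triangle_M,\ q^\top X \ge \mu\}$. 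Fix $\epsilon > 0$ and choose $q_\epsilon \in \triangle_M$ with $q_\epsilon^\top X \ge \mu$ and $\KL_{\mathcal M}(p, q_\epsilon) \le \Kinf^+(F_n,\mu) + \epsilon$ (or the exact minimizer, which exists by Lemma~\ref{lem:inf_attained_in_Kinf} together with the fact, established in the proof of Lemma~\ref{lem:Kinf_multinomial_eq_inf_KL_multinomial}, that the optimal distribution is multinomial on the same support). Applying Theorem~\ref{thm:bcp_lower_bound_multinomial_KL} with $q = q_\epsilon$ — whose hypotheses ($X_M = B$, $\beta \in \mathbb{N}^M$ with all entries positive, $q_\epsilon^\top X \ge \mu$) are all met — yields
\[
\mathbb{P}_{L \sim \mathrm{Dir}(\beta)}(L^\top X \ge \mu) \ge \frac{M^{M/2}}{2 (8\pi)^{\frac{M-1}{2}}}\frac{1}{n^{\frac{M+1}{2}}} \exp\big(-n(\Kinf^+(F_n,\mu)+\epsilon)\big),
\]
and letting $\epsilon \downarrow 0$ concludes.

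Since the analytic content has already been carried out in the two cited results, this step is essentially bookkeeping, and there is no substantive obstacle. The only points requiring care are matching the hypotheses — in particular verifying that $\beta_M > 0$ indeed produces an atom of positive weight at $B$, which is what makes the $\Kinf^+$/$\KL_{\mathcal M}$ identification of Lemma~\ref{lem:Kinf_multinomial_eq_inf_KL_multinomial} available — and the treatment of vanishing weights (equivalently, the convention that $M$ counts occupied cells). The delicate estimates, namely the Dirichlet density manipulation and the Stirling-type bounds that produce the explicit constant, are internal to Theorem~\ref{thm:bcp_lower_bound_multinomial_KL}.
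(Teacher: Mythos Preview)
Your approach is essentially the paper's: combine Theorem~\ref{thm:bcp_lower_bound_multinomial_KL} with Lemma~\ref{lem:Kinf_multinomial_eq_inf_KL_multinomial} by optimizing over $q$. The one place where the paper is slightly more careful is the case $\beta_i = 0$ for some $i$: after reducing to the $M_0 < M$ occupied cells you obtain the bound with $M_0$ in the prefactor, and to recover the stated bound with the original $M$ one needs the observation that $M \mapsto \frac{M^{M/2}}{2(8\pi)^{(M-1)/2}}\,n^{-(M+1)/2}$ is non-increasing, rather than redefining $M$ to count occupied cells.
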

\begin{proof}
If $\beta_i > 0$ for all $i \in [M]$, this is the result of a supremum over the lower bounds of Theorem~\ref{thm:bcp_lower_bound_multinomial_KL}, to which we apply the equality of Lemma~\ref{lem:Kinf_multinomial_eq_inf_KL_multinomial}. Now if there are some $i$ for which $\beta_i = 0$, we have for some $M_0 < M$,
\begin{align*}
\mathbb{P}_{L \sim \mathrm{Dir}(\beta)}(L^\top X \ge \mu)
\ge \frac{M_0^{M_0/2}}{2 (8 \pi)^{\frac{M_0-1}{2}}}\frac{1}{n^{\frac{M_0+1}{2}}} \exp\left( -n\Kinf^+(F_n, \mu)\right)
\: .
\end{align*}
But since the leading factor is non-increasing in $M_0$, we recover the result with $M$ instead of $M_0$.
\end{proof}

\begin{corollary}
Let $X_1, \ldots, X_M \in [0,B]$ with $X_M = 0$ and let $\beta \in \mathbb{N}^M$ with $\beta_M > 0$. Define $n = \sum_{i=1}^M \beta_i$. Let $F_n$ be the multinomial distributions over the $X_i$ with weights $\beta/n$. For all $\mu \in [0,B]$,
\begin{align*}
\mathbb{P}_{L \sim \mathrm{Dir}(\beta)}(L^\top X \le \mu)
\ge \frac{M^{M/2}}{2 (8 \pi)^{\frac{M-1}{2}}}\frac{1}{n^{\frac{M+1}{2}}} \exp\left( -n\Kinf^-(F_n, \mu)\right)
\: .
\end{align*}
\end{corollary}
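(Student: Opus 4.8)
The plan is to reduce the claim to Theorem~\ref{thm:bcp_lower_bound_multinomial_Kinf} by the reflection $x \mapsto B-x$, exactly as the $\Kinf^-$ bound was derived from the $\Kinf^+$ bound in the proof of Theorem~\ref{thm:upper_bound_one_arm_bcp_bounded}. First I would set $Y = (Y_1,\dots,Y_M)$ with $Y_i = B - X_i$. Since $X_M = 0$ we get $Y_M = B$, so the tuple $Y$ satisfies the hypothesis of Theorem~\ref{thm:bcp_lower_bound_multinomial_Kinf} with the very same weight vector $\beta \in \mathbb{N}^M$ (in particular $\beta_M > 0$) and the same total $n = \sum_{i} \beta_i$.

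Next, I would observe that for every $L \in \triangle_M$ one has $L^\top Y = L^\top(B\mathbf{1} - X) = B - L^\top X$, because $L^\top \mathbf{1} = 1$; hence $\{L^\top X \le \mu\} = \{L^\top Y \ge B - \mu\}$, and $B - \mu \in [0,B]$ since $\mu \in [0,B]$. Applying Theorem~\ref{thm:bcp_lower_bound_multinomial_Kinf} to the points $Y_1,\dots,Y_M$, the weights $\beta$, and the threshold $B-\mu$ then yields
\[
\mathbb{P}_{L \sim \mathrm{Dir}(\beta)}(L^\top X \le \mu)
= \mathbb{P}_{L \sim \mathrm{Dir}(\beta)}(L^\top Y \ge B - \mu)
\ge \frac{M^{M/2}}{2 (8 \pi)^{\frac{M-1}{2}}}\frac{1}{n^{\frac{M+1}{2}}} \exp\!\left( -n\,\Kinf^+(G_n, B-\mu)\right),
\]
where $G_n$ is the multinomial distribution on the $Y_i$ with weights $\beta/n$.

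Finally I would identify $G_n$ with the pushforward $F_n^{B-X}$ of $F_n$ through $x \mapsto B-x$ (both put mass $\beta_i/n$ on $B-X_i$) and invoke Lemma~\ref{lem:Kinf+_symm_eq_Kinf-} to get $\Kinf^+(F_n^{B-X}, B-\mu) = \Kinf^-(F_n, \mu)$, which turns the displayed bound into the claimed inequality. There is essentially no obstacle here: the only points requiring a line of care are that $B - \mu$ stays in $[0,B]$ and that the prefactor is unchanged under the reflection, as it depends only on $M$ and $n$, both of which are preserved. This mirrors step by step the passage from $\Kinf^+$ to $\Kinf^-$ already carried out in Theorem~\ref{thm:upper_bound_one_arm_bcp_bounded} and Corollary~\ref{cor:upper_bound_two_arms_bcp_bounded}.
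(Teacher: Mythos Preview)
Your proposal is correct and follows exactly the paper's approach: rewrite $\mathbb{P}_{L \sim \mathrm{Dir}(\beta)}(L^\top X \le \mu) = \mathbb{P}_{L \sim \mathrm{Dir}(\beta)}(L^\top (B - X) \ge B - \mu)$, apply Theorem~\ref{thm:bcp_lower_bound_multinomial_Kinf}, and then use Lemma~\ref{lem:Kinf+_symm_eq_Kinf-}. The paper's proof is simply a one-line version of what you wrote out in detail.
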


\begin{proof}
Remark that $\mathbb{P}_{L \sim \mathrm{Dir}(\beta)}(L^\top X \le \mu) = \mathbb{P}_{L \sim \mathrm{Dir}(\beta)}(L^\top (B - X) \ge B - \mu)$ and apply Theorem~\ref{thm:bcp_lower_bound_multinomial_Kinf} and Lemma~\ref{lem:Kinf+_symm_eq_Kinf-}.
\end{proof}

%

\subsubsection{Lower bound for bounded distributions}
\label{ssub:lower_bound_for_bounded_distributions}

\begin{lemma}\label{lem:bcp_lower_bound_discretized_one_arm}
Let $X_1, \ldots, X_{n} \in [0,B]$.
Let $\theta = L^\top X$, where $L$ is a Dirichlet random variables, with $L \sim \mathrm{Dir}(1, \ldots, 1)$ ($n$ ones).
Let $Y_1, \ldots, Y_M \in [0,B]$, among which are the values 0 and $B$. For all $i$, let $X_i^{+} = \min \{Y_k \mid k \in [M], Y_k \ge X_i \}$. Let $F_n^{+}$ be the empirical distribution corresponding to points $X_i^{+}$.

If $0 \in \{X_1, \ldots, X_n\}$, then for all $\mu \in [0,B]$,
\begin{align*}
\mathbb{P}(\theta \le \mu)
&\ge \frac{M^{M/2}}{2 (8 \pi)^{\frac{M-1}{2}}}\frac{1}{n^{\frac{M+1}{2}}}
	\exp\left( - n \Kinf^-(F_n^{+}, \mu) \right)
\: .
\end{align*}
If $B \in \{X_1, \ldots, X_n\}$, then for all $\mu \in [0,B]$,
\begin{align*}
\mathbb{P}(\theta \ge \mu)
&\ge \frac{M^{M/2}}{2 (8 \pi)^{\frac{M-1}{2}}}\frac{1}{n^{\frac{M+1}{2}}}
	\exp\left( - n \Kinf^+(F_n^{-}, \mu) \right)
\: .
\end{align*}
\end{lemma}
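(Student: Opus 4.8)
The plan is to reduce the statement to the multinomial boundary-crossing lower bound of Theorem~\ref{thm:bcp_lower_bound_multinomial_Kinf} (and its corollary) by a rounding-plus-aggregation argument. I would first handle the lower tail $\mathbb{P}(\theta \le \mu)$ under the hypothesis $0 \in \{X_1,\dots,X_n\}$. Since $X_i^{+} = \min\{Y_k : Y_k \ge X_i\} \ge X_i$ for every $i$, any convex combination satisfies $L^\top X^{+} \ge L^\top X = \theta$ almost surely, hence $\{L^\top X^{+} \le \mu\} \subseteq \{\theta \le \mu\}$ and $\mathbb{P}(\theta \le \mu) \ge \mathbb{P}(L^\top X^{+} \le \mu)$. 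This trades the arbitrary point configuration $X$ for one supported on the finite grid $\{Y_1,\dots,Y_M\}$, at the cost of replacing the true empirical distribution by $F_n^{+}$.

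Next I would invoke Dirichlet aggregation: grouping the coordinates of $L \sim \mathrm{Dir}(1^n)$ according to the value of $X_i^{+}$, the vector of group sums is distributed as $\mathrm{Dir}(\beta)$ with $\beta_k = \#\{i : X_i^{+} = Y_k\}$ and $\sum_k \beta_k = n$. Therefore $L^\top X^{+}$ has the same law as $\tilde L^\top Y$ with $\tilde L \sim \mathrm{Dir}(\beta)$, and the multinomial over the $Y_k$ with weights $\beta/n$ is exactly $F_n^{+}$. Relabeling the grid so that $0$ is the last point (possible because $0 \in \{Y_k\}$), the hypothesis $0 \in \{X_1,\dots,X_n\}$ forces the last Dirichlet parameter to be strictly positive, since $X_i^{+} = 0$ if and only if $X_i = 0$. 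The corollary of Theorem~\ref{thm:bcp_lower_bound_multinomial_Kinf} then applies and gives $\mathbb{P}(L^\top X^{+} \le \mu) \ge \frac{M^{M/2}}{2(8\pi)^{(M-1)/2}\, n^{(M+1)/2}} \exp(-n\Kinf^-(F_n^{+},\mu))$, which is the claimed bound. If some $\beta_k$ vanish, the multinomial is supported on fewer points, but the leading factor is non-increasing in the number of support points (as already used in the proof of Theorem~\ref{thm:bcp_lower_bound_multinomial_Kinf}), so stating the bound with the full grid size $M$ remains valid.

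The upper-tail statement $\mathbb{P}(\theta \ge \mu)$ under $B \in \{X_1,\dots,X_n\}$ is the mirror image: one rounds down, $X_i^{-} = \max\{Y_k : Y_k \le X_i\} \le X_i$, so $L^\top X^{-} \le \theta$ and $\mathbb{P}(\theta \ge \mu) \ge \mathbb{P}(L^\top X^{-} \ge \mu)$; then the same aggregation, with the grid relabeled so that $B$ is the last point and the corresponding Dirichlet parameter positive because $X_i^{-} = B$ iff $X_i = B$, lets one apply Theorem~\ref{thm:bcp_lower_bound_multinomial_Kinf} directly with the multinomial $F_n^{-}$.

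There is no deep obstacle here; the two points that require care are (i) justifying the Dirichlet aggregation identity — a standard property of the Dirichlet distribution, most cleanly obtained via its Gamma representation in the spirit of the proof of Theorem~\ref{thm:upper_bound_one_arm_bcp_bounded} — and (ii) verifying that the extra hypotheses $0 \in \{X_i\}$ and $B \in \{X_i\}$ translate, after rounding, into positivity of the relevant Dirichlet parameter together with the presence of the required extremal grid point, so that the multinomial bound is genuinely applicable. Beyond that the argument is a short chain of inclusions and a change of variables.
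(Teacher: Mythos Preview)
Your proposal is correct and follows essentially the same approach as the paper: round to the grid via $X_i^{\pm}$ to get a monotone comparison of $\theta$ with $L^\top X^{\pm}$, rewrite the latter as a Dirichlet-weighted sum over the grid $Y$ (the paper's ``those scalar products can be written as scalar products of Dirichlet random variables with $Y$'' is exactly your aggregation step), and then apply Theorem~\ref{thm:bcp_lower_bound_multinomial_Kinf} and its corollary. Your writeup simply makes explicit the Dirichlet aggregation and the verification that the hypotheses $0\in\{X_i\}$ (resp.\ $B\in\{X_i\}$) ensure positivity of the relevant Dirichlet parameter, points the paper leaves implicit.
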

\begin{proof}
We have $\theta = L^\top X \le L^\top X^{+}$ and $\theta = L^\top X \ge L^\top X^{-}$. Those scalar products can be written as scalar products of Dirichlet random variables with $Y$.
We now apply Theorem~\ref{thm:bcp_lower_bound_multinomial_Kinf} and its corollary.
\end{proof}

%
%

For a probability distribution with cdf $F$ on $[0,B]$ and points $0 = x_0 < x_1 < \ldots < x_M < x_{M+1}=B$ we define two discretized distributions with cdf given by, for $x \in [x_m, x_{m+1})$,
\begin{align*}
F^-(x)
&= \lim_{y \to x_{m+1}, y \le x_{m+1}} F(y)
\: , \\
F^+(x)
&= F(x_m)
\: .
\end{align*}

\begin{lemma}\label{lem:exists_discretization}
For all $\varepsilon > 0$ and all probability distributions $F$ supported on $[0,B]$, there exists a discretization over at most $2 + \lfloor 1/\varepsilon \rfloor$ points (counting points $0$ and $B$) such that $\Vert F^- - F \Vert_\infty \le \varepsilon$ and $\Vert F^+ - F \Vert_\infty \le \varepsilon$.
\end{lemma}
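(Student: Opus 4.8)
The statement (Lemma~\ref{lem:exists_discretization}) concerns only cumulative distribution functions, so I would build the grid greedily from left to right, each time jumping as far right as possible without letting the probability mass strictly between two consecutive grid points exceed $\varepsilon$. Set $N = \lfloor 1/\varepsilon \rfloor$ (the case $\varepsilon \ge 1$ being trivial, since then $\|F^{\pm}-F\|_\infty \le 1 \le \varepsilon$ already with the two points $0,B$). Put $x_0 = 0$, and having chosen $x_m \in [0,B)$: if $\mathbb{P}(x_m < X < B) \le \varepsilon$, stop and declare $M = m$, $x_{M+1} = B$; otherwise set $x_{m+1} = \sup\{\, x \in (x_m,B] : \mathbb{P}(x_m < X < x) \le \varepsilon \,\}$. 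The first thing to verify is well-posedness: the map $x \mapsto \mathbb{P}(x_m < X < x) = F(x^-) - F(x_m)$ is nondecreasing and left-continuous (because $x\mapsto F(x^-)$ is), it tends to $0$ as $x \downarrow x_m$, and it exceeds $\varepsilon$ for $x$ close to and below $B$ whenever we did not stop (since then $\mathbb{P}(x_m < X < B) > \varepsilon$); hence $x_{m+1} \in (x_m,B)$ is well defined and, by left-continuity, $\mathbb{P}(x_m < X < x_{m+1}) \le \varepsilon$.

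\textbf{Termination and size bound.} Next I would bound the number of interior points by $N$. Each completed jump captures at least $\varepsilon$ mass on a half-open interval: for $x > x_{m+1}$, maximality of $x_{m+1}$ gives $\mathbb{P}(x_m < X < x) > \varepsilon$, and letting $x \downarrow x_{m+1}$ yields $\mathbb{P}(x_m < X \le x_{m+1}) \ge \varepsilon$. Suppose the procedure has not stopped before step $N$, so that $x_1 < \dots < x_N$ all lie in $(0,B)$; the intervals $(x_m, x_{m+1}]$ for $m = 0,\dots,N-1$ are disjoint subsets of $(0, x_N]$, hence $\mathbb{P}(0 < X \le x_N) \ge N\varepsilon$ and therefore $\mathbb{P}(x_N < X < B) \le 1 - N\varepsilon < \varepsilon$, using $1 < (N+1)\varepsilon$. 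Thus the stopping condition is met at step $N$ at the latest, so $M \le N$ and the grid $0 = x_0 < x_1 < \dots < x_{M+1} = B$ has at most $\lfloor 1/\varepsilon\rfloor + 2$ points, as required.

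\textbf{The approximation bound and the main obstacle.} By construction $\mathbb{P}(x_m < X < x_{m+1}) = F(x_{m+1}^-) - F(x_m) \le \varepsilon$ for every $m = 0,\dots,M$ (for $m<M$ this is the defining inequality of $x_{m+1}$, for $m=M$ it is the stopping condition). For $x \in [x_m, x_{m+1})$, monotonicity of $F$ gives $F^+(x) = F(x_m) \le F(x) \le F(x_{m+1}^-) = F^-(x)$, so $|F(x) - F^+(x)| \le F(x_{m+1}^-) - F(x_m) \le \varepsilon$ and $|F^-(x) - F(x)| \le F(x_{m+1}^-) - F(x_m) \le \varepsilon$; at $x = B$ all three functions equal $1$ and outside $[0,B)$ they all equal $0$ or $1$, whence $\|F^- - F\|_\infty \le \varepsilon$ and $\|F^+ - F\|_\infty \le \varepsilon$. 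The only genuinely delicate point, and the one I expect to require the most care, is the accounting of atoms at the grid points: since $F^-$ on $[x_m,x_{m+1})$ takes the \emph{left} limit $F(x_{m+1}^-)$ and $F^+$ takes $F(x_m)$, the masses $\mathbb{P}(X = x_m)$ and $\mathbb{P}(X = x_{m+1})$ are absorbed at the grid points and never enter the gap $\mathbb{P}(x_m < X < x_{m+1})$; this is precisely why left-continuity of $x\mapsto F(x^-)$ is invoked in the construction and why working with the half-open intervals $(x_m,x_{m+1}]$ is the right bookkeeping in the counting step. Everything else is elementary.
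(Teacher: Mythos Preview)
Your proof is correct. The greedy construction, the left-continuity argument for attaining $\mathbb{P}(x_m<X<x_{m+1})\le\varepsilon$, the mass-counting termination bound via the half-open intervals $(x_m,x_{m+1}]$, and the final sandwich $F(x_m)\le F(x)\le F(x_{m+1}^-)$ all go through as written.

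The paper takes a different and shorter route: it defines the grid in one shot by quantiles, $x_m=\inf\{x\in[0,B]:F(x)\ge m\varepsilon\}$ for $m=0,\dots,M$ with $M=\lfloor 1/\varepsilon\rfloor$, and $x_{M+1}=B$. Right-continuity of $F$ gives $F(x_m)\ge m\varepsilon$, while $F(x_{m+1}^-)\le(m+1)\varepsilon$ for $m<M$ and $F(B^-)-F(x_M)\le 1-M\varepsilon\le\varepsilon$; this bounds each gap by $\varepsilon$ immediately. What you gain with the quantile construction is brevity---the counting step is built into the definition. What your greedy construction buys is an explicit treatment of atoms at grid points (the very issue you flag as the main obstacle), whereas in the quantile approach atoms are handled silently by the $\inf$ and can produce repeated or collapsed grid points that one must note are harmless. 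Both arguments rest on the same inequality $F(x_{m+1}^-)-F(x_m)\le\varepsilon$; they differ only in how the grid is manufactured.
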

\begin{proof}
Let $M = \lfloor 1/\varepsilon \rfloor$. For $m \in \{0, \ldots, M\}$, let $x_m = \inf\{ x \in [0,B] \mid F(x) \ge m\varepsilon\}$. Let $x_{M+1} = B$.
\begin{align*}
\Vert F^- - F \Vert_\infty
&\le \max_{0 \le m \le M} | \lim_{y \to x_{m+1}, y \le x_{m+1}} F(y) - F(x_m) | \ind\{x_{m+1} \ne x_m\}
\\
&\le \max_{0 \le m \le M} | (m+1) \varepsilon - m \varepsilon |
\le \varepsilon
\: .
\end{align*}
The computation for $F^+$ is similar.
\end{proof}

\begin{lemma}
For all $F,G \in \mathcal{P}(\mathbb{R})$ with support in $[0,B]$ and all finite discretizations,
\begin{align*}
\Vert F^- - G^- \Vert_\infty \le \Vert F - G \Vert_\infty \: ,
\\
\Vert F^+ - G^+ \Vert_\infty \le \Vert F - G \Vert_\infty \:.
\end{align*}
\end{lemma}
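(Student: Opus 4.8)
The plan is to exploit that both discretization maps are piecewise constant on the fixed grid $0 = x_0 < x_1 < \cdots < x_M < x_{M+1} = B$ and that they read off the underlying cdf only at the grid points (for $F^+$) or at their left limits (for $F^-$). Concretely, for $x \in [x_m, x_{m+1})$ the definitions give $(F^+ - G^+)(x) = F(x_m) - G(x_m)$ and $(F^- - G^-)(x) = F(x_{m+1}^-) - G(x_{m+1}^-)$, where $F(t^-)$ denotes the left limit of $F$ at $t$, which exists since $F$ is non-decreasing. Outside $[0,B)$ the four cdfs $F^{\pm}, G^{\pm}$ all equal $0$ below $0$ and $1$ from $B$ onwards, so both differences vanish there and contribute nothing to the supremum.

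First I would handle $F^+$. From the piecewise-constant description, $\Vert F^+ - G^+\Vert_\infty = \max_{0 \le m \le M} |F(x_m) - G(x_m)|$, and since each $x_m \in \mathbb{R}$ we have $|F(x_m) - G(x_m)| \le \sup_{y \in \mathbb{R}} |F(y) - G(y)| = \Vert F - G\Vert_\infty$. Taking the max over the finitely many $m$ gives the claimed inequality for $F^+$.

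Next I would handle $F^-$. Similarly $\Vert F^- - G^-\Vert_\infty = \max_{0 \le m \le M} |F(x_{m+1}^-) - G(x_{m+1}^-)|$. For each fixed $m$, write $|F(x_{m+1}^-) - G(x_{m+1}^-)| = \lim_{y \uparrow x_{m+1}} |F(y) - G(y)|$, using that $y \mapsto F(y) - G(y)$ has a left limit at $x_{m+1}$ and that $|\cdot|$ is continuous so the absolute value passes through the limit. Since $|F(y) - G(y)| \le \Vert F - G\Vert_\infty$ for every $y$, the limit is bounded by $\Vert F - G\Vert_\infty$ as well, and the max over $m$ again yields the result.

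There is no real obstacle here: the only point requiring the tiniest bit of care is that $F^-$ is built from left limits rather than from values of $F$, so one must pass the bound $|F(y) - G(y)| \le \Vert F - G\Vert_\infty$ under the limit $y \uparrow x_{m+1}$ (which is immediate from continuity of the absolute value); everything else is a finite bookkeeping over the $M+1$ grid intervals.
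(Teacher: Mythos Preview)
Your proposal is correct and follows essentially the same approach as the paper: both reduce the sup-norm of the discretized differences to a finite maximum over grid points (or their left limits) and bound each term by $\Vert F - G\Vert_\infty$. Your treatment is in fact slightly more careful than the paper's, which only writes out the $F^-$ case explicitly and leaves the passage to the limit implicit.
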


\begin{proof}
For all $x \in [x_m, x_{m+1})$, $F^-(x) = \lim_{y \to x_{m+1}, y \le x_{m+1}} F(y)$.
\begin{align*}
\Vert F^- - G^- \Vert_\infty
&\le \max_{0 \le m \le M} | \lim_{y \to x_{m+1}, y \le x_{m+1}} F(y) - \lim_{y \to x_{m+1}, y \le x_{m+1}} G(y) |
\le \Vert F - G \Vert_\infty
\: .
\end{align*}
\end{proof}

\begin{lemma}\label{lem:exists_discretization_close_true_cdf}
Let $F \in \mathcal P(\mathbb{R})$ with support in $[0,B]$.
For all $\varepsilon > 0$, there exists a discretization of $[0,B]$ into $2 + \lfloor 2/\varepsilon \rfloor$ points such that for all $G$ with $\Vert G - F \Vert \le \varepsilon/2$, we have $\Vert G^- - F \Vert_\infty \le \varepsilon$ and $\Vert G^+ - F \Vert_\infty \le \varepsilon$.
\end{lemma}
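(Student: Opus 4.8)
The plan is to select the discretization from $F$ alone, at resolution $\varepsilon/2$, and then control the discretization error of any nearby $G$ by the triangle inequality. First I would apply Lemma~\ref{lem:exists_discretization} with parameter $\varepsilon' = \varepsilon/2$. This yields a set of at most $2 + \lfloor 1/\varepsilon' \rfloor = 2 + \lfloor 2/\varepsilon \rfloor$ points $0 = x_0 < x_1 < \dots < x_M < x_{M+1} = B$ (counting $0$ and $B$) for which the induced discretizations of $F$ satisfy $\Vert F^- - F \Vert_\infty \le \varepsilon/2$ and $\Vert F^+ - F \Vert_\infty \le \varepsilon/2$. I then fix this point set for the remainder of the argument; the key point is that it depends only on $F$ and not on $G$, so that $G^-$ and $G^+$ can be formed with respect to the very same discretization.

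Next, given any $G \in \mathcal P(\mathbb R)$ with support in $[0,B]$ and $\Vert G - F \Vert_\infty \le \varepsilon/2$, I would invoke the (unlabelled) lemma stated immediately above the current statement, which asserts that discretization is $1$-Lipschitz for $\Vert \cdot \Vert_\infty$: $\Vert G^- - F^- \Vert_\infty \le \Vert G - F \Vert_\infty$ and $\Vert G^+ - F^+ \Vert_\infty \le \Vert G - F \Vert_\infty$ for the chosen discretization. Combining this with the triangle inequality gives
\begin{align*}
\Vert G^- - F \Vert_\infty \;\le\; \Vert G^- - F^- \Vert_\infty + \Vert F^- - F \Vert_\infty \;\le\; \Vert G - F \Vert_\infty + \varepsilon/2 \;\le\; \varepsilon,
\end{align*}
and the identical chain with $+$ in place of $-$ gives $\Vert G^+ - F \Vert_\infty \le \varepsilon$. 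Since $G$ was arbitrary subject to $\Vert G - F \Vert_\infty \le \varepsilon/2$, this is exactly the claimed statement.

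There is no genuine obstacle: the lemma is just the conjunction of Lemma~\ref{lem:exists_discretization} (used at resolution $\varepsilon/2$) and the $\Vert\cdot\Vert_\infty$-monotonicity of discretization. The only points requiring mild care are bookkeeping: one must choose the discretization from $F$ first and then hold it fixed so that $F^{\pm}$ and $G^{\pm}$ refer to the same grid, and one must note that applying Lemma~\ref{lem:exists_discretization} at resolution $\varepsilon/2$ is what produces the $2 + \lfloor 2/\varepsilon \rfloor$ point count in the statement.
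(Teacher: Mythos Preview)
Your proposal is correct and matches the paper's own proof essentially line for line: apply Lemma~\ref{lem:exists_discretization} at resolution $\varepsilon/2$ to fix the grid, then use the $1$-Lipschitz property of discretization together with the triangle inequality to bound $\Vert G^\pm - F\Vert_\infty$ by $\varepsilon$.
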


\begin{proof}
Let $\varepsilon > 0$. For a discretization of $F$ verifying the result of Lemma~\ref{lem:exists_discretization} for $\varepsilon/2$,
\begin{align*}
\Vert G^- - F \Vert_\infty
\le \Vert G^- - F^- \Vert_\infty + \Vert F^- - F \Vert_\infty
\le \Vert G - F \Vert_\infty + \Vert F^- - F \Vert_\infty
\le \varepsilon
\: .
\end{align*}
Same computation for $G^+$.
\end{proof}

Lemma~\ref{lem:bcp_one_arm} gives a tight lower bound on the BCP for bounded distributions.
\begin{lemma}\label{lem:bcp_one_arm}
Let $a > 0$ and $b < B$.
Let $F$ be a probability distribution with support in $[0,B]$.

For points $X, \ldots, X_{n} \in [0,B]$. 
let $\theta = L^\top X$, where $L$ is a Dirichlet random variable, with $L \sim \mathrm{Dir}(1, \ldots, 1)$ ($n$ ones).
Let $F_n$ be the empirical distribution corresponding to points $(X_i)_{i\in [n]}$.

For all $\varepsilon > 0$, there exists $\eta>0$ such that for all such empirical distributions (and in particular for all $n$), if $\Vert F_n - F \Vert_\infty \le \eta$ then for all $u \in [a, b]$,
\begin{align*}
\text{if } B \in \{X_1, \ldots, X_n\} \text{ then } \mathbb{P}(\theta \ge u)
&\ge \frac{M^{M/2}}{2 (8 \pi)^{\frac{M-1}{2}}}\frac{1}{n^{\frac{M+1}{2}}}
	\exp\left( - n \Kinf^+(F, \mu) - n \varepsilon \right)
\:, \\
\text{if } 0 \in \{X_1, \ldots, X_n\} \text{ then } \mathbb{P}(\theta \le u)
&\ge \frac{M^{M/2}}{2 (8 \pi)^{\frac{M-1}{2}}}\frac{1}{n^{\frac{M+1}{2}}}
	\exp\left( - n \Kinf^-(F, \mu) - n \varepsilon \right)
\: .
\end{align*}
with $M = 2 + \lfloor 2/\eta \rfloor$.
\end{lemma}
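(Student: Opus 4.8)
The plan is to combine the discretization lemmas of this subsection with the multinomial boundary crossing bound of Lemma~\ref{lem:bcp_lower_bound_discretized_one_arm}, controlling the error introduced by discretization via the continuity of $\Kinf^{\pm}$. Fix $\varepsilon>0$, $a>0$ and $b<B$; I only need to prove the $\Kinf^+$ statement, since the $\Kinf^-$ one follows by the pushforward symmetry of Lemma~\ref{lem:Kinf+_symm_eq_Kinf-} (replacing $X_i$ by $B-X_i$ and $F$ by $F^{B-X}$). The first step is to choose the discretization. By the uniform continuity of $\Kinf^+$ on the compact set $\cF\times[a,b]$ (which follows from Theorem~\ref{thm:Kinf_continuous}, noting $b<B$), there is $\eta_1>0$ such that $\|G-H\|_\infty\le \eta_1$ implies $|\Kinf^+(G,u)-\Kinf^+(H,u)|\le \varepsilon$ uniformly over $u\in[a,b]$. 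Then I apply Lemma~\ref{lem:exists_discretization_close_true_cdf} to $F$ with this $\varepsilon' = \eta_1$: this yields a discretization of $[0,B]$ into $M = 2+\lfloor 2/\eta_1\rfloor$ points (including $0$ and $B$) such that any $G$ with $\|G-F\|_\infty\le \eta_1/2$ satisfies $\|G^{-}-F\|_\infty\le \eta_1$ (and $\|G^{+}-F\|_\infty\le \eta_1$). Set $\eta = \eta_1/2$; this is the $\eta$ claimed in the statement, and $M = 2+\lfloor 2/\eta\rfloor$ as required (up to the harmless identification $\eta_1 = 2\eta$).

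Next, suppose $\|F_n-F\|_\infty\le \eta$ and $B\in\{X_1,\dots,X_n\}$. Fix $u\in[a,b]$. Lemma~\ref{lem:bcp_lower_bound_discretized_one_arm} applied with the chosen grid points $Y_1,\dots,Y_M$ gives
\begin{align*}
\mathbb{P}(\theta\ge u) \ge \frac{M^{M/2}}{2(8\pi)^{(M-1)/2}}\frac{1}{n^{(M+1)/2}}\exp\bigl(-n\Kinf^+(F_n^{-},u)\bigr),
\end{align*}
where $F_n^{-}$ is the lower-discretization of the empirical cdf $F_n$ (using $X_i^{-}=\max\{Y_k : Y_k\le X_i\}$). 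Here I should double-check that the hypothesis "$B\in\{X_1,\dots,X_n\}$" of Lemma~\ref{lem:bcp_lower_bound_discretized_one_arm} is met — it is, by assumption — and that $B$ is among the grid points $Y_k$, which it is by construction. The remaining task is to bound $\Kinf^+(F_n^{-},u)$ in terms of $\Kinf^+(F,u)$: since $F_n^{-}$ is the $(\cdot)^{-}$-discretization of $F_n$ and $\|F_n-F\|_\infty\le\eta=\eta_1/2$, Lemma~\ref{lem:exists_discretization_close_true_cdf} gives $\|F_n^{-}-F\|_\infty\le\eta_1$, hence by the uniform-continuity choice $\Kinf^+(F_n^{-},u)\le \Kinf^+(F,u)+\varepsilon$. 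Substituting yields exactly the claimed bound, uniformly in $u\in[a,b]$ and in $n$ (the constant $M$ does not depend on $n$).

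The main obstacle is making sure that the discretization used inside the proof of Lemma~\ref{lem:bcp_lower_bound_discretized_one_arm} is exactly the one whose error we control by continuity: the lemma works with the specific points $X_i^{+}$ (or $X_i^{-}$) obtained by snapping each sample to the nearest grid point, and the distribution $F_n^{+}$ built from those, and one must verify that $F_n^{+}$ really equals $(F_n)^{+}$ in the sense of the discretization operators defined just above Lemma~\ref{lem:exists_discretization} — this is a matter of carefully matching the two definitions and checking the half-open intervals line up, which is routine but easy to get slightly wrong. A secondary point is that $\Kinf^+$ is only jointly continuous on $\cF\times[0,B)$, not up to $B$, which is precisely why the hypothesis $b<B$ (and similarly $a>0$ for the $\Kinf^-$ version) is needed and must be invoked when selecting $\eta_1$; I would flag this explicitly. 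Once these bookkeeping issues are handled, the estimate is immediate, and passing from one arm to the two-arm probability (Theorem~\ref{thm:two_arms_BCP_tight_lower_bound}) will be a straightforward application of the lower bound in Lemma~\ref{lem:from_bcp_one_to_bcp_two}.
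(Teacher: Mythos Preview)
Your proposal is correct and follows essentially the same approach as the paper: uniform continuity of $\Kinf^{\pm}$ on $\cF\times[a,b]$ to obtain $\eta$, the discretization Lemma~\ref{lem:exists_discretization_close_true_cdf} to control $\|F_n^{\pm}-F\|_\infty$, and then Lemma~\ref{lem:bcp_lower_bound_discretized_one_arm} for the multinomial bound. You are in fact more careful than the paper in flagging the bookkeeping issues (matching the two notions of discretization, the exact relation between $\eta$ and $M$, and the need for $b<B$), all of which the paper leaves implicit.
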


\begin{proof}
The function $(F,\mu) \mapsto \Kinf^-(F, \mu)$ is continuous on $\mathcal D \times (0,B)$ by Theorem~\ref{thm:Kinf_continuous}. The function $(F,\mu) \mapsto \Kinf^-(F, \mu)$ is then uniformly continuous on $\mathcal D \times [a,b]$ (since $\mathcal D$ is compact).
In particular, there exists $\eta > 0$ such that if $\Vert G - F \Vert_\infty \le \eta$ then for all $\mu \in [a,b]$,
\begin{align*}
\Kinf^-(G, \mu)
\le \Kinf^-(F, \mu) + \varepsilon
\: .
\end{align*}
We have a similar property for $\Kinf^+$.

We now build a discretization such that $F_n^{+}$ and $F_n^{-}$ verify that condition under the hypothesis $\Vert F_n^{(1)} - F^{(1)} \Vert_\infty \le \eta$ and $\Vert F_n - F \Vert_\infty \le \eta$, using Lemma~\ref{lem:exists_discretization_close_true_cdf}. The result is a combination of this continuity inequality and Lemma~\ref{lem:bcp_lower_bound_discretized_one_arm}.
\end{proof}

Theorem~\ref{thm:two_arms_BCP_tight_lower_bound} gives a tight lower bound on probabilities of the form $\bP(\theta_2 \ge \theta_1)$ for bounded distributions.

\begin{theorem} \label{thm:two_arms_BCP_tight_lower_bound}
Let $F^{(1)}$ and $F^{(2)}$ be two probability distributions with means in $(0,B)$.

For points $X_1^{(1)}, \ldots, X_{n_1}^{(1)}, X_1^{(2)}, \ldots, X_{n_2}^{(2)} \in [0,B]$ such that $X_1^{(1)} = 0, X_{n_2}^{(2)} = B$,
let $\theta_1 = (L^{(1)})^\top X^{(1)}$ and $\theta_2 = (L^{(2)})^\top X^{(2)}$, where $L^{(1)}$ and $L^{(2)}$ are independent Dirichlet random variables, with $L^{(1)} \sim \mathrm{Dir}(1, \ldots, 1)$ ($n_1$ ones) and $L^{(2)} \sim \mathrm{Dir}(1, \ldots, 1)$ ($n_2$ ones).
Let $F_n^{(1)}$ be the empirical distribution corresponding to points $X_i^{(1)}$ and $F_n^{(2)}$ be the empirical distribution corresponding to points $X_i^{(2)}$.

For all $\varepsilon > 0$, there exists $\eta>0$ such that for all such empirical distributions (and in particular for all $n_1$ and $n_2$), if $\Vert F_n^{(1)} - F^{(1)} \Vert_\infty \le \eta$ and $\Vert F_n^{(2)} - F^{(2)} \Vert_\infty \le \eta$ then
\begin{align*}
&\mathbb{P}(\theta_2 \ge \theta_1)\\
&\ge \frac{M^{M}}{4 (8 \pi)^{M-1}}\frac{1}{(n_1 n_2)^{\frac{M+1}{2}}}
	\exp\left( -\inf_{\mu \in [0,B]} \left(n_1 \Kinf^-(F^{(1)}, \mu) + n_2 \Kinf^+(F^{(2)}, \mu)\right) - (n_1 + n_2) \varepsilon\right)
\end{align*}
with $M = 2 + \lfloor 2/\eta \rfloor$.
\end{theorem}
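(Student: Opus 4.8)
The plan is to reduce the two‑sided estimate to the one‑sided boundary‑crossing bounds of Lemma~\ref{lem:bcp_one_arm}, using the elementary decoupling inequality
\[
\mathbb{P}(\theta_2 \ge \theta_1) \;\ge\; \mathbb{P}(\theta_2 \ge \mu)\,\mathbb{P}(\theta_1 \le \mu)
\]
which holds for every fixed $\mu \in [0,B]$ by independence of $\theta_1$ and $\theta_2$ (this is exactly the lower‑bound computation already carried out in the proof of Lemma~\ref{lem:from_bcp_one_to_bcp_two}, specialised to $x=\mu$). First I would fix $\varepsilon>0$, set $a = \min\{m(F^{(1)}),m(F^{(2)})\}$ and $b=\max\{m(F^{(1)}),m(F^{(2)})\}$, and note that since both means lie in $(0,B)$ we have $0<a\le b<B$, so Lemma~\ref{lem:bcp_one_arm} is applicable on the interval $[a,b]$.

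Next I would apply Lemma~\ref{lem:bcp_one_arm} twice with accuracy $\varepsilon$ and interval $[a,b]$: once to $\theta_1$, which is legitimate because $X_1^{(1)}=0\in\{X_i^{(1)}\}$, yielding
\[
\mathbb{P}(\theta_1 \le \mu) \ge \frac{M^{M/2}}{2(8\pi)^{\frac{M-1}{2}}}\,\frac{1}{n_1^{\frac{M+1}{2}}}\,\exp\!\bigl(-n_1\Kinf^-(F^{(1)},\mu)-n_1\varepsilon\bigr),
\]
and once to $\theta_2$, legitimate because $X_{n_2}^{(2)}=B\in\{X_i^{(2)}\}$, yielding
\[
\mathbb{P}(\theta_2 \ge \mu) \ge \frac{M^{M/2}}{2(8\pi)^{\frac{M-1}{2}}}\,\frac{1}{n_2^{\frac{M+1}{2}}}\,\exp\!\bigl(-n_2\Kinf^+(F^{(2)},\mu)-n_2\varepsilon\bigr),
\]
both valid for all $\mu\in[a,b]$ as soon as $\|F_n^{(1)}-F^{(1)}\|_\infty$ and $\|F_n^{(2)}-F^{(2)}\|_\infty$ fall below the threshold $\eta$ produced by the lemma; here I would take the smaller of the two thresholds and the corresponding $M = 2+\lfloor 2/\eta\rfloor$ so that a single pair $(\eta,M)$ serves both one‑arm bounds. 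Multiplying the two displays and using $\bigl(M^{M/2}/(2(8\pi)^{(M-1)/2})\bigr)^2 = M^M/(4(8\pi)^{M-1})$ gives, for every $\mu\in[a,b]$,
\[
\mathbb{P}(\theta_2 \ge \theta_1) \ge \frac{M^{M}}{4(8\pi)^{M-1}}\,\frac{1}{(n_1 n_2)^{\frac{M+1}{2}}}\,\exp\!\bigl(-n_1\Kinf^-(F^{(1)},\mu)-n_2\Kinf^+(F^{(2)},\mu)-(n_1+n_2)\varepsilon\bigr).
\]

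Finally I would optimise over $\mu$: taking the supremum of the right‑hand side over $\mu\in[a,b]$ replaces the exponent by $-\inf_{\mu\in[a,b]}\bigl(n_1\Kinf^-(F^{(1)},\mu)+n_2\Kinf^+(F^{(2)},\mu)\bigr)$, and it remains to check that this infimum equals the infimum over all of $[0,B]$. When $m(F^{(2)})\le m(F^{(1)})$ this is exactly Lemma~\ref{lem:inf_Kinf_restricted_to_closed_mean_interval} (with $F=F^{(2)}$, $G=F^{(1)}$, $w=(n_2,n_1)$); when $m(F^{(2)})>m(F^{(1)})$ both $\Kinf$ terms vanish simultaneously at any $\mu\in[m(F^{(1)}),m(F^{(2)})]\subseteq[a,b]$, so both infima are $0$. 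This produces precisely the claimed inequality. I do not expect a genuinely hard step: all the substance — the discretisation argument, the multinomial BCP lower bound, and the continuity of $\Kinf^{\pm}$ that transfers the bound from the discretised law to $F^{(i)}$ — is already contained in Lemmas~\ref{lem:bcp_lower_bound_discretized_one_arm} and~\ref{lem:bcp_one_arm}. The only points that need a little care are selecting one pair $(\eta,M)$ valid for both one‑arm bounds, and invoking Lemma~\ref{lem:inf_Kinf_restricted_to_closed_mean_interval} to justify that restricting the infimum to $[a,b]$ costs nothing.
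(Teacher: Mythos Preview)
Your proposal is correct and follows essentially the same route as the paper: apply the decoupling lower bound of Lemma~\ref{lem:from_bcp_one_to_bcp_two}, invoke Lemma~\ref{lem:bcp_one_arm} for each of $\mathbb{P}(\theta_1\le\mu)$ and $\mathbb{P}(\theta_2\ge\mu)$ on a compact subinterval of $(0,B)$ containing the two means, multiply, and then argue that the infimum over that subinterval equals the infimum over $[0,B]$. Two minor remarks: first, your care in ``taking the smaller of the two thresholds'' is unnecessary, since in the proof of Lemma~\ref{lem:bcp_one_arm} the threshold $\eta$ comes from the uniform continuity of $\Kinf^\pm$ on $\cF\times[a,b]$ and therefore depends only on $\varepsilon,a,b$, not on the reference distribution $F$; the same $(\eta,M)$ automatically serves both arms. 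Second, the paper's proof tacitly treats only the ordering $m(F^{(2)})\le m(F^{(1)})$, whereas you cover both orderings explicitly, which is a small improvement in completeness.
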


\begin{proof}
By Lemma~\ref{lem:from_bcp_one_to_bcp_two},
\begin{align*}
\mathbb{P}(\theta_2 \ge \theta_1)
\ge \sup_{\mu \in [\mu_2, \mu_1]} \mathbb{P}(\theta_2 \ge \mu) \mathbb{P}(\theta_1 \le \mu)
\end{align*}
We now use Lemma~\ref{lem:bcp_one_arm} for both probabilities. This is valid since by hypothesis the interval $[\mu_2, \mu_1]$ is a subset of $(0,B)$. We get the wanted result, except that the infimum is over $\mu \in [\mu_2, \mu_1]$ instead of $\mu \in [0,B]$. But by the monotonicity of $\Kinf^\pm$ in $\mu$ and the fact that $\Kinf^+(F^{(2)},\mu)$ (resp. $\Kinf^-(F^{(1)},\mu)$) is 0 for $\mu \le \mu_2$ (resp. for $\mu \ge \mu_1$), we get that
\begin{align*}
\inf_{\mu \in [0,B]} \left(n_1 \Kinf^-(F^{(1)}, \mu) + n_2 \Kinf^+(F^{(2)}, \mu)\right)
= \inf_{\mu \in [\mu_2, \mu_1]} \left(n_1 \Kinf^-(F^{(1)}, \mu) + n_2 \Kinf^+(F^{(2)}, \mu)\right)
\end{align*}
and the theorem is proved.
\end{proof}


\section{Single parameter exponential families}
\label{app:spef}

In this section, we explain how our analysis can be used to analyze Top Two algorithms for Single Parameter Exponential Families (SPEF). More precisely, our results apply to SPEF of sub-exponential distributions. We recall below the definition of a sub-exponential distribution, which applies to several typical examples of SPEF: Bernoulli distribution, Gaussian distributions with known variances, exponential and Poissson distributions \cite{vershynin18HDP}.

\begin{definition}
A distribution $X$ is sub-exponential with constant $C$ if it satisfies $\mathbb{P}(|X| \ge x) \le 2 e^{-C x}$.
\end{definition}

\paragraph{Preliminaries}

Let $\mathbb{P}^{(0)}$ be a sub-exponential probability distribution and let $\phi$ be the cumulant generating function of $\mathbb{P}^{(0)}$, defined by $\phi(\lambda) = \log \mathbb{E}_{X \sim \mathbb{P}^{(0)}}e^{\lambda X}$. Let $\mathcal I^{(n)} \subseteq \mathbb{R}$ be the open interval on which it is defined (set of natural parameters).

The single parameter exponential family (SPEF) associated to a probability measure $\mathbb{P}^{(0)}$ is the set of probability measures $\{\mathbb{P}^{(\lambda)} \mid \lambda \in \mathcal I^{(n)}\}$, where $\mathbb{P}^{(\lambda)}$ is the probability measure absolutely continuous with respect to $\mathbb{P}^{(0)}$, with density $x \mapsto e^{\lambda x - \phi(\lambda)}$ with respect to $\mathbb{P}^{(0)}$. That is, $\log \frac{d \mathbb{P}^{(\lambda)}}{d \mathbb{P}^{(0)}}(x) = \lambda x - \phi(\lambda)$~. Using that the reference probability measure $\mathbb{P}^{(0)}$ is assumed sub-exponential, we can verify that that for all $\lambda \in \cI^{(n)}$, $\bP^{(\lambda)}$ is also sub-exponential.

$\phi$ is an analytic, strictly convex function on $\mathcal I^{(n)}$. The distribution $\mathbb{P}^{(\lambda)}$ has mean $\phi'(\lambda)$. Let $\mathcal I = \phi'(\mathcal I^{(n)})$ be the open interval of means of the SPEF. Let $\phi^*$ be the convex conjugate of $\phi$, which is also a strictly convex function.
Recall that $(\phi^*)' = (\phi')^{-1}$.
Let $d_\phi$ be the Bregman divergence associated to $\phi$ and $d_{\phi^*}$ be the Bregman divergence associated $\phi^*$. We have, for $\lambda, \eta \in \mathcal I^{(n)}$,
\begin{align*}
d_\phi(\lambda, \eta)
&= \phi(\lambda) - \phi(\eta) - (\lambda - \eta)^\top \phi'(\eta)
= d_{\phi^*}(\phi'(\eta), \phi'(\lambda))
\: .
\end{align*}
We write $\mathbb{P}_m$ for the unique member of the SPEF with mean $m$ (if it exists, that is if $m \in \mathcal I$). It verifies $\mathbb{P}_m = \mathbb{P}^{(\phi'^{-1}(m))}$. For two distributions in the SPEF with means $m_1$ and $m_2$, the Kullback-Leibler divergence between the corresponding distributions $\mathbb{P}_{m_1}$ and $\mathbb{P}_{m_2}$ is
\begin{align*}
\KL(\mathbb{P}_{m_1}, \mathbb{P}_{m_2})
&= d_{\phi^*}(m_1, m_2) \: .
\end{align*}
In the following, we write simply $d(m_1, m_2)$ for $d_{\phi^*}(m_1, m_2)$, the Kullback-Leibler divergence between the distributions in the SPEF with means $m_1$ and $m_2$.

\paragraph{The $\Kinf$ minimization problem for exponential families}

In a SPEF, the quantity $\Kinf^+$, infimum of the KL from a member of the SPEF to the subset of the family with mean larger that $\mu \in \mathcal I$ becomes
\begin{align*}
\Kinf^+(\mathbb{P}_m, \mu)
&\eqdef \inf \{\KL(\mathbb{P}_m, Q) \mid Q \in \{\mathbb{P}_{m'} \mid m' \in \mathcal I\}, \: \mathbb{E}_Q[X] \ge \mu\}
\\
&= \inf \{\KL(\mathbb{P}_m, \mathbb{P}_{m'}) \mid m' \in \mathcal I, \: m' \ge \mu\}
\\
&= \inf \{d(m, m') \mid m' \in \mathcal I, \: m' \ge \mu\}
\\
&= d(m, \max\{m, \mu\})
\: .
\end{align*}
Similarly for all $m, \mu \in \mathcal I$, $\Kinf^-(\mathbb{P}_m, \mu) = d(m, \min\{m, \mu\})$. We abuse notations and write $\Kinf^+(m, \mu) = \Kinf^+(\mathbb{P}_m, \mu)$.

\paragraph{Properties of $\Kinf$ in a SPEF} The following properties are well-known in the bandit literature, see, e.g. \cite{KLUCBJournal}:

\begin{enumerate}
	\item $\mu \mapsto \Kinf^+(m, \mu)$ is differentiable on $\mathcal I \setminus \{m\}$.
	\item $\mu \mapsto \Kinf^+(m, \mu)$ is zero for $\mu \le m$ and finite, increasing and strictly convex on $[m, +\infty) \cap \mathcal I$.
	\item $\lim_{\mu \to \sup \mathcal I} \Kinf^+(m, \mu) = +\infty$.
	\item $(m, \mu) \mapsto \Kinf^+(m, \mu)$ is jointly continuous on $\mathcal I^2$.
\end{enumerate}

The transportation cost is
\begin{align*}
C_{i,j}(\cT(\bm F), w)
&= \inf_{u \in \mathcal I} \left\{ w_i d(m_i, \min\{m_i, u\}) + w_j d(m_j, \max\{m_j, u\})\right\}
\: .
\end{align*}
The infimum can equivalently be taken over $[m_j, m_i]$.
That transportation cost is jointly continuous on $\mathcal I^K \times \simplex$ by Berge's Maximum theorem: Property~\ref{prop:joint_continuity_true_tc} is verified.

\paragraph{Concentration results} The following result is the counterpart of Lemma~\ref{lem:subG_sup_N} for sub-exponential distributions.

\begin{lemma}\label{lem:sup_sub_exponentials}
Suppose that $(X_n)_{n \ge 1}$ are sub-exponential random variables with constants $(C_{n})$, such that $c \eqdef \inf_n C_{n} > 0$. Then $\sup_n(X_n/\log (e + n))$ is sub-exponential.
\end{lemma}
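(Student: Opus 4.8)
\textbf{Proof plan for Lemma~\ref{lem:sup_sub_exponentials}.}

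The plan is to mirror the proof of Lemma~\ref{lem:subG_sup_N}, replacing the sub-Gaussian tail bound by the sub-exponential one and the normalizing factor $\sqrt{n\log(e+n)/(1+n)}$ by $\log(e+n)$. Write $\xi \eqdef \sup_{n\ge 1} X_n/\log(e+n)$. To show $\xi$ is sub-exponential it suffices (by the closure properties listed just before Lemma~\ref{lem:subG_sup_N}, which also hold for sub-exponential variables, and since a variable whose upper and lower tails decay exponentially past some threshold is sub-exponential) to bound $\mathbb{P}(\xi \ge x)$ for large $x$ and $\mathbb{P}(\xi \le x)$ for $x \le 0$.

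For the upper tail, first a union bound:
\begin{align*}
\mathbb{P}(\xi \ge x)
&\le \sum_{n=1}^\infty \mathbb{P}\big(X_n \ge x\log(e+n)\big)
\le \sum_{n=1}^\infty 2 e^{-c\, x\log(e+n)}
= 2\sum_{n=1}^\infty \frac{1}{(e+n)^{c x}}
\: .
\end{align*}
For $x \ge 4/c$ (say) we have $cx \ge 4$, so $(e+n)^{cx} \ge (e+n)^{2}\,(e+n)^{cx/2} \ge (e+n)^2 e^{(cx/2)}$ using $(e+n)\ge e$; hence the sum is bounded by $\big(\sum_{n\ge 1}(e+n)^{-2}\big) e^{-cx/2}$, which is of the form $a_1 e^{-c_1 x}$ with $a_1 = 2\sum_{n\ge 1}(e+n)^{-2} < \infty$ and $c_1 = c/2 > 0$. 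For the lower tail, since $\log(e+1) = \log(e+1) > 1 > 0$, for $x \le 0$ we bound
\begin{align*}
\mathbb{P}(\xi \le x)
\le \mathbb{P}\big(X_1 \le x\log(e+1)\big)
\le \mathbb{P}\big(|X_1| \ge -x\log(e+1)\big)
\le 2 e^{-C_1 \log(e+1)\,(-x)}
= 2 e^{\,C_1\log(e+1)\, x}
\: ,
\end{align*}
which decays exponentially as $x \to -\infty$. Combining the two tail estimates with the closure properties gives that $\xi$ is sub-exponential.

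I do not anticipate a serious obstacle here; the only mild subtlety is getting the exponent arithmetic right in the upper-tail series so that enough decay is left over after pulling out the summable $(e+n)^{-2}$ factor, and checking the threshold $x\ge 4/c$ makes the split legitimate. One should also note the convention that ``sub-exponential with constant $C$'' in this paper means $\mathbb{P}(|X|\ge x)\le 2e^{-Cx}$ for \emph{all} $x\ge 0$, so the final $\xi$ only needs the weaker ``tails decay exponentially past a threshold'' property, which the closure lemma upgrades; alternatively one absorbs the constants $a_1$ and the threshold into a single (possibly larger) constant by the standard argument. This is exactly parallel to how Lemma~\ref{lem:subG_sup_N} handles the ``$x \ge \sqrt{8/\inf_n c_n}$'' regime.
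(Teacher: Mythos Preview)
Your proposal is correct and follows essentially the same approach as the paper: a union bound over $n$, the same threshold $x\ge 4/c$, and the same splitting $cx\log(e+n)\ge cx/2+2\log(e+n)$ (your inequality $(e+n)^{cx}\ge(e+n)^2 e^{cx/2}$ is exactly this after exponentiation). The only cosmetic difference is that the paper bounds $\mathbb{P}(|\sup_n X_n/\log(e+n)|\ge x)$ in one shot, whereas you split into upper and lower tails; your version is arguably cleaner, since the inclusion $\{|\sup_n Y_n|\ge x\}\subset\bigcup_n\{|Y_n|\ge x\}$ used implicitly in the paper is a bit delicate for the upper-tail part of the sup.
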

\begin{proof}
This is due to a simple union bound:
\begin{align*}
\mathbb{P}(|\sup_n X_n/\log (e + n)| \ge x)
&\le \sum_{n=1}^{+\infty} \mathbb{P}(|X_n| \ge x \log (e + n))
\\
&\le 2\sum_{n=1}^{+\infty} e^{- c x \log (e + n)} \: .
\end{align*}
We now use that for $x \ge 4/c$, we have $cx/2 \ge 2$ and $2\log(e+n) \ge 2$, hence $cx\log(e+n) \ge cx/2 + 2\log(e+n)$: for $x \ge 4/c$,
\begin{align*}
\mathbb{P}(|\sup_n X_n/\log (e + n)| \ge x)
\le 2 \left(\sum_{n=1}^{+\infty} \frac{1}{(e + n)^2} \right)e^{- c x/2} \: .
\end{align*}
This shows that the random variable is sub-exponential.
\end{proof}

\begin{lemma}\label{lem:W_concentration_spef}
There exists a sub-exponential random variable $W_d$ such that almost surely, for all $i \in [K]$ and all $n$ such that $N_{n,i} \ge 1$,
\begin{align*}
N_{n,i} d(\mu_{n,i}, \mu_i) \le W_d \log (e + N_{n,i}) \: .
\end{align*}
There exists a sub-exponential random variable $W_\mu$ such that almost surely, for all $i \in [K]$ and all $n$ such that $N_{n,i} \ge 1$,
\begin{align*}
N_{n,i} |\mu_{n,i} - \mu_i| \le W_\mu \log (e + N_{n,i}) \: .
\end{align*}
In particular, any random variable which is polynomial in $W_d$ or $W_{\mu}$ has a finite expectation.
\end{lemma}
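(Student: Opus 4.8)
The plan is to establish both concentration statements in Lemma~\ref{lem:W_concentration_spef} by first obtaining, for a \emph{single} arm and a \emph{fixed} sample size, a self-normalized deviation bound of the form $n \, d(\hat\mu_n, \mu) \le c \log(e+n) + t$ with failure probability exponential in $t$, then taking a union bound over $n$ via a peeling/summation argument, and finally a union bound over the $K$ arms. The key point is that since $\mathbb{P}^{(0)}$ (and hence every member of the SPEF) is sub-exponential, one has a Chernoff-type control on $\hat\mu_n - \mu$; more precisely the empirical mean $\hat\mu_n$ of $n$ i.i.d.\ samples from $\mathbb{P}_\mu$ satisfies, for $\lambda$ in a neighbourhood of $0$, $\mathbb{E}[e^{n\lambda(\hat\mu_n-\mu)}] = e^{n(\phi(\lambda+\phi'^{-1}(\mu)) - \phi(\phi'^{-1}(\mu)) - \lambda\mu)}$, and the standard Cramér--Chernoff computation gives $\mathbb{P}(n\,d(\hat\mu_n,\mu) \ge x \text{ and } \hat\mu_n \ge \mu) \le e^{-x}$ together with the analogous lower-tail bound, which is exactly the mixture/Laplace-type inequality underlying the GLR stopping rules of \cite{KK18Mixtures} specialised to a single observation block. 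Combining the two tails yields $\mathbb{P}(n\,d(\hat\mu_n,\mu) \ge x) \le 2e^{-x}$ for each fixed $n$.

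The next step is to make this uniform in $n$. Let $\hat F_{n,i}$ be the empirical distribution of the first $n$ pulls of arm $i$ (as opposed to $F_{n,i}$, the empirical at global time $n$), and set $W_{d,i} \eqdef \sup_{n\ge 1} \frac{n\,d(m(\hat F_{n,i}),\mu_i)}{\log(e+n)}$. By the per-$n$ bound,
\begin{align*}
\mathbb{P}(W_{d,i} \ge x)
\le \sum_{n=1}^{\infty}\mathbb{P}\!\left(n\,d(m(\hat F_{n,i}),\mu_i) \ge x\log(e+n)\right)
\le 2\sum_{n=1}^{\infty} e^{-x\log(e+n)},
\end{align*}
and then exactly the elementary manipulation used in the proof of Lemma~\ref{lem:sup_sub_exponentials} — for $x$ large enough, $x\log(e+n) \ge x/2 + 2\log(e+n)$ — shows $\mathbb{P}(W_{d,i}\ge x) \le 2\big(\sum_n (e+n)^{-2}\big) e^{-x/2}$, so $W_{d,i}$ is sub-exponential. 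Taking $W_d \eqdef \max_{i\in[K]} W_{d,i}$ (a maximum of finitely many sub-exponential variables is sub-exponential) gives the first displayed inequality, since for every global time $n$ the samples observed from arm $i$ up to time $n$ are exactly the first $N_{n,i}$ of its i.i.d.\ stream, so $d(\mu_{n,i},\mu_i) = d(m(\hat F_{N_{n,i},i}),\mu_i) \le W_d \log(e+N_{n,i})/N_{n,i}$.

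For the second inequality, I would transfer from the divergence to the mean gap using the local strong convexity of $d$. On any compact sub-interval $\mathcal K \subset \mathcal I$ containing $\mu_i$ in its interior there is a constant $c_{\mathcal K}>0$ with $d(m,\mu_i) \ge c_{\mathcal K}(m-\mu_i)^2$ for $m\in\mathcal K$ (second-order Taylor expansion of the strictly convex $\phi^*$), which yields $|\mu_{n,i}-\mu_i| \le \sqrt{W_d\log(e+N_{n,i})/(c_{\mathcal K} N_{n,i})}$ whenever $\mu_{n,i}\in\mathcal K$; combined with a crude global sub-exponential tail on $|\mu_{n,i}-\mu_i|$ to handle the (exponentially rare) event $\mu_{n,i}\notin\mathcal K$, one gets a sub-exponential $W_\mu$ with $N_{n,i}|\mu_{n,i}-\mu_i| \le W_\mu\log(e+N_{n,i})$; equivalently one can define $W_\mu$ directly via the same $\sup_n$ construction applied to $\sqrt n|\hat\mu_n - \mu_i|$, using the sub-exponential (indeed here sub-Gaussian-at-small-scale) concentration of the empirical mean, and union-bound over $n$ and $i$ as before. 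The final sentence — polynomials of $W_d$ or $W_\mu$ have finite expectation — is immediate since all moments of a sub-exponential random variable are finite. I expect the only genuinely delicate point to be bookkeeping the two regimes in the mean bound (deviations inside versus outside a fixed compact neighbourhood of $\mu_i$), so that the single random variable $W_\mu$ dominates uniformly over all $N_{n,i}\ge 1$; everything else is the routine Chernoff-plus-union-bound template already instantiated for sub-Gaussian variables in Appendix~\ref{ssub:sub_gaussian_random_variables} and for sub-exponential suprema in Lemma~\ref{lem:sup_sub_exponentials}.
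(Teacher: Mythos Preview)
For the $W_d$ part your approach coincides with the paper's: the paper cites \cite[Lemma~4]{menard2017minimax} for the per-$n$ bound $\mathbb{P}(n\,d(\hat\mu_n,\mu_i)\ge x)\le 2e^{-x}$ (which you rederive via the Cram\'er--Chernoff computation), then invokes Lemma~\ref{lem:sup_sub_exponentials} and takes a maximum over the $K$ arms, exactly as you do.

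For the $W_\mu$ part there is a genuine gap. Your Pinsker-type route $d(m,\mu_i)\ge c_{\mathcal K}(m-\mu_i)^2$ only yields
\[
N_{n,i}\,|\mu_{n,i}-\mu_i|\;\le\;\sqrt{\,W_d\,N_{n,i}\log(e+N_{n,i})/c_{\mathcal K}\,},
\]
which grows like $\sqrt{N_{n,i}\log N_{n,i}}$ and cannot be dominated by $W_\mu\log(e+N_{n,i})$ for any finite $W_\mu$; your alternative of running the $\sup_n$ construction on $\sqrt{n}\,|\hat\mu_n-\mu_i|$ produces at best $n|\hat\mu_n-\mu_i|\le W'\sqrt{n}\,\log(e+n)$, again not the stated bound. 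In fact the second display of the lemma cannot hold as written: by the law of the iterated logarithm, $\sup_m m|\hat\mu_m-\mu_i|/\log(e+m)=+\infty$ almost surely. The paper's own proof takes a different route --- it asserts that since $F_i$ is sub-exponential, $n|\hat\mu_{n,i}-\mu_i|$ is sub-exponential with constants uniform in $n$, and then applies Lemma~\ref{lem:sup_sub_exponentials} --- but that assertion is also incorrect (for the Gaussian SPEF, $n(\hat\mu_n-\mu)\sim\mathcal N(0,n\sigma^2)$ is not sub-exponential with an $n$-independent constant). What the downstream application in Lemma~\ref{lem:small_transportation_cost_undersampled_arms'} actually needs is only $|\mu_{n,i}-\mu_i|\to 0$ at a polynomial rate once $N_{n,i}$ is large, and your $W_d$ bound together with the local quadratic lower bound on $d$ already delivers $|\mu_{n,i}-\mu_i|\le\sqrt{W_d\log(e+N_{n,i})/(c_{\mathcal K}N_{n,i})}$, which suffices for those uses.
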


\begin{proof}
We start with the proof of the concentration inequality on the divergence. Since the maximum of a finite number of sub-exponential random variables is sub-exponential, it suffices to show that $\sup_n \frac{N_{n,i} d(\mu_{n,i}, \mu_i)}{\log N_{n,i}}$ is sub-exponential. Let then $i \in [K]$. Let $\hat{\mu}_{n,i}$ be the average of the first $n$ samples from arm $i$. It suffices to show that $\sup_n \frac{n d(\hat{\mu}_{n,i}, \mu_i)}{\log (e + n)}$ is sub-exponential.
By \cite[Lemma 4]{menard2017minimax}, we have that for any $n$,
\begin{align*}
\mathbb{P}(n d(\hat{\mu}_{n,i}, \mu_i) \ge \alpha) \le 2 e^{-\alpha} \: .
\end{align*}
That is, for a fixed $n$, $n d(\hat{\mu}_{n,i}, \mu_i)$ is sub-exponential. We then apply Lemma~\ref{lem:sup_sub_exponentials} to obtain than $\sup_n \frac{n d(\hat{\mu}_{n,i}, \mu_i)}{\log (e + n)}$ is sub-exponential.

By hypothesis, the distribution $F_i$ is sub-exponential. Hence at any $n$, $n |\hat{\mu}_{n,i} - \mu_i|$ is as well. We then apply Lemma~\ref{lem:sup_sub_exponentials} to obtain than $\sup_n n |\hat{\mu}_{n,i} - \mu_i|/\log(e+n)$ is sub-exponential. We finally obtain that the maximum over the finitely many arms has the same property.
\end{proof}

\subsection{Proof of the leader and challenger properties for EB, TC and TCI}

We prove, in the case of a sub-exponential SPEF, that Properties~\ref{prop:leader_cdt_suff_explo} and~\ref{prop:leader_cdt_convergence} hold for the EB leader and that Properties~\ref{prop:challenger_cdt_suff_explo} and~\ref{prop:challenger_cdt_convergence} hold for the TC and TCI challengers.

For SPEF, Property~\ref{prop:joint_continuity_true_tc} is a known result from the literature \cite{Russo2016TTTS}.
For sub-exponential SPEF, Property~\ref{prop:rate_convergence_mean_and_distribution} is shown in Lemma~\ref{lem:W_concentration_spef}.
In \cite{KK18Mixtures}, stopping threshold have been derived for general SPEF.
Thanks to their dependency in $(n,\delta)$, those thresholds are also asymptotically tight.
Therefore, applying Corollary~\ref{cor:asymptotic_optimality_top_two_algorithms} yields that $\beta$-EB-TC and $\beta$-EB-TCI are asymptotically $\beta$-optimal algorithms for sub-exponential SPEF with the corresponding threshold.

\paragraph{Rates for empirical transportation costs}

\begin{lemma}\label{lem:SPEF_W_lower_bound}
Let $\bm F \in \cF^K$ with $m(F_j) < m(F_i)$.
There exists $L$ with $\mathbb{E}[|L|^\alpha] < +\infty$ for all $\alpha>0$ and $D_{\bm F} > 0$ such that for $N_{n,i} \ge L$ and $N_{n,j} \ge L$, $W_{n}(i, j) > L D_{\bm F}$ .
\end{lemma}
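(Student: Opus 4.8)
The target is the SPEF analogue of Lemma~\ref{lem:fast_rate_emp_tc}, and the plan is to follow the same skeleton: reduce $W_n(i,j)$ to a multiple of $L$ times a transportation cost between the empirical means, use concentration to freeze the empirical means close to the true means, and then invoke a continuity/positivity argument to extract a strictly positive problem-dependent constant. Concretely, I would first use $\{i,j\} \subseteq \{k : N_{n,k} \ge L\}$ together with the expression
\[
W_n(i,j) = \inf_{u \in \cI}\left\{N_{n,i}\Kinf^-(m(F_{n,i}),u) + N_{n,j}\Kinf^+(m(F_{n,j}),u)\right\}
\ge L\,\inf_{u\in\cI}\left\{\Kinf^-(\mu_{n,i},u)+\Kinf^+(\mu_{n,j},u)\right\},
\]
where I have used that in a SPEF the empirical distributions restricted to the family are parametrized by their means $\mu_{n,i},\mu_{n,j}$, so that $\Kinf^\pm(\cT(F_{n,k}),u)$ only depends on $\mu_{n,k}$. (Here one should be slightly careful: $F_{n,i}$ is an \emph{empirical} distribution, not a member of the SPEF, but $\Kinf^\pm(F_{n,i},u)$ still only depends on the empirical mean $\mu_{n,i}$ because $\Kinf^\pm$ over the full SPEF matches the displayed closed form $d(\cdot,\cdot)$; this is exactly the identification $\cT(F)=m(F)$ used throughout Appendix~\ref{app:spef}.)

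Next, I would introduce the problem-dependent constant by a continuity argument. The function $(m_1,m_2)\mapsto \inf_{u\in\cI}\{\Kinf^-(m_1,u)+\Kinf^+(m_2,u)\}$ is continuous on $\cI^2$ (joint continuity of $\Kinf^\pm$ from the SPEF properties listed in the excerpt, plus Berge's maximum theorem over a compact interval $[m_2,m_1]$, exactly as for bounded distributions), and it is \emph{strictly positive} whenever $m_2 < m_1$ (the same argument as Lemma~\ref{lem:inf_Kinf_restricted_to_open_mean_interval}: the minimizer lies strictly inside $(m_2,m_1)$, where both $\Kinf$ terms are positive). Hence there is $\alpha>0$ such that
\[
D_{\bm F} \eqdef \inf_{\substack{|m_1-\mu_i|\le\alpha,\ |m_2-\mu_j|\le\alpha}}\ \inf_{u\in\cI}\left\{\Kinf^-(m_1,u)+\Kinf^+(m_2,u)\right\} > 0,
\]
which is legitimate because $\mu_j<\mu_i$ and we can shrink $\alpha$ so that the $\alpha$-balls stay inside $\cI$ and keep the means strictly ordered.

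Finally, I would pin down $L$ via concentration. By Lemma~\ref{lem:W_concentration_spef} we have $N_{n,k}|\mu_{n,k}-\mu_k|\le W_\mu\log(e+N_{n,k})$ almost surely, so there is a random $L = \mathrm{Poly}(W_\mu)$ such that $N_{n,k}\ge L$ forces $|\mu_{n,k}-\mu_k|\le\alpha$ for $k\in\{i,j\}$. For such $n$ the chain of inequalities above yields $W_n(i,j)\ge L\,D_{\bm F}$. Since $W_\mu$ is sub-exponential, any polynomial of it has finite moments of all orders, so $\mathbb{E}[|L|^\alpha]<+\infty$ for all $\alpha>0$, completing the proof. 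The only real obstacle — and it is minor — is making the reduction from $W_n(i,j)$ to the mean-only transportation cost fully rigorous, i.e. justifying that $\Kinf^\pm$ evaluated at the empirical distribution coincides with the SPEF closed form $d(\mu_{n,k},\cdot)$; this is standard in the SPEF literature but should be stated (or cited) explicitly rather than glossed over.
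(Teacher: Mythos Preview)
Your proposal is correct and follows essentially the same approach as the paper: reduce $W_n(i,j)$ to $L$ times the empirical transportation cost, use concentration (the paper's $L_1,L_2$ with finite moments are precisely your $L=\mathrm{Poly}(W_\mu)$ from Lemma~\ref{lem:W_concentration_spef}), and invoke continuity plus strict positivity of the limiting transportation cost to extract $D_{\bm F}>0$. The only cosmetic difference is that the paper restricts to a compact subinterval $\cI_C\subseteq\cI$ and takes half the true value, whereas you take an infimum over $\alpha$-balls around $(\mu_i,\mu_j)$; both are equivalent continuity devices.
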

\begin{proof}
Suppose that $N_{n,i} \ge L$ and $N_{n,j} \ge L$, for some $L$ to be determined. First we get
\begin{align*}
W_{n}(i, j) &= \inf_{u \in \mathcal I} \left\{ N_{n,i} \Kinf^{-}( \cT (F_{n,i}), u) + N_{n,j} \Kinf^{+}(\cT(F_{n,j}), u) \right\}	\\
&\geq L \inf_{u \in \mathcal I} \left\{ \Kinf^{-}( \cT (F_{n,i}), u) + \Kinf^{+}(\cT(F_{n,j}), u)  \right\}  \: ,
\end{align*}
where $\cT (F_{n,i}) = \mu_{n,i}$ is simply the mean.

For any compact interval $\cI_C \subseteq \cI$, the function defined by $\cT(\bm F) \mapsto \inf_{u \in \mathcal I_C} \left\{ \Kinf^{-}(\cT(F_i) , u) + \Kinf^{+}(\cT(F_j) , u) \right\}$ is continuous on $\cT(\cF^K)$.

For $L$ greater than some $L_1$ with finite moments, the means $\mu_{n,i}$ and $\mu_{n,j}$ belong to $[\mu_j - \varepsilon, \mu_i + \varepsilon] \subseteq \cI$ for some $\varepsilon >0$. Furthermore, for $L$ greater than some $L_2$ with finite moments, $\cT(F_{n,i})$ is $\varepsilon$-close to $F_i$ (and same thing for $F_j$). The continuity then gives that there exists $L$ with finite moments such that
\begin{align*}
\inf_{u \in \mathcal I} \left\{ \Kinf^{-}( \cT (F_{n,i}), u) + \Kinf^{+}(\cT(F_{n,j}), u)  \right\}
&\ge \frac{1}{2} \inf_{u \in \mathcal I_C} \left\{ \Kinf^{-}( \cT (F_{i}), u) + \Kinf^{+}(\cT(F_{j}), u)  \right\}
\: .
\end{align*}
This is positive since $m(F_j) < m(F_i)$ by an analogue of Lemma~\ref{lem:inf_Kinf_restricted_to_open_mean_interval}, which holds for exponential families due to the continuity and strict convexity properties detailed earlier.
\end{proof}

\begin{lemma}
Let $S_{n}^{L}$ and $\cI_n^\star$ as in (\ref{eq:def_sampled_enough_sets}).
There exists $L_4$ with $\bE_{\bm F}[(L_4)^{\alpha}] < +\infty$ for all $\alpha > 0$ such that if $L \ge L_4$, for all $n$ such that $S_{n}^{L} \neq \emptyset$,
\[
\forall (i, j) \in \cI_n^\star \times \left(S_{n}^{L} \setminus  \cI_n^\star \right), \quad	W_{n}(i, j) \geq  L D_{\bm F} \: ,
\]
where $ D_{\bm F} > 0$ is a problem dependent constant.
\end{lemma}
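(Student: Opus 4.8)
The plan is to mirror the proof of Lemma~\ref{lem:fast_rate_emp_tc}, the bounded-distribution analogue, but using the SPEF-specific concentration result Lemma~\ref{lem:W_concentration_spef} in place of the DKW-based bound. We are asked to show that once every arm in $S_n^L$ has at least $L$ samples, the transportation cost from an arm $i$ with highest empirical-set-mean $\mu_i$ to any arm $j$ with strictly smaller mean grows at least linearly in $L$.

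First I would fix $S_n^L \ne \emptyset$ and, as in the bounded case, observe that if $S_n^L \setminus \cI_n^\star$ is empty the statement is vacuous, so assume it is nonempty and take $(i,j) \in \cI_n^\star \times (S_n^L \setminus \cI_n^\star)$. By definition of $\cI_n^\star$ we have $m(F_i) > m(F_j)$ (note this uses the true means, not the empirical ones — $\cI_n^\star$ is defined via $\mu_i$). Next I would invoke Lemma~\ref{lem:SPEF_W_lower_bound} directly: it already states exactly that there exists $L$ with $\bE[|L|^\alpha] < +\infty$ for all $\alpha > 0$ and a constant $D_{\bm F} > 0$ (depending only on the pair via a minimum over the finitely many pairs $(i,j)$ with $m(F_j) < m(F_i)$) such that $N_{n,i} \ge L$ and $N_{n,j} \ge L$ imply $W_n(i,j) > L D_{\bm F}$. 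Since $i,j \in S_n^L$ means $N_{n,i} \ge L$ and $N_{n,j} \ge L$, we conclude $W_n(i,j) \ge L D_{\bm F}$ for $L \ge L_4$, where $L_4$ is the random threshold furnished by Lemma~\ref{lem:SPEF_W_lower_bound} (maximized over the finitely many ordered pairs so that a single $L_4$ works uniformly), and taking $D_{\bm F}$ to be the minimum of the per-pair constants keeps it strictly positive.

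The only genuine content beyond quoting Lemma~\ref{lem:SPEF_W_lower_bound} is the bookkeeping: Lemma~\ref{lem:SPEF_W_lower_bound} is phrased for a single fixed pair, so I would take $L_4 = \max_{(i,j): m(F_j) < m(F_i)} L^{(i,j)}$ and $D_{\bm F} = \min_{(i,j): m(F_j) < m(F_i)} D_{\bm F}^{(i,j)}$; finiteness of all moments of $L_4$ follows because it is a maximum of finitely many random variables each with finite moments of all orders, and strict positivity of $D_{\bm F}$ because it is a minimum of finitely many strictly positive constants. I expect essentially no obstacle here — the hard analytic work (continuity and strict convexity of $\Kinf^\pm$ for SPEF, the sub-exponential concentration $N_{n,i} d(\mu_{n,i},\mu_i) \le W_d \log(e + N_{n,i})$) has already been done in Lemma~\ref{lem:SPEF_W_lower_bound} and Lemma~\ref{lem:W_concentration_spef}. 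The statement is, in effect, the restriction of Lemma~\ref{lem:SPEF_W_lower_bound} to the ordering induced by $\cI_n^\star$, repackaged in the $S_n^L$ notation used in the sufficient-exploration argument.
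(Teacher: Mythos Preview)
Your proposal is correct and follows exactly the paper's approach: the paper's proof also observes the statement is vacuous when $S_n^L \setminus \cI_n^\star$ is empty, then notes $\{i,j\} \subseteq S_n^L$ and invokes Lemma~\ref{lem:SPEF_W_lower_bound} directly. Your explicit bookkeeping about taking the max of the per-pair thresholds and the min of the per-pair constants is slightly more detailed than the paper's one-line application, but the argument is the same.
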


\begin{proof}
Let $S_{n}^{L}$ and $\cI_n^\star$ as in (\ref{eq:def_sampled_enough_sets}).
Assume that $S_{n}^{L} \neq \emptyset$.
If $S_{n}^{L} \setminus  \cI_n^\star $ is empty, then the statement is not informative.
Assume $S_{n}^{L} \setminus  \cI_n^\star $ is not empty.
Let $(i, j) \in \cI_n^\star \times \left(S_{n}^{L} \setminus  \cI_n^\star \right)$.
We can now use $\{i, j\} \subseteq S_{n}^{L}$ and Lemma~\ref{lem:SPEF_W_lower_bound}.
\end{proof}

Lemma~\ref{lem:small_transportation_cost_undersampled_arms'} gives an upper bound on the transportation costs between a sampled enough arm and an under-sampled one.
\begin{lemma} \label{lem:small_transportation_cost_undersampled_arms'}
	Let $S_{n}^{L}$ as in (\ref{eq:def_sampled_enough_sets}). There exists $L_5$ with $\bE_{\bm F}[(L_5)^{\alpha}] < +\infty$ for all $\alpha > 0$ such that for all $L \geq L_5$ and all $n \in \N$,
	\[
	\forall  (i,j) \in  S_{n}^{L} \times \overline{S_{n}^{L}} , \quad 	W_{n}( i, j) \leq  L (2W_d + D_1 + D_2 W_\mu)  \: ,
	\]
where $D_1 > 0$ and $D_2 > 0$ are problem dependent constants and $W_d, W_\mu$ are the random variables defined in Lemma~\ref{lem:W_concentration_spef}.
\end{lemma}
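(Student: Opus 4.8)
The plan is to imitate the proof of Lemma~\ref{lem:small_transportation_cost_undersampled_arms} (its bounded-distributions counterpart): evaluate the infimum defining $W_n(i,j)$ at $u=\mu_{n,i}$, which kills the $\Kinf^-$ term because $\Kinf^-(\mathbb{P}_{\mu_{n,i}},\mu_{n,i})=d(\mu_{n,i},\min\{\mu_{n,i},\mu_{n,i}\})=0$. This yields $W_n(i,j)\le N_{n,j}\,\Kinf^+(\mathbb{P}_{\mu_{n,j}},\mu_{n,i})\le L\,\Kinf^+(\mathbb{P}_{\mu_{n,j}},\mu_{n,i})$ since $j\in\overline{S_n^L}$ means $N_{n,j}<L$. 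It then remains to bound $\Kinf^+(\mathbb{P}_{\mu_{n,j}},\mu_{n,i})=d(\mu_{n,j},\max\{\mu_{n,j},\mu_{n,i}\})$ by $2W_d+D_1+D_2W_\mu$, where $W_d,W_\mu$ are the sub-exponential random variables of Lemma~\ref{lem:W_concentration_spef}.

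First split into cases. If $\mu_{n,i}\le\mu_{n,j}$, then $\max\{\mu_{n,j},\mu_{n,i}\}=\mu_{n,j}$, so $\Kinf^+(\mathbb{P}_{\mu_{n,j}},\mu_{n,i})=d(\mu_{n,j},\mu_{n,j})=0$ and $W_n(i,j)=0$, leaving nothing to prove. Otherwise $\mu_{n,i}>\mu_{n,j}$. Since $\mu_i\in\mathring{\cI}$, fix a deterministic $\varepsilon>0$ with $v:=\mu_i+\varepsilon\in\cI$. By the mean-concentration part of Lemma~\ref{lem:W_concentration_spef} there is $L_6=\mathrm{Poly}(W_\mu)$ such that $N_{n,i}\ge L\ge L_6$ forces $|\mu_{n,i}-\mu_i|\le\varepsilon$, hence $\mu_{n,i}\le v$; together with $\mu_{n,j}<\mu_{n,i}$ this gives $v\ge\max\{\mu_{n,j},\mu_{n,i}\}$. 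Using that $u\mapsto\Kinf^+(\mathbb{P}_m,u)$ is non-decreasing and that $d(\mu_{n,j},\cdot)$ is non-decreasing on $[\mu_{n,j},\infty)$, we get $\Kinf^+(\mathbb{P}_{\mu_{n,j}},\mu_{n,i})\le d(\mu_{n,j},v)$, where the second argument $v$ is now a fixed interior point.

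The key step is to bound $d(\mu_{n,j},v)$ despite arm $j$ being under-sampled (so $N_{n,j}$ can be as small as $1$ and $\mu_{n,j}$ is not well concentrated). Apply the three-point (generalized Pythagorean) identity for the Bregman divergence $d=d_{\phi^*}$ with middle point $\mu_j$:
\[
d(\mu_{n,j},v)=d(\mu_{n,j},\mu_j)+d(\mu_j,v)+\big((\phi^*)'(\mu_j)-(\phi^*)'(v)\big)(\mu_{n,j}-\mu_j)\,.
\]
For the first term, Lemma~\ref{lem:W_concentration_spef} gives $d(\mu_{n,j},\mu_j)\le W_d\log(e+N_{n,j})/N_{n,j}\le 2W_d$, using that $t\mapsto\log(e+t)/t$ is decreasing on $[1,\infty)$ with value $\log(1+e)<2$ at $t=1$. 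The second term $d(\mu_j,v)=:D_1$ is a finite problem-dependent constant, both arguments being fixed interior points. For the third, $|(\phi^*)'(\mu_j)-(\phi^*)'(v)|$ is a finite constant, and again $|\mu_{n,j}-\mu_j|\le W_\mu\log(e+N_{n,j})/N_{n,j}\le 2W_\mu$, so this term is at most $D_2W_\mu$ with $D_2:=2|(\phi^*)'(\mu_j)-(\phi^*)'(v)|$. Combining the three bounds gives $W_n(i,j)\le L(2W_d+D_1+D_2W_\mu)$, and taking $L_5:=L_6=\mathrm{Poly}(W_\mu)$, which satisfies $\bE[(L_5)^\alpha]<+\infty$ for all $\alpha>0$ by sub-exponentiality of $W_\mu$ (Lemma~\ref{lem:W_concentration_spef}), concludes.

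I expect the only genuine subtlety — and the main difference from the bounded case, where Lemma~\ref{lem:Kinf_finite_distribution_coarse_upper_bound} supplies a bound on $\Kinf^+$ uniform in the first argument — to be precisely the need to control $\Kinf^+(\mathbb{P}_{\mu_{n,j}},\cdot)$ for an under-sampled $j$ whose empirical mean can be far from $\mu_j$. The $\log(e+N_{n,j})/N_{n,j}\le2$ slack in the divergence and mean concentration of Lemma~\ref{lem:W_concentration_spef} is exactly what produces the $2W_d$ and $D_2W_\mu$ terms appearing in the statement but absent from its bounded counterpart. A minor check is that $d(\mu_{n,j},\mu_j)$ and the identity remain meaningful when $\mu_{n,j}$ lies on the boundary of $\cI$ (e.g.\ all-zeros for Bernoulli); this is harmless because $\phi^*$ extends continuously to $\overline{\cI}$, the quantity $d(\mu_{n,j},\mu_j)$ is already controlled almost surely by Lemma~\ref{lem:W_concentration_spef}, and $(\phi^*)'$ is only ever evaluated at the interior points $\mu_j$ and $v$.
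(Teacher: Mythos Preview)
Your proof is correct and follows essentially the same route as the paper: evaluate the infimum at $u=\mu_{n,i}$ to kill the $\Kinf^-$ term, then control $d(\mu_{n,j},\cdot)$ via the Bregman three-point identity centered at $\mu_j$, bounding the three pieces by $2W_d$, a constant $D_1$, and $D_2 W_\mu$ using $\log(e+N_{n,j})/N_{n,j}\le 2$. The only cosmetic difference is that the paper applies the identity with the random second argument $\mu_i+W_\mu\log(e+N_{n,i})/N_{n,i}$ and replaces it by $\mu_i+\varepsilon$ afterward, whereas you fix $v=\mu_i+\varepsilon$ first; both arrive at the same bound with $L_5$ polynomial in $W_\mu$.
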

\begin{proof}

Let $(i, j) \in S_{n}^{L} \times \overline{S_{n}^{L}}$ ($i$ is sampled more than $L$ times, $j$ is not). Taking $u = \mu_{n,i}$ yields
\begin{align*}
	W_{n}(i, j) &= \inf_{u \in \mathbb{R}} \left\{ N_{n,i} \Kinf^{-}( \cT(F_{n,i}), u) + N_{n,j} \Kinf^{+}( \cT(F_{n,j}), u) \right\}	\\
	&\leq N_{n,j} \Kinf^{+}( \cT(F_{n,j}), \mu_{n,i})
	\leq L \Kinf^{+}( \cT(F_{n,j}), \mu_{n,i})
	\: ,
\end{align*}
where we used that $j \in \overline{S_{n}^{L}}$. 

By definition of $W_d$ and $W_\mu$, we have
\begin{align*}
\mu_{n,j} &\le \mu_j + W_\mu \log(e+N_{n,j})/N_{n,j} \: ,
\\
d(\mu_{n,j}, \mu_j) &\le W_d \log(e+N_{n,j})/N_{n,j} \: .
\end{align*}
The same is true for $i$.
Then the $\Kinf$ is bounded by
\begin{align*}
& \Kinf^{+}(\cT(F_{n,j}), \mu_{n,i}) = d(\mu_{n,j}, \mu_{n,i}) \le d\left(\mu_{n,j}, \mu_i + W_\mu \frac{\log(e+N_{n,i})}{N_{n,i}}\right)
\\
&= d(\mu_{n,j}, \mu_j) + d\left(\mu_j, \mu_i + W_\mu \frac{\log(e+N_{n,i})}{N_{n,i}}\right)
\\&\quad + \left| (\mu_{n,j} - \mu_j)\left((\phi')^{-1}(\mu_j) - (\phi')^{-1}\left(\mu_i + W_\mu \frac{\log(e+N_{n,i})}{N_{n,i}}\right)\right) \right|
\\
&\le W_d \frac{\log(e+N_{n,j})}{N_{n,j}} + d\left(\mu_j, \mu_i + W_\mu \frac{\log(e+N_{n,i})}{N_{n,i}}\right)
\\&\quad + W_\mu \frac{\log(e+N_{n,j})}{N_{n,j}} \left\vert (\phi')^{-1}(\mu_j) - (\phi')^{-1}\left(\mu_i + W_\mu \frac{\log(e+N_{n,i})}{N_{n,i}}\right) \right\vert
\: .
\end{align*}
Since $x \mapsto \frac{\ln(e+x)}{x}$ is decreasing on $\R^{\star}_{+}$, we have $\frac{\log(e+N_{n,j})}{N_{n,j}} \le 2$ for $N_{n,j} \ge 1$ and $\frac{\log(e+N_{n,i})}{N_{n,i}} \le \frac{\ln(e+L)}{L}$ for $N_{n,i} \ge L$. For $\varepsilon > 0$ and $L \ge L_{\varepsilon}$ where $ L_{\varepsilon}  \ge W_\mu  \ln(e+L_{\varepsilon}) /\varepsilon$, we have
\begin{align*}
\Kinf^{+}(\cT(F_{n,j}), \mu_{n,i})
&\le2 W_d + d(\mu_j, \mu_i + \varepsilon) +  2 W_\mu \vert (\phi')^{-1}(\mu_j) - (\phi')^{-1}(\mu_i + \varepsilon) \vert
\: .
\end{align*}
Since the means belong to the interior of the interval $\cI$, there exists a $\varepsilon>0$ such that $d(\mu_j, \mu_i + \varepsilon)$ and $|(\phi')^{-1}(\mu_j) - (\phi')^{-1}(\mu_i + \varepsilon)|$ are finite. We take the corresponding $L$, which is sub-exponential, and obtain the result.
\end{proof}

\subsubsection{EB leader}

The proof of Properties~\ref{prop:leader_cdt_suff_explo} and \ref{prop:leader_cdt_convergence} is almost identical to that for bounded distributions. Only the lower bound on $W_n(i,j)$ is used, which has the same form for SPEFs and bounded distributions.

\subsubsection{TC challenger}

Conditioned on $\cF_n$ and given a leader $B_{n+1}$, the Transportation Cost (TC) challenger is defined in \eqref{eq:def_tc_based_challenger} as the arm with smallest transportation cost compared to the leader
\begin{equation*}
	C_{n+1}^{\text{TC}} \in \argmin_{j \neq   B_{n+1}} W_{n}(  B_{n+1},j)  \quad \text{,} \quad  \bP_{\mid n}[C_{n+1}^{\text{TC}} = j| B_{n+1} = i] = \frac{\indi{j \in \argmin_{k \neq i} W_{n}(i,k)}}{|\argmin_{k \neq i} W_{n}(i,k)|}  \: ,
\end{equation*}
and $\widehat C_{n+1}^{\text{TC}} \in \argmin_{j \neq \widehat  B_{n+1}} W_{n}(\widehat  B_{n+1},j)$.

\paragraph{Property~\ref{prop:challenger_cdt_suff_explo}}
We prove Property~\ref{prop:challenger_cdt_suff_explo} for $C_{n+1}^{\text{TC}}$ in Lemma~\ref{lem:TC_ensures_suff_explo'} by comparing the rates at which $W_n$ increases.

\begin{lemma} \label{lem:TC_ensures_suff_explo'}
Let $B_{n+1}$ be a leader satisfying Property~\ref{prop:leader_cdt_suff_explo}.
Let $(C_{n+1}^{\text{TC}}, \widehat C_{n+1}^{\text{TC}})$ as in (\ref{eq:def_tc_based_challenger}).
Let $U_n^L$ and $V_n^L$ as in (\ref{eq:def_undersampled_sets}) and $\mathcal J_n^\star = \argmax_{ i \in \overline{V_{n}^{L}}} \mu_{i}$.
There exists $L_6$ with $\bE_{\bm F}[L_6] < + \infty$ such that if $L \ge L_6$, for all $n$ such that $U_n^L \neq \emptyset$, $\widehat B_{n+1} \notin V_{n}^{L}$ implies $\widehat C_{n+1}^{\text{TC}} \in V_{n}^{L} \cup \left( \mathcal J_n^\star \setminus \left\{\widehat B_{n+1} \right\} \right)$.
\end{lemma}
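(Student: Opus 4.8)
The plan is to mirror the proof of Lemma~\ref{lem:TC_ensures_suff_explo} for bounded distributions almost verbatim, substituting the SPEF versions of the two key transportation-cost estimates that have just been established: Lemma~\ref{lem:SPEF_W_lower_bound} (and its corollary giving the fast lower bound $W_n(i,j) \ge L^{3/4} D_{\bm F}$ on the sampled-enough arms with distinct means) in place of Lemma~\ref{lem:fast_rate_emp_tc}, and Lemma~\ref{lem:small_transportation_cost_undersampled_arms'} in place of Lemma~\ref{lem:small_transportation_cost_undersampled_arms}. The overall structure is unchanged because the TC challenger is defined identically and only depends on $\cF$ through the values of the $W_n(i,j)$.

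Concretely, I would set $\mathcal J_n^\star = \argmax_{i \in \overline{V_n^L}} \mu_i$ and assume $U_n^L \neq \emptyset$ (so $V_n^L \neq \emptyset$) and $\widehat B_{n+1} \notin V_n^L$. Invoking Property~\ref{prop:leader_cdt_suff_explo} with its threshold $L_0$, for $L \ge L_0^{4/3}$ the effective leader lies in $\mathcal J_n^\star$. If $\widehat C_{n+1}^{\text{TC}} \in \mathcal J_n^\star \setminus \{\widehat B_{n+1}\}$ we are done, so assume otherwise. Then I apply the SPEF lower bound: since all pairs of arms in $\overline{V_n^L}$ are sampled at least $L^{3/4}$ times and have distinct means, there is a constant $D_{\bm F} > 0$ with $W_n(i,j) \ge L^{3/4} D_{\bm F}$ for $(i,j) \in \mathcal J_n^\star \times (\overline{V_n^L}\setminus \mathcal J_n^\star)$, valid for $L \ge L_4^{4/3}$. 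On the other side, Lemma~\ref{lem:small_transportation_cost_undersampled_arms'} gives $W_n(i,k) \le \sqrt{L}\,(2W_d + D_1 + D_2 W_\mu)$ for $(i,k) \in \overline{U_n^L} \times U_n^L$, valid for $L \ge L_5^2$. Since $L^{3/4} D_{\bm F}$ eventually dominates $\sqrt L (2W_d + D_1 + D_2 W_\mu)$, there is a (random, polynomial in $W_d,W_\mu$, hence finite-moment) threshold $L_7$ beyond which, for all $i \in \mathcal J_n^\star$, $k \in U_n^L$, $j \in \overline{V_n^L}\setminus \mathcal J_n^\star$, we get $W_n(i,j) > W_n(i,k)$. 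Using $\mathcal J_n^\star \subseteq \overline{V_n^L} \subseteq \overline{U_n^L}$ and $\widehat B_{n+1} \in \mathcal J_n^\star$, the defining property $\widehat C_{n+1}^{\text{TC}} \in \argmin_{j \neq \widehat B_{n+1}} W_n(\widehat B_{n+1}, j)$ then forces $\widehat C_{n+1}^{\text{TC}} \in V_n^L$, since any choice outside $V_n^L \cup \mathcal J_n^\star$ would be strictly dominated by an arm of $U_n^L$. Setting $L_6 = \max\{L_0^{4/3}, L_4^{4/3}, L_5^2, L_7\}$ and bounding $\bE_{\bm F}[L_6]$ by the sum of the (finite) expectations of its constituents finishes the argument.

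The one genuinely new subtlety relative to the bounded case is that the upper bound from Lemma~\ref{lem:small_transportation_cost_undersampled_arms'} now carries the random coefficient $2W_d + D_1 + D_2 W_\mu$ rather than a deterministic constant, so the comparison threshold $L_7$ at which $L^{3/4}D_{\bm F}$ overtakes $\sqrt{L}(2W_d+D_1+D_2W_\mu)$ is itself a random variable; the point to check is that $L_7$ is polynomial in $W_d$ and $W_\mu$ and therefore has finite expectation (indeed finite moments of all orders), which is exactly what Lemma~\ref{lem:W_concentration_spef} guarantees for polynomials in those sub-exponential variables. Beyond that bookkeeping, the argument is the bounded-case argument with the two $\Kinf$ lemmas swapped out, so I do not expect any real obstacle — the main care is just to thread the random thresholds correctly and to verify, via an analogue of Lemma~\ref{lem:inf_Kinf_restricted_to_open_mean_interval} for SPEFs (which holds by the continuity and strict convexity of $\phi^\star$ recalled earlier), that the infimum defining the per-sample transportation cost between two arms of distinct means is strictly positive.
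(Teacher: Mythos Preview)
Your proposal is correct and follows essentially the same approach as the paper's proof, which simply says the argument proceeds as in Lemma~\ref{lem:TC_ensures_suff_explo} with the SPEF bounds on $W_n(i,j)$ substituted in. The paper makes the random threshold explicit as $L_7 = \left(2(2W_d + D_1 + D_2 W_\mu)/D_{\bm F}\right)^4$, but your observation that it is polynomial in $W_d, W_\mu$ (hence of finite expectation by Lemma~\ref{lem:W_concentration_spef}) is exactly the required point.
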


\begin{proof}
The proof proceeds similarly to the one of Lemma~\ref{lem:TC_ensures_suff_explo}. The difference is the bounds on $W_n(i,j)$ that we get.
For all $L$ bigger than some random variable with finite expectation,
	\begin{align*}
		&\widehat B_{n+1} \in \mathcal J_n^\star \: , \\
		\forall (i,j) \in \mathcal J_n^\star \times \left(\overline{V_n^L} \setminus \mathcal J_n^\star\right), \quad &W_{n}(i, j) \geq  L^{3/4} D_{\bm F}   \: , \\
		\forall (i,j) \in \overline{U_n^L} \times U_n^L, \quad 	&W_{n}(i,j) \leq  \sqrt{L} (2W_d + D_1 + D_2 W_\mu) \: .
	\end{align*}
For all $L \ge L_7 \eqdef \left(2(2W_d + D_1 + D_2 W_\mu)/D_{\bm F}\right)^4$,
\begin{align*}
L^{3/4} D_{\bm F} > \sqrt{L} (2W_d + D_1 + D_2 W_\mu) \: .
\end{align*}
We now conclude that at least one under-sampled arm has transportation cost lower than all the ones that are much sampled, and proceed as in the proof of Lemma~\ref{lem:TC_ensures_suff_explo}.
\end{proof}

\paragraph{Property~\ref{prop:challenger_cdt_convergence}}
Lemma~\ref{lem:TC_ensures_convergence'} shows that the Property~\ref{prop:challenger_cdt_convergence} is satisfied by $C_{n+1}^{\text{TC}}$.
\begin{lemma} \label{lem:TC_ensures_convergence'}
		Assume Property~\ref{prop:suff_exploration} holds.
		Let $\epsilon > 0$.
		Let $B_{n+1}$ be a leader satisfying Property~\ref{prop:leader_cdt_convergence} and $C_{n+1}^{\text{TC}}$ as in (\ref{eq:def_tc_based_challenger}).
		There exists $N_7$ with $\bE_{\bm F}[N_7] < + \infty$ such that for all $n \geq N_7$ and all $i \neq i^\star(\bm F)$,
		\begin{equation} \label{eq:overshooting_implies_not_sampled_anymore_tc'}
			\frac{\Psi_{n,i}}{n} \geq w_{i}^{\beta} + \epsilon  \quad \implies \quad \bP_{\mid n}[C_{n+1}^{\text{TC}} = i \mid B_{n+1} = i^\star(\bm F)] = 0 \: .
		\end{equation}
\end{lemma}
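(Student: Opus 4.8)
The plan is to replicate, with the SPEF versions of the ingredients, the argument used for bounded distributions in the proof of Lemma~\ref{lem:TC_ensures_convergence}. First I would invoke Property~\ref{prop:suff_exploration} to get $N_{n,i}\ge\sqrt{n/K}$ for $n\ge N_1$, and then use the sub-exponential concentration of Lemma~\ref{lem:W_concentration_spef} (which yields $N_{n,i}|\mu_{n,i}-\mu_i|\le W_\mu\log(e+N_{n,i})$) together with $\Delta\eqdef\min_{j\neq i^\star}|\mu_{i^\star}-\mu_j|>0$ to produce $N_8=Poly(W_\mu)$ such that $\max_{i}|\mu_{n,i}-\mu_i|\le\Delta/4$ and hence $\argmax_i\mu_{n,i}=i^\star$ for all $n\ge\max\{N_1,N_8\}$. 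Since Property~\ref{prop:suff_exploration} holds and $B_{n+1}$ satisfies Property~\ref{prop:leader_cdt_convergence}, Lemma~\ref{lem:convergence_towards_optimal_allocation_best_arm} furnishes $N_4$ with $|N_{n,i^\star}/n-\beta|\le\xi$ for $n\ge\max\{N_1,N_4\}$, for any prescribed $\xi>0$.

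Next, exactly as in the bounded case, $\bP_{\mid n}[C_{n+1}^{\text{TC}}=i\mid B_{n+1}=i^\star]=0$ is equivalent to $\frac1n\big(W_n(i^\star,i)-\min_{j\neq i^\star}W_n(i^\star,j)\big)>0$. Converting $\Psi_{n,i}/n\ge w_i^\beta+\epsilon$ into $N_{n,i}/n\ge w_i^\beta+\epsilon/2$ via Lemma~\ref{lem:subG_alloc}, and using $\frac1nW_n(a,b)=C_{a,b}(\cT(\bm F_n),N_n/n)$, which for a SPEF is $\inf_{u}\{\frac{N_{n,a}}{n}d(\mu_{n,a},\min\{\mu_{n,a},u\})+\frac{N_{n,b}}{n}d(\mu_{n,b},\max\{\mu_{n,b},u\})\}$, I would lower bound the difference by
\[
\inf_{\tilde\beta\,:\,|\tilde\beta-\beta|\le\xi}G_i(\bm F_n,\tilde\beta),
\]
where $G_i(\bm F,\tilde\beta)$ is the SPEF analogue of the function introduced in the proof of Lemma~\ref{lem:TC_ensures_convergence}: the value at allocation $(\tilde\beta,w_i^\beta+\epsilon/2)$ of the transportation cost between $i^\star$ and $i$, minus $\sup_{w\in\simplex:\,w_{i^\star}=\tilde\beta}\min_{j\neq i^\star}$ of the transportation cost between $i^\star$ and $j$. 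One simply replaces $\Kinf^-(F_{i^\star},\cdot),\Kinf^+(F_i,\cdot)$ by $d(\mu_{i^\star},\min\{\mu_{i^\star},\cdot\}),d(\mu_i,\max\{\mu_i,\cdot\})$; the monotonicity in $w$ and the joint continuity in $(\bm F,\tilde\beta)$ follow from the strict convexity and continuity of $\Kinf^\pm$ for SPEF recalled in Appendix~\ref{app:spef} and from Berge's theorem, i.e. the SPEF counterpart of Lemma~\ref{lem:continuity_results_for_analysis}.

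Finally, continuity of $G_i$ and the concentration $\mu_{n,i}\to\mu_i$ (Lemma~\ref{lem:W_concentration_spef}) yield a $Poly(W_\mu,W_d)$ time $N_{10}$ and a small $\xi_0>0$ beyond which $\inf_{|\tilde\beta-\beta|\le\xi}G_i(\bm F_n,\tilde\beta)\ge\frac14 G_i(\bm F,\beta)$; and $G_i(\bm F,\beta)>0$ because at the $\beta$-equilibrium all transportation costs equal $T_\beta^\star(\bm F)^{-1}$ (the SPEF version of Lemma~\ref{lem:properties_characteristic_times}), while the transportation cost is strictly increasing in its allocation argument (the SPEF version of Lemma~\ref{lem:inf_Kinf_increasing_in_w}), so passing from $w_i^\beta$ to $w_i^\beta+\epsilon/2$ strictly raises it above that equilibrium value. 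Taking $N_7$ to be the maximum of the finitely many $Poly(W_\mu,W_d)$ thresholds, all of which have finite expectation by Lemma~\ref{lem:W_concentration_spef}, completes the argument. I expect the main obstacle to be not the SPEF algebra but verifying that the continuity, strict-monotonicity, uniqueness and equilibrium statements for the transportation cost — proved from scratch for bounded distributions in Appendix~\ref{app:kinf_for_bounded_distributions} — genuinely transfer; this is precisely where one leans on the analyticity and strict convexity of $\phi^\ast$ and on the continuity results cited from \cite{Russo2016TTTS}.
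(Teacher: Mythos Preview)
Your proposal is correct and mirrors the paper's proof essentially step for step. The one extra detail the paper makes explicit is taking the concentration tolerance to be at most the distance from any $\mu_i$ to $\partial\mathcal I$ (so $\Delta=\min\{\Delta',u_0\}$), ensuring the empirical means stay in a fixed compact subset of $\mathcal I$ and the Berge-type continuity of $G_i$ is justified---precisely the point you already flag as the main obstacle.
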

\begin{proof}
This proof proceeds very similarly to the proof of lemma~\ref{lem:TC_ensures_convergence}.
Let $\epsilon > 0$ and $i^\star = i^\star(\bm F)$.
Using the definition of $C_{n+1}^{\text{TC}}$ in (\ref{eq:def_tc_based_challenger}), we have
\begin{align*}
	\bP_{\mid n}[C_{n+1}^{\text{TC}} = i \mid B_{n+1} = i^\star] = 0
	\quad & \iff \frac{1}{n}\left(W_{n}(i^\star,i) - \min_{j\neq i^\star}W_{n}(i^\star,j) \right) > 0 \: .
\end{align*}

Let $N_1$ as in Property~\ref{prop:suff_exploration}, then $N_{n,i} \geq \sqrt{\frac{n}{K}}$ for all $n \ge N_1$.
Since $i^\star$ is unique, we have $\Delta' \eqdef \min_{j\neq i^\star}|\mu_{i^\star} - \mu_{j}| > 0$. Let $u_0$ be the minimal distance from any mean $\mu_i$ to an end of the interval of means $\mathcal I$ and let $\Delta = \min\{\Delta', u_0\}$.
Lemma~\ref{lem:W_concentration_spef} yields that there exists $N_8 = Poly(W_\mu)$ such that for all $n \ge \max\{N_1, N_8\}$ and all $i \in [K]$, we have $|\mu_{n,i} - \mu_{i}| \leq \frac{\Delta}{4}$.
Therefore, for all $n \ge \max\{N_1, N_8\}$, $\argmax_{i \in [K]} \mu_{n,i} = \argmax_{i \in [K]} \mu_{i} = i^\star$ and for all $i \in [K]$, $\mu_{n,i} \in \cI$.

Let $\xi >0$. Since Property~\ref{prop:suff_exploration} holds and $B_{n+1}$ satisfies Property~\ref{prop:leader_cdt_convergence}, we can use the results from Lemma~\ref{lem:convergence_towards_optimal_allocation_best_arm}. Let $N_4$ defined in Lemma~\ref{lem:convergence_towards_optimal_allocation_best_arm}. We have $\left| \frac{N_{n,i^\star}}{n} - \beta \right| \leq \xi$ for all $n \ge \max \{ N_1, N_4\}$.

Let $i \neq i^\star$ such that $\frac{\Psi_{n,i}}{n} \geq w_{i}^{\beta} + \epsilon$.
Using Lemma~\ref{lem:subG_alloc}, there exists $N_9 = Poly(W_1) $, such that for all $n \ge \max\{N_1, N_9\}$, we have $\frac{N_{n,i}}{n} \geq w_{i}^{\beta} + \frac{\epsilon}{2}$.
Therefore, for all $n \ge \max \{N_1, N_4, N_8, N_9\}$, as in the proof of Lemma~\ref{lem:TC_ensures_convergence},
\begin{align*}
\frac{1}{n}\left(W_{n}(i^\star,i) - \min_{j\neq i^\star}W_{n}(i^\star,j) \right)
&\geq \inf_{\tilde \beta : \left|\tilde \beta - \beta \right| \leq \xi} G_{i}(\cT(\bm F_{n}), \tilde \beta)
\end{align*}
where
\begin{align*}
	G_{i}(\bm m, \tilde \beta) & = \inf_{u \in [0,B]} \left\{  \tilde \beta \Kinf^{-}( m_{i^\star}, u)  + \left( w_{i}^{\beta} + \frac{\epsilon}{2} \right) \Kinf^{+}( m_{i}, u)  \right\}  \\
	&\quad - \sup_{w \in \simplex: w_{i^\star} = \tilde \beta} \min_{j\neq i^\star} \inf_{u \in  \cI} \left\{  w_{i^\star}  \Kinf^{-}( m_{i^\star}, u)  + w_{j} \Kinf^{+}( m_{j}, u)  \right\} \: .
\end{align*}

Since all the means belong to a compact subset of $\cI$ for $n \ge \max \{N_1, N_4, N_8, N_9\}$, we can prove continuity of the functions $(\bm m, \tilde \beta) \mapsto G_{i}(\bm m, \tilde \beta)$ and $\bm m \mapsto \inf_{\tilde \beta : \left|\tilde \beta - \beta \right| \leq \xi} G_{i}(\bm m, \tilde \beta)$ in a similar way as was done for bounded distributions in Lemma~\ref{lem:continuity_results_for_analysis}.
Therefore, there exists $N_{10} = Poly(W_2)$ and $\xi_0$ such that for $n \ge N_{7} \eqdef \{N_1, N_4, N_8, N_9, N_{10}\}$ and all $\xi \leq \xi_0$,
\begin{align*}
	\inf_{\tilde \beta : \left|\tilde \beta - \beta \right| \leq \xi} G_{i}(\cT(\bm F_{n}), \tilde \beta) \geq \frac{1}{2}  \inf_{\tilde \beta : \left|\tilde \beta - \beta \right| \leq \xi} G_{i}(\cT(\bm F), \tilde \beta) \geq  \frac{1}{4} G_{i}(\cT(\bm F), \beta) \: .
\end{align*}

At the $\beta$-equilibrium all transportation costs are equal.
Therefore, by definition of $w^{\beta}$,
\begin{align*}
	&\sup_{w \in \simplex : w_{i^\star} = \beta} \min_{j \neq i^\star}  \inf_{u  \in [0,B]}\left\{w_{i^\star}\Kinf^-( F_{i^\star}, u) + w_j \Kinf^+(F_j, u) \right\} \\
	&\qquad= \min_{j \neq i^\star}  \inf_{u  \in [0,B]}\left\{\beta \Kinf^-(F_{i^\star}, u) + w^\beta_j \Kinf^+(F_j, u) \right\} \\
	&\qquad= \inf_{u  \in [0,B]}\left\{\beta \Kinf^-( F_{i^\star}, u) + w^\beta_i \Kinf^+(F_i, u) \right\} \\
	&\qquad< \inf_{u  \in [0,B]}\left\{\beta \Kinf^-( F_{i^\star}, u) + \left( w^\beta_i + \frac{\epsilon}{2} \right) \Kinf^+(F_i, u) \right\}
\end{align*}
where the strict inequality is obtained because the transportation costs are increasing in their allocation arguments (proved in a similar way as Lemma~\ref{lem:inf_Kinf_increasing_in_w}).
Therefore, we have $G_{i}(\cT(\bm F), \beta) > 0$. This yields that $W_{n}(i^\star,i) > \min_{j\neq i^\star} W_{n}(i^\star,j)$.
As all moments of $W_1$ and $\lambda W_\mu$ are finite, we have $\bE_{\bm F}[N_i] < + \infty$ for $i \in \{8,9,10\}$. Hence this is also the case for $N_7$.
\end{proof}

\subsubsection{TCI challenger}

Showing Properties~\ref{prop:challenger_cdt_suff_explo} and~\ref{prop:challenger_cdt_convergence} for the TCI challenger uses the same arguments as for the proof of Lemma~\ref{lem:TC_ensures_suff_explo'} and~\ref{lem:TC_ensures_convergence'}.
Coping for the penalization term $\ln N_{n,j}$ is done similarly as when we obtained Lemmas~\ref{lem:TCI_ensures_suff_explo} and~\ref{lem:TCI_ensures_convergence} by adapting the proof of Lemmas~\ref{lem:TC_ensures_suff_explo} and~\ref{lem:TC_ensures_convergence}.
Since there is no new arguments, we omit the proof.


\section{Implementation details and additional experiments}
\label{app:additional_experiments}

After presenting the implementations details in Appendix~\ref{app:ss_implementation_details}, we display supplementary experiments in Appendix~\ref{app:ss_supplementary_experiments}.

\subsection{Implementation details}
\label{app:ss_implementation_details}

In non-parametric settings, the algorithms are inherently more costly than their counterpart in parametric settings.
First, the memory cost is linear in time as we need to maintain the whole history $\cF_n$ in memory.
This a direct consequence of the lack of sufficient statistics to summarize $\cF_{n}$.
In contrast, the memory cost is constant for single-parameter exponential families settings.
Second, the computational cost per iteration of many algorithms is at least linear in time: a good algorithm should leverage all the observations to make a decision.

We detail below the most relevant implementation details regarding the sampling rules and discuss their computational cost.
As mentioned above, the implemented algorithms for the bounded setting are computational expensive by nature.
However, we aim at promoting the algorithm(s) achieving good empirical performance in terms of empirical stopping time at a reasonable computational cost.

\paragraph{Stopping-Recommendation pair}
The recommendation rule $\hat \imath_n \in \argmax_{i \in [K]} \mu_{n,i}$ has a $\cO(K)$ computational cost.
This is achieved by simply maintaining the cumulative sum of the observations $\sum_{t \leq n} \indi{i  = I_t} X_{t,i}$ for each arm.

In contrast to the recommendation rule, the stopping rule defined in (\ref{eq:def_stopping_time}) is computationally expensive.
At each time $n$, we need to compute $K-1$ transportation costs $W_{n}(\hat \imath_n, j)$ for $j \neq \hat \imath_n$.
While each one can be evaluated efficiently for single-parameter exponential families (see below), this is not the case for bounded distributions where
\begin{align*}
		& W_n(\hat \imath_n , j)  = \inf_{x \in [\mu_{n,j}, \mu_{n, \hat \imath_n}]} g_n(\hat \imath_n , j, x) \: , \\
		& g_n(\hat \imath_n , j, x)  = N_{n,\hat \imath_n}\Kinf^-(F_{n,\hat \imath_n},x) + N_{n,j}\Kinf^+(F_{n,j},x) \: .
\end{align*}

Using Lemmas~\ref{lem:strict_convex_sum_Kinf} and~\ref{lem:inf_Kinf_restricted_to_open_mean_interval}, the function $x \mapsto g_n(\hat \imath_n , j, x)$ is strictly convex and admit a unique minimizer in $[\mu_{n,j}, \mu_{n, \hat \imath_n}]$.
Lemma~\ref{lem:differentiability_Kinf} gives a formula for the derivatives of $x \mapsto \Kinf^{\pm}(F, x)$.
Unfortunately, $\lambda^{\pm}_{\star}(F,x)$ are often defined implicitly (Lemma~\ref{lem:explicit_implicit_solutions}), hence we can't leverage this knowledge and use first-order optimization methods.
Therefore, in order to compute $W_n(\hat \imath_n , j)$, we rely on a zero-order optimization algorithm designed to minimize a univariate function on a bounded interval.
In practice, we use Brent's method, which is implemented in the \texttt{Optim.jl} package under \texttt{Julia 1.7.2}.

To obtain $g_n(\hat \imath_n , j, x)$ for a given $x$, we need to compute $\Kinf^-(F_{n,\hat \imath_n},x)$ and $\Kinf^+(F_{n,j},x)$.
This is made tractable thanks to their dual formulation.
Taking $\Kinf^+(F_{n,j},x)$ as an example, Theorem~\ref{thm:Kinf_duality} yields
\[
N_{n,j} \Kinf^+(F_{n,j},x)= \sup_{\lambda \in [0, 1]} \sum_{ k \in  [N_{n,i}]} \ln \left( 1 - \lambda \frac{X_{k,i} - x}{B - x}\right) \: ,
\]
where $(X_{k,i})_{k \in [N_{n,i}]}$ denotes the samples collected from arm $i$ at time $n$.
As the function is strictly concave (Lemma~\ref{lem:H_strict_concave}), we will also use Brent's method to compute it's maximum.
Each computation requires to sum over the $N_{n,i}$ samples collected by arm $i$.
Therefore, the computational cost is at least linear in time.
Since we can compute the derivative, it would be possible to use first-order optimization algorithms.
While this will improve the number of iterations required to reach convergence, it is not clear that the overall computational cost will be reduced since those gradient computations are also linear in time.

\paragraph{Top Two sampling rules} We discuss the computational cost of the EB and TS leader, as well as the TC and RS challenger.

The EB leader has virtually no computational cost since it uses the candidate answer $\hat \imath_n$.
Likewise, the TC challenger can also leverage the computations from the stopping rule (\ref{eq:def_stopping_time}).
When $B_{n+1} = \hat \imath_n$, $C_{n+1}^{\text{TC}} \in \argmin_{j \neq \hat \imath_n} W_n(\hat \imath_n , j)$ was already computed in (\ref{eq:def_stopping_time}).
When $B_{n+1} \neq \hat \imath_n$, we have $C_{n+1}^{\text{TC}} \in \argmin_{j \neq B_{n+1}} W_n(B_{n+1}, j) = \left\{ j \neq B_{n+1} \mid \mu_{n,j} \ge \mu_{n,B_{n+1}}\right\}$, which contains at least $\hat \imath_n$.
Therefore, the TC challenger has virtually no computational cost when paired with (\ref{eq:def_stopping_time}).

The TS leader and the RS challenger use a sampler $\Pi_n$.
In single-parameter exponential families, $\Pi_n$ is a posterior distribution which can be computed in constant time by updating the posterior $\cO(K)$ parameters.
However, for bounded distributions, the Dirichlet sampler relies on the whole history $\cF_{n}$.
Therefore, for each arm $i$, we need to define and sample from a Dirichlet distribution with $N_{n,i} + 2$ parameters.
This has linear computational cost per iteration.
The TS leader requires only one Dirichlet observation per arm, hence it has constant cost once the Dirichlet distributions are defined (which is computationally expensive).

For the RS challenger, we re-sample until $B_{n+1}$ is not an arm with highest mean in the corresponding vector of observations. The computational cost is proportional to the number of re-sampling steps which is on average $1/(1-a_{n,B_{n+1}})$. Hence the computational cost can be very high when $\Pi_n$ has converged, that is when $a_{n,i}\approx 0$ for all $i \neq i^\star$. The analysis of the TS leader and the RS challenger reveals that this convergence is exponential, with a rate close to $T^\star_{\beta}(\bm F)$. Therefore, when $B_{n+1} = i^\star$, it is highly unlikely to observe a vector $\theta$ for which $i^\star$ is not the best arm, and the average number of re-sampling steps is exponential.

Based on extensive experiments, we also have empirical evidence that the RS challenger has higher computational cost than the TC challenger.
For Bernoulli instances, when using the stopping threshold defined in (\ref{eq:def_kinf_threshold_glr}), the maximum number of re-sampling steps (set arbitrarily to $10^{6}$) was always reached for large time $n$.
As a direct consequence, the computational cost of the RS challenger explodes in those cases, e.g. $10^4$ times slower than the TC challenger.
As we use a uniform sampling when the maximum number of re-sampling steps is reached, the achieved empirical stopping time is also higher than for the variant using the TC challenger.
In Appendix~\ref{app:sss_expe_rs_challenger}, we perform experiments with the RS challenger for a heuristic stopping threshold defined in (\ref{eq:def_GK16_stopping_threshold}).
This yields four times smaller empirical stopping time compared to using (\ref{eq:def_kinf_threshold_glr}), see Appendix~\ref{app:ss_supplementary_experiments}.


\paragraph{Other sampling rules}
In LUCB-based sampling rules, we need to compute upper and lower confidence bounds based on the inversion of a distance function.
For KL-LUCB, it requires inverting the KL divergence of Bernoulli distributions.
This can be done efficiently by using a binary search algorithm.
For $\Kinf$-LUCB, we need to inverse $\Kinf^{\pm}$, also by using a binary search.
As explained above, computing $\Kinf^{\pm}$ for the empirical cdfs yields a linear computation cost.
Therefore, $\Kinf$-LUCB will be significantly worse than KL-LUCB in terms of computational cost.
However, $\Kinf$-LUCB yields order of magnitude smaller empirical stopping time in the DSSAT instances compared to KL-LUCB.

The sampling rule $\Kinf$-DKM is inspired by DKM \citep{Degenne19GameBAI}, with only one learner on $\simplex$ instead of $K$ learners.
We replace the KL divergences by $\Kinf^{\pm}$ as we are in the bounded setting, hence it will be more costly than DKM for single-parameter exponential families.
Given the allocation $w_{n}$ returned by the learner (e.g. AdaHedge), computing the most confusing alternative parameter has the same computational cost as evaluating the stopping rule (\ref{eq:def_stopping_time}).
For bounded distributions, it is not clear how to define the optimistic bonuses.
Therefore, we replace it by forced exploration, which yields worse empirical stopping times.

\paragraph{Adaptive choice of $\beta$}
Based on the theoretical lower bound, Top Two algorithms with a fixed allocation can be at best asymptotically $\beta$-optimal, not asymptotically optimal (meaning reaching $T^\star(\bm F)$).
To achieve asymptotic optimality, the fixed allocation should match the optimal allocation $\beta^\star = \argmin_{\beta \in (0,1)} T^{\star}(\bm F)$.
As $\beta^\star$ is unknown, it should be learned from observations.
Therefore, this desired adaptive Top Two algorithm should use an adaptive choice of $\beta$ which converges towards $\beta^\star$.
Proving optimality for adaptive Top Two algorithms is an interesting open problem, which is still unsolved even for Gaussian bandits.
The very recent paper \cite{wang_2022_OptimalityConditions} proposes an update mechanism for $\beta$, but they study it only empirically and they don't provide any theoretical guaranty for that scheme.



\paragraph{Optimal allocation oracles}
In the following, we consider bounded distributions $\bm F$ having a discrete support.
Computing the optimal allocation $w^\star(\bm F)$ is computationally very expensive for bounded setting.
Even for single-parameter exponential families, this can be very demanding.
For more complex structure such as top-$k$ identification or combinatorial bandits \citep{jourdan_2021_EfficientPureExploration}, advanced saddle-point algorithms are needed to obtain efficient implementation, even for Gaussian distributions.
Therefore, Track-and-Stop algorithms computing $w^\star(\bm F_n)$ at each time $n$ should not be used to tackle the bounded distributions setting.
While it is costly, we can still compute $w^\star(\bm F)$ for the true distribution $\bm F$ once.
First, it allows to obtain a lower bound on the empirical stopping in $T^\star(\bm F) \kl (\delta, 1-\delta)$.
Second, we can implement the oracle algorithm (referred to as ``fixed'' in the experiments) which tracks the true optimal allocation $w^\star(\bm F)$.

The strategy to compute $w^\star(\bm F)$ is similar as done for single-parameter exponential families \cite{GK16}.
In the following, we describe the heuristic algorithm mimicking the behavior of the oracle in \cite{GK16}.
Let $i^\star = i^\star(\bm F)$.
Let $G_{j}(x)$ defined in Lemma~\ref{lem:properties_characteristic_times}. Recall that $x_j(y) = G_{j}^{-1}(y)$ and $u_j(x)$ is defined as the minimizer yielding $G_{j}(x)$.
If $x_j$ and $u_j$ can be differentiated, we could directly derive (\ref{eq:reformulation_optimization_with_y}) in Lemma~\ref{lem:properties_characteristic_times}.
Additional manipulations \cite{GK16} yield the reformulation of the optimization problem defining $T^\star(\bm F)$ as solving $F(y)=1$ where
\begin{equation} \label{eq:heutistic_optimization_problem_defining_optimal allocation}
\forall y \in \left[ 0, \min_{j \neq i^\star} \Kinf^{-}(F_{i^\star}, \mu_{j})\right), \quad F(y) = \sum_{j \neq i^\star} \frac{\Kinf^{-}(F_{i^\star}, u_j(x_j(y)))}{\Kinf^{+}(F_j, u_j(x_j(y)))}
\end{equation}
is a strictly increasing increasing function such that $F(0) = 0$ and $\lim_{y \to \min_{j \neq i^\star} \Kinf^{-}(F_{i^\star}, \mu_{j})} F(y) = + \infty$.
We use nested binary searches to solve $F(y)=1$.
The outer binary search is done on $y \in \left[ 0, \min_{j \neq i^\star} \Kinf^{-}(F_{i^\star}, \mu_{j})\right)$ (see Lemma~\ref{lem:funny_property_for_optimal_allocation_algorithm}).
The inner binary searches are done to compute $x_j(y)$ for all $j \neq i^\star$.
To compute $u_j(x)$, we use the same procedure than described in the stopping rule.
While proving that $x_j$ and $u_j$ are differentiable still eludes us, we conjecture it to be true. Therefore, this heuristic optimal allocation algorithm gives a good estimate of $T^\star(\bm F)$ and $w^\star(\bm F)$.

When $\bm F$ has a non-discrete support, computing numerically $\Kinf^{\pm}$ by using the dual formulation would require having access to an oracle outputting $\bE_{F_i}[\log(1 - \lambda(X - u))]$ (resp. $\bE_{F_i}[\log(1 + \lambda(X - u))]$) for all $i \in [K]$, $u > \mu_i$ and $\lambda \in \left[ 0, \frac{1}{B - u}\right]$ (resp. $\lambda \in \left[ 0, \frac{1}{u}\right]$).
For continuous distribution, those integrals could be computed by numerical integration.
Instead we adopt a Monte-Carlo approach and use a discrete distribution $\bm{\widehat{F}}$ sampled from $\bm F$.
By Lemma~\ref{lem:subG_cdf}, taking a sufficiently large number of samples ensures that $\max_{i\in [K]}\| \widehat{F}_{i} - \bm F_{i}\|_{\infty}$ is small.
Intuitively, this should ensures that $w^\star(\bm{\widehat{F}})$ and $T^\star(\bm{\widehat{F}})$ are a good approximation of $w^\star(\bm F)$ and $T^\star(\bm F)$.
Formalizing this intuition theoretically requires having access to a Lipschitz constant for the $\| \cdot \|_{\infty}$.
To our knowledge, proving that $\bm F \mapsto w^\star(\bm F)$ and $\bm F \mapsto T^\star(\bm F)$ are Lipschitz is still an open problem, even for Gaussian distributions.

\paragraph{Efficient implementation for Bernoulli}
For Bernoulli distributions, the computational cost is greatly reduced.
In the following $\kl$ denotes the KL divergence for Bernoulli distributions.

First, the stopping rule can be computed in $\cO(K)$ since we have have closed form formulas for $\Kinf^{\pm}$, i.e. $\Kinf^+(F_{n,i}, u) = \kl(\mu_{n,i}, \max\{\mu_{n,i}, u\})$ and $\Kinf^-(F_{n,i},u) = \kl(\mu_{n,i}, \min\{\mu_{n,i}, u\})$, and for the closest alternative parameter, i.e.
\[
\argmin_{x \in [\mu_{n,j}, \mu_{n, \hat \imath_n}]} g_n(\hat \imath_n , j, x) = \frac{N_{n, \hat \imath_n} \mu_{n, \hat \imath_n} + N_{n, j} \mu_{n,j}}{N_{n, \hat \imath_n} + N_{n, j}} \: .
\]
By the same arguments, we have a more efficient computation of the optimal allocation $w^\star(\bm F)$.
As the differentiability of $x_j$ and $u_j$ holds in this setting, the optimal allocation oracle is theoretically validated.

Second, the sampler $\Pi_n$ can be rewritten as a Beta distribution with parameters $(c_{n,i} + 1, N_{n,i} - c_{n,i} +1)$, where $c_{n,i} = \left| \left\{ t \in [n] \mid I_{t} =i, X_{t,i} = B \right\} \right|$.
This leverages the fact that we can group the observations into two values $\{0, B\}$, and a classic results on marginals of Dirichlet distributions.
While this reduces the cost of sampling observations from $\Pi_n$, the computational cost of the re-sampling procedure (discussed above) still remains.

For Bernoulli distributions, $\beta$-TS-TC coincide with T3C \cite{Shang20TTTS} and $\beta$-TS-RS coincide with TTTS \cite{Russo2016TTTS}.
While the algorithms were already known in this setting, we are the first to prove they achieve asymptotic $\beta$-optimality for Bernoulli distributions.
In \cite{Shang20TTTS}, the authors only provide a proof for Gaussian distributions.

\paragraph{Decision Support System for Agrotechnology Transfer (DSSAT)}
\href{https://dssat.net/}{DSSAT}\footnote{DSSAT is an Open-Source project maintained by the DSSAT	Foundation.} \cite{hoogenboom2019dssat} is a crop modeling software that has been developed (mainly) to help agricultural production in developing countries.
This simulator provides a standardized way to generate realistic crop yield for different plants and soil conditions, harnessing more than 30 years of historical field data on 42 different crops.
Simulations are based on complex biophysical models, and take many parameters into account: local soil conditions, genetics, and crop management policy (e.g planting date, fertilization policy).
Our experiment is inspired by the one proposed in \cite{baudry21a}: we consider maize fields, and fixed challenging soil conditions (poor water retention and fertility), that are close to the conditions endured by small-holder farmers in Sub-Saharan Africa.
As the biophysical models are fixed and the weather is sampled by the environment, in this example the learner can play on human decisions such as the planting date and fertilization policy.
To simplify, we only consider the planting date, which already provides a difficult problem as the distributions represented in Figure~\ref{fig:dsat_instances}(b) and Figure~\ref{fig:dsat_instances_gk16} show.
In those figures, each arm corresponds to a yield distribution with all parameters fixed, except for the plantings that are $\sim 20$ days apart from each other, ranging on two months.
Our objective is to perform \textit{in sillico} experiments to compare the performance of different Best Arm Identification bandit algorithms, to help a potential group of farmers to choose an algorithm to use for future real-world experiments.

As calling the simulator is computationally intensive and as we want to perform Monte-Carlo simulations we used the code provided in \cite{baudry21a} to generate $10^6$ empirical data from each distribution and store these points in a csv file that is provided with the code of this paper.
We further re-scale the distributions in $[0, 1]$, which is equivalent as setting the known upper bound as the maximum value sampled by the simulator in our data collection process.
Then, a call to an arm simply consists in sampling one of these points uniformly at random.
We think this approximation is sufficient to reflect the difficulty of the problem, while being less demanding in terms of computation time.

\begin{remark}
	While this setting is simplified over a real-world experiment, it is an interesting and highly non-trivial first step to build more realistic algorithms taking in account contextual information, batch feedback, risk-aversion of farmers at a group and individual level (see \cite{baudry21a}), \dots
	Furthermore, if the asymptotic guarantees of bandit algorithms make them non-realistic for a single farmer (one data point every 3-6 months), they may exhibit tremendous progress over uniform sampling for a group of farmers (typically a few hundreds of data points every 3-6 months) conducting an experiment for several years.
\end{remark}

\paragraph{Reproducibility}
Our code is implemented in \texttt{Julia 1.7.2}, and the plots are generated with the \texttt{StatsPlots.jl} package.
Optimizations are performed based on the Brent's method available in the \texttt{Optim.jl} package.
Other dependencies are listed in the \texttt{Readme.md}.
The \texttt{Readme.md} file also provides detailed julia instructions to reproduce our experiments, and we provide a \texttt{script.sh} to run them all at once.
The general structure of the code (and some functions) is taken from the \href{https://bitbucket.org/wmkoolen/tidnabbil}{tidnabbil} library.\footnote{This library was created by \cite{Degenne19GameBAI}, see https://bitbucket.org/wmkoolen/tidnabbil. No license were available on the repository, but we obtained the authorization from the authors.}

\subsection{Supplementary experiments}
\label{app:ss_supplementary_experiments}

As in Section~\ref{sec:experiments}, we consider a moderate confidence regime in which $\delta = 0.1$ and Top Two algorithm with $\beta =0.5$.

\paragraph{Heuristic GK16 threshold}
While the stopping threshold defined in (\ref{eq:def_kinf_threshold_glr}) ensures $\delta$-correctness of the stopping rule (\ref{eq:def_stopping_time}), it is conservative in practice.
We denote it as TT (Theoretical Threshold).
Aiming at running large scale experiments, we consider the GK16 heuristic threshold defined in \cite{GK16},
\begin{equation} \label{eq:def_GK16_stopping_threshold}
	\beta^{\text{GK16}}(n,\delta) = \ln \left( \frac{1+\ln(n)}{\delta} \right) \: .
\end{equation}
Using GK16 yields an empirical error lower than $\delta$, even if it has no $\delta$-correctness guaranty.
This threshold was extensively used in the BAI literature to conduct experiments.
This idealized dependency in $(n,\delta)$ can be achieved for single-parameter exponential families \citep{KK18Mixtures}.
In this work, we show that $\ln n$ can be achieved for bounded distributions.
Whether it is possible to achieve $\ln \ln n$ for bounded distributions is an interesting open research question.
As it would require more sophisticated results on martingales to obtain such concentrations results for $\Kinf$, we leave it for future work.

\begin{figure}[ht]
	\centering
	\includegraphics[width=0.32\linewidth]{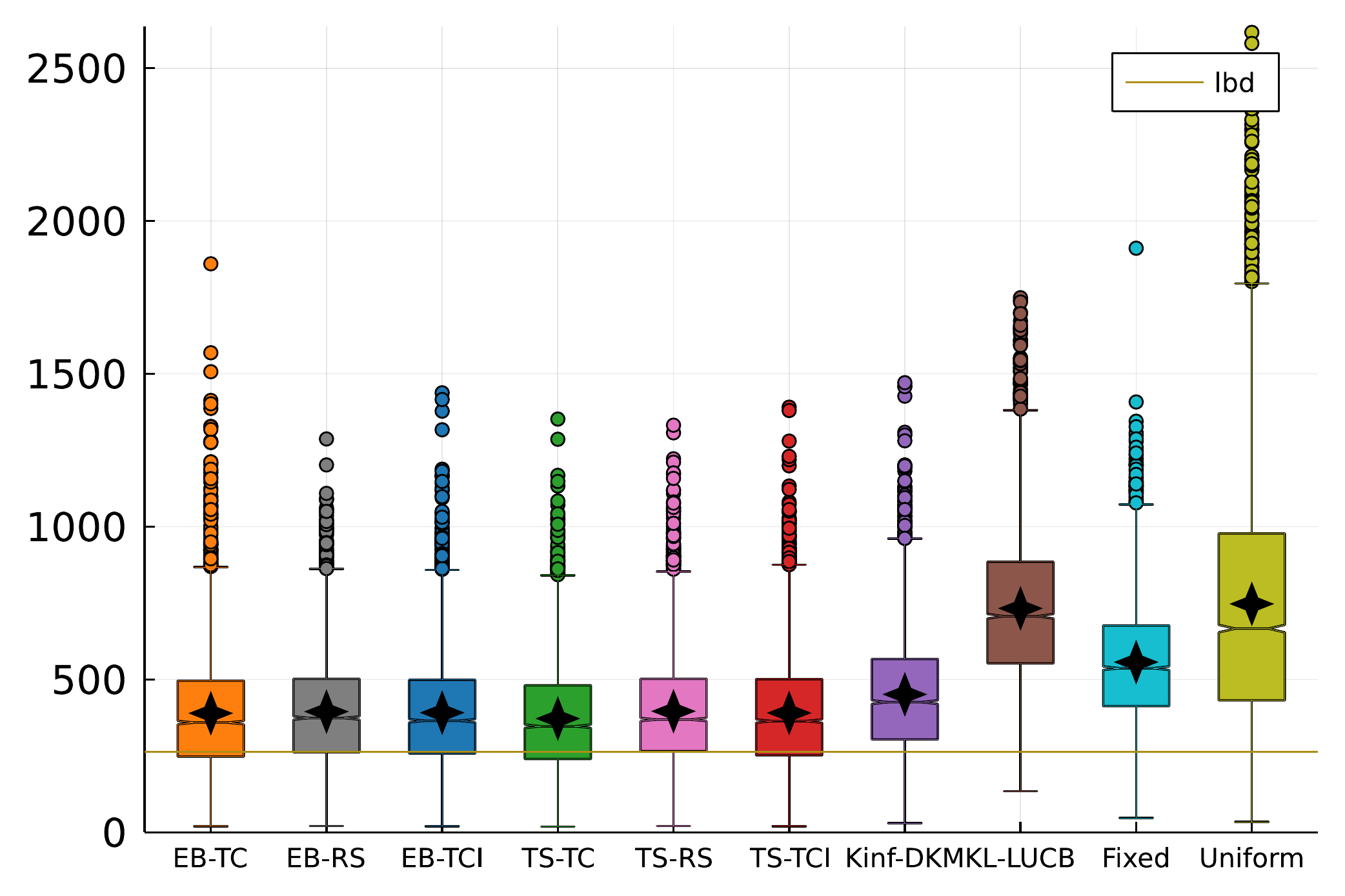}
	\includegraphics[width=0.32\linewidth]{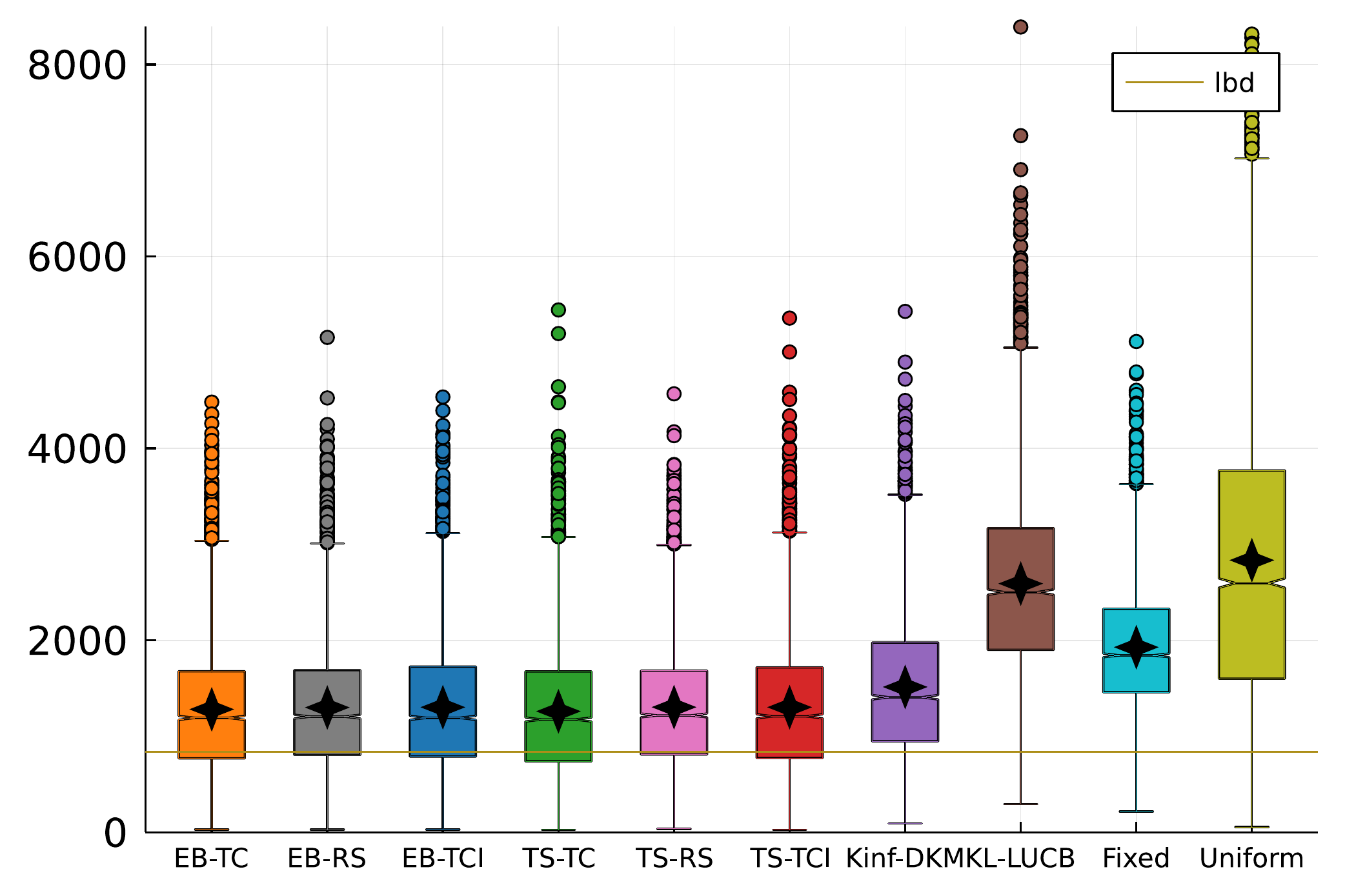}
	\includegraphics[width=0.32\linewidth]{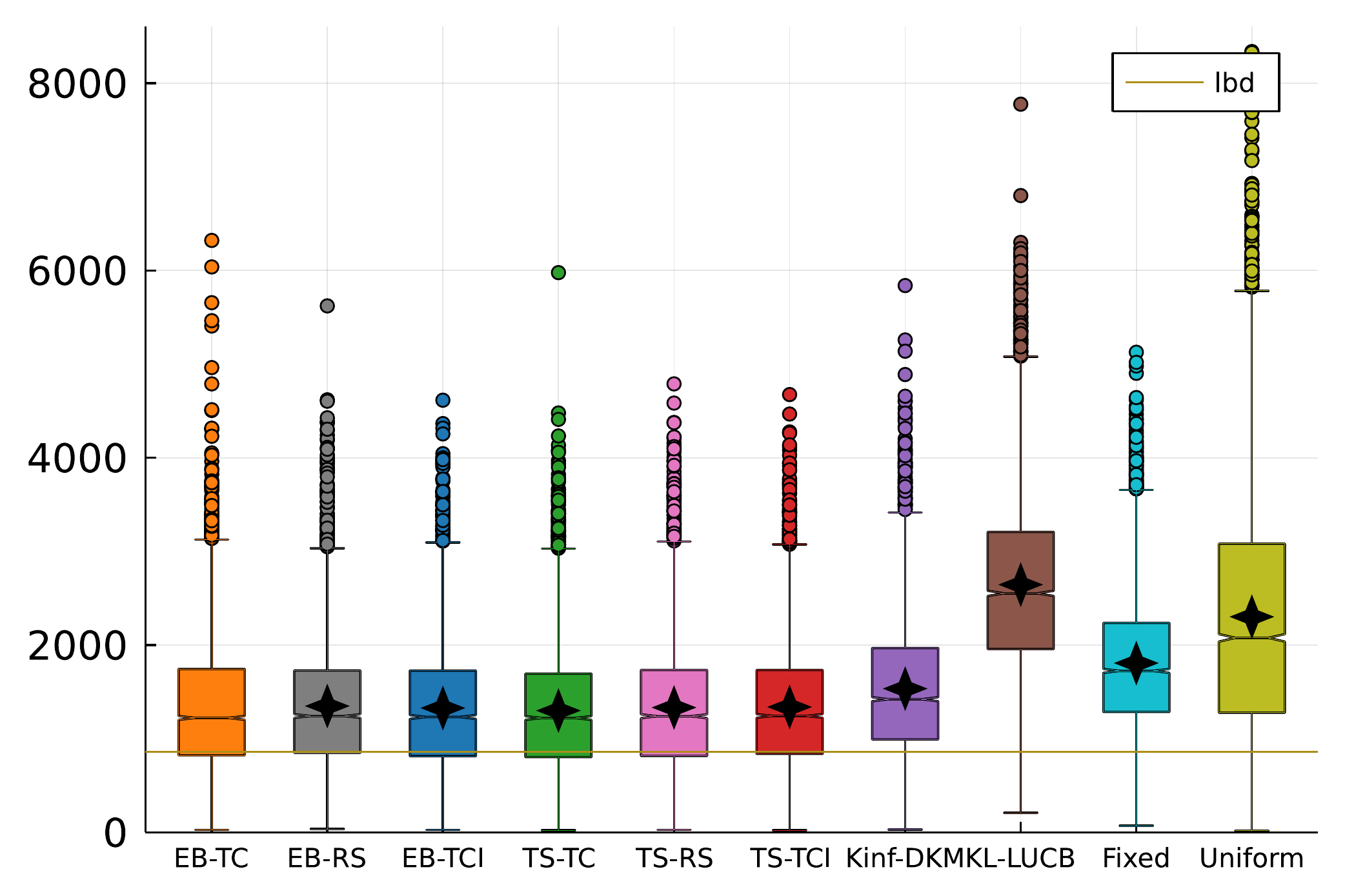} \\
	\includegraphics[width=0.32\linewidth]{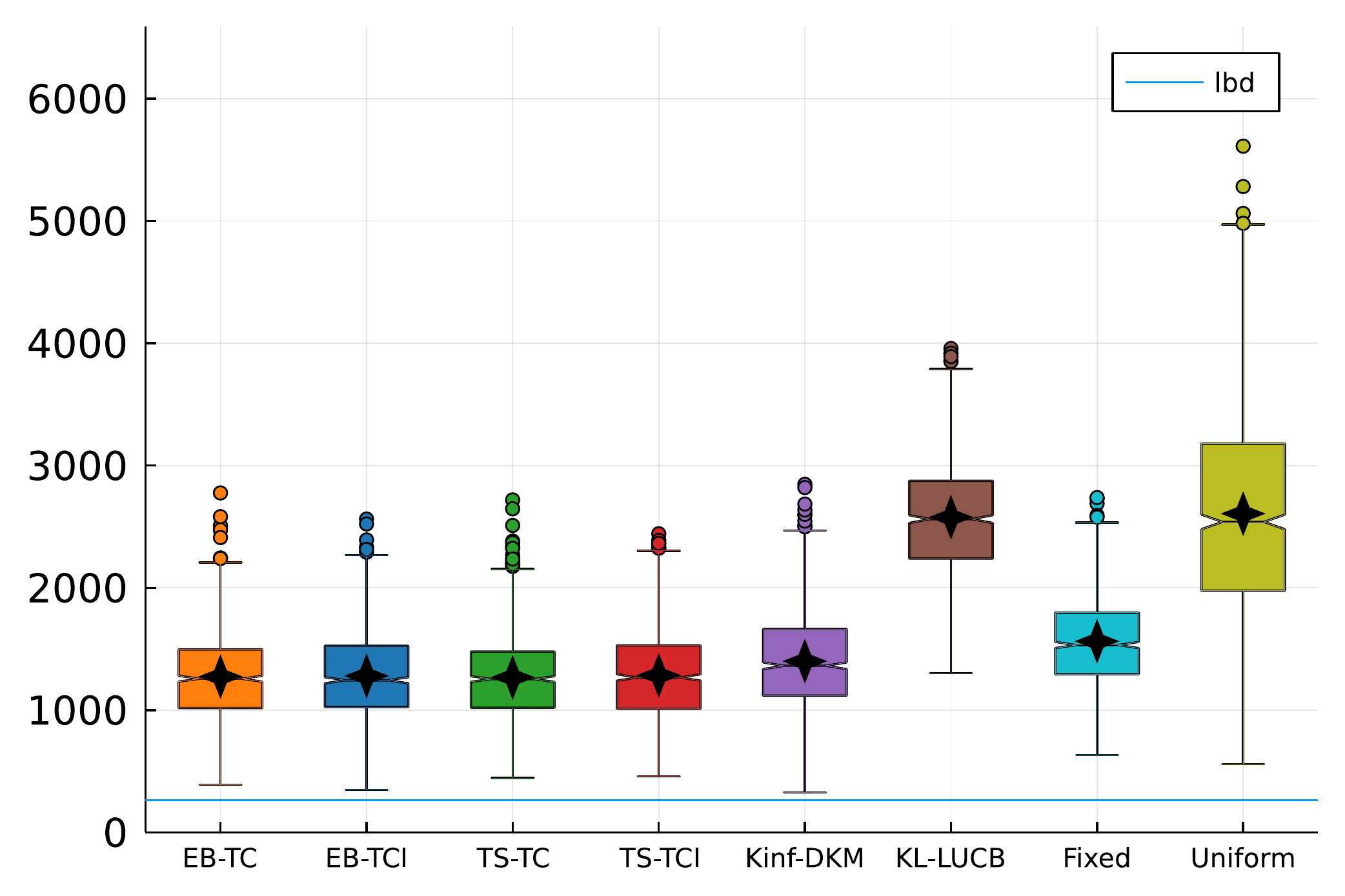}
	\includegraphics[width=0.32\linewidth]{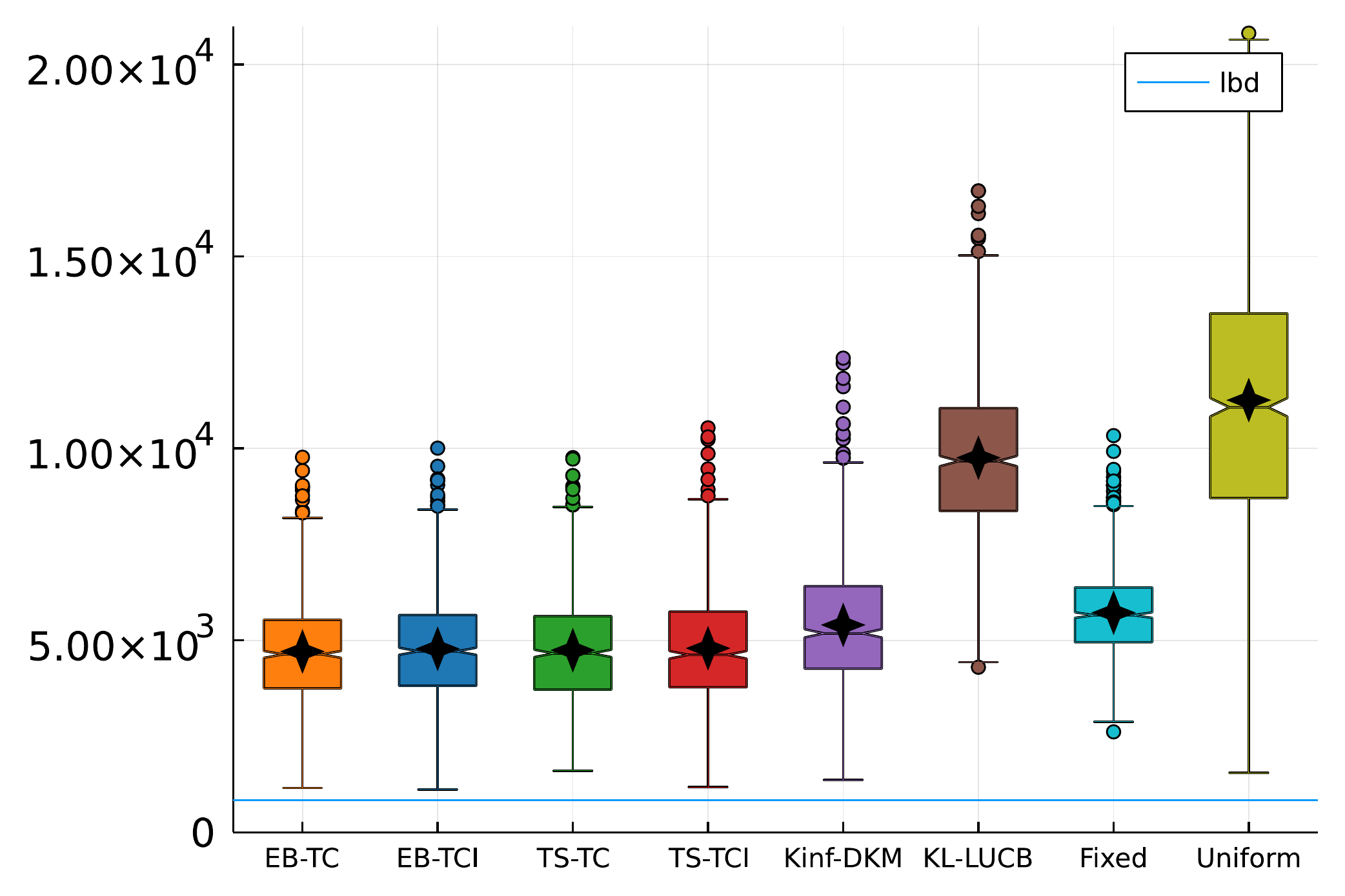}
	\includegraphics[width=0.32\linewidth]{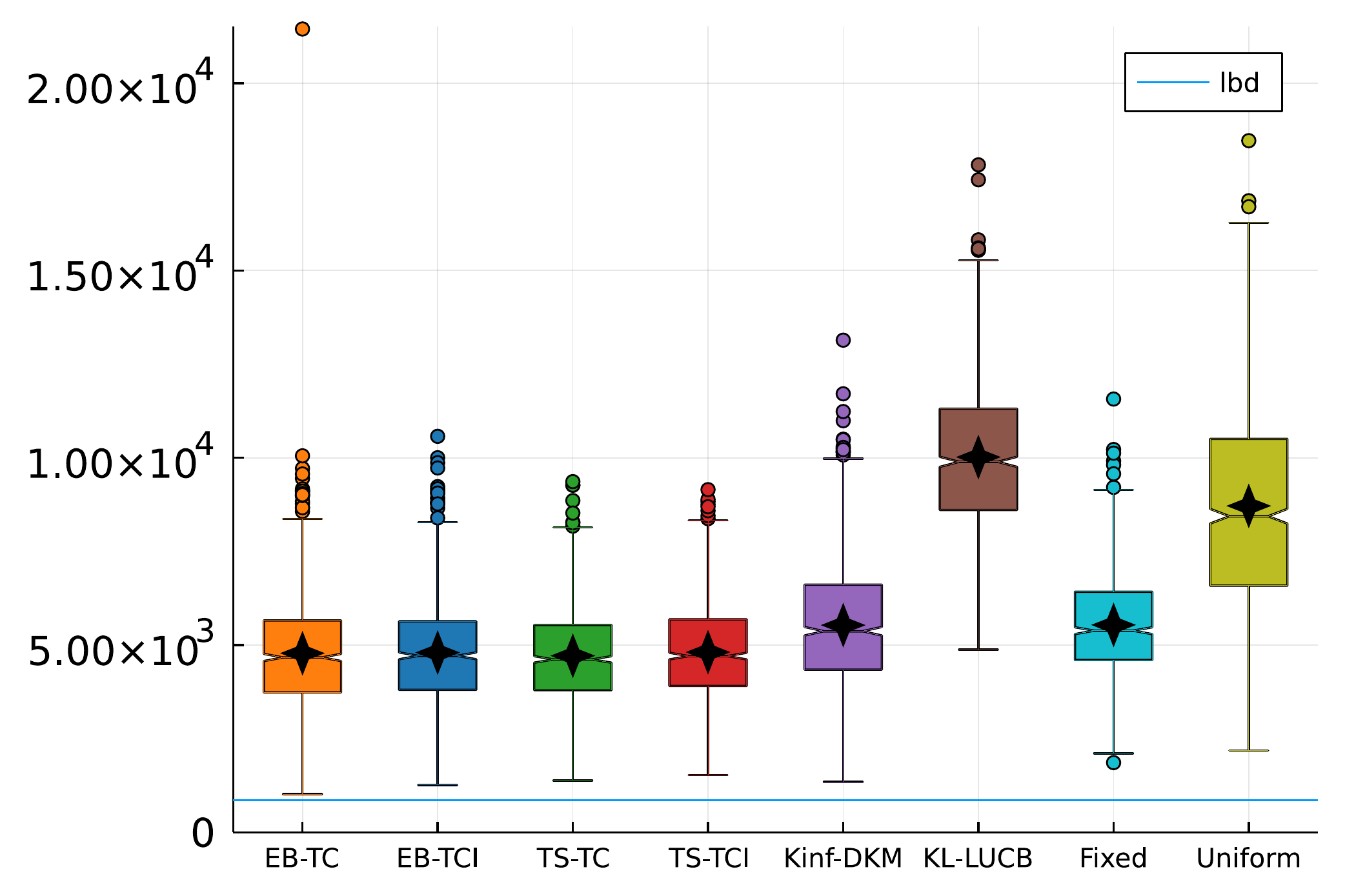}
	\caption{Empirical stopping time on the easy, hard and $3^{\text{rd}}$-equal instances (left to right) for GK16 (top) or TT (bottom) thresholds.}
	\label{fig:simple_bernoulli_instances_gk16}
\end{figure}

On simple Bernoulli instances, we compare the performance of the algorithms from Section~\ref{sec:experiments} for the stopping rule (\ref{eq:def_stopping_time}) using GK16 or TT.
We consider three instances: the \textit{easy} instance with $\mu = \left(0.7, 0.5, 0.4, 0.3, 0.2\right)$, the \textit{hard} instance with $\mu = \left(0.7, 0.6, 0.5, 0.4, 0.3\right)$ and the \textit{$3^{\text{rd}}$-equal} instance $\mu = \left(0.7, 0.6, 0.5, 0.5\right)$.
While we average our results over $5000$ runs for GK16, we only perform $1000$ runs for TT.

In Figure~\ref{fig:simple_bernoulli_instances_gk16}, the empirical stopping time when using GK16 is on average four times lower than when using TT.
As we discussed above, the computational cost of each iteration increases with the time $n$.
Therefore, this speed-up in stopping time naturally yields a speed-up in the averaged computational time per iteration.
In some of our experiments, the averaged computational time per iteration was divided by $10$.

In the following, all the experiments will be conducted with the GK16 heuristic threshold instead of the TT threshold.

\subsubsection{RS challenger}
\label{app:sss_expe_rs_challenger}

In addition of the Top Two algorithms from Section~\ref{sec:experiments}, we assess the empirical performance of instances using the RS challenger. Moreover, we detail more on the lack of robustness of $\beta$-EB-TC (large outliers), explained in Appendix~\ref{app:ss_beyond_all_distinct_means}.
As explained in Appendix~\ref{app:ss_implementation_details}, the RS challenger is computationally intractable for large $n$.
The experiments with RS could be ran only because we used GK16 instead of TT, hence dividing the empirical stopping time by four on average.
Due to their respective flaws, we do not recommend to use those algorithms in practice, even though they enjoy the same theoretical guarantees as $\beta$-EB-TCI, $\beta$-TS-TC and $\beta$-TS-TCI.

\begin{figure}[ht]
	\centering
	\includegraphics[width=0.495\linewidth]{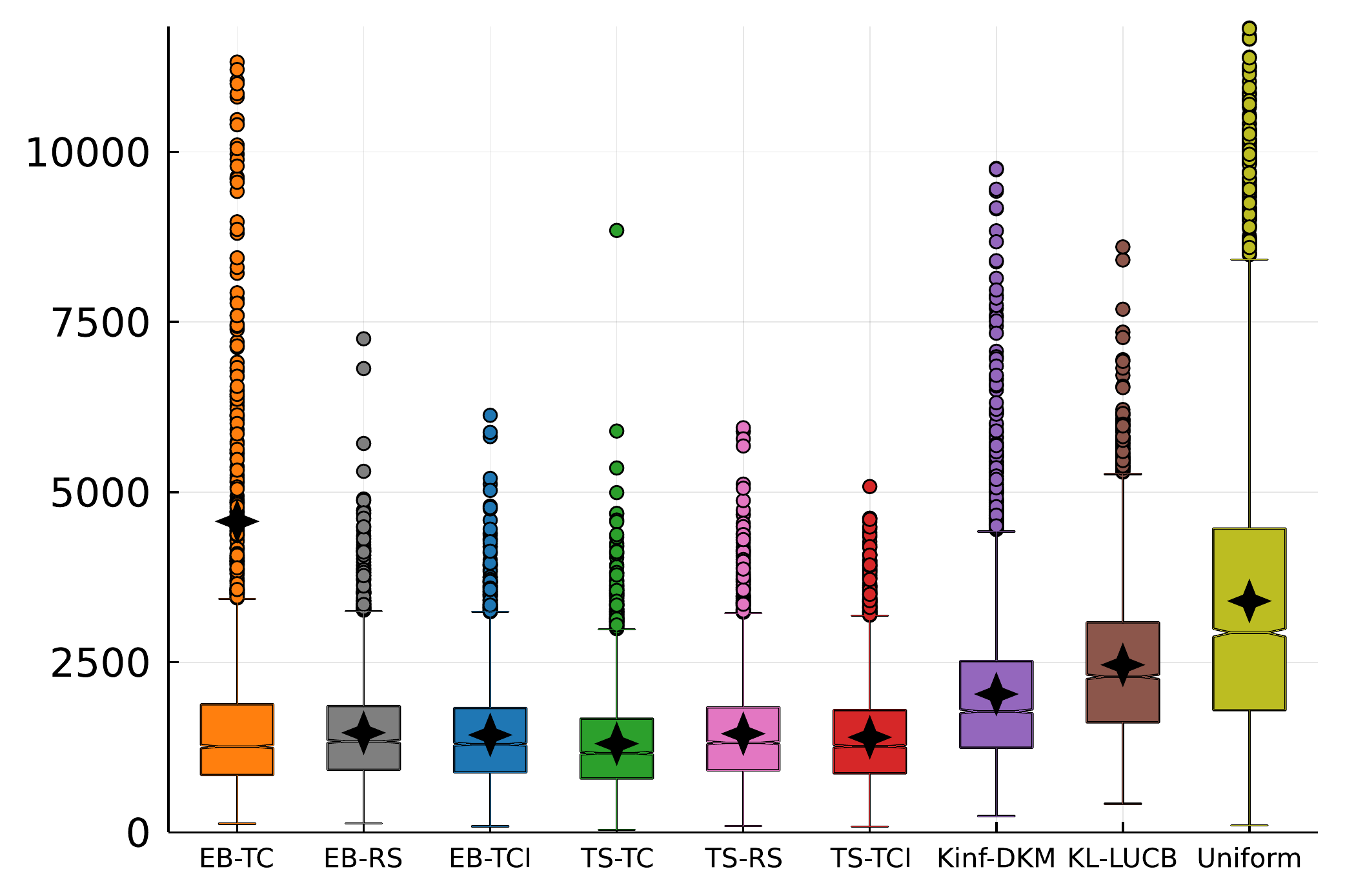}
	\includegraphics[width=0.495\linewidth]{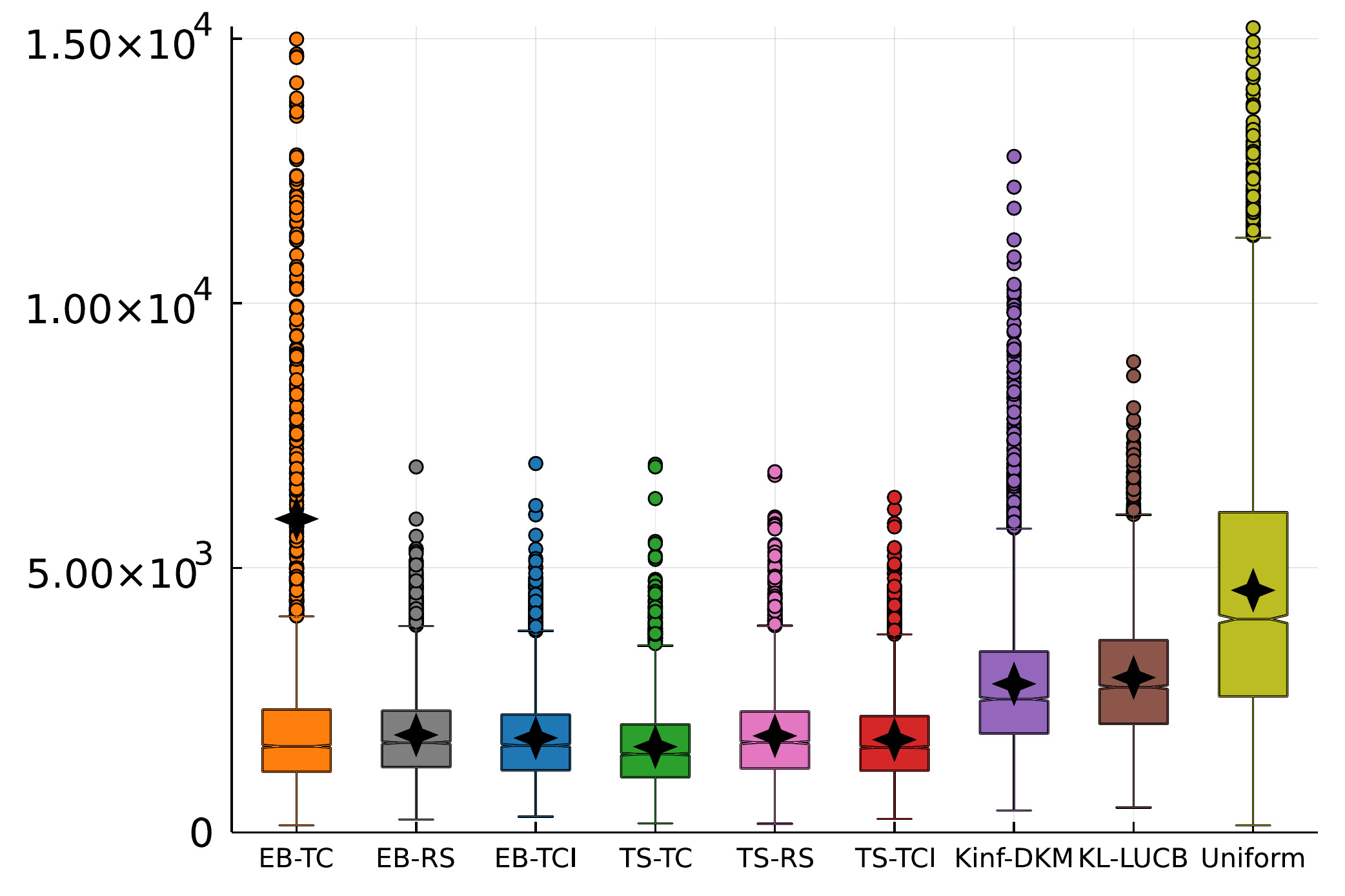}
	\caption{Empirical stopping time on random Bernoulli instances for $K \in (8, 10)$ (left to right).}
	\label{fig:random_bernoulli_instances_gk16}
\end{figure}

\paragraph{Random Bernoulli instances}
For $K \in \{6, 8, 10\}$, we sample $5000$ Bernoulli instances such that $\mu_{1} = 0.6$ and $\mu_{i} \sim \mathcal U ([0.2, 0.5])$ for all $i \neq 1$, where we enforce that $\Delta_{\min} \ge 0.01$.
For all other algorithms, Figure~\ref{fig:random_bernoulli_instances_gk16} delivers the same messages as Figure~\ref{fig:bernoulli_instances}.
Therefore, we refer the reader to Section~\ref{sec:experiments} for the corresponding comments.

Figure~\ref{fig:random_bernoulli_instances_gk16} confirms our theoretical intuition (Appendix~\ref{app:ss_beyond_all_distinct_means}) hinting that $\beta$-EB-TC is not an empirically robust algorithm, even for $\Delta_{\min} > 0$.
This is visible with the large number of outliers, which shift the mean empirical stopping time away from the median empirical stopping time.
Note that the $y$-axis was cut to provided visibility, hence it hides the largest outliers observed for $\beta$-EB-TC.

In Figure~\ref{fig:random_bernoulli_instances_gk16}, we see that $\beta$-EB-RS and $\beta$-TS-RS perform on par with $\beta$-EB-TCI, $\beta$-TS-TC and $\beta$-TS-TCI, while having few outliers.
This confirms our theoretical intuitions on the effect of randomization in the leader and/or challenger (see Appendix~\ref{app:ss_beyond_all_distinct_means}).

\begin{figure}[ht]
	\centering
	\includegraphics[width=0.495\linewidth]{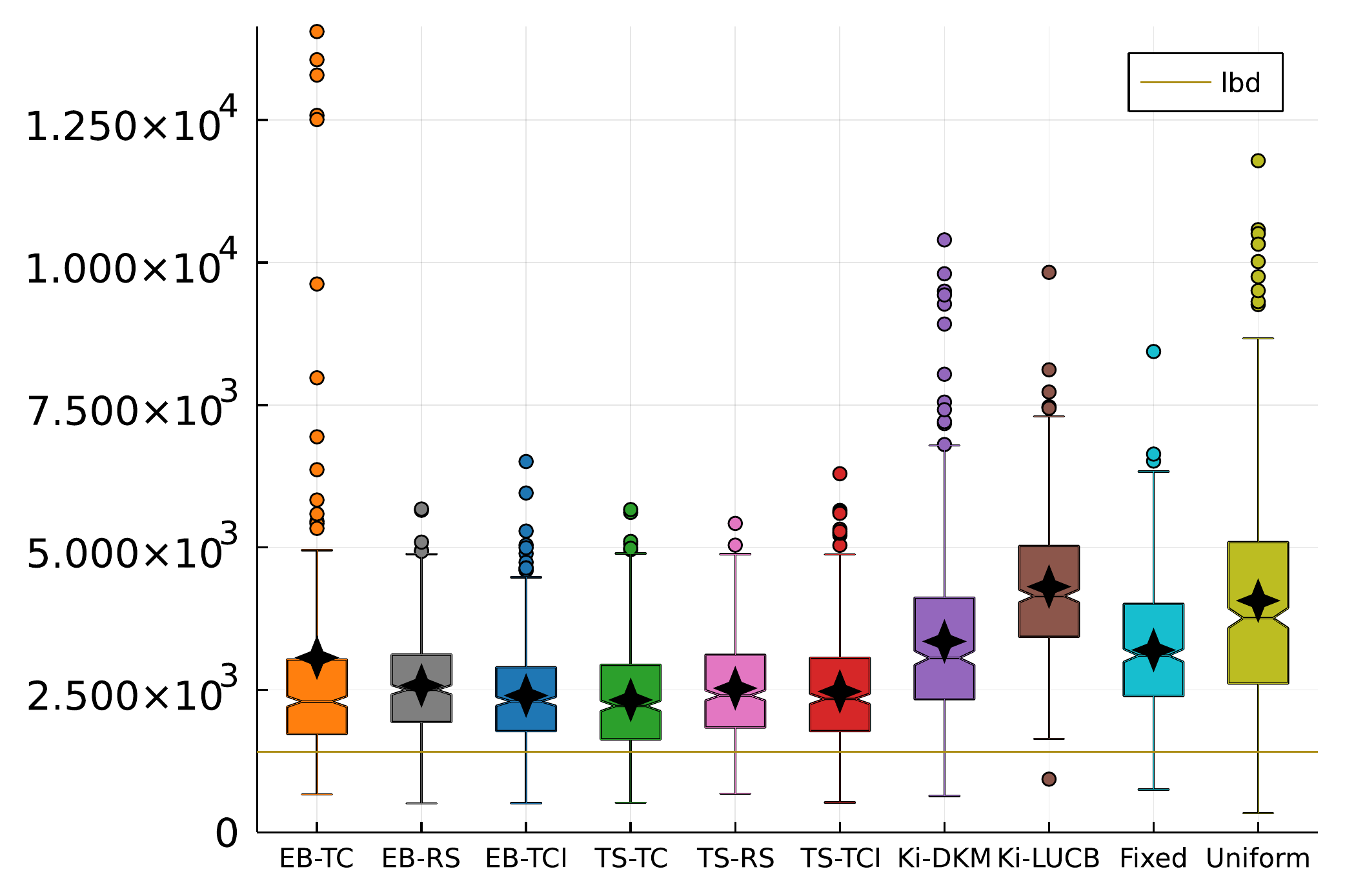}
	\includegraphics[width=0.495\linewidth]{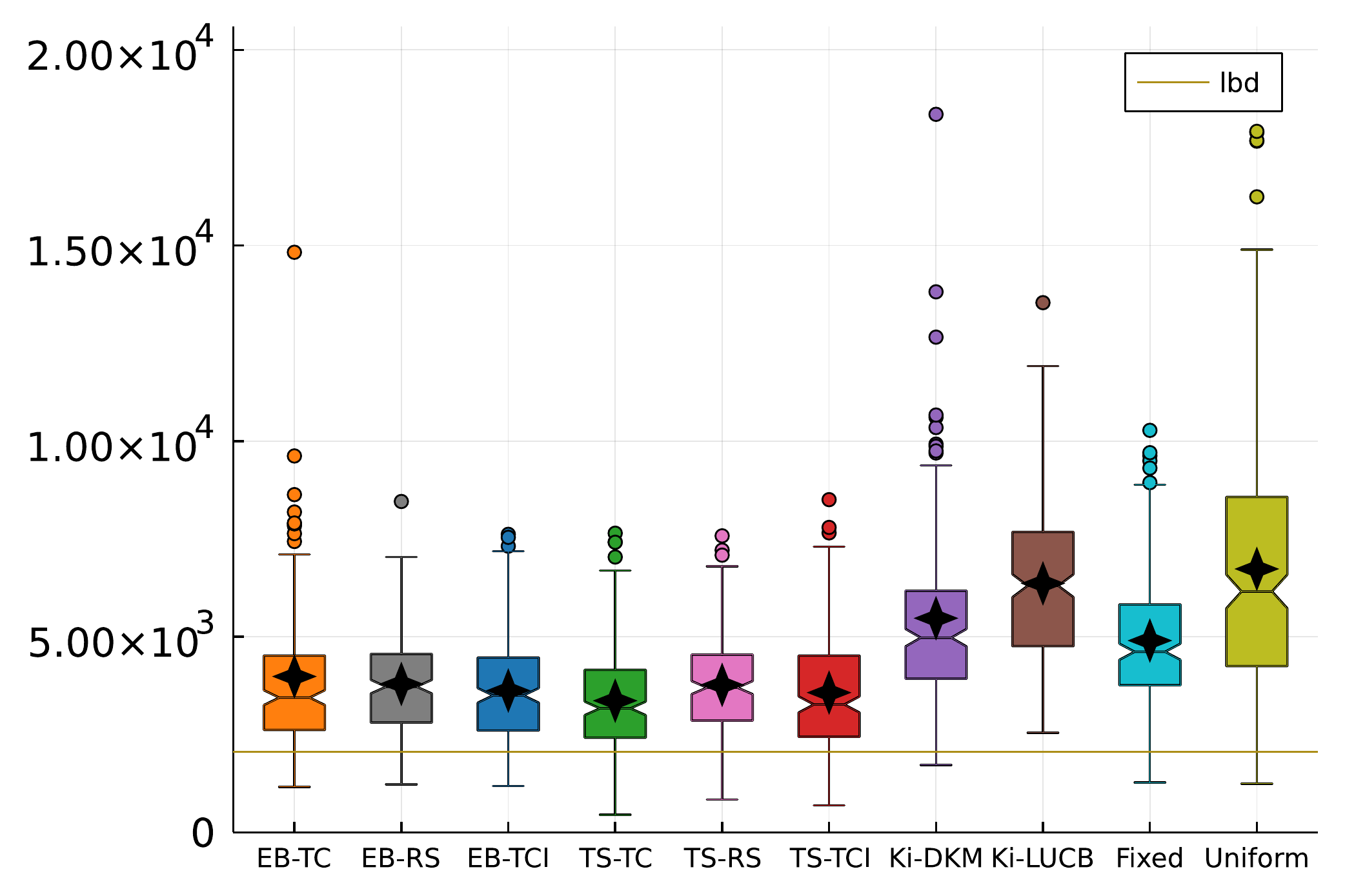}
	\\
	\includegraphics[width=0.495\linewidth]{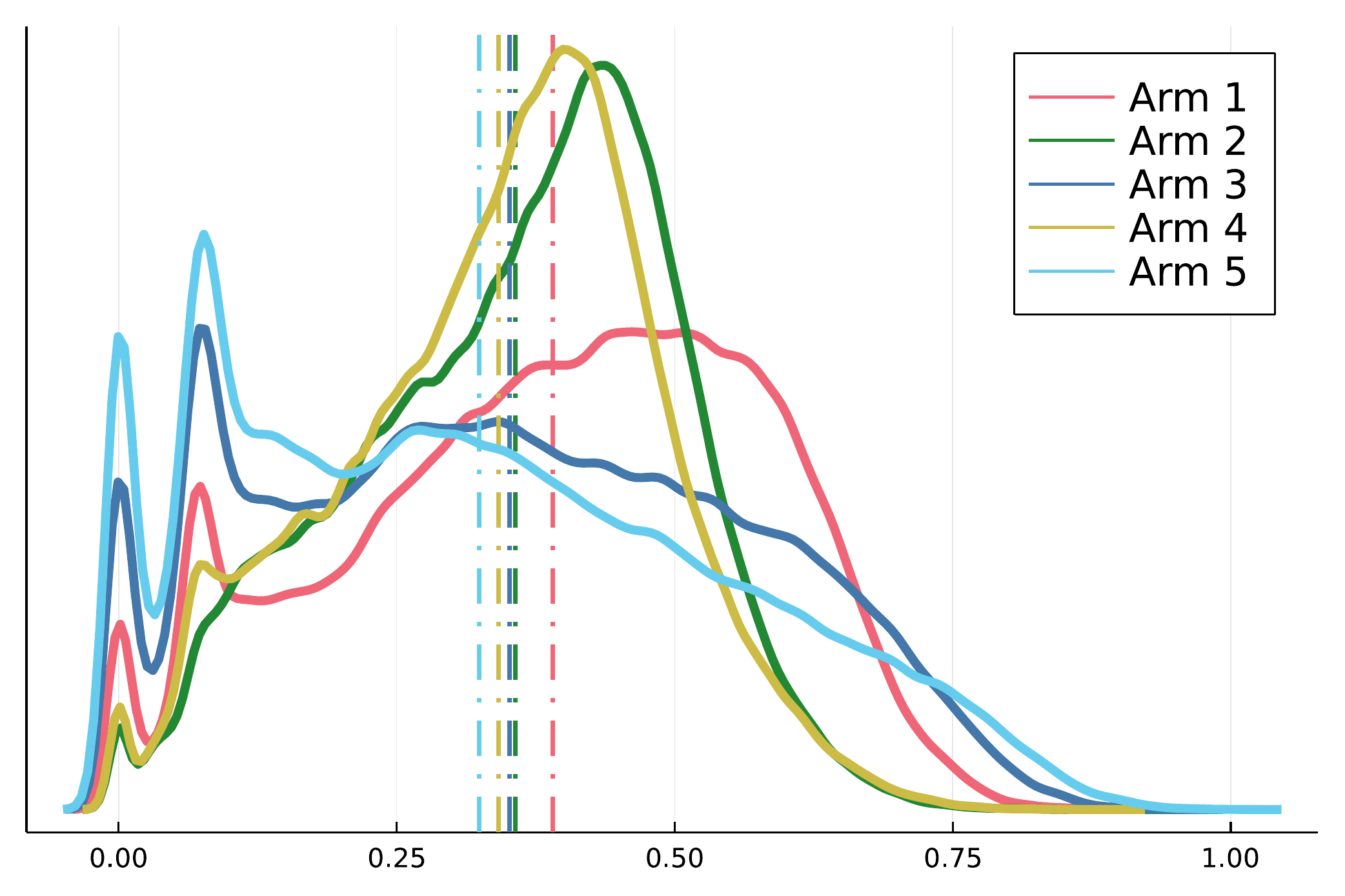}
	\includegraphics[width=0.495\linewidth]{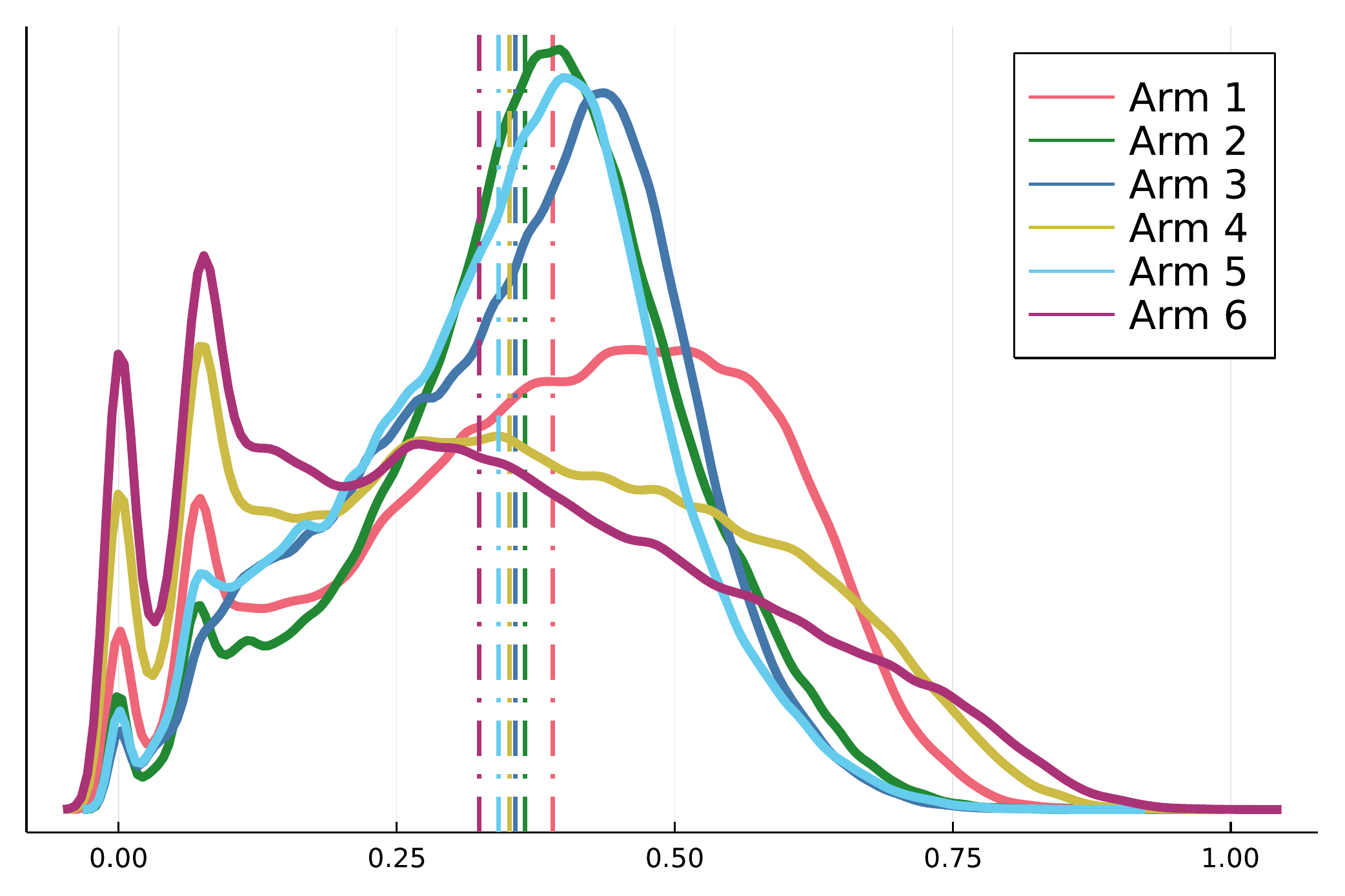}
	\caption{Empirical stopping time (top) on scaled DSSAT instances for $K \in (5,6)$ (left to right) with their density (bottom), where ``Ki'' stands for ``Kinf''.}
	\label{fig:dsat_instances_gk16}
\end{figure}

\paragraph{DSSAT instances}
We use the DSSAT real-world data for $K \in (5,6)$, which we scale by the overall maximum so that $B=1$.
Their histograms define instances with bounded distributions, see the bottom plots of Figure~\ref{fig:dsat_instances_gk16}, on which we can sample.
We average our results over $500$ runs for $K=5$ and $250$ runs for $K=6$.
For all other algorithms, Figure~\ref{fig:dsat_instances_gk16} delivers the same messages as Figure~\ref{fig:dsat_instances}.
Therefore, we refer the reader to Section~\ref{sec:experiments} for the corresponding comments.

In Figure~\ref{fig:dsat_instances_gk16}, we see that $\beta$-EB-RS and $\beta$-TS-RS perform on par with $\beta$-EB-TCI, $\beta$-TS-TC and $\beta$-TS-TCI.
For $K=5$, we observe large outliers on $\beta$-EB-TC.
This is a symptom of its empirical lack of robustness, which would be more striking if more runs had been performed.

In Section~\ref{sec:experiments}, we mentioned that KL-LUCB was performing ten times worse than $\Kinf$-LUCB on DSSAT instances.
With the same setting at Figure~\ref{fig:dsat_instances_gk16}, KL-LUCB used on average (standard deviation) a number of samples equal to $34635$ ($2860$) for $K=5$ and $57820$ ($ 5725$) for $K=6$.

\subsubsection{On the distinct means assumption}
\label{app:sss_expe_on_distinct_means}

In Figure~\ref{fig:simple_bernoulli_instances_gk16}, the considered Top Two algorithms have good empirical performance on the $3^{\text{rd}}$-equal instance.
This instance violates Assumption~\ref{ass:all_arms_distinct_bounded_mean}, i.e. $\Delta_{\min} = \min_{i\neq j} |\mu_i - \mu_j| > 0$, under which we can prove sufficient exploration for the Top Two algorithms.
We empirically study instances where $\Delta_{\min} = 0$ in order to confirm our intuition that some Top Two algorithms have good guarantees in this case (Appendix~\ref{app:ss_beyond_all_distinct_means}).

The most difficult instances having $\Delta_{\min} = 0$ are the ones where the arms having the same mean are in second position.
We consider four toy Bernoulli instances with three arms $\mu_{i} = (\mu_1, \mu_1 - \Delta_{i}, \mu_1 - \Delta_{i})$, where $\Delta_{i} > 0$.
The smaller $\Delta_{i}$, the harder the identification problem is.
Therefore, with smaller values of $\Delta_{i}$, it will be easier to see if Top Two algorithms are failing when $\Delta_{\min} = 0$.
In the experiments below, we consider $\mu_1 = 0.5$ and $\Delta_{i} \in \{0.1, 0.075, 0.05\}$ and average our results over $5000$ runs.

Since we aim at observing whether the algorithms get stuck (at least momentarily) without paying an infinite computational cost, we set a maximum number of iterations $T_{\max}$ for each run.
In our plots, the quantity $T^\star(\mu) \log\left( 1/\delta \right)$ acts as a lower bound.
Therefore, we set $T_{\max} = 15 T^\star(\mu) \log\left( 1/\delta \right)$.
Having an algorithm using more than $T_{\max}$ samples is a symptom of being stuck (at least momentarily).

We observe that Table~\ref{tab:percentage_achieving_Tmax_and_error}(a) and Table~\ref{tab:percentage_achieving_Tmax_and_error}(b) are very similar.
Reaching $T_{\max}$ is a symptom of failing to identifying the best arm $i^\star$.
As there is no $\delta$-correctness guaranty when reaching $T_{\max}$, it is expected that the algorithm recommend an arm different from $i^\star$.
In Table~\ref{tab:percentage_achieving_Tmax_and_error}(a), we see that $\beta$-EB-TC is the only algorithm reaching $T_{\max}$ over the $5000$ runs.
Empirically, this is the only Top Two algorithms that seem to fail on instances with $\Delta_{\min} = 0$.
In Table~\ref{tab:percentage_achieving_Tmax_and_error}(b), all algorithms (except $\beta$-EB-TC) have an empirical error lower than $\delta = 1\%$.

\begin{figure}[ht]
	\centering
	\includegraphics[width=0.32\linewidth]{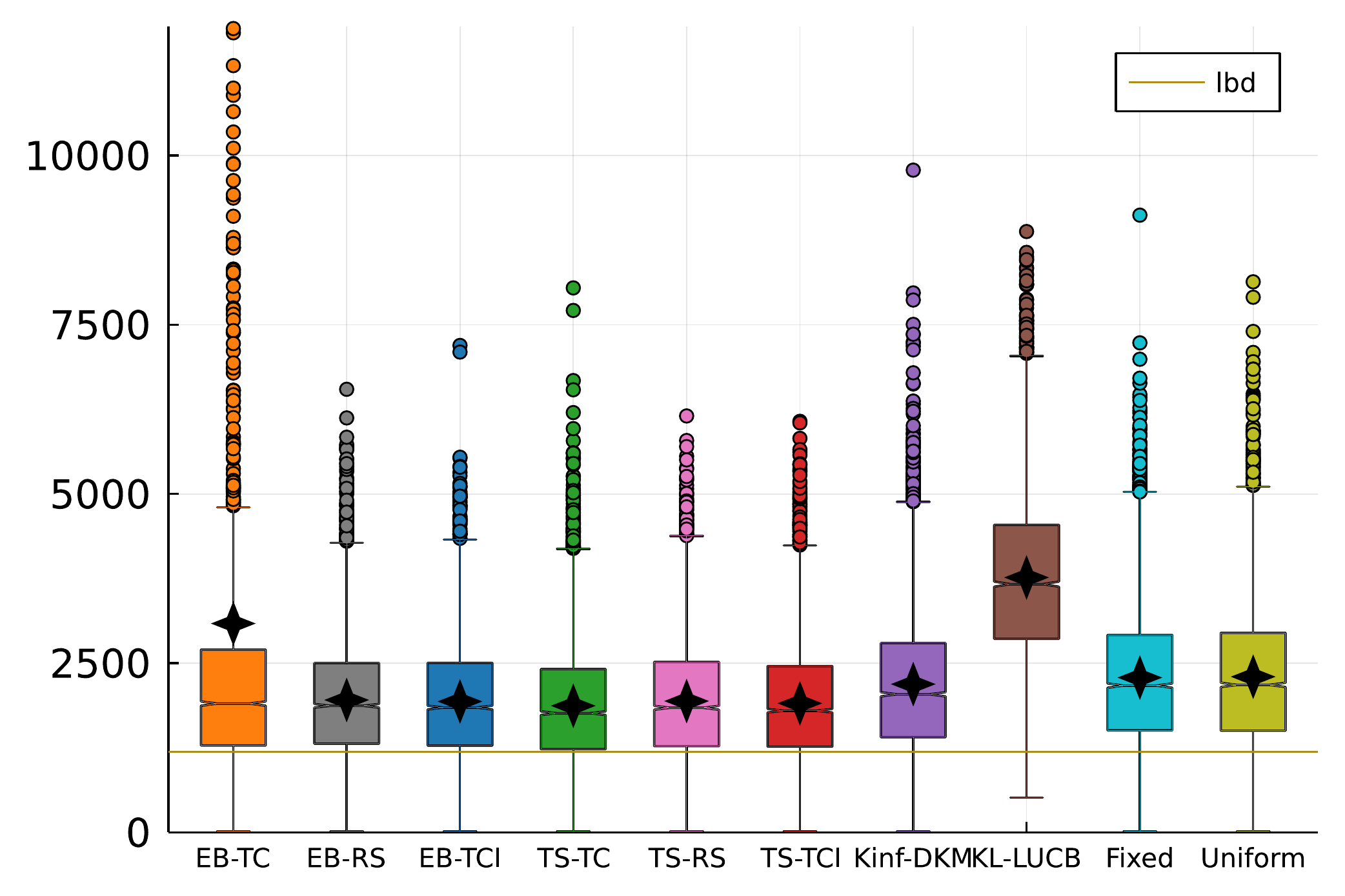}
	\includegraphics[width=0.32\linewidth]{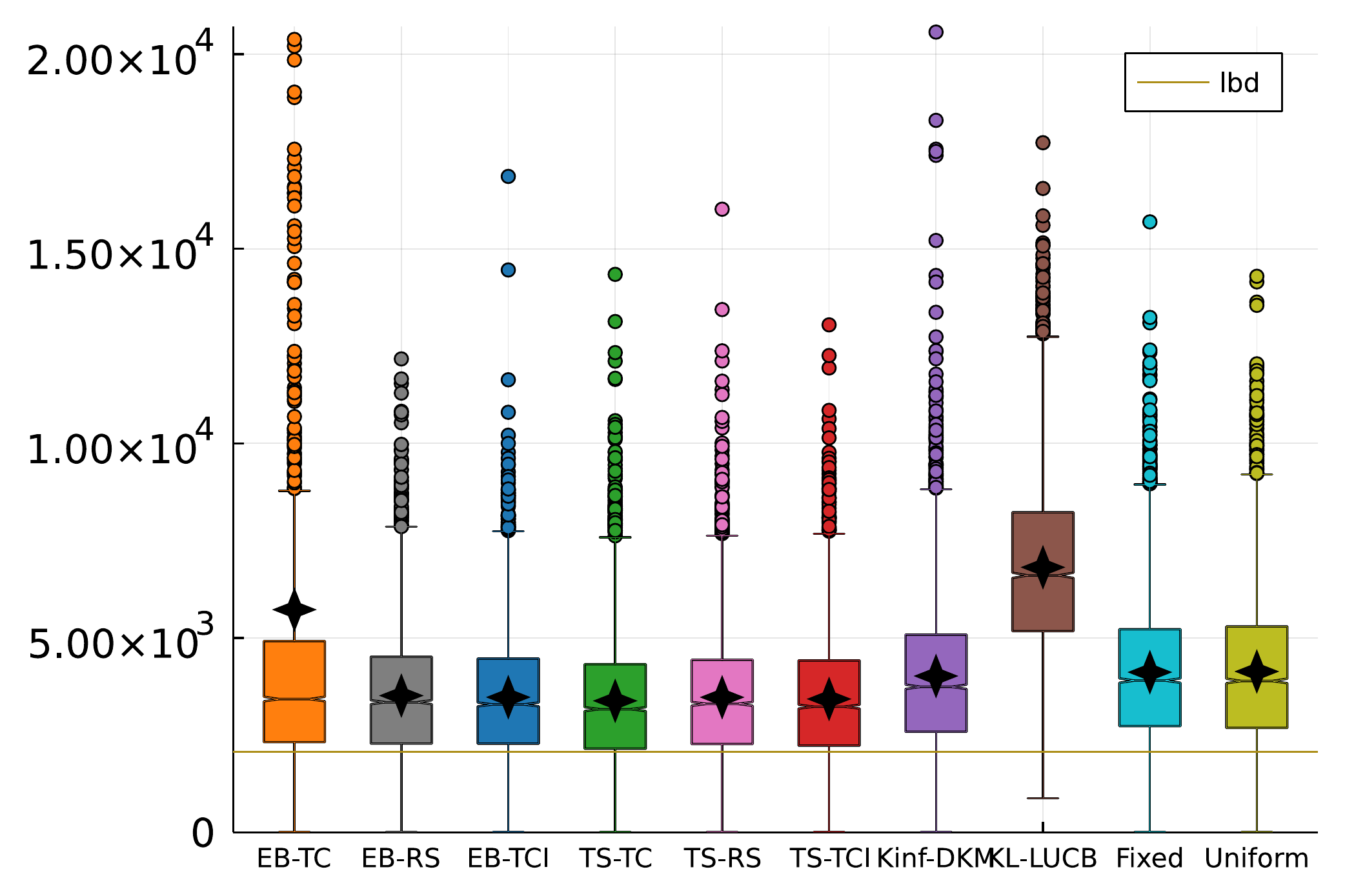}
	\includegraphics[width=0.32\linewidth]{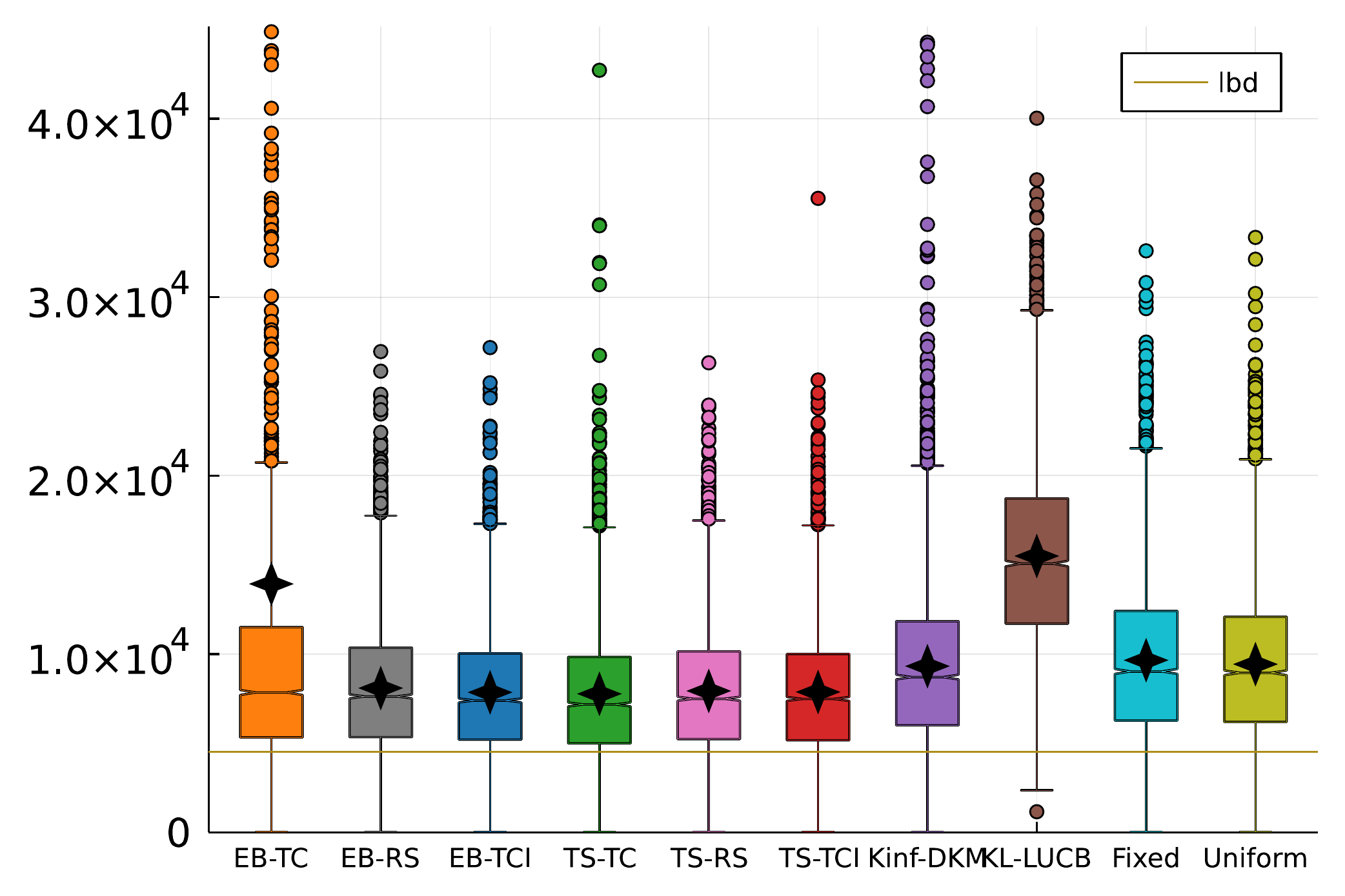}
	\caption{Empirical stopping time on Bernoulli with $\mu_{i} = (\mu_1, \mu_1 - \Delta_{i}, \mu_1 - \Delta_{i})$ for $\mu_1 = 0.5$ and $\Delta_{i} \in (0.1, 0.075, 0.05)$ (left to right).}
	\label{fig:equality_2nd_means_gk16}
\end{figure}

\begin{table}[ht]
\caption{Percentage (in $\%$) of runs (a) achieving $T_{\max}$ and (b) failing at identifying $i^\star = 1$ on Bernoulli with $\mu_{i} = (\mu_1, \mu_1 - \Delta_{i}, \mu_1 - \Delta_{i})$ for $\mu_1 = 0.5$ and $\Delta_{i} \in (0.1, 0.075, 0.05)$ for $\delta = 1\%$.}
\label{tab:percentage_achieving_Tmax_and_error}
\begin{center}
\begin{tabular}{c l l l}
	\toprule
  &  $\Delta_1$ & $\Delta_2$ & $\Delta_3$  \\
 \cmidrule(l){2-4}
$\beta$-EB-TC   & $6.66$ & $7.62$ & $9.56$  \\
$\beta$-EB-RS   & $0$ & $0$ & $0$  \\
$\beta$-EB-TCI   & $0$ & $0$ & $0$   \\
$\beta$-TS-TC   & $0$ & $0$ & $0$   \\
$\beta$-TS-RS   & $0$ & $0$ & $0$   \\
$\beta$-TS-TCI   & $0$ & $0$ & $0$  \\
$\Kinf$-DKM   & $0$ & $0$ & $0$  \\
KL-LUCB   & $0$ & $0$ & $0$  \\
Fixed   & $0$ & $0$ & $0$ \\
Uniform   & $0$ & $0$ & $0$  \\
\bottomrule
\end{tabular}
\hspace{0.5cm}
\begin{tabular}{c l l l}
	\toprule
 &  $\Delta_1$ & $\Delta_2$ & $\Delta_3$  \\
\cmidrule(l){2-4}
EB-TC   & $6.74$ & $7.66$ & $9.54$   \\
EB-RS   & $0.02$ & $0.02$ & $0.02$  \\
EB-TCI   & $0.04$ & $0.04$ & $0.06$  \\
TS-TC   & $0.04$ & $0.06$ & $0.1$    \\
TS-RS   & $0.02$ & $0.04$ & $0.04$    \\
TS-TCI   & $0.04$ & $0.08$ & $0.08$    \\
$\Kinf$-DKM   & $0.06$ & $0.12$ & $0.1$  \\
KL-LUCB   & $0.02$ & $0.02$ & $0.06$   \\
Fixed   & $0.02$ & $0.02$ & $0.06$    \\
Uniform   & $0.04$ & $0.02$ & $0.02$   \\
\bottomrule
\end{tabular}
\end{center}
\end{table}

\paragraph{Lack of robustness of $\beta$-EB-TC}
Figure~\ref{fig:equality_2nd_means_gk16} confirms our theoretical intuition (Appendix~\ref{app:ss_beyond_all_distinct_means}) hinting that $\beta$-EB-TC is not empirically robust and can fail when $\Delta_{\min}=0$.
This is visible with the large number of outliers, which shift the mean empirical stopping time away from the median empirical stopping time.
In Table~\ref{tab:percentage_achieving_Tmax_and_error}, we see observe that $\beta$-EB-TC is the only algorithm reaching $T_{\max}$ and it does it frequently, i.e. between $6\%$ and $10\%$ in our experiments.

\paragraph{TCI challenger}
Figure~\ref{fig:equality_2nd_means_gk16} confirms our theoretical intuition (Appendix~\ref{app:ss_beyond_all_distinct_means})
that $\beta$-EB-TCI copes for the limitations of $\beta$-EB-TC, and that it should work when $\Delta_{\min}=0$.
Based on the comparison of $\beta$-EB-TC and $\beta$-EB-TCI in Figure~\ref{fig:equality_2nd_means_gk16} and Table~\ref{tab:percentage_achieving_Tmax_and_error}, we see that adding the $\ln(N_{n,j})$ term has a stabilization effect, hence reducing the number of outliers.

The difference between $\beta$-TS-TC and $\beta$-TS-TCI is milder.
However, based on Figure~\ref{fig:equality_2nd_means_gk16}, it seems that adding the $\ln(N_{n,j})$ term slightly reduces the number of large outliers.
This effect is less visible than when comparing $\beta$-EB-TC and $\beta$-EB-TCI due to the stabilization effect ensured by the randomization in the TS leader.

\paragraph{Randomized leader or challenger}
Figure~\ref{fig:equality_2nd_means_gk16} confirms our theoretical intuition (Appendix~\ref{app:ss_beyond_all_distinct_means}) 
that randomized mechanisms have a stabilization effect, and that they should work when $\Delta_{\min}=0$.
Based on Figure~\ref{fig:equality_2nd_means_gk16} and Table~\ref{tab:percentage_achieving_Tmax_and_error}, the TS leader or the RS challenger appear to prevent large outliers.
For the TS leader, this effect is particularly striking when comparing the performance of $\beta$-EB-TC and $\beta$-TS-TC.
For the RS challenger, the effect is striking when comparing $\beta$-EB-TC and $\beta$-EB-RS, and milder between $\beta$-TS-TC and $\beta$-TS-RS.

\paragraph{Symmetric instances}
When the two sub-optimal arms have the same mean, we have $w^\star(\mu)_{2} = w^\star(\mu)_{3} = (1 - w^\star(\mu)_{1})/2$ by symmetry of the characteristic time $T^\star(\mu)$.
Therefore, the optimal allocation is close to the uniform allocation $(1/3,1/3,1/3)$.
Experimental results (Figure~\ref{fig:bernoulli_instances}(b) and Figure~\ref{fig:equality_2nd_means_gk16}) show that the uniform sampling performs on par with the fixed oracle algorithm tracking $w^\star(\mu)$.
Therefore, it is not surprising that KL-LUCB performs worse than uniform sampling.

\subsubsection{On larger sets of arms}
\label{app:sss_expe_larger_sets_arms}

Another interesting question that arises as regards our algorithms is to assess whether their performance scales with the number of arms.
We consider the three problem scenarios from \citet{Jamieson14Survey} with varying size of arms.
The underlying distributions are Gaussian with mean $\mu \in \R^{K}$ and hardness $H_1 \eqdef \sum_{i \neq i^\star} \frac{1}{(\mu_{i^\star} - \mu_i)^2}$.
The ``$1$-sparse'' scenario sets $\mu_1 = 1/4$ and $\mu_{i} = 0$ for all $i \in [K] \setminus \{1\}$, resulting in an hardness $H_1  \approx 4 K$.
The ``$\alpha = 0.3$'' and ``$\alpha = 0.6$'' scenarios consider $\mu_i = 1 - \left(\frac{i-1}{K-1}\right)^{\alpha}$ for all $i \in [K]$, with respective hardness $H_1  \approx 3K/2$ and $H_1  \approx 6 K^{1.2}$.
We only consider algorithms whose computational cost scales nicely with the number of arms, namely $\beta$-EB-TCI, $\beta$-TS-TC, LUCB and uniform sampling.
We choose $\delta=0.01$, $\beta = 1/2$ and average our results over $100$ runs.

\begin{figure}[ht]
	\centering
	\includegraphics[width=0.32\linewidth]{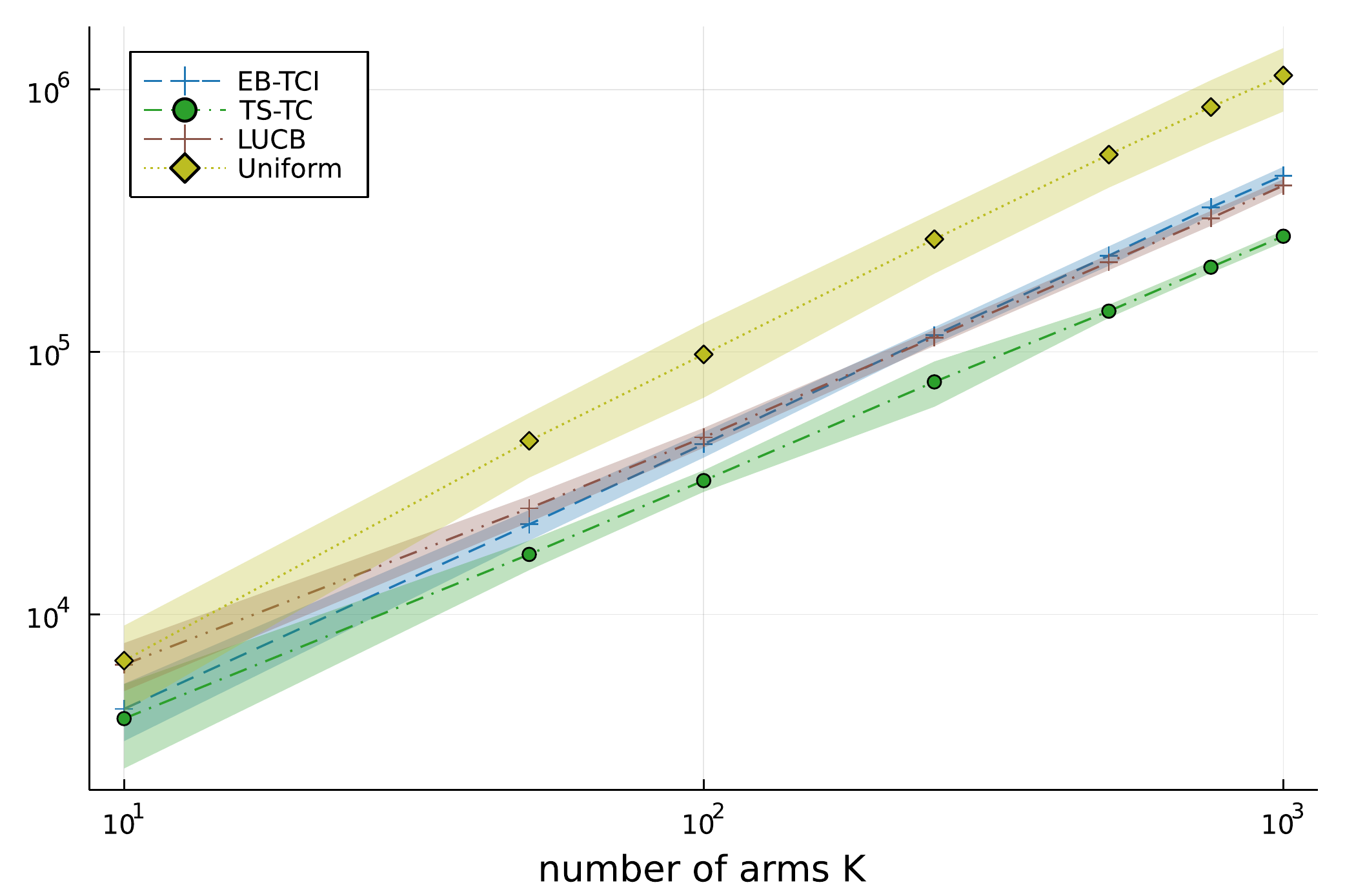}
	\includegraphics[width=0.32\linewidth]{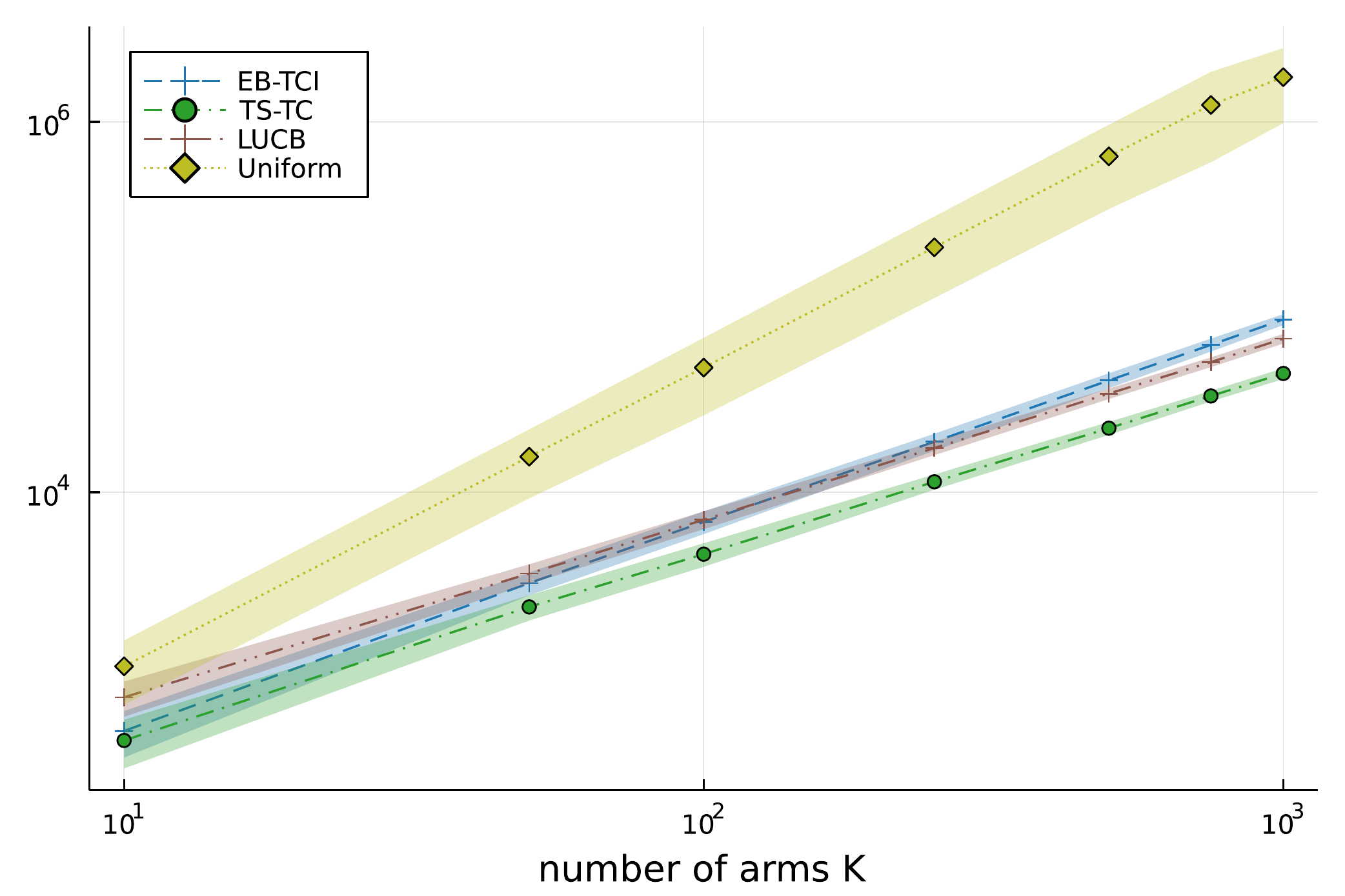}
	\includegraphics[width=0.32\linewidth]{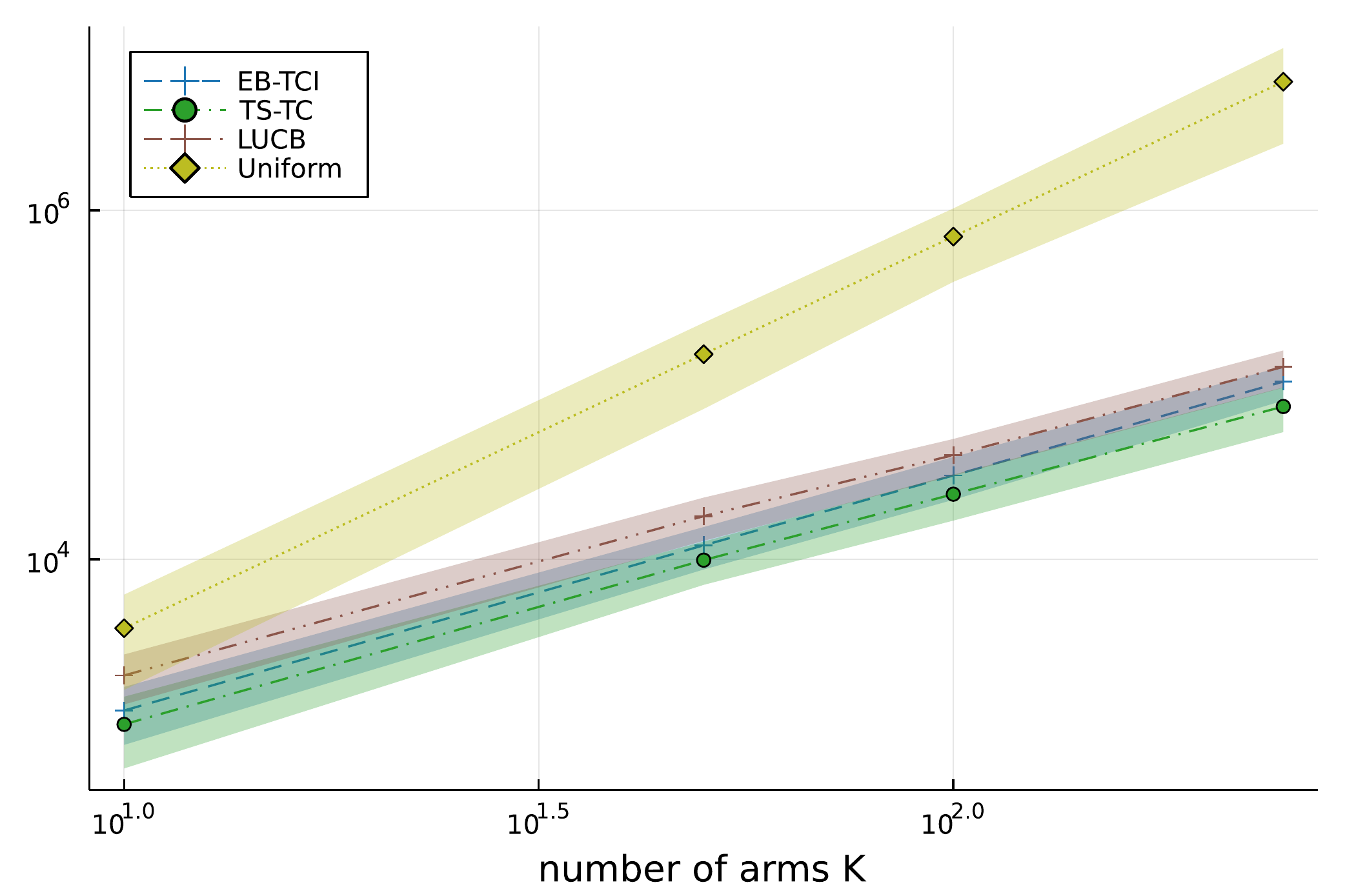}
	\caption{Empirical stopping time for the Gaussian benchmarks (left) ``$1$-sparse'', (middle) ``$\alpha = 0.3$'' and (right) ``$\alpha = 0.6$''.}
	\label{fig:larger_sets_arms}
\end{figure}

In Figure~\ref{fig:larger_sets_arms}, we observe that the performances of $\beta$-EB-TCI, $\beta$-TS-TC and LUCB scale proportionally to the hardness $H_1$ when the number of arms increase, while it is worsening for the uniform sampling.
Surprisingly, the performance gap between $\beta$-EB-TCI and LUCB is diminishing with the number of arms. For larger experiments, LUCB seems to slightly outperform $\beta$-EB-TCI.
Finally, $\beta$-TS-TC significantly outperforms all the other algorithms when the number of arms increase.

\end{document}